\newcommand{\customfootnotetext}[2]{{% Group to localize change to footnote
		\renewcommand{\thefootnote}{#1}% Update footnote counter representation
		\footnotetext[0]{#2}}}% Print footnote text
\begin{document}

\title{Representational Systems Theory: A Unified Approach to Encoding, Analysing and Transforming Representations}

\author[1]{Daniel Raggi\textsuperscript{*}}%\thanks{daniel.raggi@cl.cam.ac.uk}}
%\email{daniel.raggi@cl.cam.ac.uk}
%\orcid{0000-0002-9207-6621}
\author[1]{Gem Stapleton\textsuperscript{*}}%\thanks{ges55@cam.ac.uk}}
%\email{ges55@cam.ac.uk}
%\orcid{0000-0002-6567-6752}
\author[1]{Mateja Jamnik}%\thanks{mateja.jamnik@cl.cam.ac.uk}}
%\email{mateja.jamnik@cl.cam.ac.uk}
%\orcid{0000-0003-2772-2532}
%\affiliation{%
%  \institution{University of Cambridge}
%  \streetaddress{Computer Laboratory, William Gates Building, JJ Thomson Avenue}
%  \city{Cambridge}
%  \country{UK}
%  \postcode{CB3 0FD}
%}
\author[2]{Aaron Stockdill}%\thanks{a.a.stockdill@sussex.ac.uk}}
%\email{a.a.stockdill@sussex.ac.uk}
%\orcid{0000-0003-3312-526}
\author[2]{Grecia Garcia Garcia}%\thanks{g.garcia-garcia@sussex.ac.uk}}
%\email{g.garcia-garcia@sussex.ac.uk}
%\orcid{0000-0002-7327-7225}
\author[2]{Peter C-H. Cheng}%\thanks{p.c.h.cheng@sussex.ac.uk}}
%\email{p.c.h.cheng@sussex.ac.uk}
%\orcid{0000-0002-0355-5955}
%\affiliation{%
%  \institution{University of Sussex}
%%\streetaddress{}
%  \city{Brighton}
%  \country{UK}
%}
\affil[1]{University of Cambridge, UK}
\affil[2]{University of Sussex, UK}
\maketitle
\customfootnotetext{*}{Authors Raggi and Stapleton contributed equally to this research.}

\begin{abstract}
The study of representations is of fundamental importance to any form of communication, and our ability to exploit them effectively is paramount. This article presents a novel theory -- \textit{Representational Systems Theory} --  that is designed to abstractly encode a wide variety of representations from three core perspectives: \textit{syntax}, \textit{entailment}, and their \textit{properties}. By introducing the concept of a \textit{construction space}, we are able to encode each of these core components under a single, unifying paradigm. Using our Representational Systems Theory, it becomes possible to structurally transform representations in one system into representations in another. An intrinsic facet of our structural transformation technique is \textit{representation selection} based on properties that representations possess, such as their relative cognitive effectiveness or structural complexity. A major theoretical barrier to providing \textit{general} structural transformation techniques is a lack of terminating algorithms. Representational Systems Theory permits the derivation of partial transformations when no terminating algorithm can produce a full transformation. Since Representational Systems Theory provides a universal approach to encoding representational systems, a further key barrier is eliminated: the need to devise system-specific structural transformation algorithms, that are necessary when different systems adopt different formalisation approaches. Consequently, Representational Systems Theory is the first general framework that provides a unified approach to encoding representations, supports representation selection via structural transformations, and has the potential for widespread practical application.
\end{abstract}

\section{Introduction}

Representations, which we view as syntactic entities that have the capacity to represent objects when assigned semantics, are at the heart of how we communicate. They come in many forms, such as diagrams, natural language sentences, formal languages, programming languages, sketches, and many others; the variety is almost endless. The choice of representation is key to its efficacy, which of course is inherently linked to the purpose of creating the representation itself. For instance, the creator of the representation could simply want its reader to understand some fact. Perhaps the representation is intended to aid its reader with solving a particular problem. Whatever the purpose the representation is intended to serve, some choices of representations will be more suitable than others~\cite{COX1999343}. With an abundance of information available to us in this digital age, it is perhaps more important than ever to enable effective representation choices. This is a major motivation for the contribution of this paper, which presents \textit{Representational Systems Theory}. Our novel theory is designed to provide a fundamental, mathematical framework that facilitates the study and use of representations and representational systems.

Strictly, one could argue that a representation \textit{necessarily} represents some object, since that is what it means to \textit{represent}. In common usage, the word representation is often used to refer to syntactic entities without specifying the objects being represented, even though the intended objects need not be unambiguously identifiable. As this implies, representations can take many different semantics, which are determined by the context of use or the intention of the utterer (e.g. homonyms). Given this, the distinct concepts of \textit{representation} and \textit{token}, as seen in the major works of Peirce~\cite{peirce:cp}, are helpful to us. It is more precise to say that \textit{tokens} are syntactic entities that have the capacity to represent objects when assigned semantics. We use the terms \textit{token} and, following common usage, \textit{representation} interchangeably in our informal discussions but when speaking formally we use the technically precise term \textit{token} because we do not require any meaning to be assigned.

\begin{paragraph}{What is a Representational System?}
It is important to articulate the difference between a \textit{representation} (strictly, token) and a \textit{representational system}, since they are primary terms of reference throughout this paper. As just suggested, a representation is a syntactic entity that has the capacity to represent objects~\cite{peirce:cp}. By contrast, a \textit{representational system} is a collection of representations. Licato and Zhang take an object-oriented view of a representational system\footnote{Licato and Zhang's definition of a representational system is designed to provide a consistent language with which to describe research contributions in AI~\cite{licato:ersiai}. Thus, their motivation is different to ours; in part, our goal is to devise a formal theory that is capable of being exploited by AI engines to facilitate changes of representation.}, defining it to be a class comprising two parts: a finite set of members (i.e. representations), each of which is assigned a type, and a finite set of methods, which corresponds to what the members can do~\cite{licato:ersiai}. This, of course, is devoid of any notion of semantics, which is key to what it means for a representation \textit{to represent}, for which Licato and Zhang assume a function can be assigned that maps members to the represented objects.

Like Licato and Zhang, our formalisation of a representational system views it as comprising (potentially infinitely many) members -- the tokens -- that are assigned types. However, our conceptualisation of a representational system is at a finer level of granularity than Licato and Zhang's, by explicitly capturing: how tokens are syntactically built from other tokens in a \textit{grammatical} sense, \textit{entailment} relations between tokens (e.g. embodying deductive~\cite{sep-logic-classical}, inductive~\cite{sep-logic-inductive} or defeasible~\cite{koons:dr} reasoning), and \textit{meta-level properties} of representations\footnote{These properties can be anything that are contextually important about the represenation itself, e.g.: (a) for a word, one property is its length, (b) for a pair of representations,  one property is a measure of their relative cognitive complexity, and (c) for the curves in an Euler diagram, each of them has a (graphical) property that is the  colour it adopts.} that can be exploited by AI engines to, say, drive the selection of alternative representations. %We define each of these three parts of a representational system in section~\ref{sec:RSs}.

Indeed, our view is perhaps more closely aligned with that of Goldin, who posits that a representational system comprises \textit{primitive} ``characters or signs'' drawn from sets that may not be well-defined~\cite{GOLDIN1998137}. In turn, these primitives can be combined using potentially \textit{ambiguous rules} in order to produce \textit{permitted configurations}, i.e. new representations. Goldin explicitly acknowledges the existence of further ``ambiguous rules'' for ``moving from one [representation] ... to another'' in the context of inference.  Representational Systems Theory is directly aligned with Goldin's position and does not require  there to be formal rules that build new representations or define entailment relations: it is designed to be able to encode representational systems whose grammatical construction `rules' and entailment relations have not been precisely specified as well as those systems that are rigorously defined. In addition, Goldin observes that representational systems may have ``higher-level structure'' and points towards relations between representations, assigning semantics, and evaluating truth values as examples. This observation directly relates to our inclusion of meta-level properties that form part of a representational system. % Regarding semantics in particular, we view this is a `link' between representational systems, specifically a binary relation between their representations.

To reinforce some of the above remarks, there is no presupposition by Representational Systems Theory that any specific representational system is formal itself. The theory can be applied to, say: hand-drawn sketches, where there is no mathematically precise definition of a `well-formed sketch'; formal logics, where we are able to define what constitutes a well-formed formula; and natural languages which have understood, yet not completely defined, `rules' for producing words, phrases or sentences. We merely assume that any representation (strictly, token) is either a single entity in its own right (e.g. part of a sketch formed from a single pen stroke, a symbol, or a letter) or is \textit{constructed} from other representations. In this context, Representational Systems Theory provides a formal lens through which we can study both informal and formal representations. This is an important point, since it implies that  Representational Systems Theory can be applied to representations that arise in the real-world, which need not have any formal underpinning at all.
\end{paragraph}

%\subsection{Research Contributions}
\begin{paragraph}{Research Contribution} Having now framed our primary terms of reference, our major contributions can be summarised as follows:
\begin{enumerate}
\item[-] \textit{Contribution 1:} We introduce \textit{Representational Systems Theory}, which is \textit{the first general framework} designed to abstractly encode a wide variety of representations in a unified way from three fundamental perspectives: their grammatical construction, entailment relations between them, and their meta-level properties. A particular highlight is that Representational Systems Theory provides foundational tools for examining representational systems.
\item[-] \textit{Contribution 2:} Representational Systems Theory facilitates the development and implementation of \textit{representational-system-agnostic transformation algorithms}, removing the current need to devise system-specific algorithms, because of its unified nature. We present a novel approach to \textit{structural transformations} that is designed to support \textit{representation selection} based on desired structural relations and meta-level properties holding between representations. This removes a major theoretical barrier, since it is now no longer necessary to devise system-specific algorithms.
\item[-] \textit{Contribution 3:} Our novel approach to structure transfer \textit{permits the derivation of partial transformations} when no terminating algorithm can produce a complete transformation.  This is particularly important since the problem of finding a suitable transformation of one representation into another, subject to certain relationships holding between the two representations, is clearly non-terminating in general\footnote{For instance, one common example of such a transformation problem is to find a representation, $t'$ say, that is semantically equivalent to representation $t$. Establishing semantic equivalence is an example of a problem that is often undecidable and, thus, for which there is no terminating algorithm that guarantees to provide a `yes' or 'no' response. An algorithm that transformed $t$ into $t'$ would need to ensure that $t$ and $t'$ are semantically equivalent.}. Offering a partial transformation to an end-user may well provide sufficient information about a suitable target representation that they are able to complete the transformation process themselves.
\end{enumerate}
In essence, such is the universal nature of Representational Systems Theory, exploiting it to encode, analyse and transform representations is akin to exploiting category theory in the study of mathematical structures: both theories offer a high level of abstraction that can be broadly applied to the study of a wide range of similar entities.

There are practical reasons why our theoretical contributions have the potential to make a major impact, one of which we now discuss. Suppose an AI system is provided with a representation and is asked to identify a set of alternative representations from a variety of known representational systems; the importance of representation change and the potential role of AI in this context is discussed in~\cite{jamnik:emwtehatsrwah} and illustrated in Section~\ref{sec:example}. Based on the current state-of-the-art, one could develop system-specific transformation algorithms in order to explore a  range of possible alternative representations. Such an approach inherently leads to a requirement to devise multiple transformation algorithms, with their specific implementations depending on the systems that one is considering. This raises an immediate question: is it possible to devise a theoretical framework that enables a common approach to transforming one representation into another? Of course, affirmatively answering such a question would require a major theoretical advance in the study of representational systems. In particular, it requires a \textit{common language} with which to describe \textit{all} representations and representational systems. For, without such a language, transformation algorithms cannot be system-agnostic. Representational Systems Theory provides such a common language (developed in Section~\ref{sec:RSs}), based on a graph-theoretic conceptualisation, that permits all representations and representational systems that are encompassed by Goldin's thesis to be abstractly encoded and transformed.

Now, one could be forgiven for thinking that when we talk of transforming one representation into another, we are focusing on re-representation in the sense of \textit{semantic equivalence}. Indeed, the contextual examples that will be given in Section~\ref{sec:example} essentially correspond to transformation problems of that kind. Importantly, Representational Systems Theory goes far beyond supporting transformations from just this semantic perspective. It allows us to seek new representations that are in some \textit{specified relationship} with the original representation that we seek to transform (the relevant theoretical contributions are primarily given in Section~\ref{sec:structuralTransformations}, but they rely on all preceding sections). Such relations are varied, and can include -- to name a few possibilities -- being cognitively more effective (subject to a suitable model of cognitive cost being known\footnote{Providing such a model is beyond the scope of this paper. We are working towards a generic model,  not tied to any particular representational system,  that will estimate the relative cognitive costs of competing representations~\cite{cheng2021cognitive}.}), possessing different graphical properties, identifying representations that convey observable facts given the original representation~\cite{shimojima:spodatcp}, or more effectively supporting the derivation of a solution to a problem, via the entailment facilities that are available. This high degree of generality is a particular strength of Representational Systems Theory: allowing transformation algorithms to identify alternative representations, in any context where a change of representation is desired, leads to wide-ranging applicability.

The extent of our theory's applicability even goes beyond producing alternative representations. In fact, the process of finding a \textit{structural transformation} (Section~\ref{sec:structuralTransformations}) will identify the primitives from which a new representation is built and how those primitives are used to build the new representation. The most intuitive interpretation of these remarks lies at the grammatical level: a transformation will identify new primitive representations (called \textit{foundations} in our theoretical development) and establish how they are combined to produce the desired new representation. In an entailment sense, the new primitives correspond to given facts, and a transformation establishes the way in which they are used via entailment relations to infer other representations, that is, how we \textit{move from one representation to another} when solving problems. Thus, if we are provided with a \textit{solution to a problem} then our theory supports the derivation of alternative solutions, in other representational systems, via transformation algorithms.
\end{paragraph}

%%% rep choice was here %%%

\begin{paragraph}{Organisation}
The paper is structured as follows. Section~\ref{sec:example} begins by discussing the major contributions of the paper in more depth. Section~\ref{sec:prelims} covers fundamental, pre-existing terminology that is exploited throughout the paper and exemplifies notational conventions. The theory of representational systems is developed in Section~\ref{sec:RSs}, culminating in a mathematically precise conceptualisation of a representational system, delivering the first major contribution of the paper. A noteworthy aspect of our conceptualisation is the development of \textit{construction spaces}, which provide a unified approach to defining the grammatical, entailment and meta-level properties of representational systems\footnote{A representational system will thus comprise three construction spaces: grammatical and entailment spaces alongside an \textit{identification} space, for the meta-level properties.}. Our second and third major contributions are the derivation of general methods for transforming one representation into another and, respectively, providing a partial transformation when it is not possible to identify a complete transformation. In this context, we require an approach that \textit{decomposes} representations into \textit{other representations that can be constructed in simpler ways}. Thus, Section~\ref{sec:constructions} develops the theory of \textit{constructions} of representations and explains in some detail how to \textit{decompose} constructions into smaller constructions. Section~\ref{sec:patterns} develops the theory of \textit{patterns}, used to describe classes of constructions, which play a core role in structural transformations. Exploiting Sections~\ref{sec:constructions} and~\ref{sec:patterns}, the theory of structural transformations is given in Section~\ref{sec:structuralTransformations}, thus delivering the second and third major constributions. We conclude in section~\ref{sec:conclusion} where we also discuss future work on the development of AI tools that exploit Representational Systems Theory to invoke structural transformations.

Lastly, we point the reader towards the extensive appendices. To aid readability, the appendices include a summary of the novel notation (Appendix~\ref{sec:defnsAndNotation}) and terminology (Appendix~\ref{sec:defnsTerms}) that we introduce: a large number of new concepts are presented. Further references to the appendices are made at appropriate places throughout the paper.
\end{paragraph}

\section{The Major Contributions}\label{sec:example}

We now set about providing further discussion, alongside a range of examples, in order to exemplify our major contributions. For technical precision, throughout this section we use the term \textit{token} rather than \textit{representation} to avoid any potential ambiguity that may arise. Thus, a representational system can be considered as comprising tokens (including grammatical information about how they are formed from other tokens), entailment relations between tokens, and information about properties of tokens; a formal definition of such a system will be developed in section~\ref{sec:RSs}, but for now this informal perspective is sufficient.

\begin{paragraph}{Contribution 1: Representational Systems Theory as a general framework for abstractly encoding representational systems.}
In order to devise a general framework that provides a common language, permitting all representational systems encompassed by Goldin's thesis to be abstractly encoded, we consider, generically, how:
\begin{enumerate}
\item tokens are formed in a grammatical sense,
\item entailment relations arise between tokens, and
\item meta-level properties of tokens can be captured.
\end{enumerate}
A question immediately arises: are we able to formalize each of these three `layers' in a way that can be applied to all representational systems? To answer this question, one must think abstractly about each layer in order to identify commonality across a wide range of representational systems, which can have diverse syntax and, as highlighted by Goldin, need not even be formal systems in their own right. In addition to this concern, the three \textit{different} layers of a representational system each focus on a particular fundamental aspect. We ask ourselves: can we find common structure in these three layers that would allow us to understand them under a single, unifying, paradigm? As we proceed, we illustrate that the answer to both of these questions is yes, and this commonality is captured via \textit{construction spaces} in Section~\ref{sec:RSs}. % where we link our observations to the Curry-Howard correspondence.

Firstly, then, let us consider the grammatical layer. Below are four tokens that are varied in nature, with respect to their syntax and level of formality. The right-most token is an area diagram and the second token is a probability statement. There is a metro map (without station names), followed by a hand-sketched, incomplete, class \begin{samepage}diagram.
\begin{center}
\adjustbox{valign=c, scale = 0.98}{\areasDiagram}
\hfill
\adjustbox{valign=c}{\small$P(D) = 0.04$}
\hfill
\adjustbox{valign=c, scale = 0.63}{\metro}\label{metromap}
\hfill
\adjustbox{valign=c}{\includegraphics[scale=0.68]{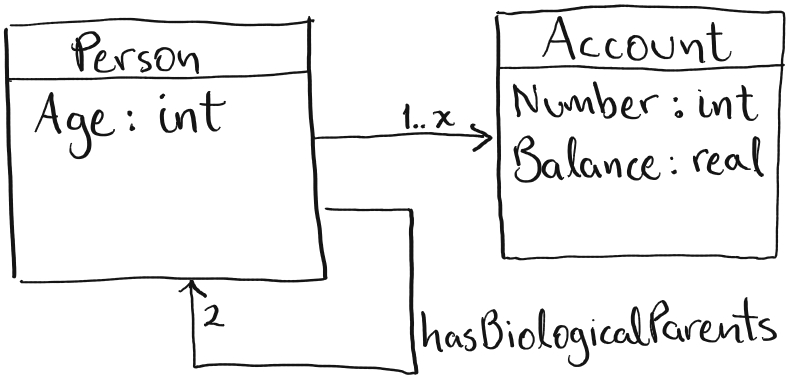}}
%\hspace{20pt}
%\AsubBdisjC
\end{center}\end{samepage}
In each case, the tokens are \textit{constructed} from other tokens, as illustrated in the four graphs, called \textit{constructions}\footnote{One might have expected some of these constructions to contain further vertices, showing how their leaves were constructed from other, simpler, tokens. However, this is not a requirement of our theory. In a construction, one can choose how much information to include about how a token is constructed.}, \newcommand{\metroscale}{0.52}\begin{samepage}below:
\begin{center}
	\adjustbox{valign=c}{%
	\begin{tikzpicture}[construction]
	\node[termrep, anchor=center] (t) {\scalebox{\metroscale}{\areasDiagram}};
	\node[constructorS = {$\texttt{superImpose}$}, below = 0.4cm of t] (u) {};
		\node[termrep, below left = 1.1cm and 1.8cm of u,anchor=center] (ta) {\scalebox{\metroscale}{\areasA}};
		\node[constructor = {$\texttt{superImpose}$}, below = 0.4cm of ta] (ua) {};
			\node[termrep, below left = 1.1cm and 0.8cm of ua,anchor=center] (taa) {\scalebox{\metroscale}{\areasAA}};
			\node[termrep, below right = 1.1cm and 0.8cm of ua,anchor=center] (tab) {\scalebox{\metroscale}{\areasAB}};
			\node[constructor = {$\texttt{superImpose}$}, below = 0.4cm of tab, xshift = 0.2cm] (uab) {};
				\node[termS, below left = 1.1cm and 0.8cm of uab,anchor=center] (taba) {\scalebox{\metroscale}{\areasABA}};
				\node[termS, below right = 1.1cm and 0.8cm of uab,anchor=center] (tabb) {\scalebox{\metroscale}{\areasABB}};
		\node[termrep, below right = 1.1cm and 1.8cm of u,anchor=center] (tb) {\scalebox{\metroscale}{\areasB}};
		\node[constructor = {$\texttt{superImpose}$}, below = 0.4cm of tb] (ub) {};
			\node[termrep, below left = 1.1cm and 0.8cm of ub,anchor=center] (tba) {\scalebox{0.5}{\areasBA}};
			\node[termrep, below right = 1.1cm and 0.8cm of ub,anchor=center] (tbb) {\scalebox{0.5}{\areasBB}};
	\path[->] (u) edge (t)
	(ta) edge[out = 70, in = 180] node[index label]{1} (u)
	(tb) edge[out = 110, in = -0] node[index label]{2} (u)
	(ua) edge (ta)
	(taa) edge[out = 80, in = 190] node[index label]{1} (ua)
	(tab) edge[out = 100, in = -10] node[index label]{2} (ua)
	(uab) edge ([xshift = 0.2cm]tab.south)
	(taba) edge[out = 80, in = 190] node[index label]{1} (uab)
	(tabb) edge[out = 100, in = -10] node[index label]{2} (uab)
	(ub) edge (tb)
	(tba) edge[out = 80, in = 190] node[index label]{1} (ub)
	(tbb) edge[out = 100, in = -10] node[index label]{2} (ub);
	\end{tikzpicture}}\hfill
\adjustbox{valign=c}{%
	\begin{tikzpicture}[construction]\footnotesize
	\node[termrep] (t) {$P(D) = 0.04$};
	\node[constructor = $\texttt{applyInfix}$,below = 0.6cm of t] (u) {};
	\node[termrep,below left = 0.6cm and 0.7cm of u] (t1) {$P(D)$};
	\node[constructor = $\texttt{applyUnary}$,below = 0.6cm of t1] (u1) {};
	\node[termrep,below left = 0.6cm and 0.7cm of u1] (t11) {$P$};
	\node[termrep,below left = 0.7cm and 0.05cm of u1] (t12) {$($};
	\node[termrep,below right = 0.75cm and -0.05cm of u1] (t13) {$D$};
	\node[termrep,below right = 0.55cm and 0.65cm of u1] (t14) {$)$};
	\node[termrep,below = 0.7cm of u] (t2) {\vphantom{j}$=$};
	\node[termrep,below right = 0.6cm and 0.7 of u] (t3) {$0.04$};
	\node[constructorNE = $\texttt{buildDecimal}$,below = 0.6cm of t3] (u3) {};
	\node[termrep,below left = 0.6cm and 0.4cm of u3] (t31) {$0$};
	\node[termrep,below = 0.7cm and 0.05cm of u3] (t32) {$.$};
	\node[termrep,below right = 0.6cm and 0.3cm of u3] (t33) {$04$};
	\node[constructorpos = {$\texttt{buildDigitSeq}$}{165}{0.75cm},below = 0.6cm of t33] (u33) {};
	\node[termrep,below left = 0.6cm and 0.05cm of u33] (t331) {$0$};
	\node[termrep,below right = 0.6cm and 0.1cm of u33] (t332) {$4$};
	\path[->] (u) edge (t)
	(t1) edge[bend left = 10] node[index label]{1} (u)
	(t2) edge node[index label]{2} (u)
	(t3) edge[bend right = 10] node[index label]{3} (u)
	(u1) edge (t1)
	(t11) edge[bend left = 10] node[index label]{1} (u1)
	(t12) edge[bend left = 10] node[index label]{2} (u1)
	(t13) edge[bend right = 10] node[index label]{3} (u1)
	(t14) edge[bend right = 10] node[index label]{4} (u1)
	(u3) edge (t3)
	(t31) edge[bend left = 10] node[index label]{1} (u3)
	(t32) edge node[index label]{2} (u3)
	(t33) edge[bend right = 10] node[index label]{3} (u3)
	(u33) edge (t33)
	(t331) edge[bend left = 10] node[index label]{1} (u33)
	(t332) edge[bend right = 10] node[index label]{2} (u33);
	\end{tikzpicture}}
\end{center}\end{samepage}
\begin{center}
	\adjustbox{valign=c}{%
	\begin{tikzpicture}[construction]
	\node[termrep] (t) {\scalebox{\metroscale}{\metro}};
	\node[constructor = {$\texttt{addLine}$}, below = 0.5cm of t] (u) {};
	\node[termrep, below left = 0.3cm and -0.1cm of u] (t1) {\scalebox{\metroscale}{\metroA}};
	\node[termrep, below right = 0.3cm and 0.0cm of u] (t2) {\scalebox{\metroscale}{\metroB}};
	\node[constructorNE = {$\texttt{addStop}$}, below = 0.5cm of t2,xshift = -0.7cm] (u2) {};
	\node[termrep, below left = 0.3cm and 0.6cm of u2] (t21) {\scalebox{\metroscale}{\metroBA}};
	\node[termrep, below right = 0.3cm and -0.7cm of u2] (t22) {\scalebox{\metroscale}{\metroBB}};
	\path[->] (u) edge (t)
			(t1) edge[bend left = 15] node[index label]{1} (u)
			(t2) edge[bend right = 15] node[index label]{2} (u)
			(u2) edge ([xshift = -0.7cm]t2.south)
			(t21) edge[out = 60, in = -170, looseness = 0.9] node[index label]{1} (u2)
			(t22) edge[out = 95, in = -10, looseness = 0.9] node[index label]{2} (u2);
	\end{tikzpicture}}\hfill
\adjustbox{valign=c}{%
	\begin{tikzpicture}[construction]
	\node[termrep] (t) {\includegraphics[scale=0.54]{UML-sketch.png}};
	\node[constructor = {$\texttt{addArrow}$}, below = 0.5cm of t] (u) {};
	\node[termrep, below left = 0.3cm and 0.2cm of u] (t1) {\includegraphics[scale=0.49]{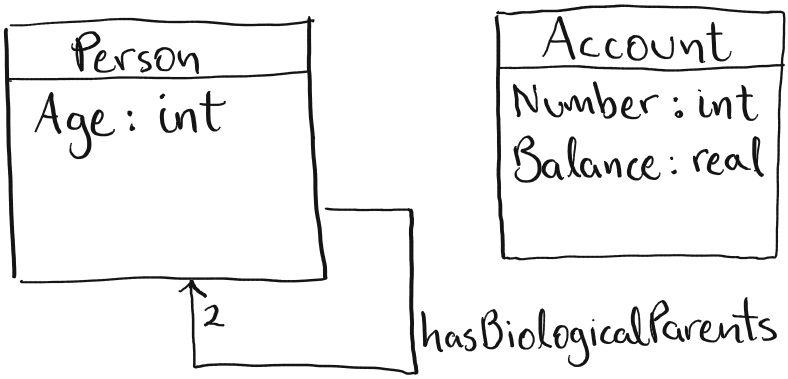}};
	\node[constructor = {$\texttt{addClassBox}$}, below = 0.5cm of t1,xshift=0.1cm] (u1) {};
	\node[termrep, below left = 0.6cm and -1.6cm of u1] (t11) {\includegraphics[scale=0.49]{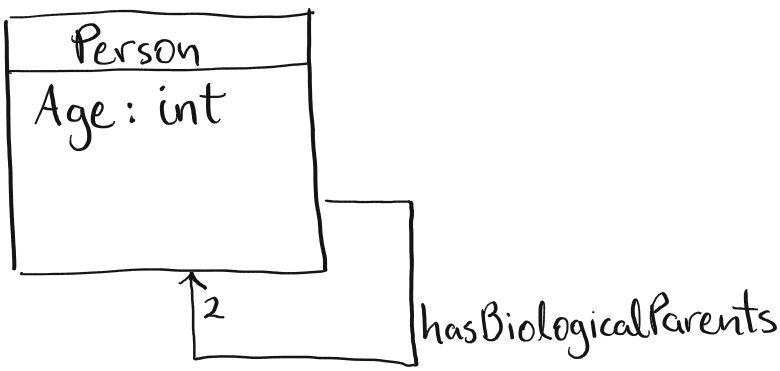}};
	\node[termrep, below right = -0.3cm and 1.5cm of u1] (t12) {\includegraphics[scale=0.49]{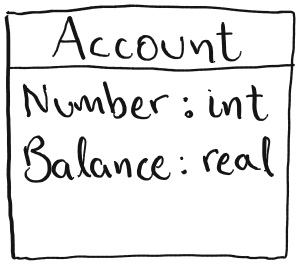}};
	\node[constructorNE = {$\texttt{addAttribute}$}, below = 0.5cm of t12] (u12) {};
	\node[termrep, below left = 0.6cm and -0.7cm of u12] (t121) {\includegraphics[scale=0.49]{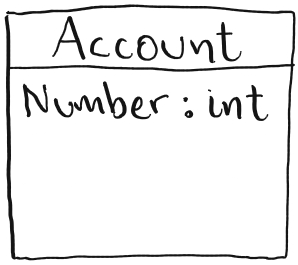}};
	\node[termrep, below right = 0.4cm and 0.5cm of u12] (t122) {\includegraphics[scale=0.49]{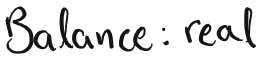}};
	\node[termrep, below right = -0.2cm and 1.1cm of u] (t2) {\includegraphics[scale=0.49]{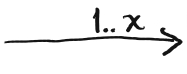}};
	\node[constructorNW = {$\texttt{numberArrow}$}, below = 0.5cm of t2] (u2) {};
	\node[termrep, below left = 0.5cm and 0.0cm of u2] (t21) {\includegraphics[scale=0.49]{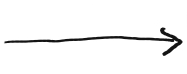}};
	\node[termrep, below right = 0.6cm and 0.1cm of u2] (t22) {\includegraphics[scale=0.49]{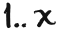}};
	\path[->] (u) edge (t)
	(t1) edge[out = 70, in = -175] node[index label]{1} (u)
	(t2) edge[out = 175, in = -5] node[index label]{2} (u)
	(u1) edge ([xshift=0.1cm]t1.south)
	(t11) edge[bend left = 15] node[index label]{1} (u1)
	(t12) edge[out = 158, in = -5] node[index label]{2} (u1)
	(u12) edge (t12)
	(t121) edge[bend left = 15] node[index label]{1} (u12)
	(t122) edge[out = 130, in = -10] node[index label]{2} (u12)
	(u2) edge (t2)
	(t21) edge[bend left = 15] node[index label]{1} (u2)
	(t22) edge[bend right = 15] node[index label]{2} (u2);
	\end{tikzpicture}}
\end{center}
%\begin{center}
%	\adjustbox{valign=c}{%
%	\begin{tikzpicture}[construction]\small
%	\node[termrep](t){\scalebox{0.5}{\AsubBdisjC}};
%	\node[constructor= {unify}, below = 0.5cm of t](u){};
%	\node[termrep, below left = 0.4cm and 0.6cm of u](t1){\scalebox{0.75}{\AsubB}};
%	\node[constructorNW = {subsetMerge}, below = 0.5cm of t1,xshift = 0.2cm](u1){};
%	\node[termrep, below left = 0.4 and 0.2cm of u1](t11){\scalebox{0.75}{\Acirc}};
%	\node[termrep, below right = 0.4 and 0.7cm of u1](t13){\scalebox{0.75}{\Bcirc}};
%	\node[constructor = {labelCurve}, below = 0.5 of t13](u13){};
%	\node[termrep, below left = 0.3 and 0.4cm of u13](t131){\scalebox{0.75}{\Ucirc}};
%	\node[termrep, below right = 0.6 and 0.2cm of u13](t132){$B$};
%	\node[termrep, below right = 0.4cm and 0.4cm of u](t3){\scalebox{0.75}{\BdisjC}};
%	\node[constructorNE = {disjointMerge}, below = 0.5cm of t3,xshift = -0.3cm](u2){};
%	\node[termrep, below right = 0.4 and 0.3cm of u2](t33){\scalebox{0.75}{\Ccirc}};
%	\path[->] (u) edge (t)
%	(t1) edge[bend left = 15] node[index label]{1} (u)
%	(u1) edge ([xshift = 0.2cm]t1.south)
%	(t11) edge[bend left = 10] node[index label]{1} (u1)
%	(t13) edge[bend right = 10] node[index label]{2} (u1)
%	(t3) edge[bend right = 15] node[index label, pos = 0.5]{2} (u)
%	(u2) edge ([xshift = -0.3cm]t3.south)
%	(t13) edge[bend left = 10] node[index label]{1} (u2)
%	(t33) edge[bend right = 10] node[index label]{2} (u2)
%	(u13) edge (t13.south)
%	(t131) edge[bend left = 10] node[index label]{1} (u13)
%	(t132) edge[bend right = 10] node[index label]{2} (u13);
%	\end{tikzpicture}}
%\end{center}
For instance, the area diagram can be built by superimposing two other area diagrams which, in turn, are built from other tokens. The arrows are numbered, indicating the order in which the \textit{input tokens} are `used' in order to `produce' the output token. To illustrate, $P(D) = 0.04$, which is the output from the \texttt{applyInfix} vertex, is formed from three other tokens, namely $P(D)$, $=$, and $0.04$; switching the indices on the associated arrows numbered $1$ and $3$ would instead yield $0.04=P(D)$.

Extrapolating from these illustrations, it is evident that \textit{all tokens} can be considered as either composite tokens (a token that is built from other tokens) or primitive tokens (a token that is not built from other tokens). Thus, we can consider the grammatical layer to be, essentially, a collection of relations between tokens: for each composite token, $t$, there is a collection of other tokens, $t_1,...,t_n$, from which $t$ is built; in turn, each $t_i$ could be primitive or composite. These observations are invariant under representational system choice. Regardless of the system, we can view the construction of a token as a relation, $c$ (for construction) between $t_1,...,t_n$ and $t$: $c(t_1,...,t_n,t)$\footnote{In our formalization, we will say that the $c$ has input-sequence $[t_1,...,t_n]$ and output $t$, which means, informally, that $c$ constructs $t$ from the ordered sequence of tokens $[t_1,...,t_n]$.}. In Representational Systems Theory, each of these relations is captured by the concept of a \textit{constructor} (definition~\ref{defn:constructionSpecification}) and they will all be encoded using a \textit{structure graph} (definition~\ref{defn:structureGraph}) within the context of a \textit{construction space} (definition~\ref{defn:constructionSpace}).

Similar observations can be readily made about entailments: they are widely viewed as relations where tokens $t_1,...,t_n$ entail $t$. It is easy, then, to see that one can formalize the grammatical and entailment layers of any representational system in a unified way. Our remaining consideration relates to meta-level properties of tokens, for which further examples are useful. Consider, first, the metro map token, on page~\pageref{metromap}, that includes six coloured routes. Two of its meta-level properties are encoded below, using constructions that include \textit{meta-tokens}; in the graph, vertices of the form \tikz[construction,baseline=-3pt]{%
\node[termIrep] (u) {$$}} are used for meta-tokens whereas \tikz[construction,baseline=-3pt]{
\node[termrep] (u) {$$}} is used for non-meta-tokens.{\pagebreak} The first construction identifies that the purple route has 8 stops and the other construction encodes the property that the purple and black routes share two \begin{samepage}stops:
\begin{center}
\adjustbox{valign = c}{\begin{tikzpicture}[construction]\footnotesize
	\node[termIrep] (t) {$8$};
	\node[constructorGE = {$\texttt{numberOfStops}$}, below = 0.4cm of t] (u) {};
	\node[termrep, below = 0.55cm of u] (t1) {\scalebox{0.5}{\metroB}};
	\path[->] (u) edge (t)
	(t1) edge[bend left = 0] node[index label]{1} (u);
	\end{tikzpicture}}
\hspace{1cm}
\adjustbox{valign = c}{\begin{tikzpicture}[construction]\footnotesize
\node[termIrep] (t) {$2$};
\node[constructor = {$\texttt{numberOfSharedStops}$}, below = 0.4cm of t] (u) {};
\node[termrep, below left = 0.3cm and 0cm of u] (t1) {\scalebox{0.5}{\metroB}};
\node[termrep, below right = 0.3cm and 0cm of u] (t2) {\scalebox{0.5}{\metroC}};
\path[->] (u) edge (t)
(t1) edge[out = 60, in = -170] node[index label]{1} (u)
(t2) edge[out = 120, in = -10] node[index label]{2} (u);
\end{tikzpicture}}
\end{center}
\end{samepage}
Selected properties of other tokens on page~\pageref{metromap} can be identified by the following \begin{samepage}constructions:
\begin{center}
\adjustbox{valign = c}{\begin{tikzpicture}[construction]\footnotesize
	\node[termIrep] (t) {$\bot$};
	\node[constructor = {$\texttt{ratioOfAreasEquals}$}, below = 0.4cm of t] (u) {};
	\node[termrep, below left = 0.3cm and 0.5cm of u] (t1) {\scalebox{0.5}{\areasBB}};
	\node[termrep, below right = 0.5cm and -0.6cm of u] (t2) {\scalebox{0.5}{\areasAA}};
	\node[termrep, below right = 0.5cm and 1.3cm of u] (t3) {$24.5$};
	\path[->] (u) edge (t)
	(t1) edge[out = 80, in = -170] node[index label]{1} (u)
	(t2) edge[out = 80, in = -70] node[index label]{2} (u)
	(t3) edge[out = 95, in = -10] node[index label]{3} (u);
	\end{tikzpicture}}
\hspace{0.5cm}
\adjustbox{valign = c}{\begin{tikzpicture}[construction]\footnotesize
\node[termIrep] (t) {$\top$};
\node[constructor = {$\texttt{areaEquals}$}, below = 0.4cm of t] (u) {};
\node[termrep, below left = 0.3cm and -0.1cm of u] (t1) {\scalebox{0.5}{\areasABA}};
\node[termrep, below right = 0.4cm and 0.3cm of u] (t2) {$0.04$};
\path[->] (u) edge (t)
(t1) edge[out = 80, in = -170] node[index label]{1} (u)
(t2) edge[out = 95, in = -10] node[index label]{2} (u);
\end{tikzpicture}}
\hspace{0.5cm}
\adjustbox{valign = c}{\begin{tikzpicture}[construction]\footnotesize
\node[termIrep] (t) {$\top$};
\node[constructor = {$\texttt{isSubtermOf}$}, below = 0.4cm of t] (u) {};
\node[termrep, below left = 0.4cm and 0.22cm of u] (t1) {$P(D)$};
\node[termrep, below right = 0.4cm and -0.1cm of u] (t2) {$P(D) = 0.04$};
\path[->] (u) edge (t)
(t1) edge[out = 80, in = -170] node[index label]{1} (u)
(t2) edge[out = 100, in = -10] node[index label]{2} (u);
\end{tikzpicture}}
\end{center}
\begin{center}
\adjustbox{valign = c}{\begin{tikzpicture}[construction]\footnotesize
	\node[termIrep] (t) {Account};
	\node[constructorGE = {$\texttt{className}$}, below = 0.4cm of t] (u) {};
	\node[termrep, below = 0.55cm of u] (t1) {\includegraphics[scale=0.49]{UML-sketch12.png}};
	\path[->] (u) edge (t)
	(t1) edge[bend left = 0] node[index label]{1} (u);
	\end{tikzpicture}}
\hspace{1.5cm}
\adjustbox{valign = c}{\begin{tikzpicture}[construction]\footnotesize
	\node[termIrep] (t) {$\top$};
	\node[constructor = {$\texttt{includesClass}$}, below = 0.4cm of t] (u) {};
	\node[termrep, below left = 0.3cm and -0.2cm of u] (t1) {\includegraphics[scale=0.49]{UML-sketch.png}};
	\node[termIrep, below right = 0.4cm and 0.3cm of u] (t2) {Person};
	\path[->] (u) edge (t)
	(t1) edge[out = 70, in = -170] node[index label]{1} (u)
	(t2) edge[out = 95, in = -10] node[index label]{2} (u);
	\end{tikzpicture}}
\end{center}\end{samepage}
Note that the bottom right construction includes a meta-token as an arrow's source: {Person} (which is type-written text) is not part of the representational system that contains hand-sketched class diagrams. This particular construction identifies the fact that the sketched class diagram includes Person as one of its class names.

Now, as well as identifying properties of tokens \textit{within} a given representational system, constructions can also be used to identify properties of tokens \textit{across} systems. Perhaps the most prominent application of this kind of inter-representational-system-encoding arises when semantics are assigned to tokens. As we can see below, as well as semantic relations, we can encode \textit{observability}\footnote{Example~\ref{ex:rs:encodingProperties} discusses the notion of observability in more detail.} relations using \begin{samepage}constructions:
\begin{center}
\begin{minipage}[t]{0.23\textwidth}\centering
	\textit{encoding semantics:}\\[1ex]
	\begin{tikzpicture}[construction]
	\node[termIrep] (v) at (3.2,4.2) {\footnotesize$\top$};
	\node[constructorINW={$\texttt{represents}$}] (u) at (3.2,3.4) {};
	\node[termrep] (v2) at (2.5,2.35) {\scalebox{0.8}{\abcDiagram}};
	\node[termrep] (v4) at (3.8,1.7) {\footnotesize$\{A\subseteq B, B\cap C=\emptyset\}$};
	\path[->]
	(u) edge[bend right = 0] (v)
	(v2) edge[bend right = -5] node[index label] {1} (u)
	(v4) edge[bend right = 10] node[index label] {2} (u);
	\end{tikzpicture}
\end{minipage}
\hspace{1cm}
\begin{minipage}[t]{0.4\textwidth}\centering
	\textit{encoding observability:}\\[1ex]
	\begin{tikzpicture}[construction]
	\node[termIrep] (v) at (3.2,4.2) {\footnotesize$\top$};
	\node[constructorIpos={\constructorObserve}{0}{0.58cm}] (u4) at (2.8,3.4) {};
	\node[constructorIpos={\constructorObserve}{143}{0.3cm}] (u3) at (1.7,3.6) {};
	\node[constructorIpos={\constructorObserve}{32}{0.38cm}] (u5) at (4.6,3) {};
	\node[termrep] (v4) at (3.3,2.4) {\footnotesize$B\cap C=\emptyset$};
	\node[termrep] (v3) at (1.3,2.5) {\footnotesize$A\subseteq B$};
	\node[termrep] (v5) at (4.9,1.7) {\footnotesize$A\cap C=\emptyset$};
	\node[termrep] (v1) at (2.3,1.7) {\scalebox{0.8}{\abcDiagram}};
	\path[->]
	(u4) edge[bend right = 0] (v)
	(u3) edge[bend right = -15] (v)
	(u5) edge[bend right = 20] (v)
	(v1) edge[in=235,out=85] node[index label] {1} (u4)
	(v1) edge[bend right = 5] node[index label] {1} (u3)
	(v1) edge[bend right = 35] node[index label] {1} (u5)
	(v4) edge[bend right = 10] node[index label] {2} (u4)
	(v3) edge[bend right = -10] node[index label] {2} (u3)
	(v5) edge[bend right = 5] node[index label] {2} (u5);
	\end{tikzpicture}
\end{minipage}
\end{center}
\end{samepage}

It is now evident that \textit{properties of tokens} can also be viewed as relations: a token,{\pagebreak} or collection of tokens, can be identified as having a certain property, $p$, using a constructor $c$, that is $c(t_1,...,t_n,p)$. \begin{samepage}For example, we have:
\begin{itemize}
\item[-] $\texttt{numberOfStops}\left(\scalebox{0.2}{\tikz[baseline=-12pt]{
		\clip(-1.5,-1.4) rectangle (3.4,1.7);
		\node () {\metroB};}},8\right)$,
\item[-] $\texttt{includesClass}\left(\adjustbox{raise=-0.3cm}{\includegraphics[scale=0.25]{UML-sketch.png}}\hspace{0.04cm},\textup{Person},\top\right)$, and
\item[-] $\texttt{represents}\left(\adjustbox{scale=0.9,raise=-0.2cm}{\abcdDiagram},\{A\subseteq B, D\subseteq C, B\cap C=\emptyset\}, \top\right)$.
\end{itemize}
\end{samepage}
The fact that all three layers can be viewed, at a certain level of abstraction, as having essentially the same structural properties (regardless of the representational system in question), represents a significant insight: our theory is unified in terms how each layer is formalised and, in general, it applies to all representational systems encompassed by Goldin's thesis. In summary, by exploiting our novel idea of a construction space to formalise each layer, Representational Systems Theory provides a general framework that can be used to abstractly encode representational systems.
\end{paragraph}

\begin{paragraph}{Contribution 2: Representational-system-agnostic transformation specifications for identifying complete transformations that ensure desired structural relations and meta-level properties hold.} Here we focus on how our theory exploits construction spaces to facilitate the transformation of one token into another; below, under \textit{Choices of Tokens: Motivation for Change}, we explore different choices of token and discuss the desire to be able to switch between them. Now, as remarked on above, a construction space will include a \textit{structure graph}, $\graphn$, that encodes relations between tokens; a structure graph is, essentially, a collection of constructions. As we have just witnessed, for any given token, $t$, a construction that is a sub-graph of $\graphn$ will capture from which tokens the token $t$ is \textit{constructed}. The constructions below of the semantically equivalent tokens  $\{A\subseteq B, B\cap C=\emptyset\}$ and \adjustbox{scale=0.61,raise=-0.08cm}{\abcDiagram} will be used to exemplify Representational Systems Theory's approach to structure \begin{samepage}transfer:
\begin{center}
\adjustbox{scale = 0.8,valign=c}{\STConstruction} \hspace{15pt}
\adjustbox{scale = 0.8,valign=c}{\EDConstruction}
\end{center}
\end{samepage}
In each construction, tokens are sources of indexed arrows that target vertices labelled by constructors; e.g. $A$, $\subseteq$ and $B$ are sources of arrows that target the vertex  labelled by the constructor \texttt{assertSubset}, and they are used to construct the output $A\subseteq B$ (in terms of relations, we can think of this as $\texttt{assertSubset}(A,\subseteq, B, A\subseteq B)$. Notice that the construction of $\{A\subseteq B, B\cap C=\emptyset\}$ uses four different constructors: \texttt{asSet}, \texttt{assertSubset}, \texttt{assertEmpty}, and \texttt{intersect}. By thinking about how the four constructors correspond to how one might construct parts of the Euler diagram, we are able to transform $\{A\subseteq B, B\cap C=\emptyset\}$ into \adjustbox{scale=0.61,raise=-0.08cm}{\abcDiagram}\!. In particular, we can \textit{split} the construction of $\{A\subseteq B, B\cap C=\emptyset\}$ into parts, each of which can subsequently be transformed into part of the required Euler diagram. One way of splitting each of the two constructions apart gives us the \textit{decompositions} below. The labels assigned to the parts of the decomposition on the left are then used on the right to show how the parts can be individually transformed into an Euler diagram.
\begin{center}
\adjustbox{scale=0.6,valign=c}{\STDecompositionTree}\hfill
\adjustbox{scale=0.6,valign=c}{\EDDecompositionTree}
\end{center}
By effecting this transformation, we produce an Euler diagram -- namely \adjustbox{scale=0.61,raise=-0.08cm}{\abcDiagram}\! --  that allows us to \textit{observe} facts that we would need to derive from the set-theoretic token, $\{A\subseteq B, B\cap C=\emptyset\}$, such as $A\cap C=\emptyset$. Thus, transformations such as this could, for some tasks, produce cognitively more effective tokens. We return to this example in Section~\ref{sec:cons:decompositions}, where we work through the steps involved in the transformation, including explaining why not all parts of the decomposition of $\{A\subseteq B, B\cap C=\emptyset\}$ need to be transformed into Euler diagrams.

Some general remarks are in order. Our structural transformation process exploits the unified nature of Representational Systems Theory: it is sufficient to define what constitutes a structural transformation from one construction space to another because each representational system is defined to comprise three construction spaces. The approach we take is to use \textit{patterns}, which are also constructions but where tokens are, roughly speaking, replaced by their assigned \textit{types}. Thus, any given pattern can \textit{describe} multiple constructions, where any described construction is isomorphic to the pattern and each token respects the type associated with its identified vertex in the pattern. To illustrate, the six patterns below each describe part of the decompositions above:\\[1ex]
\noindent \textit{-- Set-theoretic patterns}
\begin{center}\footnotesize
	\adjustbox{valign = c}{\STIntSubPattern}
	\hfill
	\adjustbox{valign = c}{\STIntDisPattern}
	\hfill
	\adjustbox{valign = c}{\STrootPattern}
\end{center}

\noindent \textit{-- Euler diagram patterns:}
\begin{center}\footnotesize
	\adjustbox{valign = c}{\EDintabPattern}
	\hspace{20pt}
	\adjustbox{valign = c}{\EDintbcPattern}
	\hspace{20pt}
	\adjustbox{valign = c}{\EDrootPattern}
\end{center}
Here, each of the set-theoretic patterns relates to the Euler diagram pattern drawn immediately underneath it. For instance, making a subset assertion using set-theoretic expressions relates to making a subset assertion using circles in an Euler diagram. Using knowledge about how patterns and tokens are related across construction spaces, we are able to transform one construction into another; the details are covered in Section~\ref{sec:structuralTransformations}.
\end{paragraph}

\begin{paragraph}{Contribution 3:  The ability to identify partial transformations when a complete transformation is not achievable.} As highlighted in the Introduction, the ability to produce partial transformations is of particular practical importance. A key reason for failure to produce a complete transformation, in an automated setting, is due to incomplete knowledge about the (target) construction space into which we are transforming. Incomplete knowledge can mean not all of the tokens are known or the structure graph is only partly known. Both of these are likely scenarios as representational systems often have infinitely many tokens and infinite structure graphs. In addition, there may be incomplete knowledge about relations that hold between the tokens in the original construction space and the target construction space. In such circumstances, it is beneficial to produce a partial transformation which is a pattern with, perhaps, some of its vertices instantiated as tokens.

A partial transformation of $\{A\subseteq B, B\cap C=\emptyset\}$  is given below, where the tokens that form circle labels are instantiated. Such a transformation could arise when the available knowledge base does not have enough information to identify the three circles at the leaves and, thus, the six missing Euler diagrams:
\begin{center}
\EDConstructionPartial
\end{center}

To summarise, Representational Systems Theory necessarily permits the derivation of partial transformations, even though there does not necessarily exist terminating algorithms that are guaranteed to produce complete transformations. This is particularly significant for practical applications, where a human end-user may be readily able to instantiate the missing tokens in order to produce a complete transformation.
\end{paragraph}

\begin{paragraph}{Choices of Tokens: Motivation for Change}
To exemplify one practical motivation for devising a unified theory of representational systems, we present four different tokens, each of which is an alternative representation of a problem. The scope of these tokens and, thus, the representational systems from which they are drawn is large, including natural language (NL), formal notation, a geometric figure, and a table. This variety further illustrates the breadth of representational systems to which our theory can be applied and illuminates just one context in which re-representation can be desirable. So, suppose that we have a token drawn from a natural language system, which could readily be presented in a lecture on probability:

\textsc{Medical Test Problem}\enspace Suppose that 4\% of the population has a disease~$D$. For those who have the disease, a test $T$ is accurate 95\% of the time. For those who do not have the disease, $T$ is accurate 90\% of the time. If you take the test and it comes out positive, what is the probability that you have the disease?

There are many other tokens that represent this problem, drawn from various representational systems. Each of these alternative systems has different entailment relations and, thus, different mechanisms can be used to solve the problem. The choice of token may depend on the ability of the students, their prior knowledge, or their own cognitive biases. An effective choice could better support their learning, or ability to solve the problem: it is well-known that token (representation) choice in mathematics is an important factor in students' progress~\cite{doi:10.1080/10986061003654240,janvier:porittalom}. For the purposes of illustration, we present three of these alternatives.

\newcommand{\Prob}{\textup{P}}
\textsc{Bayesian Probability}\enspace
It is given that $P(D)\!=\!0.04$, $P(T\! \mid \! D)\! = \!0.95$, $P(\overline{T}\! \mid \! \overline{D})\! =\! 0.9$. What is $P(D\!\mid\! T)$?

\noindent
\parbox{0.74\textwidth}{%
\ \ \ \ \textsc{Area Diagrams}\enspace
	Calculate the size ratio between the area of the patterned region, \,\tikz[scale=.25]{\draw[fill=black, fill opacity = 0.55, pattern=north west lines, pattern color=black] (0,0) rectangle (1,1)},\, and the area of the shaded region, \tikz[scale=.25]{\draw[fill=black, fill opacity = 0.15] (0,0) rectangle (1,1)}.%
    }%
    \hfill
        \parbox{0.2\textwidth}{%
       \scalebox{0.7}{
    		\begin{tikzpicture}[scale=0.9]
		\tikzstyle{every node}=[font=\scriptsize]
		\draw [fill=white] (0, 0) rectangle (2, 2);
		\draw [fill=white] (0, 0) rectangle (0.1, 2);
		\draw [fill=black, fill opacity = 0.15] (0, 0.1) rectangle (0.1, 2);
		\draw [pattern=north west lines, pattern color=black] (0, 0.1) rectangle (0.1, 2);
		\draw [fill=black, fill opacity = 0.15] (0.1, 1.8) rectangle (2, 2);
		%
		%		\node at (.05,-0.19) {$D$};
		%		\node at (1.1,-0.183) {$\overbar{D}$};
		%		\node at (-.2,1.1) {$T$};
		%		\node at (-.2,0.1) {$\overbar{T}$};
		%		\node at (2.2,1.9) {$T$};
		%		\node at (2.2,0.9) {$\overbar{T}$};
		%
		\draw [decorate,decoration={brace,amplitude=5pt,mirror},yshift=0pt]
		(2.05,0) -- (2.05,1.8) node [black,midway,xshift=0.35cm] {\scriptsize\rotatebox[]{270}{$0.9$}};
		\draw [decorate,decoration={brace,amplitude=5pt},yshift=0pt]
		(-0.05,0.1) -- (-0.05,2) node [black,midway,xshift=-0.35cm] {\scriptsize\rotatebox[]{90}{$0.95$}};
		\draw [decorate,decoration={brace,amplitude=0.8pt},yshift=0pt]
		(-0.01,2.05) -- (0.11,2.05) node[black,midway,yshift=0.2cm] {\scriptsize$0.04$};
		\draw [decorate,decoration={brace,amplitude=5pt,mirror},yshift=0pt]
		(-0.01,-0.05) -- (2.01,-0.05) node[black,midway,yshift=-0.35cm] {\scriptsize$1$};
		%		\node at (-.35,0.9) {$t \mid d$};
		%		\node at (-.35,1.9) {$\bar{t} \mid d$};
		%		\node at (2.35,0.1) {$t \mid \bar{d}$};
		%		\node at (2.35,1.1) {$\bar{t} \mid \bar{d}$};
		%
		\end{tikzpicture}
}
    }

\newcolumntype{P}[1]{>{\centering\arraybackslash}p{#1}}
\textsc{Contingency Tables}\label{medical-cont}\enspace Calculate the ratio of the value of cell $(T,D)$ against $total(T)$.
\begin{center}
\scalebox{0.85}{
		\begin{tabular}{|P{0.075\linewidth}|P{0.15\linewidth}|P{0.15\linewidth}|P{0.15\linewidth}|}
			\hline
			& \textbf{$D$} & \textbf{$\overline D$} & \textbf{total} \\
			\hline
			\textbf{$T$} & \cellcolor{gray!20}$0.95 \cdot 0.04$ &  & \cellcolor{gray!20} \\
			\hline
			\textbf{$\overline T$} &  & $0.9\cdot \text{total}(\overline D)$ &  \\
			\hline
			\textbf{total} & $0.04$ & & 1 \\
			\hline
		\end{tabular}
}
\end{center}

\vspace{5pt} It is self-evident that these four different representational systems have tokens with fundamentally different syntaxes and, by their very nature, afford different approaches to solving the medical problem. Representational Systems Theory provides a unified framework that allows each of these tokens to be encoding using a graph-theoretic approach via the existence of construction spaces. Given higher-level descriptions of the systems from which they are drawn, in the form of patterns, and suitable links between the pattens, we are able to transform between these tokens.

We previously published the medical problem representations in~\cite{raggi2020re}. In that prior work, we presented a much earlier version of the framework presented here. However, that prior work did not support the identification of new tokens in particular. Rather, it sought to ascertain \textit{whether} a token \textit{could be re-represented} in another representational system. That is, our prior work was geared towards saying whether \textit{an alternative representational system was likely to contain a token that re-represented the original representation}. Thus, the research in this paper represents a substantial advance over our previous contributions.
\end{paragraph} 

%\pagebreak

%
\section{Preliminaries}\label{sec:prelims}
%
%We give a brief overview of notations for sequences and graphs that we exploit in this paper.

We demonstrate the notational conventions adopted for a variety of (mostly) standard mathematical concepts. Appendices~\ref{sec:defnsAndNotation} and~\ref{sec:defnsTerms} provide summaries of the notation used for novel terminology introduced in the paper and, respectively the terminology itself.

\begin{paragraph}{Sequence notations}
A \textit{finite sequence}, $\genSeqn$, of \textit{length} $n$ over a set $A$ is a function,\linebreak $\genSeqn\colon \{1,\ldots, n\}\to A$, which we write, informally, as a list: $\genSeqn=[a_1,\ldots,a_n]$. The \textit{empty sequence}, $[]$, has length 0. An element, $a$, \textit{occurs} in $\genSeqn=[a_1,\ldots,a_n]$, denoted $a\in \genSeqn$,  provided $a=a_i$ for some $1\leq i \leq n$. The set of all sequences, of arbitrary lengths, over $A$ is denoted $\sequence(A)$. The \textit{concatenation} of sequences $\genSeqn_1=[a_1,\ldots,a_n]$ and $\genSeqn_2=[b_1,\ldots,b_m]$, denoted $\genSeqn_1\oplus \genSeqn_2$, is $[a_1,\ldots,a_n,b_1,\ldots,b_m]$. We write $\genSeqn_1\oplus \cdots\oplus \genSeqn_n$ to mean the concatenation of $n$ sequences; when $n=0$, $\genSeqn_1\oplus \cdots\oplus \genSeqn_n=[]$. Given a sequence of sequences, $[\genSeqn_1,\ldots,\genSeqn_n]$, and a sequence $\genSeqn$, the \textit{right product} of $[\genSeqn_1,\ldots,\genSeqn_n]$ with $\genSeqn$,  denoted $[\genSeqn_1,\ldots,\genSeqn_n]\triangleleft \genSeqn$, is defined to be $[\genSeqn_1\oplus \genSeqn,\ldots,\genSeqn_n\oplus \genSeqn]$.
\end{paragraph}

\begin{paragraph}{Graph notations}
A \textit{directed labelled bipartite graph}, which we will simply call a \textit{graph}, is a tuple, $\graphn=\graph$, where\footnote{The naming conventions $\pa$, $\pb$, $\arrowl$, $\tokenl$ and $\consl$ will become clear in Section~\ref{sec:RSs}.}: $\pa$ and $\pb$ are two disjoint sets of \textit{vertices}, $A$ is a set of \textit{arrows}, $\incVert\colon A \to (\pa\times \pb) \cup (\pb \times \pa)$ is a function that identifies a pair of \textit{incident vertices} for each arrow, and $\arrowl\colon \arrows \to \mathbb{N}$ is a function that assigns a label, called an \textit{index}, to each arrow. The functions $\tokenl$ and $\consl$ assign a label to each vertex in $\pa$ and, resp., $\pb$. Notably, graphs can have multiple edges (i.e. more than one edge between a pair of vertices) and, thus, need not be simple. Vertices in $\pa$ (resp. $\pb$) will typically be denoted by $t$, $t'$, $t_1$ and so forth (resp. $u$, $u'$, $u_1$). Vertices in either $\pa$ or $\pb$ will be denoted $v$, $v'$, $v_1$ and we set $V=\pa\cup \pb$.  Given any arrow, $a$, if $\incVert(a)=(v_1,v_2)$ then the \textit{source} (resp. \textit{target}) of $a$, denoted $\sor{a}$ (resp. $\tar{a}$), is $v_1$ (resp. $v_2$). Given any vertex, $v$: the set of \textit{incoming arrows} (resp. \textit{outgoing arrows}), denoted $\inA{v}$ (resp. $\outA{v}$), is $\{a\in A \colon  \tar{a} =v\}$ (resp. $\{a\in A \colon \sor{a} =v\}$), and the set of \textit{input vertices} (resp. \textit{ouput vertices}), denoted $\inV{v}$ (resp. $\outV{v}$),  is $\{\sor{a}\colon a\in \inA{v}\}$ (resp. $\{\tar{a}\colon a\in \outA{v}\}$). Given a graph, $\graphn$, the \textit{neighbourhood} of vertex $v$, denoted  $\neigh{v}$, is the largest subgraph of $\graphn$ whose vertex set is $\inV{v}\cup \outV{v}\cup \{v\}$.
\end{paragraph}

\begin{paragraph}{Isomorphisms}
If graphs $\graphn$ and $\graphn'$ are isomorphic and the corresponding bijections from the vertices and arrows in $\graphn$ to those in $\graphn'$ are such that the vertex and arrow labels are preserved then the isomorphism is \textit{label-preserving} and $\graphn$ and $\graphn'$ are \textit{label-isomorphic}. Sometimes we want an isomorphism to preserve some particular labels but not others. We will specify which elements do not necessarily have matching labels by saying that $\graphn=\graph$ and $\graphn'$, are \textit{label-isomorphic up to $X$}, where $X\subseteq \pa\cup \pb \cup \arrows$. That is, the labels of elements in $X$ do not necessarily match under the isomorphism, whereas the labels of elements in $(\pa\cup \pb \cup \arrows)\backslash X$ are preserved.
\end{paragraph}

\begin{paragraph}{Walks, trails and paths}
 A \textit{directed walk}, $w$, in $\graphn$ is a possibly empty sequence of arrows, $[a_1,\ldots,a_{n-1}]$, for which there is an associated sequence of vertices, $[v_1,\ldots,v_n]$, where, for each $a_i$ in $w$, $\incVert(a_i)=(v_i,v_{i+1})$. The \textit{length} of $w$ is $n-1$. A \textit{directed trail} is a directed walk, $[a_1,\ldots,a_{n-1}]$, where for all $a_i$ and $a_j$, if $a_i=a_j$ then $i=j$. A \textit{directed path} is a directed trail, $[a_1,\ldots,a_{n-1}]$, where for all $v_i$ and $v_j$ in the associated vertex sequence, $[v_1,\ldots,v_n]$, if  $v_i=v_j$ then $i=j$. As all our graphs are directed, we omit the prefix `directed' and simply say walk, trail and path. We often wish to talk about the \textit{source} or \textit{target} vertex ($v_1$ and $v_n$ in an associated vertex sequence) of a walk, $w$, which we denote by $\source{w}$ and $\target{w}$. However, the source and target are only directly derivable from $w=[a_1,...,a_n]$ when $n\geq 1$. This means that when $w=[]$, we \textit{must explicitly specify the associated single-vertex sequence}, $[v]$ say.  We assume that such a vertex sequence for $w=[]$ has been specified in any context where it is necessary, writing $[]_v$ to denote that the empty walk $w=[]$ is associated with vertex sequence $[v]$ and, therefore, we have $\source{[]_v}=\target{[]_v}=v$. A trail, $\trail$, in $\graphn$ is \textit{source-extendable} provided there is an arrow, $a$, in $\graphn$ such that either $\trail\neq []$ and $[a]\oplus \trail$ is a trail or $\trail=[]_v$ and $\tar{a}=v$. Taking $[]_v$ and $[]_{v'}$, where $v\neq v'$, together with an arrow, $a$, such that $\tar{a}=v$, it is the case that $[]_v$ is source-extendable by $a$ whereas $[]_{v'}$ is not. As we do not need the corresponding notion of target-extendable, we simply say \textit{extendable} to mean source-extendable.
\end{paragraph}

\begin{paragraph}{Visual notation for walks}
It can be cumbersome to express a walk, $w$, as a sequence of arrows, so we will often abuse notation. For example, given $w=[a_1,a_2]$, with associated vertex sequence $[v_1,v_2,v_3]$, we may visually express $w$ as $v_1\arrow v_2\arrow v_3$. However, for non-simple graphs this visual notation is not necessarily sufficient to identify the intended walk. In our example, this can occur when, say, $v_1$ is the source of another arrow, $a_1'$,  which also targets $v_2$. When $a_1$ and $a_1'$ have different indices, $\arrowl(a_1)=i$ and $\arrowl(a_1')=j$, in $\graphn$ we will visualize $w$ as $v_1\arrow[i] v_2\arrow v_3$. For our purposes, the addition of arrow indices will always be sufficient to identify a unique walk. When a walk, $[]_v$, has no arrows we will avoid using the walk-visualisation $v$ which could be confused with a single vertex.
\end{paragraph}

\begin{paragraph}{Sets, functions and disambiguation} The power set of a set, $X$, will be denoted by $\powerset(X)$. Given ordered tuples of sets, $A=(A_1,\ldots,A_n)$ and $B=(B_1,\ldots,B_n)$, we adopt the convention of writing $A\cup B$ to mean $(A_1\cup B_1,\ldots,A_n\cup B_n)$. This convention obviously extends to other set-theoretic operations.  We will exploit the formal definition of a function, $f\colon A\to B$, as a subset of $A\times B$ such that for each $a\in A$ there exists a unique $b\in B$ where $(a,b)\in f$ (i.e. $f(a)=b$). Thus, given functions $f$ and $f'$, we can form $f\cup f'$ and $f\cup \{(a,b)\}$, for example. Functions are often defined within some context, such as $\neigh{v}$ in the context of a graph $\graphn$. If $v$ is in graphs $\graphn$ and $\graphn'$ then we write $\neigh{v,\graphn}$ to mean the neighbourhood of $v$ in the graph $\graphn$. Thus, sometimes an additional parameter, $p$, is needed to disambiguate the application of a function, $f$: if $f(x_1,\ldots,x_n)$ is defined in the context of $p$ then we will write $f(x_1,\ldots,x_n,p)$ to mean the application of $f$ to $(x_1,\ldots,x_n)$ in the context of $p$; in our example, $f=\mathit{Nh}$ and $p=\graphn$.
\end{paragraph}

\section{Representational Systems}\label{sec:RSs}

Intuitively, a \textit{representational system} comprises syntactic entities, called \textit{tokens}, that have the capacity to represent objects when assigned semantics~\cite{peirce:cp,Shimojima1996-SHIOTE-2,wetzel:tat}. The notion of a token is closely related to that of a \textit{term}, often seen in type theory~\cite{sep-principia-mathematica}: the different \textit{instances} of a term are precisely its tokens\footnote{This distinction is embodied by the type-token dichotomy which arises in the philosophical analysis of representations~\cite{peirce:cp,wetzel:tat}; see~\cite{bellucci:aaoeg} for further discussion.}. The tokens must adhere to the \textit{grammatical rules governing the syntax} of the system and may be subject to \textit{entailment relations}. Tokens, or collections of them, exhibit \textit{properties} that it may be desirable to \textit{identify} within the system. \textit{Representational Systems Theory} is designed to permit a wide variety of representational systems to be abstractly encoded, reflecting the key insights just given. The major contribution of this section is a formal definition of a representational system which comprises three \textit{spaces}: \textit{grammatical}, \textit{entailment} and \textit{identification}. Whilst each of these spaces serves a different purpose, the formal theory is unified in its approach to defining them: each space is a type of \textit{construction space} (see definition~\ref{defn:constructionSpace}) that encodes the respective aspects of representations, in the context of a \textit{type system}, using a directed labelled bipartite graph, called a \textit{structure graph}. Tokens are encoded as vertices in the structure graph and are assigned types by a vertex labelling function. This approach has similarities to research by Shimojima and Barker-Plummer, who view representations as tokens that are assigned types~\cite{10.1007/978-3-662-44043-8_25}.

The use of a single abstraction for encoding both the grammatical and entailment aspects of representational systems is related to the Curry-Howard correspondence, which associates types to formulae. More importantly, Curry-Howard associates the type-structure of any term to a proof~\cite{howard1980formulae}. Behind these abstractions that unify syntax and inference there is perhaps a deeper truth about how human constructs are structured. {\pagebreak}For instance, consider the similarity in the \begin{samepage}following:
	\begin{itemize}
		\item how a sentence relates to its constituent phrases,
		\item how a geometric figure relates to the shapes that form it,
		\item how the validity of a statement relates to the validity of the statements used to justify it, and
		\item how the output computed by a program relates to the outputs of its subroutines.
	\end{itemize}
\end{samepage}
Now, whenever we have an analogy between a number of structures we can try to find their shared abstraction. We contend that the notion of construction space formalises the abstraction shared by many structures, and that makes it ideal for capturing the different aspects of a single representational system, as well as a variety of representational systems.

\subsection{Motivating Examples}\label{sec:representationalSystems:ME}

The major contributions of Section~\ref{sec:RSs} will be contextualised by exemplifying \textit{one way} that Representational Systems Theory can encode a \textit{fragment}, $\rsystemn_{\FOA}$, of \textsc{First-Order Arithmetic} (FOA). In FOA, tokens include syntactic entities, such as numerals, operators, variables, and quantifiers, to name a few, together with expressions built using them. The following are all distinct tokens in FOA:
%\begin{multicols}{4}
\begin{itemize}
	\begin{minipage}[t][60pt]{0.09\linewidth}
		\item[] $x$
		\item[] $2$
		\item[] $91$
		\item[] $+$
	\end{minipage}
	\begin{minipage}[t][60pt]{0.16\linewidth}
		\item[] $\leq$
		\item[] $2/(x+2)$
		\item[] $1 \leq 4$
		\item[] $x = 4$
	\end{minipage}
	\begin{minipage}[t][60pt]{0.19\linewidth}
		\item[] $\sum_{i=0}^{n} 2^i$
		\item[] $2-3\cdot 4 > 1$
		\item[] $\frac{\sum_{i=0}^{n-1} x^i}{x^n-1}$
		\item[] $\frac{\phantom{xxxxxx}}{\phantom{xxxxxx}}$
	\end{minipage}
	\begin{minipage}[t][60pt]{0.21\linewidth}
		\item[] $\forall$
		\item[] $\forall x\; \forall y\;\; x \leq y$
		\item[] $x^2 + 2xy + y^2$
		\item[] $1+2+3=42$
	\end{minipage}
	\begin{minipage}[t][60pt]{0.3\linewidth}
		\item[] $\Rightarrow$
		\item[] $(x+y)^2 = x^2 + 2xy + y^2$
		\item[] $\neg(x=1 \wedge x=2 ) \vee 1=2$
		\item[] $\forall x\; x > x \Rightarrow \exists y\; y=x$.
	\end{minipage}
\end{itemize}
%\end{multicols}

The fragment, $\rsystemn_{\FOA}$, that we will encode includes, amongst its tokens: numerals, the two variables $x$ and $y$, parentheses $($ and $)$, and the quantifier $\forall$, along with the binary operators and relations, $+$, $-$, $=$, and $>$ but \textit{does not include} other tokens such as logical connectives like $\Rightarrow$; example~\ref{ex:tokensInFOA} precisely describes the tokens in $\rsystemn_{\FOA}$. This fragment is sufficiently rich to expose most of the core theoretical concepts that are necessary for defining representational systems in general.

\begin{example}[Tokens in $\rsystemn_{\FOA}$]\label{ex:tokensInFOA}
\textsc{First-Order Arithmetic} fragment of interest includes, amongst its tokens, representations formed of the following symbols:\vskip8pt
\begin{multicols}{2}
\begin{itemize}
	%\begin{minipage}[t]{0.48\linewidth}
	\item[-] all base 10 numerals, such as $0, 1, 2, 3, \ldots, 10$, and so forth
    \item[-] variables $x$ and $y$,
    \item[-] operator symbols $+$ and $-$,
	\item[-] relation symbols $=$ and $>$,
    \item[-] brackets, $($ and $)$, and
    \item[-] the quantifier symbol $\forall$.
\end{itemize}
\end{multicols}\vskip8pt
Tokens that are numerals or variables are \textit{numerical expressions}. Any representation in $\rsystemn_{\FOA}$ formed of a single symbol is called a \textit{primitive} token. For instance, $1$ is primitive but $1=2$ is not, since it is formed from $1$, $=$ and $2$. \textit{Composite} tokens are constructed using an inductive definition:
	\begin{itemize}
    \item[-] For all tokens, $t_1$ and $t_2$, that are numerical expressions it is the case that $t_1+t_2$, $t_1-t_2$, and $(t_1)$ are tokens\footnote{Note that $t_1+t_2$ and $t_1-t_2$ do not impose the presence of parentheses, reflecting the typical informal use of such expressions in practice. Hence, brackets can be placed around any numerical expression, $t_1$, where needed, giving $(t_1)$.} that are also numerical expressions.
    \item[-] For all tokens, $t_1$ and $t_2$, that are numerical expressions it is the case that $t_1=t_2$ and $t_1> t_2$ are tokens that are \textit{formulae}.
    \item[-] For each of the two variables, $x$ and $y$, and for all tokens, $t$, that are formulae it is the case that $\forall x \thinspace t$ and $\forall y \thinspace t$ are tokens that are also formulae.
    \end{itemize}
What we have just witnessed is the \textit{construction} of tokens following the grammatical rules of first-order arithmetic.  These tokens can naturally be classified using types such as \FOAnum, \FOAvar, \FOAnumexp, \FOAbop\ (binary operator), \FOAbrel\ (binary relation), and \FOAform.
\end{example}

Example~\ref{ex:tokensInFOA} concluded by giving some of the types that will be used in $\rsystemn_{\FOA}$, written in \texttt{typewriter} font, a convention we adopt throughout.  Example~\ref{ex:distinctTokensSameSymbols} will introduce more types, related to distinct tokens that are (a) built from the same symbols, and (b) follow equivalent inductive constructions. By design, Representational Systems Theory permits each pair of tokens that satisfy (a) and (b) to be distinguished, encoding them as distinct entities. Subsequently, a particular type can be assigned to all tokens that satisfy (a) and (b), illustrated in example~\ref{ex:distinctTokensSameSymbols}.

\begin{example}[Distinct Tokens Built from the Same Symbols]\label{ex:distinctTokensSameSymbols}
Consider the token $x=x$. This is built from two \textit{distinct} $x$  tokens and one $=$ token. Likewise, the composite tokens $x=2$ and $x=2$ are distinct. At one level, they appear to be the same (strictly, they are \textit{isomorphic}) since they both use the symbols $x$, $=$ and $2$ to form a representation where $x$ is to the left of $=$ and $2$ is to the right. However, they occupy different positions on the page and are, therefore, distinct tokens\footnote{Isomorphic tokens, i.e. those built from the same symbols following essentially the same inductive construction, can be considered indistinguishable. In such cases where, say, $x=2$ and $x=2$ are considered to be indistinguishable, they would sometimes be called \textit{the term} $x=2$. Distinguishing tokens from terms is akin, in the study of natural language, to differentiating the word \textit{the} from the occurrences of the word \textit{the} in the sentence `the cat sat on the mat': there are two \textit{the} tokens in the sentence, both of type \texttt{the}. See~\cite{10.1007/978-3-662-44043-8_25} for a similar discussion.}. We can group together isomorphic tokens using types: $x=2$ and $x=2$ both have the same type, which we call $\texttt{x}\_\FOAeq\_\texttt{2}$; essentially, this naming convention forms a type by concatenating the types assigned to the constituent parts. Similarly, $x$ and $x$ both have type $\texttt{x}$, and so forth. In essence, assigning isomorphic tokens the same type captures the fact that they are different tokens built using the same symbols, following equivalent inductive constructions. Lastly, the types $\texttt{x}\_\FOAeq\_\texttt{2}$ and $\texttt{x}$ are, respectively, subtypes of $\FOAform$ and $\FOAvar$. Further types for $\rsystemn_{\FOA}$ are given in example~\ref{ex:FOATypeSystem}.
\end{example}

In formal representational systems, such as \textsc{First-Order Arithmetic}, identifying the tokens tends to be a fairly straightforward process. Example~\ref{ex:termInstances} illustrates how we might identify tokens in a less formal system.

\begin{example}[Identifying Tokens]\label{ex:termInstances}
Consider a representational system comprising a collection of crossword puzzles, where each letter of the alphabet inscribed on some medium is a primitive token. Such a puzzle might contain the words `hello' and `world', sharing the letter `o':
\begin{center}
\begin{tikzpicture}[x=0.6cm, y = 0.6cm]
\fill[gray] (0,1) rectangle (4,4);
\fill[gray] (0,3) rectangle (4,4);
\fill[gray] (0,5) rectangle (4,6);
\fill[gray] (5,1) rectangle (6,5);
\draw[step=0.6cm,black,very thin] (0,1) grid (6,6);
\node[] () at (0.12,4.83) {\scriptsize 2};
\node[] () at (4.12,5.83) {\scriptsize 1};
\node[] () at (0.5,4.5) {h};
\node[] () at (1.5,4.5) {\vphantom{l}e};
\node[] () at (2.5,4.5) {l};
\node[] () at (3.5,4.5) {l};
\node[] () at (4.5,4.5) {\vphantom{l}o};
\node[] () at (4.5,1.5) {d};
\node[] () at (4.5,2.5) {l};
\node[] () at (4.5,3.5) {\vphantom{l}r};
\node[] () at (4.5,5.5) {\vphantom{l}w};
\end{tikzpicture}
\end{center}
In this representation, there are seven different letters: h, e, l, o, w, r, d. There are three \textit{instances} of the letter `l' (two in `hello' and one in `world'), one \textit{instance} of `o' (since the words share this letter in the representation), and one \textit{instance} of each of the remaining letters. An encoding of this representation will exploit \textit{one token for each letter instance}: h,e,l,l,o,w,r,l,d. Lastly, we note that this crossword puzzle representation also contains other tokens, such as the numbers 1 and 2, and the lines used to form the grid, amongst others.
\end{example}

In Representational Systems Theory, isomorphic tokens can be assigned the same type, as seen in example~\ref{ex:distinctTokensSameSymbols}. The assignment of types to tokens, in this way, is a direct embodiment of Peirce's theory of types and tokens~\cite{peirce:cp}. Now, we saw in example~\ref{ex:tokensInFOA} that tokens were built following an inductive \textit{construction}. Representational Systems Theory includes \textit{constructors} that embody the rules to which representations must conform. In essence, the constructors in a \textit{grammatical space} are used to encode grammatical rules\footnote{Goldin's conceptualisation of \textit{ambiguous rules}~\cite{GOLDIN1998137} is akin, in our formalisation, to  \textit{constructors} not being \textit{total} or \textit{functional}; these three concepts are developed in Section~\ref{sec:propertiesOfConstructionSpaces}, but for now it suffices to say that constructors behave as black boxes that are provided with `input representations' and `produce output representations' (i.e. from a \textit{grammatical} perspective, a new permitted configuration aka representation; from an \textit{entailment} perspective, an allowable entailed representation.)} using types and, subsequently, how tokens are built from other tokens. Example~\ref{ex:grammaticalConstructors}  will define the constructors for $\rsystemn_{\FOA}$'s grammatical space, $\gspacen_{\FOA}$. %The grammatical space, $\gspacen$, of $\rsystemn_{\FOA}$ assigns types to tokens and encodes the construction of tokens using its constructors.

Entailment and identification spaces also exploit constructors that encode further \textit{construction rules}. $\rsystemn_{\FOA}$'s entailment space, $\espacen_{\FOA}$, is assigned constructors that encode entailment relations, such as $x = 4$ entails $x > 1$. Finally, $\rsystemn_{\FOA}$'s identification space, $\ispacen_{\FOA}$, encodes properties of tokens that are not directly accessible from $\gspacen_{\FOA}$ or $\espacen_{\FOA}$. This might include properties such as `$3$ is a primitive token', `in $2>1$, $2$ is to the left of $>$' or even `the cognitive cost of understanding $3+1 >3 $ is less than that of understanding $\forall n\; n+1 >n$'. As such, $\ispacen_{\FOA}$ uses constructors that encode properties of tokens.

Whilst many examples in Section~\ref{sec:RSs} will be based on $\rsystemn_{\FOA}$, we also draw on other representational systems. Examples will be varied, including \textsc{Vector Spaces}, the \textsc{Graphical Representation of Vectors}, \textsc{Set Theory}, and \textsc{Euler Diagrams}; we have already seen an example using \textsc{Crossword Puzzles}. This rich source of examples conveys the wide-ranging applications of Representational Systems Theory for the general analysis of representations.

\subsection{Construction Spaces}\label{sec:constructionSpaces}

As already noted, Representational Systems Theory is  unified in its approach to defining the three spaces of which a representational system comprises: each of the three spaces is a \textit{construction space}\footnote{A construction space is a typical mathematical space: it comprises a set of mathematical objects that are associated with a structure (e.g. see~\cite{DALARSSON20151}) which, in our work, is embodied by a structure graph.} that includes a \textit{structure graph} associated with a \textit{type system}. These concepts are presented alongside that of a \textit{constructor}, which is a key part of our formal framework: at an intuitive level, each constructor takes an \textit{ordered} collection (sequence) of tokens, $[t_1,\ldots,t_n]$, as its input and produces a new token, $t$. 

The particular roles of constructors vary across the spaces. In the grammatical space, constructors reflect how $t$ is \textit{built} from $[t_1,\ldots,t_n]$, by encapsulating the \textit{grammatical rules} to which syntactic entities in the representational system must conform. As one might expect, these constructors are required to be \textit{functional}: every encoded grammatical rule ensures that for any input sequence, $[t_1,\ldots,t_n]$, a unique token, $t$, is built. In the entailment space, each constructor encodes \textit{deductions} or \textit{inferences} that can be made from $[t_1,\ldots,t_n]$, resulting in another token\footnote{There is no requirement for these deductions or inferences to be \textit{sound} in the formal sense. The framework is designed to allow any kind of reasoning to be encoded in the entailment space, provided the reasoning identifies tokens that \textit{follow} from other tokens, aligning with Goldin's view of inference~\cite{GOLDIN1998137}. For instance, the entailment space could encode a system in which the object of analysis is defeasible reasoning~\cite{koons:dr}, which models rationally-compelling human-like arguments that are not necessarily deductively valid.}. Constructors in this space need not be functional: every encoded entailment relation allows potentially many tokens to be inferred from $[t_1,\ldots,t_n]$. Lastly, in the identification space, each constructor allows properties of tokens to be encoded. Given input $[t_1,\ldots,t_n]$, the output of such a constructor is not a token that is built in the grammatical space but, rather, a meta-level token, $t$, with a specified meta-level type, $\omega$. Intuitively, $t$ could be of type $\FOAboolean$ (the property holds or it does not) or of type \texttt{real number} in $[0,1]$ (e.g. the probability that this property holds is $p$). Our theory allows meta-level types to be anything that is appropriate for the representational system. So, let us now define a \textit{type system} and \textit{constructors} in the context of \textit{constructor specifications}.

\begin{definition}\label{defn:typeSystem}
A \textit{type system}, $\tsystemn$,  is a pair, $\tsystemn=\tsystem$,
where
\begin{enumerate}
\item $\types$ is a set whose elements are called \textit{types}, and
\item $\leq$ is a partial order over $\types$.
\end{enumerate}
If $\tau_1\leq \tau_2$ then $\tau_1$ is a \textit{subtype} of $\tau_2$ and, respectively, $\tau_2$ is a \textit{supertype} of $\tau_1$.
\end{definition}

\begin{example}[A Type System]\label{ex:FOATypeSystem}
The fragment, $\rsystemn_{\FOA}$, of \textsc{First-Order Arithmetic} includes (a) one type for each symbol from which primitive tokens are formed, (b) types for composite tokens, and (c) types that enable a compact definition of the grammatical rules. The types for primitive tokens are:
%\begin{multicols}{2}
\begin{itemize}
	%\begin{minipage}[t]{0.48\linewidth}
	\item[-] for each base 10 numeral, $n$ (e.g. $21$), $\texttt{n}$ (e.g. \texttt{21}) is a type\footnote{Of course, this implies that there are infinitely many types because there are infinitely many numerals. In addition, \texttt{n} is not itself a type, but a meta-level schema for defining types. We use this meta-level schema elsewhere too.},
    \item[-] variables $x$ and $y$ are of types $\texttt{x}$ and $\texttt{y}$,
    \item[-] binary operators $+$ and $-$ are of types $\FOAplus$ and $\FOAminus$\footnote{Note the different naming convention used here is merely because $+$ and $-$ are almost indistinguishable in \texttt{typewriter} font from their appearance in the standard mathematics environment. The same holds for the remainder of the types assigned to primitive tokens.},
	\item[-] binary relations $=$ and $>$ are of types $\FOAeq$ and $\FOAgr$,
    \item[-] brackets $($ and $)$ are of types $\FOAopenb$ and $\FOAcloseb$, and
    \item[-] the  quantifier $\forall$ is of type $\FOAquant$.
\end{itemize}
%\end{multicols}
The types assigned to the primitive tokens will be augmented with further types to yield the type system over which $\rsystemn_{\FOA}$ is defined. The diagram below shows two representations, $3+x>y$ and $21-(x+2)=21$, and identifies the types of their primitive tokens:
\begin{center}
	\begin{tikzpicture}[node distance = 0.5cm,outer sep = -0.025cm,inner sep = 0.09cm]
	\node[] (n1) at (-0.7,-0.05) {\texttt{1}};
	\node[right = of n1,xshift=-0.2cm] (n2) {\texttt{2}};
	\node[right = of n2,xshift=-0.2cm] (n3) {\texttt{3}};
	\node[right = of n3,xshift=-0.3cm,yshift = -0.03cm] (ndots) {$\cdots$};
	\node[right = of ndots,xshift=-0.3cm,yshift = 0.03cm] (n21) {\texttt{21}};
	\node[right = of n21,xshift=-0.3cm,yshift = -0.03cm] (ndots') {$\cdots$};
	\node[right = of ndots',yshift = 0.03cm] (x) {\texttt{x}};
	\node[right = of x,yshift = -0.03cm] (y) {\texttt{y}};
	\node[right = of y] (+) {\texttt{pl}};
	\node[right = of +] (-) {\texttt{mi}};
	\node[right = of -,yshift = -0.03cm] (gr) {\texttt{gr}};
	\node[right = of gr] (eq) {\texttt{eq}};
	\node[right = of eq,yshift = 0.03cm] (openB) {\FOAopenb};
	\node[right = of openB,yshift = 0.03cm] (closeB) {\FOAcloseb};
	\draw[rounded corners, black!30, thick] (-1,-0.4) rectangle node[xshift = 4.4cm,yshift = 0.25cm,black!50] {some types for primitives} (12.3,0.6);
	\draw[rounded corners, black!30, thick] (-1,-1.75) rectangle node[xshift = 6cm,yshift = 0.2cm,black!50] {tokens} (12.3,-0.95);
	\coordinate (formula1) at (2.5,-1.45) ;
	\coordinate (formula2) at (7.5,-1.45) ;
	\node[at = (formula1),fill=darkgold, text=black, fill opacity = 0.25, text opacity=1, rectangle, rounded corners,inner sep =0.069cm,xshift = -0.63cm,yshift = 0.04cm] (n3t) {\large$3$};
	\node[at = (formula1),fill=darkgreen,text=black, fill opacity = 0.25, text opacity = 1, rectangle, rounded corners,inner sep =0.069cm, xshift = -0.34cm,yshift = 0.025cm] (+t) {\large$+$};
	\node[at = (formula1),fill=cyan!80,text=black, fill opacity = 0.25, text opacity = 1, rectangle, rounded corners,inner sep =0.089cm, xshift = -0.05cm,yshift = 0.01cm] (xt) {\large$x$};
	\node[at = (formula1),fill=darkblue,text=black, fill opacity = 0.25, text opacity = 1, rectangle, rounded corners,inner sep =0.079cm,xshift = 0.28cm,yshift = 0.03cm] (grt) {\large$>$};
	\node[at = (formula1),fill=darkpurple,text=black, fill opacity = 0.25, text opacity = 1, rectangle, rounded corners,inner sep =0.079cm,xshift = 0.625cm,yshift = -0.03cm] (yt) {\large$y$};
	\node[at = (formula2),fill=darkred,text=black, fill opacity = 0.25, text opacity = 1, rectangle, rounded corners,inner sep =0.079cm,xshift = -1.15cm,yshift = 0.02cm] (n21t) {\large$21$};
	\node[at = (formula2),fill=orange,text=black, fill opacity = 0.25, text opacity = 1, rectangle, rounded corners,inner sep =0.069cm,xshift = -0.755cm,yshift = 0.02cm] (-t) {\large\vphantom{\%}$-$};
	\node[at = (formula2),fill=darkgray!60,text=black, fill opacity = 0.25, text opacity = 1, rectangle, rounded corners,inner sep =0.069cm,xshift = -0.475cm,yshift = 0.0cm] (openBt) {\large$($};
	\node[at = (formula2),fill=cyan!80,text=black, fill opacity = 0.25, text opacity = 1, rectangle, rounded corners,inner sep =0.079cm,xshift = -0.29cm,yshift = -0.02cm] (xt') {\large$x$};
	\node[at = (formula2),fill=darkgreen,text=black, fill opacity = 0.25, text opacity = 1, rectangle, rounded corners,inner sep =0.069cm,xshift = 0.05=2cm,yshift = -0.0cm] (+t') {\large$+$};
	\node[at = (formula2),fill=darkblue,text=black, fill opacity = 0.25, text opacity = 1, rectangle, rounded corners,inner sep =0.079cm,xshift = 0.29cm,yshift = 0.02cm] (n2t) {\large$2$};
	\node[at = (formula2),fill=darkgray,text=black, fill opacity = 0.25, text opacity = 1, rectangle, rounded corners,inner sep =0.069cm,xshift = 0.46cm,yshift = 0.00cm] (closeBt) {\large$)$};
	\node[at = (formula2),fill=darkpurple,text=black, fill opacity = 0.25, text opacity = 1, rectangle, rounded corners,inner sep =0.069cm,xshift = 0.75cm,yshift = 0.02cm] (eqt) {\large\vphantom{l\%,}$=$};
	\node[at = (formula2),fill=darkred,text=black, fill opacity = 0.25, text opacity = 1, rectangle, rounded corners,inner sep =0.069cm,xshift = 1.149cm,yshift = 0.02cm] (n21t') {\large$21$};
	\draw[-,darkgold,in=-45,out=120] (n3t) edge (n3);
	\draw[-,darkgreen,in=-140,out=70] (+t) edge (+);
	\draw[-,cyan!80,in=-130,out=60] (xt) edge (x);
	\draw[-,darkblue,in=-130,out=50] (grt) edge (gr);
	\draw[-,darkpurple,in=-120,out=40] (yt) edge (y);
	\draw[-,darkred,in=-60,out=160] (n21t) edge (n21);
	\draw[-,orange,in=-80,out=100] (-t) edge (-);
	\draw[-,darkgray!60,in=-160,out=75] (openBt) edge (openB);
	\draw[-,cyan!80,in=-45,out=99] (xt') edge (x);
	\draw[-,darkgreen,in=-60,out=120] (+t') edge (+);
	\draw[-,darkblue,in=-30,out=135] (n2t) edge (n2);
	\draw[-,darkgray,in=-160,out=60] (closeBt) edge (closeB);
	\draw[-,darkpurple,in=-85,out=95] (eqt) edge (eq);
	\draw[-,darkred,in=-45,out=125] (n21t') edge (n21);
	\end{tikzpicture}
\end{center}
From the diagram, we can see that many tokens can have the same type, such as the two instances of type $\texttt{x}$. We also need types for composite tokens. In $\rsystemn_{\FOA}$, all composite tokens are \textit{directly built} from three other tokens. For example, $\forall x \thinspace 3=x$ is directly built from the three tokens $\forall$, $x$ and $3=x$, with $x=3$, in turn, also being directly built from three tokens. The name of the type we assign to $\forall x \thinspace 3=x$ is obtained by concatenating of the names of the types assigned to $\forall$, $x$ and $3=x$, namely $\FOAquant\_\texttt{x}\_\texttt{3}\_\FOAeq\_\texttt{x}$. In general, given a composite token, $t_1t_2t_3$, in $\rsystemn_{\FOA}$ its type is defined to be $\mathit{type}(t_1)\_\mathit{type}(t_2)\_\type(t_3)$. The diagram below shows how all \textit{composite} parts of $3+x>y$ and $21-(x+2)=21$ are assigned types:
	
\begin{center}
	\begin{tikzpicture}[node distance = 0.2cm and 0cm,outer sep = -0.01cm,inner sep = 0.09cm]
	\node[] (3+x) at (-0.8,1) {\texttt{3\_pl\_x}};
	\node[below right = of 3+x,xshift=-0.3cm] (3+x>y) {\texttt{3\_pl\_x\_gr\_y}};
	\node[above right = of 3+x>y,xshift=-0.cm] (x+2) {\texttt{x\_pl\_2}};
	\node[below right = of x+2,xshift=-0.cm] (x+2B) {\texttt{oB\_x\_pl\_2\_cB}};
	\node[above right = of x+2B,xshift=-0.3cm] (21-x+2) {\texttt{21\_mi\_oB\_x\_pl\_2\_cB}};
	\node[below right = of 21-x+2,xshift=-0.3cm] (21-x+2=21) {\texttt{21\_mi\_oB\_x\_pl\_2\_cB\_eq\_21}};
	\draw[rounded corners, black!30, thick] (-1.5,0) rectangle node[xshift = 5.1cm,yshift = 0.45cm,black!50] {some types for composites} (13.5,1.4);
	\draw[rounded corners, black!30, thick] (-1.5,-1.55) rectangle node[xshift = 6.7cm,yshift = 0.3cm,black!50] {tokens} (13.5,-0.45);
	\coordinate (formula1) at (1.5,-1) ;
	\coordinate (formula2) at (6,-1) ;
	\node[at = (formula1), text=black, fill opacity = 0.25, text opacity=1, rectangle, rounded corners,inner sep =0.069cm,xshift = -0.63cm,yshift = 0.04cm] (n3t) {\large$3$};
	\node[at = (formula1), text=black, draw=orange, fill opacity = 0.25, text opacity = 1, rectangle, rounded corners,inner sep =0.069cm, xshift = -0.34cm,yshift = 0.025cm] (3+xt) {$\phantom{3}+\phantom{x}$};
	\node[at = (formula1), text=black, fill opacity = 0.25, text opacity = 1, rectangle, rounded corners,inner sep =0.089cm, xshift = -0.05cm,yshift = 0.01cm] (xt) {\large$x$};
	\node[at = (formula1), text=black, draw=darkpurple, fill opacity = 0.25, text opacity = 1, rectangle, rounded corners,inner sep =0.12cm, xshift = -0.0cm,yshift = 0.02cm] (3+x>yt) {\phantom{$3+x>y$}};
	\node[at = (formula1), text=black, fill opacity = 0.25, text opacity = 1, rectangle, rounded corners,inner sep =0.079cm,xshift = 0.28cm,yshift = 0.03cm] (grt) {\large$>$};
	\node[at = (formula1), text=black, fill opacity = 0.25, text opacity = 1, rectangle, rounded corners,inner sep =0.079cm,xshift = 0.625cm,yshift = -0.03cm] (yt) {\large$y$};
	\node[at = (formula2), text=black, fill opacity = 0.25, text opacity = 1, rectangle, rounded corners,inner sep =0.079cm,xshift = -1.15cm,yshift = 0.02cm] (n21t) {\large$21$};
	\node[at = (formula2), fill opacity = 0.25, text opacity = 1, rectangle, rounded corners,inner sep =0.069cm,xshift = -0.755cm,yshift = 0.02cm] (-t) {\large\vphantom{\%}$-$};
	\node[at = (formula2), fill opacity = 0.25, text opacity = 1, rectangle, rounded corners,inner sep =0.069cm,xshift = -0.475cm,yshift = 0.0cm] (openBt) {\large$($};
	\node[at = (formula2), fill opacity = 0.25, text opacity = 1, rectangle, rounded corners,inner sep =0.079cm,xshift = -0.29cm,yshift = -0.02cm] (xt') {\large$x$};
	\node[at = (formula2), fill opacity = 0.25, draw=darkblue, text opacity = 1, rectangle, rounded corners,inner sep =0.16cm,xshift = -0.395cm,yshift = -0.0cm] (21-x+2t) {$\phantom{21-(x+2)\!\!\!}$};
	\node[at = (formula2), text=black, fill opacity = 0.25, text opacity = 1, rectangle, rounded corners,inner sep =0.069cm,xshift = 0.0cm,yshift = -0.0cm] (+t') {\large$+$};
	\node[at = (formula2), text=black,draw=darkgold, fill opacity = 0.25, text opacity = 1, rectangle, rounded corners,inner sep =0.06cm,xshift = -0.01cm,yshift = -0.0cm] (x+2t) {$\phantom{x+2\!}$};
	\node[at = (formula2), text=black,draw=darkgreen, fill opacity = 0.25, text opacity = 1, rectangle, rounded corners,inner sep =0.08cm,xshift = -0.01cm,yshift = -0.0cm] (x+2Bt) {$\phantom{(x+2)\!\!}$};
	\node[at = (formula2), text=black, fill opacity = 0.25, text opacity = 1, rectangle, rounded corners,inner sep =0.079cm,xshift = 0.29cm,yshift = 0.02cm] (n2t) {\large$2$};
	\node[at = (formula2), text=black,draw=darkred, fill opacity = 0.25, text opacity = 1, rectangle, rounded corners,inner sep =0.24cm,xshift = 0.0cm,yshift = 0.0cm] (21-x+2=21t) {$\phantom{21-(x+2)=21\!\!\!}$};
	\node[at = (formula2), text=black, fill opacity = 0.25, text opacity = 1, rectangle, rounded corners,inner sep =0.069cm,xshift = 0.46cm,yshift = 0.00cm] (closeBt) {\large$)$};
	\node[at = (formula2), text=black, fill opacity = 0.25, text opacity = 1, rectangle, rounded corners,inner sep =0.069cm,xshift = 0.75cm,yshift = 0.02cm] (eqt) {\large\vphantom{l\%,}$=$};
	\node[at = (formula2), text=black, fill opacity = 0.25, text opacity = 1, rectangle, rounded corners,inner sep =0.069cm,xshift = 1.149cm,yshift = 0.02cm] (n21t') {\large$21$};
	\draw[-,orange,out=140,in=-100] (3+xt) edge (3+x);
	\draw[-,darkpurple,out=105,in=-75] (3+x>yt) edge (3+x>y);
	\draw[-,darkgold,out=160,in=-85] (x+2t) edge (x+2);
	\draw[-,darkgreen,out=150,in=-60] (x+2Bt) edge (x+2B);
	\draw[-,darkblue,out=90,in=-90] (21-x+2t) edge (21-x+2);
	\draw[-,darkred,out=40,in=-130] (21-x+2=21t) edge (21-x+2=21);
	\end{tikzpicture}
\end{center}

The types for primitive and composite tokens are complemented by further useful types:\vskip8pt
\begin{multicols}{2}
\begin{itemize}
	%\begin{minipage}[t]{0.48\linewidth}
	\item[-] \FOAnum\ (numeral),
    \item[-] \FOAvar\ (variable),
    \item[-] \FOAnumexp\ (numerical expression),
    %
	%\end{minipage}
%\hfill
	%\begin{minipage}[t]{0.48\linewidth}
	\item[-]  \FOAbop\ (binary operator),
    \item[-] \FOAbrel\ (binary relation)\footnote{The types \FOAbop\ and \FOAbrel\ are \textit{composites}, and we note the more standard convention of writing $\FOAnumexp\times \FOAnumexp \to \FOAnumexp$ and $\FOAnumexp\times \FOAnumexp \to \FOAform$.},
    \item[-] $\FOApar$ (parentheses)
    \item[-] \FOAform\ (formula)\footnote{An alternative encoding of $\rsystemn_{\FOA}$ could include additional types such as (with illustrative tokens): $\FOAsum$, tokens: $1+2$, $3+26$; and $\FOAdifference$, tokens: $1-2$, $3-26$. This gives a finer-grained encoding, distinguishing tokens that exploit different operators such as $1+2$ and $1-2$. Types that distinguish tokens formed from different binary relations, $\texttt{bgr}$ and $\texttt{beq}$ to distinguish $1>2$ and $1=2$, may also be of use.}.
	%\end{minipage}
\end{itemize}
\end{multicols}\vskip8pt
The diagram below illustrates the partial order, $\leq$, over the set $\types$, including some of the types assigned to primitives and composites:
\begin{center}
	\begin{tikzpicture}[node distance = 0.5cm and 0.8cm]
	\node[] (numExp) {$\FOAnumexp$};
	\node[below left = of numExp, xshift = -0.2cm] (var) {$\FOAvar$};
	\node[below = of numExp, xshift = -0.55cm] (num) {$\FOAnum$};
	\node[below left = of var, xshift = 0.9cm] (x) {\texttt{x}};
	\node[below = of var, xshift = -0.0cm,yshift=0.15cm] (y) {\texttt{y}};
	\node[below = of var, xshift = 0.55cm,yshift=0.35cm] (vardots) {\rotatebox{20}{$\cdots$}};
	\node[below left = of num, xshift = 1cm,yshift=-0.1cm] (0) {\texttt0};
	\node[below = of num, xshift = -0.05cm,yshift=0.02cm] (1) {\texttt{1}};
	\node[below = of num, xshift = 0.35cm,yshift=0.15cm] (2) {\texttt{2}};
	\node[below right = of num, xshift = -0.8cm,yshift=0.35cm] (numdots) {\rotatebox{25}{$\cdots$}};
	\node[below right = 1.45cm and -0.15cm of numExp] (x-1) {\texttt{x\_mi\_1}};
	\node[below right = 0.9cm and 0.5cm of numExp] (2+y) {\texttt{2\_pl\_y}};
	\node[at = (2+y), yshift = 0.5cm, xshift = 0.45cm] (numExpdots) {\rotatebox{45}{$\cdots$}};
	\node[right = 2.6cm of numExp,yshift=-0.45cm] (bop) {\texttt{bop}};
	\node[below left = 0.5cm and -0.35cm of bop] (pl) {\texttt{pl}};
	\node[below right = 0.9cm and -0.35cm of bop] (mi) {\texttt{mi}};
	\node[right = 0.55cm of bop, yshift = 0.4cm] (brel) {\texttt{brel}};
	\node[below left = 0.6cm and -0.3cm of brel] (eq) {\texttt{eq}};
	\node[below right = 0.6cm and -0.2cm of brel] (gr) {\texttt{gr}};
	\node[right = 0.55cm of brel,yshift=-0.45cm] (par) {\texttt{par}};
	\node[below left = 0.9cm and -0.35cm of par] (oB) {\texttt{oB}};
	\node[below right = 0.5cm and -0.35cm of par] (cB) {\texttt{cB}};
	\node[right = 1.2cm of par, yshift = 0.3cm] (form) {\texttt{form}};
	\node[below left = 1.2cm and -0.6cm of form] (1eq2) {\texttt{1\_eq\_2}};
	\node[below right = 0.7cm and -0.7cm of form] (3xy) {\texttt{3\_pl\_x\_gr\_y}};
	\node[at  = (3xy),xshift = 0.8cm,yshift = 0.5cm] () {\rotatebox{30}{$\cdots$}};
	
	\draw[rounded corners, black!30, thick] (-3,-2.3) rectangle node[xshift = 6.4cm,yshift = 1cm,black!50] {$\tsystem$} (11.4,0.3);
	\coordinate (formula1) at (-0.5,-3.2) ;
	\draw[->] (var) edge (numExp);
	\draw[->] (num) edge (numExp);
	\draw[->] (x-1) edge (numExp);
	\draw[->] (2+y) edge (numExp);
	\draw[->] (x) edge (var);
	\draw[->] (y) edge (var);
	\draw[->] (0) edge (num);
	\draw[->] (1) edge (num);
	\draw[->] (2) edge (num);
	\draw[->] (pl) edge (bop);
	\draw[->] (mi) edge (bop);
	\draw[->] (eq) edge (brel);
	\draw[->] (gr) edge (brel);
	\draw[->] (oB) edge (par);
	\draw[->] (cB) edge (par);
	\draw[->] (1eq2) edge (form);
	\draw[->] (3xy) edge (form);
	\end{tikzpicture}
\end{center}
\end{example}

\begin{example}[Grammatical Constructors]\label{ex:grammaticalConstructors}
We now exemplify the \textit{constructors} that are used in $\rsystemn_{\FOA}$'s grammatical space. For each way of building tokens, embodied by their inductive construction, there is a constructor in the system: $\FOAcinfixop$, $\FOAcaddPar$, $\FOAcinfixrel$, and $\FOAcquantify$\footnote{Note, as with types, we adopt the convention of using \texttt{typewriter} font for constructor names: in examples, the vertices in graphs will consistently use \texttt{typewriter} font for their labels. Specifically in this example, the constructor $\FOAcinfixop$ does not impose the presence of parentheses, reflecting the typical informal use of such expressions in practice. Therefore, $\rsystemn_{\FOA}$ includes a bracketing constructor, $\FOAcaddPar$. If $\rsystemn_{\FOA}$ included additional types, such as $\FOAsum$, then more constructors could be included. One example is the constructor $\FOAcsum$ with input type-sequence $[\FOAnumexp,\FOAplus,\FOAnumexp]$ and output type $\FOAsum$.}. For instance, $\FOAcinfixop$ will build tokens such as $1+3$ and has \textit{input type-sequence} $[\FOAnumexp,\FOAbop,\FOAnumexp]$ and \textit{output type} $\FOAnumexp$.{\pagebreak} The constructors' visualisations, where the \textit{indexed} arrows indicate the order of their sources' labels in the input type-sequence, \begin{samepage}are:
\begin{center}
% \begin{minipage}[c][3cm]{0.2\textwidth}    % numerical expression
\begin{tikzpicture}[construction,yscale=0.9]
\node[typeE={$\FOAnumexp$}] (v) at (3.2,4.1) {};
\node[constructorNW={$\FOAcinfixop$}] (u) at (3.2,3.2) {};
\node[typeS={$\FOAnumexp$}] (v2) at (2.2,2.2) {};
\node[typeS={$\FOAbop$}] (v3) at (3.2,2.2) {};
\node[typeS={$\FOAnumexp$}] (v4) at (4.2,2.2) {};
%\node[termS={$\FOAopenb$}] (v1) at (1.6,2.2) {};
%\node[termS={$\FOAcloseb$}] (v5) at (4.8,2.2) {};
\path[->]
(u) edge[bend right = 0] (v)
%(v1) edge[bend right = -20] node[index label] {1} (u)
(v2) edge[bend right = -10] node[index label] {1} (u)
(v3) edge[bend right = 0] node[index label] {2} (u)
(v4) edge[bend right = 10] node[index label] {3} (u)
%(v5) edge[bend right = 20] node[index label] {5} (u)
;
\end{tikzpicture}
% \end{minipage}
%
\hspace{0.9cm}
%
% \begin{minipage}[c][3cm]{0.2\textwidth}    % brackets
\begin{tikzpicture}[construction,yscale=0.9]
\node[typeE={$\FOAnumexp$}] (v) at (3.2,4.1) {};
\node[constructorNW={$\FOAcaddPar$}] (u) at (3.2,3.2) {};
\node[typeS={$\FOAopenb$}] (v2) at (2.2,2.2) {};
\node[typeS={$\FOAnumexp$}] (v3) at (3.2,2.2) {};
\node[typeS={$\FOAcloseb$}] (v4) at (4.2,2.2) {};
%\node[termS={$\FOAopenb$}] (v1) at (1.6,2.2) {};
%\node[termS={$\FOAcloseb$}] (v5) at (4.8,2.2) {};
\path[->]
(u) edge[bend right = 0] (v)
%(v1) edge[bend right = -20] node[index label] {1} (u)
(v2) edge[bend right = -10] node[index label] {1} (u)
(v3) edge[bend right = 0] node[index label] {2} (u)
(v4) edge[bend right = 10] node[index label] {3} (u)
%(v5) edge[bend right = 20] node[index label] {5} (u)
;
\end{tikzpicture}
% \end{minipage}
%
\hspace{0.9cm}
%
% \begin{minipage}[c][3cm]{0.2\textwidth} % binary relation formulae
\begin{tikzpicture}[construction,yscale=0.9]
\node[typeE={$\FOAform$}] (v) at (3.2,4.1) {};
\node[constructorNW={$\FOAcinfixrel$}] (u) at (3.2,3.2) {};
\node[typeS={$\FOAnumexp$}] (v2) at (2.2,2.2) {};
\node[typeS={$\FOAbrel$}] (v3) at (3.2,2.2) {};
\node[typeS={$\FOAnumexp$}] (v4) at (4.2,2.2) {};
\path[->]
(u) edge[bend right = 0] (v)
(v2) edge[bend right = -10] node[index label] {1} (u)
(v3) edge[bend right = 0] node[index label] {2} (u)
(v4) edge[bend right = 10] node[index label] {3} (u)
;
\end{tikzpicture}
% \end{minipage}
%
\hspace{0.9cm}
%
% \begin{minipage}[c][3cm]{0.2\textwidth} %quantified formula
\begin{tikzpicture}[construction,yscale=0.9]
\node[typeE={$\FOAform$}] (v) at (3.2,4.1) {};
\node[constructorNW={$\FOAcquantify$}] (u) at (3.2,3.2) {};
\node[typeS={$\FOAquant$}] (v2) at (2.2,2.2) {};
\node[typeS={$\FOAvar$}] (v3) at (3.2,2.2) {};
\node[typeS={$\FOAform$}] (v4) at (4.2,2.2) {};
\path[->]
(u) edge[bend right = 0] (v)
(v2) edge[bend right = -10] node[index label] {1} (u)
(v3) edge[bend right = 0] node[index label] {2} (u)
(v4) edge[bend right = 10] node[index label] {3} (u);
\end{tikzpicture}
% \end{minipage}
\end{center}
\end{samepage}
% \end{itemize}
\end{example}

As is evident in example~\ref{ex:grammaticalConstructors}, each constructor can be \textit{specified} by its \textit{input type-sequence} and \textit{output type}, embodied in definition~\ref{defn:constructionSpecification}. Together, the input type-sequence and output type form the \textit{signature} of the constructor which is akin to the signature of a function or relation: it tells us the types of the tokens that the constructor requires as inputs (the `domain') and the type of the output (the `co-domain'). We emphasise that constructors are essentially acting as relation names and their signatures indicate the types of the tokens that are related.

\begin{definition}\label{defn:constructionSpecification}
A \textit{constructor specification}, $\cspecificationn$, over type system $\tsystemn=\tsystem$ is a pair, $\cspecificationn=(\constructors, \spec)$,  where
\begin{enumerate}
\item $\constructors$ is a set, disjoint from $\types$, whose elements are called \textit{constructors}, and
\item $\spec\colon \constructors \to \sequence(\types)\times \types$ is a function that returns, for each constructor, $c$, a \textit{signature}, $\spec(c)=([\tau_1,\ldots,\tau_n], \tau)$, where $[\tau_1,\ldots,\tau_n]$ is non-empty.
\end{enumerate}
The \textit{input type-sequence} and the \textit{output type} for $c$, denoted $\inputsty{c}$ and $\outputsty{c}$ respectively, are $\inputsty{c}=[\tau_1,\ldots,\tau_n]$ and  $\outputsty{c}=\tau$.
\end{definition}

Note that the requirement for $\constructors$ and $\types$ to be disjoint is simply to allow us to know, given an element, $e$, of $\constructors\cup \types$, whether $e$ is a constructor or a type. Given any constructor, $c$, with signature $([\tau_1,\ldots,\tau_n],\tau)$, $c$ can take many different input token-sequences and output many different tokens. The variety of input token-sequences will be encapsulated using constructor \textit{configurations}: bipartite graphs, whose vertices are tokens (labelled by types) or \textit{configurators} (labelled by constructors). In example~\ref{ex:grammaticalConstructors}, we visualised constructor signatures using graphs whose vertices were labelled by types or constructors. When we visualise \textit{configurations}, vertices that are tokens will be labelled by types and the remaining vertices will be labelled by constructors, as we shall see in example~\ref{ex:configOfCons}. In addition, just as we can think of constructors as relation names, a configuration is akin to an element of that relation.

\begin{example}[Configurations of Constructors]\label{ex:configOfCons}
The following constructor configurations\footnote{To avoid clutter in the graphs, the types that label the vertices that are tokens are omitted.} form part of the \textit{grammatical space} of $\rsystemn_{\FOA}$:
\begin{center}
% \begin{minipage}{0.22\textwidth}    % numerical expression
\begin{tikzpicture}[construction,yscale=0.9]
\node[termrep] (v) at (3.2,4.1) {\scriptsize$(7+25)-1$};
\node[constructorNW={$\FOAcinfixop$}] (u) at (3.2,3.2) {};
\node[termrep] (v2) at (2.2,2.2) {\scriptsize$(7+25)$};
\node[termrep] (v3) at (3.2,2.2) {\scriptsize\phantom{.}\vphantom{$1$}$-$\phantom{.}};
\node[termrep] (v4) at (4.2,2.2) {\scriptsize$1$};
%\node[termS={$($}] (v1) at (1.6,2.2) {$v_1$};
%\node[termS={$)$}] (v5) at (4.8,2.2) {$v_5$};
\path[->]
(u) edge[bend right = 0] (v)
%(v1) edge[bend right = -15] node[index label] {1} (u)
(v2) edge[bend right = -10] node[index label] {1} (u)
(v3) edge[bend right = 0] node[index label] {2} (u)
(v4) edge[bend right = 10] node[index label] {3} (u)
%(v5) edge[bend right = 15] node[index label] {5} (u)
;
\end{tikzpicture}
% \end{minipage}
%
\hspace{1cm}
%
% \begin{minipage}{0.22\textwidth}    % numerical expression
\begin{tikzpicture}[construction,yscale=0.9]
\node[termrep] (v) at (3.2,4.1) {\scriptsize$(7+25)$};
\node[constructorNW={$\FOAcaddPar$}] (u) at (3.2,3.2) {};
\node[termrep] (v2) at (2.3,2.2) {\scriptsize$($};
\node[termrep] (v3) at (3.2,2.2) {\scriptsize$7+25$};
\node[termrep] (v4) at (4.1,2.2) {\scriptsize$)$};
\path[->]
(u) edge[bend right = 0] (v)
(v2) edge[bend right = -10] node[index label] {1} (u)
(v3) edge[bend right = 0] node[index label] {2} (u)
(v4) edge[bend right = 10] node[index label] {3} (u)
;
\end{tikzpicture}
% \end{minipage}
%
\hspace{1cm}
%
% \begin{minipage}{0.20\textwidth} % binary relation formulae
\begin{tikzpicture}[construction,yscale=0.9]
\node[termrep] (v) at (3.2,4.1) {\scriptsize$x>3$};
\node[constructorNW={$\FOAcinfixrel$}] (u) at (3.2,3.2) {};
\node[termrep] (v2) at (2.3,2.2) {\scriptsize$x$};
\node[termrep] (v3) at (3.2,2.2) {\scriptsize$>$};
\node[termrep] (v4) at (4.1,2.2) {\scriptsize$3$};
\path[->]
(u) edge[bend right = 0] (v)
(v2) edge[bend right = -10] node[index label] {1} (u)
(v3) edge[bend right = 0] node[index label] {2} (u)
(v4) edge[bend right = 10] node[index label] {3} (u)
;
\end{tikzpicture}
% \end{minipage}
%
\hspace{1cm}
%
% \begin{minipage}{0.25\textwidth} %quantified formula
\begin{tikzpicture}[construction,yscale=0.9]
\node[termrep] (v) at (3.2,4.1) {\scriptsize$\forall x \thinspace x> 3$};
\node[constructorNW={$\FOAcquantify$}] (u) at (3.2,3.2) {};
\node[termrep] (v2) at (2.3,2.2) {\scriptsize$\forall$};
\node[termrep] (v3) at (3.2,2.2) {\scriptsize$x$};
\node[termrep] (v4) at (4.1,2.2) {\scriptsize$x> 3$};
\path[->]
(u) edge[bend right = 0] (v)
(v2) edge[bend right = -10] node[index label] {1} (u)
(v3) edge[bend right = 0] node[index label] {2} (u)
(v4) edge[bend right = 10] node[index label] {3} (u);
\end{tikzpicture}
% \end{minipage}\\
\end{center}
Each configuration associates its constructor with an \textit{input token-sequence}. In turn, the type sequence directly obtained from the input token-sequence \textit{specialises} the constructor's input type-sequence: the configuration of $\FOAcinfixop$ has input token-sequence $[(7+25),-,1]$, that gives rise to input type-sequence $[\FOAopenb\_\texttt{7}\_\FOAplus\_\texttt{25}\_\FOAcloseb,\FOAminus,\texttt{1}]$ which specialises $\inputsty{\FOAcinfixop}=[\FOAnumexp,\FOAbop, \FOAnumexp]$.
\end{example}

As is evident from example~\ref{ex:configOfCons}, the input token-sequence, $[t_1,\ldots,t_n]$, of constructor $c$ ensures that each $t_i$ is assigned a subtype of $\tau_i$, where $\spec(c)=([\tau_1,\ldots,\tau_n],\tau)$. We now define a \textit{specialisation} of a type sequence, where we can replace types with subtypes.

\begin{definition}\label{defn:specialization}
Let $\T=[\tau_1,\ldots,\tau_n]$ be a sequence over the set $\types$, given a type system $\tsystemn=\tsystem$. A \textit{specialisation} of $\T$ given $\tsystemn$ is a sequence, $\T'=[\tau_1',\ldots,\tau_n']$, such that for each $i$, it is the case that $\tau_i'\leq \tau_i$. Whenever $\T'$ is a specialisation of $\T$ we say that $\T$ is a \textit{generalisation} of $\T'$.
\end{definition}

\begin{definition}\label{defn:configuration}
Let $\cspecificationn=\cspecification$ be a constructor specification over $\tsystemn=\tsystem$.\linebreak Let $c\in \constructors$ where
$\spec(c) =([\tau_1,\ldots,\tau_n], \tau)$.  A \textit{configuration} of $c$ is a graph,\linebreak $\graphn=\graph$, where
\begin{enumerate}
\item $\pb$ contains a single vertex, $u$, called a $c$-\textit{configurator}: $\pb=\{u\}$,
\item the vertices in $\pa$, called \textit{tokens}, are all adjacent to $u$: $\pa=\inV{u}\cup \outV{u}$,
\item $u$ has exactly one outgoing arrow, $a_{0}$: $\outA{u}=\{a_{0}\}$,
\item $u$ has exactly $n$ incoming arrows: $\inA{u}=\{a_1,\ldots,a_n\}$,
\item $\arrowl\colon \arrows\to \{0,1,\ldots,n\}$ is a bijection that labels each arrow $a_{i}$ with $i$,
\item $\tokenl\colon \pa \to \types$ is a function that labels each token with a type, such that for all $t\in \pa$,
\begin{enumerate}
            \item if $\tar{a_{0}}=t$  then $\tokenl(t)\leq \tau$, and
            \item the vertex-label sequence $[\tokenl(\sor{a_1}),\ldots,\tokenl(\sor{a_n})]$ is a specialisation of $[\tau_1,\ldots,\tau_n]$,
            \end{enumerate}
and
\item $\consl\colon \pb \to \constructors$ is a function where $\consl(u)=c$.

\end{enumerate}
The \textit{output token}, $\tar{a_0}$, of $u$, is denoted $\outputsto{u}$ and the \textit{output type} of $u$ is $\outputsty{u}=\tokenl(\tar{a_0})$. We further define:
\begin{enumerate}
\item the \textit{input arrow-sequence} of $u$, denoted $\inputsA{u}$, to be the sequence of arrows in $\inA{u}$ ordered by their indices: $\inputsA{u} = [\arrowl^{-1}(1),\ldots,\arrowl^{-1}(n)] = [a_1,\ldots,a_n]$,
\item the \textit{input token-sequence} of $u$, denoted $\inputsto{u}$, replaces each arrow in $\inputsA{u}$ with its source: $\inputsto{u} = [\sor{a_1},\ldots,\sor{a_n}]$, and
\item the \textit{input type-sequence} of $u$, denoted $\inputsty{u}$, replaces each token in $\inputsto{u}$ with its assigned type: $\inputsty{u} = [\tokenl(\sor{a_1}),\ldots,\tokenl(\sor{a_n})]$.
\end{enumerate}
\end{definition}

In a configuration, the outgoing arrow $a_0$ is labelled by $0$ simply because the arrow labelling function is total, not partial. We adopt the convention, when visualizing configurations of constructors, of omitting the label $0$ that appears on $a_0$, since it can always be deduced. In addition, vertex labels will be omitted when they are not relevant for the context, as was done in example~\ref{ex:configOfCons}.

It is not yet evident, from the \textsc{First-Order Arithmetic} examples, why configurations can be non-simple graphs. In essence, this need arises due to representational systems that can use the \textit{same} token \textit{multiple times} as a constructor input. Example~\ref{ex:vectorMulti}, which illustrates the use of a multi-graph, makes use of the \textit{entailment space}, $\espacen$, of a system that graphically represents vectors in the plane. 

\begin{example}[The Need for Multi-Graphs]\label{ex:vectorMulti} Consider a geometric representational system that visualises vectors in the plane. The vector $(1,1)$ is graphically represented by token $t_1$, below left, which is of type $\GRVvec$. The signature of constructor, $\GRVcvecadd$, for drawn vector addition is visualised below middle. The vector $(1,1)+(1,1)=(2,2)$ is graphically represented  by $t_2$, below right.
\begin{center}
\begin{minipage}[t]{0.3\textwidth}\centering
\textit{representation, $t_1$, of $(1,1)$:}\\[1ex]
 \begin{tikzpicture}[x=0.5cm,y=0.5cm,step=0.5cm,>={angle 90}]  %projectile motion
 \draw[very thin, gray!20] (0,0) grid (3.5,3.5);
\draw[->] (-0.1,0) -- (3.5,0) node[right] {$x$};
\draw[->] (0,-0.1) -- (0,3.5) node[above] {$y$};
\draw[->, thick, color=darkblue] (0,0) -- (1,1){};% node[above] {$\underline{v}$};
\draw[-, ] (1,-0.1) -- (1,0.1) node[below] {\tiny $1$};
\draw[-, ] (2,-0.1) -- (2,0.1) node[below] {\tiny $2$};
\draw[-, ] (3,-0.1) -- (3,0.1) node[below] {\tiny $3$};
\draw[-, ] (-0.1,1) -- (0.1,1) node[left] {\tiny $1$};
\draw[-, ] (-0.1,2) -- (0.1,2) node[left] {\tiny $2$};
\draw[-, ] (-0.1,3) -- (0.1,3) node[left] {\tiny $3$};
\end{tikzpicture}
\end{minipage}
\hfill
\begin{minipage}[t]{0.32\textwidth}\centering
\textit{signature of $\GRVcvecadd$:}\\[2ex]
\begin{tikzpicture}[construction,yscale=0.85]
\node[typeE={$\GRVvec$}] (v) at (3.2,4.2) {};
\node[constructorENW={$\GRVcvecadd$}] (u) at (3.2,3.2) {};
\node[typeS={$\GRVvec$}] (v2) at (2.3,2.2) {};
\node[typeS={$\GRVvec$}] (v4) at (4.1,2.2) {};
\path[->]
(u) edge[bend right = 0] (v)
(v2) edge[bend right = -10] node[index label] {1} (u)
(v4) edge[bend right = 10] node[index label] {2} (u);
\end{tikzpicture}
\end{minipage}
\hfill
\begin{minipage}[t]{0.3\textwidth}\centering
\textit{representation, $t_2$, of $(2,2)$:}\\[1ex]
\begin{tikzpicture}[x=0.5cm,y=0.5cm,step=0.5cm,>={angle 90}]  %projectile motion
\draw[very thin, gray!20] (0,0) grid (3.5,3.5);
\draw[->, ] (-0.1,0) -- (3.5,0) node[right] {$x$};
\draw[->, ] (0,-0.1) -- (0,3.5) node[above] {$y$};
\draw[->, thick, color=darkblue] (0,0) -- (2,2) {};% node[above] {$\underline{v}+\underline{v}$};
\draw[-, ] (1,-0.1) -- (1,0.1) node[below] {\tiny $1$};
\draw[-, ] (2,-0.1) -- (2,0.1) node[below] {\tiny $2$};
\draw[-, ] (3,-0.1) -- (3,0.1) node[below] {\tiny $3$};
\draw[-, ] (-0.1,1) -- (0.1,1) node[left] {\tiny $1$};
\draw[-, ] (-0.1,2) -- (0.1,2) node[left] {\tiny $2$};
\draw[-, ] (-0.1,3) -- (0.1,3) node[left] {\tiny $3$};
\end{tikzpicture}
\end{minipage}
\end{center}
This non-simple configuration shows that adding $t_1$ to itself constructs $t_2$\\
\begin{center}
\begin{tikzpicture}[construction]
\begin{scope}[rotate=-90]
\node[termE={}] (vi) at (0,0) {\scalebox{.5}{\vectorVis{(1,1)}{0.4}}};
\node[constructorIpos={$\GRVcvecadd$}{60}{0.17cm}] (u) at (0,1.9) {};
\node[termpos={}{160}{0.17cm}] (vo) at (0,3.8) {\scalebox{.5}{\vectorVis{(2,2)}{0.4}}};
\end{scope}
\path[->]
(u) edge[bend right = 0] (vo)
(vi) edge[bend right = 20] node[index label] {2} (u)
(vi) edge[bend right = -20] node[index label] {1} (u);
\end{tikzpicture}
\end{center}
\end{example}

Each configuration of a constructor gives us \textit{local} information about tokens that construct another token. In turn, the constructing tokens need not be primitives and could well be built from other tokens. Thus, to encode an entire system, for each space (grammatical, entailment and identification) we exploit a \textit{structure graph} that is a union of configurations.

\begin{example}[Structure Graphs] The configurations in example~\ref{ex:configOfCons} are subgraphs of a structure graph, $\graphn$, that forms the grammatical space, $\gspacen_{\FOA}$, of $\rsystemn_{\FOA}$. Taking the two rightmost configurations in example~\ref{ex:configOfCons}, the graph below is also a subgraph of $\graphn$:
\begin{center}
\begin{tikzpicture}[construction,yscale=0.8]
\begin{scope}[rotate=-90]
\node[termrep] (v) at (3.2,5.0) {\scriptsize$\forall x \thinspace x> 3$};
\node[constructorpos={\scriptsize$\FOAcquantify$}{35}{0.34cm}] (u) at (3.2,3.4) {};
\node[termrep] (v2) at (2.2,2.4) {\scriptsize$\forall$};
\node[termrep] (v3) at (3.2,2.4) {\scriptsize$x$};
\node[termrep] (v4) at (4.2,2.4) {\scriptsize$x>3$};
\node[constructorpos={$\FOAcinfixrel$}{35}{0.34cm}] (1u) at (4.2,1) {};
\node[termrep] (1v2) at (3.2,0) {\scriptsize$x$};
\node[termrep] (1v3) at (4.2,0) {\scriptsize$>$};
\node[termrep] (1v4) at (5.2,0) {\scriptsize$3$};
\end{scope}
\path[->]
(u) edge[bend right = 0] (v)
(v2) edge[bend right = -10] node[index label] {1} (u)
(v3) edge[bend right = 0] node[index label] {2} (u)
(v4) edge[bend right = 10] node[index label] {3} (u);

\path[->]
(1u) edge[bend right = 0] (v4)
(1v2) edge[bend right = -10] node[index label] {1} (1u)
(1v3) edge[bend right = 0] node[index label] {2} (1u)
(1v4) edge[bend right = 10] node[index label] {3} (1u);
\end{tikzpicture}
\end{center}
Note the two distinct vertices containing distinct instances of $x$: by contrast to example~\ref{ex:vectorMulti}, which used one token, $t_1$, to construct $t_2$, here two $x$ tokens are used to build $\forall x \thinspace x > 3$.
\end{example}
\begin{definition}\label{defn:structureGraph}
A \textit{structure graph}, $\graphn=\graph$, for a constructor specification $\cspecificationn=(\constructors, \spec)$ is a graph where for all $u\in \pb$, $\consl(u)$ is in $\constructors$ and $\neigh{u}$ is a configuration of $\consl(u)$.
The elements of $\pb$ are called \textit{configurators}.
\end{definition}
We are now in a position to define a construction space.
\begin{definition}\label{defn:constructionSpace}
A \textit{construction space} is a triple, $\cspacen=\cspace$, where %
\begin{enumerate}
\item $\tsystemn=\tsystem$ is a type system,
\item $\cspecificationn=\cspecification$ is a constructor specification over $\tsystemn$, and
\item $\graphn=\graph$ is a structure graph for $\cspecificationn$.
\end{enumerate}
We say that $\cspacen$ is a construction space \textit{formed over} $\tsystemn$.
\end{definition}

Construction spaces enable a graph-theoretical encoding of representations, generalising the concept of a \textit{parse tree} by including non-directed and directed cycles. In particular, construction spaces support the encoding of:
\begin{itemize}
\item[-] the same token, $t$ say, being used multiple times when constructing another token, as seen in example~\ref{ex:vectorMulti}; when this happens, $t$ is the source of multiple arrows that all target the same configurator and, hence, the structure graph contains a non-directed cycle and is not a tree,
\item[-] a range of construction choices for any given token, $t$, as will be seen in example~\ref{ex:encodingSemantics}; when this happens, $t$ is the target of multiple arrows sourced on different configurators, and
\item[-] cyclical ways of constructing tokens, such as when $t$ can be constructed using $t'$ and vice-versa, as will be seen in Section~\ref{sec:constructions:AMEGROV};  when this happens, the associated structure graph contains a directed cycle and is not a DAG.
\end{itemize}
In Section~\ref{sec:formalizingRSs}, we will show how construction spaces can be used to encode the grammatical, entailment, and identification spaces of representational systems. In particular, each of these three spaces \textit{is a construction space} that satisfies certain properties. For instance, the grammatical space is taken to be \textit{functional}, defined in Section~\ref{sec:propertiesOfConstructionSpaces}. Hence, construction spaces play a fundamental role in our formalisation of representational systems.

\subsection{Properties of Construction Spaces}\label{sec:propertiesOfConstructionSpaces}

Construction spaces can have various properties, including being functional, that are sometimes useful to exploit. Here, we explore four particular aspects.

\begin{paragraph}{Compatibility} Construction spaces are compatible if their union is a construction space. We require that the three spaces over which a representational system is formed are compatible\footnote{This is because structural transformations, covered in Section~\ref{sec:structuralTransformations} sometimes need to exploit constructions formed from configurators drawn from any of the three spaces.}. Section~\ref{sec:compatibility} establishes that for any set of pairwise compatible construction spaces, their union is a construction space.
\end{paragraph}%

\begin{paragraph}{Functional} For a construction space to be functional we require that, if we know the input sequence, $\T$, of a configuration of a constructor then the output is uniquely determined. We define this property in Section~\ref{sec:determinism} at both token and type levels.
\end{paragraph}%

\begin{paragraph}{Totality} The property of totality means, informally, that for a constructor, $c$, and any sequence, $\T$, that conforms to $\spec(c)$, there exists a $c$-configurator with input sequence $\T$. Whilst not an essential property of any construction space, $\cspacen$, if $\cspacen$ is total (i.e., all of its constructors are total) then we can exploit that information when identifying structural transformations (Section~\ref{sec:structuralTransformations}). Section~\ref{sec:totality} defines totality at both the token and type levels. Notably, formal languages are likely to employ total constructors, whereas informal languages (such as natural language) are not\footnote{For instance, in NL, one can imagine a constructor that appends one sequence of words on to the end of another sequence to form a sentence; there will be cases were two input sequences of words do not form a sentence, so the constructor is not total.}.
\end{paragraph}%

\begin{paragraph}{Type extendability} In Section~\ref{sec:interRepSystems}, we develop theory that permits the definition of inter-repre\-sen\-ta\-tio\-nal-system relationships. In this context, it can be useful to exploit types that need not appear in the given representational systems. We show how to extend the set of types without inherently changing the given systems. Section~\ref{sec:extendability} focuses on extensions of the type system that provide sets of types with upper bounds.
\end{paragraph}\medskip

Sections~\ref{sec:compatibility} (compatibility) and~\ref{sec:determinism} (functionality) are pre-requisites for definition~\ref{defn:representationalSystem}, of a representational system, whereas Sections~\ref{sec:totality} (totality) and~\ref{sec:extendability} (extendability) are exploited when defining properties of representational systems and identifying relationships between them.

\subsubsection{Compatibility}\label{sec:compatibility}

The grammatical and entailment spaces, $\gspacen$ and $\espacen$, of a representational system, $\rsystemn$, will be formed over the same type system, $\tsystemn$. The identification space, $\ispacen$, will use additional types -- which we call meta-level types since they need not be in the type system of $\gspacen$ -- alongside those in $\tsystemn$\footnote{Recall that given a constructor, $c$, in the identification space with input $[t_1,\ldots,t_n]$, the output is not a token drawn from the grammatical space but, rather, is a meta-level token, $t$, with a specified meta-level type, $\omega$.}. It is necessary that these meta-level types form a type system that is \textit{compatible} with $\tsystemn$ and that the constructor specifications of $\gspacen$, $\espacen$ and $\ispacen$ are also \textit{compatible}.

\begin{definition}\label{defn:compatible}
Type systems $\tsystemn_1=\tsystemp{1}$ and $\tsystemn_2=\tsystemp{2}$ are \textit{compatible} provided $\tsystemn_1\cup \tsystemn_2$ is a type system.
\end{definition}

\begin{lemma}\label{lem:pairwiseCompatibleTTSs}
Let $\tsystemn_1=\tsystemp{1}$, $\tsystemn_2=\tsystemp{2}$ and $\tsystemn_3=\tsystemp{3}$ be pairwise compatible type systems. Then $\tsystemn_1\cup \tsystemn_2$ is compatible with $\tsystemn_3$.
\end{lemma}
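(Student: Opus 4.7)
The plan is to unpack the definition of compatibility and verify that the union of all three partial orders is itself a partial order on the union of the three underlying type sets. By definition, $\tsystemn_1 \cup \tsystemn_2$ is the pair $(\types_1 \cup \types_2, {\leq_1} \cup {\leq_2})$, which is a type system by the compatibility of $\tsystemn_1$ and $\tsystemn_2$. So I need to show that $(\tsystemn_1 \cup \tsystemn_2) \cup \tsystemn_3 = (\types_1 \cup \types_2 \cup \types_3, {\leq_1} \cup {\leq_2} \cup {\leq_3})$ is a type system, i.e. that ${\leq} := {\leq_1} \cup {\leq_2} \cup {\leq_3}$ is a partial order on $\types := \types_1 \cup \types_2 \cup \types_3$.

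First I would handle reflexivity: for any $\tau \in \types$, $\tau$ lies in some $\types_i$, and reflexivity of $\leq_i$ gives $\tau \leq_i \tau$, hence $\tau \leq \tau$. Next, for antisymmetry, suppose $\tau \leq \tau'$ and $\tau' \leq \tau$. Then $\tau \leq_i \tau'$ for some $i$ and $\tau' \leq_j \tau$ for some $j$. If $i = j$ we conclude by antisymmetry of $\leq_i$. Otherwise, since $\leq_i \subseteq \types_i \times \types_i$ and $\leq_j \subseteq \types_j \times \types_j$, both $\tau$ and $\tau'$ lie in $\types_i \cap \types_j$; then the pairwise compatibility of $\tsystemn_i$ and $\tsystemn_j$ (which makes ${\leq_i} \cup {\leq_j}$ antisymmetric on $\types_i \cup \types_j$) forces $\tau = \tau'$.

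For transitivity, suppose $\tau_1 \leq \tau_2$ and $\tau_2 \leq \tau_3$. Then $\tau_1 \leq_i \tau_2$ and $\tau_2 \leq_j \tau_3$ for some indices $i, j \in \{1,2,3\}$. If $i = j$, transitivity of $\leq_i$ gives $\tau_1 \leq_i \tau_3$. If $i \neq j$, then $\tau_2 \in \types_i \cap \types_j$, and the pairwise compatibility of $\tsystemn_i$ with $\tsystemn_j$ ensures that ${\leq_i} \cup {\leq_j}$ is transitive on $\types_i \cup \types_j$; applied to $\tau_1 \leq_i \tau_2 \leq_j \tau_3$, this yields $\tau_1 \mathrel{({\leq_i} \cup {\leq_j})} \tau_3$, and hence $\tau_1 \leq \tau_3$.

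The proof is essentially routine set-theoretic bookkeeping; the only subtlety worth flagging is the mixed-index case in antisymmetry and transitivity, where I must explicitly invoke pairwise compatibility of the two relevant type systems rather than relying solely on each being a partial order in isolation. I anticipate no genuine obstacle, only care in distinguishing the ``same index'' and ``different index'' subcases and ensuring the shared element lies in the intersection of the relevant type sets so that the pairwise compatibility hypothesis applies.
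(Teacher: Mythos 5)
Your proof is correct and follows essentially the same route as the paper's: show reflexivity directly from each $\leq_i$, and reduce antisymmetry and transitivity to the fact that ${\leq_i}\cup{\leq_j}$ is a partial order for each pair $i,j$, which is exactly what pairwise compatibility provides. The only cosmetic difference is that you split explicitly into the $i=j$ and $i\neq j$ subcases, whereas the paper handles both uniformly by noting ${\leq_i}\cup{\leq_j}$ is a partial order for any $1\leq i,j\leq 3$.
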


The proof, which is straightforward, can be found in appendix~\ref{sec:app:representationalSystems} (see lemma~\ref{alem:pairwiseCompatibleTTSs}).

\begin{definition}\label{defn:compatibleCS}
Construction spaces $\cspacen_{1}=\cspacep{1}$ and $\cspacen_{2}=\cspacep{2}$ are \textit{compatible} provided $\cspacen_{1}\cup \cspacen_{2}$ is a construction space.
\end{definition}

Example~\ref{ex:compatibleConstructionSpaces} discusses the compatibility of $\rsystemn_{\FOA}$'s grammatical and identification spaces. Recall that meta-tokens in the identification space will be visualised by \tikz[construction,baseline=-3pt]{
\node[termIrep] (u) {$t$}}, with \tikz[construction,baseline=-3pt]{
\node[termrep] (u) {$t$}} reserved for non-meta-tokens (i.e. tokens in $\gspacen$ and $\espacen$).

\begin{example}[Compatible Construction Spaces]\label{ex:compatibleConstructionSpaces}
We require that the grammatical, entailment and identification spaces of $\rsystemn_{\FOA}$ are pairwise compatible. This means that $\gspacen_{\FOA}\cup \espacen_{\FOA}\cup \ispacen_{\FOA}$ is a construction space and, in particular, that the union of any of their structure graphs is also a structure graph. Now, $\rsystemn_{\FOA}$ includes a constructor, called $\FOAcprovable$, in its identification space, $\ispacen_{\FOA}$, with signature $([\FOAform],\FOAboolean)$. The structure graph for $\ispacen_{\FOA}$ uses $\FOAcprovable$ to encode whether numerical expressions are valid (universally true), using $\top$, or invalid, using $\bot$: the meta-tokens $\top$ and $\bot$ are both of type $\FOAboolean$.  The following subgraphs of $\ispacen_{\FOA}$'s structure graph encode the validity of $\forall x \thinspace x> x-1$ and the invalidity of $1+(7-4)=22$:
\begin{center}
\begin{tikzpicture}[construction]
\begin{scope}[rotate = 0]
\node[termrep] (v4) at (3.2,4.6) {\scriptsize$\forall x \thinspace x> x-1$};
\node[constructorI={\FOAcprovable}] (u) at (3.2,5.8) {};
\node[termIrep] (v) at (3.2,7) {\scriptsize$\top$};
\end{scope}
\path[->]
(u) edge[bend right = 0] (v)
(v4) edge[bend right = 0] node[index label] {1} (u);
\end{tikzpicture}
\hspace{1.5cm}
\begin{tikzpicture}[construction]
\begin{scope}[rotate = 0]
\node[termrep] (v4) at (3.2,4.6) {\scriptsize$1+(7-4)=22$};
\node[constructorI={\FOAcprovable}] (u) at (3.2,5.8) {};
\node[termIrep] (v) at (3.2,7) {\scriptsize$\bot$};
\end{scope}
\path[->]
(u) edge[bend right = 0] (v)
(v4) edge[bend right = 0] node[index label] {1} (u);
\end{tikzpicture}
\end{center}
Since $\gspacen_{\FOA}$ and $\ispacen_{\FOA}$ are compatible, their union contains a structure graph, $\graphn$. The graph below is a subgraph of $\graphn$ and shows one way of building the token $1+2=3$ using the constructor $\FOAcinfixrel$ found in the grammatical space. This graph encodes the fact that the expression $1+2=3$ is valid, using the $\FOAcprovable$ constructor found within $\ispacen_{\FOA}$, with the contours identifying the spaces in which subgraphs occur:
\begin{center}
	\begin{tikzpicture}[construction, node distance = 0.3cm and 0.1cm, yscale=0.8]
	\begin{scope}[rotate = -90]
	\node[termrep] (v4) at (3.2,4.2) {\scriptsize$1+2=3$};
	\node[termrep] (v1) at (2.2,1.4) {\scriptsize$1+2$};
	\node[constructorGpos={$\FOAcinfixrel$}{37}{0.28cm}] (u1) at (3.2,2.5) {};
	\node[termrep] (v3) at (4.2,1.4) {\scriptsize$3$};
	\node[constructorIN={\FOAcprovable}] (u) at (3.2,5.8) {};
	\node[termIrep] (v) at (3.2,6.8) {\scriptsize$\top$};
	\node[termrep] (v2) at (3.2,1.4) {\scriptsize$=$};
	\end{scope}
	\path[->]
	(u1) edge[bend right = 0] (v4)
	(v1) edge[bend right = -10] node[index label] {1} (u1)
	(v3) edge[bend right = 10] node[index label] {3} (u1)
	(u) edge[bend right = 0] (v)
	(v4) edge[bend right = 0] node[index label] {1} (u)
	(v2) edge[bend right = 0] node[index label] {2} (u1);
	\coordinate[above right = of v4, yshift = 0.3cm] (x1) ;
	\coordinate[below right = of v4, yshift = -0.3cm] (x2) ;
	\coordinate[below left = of v3,xshift = -0.1cm, yshift = 0.15cm] (x3) ;
	\coordinate[above left = of v1,xshift = -0.1cm, yshift = -0.1cm] (x4) ;
	\draw[rounded corners=10,thick, draw opacity = .30, fill opacity = .20, text opacity = 1] (x1) -- (x2) -- (x3) -- node[xshift = -0.35cm, yshift= 0.9cm] {$\gspacen_{\FOA}$} (x4) -- cycle;
	\coordinate[above right = of v,yshift = 0.1cm] (y1) ;
	\coordinate[below right = of v,yshift = -0.1cm] (y2) ;
	\coordinate[below left = of v4,yshift = -0.1cm, xshift=-0.1cm] (y3) ;
	\coordinate[above left = of v4,yshift = 0.1cm, xshift=-0.1cm] (y4) ;
	\draw[rounded corners=10, double,thick, draw opacity = .30, fill opacity = .20, text opacity = 1] (y1) --node[xshift = 0.25cm, yshift= 0.4cm)] {$\ispacen_{\FOA}$} (y2) -- (y3) -- (y4) -- cycle;
	\end{tikzpicture}
\end{center}
\end{example}

As one might expect, given a set of pairwise compatible construction spaces, $\{\cspacen_1, \ldots, \cspacen_i,\ldots\}$, the union of any pair of the spaces, $\cspacen_i\cup \cspacen_j$, is compatible with the remaining spaces, captured in lemma~\ref{lem:pairwiseCompatibleCSformCS}; a full proof can be found in appendix~\ref{sec:app:representationalSystems} (see lemma~\ref{alem:pairwiseCompatibleCSformCS}).

\begin{lemma}\label{lem:pairwiseCompatibleCSformCS}
Let $\cspacen_{1}$, $\cspacen_2$ and $\cspacen_3$ be pairwise compatible construction spaces. Then $\cspacen_1\cup \cspacen_2$ is com\-pa\-ti\-ble with $\cspacen_3$.
\end{lemma}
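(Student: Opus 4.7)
The plan is to reduce the three-way compatibility claim to the pairwise hypotheses, treating each of the three components of a construction space separately. Writing $\cspacen_i = \cspacep{i}$ for $i\in\{1,2,3\}$, the goal is to show that $\cspacen_1\cup\cspacen_2\cup\cspacen_3$ is a construction space, i.e.\ that the component-wise unions yield a type system, a constructor specification over it, and a structure graph for that specification.

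First I would dispatch the type-system component by a direct appeal to lemma~\ref{lem:pairwiseCompatibleTTSs}: since $\tsystemn_1$, $\tsystemn_2$, $\tsystemn_3$ are pairwise compatible (as extracted from the pairwise compatibility of the $\cspacen_i$), the lemma gives that $\tsystemn_1\cup\tsystemn_2$ is compatible with $\tsystemn_3$, so $\tsystemn_1\cup\tsystemn_2\cup\tsystemn_3$ is a type system. Second, for the constructor specifications $\cspecificationn_i=(\constructors_i,\spec_i)$, the only non-trivial point is that $\spec_1\cup\spec_2\cup\spec_3$ must be a well-defined function on $\constructors_1\cup\constructors_2\cup\constructors_3$ whose signatures live in $\sequence(\types_1\cup\types_2\cup\types_3)\times(\types_1\cup\types_2\cup\types_3)$. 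Well-definedness reduces to agreement on pairwise intersections $\constructors_i\cap\constructors_j$, which holds because each pair $\cspecificationn_i\cup\cspecificationn_j$ is a constructor specification (being part of the compatible pair $\cspacen_i\cup\cspacen_j$); the signature targets live in the union of type sets by construction.

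The main obstacle is the structure-graph component: I must show that $\graphn:=\graphn_1\cup\graphn_2\cup\graphn_3$ is a graph in the sense of Section~\ref{sec:prelims} and, more importantly, a \emph{structure graph} for $\cspecificationn_1\cup\cspecificationn_2\cup\cspecificationn_3$. Bipartiteness, $(\pa_1\cup\pa_2\cup\pa_3)\cap(\pb_1\cup\pb_2\cup\pb_3)=\emptyset$, follows by distributing the intersection and using pairwise bipartiteness of each union $\graphn_i\cup\graphn_j$. Well-definedness of the incidence function $\incVert_1\cup\incVert_2\cup\incVert_3$ and of the three labelling functions is likewise inherited from pairwise compatibility, since any two of them already agree on overlaps. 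The substantive step is to check, for each configurator $u\in\pb_1\cup\pb_2\cup\pb_3$, that $\mathit{Nh}(u,\graphn)$ is a configuration of $\consl(u)$.

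To handle this, suppose without loss of generality that $u\in\pb_1$. The key observation is that a configuration has a rigidly determined number of arrows with prescribed indices at $u$, namely exactly one outgoing arrow indexed $0$ and exactly $|\inputsty{\consl(u)}|$ incoming arrows indexed $1,\ldots,n$. Since $\graphn_1$ alone is a structure graph, $\mathit{Nh}(u,\graphn_1)$ already realises this configuration. Pairwise compatibility of $\cspacen_1$ with $\cspacen_2$ (resp.\ $\cspacen_3$) forces $\mathit{Nh}(u,\graphn_1\cup\graphn_2)$ (resp.\ $\mathit{Nh}(u,\graphn_1\cup\graphn_3)$) to also be a configuration of $\consl(u)$; having the same arrow-index signature at $u$, these must coincide with $\mathit{Nh}(u,\graphn_1)$. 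Consequently no arrow of $\graphn_2$ or $\graphn_3$ is incident to $u$ unless it already lies in $\graphn_1$, so $\mathit{Nh}(u,\graphn)=\mathit{Nh}(u,\graphn_1)$, which is a configuration of $\consl(u)$. Symmetric reasoning covers the cases $u\in\pb_2$ and $u\in\pb_3$, establishing that $\graphn$ is a structure graph for the union of the constructor specifications and completing the proof.
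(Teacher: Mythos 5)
Your proposal is correct and follows essentially the same route as the paper's proof: the same three-part decomposition (type system via the pairwise-compatibility lemma for type systems, constructor specification via agreement of the $\spec_i$ on overlaps, structure graph via the labelled-bipartite-graph checks), with the same key step of showing that for a shared configurator $u$ the neighbourhoods in the individual structure graphs must coincide because they are all configurations of the same constructor with the same signature. The only cosmetic difference is that you phrase the neighbourhood argument as $\mathit{Nh}(u,\graphn_1\cup\graphn_2)=\mathit{Nh}(u,\graphn_1)$ while the paper shows $\mathit{Nh}(u,\graphn_1)=\mathit{Nh}(u,\graphn_2)$, which is the same rigidity argument.
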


\begin{proof}[Proof Sketch]
Assume that $\cspacen_i=\cspacep{i}$ is a construction space where $\tsystemn_i=\tsystemp{i}$, $\cspecificationn_i=\cspecificationp{i}$, and $\graphn_i=\graphp{i}$, for $1\leq i \leq 3$. The proof has three main parts:
\begin{enumerate}
\item[(a)] show that $\tsystemn=(\tsystemn_1\cup \tsystemn_2)\cup \tsystemn_3$ is a type system,
\item[(b)] show that $\cspecificationn=(\cspecificationn_1\cup \cspecificationn_2)\cup \cspecificationn_3$ is a constructor specification over $\tsystemn$, and
\item[(c)] show that $\graphn=(\graphn_1\cup \graphn_2)\cup \graphn_3$ is a structure graph for $\cspecificationn$.
\end{enumerate}
The first two parts are particularly straightforward to prove. Part (a) is established by lemma~\ref{lem:pairwiseCompatibleTTSs}. For part (b), the proof uses the pairwise compatibility of the construction spaces to show that $\constructors$ is disjoint from $\types$. To establish that $\spec_1\cup \spec_2\cup \spec_3$ is a function, $\spec$, the proof uses the fact that the union of any pair of $\spec_1$, $\spec_2$, and $\spec_3$ is a function, by pairwise compatibility. These two steps are sufficient to show that (b) holds. For (c), it is straightforward to show that $\graphn$, the union of $\graphn_1$, $\graphn_2$ and $\graphn_3$, is a directed labelled bipartite graph. To show that $\graphn$ is a structure graph, the task is to show that for each configurator, $u$, in $\graphn$, the neighbourhood of $u$ is identical in each of the graphs, $\graphn_1$, $\graphn_2$ and $\graphn_3$, in which $u$ appears.
\end{proof}

\subsubsection{Functionality}\label{sec:determinism}

Construction spaces are functional when, given any constructor, $c$, if the inputs are known then the output is uniquely determined. In fact, the property of being functional arises in two distinct ways relating to tokens and types.

\begin{definition}\label{defn:deterministic}
Let $\cspacen=\cspace$ be a construction space containing a constructor, $c$. Then $c$ is:
\begin{enumerate}
\item \textit{token-functional}  provided that for any pair of $c$-configurators, $u$ and $u'$, in $\graphn$, if $\inputsto{u}=\inputsto{u'}$ then $\outputsto{u}=\outputsto{u'}$.
\item \textit{type-functional} provided that for any pair of $c$-configurators, $u$ and $u'$, in $\graphn$,  if $\inputsty{u}=\inputsty{u'}$ then $\outputsty{u}=\outputsty{u'}$.
\end{enumerate}
The construction space $\cspacen$ is \textit{token-functional} (resp. \textit{type-functional}) provided that each constructor in $\cspacen$ is \textit{token-functional} (resp. \textit{type-functional}). In addition, $\cspacen$ is \textit{functional} if it is both token- and type-functional.
\end{definition}

\begin{example}[Functional Constructors in a Grammatical Space]
The constructors specified in example~\ref{ex:grammaticalConstructors} for the grammatical space, $\gspacen_\FOA$, of $\rsystemn_{\FOA}$ are type-functional. Suppose that we have an $\FOAcinfixop$-configurator, $u$, in $\gspacen_{\FOA}$. Further suppose that $u$ has input token-sequence $[5,-,21]$. Then the input type-sequence of $u$ is $[\texttt{5},\FOAminus,\texttt{21}]$ and the uniquely constructed output token, $5-21$, has type $\texttt{5}\_\FOAminus\_\texttt{21}$. For type-functional, given any pair of $\FOAcinfixop$-configurators, $u'$ and $u''$, if $\inputsty{u'}=\inputsty{u''}=[\tau_1,\tau_2,\tau_3]$ then the output type of both $u'$ and $u''$ is $\tau_1\_\tau_2\_\tau_3$. Hence, $\FOAcinfixop$ is type-functional. As well as $\gspacen_{\FOA}$ being type-functional, we also require it to be token-functional.
\end{example}

\begin{example}[Non-Functional Constructors in an Entailment Space]
A constructor that forms part of $\rsystemn_{\FOA}$'s entailment layer, $\espacen_{\FOA}$, is $\FOAexplosion$. This constructor is used for inferring any formula from an unsatisfiable formula (principle of explosion). The following graphs visualise the signature of $\FOAexplosion$ and two configurations with the same input token-sequence:
\begin{center}
\begin{minipage}[t]{0.26\textwidth}\centering
\textit{signature of} $\FOAexplosion$:\\[0.5ex]
\begin{tikzpicture}[construction]
\node[typeS={$\FOAform$}] (v) at (4.6,3.2) {};
\node[constructorEN={$\FOAexplosion$}] (u) at (3.8,3.2) {};
\node[typeS={$\FOAform$}] (v2) at (2.2,3.2) {};
\path[->]
(u) edge[bend right = 0] (v)
(v2) edge[bend right = 0] node[index label] {1} (u);
\end{tikzpicture}
\end{minipage}
\hspace{0.5cm}
\begin{minipage}[t]{0.31\textwidth}\centering
\textit{configuration of} $\FOAexplosion$:\\[0.5ex]
\begin{tikzpicture}[construction]
\node[termrep] (v) at (4.8,3.2) {\scriptsize$x>x$};
\node[constructorEN={$\FOAexplosion$}] (u) at (3.8,3.2) {};
\node[termrep] (v2) at (1.8,3.2) {\scriptsize$\forall x \thinspace x-1> x$};
\path[->]
(u) edge[bend right = 0] (v)
(v2) edge[bend right = 0] node[index label] {1} (u);
\end{tikzpicture}
\end{minipage}
\hspace{0.5cm}
\begin{minipage}[t]{0.31\textwidth}\centering
\textit{configuration of} $\FOAexplosion$:\\[0.5ex]
\begin{tikzpicture}[construction]
\node[termrep] (v) at (4.8,3.2) {\scriptsize$7=8$};
\node[constructorEN={$\FOAexplosion$}] (u) at (3.8,3.2) {};
\node[termrep] (v2) at (1.8,3.2) {\scriptsize$\forall x \thinspace x-1> x$};
\path[->]
(u) edge[bend right = 0] (v)
(v2) edge[bend right = 0] node[index label] {1} (u);
\end{tikzpicture}
\end{minipage}
\end{center}
The input token-sequence $[\forall x \thinspace x-1> x]$ syntactically entails multiple tokens via the $\FOAexplosion$ constructor: $\espacen_{\FOA}$ is non-functional.
\end{example}

\begin{samepage}
The structure graphs below exemplify that token-functional does not imply type-functional (left) and that type-functional does not imply token-functional (right); the vertex labels illustrate the types assigned to the tokens and the constructors assigned to the configurators.
\begin{center}
\begin{minipage}[t]{0.35\textwidth}\centering
% for rep not token
\textit{token-functional only:}\\[1ex]
\begin{tikzpicture}[construction]
\node[termW={$\tau$}] (v) at (3.2,4.1) {\scriptsize$t$};
\node[termW={$\tau_1$}] (v1) at (2.2,2.2) {\scriptsize$t_1$};
\node[termE={$\tau_1$}] (v3) at (6.2,2.2) {\scriptsize$t_3$};
\node[constructorNW={$c$}] (u) at (3.2,3.2) {\scriptsize$u_1$};
\node[constructorNE={$c$}] (u') at (5.2,3.2) {\scriptsize$u_2$};
\node[termE={$\tau'$}] (v') at (5.2,4.1) {\scriptsize$t'$};
\node[termS={$\tau_2$}] (v2) at (4.2,2.2) {\scriptsize$t_2$};
\path[->]
(u) edge[bend right = 0] (v)
(u') edge[bend right = 0] (v')
(v1) edge[bend right = -10] node[index label] {1} (u)
(v2) edge[bend right = 10] node[index label] {2} (u)
(v2) edge[bend right = -10] node[index label] {2} (u')
(v3) edge[bend right = 10] node[index label] {1} (u');
\end{tikzpicture}
\end{minipage}
\hspace{2cm}
\begin{minipage}[t]{0.35\textwidth}\centering
% for token not rep
\textit{type-functional only:}\\[1ex]
\begin{tikzpicture}[construction]
\node[termW={$\tau$}] (v) at (3.2,4.1) {\scriptsize$t$};
\node[termW={$\tau_1$}] (v1) at (3.2,2.2) {\scriptsize$t_1$};
\node[termE={$\tau_2$}] (v2) at (5.2,2.2) {\scriptsize$t_2$};
\node[constructorNW={$c'$}] (u) at (3.2,3.2) {\scriptsize$u_3$};
\node[constructorNE={$c'$}] (u') at (5.2,3.2) {\scriptsize$u_4$};
\node[termE={$\tau$}] (v') at (5.2,4.1) {\scriptsize$t'$};
\node[] () at  (5.2,1.72) {};%for space
\path[->]
(u) edge[bend right = 0] (v)
(v1) edge[bend right = 0 ] node[index label] {1} (u)
(v2) edge[bend right = 15] node[index label] {2} (u)
(v1) edge[bend right = 15] node[index label] {1} (u')
(v2) edge[bend right = 0] node[index label] {2} (u')
(u') edge[bend right = 0] (v');
\end{tikzpicture}
\end{minipage}
\end{center}
\end{samepage}
On the left, $\inputsto{u_1} =[t_1,t_2]\neq [t_3,t_2] = \inputsto{u_2}$, so $c$ is token-functional; the antecedent in condition (1) of definition~\ref{defn:deterministic} is false. However,
$\inputsty{u_1} = [\tau_1,\tau_2]= \inputsty{u_2}$ and $\outputsty{u_1} = \tau \neq \tau' =\outputsty{u_1}$,
 so $c$ is not type-functional. Similarly, on the right, $\inputsto{u_3} = \inputsto{u_4}$ and $\outputsto{u_3} \neq \outputsto{u_4}$, so $c'$ is not token-functional. Yet, $c'$ is type-functional since $\outputsty{u_3} = \outputsty{u_4}$.

\subsubsection{Totality}\label{sec:totality}

As indicated above, the property of totality requires that for any valid input sequence, $\T$, to a constructor, $c$, there exists a $c$-configuration whose input sequence `conforms' to $\T$. As with functional constructors, $\T$ could be a sequence of terms or a sequence of types.

\begin{example}[Totality in a Grammatical Space]
The grammatical space, $\gspacen_\FOA$, of $\rsystemn_{\FOA}$ has a particularly strong property in that its structure graph is \textit{type-total}: for each constructor, $c$, given any type-sequence, $\T$, that specialises $\inputsty{c}$, there is a $c$-configurator whose input token-sequence \textit{instantiates} $\T$. For instance, consider $\FOAcinfixop$'s input type-sequence, $\inputsty{\FOAcinfixop}=[\FOAnumexp,\FOAbop,\FOAnumexp]$. For any type-sequence, $\T = [\tau_1,\tau_2,\tau_3]$, that specialises $\inputsty{\FOAcinfixop}$ \textit{there exists some $\FOAcinfixop$-configurator} with an input token-sequence that \textit{instantiates} $\T$. For example, the type-sequence
\begin{displaymath}
[\FOAopenb\_\texttt{1}\_\FOAminus\_ \texttt{91}\_\FOAcloseb,\FOAplus,\FOAopenb\_\texttt{6}\_\FOAminus\_\texttt{12}\_\FOAcloseb]
\end{displaymath}
specialises $\inputsty{\FOAcinfixop}$ and $\gspacen_{\FOA}$ contains an $\FOAcinfixop$-configurator  with input token-sequence
\begin{displaymath}
[(1-91),+,(6-12)]
\end{displaymath} that constructs the token $(1-91)+(6-12)$. We call this kind of totality \textit{type-total}.
\end{example}

\begin{definition}\label{defn:instantiated}
Let $\cspacen=\cspace$ be a construction space where $\graphn=\graph$. A type, $\tau$, in $\cspacen$ is \textit{instantiated} provided there exists a token, $t$, in $\pa$ such that $\tokenl(t)\leq \tau$. The token $t$ \textit{instantiates} $\tau$. If $\tokenl(t)=\tau$ then the instantiation is \textit{direct}.  A sequence, $\T=[\tau_1,\ldots,\tau_n]$, over $\types$ is \textit{instantiated} provided each $\tau_i$ is instantiated.  A sequence of tokens, $[t_1,\ldots,t_n]$, (\textit{directly}) \textit{instantiates} $[\tau_1,\ldots,\tau_n]$ provided each $t_i$ (directly) instantiates $\tau_i$.
\end{definition}

\begin{definition}\label{defn:total}
Let $\cspacen=\cspace$ be a construction space containing a constructor, $c$, with $\inputsty{c}=[\tau_1,\ldots,\tau_n]$. Then $c$ is:
\begin{enumerate}
\item \textit{token-total} provided, for any instantiation, $[t_1,\ldots,t_n]$, of $\inputsty{c}$ there exists a $c$-configurator, $u$, in $\graphn$  where $\inputsto{u}=[t_1,\ldots,t_n]$.
\item \textit{type-total} provided, for any specialisation, $[\tau_1',\ldots,\tau_n']$, of $\inputsty{c}$, there exists a $c$-configurator, $u$, in $\graphn$ where $\inputsto{u}$ instantiates $[\tau_1',\ldots,\tau_n']$.
\end{enumerate}
The construction space $\cspacen$ is \textit{token-total} (resp. \textit{type-total}) provided each constructor in $\cspacen$ is token-total (resp. type-total). In addition, $\cspacen$ is \textit{total} if it is both token- and type-total.
\end{definition}

The property of totality will be useful in Section~\ref{sec:structuralTransformations}: roughly speaking, we will seek to assign an input token-sequence to a constructor (strictly, a \textit{pattern}) that instantiates its input-type sequence.

\begin{example}[Non-Totality in an Entailment Space]
A constructor that forms part of $\rsystemn_{\FOA}$'s entailment layer, $\espacen_{\FOA}$, is $\FOAceqtran$, for inferring  equality relations using transitivity. The signature of $\FOAceqtran$ is shown below (left), alongside a configuration (middle) and a non-valid configuration (right):
\begin{center}
\begin{minipage}[t]{0.24\textwidth}\centering
\textit{signature of} $\FOAceqtran$:\\[0.5ex]
\begin{tikzpicture}[construction,yscale=0.9]
\node[typeE={$\FOAnumexp$}] (v) at (3.2,4.1) {};
\node[constructorENW={$\FOAceqtran$}] (u) at (3.2,3.2) {};
\node[typeS={$\FOAnumexp$}] (v2) at (2.6,2.4) {};
\node[typeS={$\FOAnumexp$}] (v4) at (3.8,2.4) {};
\path[->]
(u) edge[bend right = 0] (v)
(v2) edge[bend right = -10] node[index label] {1} (u)
(v4) edge[bend right = 10] node[index label] {2} (u);
\end{tikzpicture}
\end{minipage}
\hspace{0.5cm}
\begin{minipage}[t]{0.27\textwidth}\centering
\textit{configuration of} $\FOAceqtran$:\\[0.5ex]
\begin{tikzpicture}[construction,yscale=0.9]\small
\node[termrep] (v) at (3.2,4.1) {\scriptsize$1+1=5-3$};
\node[constructorENW={$\FOAceqtran$}] (u) at (3.2,3.2) {};
\node[termrep] (v2) at (2.4,2.2) {\scriptsize$1+1=2$};
\node[termrep] (v4) at (4,2.2) {\scriptsize$2=5-3$};
\path[->]
(u) edge[bend right = 0] (v)
(v2) edge[bend right = -10] node[index label] {1} (u)
(v4) edge[bend right = 10] node[index label] {2} (u);
\end{tikzpicture}
\end{minipage}
\hspace{0.5cm}
\begin{minipage}[t]{0.38\textwidth}\centering
\textit{non-valid configuration of} $\FOAceqtran$:\\[0.5ex]
\begin{tikzpicture}[construction,yscale=0.9]\small
\node[] (v) at (3.2,4.1) {\scriptsize{none}};
\node[constructorENW={$\FOAceqtran$}] (u) at (3.2,3.2) {};
\node[termrep] (v2) at (2.6,2.2) {\scriptsize$2>1$};
\node[termrep] (v4) at (3.8,2.2) {\scriptsize$1=x$};
\path[->]
(u) edge[bend right = 0] (v)
(v2) edge[bend right = -10] node[index label] {1} (u)
(v4) edge[bend right = 10] node[index label] {2} (u);
\end{tikzpicture}
\end{minipage}
\end{center}
In the non-valid configuration (no subgraph of $\espacen_{\FOA}$'s structure graph has that shape), the transitivity of equality cannot be used to construct an entailed token because $2>1$ does not use the $=$ relation. Thus $\espacen_{\FOA}$ is not type-total: $\T=[\texttt{2}\_\FOAgr\_\texttt{1}, \texttt{1}\_\FOAeq\_\texttt{x}]$ is a specialisation of $\inputsty{\FOAceqtran}=[\FOAnumexp,\FOAnumexp]$, which can be directly instantiated by $\T'=[2>1, 1=x]$, yet there is no $\FOAceqtran$-configuration with $\T'$ as its input token-sequence.
\end{example}

\subsubsection{Type Extendability}\label{sec:extendability}

Section~\ref{sec:interRepSystems} demonstrates how to encode semantics and other properties of representational systems. For this purpose, we may need access to new \textit{inter-system} constructors and new \textit{intra-system} types. Suppose that we enlarge the type system, $\tsystemn=\tsystem$, over which a construction space, $\cspacen$, is formed, by simply adding a fresh type (i.e. a type that is not in $\types$) and updating $\leq$ so that it remains a partial order, yielding $\tsystemn'$. We can form a new construction space from $\cspacen=\cspace$ by substituting $\tsystemn$ with $\tsystemn'$, yielding $\cspacen'=(\tsystemn', \cspecificationn, \graphn)$. Our focus is on enlarging the set of types to provide \textit{minimal upper bounds} over the subtype ordering. We do this in such a way that for a set of types, $\types'\subseteq \types$, there exists a minimal upper bound, $\tau$, where the set $\pa'=\{t\in \pa\colon \exists \tau'\in \types' \thinspace \tokenl(t)\leq \tau'\}$ can be re-expressed using $\tau$: $\pa'=\{t\in \pa \colon \tokenl(t)\leq \tau\}$. When defining the semantics of an entire representational system, we require access to such an upper bound of $\types$ itself\footnote{We note that this is primarily a matter of convenience and elegance: including a type, $\tau$, that is an upper bound of $\types$ enables us to exploit a \textit{unique} constructor, called \constructorRepresents\ in the examples in Section~\ref{sec:interRepSystems}, when defining semantics. Else, we would need to use one constructor, $\constructorRepresents_{\texttt{i}}$, for each subset, $T_i$, of $\types$ with an upper bound. This would lead to inelegant encodings and could be suggestive of different roles being played by the $\constructorRepresents_{\texttt{i}}$ constructors. In addition, this larger set of constructors would, in turn, increase the number of patterns needed to describe our representational systems (Section~\ref{sec:patterns}) and the number of relationships that need to be defined between patterns in the context of structural transformations (Section~\ref{sec:structuralTransformations}).}. The auxiliary definition~\ref{defn:subtypeClosure} leads to definition~\ref{defn:intertionofLUB}, which captures extensions with minimal upper bounds, and lemma~\ref{lem:upperBoundExtensionExists} establishes that such extensions exist.

\begin{definition}\label{defn:subtypeClosure}
Let $\tsystemn=\tsystem$ be a type system, and let $\types'$ be a set of types such that $\types' \subseteq \types$. The \textit{subtype closure of $\types'$ in $\types$}, denoted $\subtypeClose(\types', \tsystemn)$, is the set $\{\tau \in \types : \exists \tau' \in \types' \thinspace \tau \leq \tau'\}$. The set of all subtype closures formed over subsets of $\types$ given $\tsystemn$ is denoted $\subtypeSets(\tsystemn)$.
\end{definition}

\begin{definition}\label{defn:intertionofLUB}
An \textit{upper bound extension} of $\tsystemn=\tsystem$
is a type system $\tsystemn'=(\types\cup \types',\leq')$, where $\types'$ is a set of fresh types such that
\begin{enumerate}
\item there exists a bijection, $f\colon \subtypeSets(\tsystemn)\to \types'$ and
\item the partial order $\leq'$ is the reflexive closure of
\begin{displaymath}
\leq \cup \bigcup\limits_{\types''\in \mathit{SC}(\tsystemn)} \{(\tau,f(\types'')) : \tau\in \types''\}.
\end{displaymath}
\end{enumerate}
Given a set, $\types''\in \subtypeSets(\tsystemn)$, the type $f(\types'')$ is called a \textit{defining} upper bound for $\types''$.
\end{definition}

There may be many defining upper bounds for $\types''$, so $\tau''$ need not be a \textit{least} upper bound.

\begin{lemma}\label{lem:upperBoundExtensionExists}
Every type system, $\tsystemn=\tsystem$, has an upper bound extension.
\end{lemma}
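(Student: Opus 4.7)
The plan is to construct $\tsystemn'$ explicitly and verify that the prescribed relation $\leq'$ is genuinely a partial order on $\types\cup\types'$. The existence portion is essentially set-theoretic bookkeeping: because $\subtypeSets(\tsystemn)$ is a set, we may choose any set $\types'$ of fresh types (i.e.\ disjoint from $\types$) in bijection with it, fix such a bijection $f$, and define $\leq'$ as the reflexive closure of $\leq\,\cup\,\bigcup_{\types''\in\subtypeSets(\tsystemn)}\{(\tau,f(\types'')):\tau\in\types''\}$. Reflexivity of $\leq'$ then holds by construction, so the real work is checking antisymmetry and transitivity.

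For the verification I would first record a key structural observation that will drive every case split: each element of $\types'$ has the form $f(\types'')$ for some subtype-closed $\types''\subseteq\types$, and it appears in the generating set of pairs \emph{only as a second coordinate}. Consequently, for $\sigma\in\types'$ and $\rho\neq\sigma$, the relation $\sigma\leq'\rho$ can never hold. A second useful fact is that every subtype closure is downward closed under $\leq$: if $\sigma\leq\tau$ in $\tsystemn$ and $\tau\in\types''$, then $\sigma\in\types''$ by definition of $\subtypeClose$.

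Antisymmetry is then immediate from the observation above: if $\tau_1\leq'\tau_2$ and $\tau_2\leq'\tau_1$ with $\tau_1\neq\tau_2$, at most one of them can lie in $\types'$ (since any $\sigma\in\types'$ only satisfies $\sigma\leq'\rho$ when $\rho=\sigma$), and if both lie in $\types$ we invoke antisymmetry of $\leq$; the mixed case is ruled out because $\types\cap\types'=\emptyset$. The transitivity argument is the main technical obstacle and splits according to where $\tau_3$ lies in a chain $\tau_1\leq'\tau_2\leq'\tau_3$. If $\tau_3\in\types$, then the observation forces $\tau_2,\tau_1\in\types$ as well, and transitivity of $\leq$ closes the case. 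If $\tau_3=f(\types'')\in\types'$, then either $\tau_2=\tau_3$, reducing to $\tau_1\leq'\tau_3$ trivially, or $\tau_2\in\types''$; in the latter situation the same dichotomy on $\tau_1\leq'\tau_2$ gives either $\tau_1=\tau_2\in\types''$, hence $\tau_1\leq'f(\types'')$, or $\tau_1\leq\tau_2$ in $\tsystemn$, and downward closedness of $\types''$ yields $\tau_1\in\types''$, again giving $\tau_1\leq'f(\types'')$.

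I expect the only subtle point to be keeping the case analysis disciplined and remembering to appeal to downward closedness of subtype closures; once that lemma is stated cleanly at the outset, transitivity collapses to four short subcases and the result follows. No nontrivial combinatorial argument is required beyond the choice of $\types'$ and $f$, both of which are justified by standard set-theoretic existence principles.
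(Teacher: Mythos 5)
Your proposal is correct and follows essentially the same route as the paper's proof: the same explicit construction, antisymmetry via the observation that fresh types occur only as upper bounds (second coordinates), and transitivity via the downward closedness of subtype closures under $\leq$. Your write-up is, if anything, slightly more systematic in isolating those two structural facts up front before the case analysis.
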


The full proof of lemma~\ref{lem:upperBoundExtensionExists} can be found in appendix~\ref{sec:app:representationalSystems} (see lemma~\ref{alem:leastUpperBoundExtensionExists}). Lemma~\ref{lem:enlargeTypes}, whose proof is trivial and thus omitted, establishes that we can replace the set of types with an upper bound extension.

\begin{lemma}\label{lem:enlargeTypes}
Let $\cspacen=\cspace$ be a construction space. Let $\tsystemn'$ be an upper bound extension of $\tsystemn$. Then $(\tsystemn', \cspecificationn, \graphn)$ is a construction space.
\end{lemma}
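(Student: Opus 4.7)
The plan is to verify, in turn, each of the three clauses of definition~\ref{defn:constructionSpace} for the triple $(\tsystemn', \cspecificationn, \graphn)$. Clause~(1) is immediate: by hypothesis $\tsystemn'$ is an upper bound extension of $\tsystemn$, and definition~\ref{defn:intertionofLUB} guarantees that any such extension is itself a type system.

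For clause~(2), I would show that $\cspecificationn = (\constructors, \spec)$ is a constructor specification over $\tsystemn' = (\types \cup \types', \leq')$. Two things need checking. First, $\constructors$ must be disjoint from $\types \cup \types'$: it is already disjoint from $\types$ because $\cspecificationn$ is a constructor specification over $\tsystemn$, and it is disjoint from $\types'$ because, by definition~\ref{defn:intertionofLUB}, $\types'$ consists of fresh types (so, in particular, none of them lie in $\constructors$). Second, $\spec$ must still return signatures of the required form; but the codomain of $\spec$ was $\sequence(\types) \times \types \subseteq \sequence(\types \cup \types') \times (\types \cup \types')$, so the same function witnesses the required typing over the enlarged type system.

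For clause~(3), I would argue that $\graphn = \graph$ is still a structure graph for $\cspecificationn$ after replacing $\tsystemn$ by $\tsystemn'$. The graph-theoretic data ($\pa, \pb, \arrows, \incVert, \arrowl, \tokenl, \consl$) is unchanged, so I only need to check that, for every configurator $u \in \pb$, the neighbourhood $\neigh{u}$ remains a configuration of $\consl(u)$ in the sense of definition~\ref{defn:configuration}. The only clauses of definition~\ref{defn:configuration} sensitive to the type system are the subtyping conditions on $\tokenl$: $\tokenl(\tar{a_0}) \leq \tau$ and $[\tokenl(\sor{a_1}),\ldots,\tokenl(\sor{a_n})]$ being a specialisation of $[\tau_1,\ldots,\tau_n]$. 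By definition~\ref{defn:intertionofLUB}, $\leq'$ is a superset of $\leq$ (it is the reflexive closure of $\leq$ together with additional pairs), so every inequality that held under $\leq$ still holds under $\leq'$. Hence each $\neigh{u}$ is still a configuration with respect to $\tsystemn'$, and $\graphn$ is a structure graph for $\cspecificationn$ over $\tsystemn'$.

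Since each of the three clauses goes through with essentially no work, there is no genuine obstacle; the result is a matter of checking that enlarging the type system by adding fresh types together with new comparabilities above existing types is a conservative move that preserves all the constraints imposed by definitions~\ref{defn:constructionSpecification},~\ref{defn:configuration}, and~\ref{defn:structureGraph}. This is precisely why the authors remark that the proof is trivial.
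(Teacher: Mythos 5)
Your verification is correct and is exactly the routine check the paper has in mind when it omits the proof as trivial: the key observations are that $\types'$ consists of fresh types (preserving disjointness from $\constructors$) and that $\leq\;\subseteq\;\leq'$ (so every subtyping constraint in definition~\ref{defn:configuration} is preserved). Nothing is missing.
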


As a consequence of lemma~\ref{lem:enlargeTypes}, we can assume (without loss of generality), that if we need a type that is a defining upper bound of a subset, $\types'$, of the set of types, $\types$, then it is available within the given type system.

\subsection{Formalizing Representational Systems}\label{sec:formalizingRSs}

In a representational system, the grammatical, entailment, and identification spaces are required to be compatible. With regards to the compatibility of their respective type systems, this is assured  for $\gspacen$ and $\espacen$: they will be formed over the same system, $\tsystemn=\tsystem$. However, $\ispacen$ augments $\tsystemn$ with meta-types, used to identify meta-level properties of tokens. We introduce the concepts of an \textit{identification space} and a \textit{meta-type system} in definition~\ref{defn:identificationSpace}.

\begin{definition}\label{defn:identificationSpace}
Given a type system, $\tsystemn=\tsystem$, an \textit{identification space} is a construction space, $\ispacen=\ispace$, such that $\tsystemn=\tsystem$ and $\mpsystemn=\mpsystem$ are compatible type systems. The type system $\mpsystemn$ is the \textit{meta-type system} of $\ispacen$. We say that $\ispacen$ is \textit{formed over} $\tsystemn$ and $\mpsystemn$.
\end{definition}

\begin{example}[Encoding Properties]\label{ex:rs:encodingProperties}
In $\rsystemn_{\FOA}$, a meta-type system, $\mpsystemn$, may include the type $\texttt{nat}$ and an identification space may include all natural numbers, of type $\FOAnat$, amongst its tokens. Given an identification space constructor $\FOAccount$, with signature $\spec(\FOAccount)=([\FOAform],\FOAnat)$, an identification space can encode how many primitive tokens appear in each formula. The graph below indicates that $\forall x\thinspace x> x-1$ is formed of seven primitives:
\begin{center}
\begin{tikzpicture}[construction]
\begin{scope}[rotate = -90]
\node[termrep] (v4) at (3.2,3.8) {\scriptsize$\forall x \thinspace x> x-1$};
\node[constructorIN={\FOAccount}] (u) at (3.2,5.8000000000000005) {};
\node[termIrep] (v) at (3.2,6.8) {\scriptsize$7$};
\end{scope}
\path[->]
(u) edge[bend right = 0] (v)
(v4) edge[bend right = 0] node[index label] {1} (u);
\end{tikzpicture}
\end{center}
\end{example}

Our formalisation of a representational system will require that tokens (not meta-tokens) used in the entailment space, $\espacen$, and identification space, $\ispacen$, are constructed in the grammatical space, $\gspacen$; there is a subtle distinction in how this is prescribed in the two spaces, as we shall see in definition~\ref{defn:representationalSystem}. Intuitively, $\espacen$ should only encode entailments using tokens that are constructed in $\gspacen$. Similarly, the identification space, $\ispacen$, should only encode properties of tokens that are constructed in $\gspacen$. Example~\ref{ex:relationshopBetweenCSpaces} presents an identification space for a representational system that naturally shares the symbol $\bot$ with the underlying grammatical space. In this case, we do not wish all instances of $\bot$ to occur in the grammatical space: when $\bot$ is used as meta-token, its occurrence in $\gspacen$ is not required.

\begin{example}[Relationships Between Construction Spaces]\label{ex:relationshopBetweenCSpaces}
In \textsc{Description Logic}~\cite{baader:tdlh}, there are tokens of the form $\bot$, used to denote an empty class, which is also a useful symbol to include in an identification space, $\ispacen_{\mathit{DL}}$. These graphs show one way of \textit{building} the token $C\sqcap D\sqsubseteq \bot$ in the grammatical space, $\gspacen_{\mathit{DL}}$, and \textit{identify} that the expression $C\sqcap D$ is not coherent, given $C\sqcap D\sqsubseteq \bot$, exploiting $\ispacen_{\mathit{DL}}$:
\begin{center}
	\begin{tikzpicture}[construction,node distance = 0.3cm and 0.1cm]
	\begin{scope}[rotate = -90]
	\node[termrep] (v4) at (3.2,4.2) {\scriptsize$C\sqcap D\sqsubseteq \bot$};
	\node[termrep] (v1) at (2.2,3.2) {\scriptsize$C\sqcap D$};
	\node[constructorGpos={$\texttt{subsumes}$}{-60}{0.15cm}] (u1) at (3.2,2.4) {};
	\node[termrep] (v3) at (3.5,1.4) {\scriptsize$\bot$};
	\node[constructorIpos={\texttt{isCoherent}}{45}{0.2cm}] (u) at (3.2,6.8) {};
	\node[termIrep] (v) at (3.2,8.4) {\scriptsize$\bot$};
	\node[termrep] (v2) at (2.7,1.4) {\scriptsize$\sqsubseteq$};
	\end{scope}
	\path[->]
	(u1) edge[bend right = 0] (v4)
	(v1) edge[bend right = 10] node[index label] {1} (u1)
	(v3) edge[bend right = 10] node[index label] {3} (u1)
	(u) edge[bend right = 0] (v)
	(v4) edge[bend right = 0] node[index label] {1} (u)
	(v1) edge[bend right = -10] node[index label] {2} (u)
	(v2) edge[bend right = -10] node[index label] {2} (u1);
	\coordinate[above right = of v4, yshift = -0.1cm,xshift=0.1cm] (x1) ;
	\coordinate[below right = of v4, yshift = 0.1cm,xshift=0.1cm] (x2) ;
	\coordinate[below left = of v3,yshift = 0.1cm,xshift=-0.1cm] (x3) ;
	\coordinate[above left = of v2,yshift = -0.2cm,xshift=-0.1cm] (x4) ;
	\coordinate[above left = of v1,yshift = -0.1cm,xshift=0.0cm] (x5) ;
	\coordinate[above right = of v1,yshift = -0.2cm,xshift=0.0cm] (x6) ;
	\draw[rounded corners=10,thick, draw opacity = .25, fill opacity = .15, text opacity = 1] (x1) -- (x2) -- (x3) -- (x4) -- node[xshift = -0.3cm, yshift= 0.2cm)] {$\gspacen_\mathit{DL}$} (x5) -- (x6) -- cycle;
	\coordinate[above right = of v,yshift = 0.1cm,xshift=0.2cm] (y1) ;
	\coordinate[below right = of v,yshift = -0.0cm,xshift=0.2cm] (y2) ;
	\coordinate[below left = of v4,yshift = -0.0cm,xshift=0.1cm] (y3) ;
	\coordinate[left = of v1,yshift = 0.0cm,xshift=-0.2cm] (y4) ;
	\coordinate[above  = of v1,yshift = 0.0cm,xshift=-0.2cm] (y5) ;
	\draw[rounded corners=10, double,thick, draw opacity = .25, fill opacity = .15, text opacity = 1] (y1) -- (y2) -- (y3) -- (y4) -- (y5) --node[xshift = 0.8cm, yshift= 0.15cm)] {$\ispacen_{\mathit{DL}}$} cycle;
	\end{tikzpicture}
\end{center}
It is necessary that the structure graphs for $\gspacen_{\mathit{DL}}$ and $\ispacen_{\mathit{DL}}$ share the tokens $C\sqcap D$ and $C\sqcap D\sqsubseteq \bot$. However, the symbol $\bot$ is represented as one token in $\gspacen_{\mathit{DL}}$ and a distinct token in $\ispacen_{\mathit{DL}}$. Here, the instance of $\bot$ in $\ispacen_{\mathit{DL}}$ is a meta-token since it does not appear in $\gspacen_{\mathit{DL}}$. It is \textit{precisely the meta-tokens} in $\ispacen_{\mathit{DL}}$ that do not occur in $\gspacen_{\mathit{DL}}$.
\end{example}

We are now in a position to define a representational system which can be used to abstractly encode the grammatical rules to which representations must adhere, entailment relations between representations, and meta-level properties of representations.

{\pagebreak}
\begin{definition}\label{defn:representationalSystem}
A \textit{representational system} is a triple, $\rsystemn= \rsystem$, \textit{formed over} type system, $\tsystemn$, and meta-type system, $\mpsystemn$, \begin{samepage}where:
	\begin{enumerate}
		\item $\gspacen=(\tsystemn,\cspecificationn_{\gspacen},\graphn_{\gspacen})$ is a functional construction space,
		\item $\espacen=(\tsystemn, \cspecificationn_{\espacen},\graphn_{\espacen})$ is a construction space such that every token in $\espacen$ is also a token in $\gspacen$,
		\item $\ispacen=(\tsystemn\cup \mpsystemn, \cspecificationn_{\ispacen},\graphn_{\ispacen})$ is an identification space formed over $\tsystemn$ and $\mpsystemn$ %given $\pb_{\mpsystemn}$
such that for every token, $t$, in $\ispacen$:
                \begin{enumerate}
                \item if $t$ is not in $\gspacen$ then the label of $t$ is a meta-type in $\mpsystemn$, and
                \item if $t$ is the output of a configurator in $\graphn_{\ispacen}$ then $t$ is not in $\gspacen$, and
                \end{enumerate}
        \item $\gspacen$, $\espacen$ and $\ispacen$ are pairwise compatible and their sets of constructors are pairwise disjoint.
	\end{enumerate}\end{samepage}
The spaces $\gspacen$, $\espacen$ and $\ispacen$ are called the \textit{grammatical}, \textit{entailment} and \textit{identification} spaces of $\rsystemn$. We denote the components of the structure graphs $\graphn_\gspacen$, $\graphn_\espacen$, and $\graphn_\ispacen$ by
\begin{enumerate}
\item $\graphn_{\gspacen}=\graphp{\gspacen}$,
\item $\graphn_{\espacen}=\graphp{\espacen}$, and
\item $\graphn_{\ispacen}=\graphp{\ispacen}$, respectively.
\end{enumerate}
The \textit{meta-tokens} in $\rsystemn$ are the elements of $\pa_{\ispacen}\backslash \pa_{\gspacen}$.
We say that $\tsystemn$ and $\mpsystemn$ are, respectively, the type system and meta-type system of $\rsystemn$.
\end{definition}

We now establish that the union of the grammatical, entailment and identification spaces is itself a construction space, with theorem~\ref{thm:superSpace} following immediately from lemma~\ref{lem:pairwiseCompatibleCSformCS}.

\begin{theorem}\label{thm:superSpace}
Let  $\rsystemn= \rsystem$, be a representational system. Then $\gspacen\cup \espacen\cup \ispacen$ is a construction space.
\end{theorem}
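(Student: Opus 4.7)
The plan is to invoke Lemma~\ref{lem:pairwiseCompatibleCSformCS} directly, since the hypothesis of that lemma is built into the definition of a representational system. Specifically, condition (4) of Definition~\ref{defn:representationalSystem} asserts that $\gspacen$, $\espacen$ and $\ispacen$ are pairwise compatible construction spaces. So I would first cite this condition to obtain pairwise compatibility, then apply Lemma~\ref{lem:pairwiseCompatibleCSformCS} to conclude that $\gspacen \cup \espacen$ is compatible with $\ispacen$.

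From there, I would unpack the definition of compatibility (Definition~\ref{defn:compatibleCS}): two construction spaces are compatible precisely when their union is itself a construction space. Applied to $\gspacen \cup \espacen$ and $\ispacen$, this yields that $(\gspacen \cup \espacen) \cup \ispacen$ is a construction space. Since union is associative on the underlying tuple-of-sets structure (by the componentwise convention for $\cup$ introduced in Section~\ref{sec:prelims}), this equals $\gspacen \cup \espacen \cup \ispacen$, which is the desired conclusion.

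There is essentially no obstacle here: the work has all been done in Lemma~\ref{lem:pairwiseCompatibleCSformCS} (and, behind it, in Lemma~\ref{lem:pairwiseCompatibleTTSs} for the type-system layer). The only thing to be a little careful about is the implicit associativity of the union operation on construction spaces, but this is immediate since $\cspacen_1\cup \cspacen_2$ is defined componentwise on the triples $(\tsystemn_i,\cspecificationn_i,\graphn_i)$, and componentwise set union is associative. Hence the proof is a one-line appeal to the lemma plus the definition of compatibility, which is why the excerpt states that the theorem follows immediately.
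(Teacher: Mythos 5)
Your proposal is correct and matches the paper's own argument exactly: the paper states that the theorem follows immediately from Lemma~\ref{lem:pairwiseCompatibleCSformCS}, using the pairwise compatibility guaranteed by condition (4) of Definition~\ref{defn:representationalSystem} and the definition of compatibility of construction spaces. Your additional remark about associativity of the componentwise union is a reasonable (if minor) point of care that the paper leaves implicit.
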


\begin{definition}\label{defn:universalSpace}
Let  $\rsystemn= \rsystem$, be a representational system. The \textit{universal space} of $\rsystemn$, denoted $\uspace{\rsystemn}$, is defined to be the construction space $\gspacen\cup \espacen\cup \ispacen$. The structure graph of $\uspace{\rsystemn}$, which we denote by $\ugraph{\rsystemn}$, is called the \textit{universal structure graph} for $\rsystemn$.
\end{definition}

Since theorem~\ref{thm:superSpace} tells us that we can form the union the spaces of which a representational system comprises to form another construction space, we specifically know that $\ugraph{\rsystemn}=\graphn_{\gspacen}\cup \graphn_{\espacen}\cup \graphn_{\ispacen}$ is a structure graph. Moreover, since the sets of constructors of $\gspacen$, $\espacen$ and $\ispacen$ are pairwise disjoint, given any configurator, $u$, in $\ugraph{\rsystemn}$, we know from which of the three individual spaces $u$, and its neighbourhood, $\neigh{u}$, arises. These insights allows us to exploit the universal structure graph, $\ugraph{\rsystemn}$, when we seek to describe representational systems (Section~\ref{sec:patterns}) and for the purposes of structural transformations (Section~\ref{sec:structuralTransformations}).

\subsection{Inter-Representational-System Relationships}\label{sec:interRepSystems}

We began Section~\ref{sec:RSs} by observing that a representational system, $\rsystemn$, comprises syntactic entities, called tokens, that have the capacity to represent objects when assigned semantics. To define semantics for $\rsystemn$, we need access to the objects that are being represented. The property of a token, $t$, representing an object, $o$, can be encoded (at a meta-level) by a (binary) relationship holding between $t$ and $o$. However, it is also possible to encode semantics using construction spaces by noting that the represented objects are themselves simply tokens of another representational system, $\rsystemn'$. To define the semantics of $\rsystemn$ in the context of $\rsystemn'$, we require that $\rsystemn$ and $\rsystemn'$ are \textit{compatible}\footnote{This is not a material restriction: one can always rename the types, or other components, that cause incompatibility by taking the disjoint union of $\rsystemn$ and $\rsystemn'$.}.

\begin{definition}\label{defn:compatibleRSs}
Let $\rsystemn=\rsystem$ and $\rsystemn'=\rsystemd$  be representational systems. Then $\rsystemn$ and $\rsystemn'$ are \textit{compatible} provided $\rsystemn\cup \rsystemn'$ is a representational system.
\end{definition}

Beyond simply encoding semantics, inter-representational-system encodings play an important role in structural transformations (Section~\ref{sec:structuralTransformations}), supporting \textit{representation selection.} There, we may have a graph, say $\cgraphn$, that shows one way of building a token, $t$, in $\rsystemn$. Such a pair, $\cpair$, is called a \textit{construction} and will be formally defined in Section~\ref{sec:constructions}. One goal of finding a structural transformation may be to identify another construction, $\cpaird$, in $\rsystemn'$ such that $t$ and $t'$ are in some specified relationship. In Section~\ref{sec:structuralTransformations}, such relationships are identified using an \textit{inter-representational-system encoding}. Example~\ref{ex:encodingSemantics} illustrates how semantics can be encoded.

\begin{example}[Encoding Semantics]\label{ex:encodingSemantics}
Example~\ref{ex:vectorMulti} demonstrated how a graphical representational system, $\rsystemn_{\mathit{GV}}$, could be used to visualise vectors in the plane; such a vector space can be encoded by a representational system, $\rsystemn_{\mathit{VS}}$. The two graphical tokens below (left, $t_1$, and right, $t_2$) \textit{represent} the vectors $(1,3)$ and $(0.5,2)$, respectively (the vertex labels $t_1$ and $t_2$ are naming tokens and are not intended to be types):
\begin{center}
\begin{tikzpicture}[construction]
\node[termIrep] (v) at (4.8,3.5) {\footnotesize$\top$};
\node[termN={$t_1$}] (v1) at (1.1,2.4) {\scalebox{0.8}{\vectorVis{(1,3)}{0.5}}};
\node[termrep] (v2) at (3.8,2.4) {\footnotesize$(1,3)$};
\node[constructorIpos={\constructorRepresents}{110}{0.17cm}] (u1) at (3.3,3.4) {\footnotesize$u_1$};
\node[constructorIpos={\constructorRepresents}{-110}{0.17cm}] (u3) at (3.3,1.5) {\footnotesize$u_3$};
\node[termrep] (v3) at (5.9,2.4) {\footnotesize$(0.5,2)$};
\node[termN={$t_2$}] (v4) at (8.5,2.4) {\scalebox{0.8}{\vectorVis{(0.5,2)}{0.5}}};
\node[constructorIpos={\constructorRepresents}{-70}{0.17cm}] (u4) at (6.3,1.5) {\footnotesize$u_4$};
\node[constructorIpos={\constructorRepresents}{70}{0.17cm}] (u2) at (6.3,3.4) {\footnotesize$u_2$};
\node[termIrep] (v') at (4.8,1.4) {\footnotesize$\bot$};
\path[->]
(u1) edge[bend right = 0] (v)
(v1) edge[bend right = -10] node[index label] {1} (u1)
(v2) edge[bend right = -10] node[index label] {2} (u1)
(v3) edge[bend left = -10] node[index label] {2} (u2)
(v4) edge[bend right = 10] node[index label] {1} (u2)
(u2) edge[bend right = 0] (v)
(v3) edge[out = 205, in = 55,looseness=0.9] node[index label] {2} (u3)
(v1) edge[bend right = 10] node[index label] {1} (u3)
(v2) edge[out = 315, in = 155,looseness=1.3] node[index label] {2} (u4)
(v4) edge[bend right = -10] node[index label] {1} (u4)
(u3) edge[bend right = 0] (v')
(u4) edge[bend right = 0] (v');
\end{tikzpicture}
%\end{minipage}
\end{center}
This structure graph includes an identification space constructor, \constructorRepresents, with signature $([\tau_1,\tau_2],\texttt{bool})$, where the (fresh) types $\tau_1$ and $\tau_2$ are defining upper bounds of the sets of types in $\rsystemn_{\mathit{GV}}$ and, respectively, $\rsystemn_{\mathit{VS}}$. Each configuration of \constructorRepresents\ encodes whether the first input (a token in the graphical system $\rsystemn_{\mathit{GV}}$) \textit{represents} the second input (a token in the vector space system $\rsystemn_{\mathit{VS}}$). Thus, the structure graph encodes the semantics that $t_1$ represents $(1,3)$, since the configurator $u_1$ outputs a meta-token, $\top$. Likewise, $t_1$ does not represent $(0.5,2)$ since the output of $u_3$ is $\bot$.
\end{example}

The presence of defining upper bounds supports the exploitation of \textit{inter-representational-system constructors} that can encode semantics (as seen in example~\ref{ex:encodingSemantics}) and, moreover, that can also encode other properties that exist between inter-system tokens. One such property relates to the notion of observability~\cite{stapleton:wmaeroiafaoo}: a token, $t$, can be \textit{observed} from another token, $t'$,  if there is a \textit{meaning-carrying relationship} between syntactic entities in $t'$ that corresponds to the informational content of $t$. The ability to observe information from tokens is suggestive of a cognitive advantage in some contexts~\cite{blake:efraoaisv}.

\begin{example}[Encoding Properties]\label{ex:encodingProperties}
Consider a system of \textsc{Euler Diagrams}, $\rsystemn_{\mathit{ED}}$, where any circle drawn in the plane, and any circle label, are primitive tokens. Each Euler diagram is a composite token comprising a finite set of circles, each with an associated label.{\pagebreak} The token \adjustbox{scale=0.61,raise=-0.08cm}{\abcDiagram}\! below, drawn from $\rsystemn_{\mathit{ED}}$, \textit{represents} token $\{A\subseteq B, B\cap C=\emptyset\}$, drawn from a set-theoretic representational system, $\rsystemn_{\mathit{ST}}$. This meaning of \adjustbox{scale=0.61,raise=-0.08cm}{\abcDiagram}\! is encoded below, in the \begin{samepage}middle.
\begin{center}
\begin{minipage}[t]{0.23\textwidth}\centering
	\textit{token:}\\[2ex]
	\begin{tikzpicture}[scale = 2]
	\draw (0.085,-0.035) circle [radius=0.125] node[above left= 0.02cm, yshift=-0.05cm, xshift=-0.13cm] {$A$};
	\draw (0,0) circle [radius=0.3] node[left = 0.2cm, yshift=0.52cm, xshift=-0.15cm] {$B$};
	\draw (0.7,0) circle [radius=0.3] node[right= 0.2cm, yshift=0.52cm, xshift=0.15cm] {$C$};
	\end{tikzpicture}
\end{minipage}
\hspace{1cm}
\begin{minipage}[t]{0.23\textwidth}\centering
	\textit{encoding semantics:}\\[1ex]
	\begin{tikzpicture}[construction]
	\node[termIrep] (v) at (3.2,4.2) {\footnotesize$\top$};
	\node[constructorINW={$\texttt{represents}$}] (u) at (3.2,3.4) {};
	\node[termrep] (v2) at (2.5,2.35) {\scalebox{0.8}{\abcDiagram}};
	\node[termrep] (v4) at (3.8,1.7) {\footnotesize$\{A\subseteq B, B\cap C=\emptyset\}$};
	\path[->]
	(u) edge[bend right = 0] (v)
	(v2) edge[bend right = -5] node[index label] {1} (u)
	(v4) edge[bend right = 10] node[index label] {2} (u);
	\end{tikzpicture}
\end{minipage}
\hspace{0.7cm}
\begin{minipage}[t]{0.39\textwidth}\centering
	\textit{encoding observability:}\\[1ex]
	\begin{tikzpicture}[construction]
	\node[termIrep] (v) at (3.2,4.2) {\footnotesize$\top$};
	\node[constructorIpos={\constructorObserve}{0}{0.58cm}] (u4) at (2.8,3.4) {};
	\node[constructorIpos={\constructorObserve}{143}{0.3cm}] (u3) at (1.7,3.6) {};
	\node[constructorIpos={\constructorObserve}{32}{0.38cm}] (u5) at (4.6,3) {};
	\node[termrep] (v4) at (3.3,2.4) {\footnotesize$B\cap C=\emptyset$};
	\node[termrep] (v3) at (1.3,2.5) {\footnotesize$A\subseteq B$};
	\node[termrep] (v5) at (4.9,1.7) {\footnotesize$A\cap C=\emptyset$};
	\node[termrep] (v1) at (2.3,1.7) {\scalebox{0.8}{\abcDiagram}};
	\path[->]
	(u4) edge[bend right = 0] (v)
	(u3) edge[bend right = -15] (v)
	(u5) edge[bend right = 20] (v)
	(v1) edge[in=235,out=85] node[index label] {1} (u4)
	(v1) edge[bend right = 5] node[index label] {1} (u3)
	(v1) edge[bend right = 35] node[index label] {1} (u5)
	(v4) edge[bend right = 10] node[index label] {2} (u4)
	(v3) edge[bend right = -10] node[index label] {2} (u3)
	(v5) edge[bend right = 5] node[index label] {2} (u5);
	\end{tikzpicture}
\end{minipage}
\end{center}
\end{samepage}

Now, \adjustbox{scale=0.61,raise=-0.08cm}{\abcDiagram}\! has various \textit{meaning-carrying relationships} such as: (a) the circle $A$ is inside $B$ (meaning: $A\subseteq B$), (b) the circles $B$ and $C$ do not overlap (meaning: $B\cap C=\emptyset$), and (c) the circles $A$ and $C$ do not overlap (meaning: $A\cap C=\emptyset$). It is possible to \textit{observe} the truth of $A\subseteq B$, $B\cap C=\emptyset$ and $A\cap C=\emptyset$ from \adjustbox{scale=0.61,raise=-0.08cm}{\abcDiagram}\!. In addition, while the former two statements can be directly obtained from $\{A\subseteq B, B\cap C=\emptyset\}$, the statement $A\cap C=\emptyset$ would need to be inferred using inference rules encoded in the entailment space of $\rsystemn_{\mathit{ST}}$. These \textit{observability} properties of \adjustbox{scale=0.61,raise=-0.08cm}{\abcDiagram}\! are encoded above right.
\end{example}

Let us now define an \textit{inter-representational-system encoding}, which embodies the idea that an identification space can encode relationships between representational systems. Definition~\ref{defn:interRSEncoding} exploits the fact that, given compatible representational systems, $\rsystemn=\rsystem$ and $\rsystemn'=\rsystemd$, the following are construction spaces: $\gspacen\cup \gspacen'$, $\espacen\cup \espacen'$, and  $\ispacen\cup \ispacen'$.

\begin{definition}\label{defn:interRSEncoding}
An \textit{inter-representational-system encoding} is a triple, $\irencodingn=\irencoding$, where:
\begin{enumerate}
\item $\rsystemn=\rsystem$ is a representational system formed over $\tsystemn$ and $\mpsystemn$,
\item $\rsystemn'=\rsystemd$ is a representational system formed over $\tsystemn'$ and $\mpsystemn'$, that is compatible with $\rsystemn$, and
\item $\ispacen''=(\tsystemn\cup \tsystemn'\cup \mpsystemn'', \cspecificationn,\graphn)$ is an identification space formed over $\tsystemn\cup \tsystemn'$ and $\mpsystemn''$ such that
        \begin{enumerate}
        \item all meta-types in $\rsystemn$ and $\rsystemn'$ are included in $\ispacen''$: $\mpsystemn\cup \mpsystemn'\subseteq \mpsystemn''$,
        \item for every token, $t''$ in $\ispacen''$:
                \begin{enumerate}
                \item if $t''$ is not in $\gspacen\cup \gspacen'$ then the label of $t''$ is a meta-type in $\mpsystemn''$, and
                \item if $t''$ is the output of a configurator in $\graphn$ then $t''$ is not in $\gspacen\cup \gspacen'$, and
                \end{enumerate}
         \item $\ispacen''$ is compatible with the construction spaces $\gspacen\cup \gspacen'$, $\espacen\cup \espacen'$, and $\ispacen\cup \ispacen'$ and their sets of constructors are pairwise disjoint.

        \end{enumerate}
\end{enumerate}
The space $\ipispacen=\ispacen\cup \ispacen'\cup \ispacen''$ is called an \textit{inter-property identification space} for $(\rsystemn,\rsystemn',\ispacen'')$.
\end{definition}

To conclude Section~\ref{sec:RSs}, theorem~\ref{thm:interRSisRS} establishes that an inter-representational-system encoding can be readily converted into a representational system.

\begin{theorem}\label{thm:interRSisRS}
Let $\irencodingn=\irencoding$ be an inter-representational-system encoding. Given $\rsystemn=\rsystem$ and $\rsystemn'=\rsystemd$ the triple
\begin{displaymath}
(\gspacen\cup \gspacen', \espacen\cup \espacen', \ipispacen)
\end{displaymath}
is a representational system.
\end{theorem}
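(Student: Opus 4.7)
The plan is to verify each of the four conditions in Definition~\ref{defn:representationalSystem} for the candidate triple $(\gspacen\cup \gspacen',\espacen\cup \espacen',\ipispacen)$, and to exploit the fact that compatibility of $\rsystemn$ and $\rsystemn'$ already does much of the heavy lifting. First I would apply Definition~\ref{defn:compatibleRSs} to conclude that $\rsystemn\cup \rsystemn'=(\gspacen\cup \gspacen',\espacen\cup \espacen',\ispacen\cup \ispacen')$ is itself a representational system. This delivers, for free, conditions (1) and (2): the grammatical space $\gspacen\cup \gspacen'$ is functional (because it appears as the first component of a representational system), and $\espacen\cup \espacen'$ is a construction space whose tokens all lie in $\gspacen\cup \gspacen'$. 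It also supplies pairwise compatibility of $\gspacen\cup \gspacen'$, $\espacen\cup \espacen'$ and $\ispacen\cup \ispacen'$, together with pairwise-disjoint constructor sets among these three spaces.

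Next I would establish that $\ipispacen=\ispacen\cup \ispacen'\cup \ispacen''$ is an identification space as required by condition (3). By Definition~\ref{defn:interRSEncoding}(3c), $\ispacen''$ is compatible with each of $\gspacen\cup \gspacen'$, $\espacen\cup \espacen'$ and $\ispacen\cup \ispacen'$; in particular $\ispacen''$ is compatible with $\ispacen\cup \ispacen'$. Combined with the compatibility $\ispacen$ with $\ispacen'$ (inherited from $\rsystemn\cup \rsystemn'$), a double application of Lemma~\ref{lem:pairwiseCompatibleCSformCS} shows that $\ipispacen$ is a construction space. Its type system is $(\tsystemn\cup \mpsystemn)\cup(\tsystemn'\cup \mpsystemn')\cup(\tsystemn\cup \tsystemn'\cup \mpsystemn'')=\tsystemn\cup \tsystemn'\cup \mpsystemn''$, which by the inclusion $\mpsystemn\cup \mpsystemn'\subseteq \mpsystemn''$ from Definition~\ref{defn:interRSEncoding}(3a) can be written as the union of $\tsystemn\cup \tsystemn'$ and the meta-type system $\mpsystemn''$; compatibility of these two type systems follows from the fact that $\ispacen''$ is itself an identification space formed over them, so $\ipispacen$ is an identification space formed over $\tsystemn\cup \tsystemn'$ and $\mpsystemn''$.

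The main obstacle, and the step that warrants the most care, is the verification of the meta-token conditions (3a) and (3b) of Definition~\ref{defn:representationalSystem} for $\ipispacen$ relative to the new grammatical space $\gspacen\cup \gspacen'$. For (3a), take a token $t$ in $\ipispacen$ with $t\notin \gspacen\cup \gspacen'$; I would split on which of the three structure graphs $\graphn_\ispacen$, $\graphn_{\ispacen'}$, $\graphn_{\ispacen''}$ contains $t$. If $t\in \ispacen$ then $t\notin \gspacen$, so the meta-token condition for $\rsystemn$ forces $\tokenl(t)\in \mpsystemn\subseteq \mpsystemn''$; the case $t\in \ispacen'$ is symmetric; and if $t\in \ispacen''$ then Definition~\ref{defn:interRSEncoding}(3b-i) directly gives $\tokenl(t)\in \mpsystemn''$. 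For (3b), let $u$ be a configurator in the structure graph of $\ipispacen$ and let $t$ be its output. If $u$ lies in $\graphn_{\ispacen}\cup \graphn_{\ispacen'}$, then since $\rsystemn\cup \rsystemn'$ is a representational system its output satisfies $t\notin \gspacen\cup \gspacen'$; if $u$ lies in $\graphn_{\ispacen''}$, Definition~\ref{defn:interRSEncoding}(3b-ii) gives the same conclusion. Here the subtlety is that outputs of $\ispacen$-configurators are only \emph{a priori} known to avoid $\gspacen$, not $\gspacen'$; this is precisely why I invoke the intermediate representational system $\rsystemn\cup \rsystemn'$ rather than using the meta-token property of $\rsystemn$ alone.

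Finally I would settle condition (4) by noting that the three spaces $\gspacen\cup \gspacen'$, $\espacen\cup \espacen'$ and $\ipispacen$ are pairwise compatible: the first two are compatible via $\rsystemn\cup \rsystemn'$, while pairwise compatibility of $\ipispacen$ with each of the first two follows from assembling the compatibilities of $\ispacen$, $\ispacen'$ and $\ispacen''$ with $\gspacen\cup \gspacen'$ and with $\espacen\cup \espacen'$ via Lemma~\ref{lem:pairwiseCompatibleCSformCS}. Disjointness of constructor sets across $\gspacen\cup \gspacen'$, $\espacen\cup \espacen'$ and $\ipispacen$ follows from the disjointness clauses in Definitions~\ref{defn:representationalSystem}(4) for each of $\rsystemn$ and $\rsystemn'$ together with Definition~\ref{defn:interRSEncoding}(3c). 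With all four conditions established, the triple is a representational system, as required.
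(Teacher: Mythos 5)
Your proposal is correct and follows essentially the same route as the paper's proof: derive conditions (1) and (2) from the fact that $\rsystemn\cup \rsystemn'$ is a representational system, show $\ipispacen$ is an identification space over $\tsystemn\cup\tsystemn'$ and $\mpsystemn''$, and settle condition (4) via Lemma~\ref{lem:pairwiseCompatibleCSformCS} together with the disjointness clause of Definition~\ref{defn:interRSEncoding}(3c). The only difference is that where the paper dismisses condition (3) as ``merely a restatement of condition 3(b)'' of Definition~\ref{defn:interRSEncoding}, you carry out the full case split over $\ispacen$, $\ispacen'$ and $\ispacen''$ — correctly observing that the $\ispacen$-cases need the meta-token properties of $\rsystemn\cup\rsystemn'$ rather than of $\rsystemn$ alone — which is a welcome tightening rather than a different argument.
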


\begin{proof}
Since $\rsystemn$ and $\rsystemn'$ are compatible, we know that $\rsystemn\cup \rsystemn'$ is a representational system. Hence conditions (1) and (2) of definition~\ref{defn:representationalSystem} hold for $(\gspacen\cup \gspacen', \espacen\cup \espacen', \ipispacen)$. Suppose that $\ispacen''=(\tsystemn\cup \tsystemn'\cup \mpsystemn'', \cspecificationn,\graphn)$ as in definition~\ref{defn:interRSEncoding}. We now show that $\ipispacen=\ispacen\cup \ispacen\cup \ispacen''$ is an identification space formed over $\tsystemn\cup \tsystemn'$ and $\mpsystemn''$, thus satisfying  definition~\ref{defn:identificationSpace}, such that conditions (3) and (4) of definition~\ref{defn:representationalSystem} hold. Firstly, focusing on definition~\ref{defn:identificationSpace}, it is trivial that $\tsystemn\cup \tsystemn'$ and $\mpsystemn''$ are compatible since $\ispacen''$ is a construction space. Hence $\ipispacen$ is an identification space formed over $\tsystemn\cup \tsystemn'$ and $\mpsystemn''$. It is readily seen that condition (3) in definition~\ref{defn:representationalSystem} is merely a restatement of condition 3(b) in definition~\ref{defn:interRSEncoding}. Condition (4) of definition~\ref{defn:representationalSystem} requires that $\gspacen\cup \gspacen'$, $\espacen\cup \espacen'$ and $\ipispacen=\ispacen\cup \ispacen'\cup \ispacen''$ are pairwise compatible:
\begin{enumerate}
\item[-] $\gspacen\cup \gspacen'$ and $\espacen\cup \espacen'$ are compatible since $\rsystemn\cup \rsystemn'$ is a representational system.
\item[-] Consider $\gspacen\cup \gspacen'$ and $\ispacen\cup \ispacen'\cup \ispacen''$. We establish the pairwise compatibility of $\gspacen\cup \gspacen'$, $\ispacen\cup \ispacen'$ and $\ispacen''$. We know that $\gspacen\cup \gspacen'$ and $\ispacen\cup \ispacen'$ are compatible since $\rsystemn\cup \rsystemn'$ is a representational system. Condition 3(c) of definition~\ref{defn:interRSEncoding} gives us that $\gspacen\cup \gspacen'$ and $\ispacen''$ are compatible. Similarly, $\ispacen\cup \ispacen'$ and $\ispacen''$ are compatible. By lemma~\ref{lem:pairwiseCompatibleCSformCS}, $\gspacen\cup \gspacen'$ and $\ispacen\cup \ispacen'\cup \ispacen''$ are compatible.
\item[-] Establishing the compatibility of $\espacen\cup \espacen'$ and $\ispacen\cup \ispacen'\cup \ispacen''$ is similar to the previous case.
\end{enumerate}
Therefore, we have pairwise compatibility, so the first part of condition (4) holds. The last part of condition (4) requires that $\gspacen\cup \gspacen'$, $\espacen\cup \espacen'$, and $\ipispacen$ have pairwise disjoint sets of constructors. This is given by condition 3(c) of definition~\ref{defn:interRSEncoding}.  Thus condition (4) of definition~\ref{defn:representationalSystem} holds. Hence
\begin{displaymath}
(\gspacen\cup \gspacen', \espacen\cup \espacen', \ipispacen)
\end{displaymath}
is a representational system.
\end{proof}

As a consequence of theorem~\ref{thm:interRSisRS}, any theoretical results concerning representational systems can be readily applied to inter-representational-system encodings, after such a conversion. Theorem~\ref{thm:interRSisRS} is also useful in the context of applications of structural transformations, as follows. If we are given two inter-representational-system encodings, $\irencodingn=(\rsystemn,\rsystemn',\ispacen'')$ and $\irencodingn'=(\rsystemn',\rsystemn'',\ispacen''')$, then we can readily derive a new inter-representational-system encoding (assuming the conditions of definition~\ref{defn:interRSEncoding} are met):
\begin{displaymath}
\irencodingn''=((\gspacen\cup \gspacen', \espacen\cup \espacen', \ipispacen),\rsystemn'',\ispacen''').
\end{displaymath}
This allows us to exploit information encoded using $\ispacen'''$, given in $\mathbb{I}(\irencodingn')$, in order to support transfers from $(\gspacen\cup \gspacen', \espacen\cup \espacen', \ipispacen)$ to $\rsystemn''$. Similarly, we could also devise transfers from the representational system $\rsystemn$ to $(\gspacen'\cup \gspacen'', \espacen'\cup \espacen'', \mathbb{I}(\irencodingn'))$.

\section{Constructions}\label{sec:constructions}
%%%%

\definecolor{darkgreen}{rgb}{0.0, 0.7, 0.1}
\definecolor{darkblue}{rgb}{0.0, 0.3, 0.8}
\definecolor{darkred}{rgb}{0.7, 0.1, 0.1}
\definecolor{darkpurple}{rgb}{0.4, 0.0, 0.6}
\definecolor{darkgold}{rgb}{0.8, 0.6, 0}

\newcommand{\topC}{black}
\newcommand{\botC}{red!30!blue}
\newcommand{\conC}{darkred}
\newcommand{\found}{darkpurple}
\newcommand{\termN}{very thick}
\newcommand{\termC}{darkpurple}

In a representational system, any of its structure graphs may encode many ways to \textit{construct} each token. A \textit{construction}, which is essentially a subgraph of a structure graph, describes \textit{one way} to construct a token from other tokens; examples will be given in Section~\ref{sec:constructions:AMEGROV}. Constructions play a fundamental role in our approach to \textit{describing} representational systems using patterns (Section~\ref{sec:patterns}) and applying \textit{structural transformations} to representations (Section~\ref{sec:structuralTransformations}). Roughly speaking, a pattern will describe a class of constructions, thus providing a compact way of describing isomorphic constructions of tokens that are built using essentially the same configurators but different tokens. We formally define constructions and develop the theory of \textit{splits}, which supports the subdivision of a construction into multiple sub-constructions. The ability to split constructions allows fewer patterns to be used in system descriptions (explained in opening remarks of Section~\ref{sec:patterns}) and provides an essential simplification mechanism that forms a core part of our approach to defining structural transformations.

\subsection{A Motivating Example: \textsc{Graphical Representations of Vectors}}\label{sec:constructions:AMEGROV}
The major contributions of Section~\ref{sec:constructions} will be exemplified using a graphical representational system, $\rsystemn_{GV}$, that visualises vectors in the plane. We focus on two particular constructors in the entailment space: $\GRVcvecadd$, as seen in example~\ref{ex:vectorMulti}, and $\GRVcvecsub$, for subtraction entailments, with signature $\spec(\GRVcvecsub)=([\GRVvec,\GRVvec],\GRVvec)$. Even with only two constructors, the resulting entailment space has a non-trivial structure graph: it is not a DAG, since it contains cycles. To illustrate, the three tokens, $t_1$, $t_2$ and $t_3$ below, are visualisations of $(1,1)$, $(2,3)$ and $(1,1.5)$:
\begin{center}
\begin{minipage}[t]{0.32\textwidth}\centering
\textit{representation, $t_1$, of $(1,1)$:}\\
 \vectorVis{(1,1)}{0.5}
\end{minipage}
\hfill
\begin{minipage}[t]{0.32\textwidth}\centering
\textit{representation, $t_2$, of $(2,3)$:}\\
\vectorVis{(2,3)}{0.5}
\end{minipage}
\hfill
\begin{minipage}[t]{0.32\textwidth}\centering
\textit{representation, $t_3$, of $(1,1.5)$:}\\
\vectorVis{(1,1.5)}{0.5}
\end{minipage}
\end{center}

Taking $t_4$ and $t_5$ to be visualisations of $(3,4)$ and $(2,2.5)$, respectively, the following graph, $\cgraphn$, is a \textit{construction} of $t_3$ that contains a cycle (the labels $t_1,\ldots,t_5$ are the tokens' names, not types):
\begin{center}\label{example:cons:t3}
\vectorsCon
\end{center}
Working backwards along the arrows, from $t_3$, initially we see that the constructor $\GRVcvecsub$ encodes the fact that  $t_3$ is entailed by $[t_4,t_5]$, under a \textit{subtraction} rule; analogously, in the associated vector space, the expression $(3,4)-(2,2.5)$ is semantically equivalent to $(1,1.5)$. In turn, $t_4$ and $t_5$ are entailed by $[t_1,t_2]$ and $[t_1,t_3]$, respectively, encoded by $\GRVcvecadd$. Hence, the entire graph encodes the construction of $t_3$  from $t_1$ (used twice), $t_2$, and itself. These facts are derived from the graph as follows: $t_1$, $t_2$, and $t_3$ are sources of non-extendable trails\footnote{Recall that a trail, $\trail$, is (source-)extentable if there is an arrow, $a$, such that $[a]\oplus \trail$ is also a trail.} that target $t_3$; recall, the target of a trail is the final vertex in its associated vertex sequence. In fact, the \textit{foundation token-sequence}, $\foundationsto{\cgraphn,t_3}=[t_1,t_2,t_1,t_3]$, can be derived from the sources of \textit{all non-extendable trails} that target $t_3$, ordered in such a way that they respect the arrow indices\footnote{Essentially, the foundation token-sequence can be computed by performing a depth-first search of the graph, that does not visit any arrow more than once.}:
\begin{eqnarray*}
\trail_1 & = & t_1\arrow[1] u_2\arrow t_4\arrow[1] u\arrow t_3 \\
\trail_2 & = & t_2\arrow[2] u_2\arrow t_4\arrow[1] u\arrow t_3 \\
\trail_3 & = & t_1\arrow[1] u_1\arrow t_5\arrow[2] u\arrow t_3 \\
\trail_4 & = & t_3\arrow[2] u_1\arrow t_5\arrow[2] u\arrow t_3.
\end{eqnarray*}
The \textit{complete trail sequence} of this construction is $\ctsequence{\cgraphn,t_3}=[\trail_1,\trail_2,\trail_3,\trail_4]$, from which the foundation token-sequence is derived.

Complete trail sequences and foundation token-sequences play a role in our approach to splitting constructions into simpler constructions, which ultimately enables our development of \textit{structural transformations} (Section~\ref{sec:structuralTransformations}). The theory of splits, developed in Section~\ref{sec:constructions:GICAS}, will ensure that the complete trail sequence and, thus, the foundation token-sequence of the original construction can be readily obtained from any split.

\subsection{Constructions, Trail Sequences and Foundations}\label{sec:constructions:CTSAF}

A \textit{construction} will be defined to be a pair, $\cpair$, comprising a structure graph, $\cgraphn$, that satisfies certain conditions and a \textit{constructed token}, $t$; note the use of $\cgraphn$ in a construction, we reserve the notation $\graphn$ for arbitrary structure graphs. Given a construction of $t$, we can identify the tokens, called \textit{foundations}, from which $t$ can be constructed, as demonstrated in Section~\ref{sec:constructions:AMEGROV}. The order of the foundations respects the indexed arrows employed by configurations of constructors and they are identified using complete trail sequences. Trails feature heavily in the theory presented throughout Section~\ref{sec:constructions}. Section~\ref{sec:constructions:AMEGROV} gave an example of a construction whereas our next example shows a structure graph that fails to be a suitable $\cgraphn$.

\begin{example}[A Structure Graph that is not a Construction]\label{ex:Unistructured}
In the representational system $\rsystemn_{\mathit{GV}}$, consider the terms $t_1$ to $t_5$ as in Section~\ref{sec:constructions:AMEGROV} alongside terms $t_6$ and $t_7$ that are visualisations of the vectors $(3,2)$ and $(2,0.5)$ respectively. The following is a subgraph, $\graphn$, of the entailment space $\espacen_{\mathit{GV}}$'s structure graph:
\begin{center}
\begin{tikzpicture}[construction]
\begin{scope}[rotate=-90]
\node[termpos={$t_3$}{90}{0.15cm},inner sep = 1.5pt] (v) at (3.2,5.6) {\scalebox{.5}{\vectorVis{(1,1.5)}{0.4}}};
\node[termpos={$t_4$}{20}{0.17cm},inner sep = 1.5pt] (v1) at (2.4,2.6) {\scalebox{.5}{\vectorVis{(3,4)}{0.4}}};
\node[termpos={$t_5$}{-20}{0.17cm},inner sep = 1.5pt] (v2) at (4.0,2.6) {\scalebox{.5}{\vectorVis{(2,2.5)}{0.4}}};
\node[constructorEpos={$\GRVcvecsub$}{60}{0.17cm}] (u) at (3.2,4.1) {$u$};
\node[constructorEpos={$\GRVcvecadd$}{80}{0.18cm}] (u1) at (2.4,1.2) {$u_2$};
\node[constructorEpos={$\GRVcvecadd$}{80}{0.18cm}] (u2) at (4,1.2) {$u_1$};
\node[termpos={$t_2$}{160}{0.17cm},inner sep = 1.5pt] (v3) at (2.4,-0.5) {\scalebox{.5}{\vectorVis{(2,3)}{0.4}}};
\node[termpos={$t_1$}{200}{0.17cm},inner sep = 1.5pt] (v4) at (4,-0.5) {\scalebox{.5}{\vectorVis{(1,1)}{0.4}}};
\node[termpos={$t_6$}{20}{0.17cm},inner sep = 1.5pt] (v6) at (2.4,8.7) {\scalebox{.5}{\vectorVis{(3,2)}{0.4}}};
\node[termpos={$t_7$}{-20}{0.17cm},inner sep = 1.5pt] (v7) at (4.0,8.7) {\scalebox{.5}{\vectorVis{(2,0.5)}{0.4}}};
\node[constructorEpos={$\GRVcvecsub$}{120}{0.17cm}] (up) at (3.2,7.1) {$u'$};
\end{scope}
\path[->]
(u) edge[bend right = 0] (v)
(v1) edge[bend right = -10] node[index label] {1} (u)
(v2) edge[bend right = 10] node[index label] {2} (u)
(v3) edge[bend right = -10] node[index label] {2} (u1)
(v4) edge[bend right = 10] node[index label] {1} (u2)
(u1) edge[bend right = 0] (v1)
(u2) edge[bend right = 0] (v2)
(v4) edge[bend right = 10] node[index label] {1} (u1)
(v) edge[in = -85, out = 240, looseness=0.72] node[index label] {2} (u2)
(up) edge[bend right = 0] (v)
(v6) edge[bend right = 10] node[index label] {1} (up)
(v7) edge[bend right = -10] node[index label] {2} (up)
;
\end{tikzpicture}\vspace{-0.55cm}
\end{center}

The graph $\graphn$ shows two different ways of constructing $t_3$, since $t_3$ is the target of two arrows: $t_3$ can be constructed from $[t_1,t_2,t_1,t_3]$ and from $[t_6,t_7]$, using configurations of $\GRVcvecadd$ and $\GRVcvecsub$. Intuitively, if a token, $t$, is the target of two (or more) arrows, $a_0$ and $a_0'$, then the graph is encoding two different ways to construct $t$. Thus, since constructions are intended to show one way of constructing the identified token, $\graphn$ is \textit{not} a construction of $t_3$. However, any subgraph of $\graphn$ that is a structure graph, where $t_3$ is the target of a unique arrow, is a construction of $t_3$.
\end{example}

In example~\ref{ex:Unistructured}, the failure of the graph to be a construction of $t_3$ was due to the fact that more than one arrow targeted $t_3$. Similar issues arise when \textit{any} token is the target of more than one arrow: constructions prohibit graphs from exhibiting such a property.

\begin{definition}\label{defn:uniStructured}
A structure graph, $\graphn$, is \textit{uni-structured} provided that for every token, $t$, in $\graphn$ there is at most one arrow, $a$, in $\graphn$ such that $\tar{a}= t$.
\end{definition}

So, in a uni-structured graph, $\graphn$, every configurator has exactly one outgoing arrow (since $\graphn$ is a structure graph) and every token has at most one incoming arrow. A construction, $\cpair$, requires that $\cgraphn$ is uni-structured. The specification of the token $t$ in $\cpair$ is essential as it cannot be inferred from the graph: some constructions will contain directed cycles and there is no reason to require that there is a unique sink, say, that is the constructed token; in Section~\ref{sec:constructions:AMEGROV}, $t_3$ was not a sink. Constructions are formalised in definition~\ref{defn:construction} which implicitly assumes that a construction is formed within the context of a construction space, $\cspacen=\cspace$; that is, if $\cgraphn$ is a construction graph then $\cgraphn\subseteq \graphn$ and all the configurators used in $\cgraphn$ respect $\cspecificationn=\cspecification$. However, for readability, we do not typically make the presence of $\cspacen$ explicit unless it is needed for disambiguation.

\begin{definition}\label{defn:construction}
A \textit{construction} is a pair, $\cpair$, where
\begin{enumerate}
\item  $\cgraphn$ is a finite, uni-structured structure graph, and
\item $t$ is a token in $\cgraphn$ such that each vertex in $\cgraphn$ is the source of a trail in $\cgraphn$ that targets $t$.
\end{enumerate}
Given a construction, $\cpair$, we say that $\cgraphn$ \textit{constructs} $t$ and that $t$ is the \textit{construct} of $\cpair$. If $t$ is the only vertex in $\cgraphn$ then $\cpair$ is \textit{trivial}. If $\cpair$ contains exactly one configurator then $\cpair$ is \textit{basic}.
\end{definition}

\begin{definition}\label{defn:setOfConstructions}
Let $\rsystemn=\rsystem$ be a representational system. A construction, $\cpair$, is a \textit{construction in $\rsystemn$} provided for each configurator, $u$, in $\cgraphn$, $\neigh{u,\cgraphn}=\neigh{u,\ugraph{\rsystemn}}$. The set of constructions in $\rsystemn$ is denoted $\allconstructions{\rsystemn}$.
\end{definition}
The construction $(\cgraphn,t_3)$, given on page~\pageref{example:cons:t3}, is an element of $\allconstructions{\rsystemn_{\mathit{GV}}}$. The requirement that the neighbourhoods of configurators are `complete' in constructions (i.e. $\neigh{u,\cgraphn}=\neigh{u,\ugraph{\rsystemn}}$) means that if we were to remove any arrow from $\cgraphn$ then it would no longer be a construction in $\allconstructions{\rsystemn_{\mathit{GV}}}$. Notably, the condition that each vertex in $\cpair$ is the source of a trail that targets $t$ ensures that each token is used in the construction of $t$. For instance, the following graph is not a construction of $t$, since it includes superfluous vertices ($u'$ and $t'$):
\begin{center}
\begin{tikzpicture}[construction,yscale=0.8]
\node[termrep] (v) at (3.2,4.2) {$t$};
\node[termrep] (v1) at (2.4,2.2) {$t_1$};
\node[termrep] (v2) at (4,2.2) {$t_2$};
\node[constructorNW={$c$}] (u) at (3.2,3.2) {$u$};
\node[constructorNE={$c'$}] (u') at (4.8,3.2) {$u'$};
\node[termrep] (v') at (4.8,4.2) {$t'$};
\path[->]
(u) edge[bend right = 0] (v)
(v1) edge[bend right = -10] node[index label] {1} (u)
(v2) edge[bend right = 10] node[index label] {2} (u)
(v2) edge[bend right = 10] node[index label] {1} (u')
(u') edge[bend right = 0] (v');
\end{tikzpicture}
\end{center}

Of particular interest are trails that are sourced and targetted on tokens. We call such trails \textit{well-formed}. In essence, given a construction, $\cpair$, a well-formed trail with source $t'$ and target $t$ indicates some of the tokens used to construct $t$ and the configurators for which they are inputs.

\begin{definition}\label{defn:wellformedTrail}
Let $\cpair$ be a construction and let $\trail$ be a trail in $\cgraphn$. Then $\trail$ is \textit{well-formed} provided its source and target are both tokens.
\end{definition}

As we proceed, we wish to take an extendable well-formed trail, $\trail$, in a construction and identify trails that can be used to extend $\trail$, yielding longer trails. Suppose, then, that $\trail$ has source $t'$ and is extendable by $a$; this implies that $\sor{a}$ is a configurator. Typically, when we identify the trails that extend $\trail$, we wish to order them according to the indices assigned to $\sor{a}$'s incoming arrows. Therefore, we utilise the \textit{input arrow-sequence}, $\inputsA{\sor{a}}$, for the configurator $\sor{a}$, when defining trail extensions. For example, if we take the trail consisting solely of $a$, and $\sor{a}$ has input arrow-sequence $[a_1,a_2]$, then this sequence of trails, that extend $a$, respects the arrow indices:
\begin{displaymath}
[[a_1,a],[a_2,a]]=[[a_1],[a_2]]\triangleleft [a].
\end{displaymath}
This operation, of producing extensions of $\trail$ informed by \textit{the unique} configurator that is the source of \textit{the unique} arrow that targets  $\source{\trail}$ is only well-defined defined because $\cgraphn$ is uni-structured.

In addition to producing extensions of a well-formed trail, $\trail$, often we will want to produce \textit{all possible ways} that $\trail$ can be extended to yield well-formed \textit{non-extendable} trails. We capture this in definition~\ref{defn:CES}, of a \textit{trail extension sequence}, which we will denote by $\tesequence{\trail}$: any trail, $\trail'$, in $\tesequence{\trail}$, is guaranteed to be such that $\trail'\oplus \trail$ is a well-formed, non-extendable trail. In addition, the trails in $\tesequence{\trail}$ are ordered so that, \textit{for each configurator}, they respect the indices assigned to the incoming arrows. This latter property will be ensured by condition (2) in definition~\ref{defn:CES}.

\begin{example}[Trail Extension Sequences]
Referring again to the example in Section~\ref{sec:constructions:AMEGROV}, the construction $(\cgraphn,t_3)$ is repeated \begin{samepage}here:
\begin{center}
	\vectorsCon
\end{center}\end{samepage}
Consider the trail $\trail=t_5\arrow[2] u \arrow t_3$. Concatenating either of the following trails (each of which is well-formed) onto $\trail$ yields a well-formed, non-extendable trail:
\begin{eqnarray*}
\trail_1' & = & t_1\arrow[1] u_1\arrow t_5 \\
\trail_2' & = & t_3\arrow[2] u_1\arrow t_5.
\end{eqnarray*}
That is, $\trail_1'\oplus \trail$ and $\trail_2'\oplus \trail$ are well-formed, non-extendable trails. The sequence $\tesequence{\cgraphn,t_3}=[\trail_1',\trail_2']$ is a \textit{trail extension sequence} for $\trail$. However, $[\trail_2',\trail_1']$ and $[\trail_1']$ are not, since the former does not respect the order of the arrows targeting $u_1$ and the latter omits $\trail_2'$.  The complete trail sequence, $\ctsequence{\cgraphn,t_3}=[\trail_1,\trail_2,\trail_3,\trail_4]$, given in Section~\ref{sec:constructions:AMEGROV}, is the trail extension sequence of $[]_{t_3}$.
\end{example}

Definition~\ref{defn:tes} makes the notion of a trail extension sequence precise, exploiting the facts that any extendable well-formed trail, $\trail$, in a construction is extendable by a unique arrow, $a$, and that $\sor{a}$ is a configurator. In turn, $\sor{a}$  has at least one incoming arrow, $a_1$: definition~\ref{defn:constructionSpecification}, of a constructor specification, implies that configurators have at least one input.

\begin{definition}\label{defn:DTS}\label{defn:CES}\label{defn:tes}
Let $\cpair$ be a construction containing a well-formed trail, $\trail$, with source $t'$. The \textit{trail extension sequence} of $\trail$, denoted $\tesequence{\trail}$, is defined as follows:
\begin{enumerate}
\item if $\trail$ is non-extendable then $\tesequence{\trail}=[[]_{t'}]$, and
\item if $\trail$ is extendable by arrow $a$ then,
            given $\inputsA{\sor{a}}=[a_1,\ldots,a_n]$,
            \begin{displaymath}
              \tesequence{\trail}=\big(\tesequence{[a_1,a]\oplus \trail}\triangleleft [a_1,a] \big) \oplus \cdots \oplus \big(\tesequence{[a_n,a]\oplus \trail}\triangleleft [a_n,a]\big).
            \end{displaymath}
\end{enumerate}
\end{definition}

We now prove two lemmas that establish properties of the trail extension sequence of any trail, $\trail$, in $\cpair$ whose target is $t$.  Lemma~\ref{lem:non-extendableWellFormedTrails} proves that any trail, $\trail'$, in $\tesequence{\trail}$, is such that $\trail'\oplus \trail$ is both well-formed and non-extendable, and it also establishes that the trail $\trail'$ is itself well-formed. The proof uses the fact that trail extension sequences only contain trails of even length, since they are sourced and targeted on vertices in $\pa$ (i.e. the tokens) in a bipartite graph. Lemma~\ref{lem:CESisComplete} establishes that  $\tesequence{\trail}$ contains \textit{all} trails that ensure  $\trail'\oplus \trail$ is both well-formed and non-extendable. In particular, these two lemmas combine to tell us that $\tesequence{[]_t}$ contains \textit{all and only} the well-formed, non-extendable trails in $\cpair$ that target $t$. As indicated earlier, in order to identify the foundation token-sequence from which $t$ is constructed we exploit all non-extendable trails that target $t$ in $\cgraphn$.

\begin{lemma}\label{lem:non-extendableWellFormedTrails}
Let $\cpair$ be a construction containing a well-formed trail, $\trail$, with target $t$. Then, given any trail, $\trail'$, in $\tesequence{\trail}$,%
\begin{enumerate}
\item $\trail'\oplus \trail$ is a non-extendable trail that targets $t$, and
\item $\trail'$ is well-formed.
\end{enumerate}
\end{lemma}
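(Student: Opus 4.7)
The plan is to proceed by well-founded induction on $\trail$, using the measure $|\arrows(\cgraphn)| - |\trail|$, which is a non-negative integer that strictly decreases with each recursive call in the definition of $\tesequence{\cdot}$. This mirrors the recursive structure of definition~\ref{defn:tes} and ensures termination because $\cgraphn$ is finite.

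For the base case, where $\trail$ is non-extendable, we have $\tesequence{\trail}=[[]_{t'}]$, so the only candidate is $\trail'=[]_{t'}$ with $t'=\source{\trail}$. Since $\trail$ is well-formed, $t'$ is a token, so $[]_{t'}$ is trivially well-formed, establishing (2). For (1), the concatenation $[]_{t'}\oplus \trail$ equals $\trail$ itself, which is non-extendable by assumption and targets $t$.

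For the inductive step, $\trail$ is extendable by some arrow $a$, and I would first argue that $\sor{a}$ must be a configurator: well-formedness of $\trail$ gives $\source{\trail}$ as a token, forcing $\tar{a}$ to be a token (whether $\trail=[]_v$ or not), and bipartiteness then gives $\sor{a}\in \pb$. Writing $\inputsA{\sor{a}}=[a_1,\ldots,a_n]$, the key technical step is to verify that $[a_i,a]\oplus \trail$ is a well-formed trail (so that the recursive invocation of $\tesequence{\cdot}$ is justified); I would establish this by arguing that $\sor{a}$ cannot appear as an internal vertex of $\trail$ since $\cgraphn$ is uni-structured and $\sor{a}$ has $a$ as its unique outgoing arrow (and $a\notin \trail$ by extendability), while it also cannot be the target of $\trail$ since $\trail$ is well-formed. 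It follows that $a_i\notin \trail$ and so $[a_i,a]\oplus \trail$ is a trail, well-formed with source $\sor{a_i}$ (a token) and target $t$.

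Once these facts are in place, the inductive hypothesis applies to each $[a_i,a]\oplus \trail$, and any $\trail'\in\tesequence{\trail}$ is, by the right-product construction in definition~\ref{defn:tes}, of the form $\trail''\oplus [a_i,a]$ for some $\trail''\in \tesequence{[a_i,a]\oplus \trail}$. Then $\trail'\oplus \trail = \trail''\oplus([a_i,a]\oplus \trail)$, which by the IH is non-extendable and targets $t$, giving (1); and (2) follows since $\trail''$ is well-formed by IH, making $\source{\trail'}=\source{\trail''}$ a token, while $\target{\trail'}=\tar{a}$ is also a token. The main obstacle I anticipate is the careful bookkeeping required to show that $[a_i,a]\oplus\trail$ is a legitimate trail: this is where the uni-structured property of constructions (together with bipartiteness and well-formedness of $\trail$) is genuinely needed, and it is the only place where the finiteness/structural constraints of definition~\ref{defn:construction} play an essential role.
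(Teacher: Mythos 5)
Your proposal is correct and takes essentially the same route as the paper: both arguments are inductions that unfold the recursion in definition~\ref{defn:tes} by one step, reducing the claim for $\tesequence{\trail}$ to the claim for $\tesequence{[a_i,a]\oplus\trail}$ (the paper phrases this as induction on the length of $\trail'$, you as well-founded induction on $|\arrows(\cgraphn)|-|\trail|$, but these measures track the same recursion depth). The only substantive difference is that you explicitly verify that $[a_i,a]\oplus\trail$ is a legitimate well-formed trail before invoking the inductive hypothesis — a check the paper's proof silently assumes — and your argument for it (bipartiteness, well-formedness of $\trail$, and the unique outgoing arrow of the configurator $\sor{a}$) is sound.
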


\begin{proof}
Suppose that $t'$ is the source of $\trail$. The proof proceeds by induction over the length of $\trail'$.
For the base case, $\trail'$ has length 0, meaning that the only trail in $\tesequence{\trail}$ is the empty trail, $[]_{t'}$, that is $\trail'=[]_{t'}$. Therefore, we have
\begin{displaymath}
  \trail'\oplus \trail =[]_{t'} \oplus \trail= \trail.
\end{displaymath}
It is given that the trail $\trail$ has target $t$. Moreover, the only way that $\tesequence{\trail}=[[]_{t'}]$ is when $\trail$ is non-extendable which gives us that $[]_{t'} \oplus \trail$ is non-extendable. Trivially, $\trail'$ is well-formed. Hence the base case holds.

Assume, for all $\trail'$ with length at most $k$, that  $\trail'\oplus \trail$ is a non-extendable trail that targets $t$ and that $\trail'$ is well-formed. For the inductive step, let $\trail'$ be such that $\trail'=[a_1,\cdots,a_k,a_{k+1},a_{k+2}]$. We must show that $\trail'\oplus \trail$ is a non-extendable trail that targets $t$, and that $\trail'$ is well-formed. Given $\trail'$ is in $\tesequence{\trail}$, we consider part (2) of definition~\ref{defn:DTS}, since $\trail$ is extendable by $a_{k+2}$. It can only be that $\trail'$ is in
\begin{displaymath}
  \tesequence{[a_{k+1},a_{k+2}]\oplus \trail}\triangleleft [a_{k+1},a_{k+2}].
\end{displaymath}
Thus, we have it that $[a_1,\ldots,a_{k}]$ is in  $\tesequence{[a_{k+1},a_{k+2}]\oplus \trail}$. By the inductive assumption
\begin{displaymath}
  [a_1,\ldots,a_{k}] \oplus ([a_{k+1},a_{k+2}]\oplus \trail)
\end{displaymath}
is a non-extendable trail that targets $t$. Since $\trail'=[a_1,\ldots,a_{k}] \oplus [a_{k+1},a_{k+2}]$, it follows that
$\trail'\oplus \trail$ is a non-extendable trail that targets $t$. Now, it is trivial that $\target{\trail'}$ is a token, since the source of $\trail$ is a token. By the inductive assumption, $[a_1,\ldots,a_{k}]$ is well-formed, therefore $[a_1,\ldots,a_{k-1},a_k,a_{k+1},a_{k+2}]=\trail'$ is also well-formed. Hence, $\trail'\oplus \trail$ is a non-extendable trail that targets $t$ and $\trail'$ is well-formed.
\end{proof}

\begin{corollary}\label{cor:TESOnlyNonExtendable}
Let $\cpair$ be a construction. Then $\tesequence{[]_t}$ contains only non-extendable trails that target $t$.
\end{corollary}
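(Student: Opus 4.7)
The corollary is an immediate consequence of Lemma~\ref{lem:non-extendableWellFormedTrails}, obtained by specialising that lemma to the empty trail $[]_t$. The plan is simply to verify that $[]_t$ satisfies the hypotheses of the lemma and then to read off the conclusion.

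First I would observe that $[]_t$ is a well-formed trail in $\cgraphn$ with target $t$: by our convention on empty walks (recall $[]_v$ is the empty walk whose associated single-vertex sequence is $[v]$), we have $\source{[]_t} = \target{[]_t} = t$, and $t$ is a token by hypothesis, so both endpoints lie in $\pa$, which is precisely the definition of well-formedness (definition~\ref{defn:wellformedTrail}). Hence $[]_t$ meets the hypothesis of Lemma~\ref{lem:non-extendableWellFormedTrails}.

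Next, for any $\trail' \in \tesequence{[]_t}$, part~(1) of that lemma yields that $\trail' \oplus []_t$ is a non-extendable trail targeting $t$. Since concatenation with the empty trail is the identity, $\trail' \oplus []_t = \trail'$, so $\trail'$ itself is a non-extendable trail with target $t$, which is exactly what the corollary asserts.

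No obstacle is anticipated: the argument is a one-line specialisation, and the only subtlety is the bookkeeping around $[]_t$ (ensuring that the associated vertex sequence $[t]$ is the one that makes $[]_t$ well-formed and targeted at $t$), which is handled by the conventions fixed in Section~\ref{sec:prelims}.
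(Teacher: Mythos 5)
Your proof is correct and matches the paper's intent exactly: the corollary is stated in the paper without proof as an immediate specialisation of Lemma~\ref{lem:non-extendableWellFormedTrails} to the well-formed trail $[]_t$, and your verification that $[]_t$ satisfies the lemma's hypotheses and that $\trail'\oplus []_t=\trail'$ is precisely the intended one-line argument.
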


We now establish the completeness of $\tesequence{\trail}$, in the sense that any trail, $\trail'$, where $\trail'\oplus \trail$ is a non-extendable trail, appears in $\tesequence{\trail}$.

\begin{lemma}\label{lem:CESisComplete}
Let $\cpair$ be a construction containing a well-formed trail, $\trail$, with source $t'$ and target $t$. Then any trail, $\trail'$, where $\trail'\oplus \trail$ is a  well-formed, non-extendable trail, occurs in $\tesequence{\trail}$.
\end{lemma}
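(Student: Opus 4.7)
The plan is to prove this by induction on the length of $\trail'$, mirroring the recursive structure of definition~\ref{defn:DTS} used in the proof of lemma~\ref{lem:non-extendableWellFormedTrails}. Note first that since $\trail'$ is (necessarily) well-formed (its source and target lie in $\pa$, because the source of $\trail$ is the token $t'$ and the target of $\trail'\oplus\trail$ is the token $t$), and $\cgraphn$ is bipartite, the length of $\trail'$ must be even. So the induction naturally proceeds in steps of 2.

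For the base case, $\trail'$ has length $0$, i.e.\ $\trail'=[]_{t'}$. Then $\trail'\oplus\trail=\trail$, and our hypothesis that $\trail'\oplus\trail$ is non-extendable says that $\trail$ itself is non-extendable. Hence clause~(1) of definition~\ref{defn:DTS} applies, giving $\tesequence{\trail}=[[]_{t'}]$, which indeed contains $\trail'$.

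For the inductive step, assume the result holds for all shorter $\trail'$ and all choices of $\trail$, and suppose $\trail'=[a_1,\ldots,a_k,a_{k+1},a_{k+2}]$ with $k$ even. The key observation is that $\trail$ must be extendable by $a_{k+2}$: since $\trail'\oplus\trail$ is a trail, either $\trail\ne[]$ and $[a_{k+2}]\oplus\trail$ is a trail, or $\trail=[]_{t'}$ and $\tar{a_{k+2}}=t'$. Thus clause~(2) of definition~\ref{defn:DTS} applies, and writing $\inputsA{\sor{a_{k+2}}}=[b_1,\ldots,b_n]$ we have
\[
\tesequence{\trail}=\bigl(\tesequence{[b_1,a_{k+2}]\oplus \trail}\triangleleft [b_1,a_{k+2}]\bigr) \oplus \cdots \oplus \bigl(\tesequence{[b_n,a_{k+2}]\oplus \trail}\triangleleft [b_n,a_{k+2}]\bigr).
\]
Since $[a_{k+1},a_{k+2}]$ is a sub-trail of $\trail'\oplus\trail$, the arrow $a_{k+1}$ is incoming to $\sor{a_{k+2}}$, hence $a_{k+1}=b_i$ for some~$i$. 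Now set $\trail''=[a_1,\ldots,a_k]$ and $\trail^\star=[a_{k+1},a_{k+2}]\oplus\trail$; note that $\trail^\star$ is a well-formed trail (its source is a token, as $\trail'$ is well-formed), and $\trail''\oplus\trail^\star=\trail'\oplus\trail$ is well-formed and non-extendable. The inductive hypothesis applied to the shorter $\trail''$ and to $\trail^\star$ yields $\trail''\in\tesequence{\trail^\star}$, so $\trail''\oplus[a_{k+1},a_{k+2}]=\trail'$ appears in $\tesequence{[b_i,a_{k+2}]\oplus\trail}\triangleleft [b_i,a_{k+2}]$, and therefore in $\tesequence{\trail}$.

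The main obstacle I anticipate is essentially bookkeeping: correctly setting up the induction so that the inductive hypothesis applies to a \emph{different} base trail ($\trail^\star$ rather than $\trail$), which requires the statement to be quantified over all well-formed $\trail$ up-front. Once that is in place, the step that identifies $a_{k+1}$ with one of the incoming arrows $b_i$ of $\sor{a_{k+2}}$ is the only genuinely graph-theoretic point, and it follows immediately from the fact that $[a_{k+1},a_{k+2}]$ is a sub-trail, i.e.\ $\tar{a_{k+1}}=\sor{a_{k+2}}$.
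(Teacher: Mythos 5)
Your proof is correct and follows essentially the same route as the paper's: induction on the length of $\trail'$, peeling off the last two arrows $[a_{k+1},a_{k+2}]$ and applying the inductive hypothesis to the extended base trail $[a_{k+1},a_{k+2}]\oplus\trail$ before reassembling via clause~(2) of definition~\ref{defn:DTS}. You in fact spell out two points the paper leaves implicit — that $\trail$ is extendable by $a_{k+2}$ so clause~(2) applies, and that $a_{k+1}$ is one of the input arrows of the configurator $\sor{a_{k+2}}$ — so no changes are needed.
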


\begin{proof}
The proof is by induction, over the length of $\trail'$. For the base case, $\trail'$ has length 0, meaning that $\trail'=[]_{t'}$ and $\trail$ is non-extendable. By definition~\ref{defn:CES}, $\tesequence{\trail}=[[]_{t'}]=[\trail']$. Hence the  base case holds. Assume, for all $\trail'$ with length at most $k$, that $\trail'$ occurs in $\tesequence{\trail}$. For the inductive step, let $\trail'$ be such that $\trail'=[a_1,\ldots,a_k,a_{k+1},a_{k+2}]$. We must show that $\trail'$ occurs in $\tesequence{\trail}$. By assumption, $[a_1,\ldots,a_{k}]$ occurs in $\tesequence{[a_{k+1},a_{k+2}]\oplus \trail}$. By definition~\ref{defn:CES}, part (2), the trail
\begin{displaymath}
  [a_1,\ldots,a_{k}]\oplus [a_{k+1},a_{k+2}]=\trail'
\end{displaymath}
occurs in $\tesequence{\trail}$, as required. Hence, the trail $\trail'$ occurs in $\tesequence{\trail}$.
\end{proof}

\begin{corollary}\label{cor:TESAllandOnlyNonExtendable}
Let $\cpair$ be a construction. Then $\tesequence{[]_t}$ contains all and only the non-extendable trails in $\cpair$ that target $t$.
\end{corollary}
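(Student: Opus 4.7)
The corollary is a direct consequence of corollary~\ref{cor:TESOnlyNonExtendable} together with lemma~\ref{lem:CESisComplete}, specialised to $\trail = []_t$. The plan is simply to observe that these two prior results deliver the two halves of the ``all and only'' statement.

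First, I would note that $[]_t$ is a well-formed trail in $\cpair$ whose source and target both equal $t$, since $t$ is a token in $\cgraphn$. This is the hypothesis needed to apply the earlier results with $\trail = []_t$.

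For the ``only'' direction, I would invoke corollary~\ref{cor:TESOnlyNonExtendable} directly: it states that $\tesequence{[]_t}$ contains only non-extendable trails that target $t$. For the ``all'' direction, let $\trail'$ be any non-extendable trail in $\cpair$ targeting $t$. Because $\cgraphn$ is bipartite with $t$ a token, every non-extendable trail targeting $t$ is of even length and has a token as its source, so $\trail'$ is well-formed. Concatenating with $[]_t$ leaves $\trail'$ unchanged, i.e.\ $\trail' \oplus []_t = \trail'$, which is therefore a well-formed, non-extendable trail. Lemma~\ref{lem:CESisComplete}, applied to the well-formed trail $\trail = []_t$, then yields that $\trail'$ occurs in $\tesequence{[]_t}$, completing the proof.

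There is no substantive obstacle: the corollary is a packaging of lemma~\ref{lem:non-extendableWellFormedTrails} (via its restatement in corollary~\ref{cor:TESOnlyNonExtendable}) with lemma~\ref{lem:CESisComplete}. The only small subtlety worth flagging is the convention that when $\trail' = []_{t'}$ we must have $t' = t$ for $\trail' \oplus []_t$ to be defined; this is automatic because $\trail'$ is assumed to target $t$, so the target of $\trail'$ matches the source $t$ of $[]_t$, and concatenation is well-defined.
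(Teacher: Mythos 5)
Your proof is correct and matches the paper's treatment exactly: the corollary is stated there without a separate proof, being presented as the immediate combination of corollary~\ref{cor:TESOnlyNonExtendable} (the ``only'' half) and lemma~\ref{lem:CESisComplete} applied with $\trail=[]_t$ (the ``all'' half), which is precisely your argument. The one point you gloss slightly is why a non-extendable trail targeting $t$ must be well-formed --- bipartiteness alone does not force even length; the real reason is that a trail sourced on a configurator $u$ is always extendable, since $u$'s unique outgoing arrow is that trail's first arrow and hence none of $u$'s (at least one) incoming arrows can already occur in it --- but the paper leaves this implicit as well.
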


As we proceed, we often exploit $\tesequence{[]_t}$, with definition~\ref{defn:CTS} introducing convenient terminology and notation.

\begin{definition}\label{defn:CTS}
Let $\cpair$ be a construction. The \textit{complete trail sequence} of $\cpair$, denoted $\ctsequence{\cgraphn,t}$, is defined to be
$\ctsequence{\cgraphn,t}=\tesequence{[]_t}.$
\end{definition}

As our earlier examples indicate, the \textit{foundation token-sequence}, $\foundationsto{\cgraphn,t}$, contains precisely the sources of the trails in $\ctsequence{\cgraphn,t}$. Thus, to conclude Section~\ref{sec:constructions:CTSAF}, we firstly define a function that produces the sequence of trail-sources from a sequence of trails. Thereafter, we define $\foundationsto{\cgraphn,t}$, which captures the sequence of tokens, $[t_1,\ldots,t_n]$ from which a token, $t$, is constructed. From that, we can extract the \textit{foundation type-sequence}, replacing tokens with their assigned types.

\begin{definition}\label{defn:sourceSequence}
Let $\trailSeq=[\trail_1,\ldots,\trail_n]$ be a sequence of trails in some graph. The \textit{source sequence} of $\trailSeq$, denoted $\ssequence{\trailSeq}$, is defined to be
$\ssequence{\trailSeq} = [\source{\trail_1},\ldots,\source{\trail_n}].$
\end{definition}

\begin{definition}\label{defn:foundations}
Let $\cpair$ be a construction. The \textit{foundation token-sequence} of $\cpair$, denoted $\foundationsto{\cgraphn,t}$ is defined to be
$\foundationsto{\cgraphn,t}=\ssequence{\ctsequence{\cgraphn,t}}$.
The \textit{foundation type-sequence} of $\cpair$, denoted $\foundationsty{\cgraphn,t}$,
is obtained from $\foundationsto{\cgraphn,t}$ by replacing each token with its assigned type. The \textit{arity} of $\cpair$ is the length of $\foundationsto{\cgraphn,t}$.
\end{definition}

We conclude Section~\ref{sec:constructions:CTSAF} with a remark about the importance of foundation sequences. Section~\ref{sec:structuralTransformations} elucidates how Representation Systems Theory supports the structural transformation of a construction, $\cpair$, into another construction, $\cpaird$, which may not be known in advance, but which may be built to satisfy some desirable properties.  By exploiting the theory of patterns, developed in Section~\ref{sec:patterns}, it is possible to \textit{describe} $\cpair$ using a pattern, $\ppair$, which is itself a construction similar to $\cpair$, but one which `abstracts' the types and `forgets' the specific tokens used in $\cpair$; so $\pgraphn$ is isomorphic to $\cgraphn$, where vertices in $\pgraphn$ are labelled with super-types of their corresponding vertices in $\cgraphn$. In turn, $\ppair$ can be turned into another pattern, $\ppaird$, that describes the sought-after $\cpaird$. The last step in this process  -- finding
 a specific $\cpaird$ -- involves determining how to replace each vertex, $v'$, in $\pgraphn'$ with a suitable token, $t_{v'}$ to yield $\cgraphn'$. Here, foundation sequences are particularly useful: the structural transformation process ensures that $\foundationsty{\pgraphn',v'}$ is a generalisation of $\foundationsty{\cgraphn',t'}$.   Thus, we can \textit{infer} possible token-sequences that could form $\foundationsto{\cgraphn',t'}$. Subsequently, given a \textit{functional} construction space (e.g. any grammatical space), if we are provided with $\foundationsto{\cgraphn',t'}$ then one can \textit{uniquely instantiate the rest of the types as tokens} in $\pgraphn'$ to yield $\cpaird$, provided such a $\cpaird$ exists. Even when a construction space is not functional, knowing $\foundationsto{\cgraphn',t'}$ can sometimes allow the rest of the types to be instantiated as tokens, but not necessarily uniquely. This powerful insight demonstrates the role that foundations will play in Section~\ref{sec:structuralTransformations}.

\subsection{Generators, Induced Constructions and Splits}\label{sec:constructions:GICAS}

The major goal of this section is to demonstrate how we can decompose a construction, $\cpair$, into a collection of sub-constructions such that (a) the complete trail sequence, $\ctsequence{\cgraphn,t}$, can be reconstituted from the complete trail sequences of the sub-constructions \textit{and} (b) the foundation token-sequence for $\cpair$ is readily derivable from the sub-constructions. The general approach is to, firstly, identify a sub-construction, $\genpair$, called a \textit{generator}, of $\cpair$, that also constructs $t$. Any given generator induces a sequence of sub-constructions\footnote{If $\cgraphn$ is a tree then, taking $F_V=\{t' : t'\in \foundationsto{\cgraphn',t}\}$ as a vertex cut set, these induced sub-constructions would be readily \textit{derivable} from the components of $\cgraphn\backslash F_V$. However, constructions are not necessarily trees, or even DAGs, so the process of decomposing a construction is not straightforward.} of $\cpair$. By design, $\ctsequence{\cgraphn,t}$ and $\foundationsto{\cgraphn,t}$ can be obtained from the generator and the induced sub-constructions, as we shall prove in theorems~\ref{thm:splitPreservesCompleteTrailSequence} and~\ref{thm:splitPreservesfoundations}. Notably, the generator and the induced sub-constructions completely `cover' $\cpair$: all vertices and arrows in $\cgraphn$ occur in the generator or in an induced sub-construction, captured in theorem~\ref{thm:SplitsCoverConstruction}.  A generator together with its induced constructions form a \textit{split}. This crafted approach to decomposing a construction into simpler sub-constructions is vital for our approach to describing representational systems (Section~\ref{sec:patterns}) and for structural transformations (Section~\ref{sec:structuralTransformations}).

The first step in this endeavour is to define a \textit{generator}. We assume, in all that follows, that the generator is obtained from the same structure graph, $\graphn$, as $\cpair$, given a construction space, $\cspacen=\cspace$.  This means that $\cpair$ and any of its generators are defined over the same type system and constructor specification. As previously noted, prior to definition~\ref{defn:construction}, for readability we omit explicit mention of $\cspacen$ from the following theoretical development and leave it implied.

\begin{definition}\label{defn:generator}
Let $\cpair$ be a construction. A \textit{generator} of $\cpair$ is a construction, $\genpair$, such that $\cgraphn'\subseteq \cgraphn$.
\end{definition}

Example~\ref{ex:generatorsInducedConsAndSplits} will use a construction, $\cpair$,  to demonstrate the concepts of generators, induced constructions and splits. The construction $\cpair$ is not contextualised within some specific representational system in order to allow a variety of features of splits to be more succinctly exposed.

\begin{example}[Generators, Induced Constructions, and Splits]\label{ex:generatorsInducedConsAndSplits}
Consider the following construction, $\cpair$, on the left, alongside one of its generators, $\genpair$:
\begin{center}
\begin{tikzpicture}[construction,yscale=0.7,xscale=0.85]\small
\node[termrep] (v) at (3,-0.1) {$t$};
\node[constructorrep] (u) at (3,-1.1) {$u$};
\node[termrep] (v8) at (3,-2.3) {$t_2$};
\node[constructorrep] (u8) at (3,-3.4) {$u_2$};
\node[termrep] (v1) at (1,-1.8) {$t_1$};
\node[termrep,\termN, \found] (v2) at (5,-1.8) {$t_3$};
\node[termrep] (v4) at (1.8,-4.2) {$t_5$};
\node[termrep] (v5) at (4.2,-4.2) {$t_6$};
\node[termrep] (v3) at (0.2,-4.2) {$t_4$};
\node[constructorNW={}] (u1) at (1,-2.9) {$u_1$};
\node[constructorNE={}] (u2) at (5,-2.9) {$u_3$};
\node[constructorW={}] (u3) at (0.2,-5.3) {$u_4$};
\node[termrep,\termN,\found] (v9) at (0.2,-6.6) {$t_8$}; %%%
\node[termrep] (v6) at (5.8,-4.2) {$t_7$};
\node[constructorW={}] (u4) at (1.8,-5.3) {$u_5$};
\node[constructorSE={}] (u5) at (5.8,-5.3) {$u_7$};
\node[termrep,\termN, \found] (v7) at (3.8,-6.6) {$t_9$};
\node[constructorGE={}] (u9) at (4.2,-5.3) {$u_6$};
\path[->]
(u) edge[bend right = 0] (v)
(v1) edge[bend right = -10] node[index label] {1} (u)
(v2) edge[bend right = 10] node[index label] {3} (u)
(u1) edge[bend right = 0] (v1)
(u2) edge[bend right = 0] (v2)
(u3) edge[bend right = 0] (v3)
(v9) edge[bend right = 0] node[index label] {1} (u3) %%%
(v3) edge[bend right = 0] node[index label] {1} (u1)
(v4) edge[bend right = 10] node[index label] {2} (u1)
(v5) edge[bend right = -10] node[index label] {1} (u2)
(v6) edge[bend right = 10] node[index label] {2} (u2)
(u4) edge[bend right = 0] (v4)
(v7) edge[bend right = -10] node[index label] {1} (u4)
(v7) edge[bend right = 10] node[index label] {2} (u5)
(v2) edge[bend left = 60] node[index label] {1} (u5)
(u5) edge[bend right = 0] (v6)
(v3) edge[bend left = 30] node[index label] {3} (u1)
(v8) edge[bend right = 0] node[index label] {2} (u)
(u8) edge[bend right = 0] (v8)
(v4) edge[bend right = -10] node[index label] {1} (u8)
(v4) edge[bend right = -10] node[index label] {1} (u9)
(u9) edge[bend right = 0] (v5);
\end{tikzpicture}%$
\hspace{0.9cm}
\begin{tikzpicture}[construction,yscale=0.7,xscale=0.85]\small
\node[termrep] (v) at (3,-0.1) {$t$};
\node[constructorrep] (u) at (3,-1.1) {$u$};
\node[termrep] (v8) at (3,-2.3) {$t_2$};
\node[constructorrep] (u8) at (3,-3.4) {$u_2$};
\node[termrep] (v1) at (1,-1.8) {$t_1$};
\node[termrep,\termN, \found] (v2) at (5,-1.8) {$t_3$};
\node[termrep] (v4) at (1.8,-4.2) {$t_5$};
\node[termrep,\termN, \found] (v3) at (0.2,-4.2) {$t_4$};
\node[constructorNW={}] (u1) at (1,-2.9) {$u_1$};
\node[constructorW={}] (u4) at (1.8,-5.3) {$u_5$};
\node[termrep,\termN, \found] (v7) at (3.8,-6.6) {$t_9$};
\path[->]
(u) edge[bend right = 0] (v)
(v1) edge[bend right = -10] node[index label] {1} (u)
(v2) edge[bend right = 10] node[index label] {3} (u)
(u1) edge[bend right = 0] (v1)
(v3) edge[bend right = 0] node[index label] {1} (u1)
(v4) edge[bend right = 10] node[index label] {2} (u1)
(u4) edge[bend right = 0] (v4)
(v7) edge[bend right = -10] node[index label] {1} (u4)
(v3) edge[bend left = 30] node[index label] {3} (u1)
(v8) edge[bend right = 0] node[index label] {2} (u)
(u8) edge[bend right = 0] (v8)
(v4) edge[bend right = -10] node[index label] {1} (u8)
;
\end{tikzpicture}
\end{center}
The first part of this example explores the foundation token-sequence of $\cpair$, which relies on constructing $\ctsequence{\cgraphn,t}$. The first trail in $\ctsequence{\cgraphn,t}$ is $t_8\arrow[1] u_4 \arrow t_4 \arrow[1] u_1 \arrow t_1 \arrow[1] u \arrow t$, which is obtained by traversing backwards along the arrows from $t$, choosing arrows with index 1 (since this is the first trail) each time there is a choice to be made and stopping when the trail is non-extendable (i.e. no more arrows can be added at the beginning of the trail because either there is no arrow or the arrow is already in the trail). The second and third trails are
\begin{eqnarray*}
 & & t_9\arrow[1] u_5 \arrow t_5\arrow[2] u_1 \arrow t_1 \arrow[1] u \arrow t, \textup{and}  \\
& & t_8\arrow[3] u_4 \arrow t_4 \arrow[1] u_1 \arrow t_1 \arrow[1] u \arrow t;
\end{eqnarray*}
we omit the remaining four trails. The foundation token-sequence of $\cpair$ is obtained from the sources of the trails in $\ctsequence{\cgraphn,t}$: $\foundationsto{\cgraphn,v}=[t_8,t_9,t_8,t_9,t_9,t_3,t_9]$ and the foundations are highlighted in the graph.  The approach taken to produce induced constructions of $\cpair$ given $(\cgraphn',t)$, relies on using the complete trail sequence, $\ctsequence{\cgraphn',t}  =  [\trail_1, \trail_2, \trail_3, \trail_4, \trail_5]$, of $\genpair$ where:
\begin{eqnarray*}
\trail_1 & = & t_4\arrow[1] u_1\arrow t_1 \arrow[1] u \arrow t\\
\trail_2 & = & t_9\arrow[1] u_5\arrow t_5\arrow[2] u_1\arrow t_1 \arrow[1] u \arrow t\\
\trail_3 & = & t_4\arrow[3] u_1\arrow t_1 \arrow[1] u \arrow t\\
\trail_4 & = & t_9\arrow[1] u_5\arrow t_5\arrow[1] u_2\arrow t_2 \arrow[2] u \arrow t\\
\trail_5 & = & t_3 \arrow[3] u \arrow t.
\end{eqnarray*}
Each $\trail_i$ in $\ctsequence{\cgraphn',t}$ gives rise to an \textit{induced construction}, $\icp{\trail_i}=(\icgp{\trail_i},\source{\trail_i})$. The \textit{induced construction graph}, $\icgp{\trail_i}$, of $\trail_i$ is formed from the trail extension sequence, $\tesequence{\trail_i}$, in $\cpair$; we are exploiting more than one construction -- $\cpair$ and $\genpair$ -- so we write $\tesequence{\trail_i,\cpair}$ to indicate that we are creating the trail extension sequence of $\trail_i$ in $\cpair$. The graph we require, $\icgp{\trail_i}$, is the subgraph of $\cgraphn$ that contains precisely the arrows that occur in the trails in $\tesequence{\trail_i,\cpair}$. The trail $\trail_1$ has $\tesequence{\trail_1,(\cgraphn,t)}=[t_8\arrow[1] u_4\arrow t_4]$, and the trail $t_8\arrow[1] u_4\arrow t_4$ is, essentially, the induced construction graph that we seek. All five induced constructions, alongside the generator, can be seen here:
\begin{center}
\begin{tikzpicture}[construction,yscale=0.7,xscale=0.85]\small
\node[termrep] (v) at (3,-0.1) {$t$};
\node[constructorrep] (u) at (3,-1.1) {$u$};
\node[termrep] (v8) at (3,-2.3) {$t_2$};
\node[constructorrep] (u8) at (3,-3.3) {$u_2$};

\node[termrep] (v1) at (1,-1.8) {$t_1$};
\node[termrep,\termN, \found] (v2) at (5,-1.8) {$t_3$};
\node[termrep] (v4) at (1.8,-4.1) {$t_5$};
\node[termrep,\termN, \found] (v3) at (-0.1,-4.1) {$t_4$};
\node[constructorNW={}] (u1) at (1,-2.8) {$u_1$};
\node[constructorW={}] (u4) at (1.8,-5.2) {$u_5$};
\node[termrep,\termN, \found] (v7) at (3.4,-6.2) {$t_9$};
% b1
\node[termrep] (v31) at (0.6,-5.7) {$t_4$};
\node[constructorW={}] (u31) at (0.6,-6.7) {$u_4$};
\node[termrep,\termN,\found] (v91) at (0.6,-8) {$t_8$};
% b2
\node[termrep, \termN, \found] (v73) at (2.3,-8) {$t_9$};
%
% b3
\node[termrep] (v33) at (-1.6,-5.6) {$t_4$};
\node[constructorW={}] (u33) at (-1.6,-6.6) {$u_4$};
\node[termrep,\termN,\found] (v93) at (-1.6,-7.9) {$t_8$};
%
% b4
\node[termrep, \termN, \found] (v21) at (8.3,-2.8) {$t_3$};
\node[termrep] (v41) at (5.4,-5.1) {$t_5$};
\node[termrep] (v5) at (7.5,-5.1) {$t_6$};
\node[constructorNE={}] (u2) at (8.3,-3.9) {$u_3$};
\node[termrep] (v6) at (9.1,-5.1) {$t_7$};
\node[constructorGW={}] (u41) at (5.4,-6.2) {$u_5$};
\node[constructorSE={}] (u5) at (9.1,-6.2) {$u_7$};
\node[constructorGE={}] (u9) at (7.5,-6.2) {$u_6$};
\node[termrep,\termN, \found] (v72) at (7.2,-7.4) {$t_9$};
%
% b5
\node[termrep, \termN, \found] (v71) at (4.5,-8) {$t_9$};
\path[->]
(u) edge[bend right = 0] (v)
(v1) edge[bend right = -10] node[index label] {1} (u)
(v2) edge[bend right = 10] node[index label] {3} (u)
(u1) edge[bend right = 0] (v1)
(v3) edge[bend right = 0] node[index label] {1} (u1)
(v4) edge[bend right = 10] node[index label] {2} (u1)
(u4) edge[bend right = 0] (v4)
(v7) edge[bend right = -10] node[index label] {1} (u4)
(v3) edge[bend left = 30] node[index label] {3} (u1)
(v8) edge[bend right = 0] node[index label] {2} (u)
(u8) edge[bend right = 0] (v8)
(v4) edge[bend right = -10] node[index label] {1} (u8)
;
\path[->]
(u5) edge[bend right = 0] (v6)
(v72) edge[bend right = 10] node[index label] {2} (u5)
(v21) edge[bend left = 60] node[index label] {1} (u5)
(u2) edge[bend right = 0] (v21)
(v5) edge[bend right = -10] node[index label] {1} (u2)
(v6) edge[bend right = 10] node[index label] {2} (u2)
(u33) edge[bend right = 0] (v33)
(u31) edge[bend right = 0] (v31)
(v91) edge[bend right = 0] node[index label] {1} (u31)
(v93) edge[bend right = 0] node[index label] {1} (u33)
(v41) edge[bend right = -10] node[index label] {1} (u9)
(u9) edge[bend right = 0] (v5)
(u41) edge[bend right = 0] (v41)
(v72) edge[bend right = -10] node[index label] {1} (u41)
;
\path[-,color=\conC, very thick, dashed]
(v33) edge[bend right = 0] node[fill=white, font=\scriptsize,inner sep = 0.5pt] {$\icp{\trail_3}$} (v3)
(v31) edge[bend right = 0] node[fill=white, font=\scriptsize,inner sep = 0.5pt] {$\icp{\trail_1}$} (v3)
(v73) edge[bend right = 0] node[fill=white, font=\scriptsize,inner sep = 0.5pt] {$\icp{\trail_2}$} (v7)
(v71) edge[bend right = 0] node[fill=white, font=\scriptsize,inner sep = 0.5pt] {$\icp{\trail_4}$} (v7)
(v21) edge[bend right = 0] node[fill=white, font=\scriptsize,inner sep = 0.5pt] {$\icp{\trail_5}$} (v2)
;
\end{tikzpicture}
\end{center}
Each induced construction is connected to a foundation of the generator by way of a dashed line. For instance, the trail $\trail_1$ induces the construction $\icp{\trail_1}=(\neigh{u_4},t_4)$; $\trail_3$ induces the same construction: $\icp{\trail_3}=\icp{\trail_1}$. The ordered sequence of induced constructions, derived from $\ctsequence{\cgraphn',t}$, is $[\icp{\trail_1},\icp{\trail_2},\icp{\trail_3},\icp{\trail_4},\icp{\trail_5}]$ and we see that
\begin{eqnarray*}
\ctsequence{\cgraphn,v} & = & (\ctsequence{\icp{\trail_1}}\triangleleft \trail_1)\oplus (\ctsequence{\icp{\trail_2}}\triangleleft \trail_2) \oplus (\ctsequence{\icp{\trail_3}}\triangleleft \trail_3) \oplus \\
 & & (\ctsequence{\icp{\trail_4}}\triangleleft \trail_4) \oplus (\ctsequence{\icp{\trail_5}}\triangleleft \trail_5).
\end{eqnarray*}
and
\begin{eqnarray*}
\foundationsto{\cgraphn,v}  & = & [t_8,t_9,t_8,t_9,t_9,t_3,t_9] \\
                        & = & [t_8] \oplus [t_9] \oplus [t_8] \oplus [t_9]\oplus  [t_9,t_3,t_9] \\
                        & = & \foundationsto{\icp{\trail_1}}\oplus \foundationsto{\icp{\trail_2}} \oplus \foundationsto{\icp{\trail_3}} \oplus \foundationsto{\icp{\trail_4}} \oplus \foundationsto{\icp{\trail_5}}.
\end{eqnarray*}
Each induced construction, $\icp{\trail_i}$, necessarily shares one vertex, $\source{\trail_i}$, with the generator.  Notice also that induced construction $\icp{\trail_5}$ shares other vertices and arrows with the generator, such as the vertex $t_5$ and the arrow from $u_5$ to $t_5$.
\end{example}

Our next task is to define the graph obtained from a sequence of trails, $\trailSeq$. For our purposes, it is sufficient to define this graph in the context of constructions and trails.

\begin{definition}\label{defn:tr-graph}
Let $\cpair$ be a construction and let $\trailSeq=[\trail_1,\ldots,\trail_n]$ be a (possibly empty) sequence of trails in $\cgraphn$. The \textit{$\trailSeq$-graph}, denoted $\trailToGraph{\trailSeq}$, is the sub-graph of $\cgraphn$ containing precisely the arrows that occur in some trail in $\trailSeq$, their incident vertices, and, given any empty trail, $[]_t$, in $\trailSeq$, its source vertex, $t$.
\end{definition}

Before we define induced constructions, we explore one further aspect. In example~\ref{ex:generatorsInducedConsAndSplits}, a notable feature is that the induced constructions that happened to construct the same token are identical: $\icp{\trail_1}=\icp{\trail_3}$ and $\icp{\trail_2}=\icp{\trail_4}$. In part, this is because $\trail_1$ and $\trail_3$ have the same associated vertex sequence, $[t_4,u_1,t_1,u,t]$, as do $\trail_2$ and $\trail_4$, although that property alone is not sufficient, in general, for the equality of induced constructions.

\begin{example}[Distinct Induced Constructions with the Same Construct]\label{ex:distinctICwithSC}
Consider the construction $\cpair$, on the left, alongside a generator, $\genpair$, on the right:
\begin{center}
	\adjustbox{valign=t}{%
		\begin{tikzpicture}[construction,yscale=0.75]\small
		\node[termrep] (v) at (3.2,3.7) {$t$};
		\node[termrep,\termN, \found] (v1) at (1.8,2.2) {$t_1$};
		\node[termrep,\termN, \found] (v3) at (4.6,2.2) {$t_2$};
		\node[constructorNW={}] (u) at (3.2,2.8) {$u$};
		\node[constructorW={}] (u1) at (2.2,1.2) {$u_1$};
		\node[constructorGE={}] (u3) at (4.2,1.2) {$u_2$};
		\node[termrep,\termN, \found] (v4) at (3.2,0.4) {$t_3$};
		\node[constructorNE={}] (u4) at (3.2,-0.6) {$u_3$};
		\node[termrep] (v5) at (2.2,-1.5) {$t_4$};
        \node[termrep] (v6) at (4.2,-1.5) {$t_5$};
		\node[constructorW={}] (u5) at (1.6,0) {$u_4$};
		\node[constructorSE={}] (u6) at (4.8,0) {$u_5$};
		\path[->]
		(u) edge[bend right = 0] (v)
		(v1) edge[bend right = -10] node[index label] {1} (u)
		(v3) edge[bend right = 10] node[index label] {2} (u)
		(u1) edge[bend right = 0] (v1)
		(u3) edge[bend right = 0] (v3)
		(v4) edge[bend right = -10] node[index label] {1} (u1)
		(v4) edge[bend right = 10] node[index label] {1} (u3)
		(u4) edge[bend right = 0] (v4)
		(v1) edge[bend right = 25] node[index label] {1} (u5)
		(v3) edge[bend right = -25] node[index label] {1} (u6)
		(u6) edge[bend right = 00] (v6)
		(u5) edge[bend right = 00] (v5)
		(v5) edge[bend right = 10] node[index label] {1} (u4)
        (v6) edge[bend right = -10] node[index label] {2} (u4)
		;
		\end{tikzpicture}}
\hspace{2.7cm}
\adjustbox{valign=t}{%
\begin{tikzpicture}[construction, yscale=0.75]\small
\node[termrep] (v) at (3.2,3.7) {$t$};
\node[termrep] (v1) at (1.8,2.2) {$t_1$};
\node[termrep] (v3) at (4.6,2.2) {$t_2$};
\node[constructorNW={}] (u) at (3.2,2.8) {$u$};
\node[constructorW={}] (u1) at (2.2,1.2) {$u_1$};
\node[constructorGE={}] (u3) at (4.2,1.2) {$u_2$};
\node[termrep,\termN, \found] (v4) at (3.2,0.4) {$t_3$};
\path[->]
(u) edge[bend right = 0] (v)
(v1) edge[bend right = -10] node[index label] {1} (u)
(v3) edge[bend right = 10] node[index label] {2} (u)
(u1) edge[bend right = 0] (v1)
(u3) edge[bend right = 0] (v3)
(v4) edge[bend right = -10] node[index label] {1} (u1)
(v4) edge[bend right = 10] node[index label] {1} (u3)
;
\end{tikzpicture}}
\end{center}
Here, $\ctsequence{\cgraphn',t}=[\trail_1,\trail_2]$ where
\begin{eqnarray*}
\trail_1 & = & t_3\arrow[1] u_1 \arrow t_1 \arrow[1] u \arrow t\\
\trail_2 & = & t_3\arrow[1] u_2\arrow t_2\arrow[2] u \arrow t.
\end{eqnarray*}
The trails $\trail_1$ and $\trail_2$ are both sourced on $t_3$. However, they give rise to different induced constructions, $(\icgp{\trail_1},t_3)$, on the left, and $(\icgp{\trail_2},t_3)$, on the right:
\begin{center}
\begin{tikzpicture}[construction,yscale=0.75]\small
\node[termrep,\termN, \found] (v1) at (1.8,2.2) {$t_1$};
\node[termrep] (v3) at (4.6,2.2) {$t_2$};
\node[constructorGE={}] (u3) at (4.2,1.2) {$u_2$};
\node[termrep,\termN, \found] (v4) at (3.2,0.4) {$t_3$};
\node[constructorNE={}] (u4) at (3.2,-0.6) {$u_3$};
\node[termrep] (v5) at (2.2,-1.5) {$t_4$};
\node[constructorW={}] (u5) at (1.6,0) {$u_4$};
\node[constructorSE={}] (u6) at (4.8,0) {$u_5$};
 \node[termrep] (v6) at (4.2,-1.5) {$t_5$};
\path[->]
(u3) edge[bend right = 0] (v3)
(v4) edge[bend right = 10] node[index label] {1} (u3)
(u4) edge[bend right = 0] (v4)
(v1) edge[bend right = 25] node[index label] {1} (u5)
(v3) edge[bend right = -25] node[index label] {1} (u6)
(u6) edge[bend right = 0] (v6)
(u5) edge[bend right = 0] (v5)
(v5) edge[bend right = 10] node[index label] {1} (u4)
(v6) edge[bend right = -10] node[index label] {2} (u4)
;
\end{tikzpicture}
\hspace{2.3cm}
\begin{tikzpicture}[construction,yscale=0.75]
\node[termrep] (v1) at (1.8,2.2) {$t_1$};
\node[termrep,\termN, \found] (v3) at (4.6,2.2) {$t_2$};
\node[constructorW={}] (u1) at (2.2,1.2) {$u_1$};
\node[termrep,\termN, \found] (v4) at (3.2,0.4) {$t_3$};
\node[constructorNE={}] (u4) at (3.2,-0.6) {$u_3$};
\node[termrep] (v5) at (2.2,-1.5) {$t_4$};
\node[constructorW={}] (u5) at (1.6,0) {$u_4$};
\node[constructorSE={}] (u6) at (4.8,0) {$u_5$};
 \node[termrep] (v6) at (4.2,-1.5) {$t_5$};
\path[->]
(u1) edge[bend right = 0] (v1)
(v4) edge[bend right = -10] node[index label] {1} (u1)
(u4) edge[bend right = 0] (v4)
(v1) edge[bend right = 25] node[index label] {1} (u5)
(v3) edge[bend right = -25] node[index label] {1} (u6)
(u6) edge[bend right = 0] (v6)
(u5) edge[bend right = 0] (v5)
(v5) edge[bend right = 10] node[index label] {1} (u4)
(v6) edge[bend right = -10] node[index label] {2} (u4)
;
\end{tikzpicture}
\end{center}

Essentially, these two induced constructions are different due to the definition of a trail extension sequence, from which the foundations are derived. In particular, given any trail, $\trail$, no trail in $\tesequence{\trail}$ shares an arrow with $\trail$. The trail extension sequence for $\trail_1$ includes the trail $\trail_1'=t_3\arrow[1] u_2 \arrow t_2 \arrow[1] u_5 \arrow t_5 \arrow[2] u_3 \arrow t_3$, which \textit{shares arrows} with $\trail_2$. This implies that $\trail_1'$ is not in $\tesequence{\trail_2}$. In fact, no trail in $\tesequence{\trail_2}$ includes the arrow $t_3\arrow[1] u_2$, for example, since this arrow \textit{is part of} $\trail_2$. Thus, the two induced constructions, whilst both constructing $t_3$, are necessarily distinct.
\end{example}

In summary,  knowing only the source of a trail, $\trail$, in a generator, $\genpair$, is not sufficient to identify its induced construction: the whole of $\trail$ must be considered as well as the entirety of $\cgraphn$. We now define the induced construction of a well-formed trail, $\trail$, whose target is $t$, the construct of $\cpair$.

\begin{definition}\label{defn:inducedConstruction}
Let $\cpair$ be a construction containing a well-formed trail, $\trail$, with target $t$. The \textit{induced construction graph} and the \textit{induced construction} of $\trail$, denoted $\icgp{\trail}$ and $\icp{\trail}$ respectively, are defined as follows:
\begin{enumerate}
\item $\icgp{\trail}$ is the $\tesequence{\trail}$-graph, and
\item $\icp{\trail}=(\icgp{\trail},\source{\trail})$.
\end{enumerate}

\end{definition}
\noindent We now prove that induced constructions are indeed constructions.

\begin{lemma}
Let $\cpair$ be a construction containing a well-formed trail, $\trail$, with target $t$. The pair $(\icgp{\trail},\source{\trail})$ is a construction.
\end{lemma}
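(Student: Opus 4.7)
To prove $(\icgp{\trail}, \source{\trail})$ is a construction, I would verify the two clauses of definition~\ref{defn:construction} in turn, exploiting the fact that $\icgp{\trail}$ is the $\tesequence{\trail}$-graph and hence a subgraph of $\cgraphn$ (via definition~\ref{defn:tr-graph}).

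For the first clause -- that $\icgp{\trail}$ is a finite, uni-structured structure graph -- finiteness and uni-structuredness are immediate by inheritance: any subgraph of the finite, uni-structured $\cgraphn$ retains both properties. The substantive part is showing $\icgp{\trail}$ is a structure graph, which requires that for every configurator $u$ in $\icgp{\trail}$, the neighbourhood $\neigh{u, \icgp{\trail}}$ is a configuration of $\consl(u)$. Since $\cgraphn$ itself is a structure graph, it suffices to show $\neigh{u, \icgp{\trail}} = \neigh{u, \cgraphn}$, i.e., that every arrow of $\cgraphn$ incident with $u$ lies in some trail of $\tesequence{\trail}$. This is the main obstacle, which I address below.

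For the second clause, I first note $\source{\trail}$ is a token because $\trail$ is well-formed. It also lies in $\icgp{\trail}$: if $\trail$ is non-extendable then $\tesequence{\trail}=[[]_{\source{\trail}}]$, so $\source{\trail}$ is included by definition of the $\tr$-graph; if $\trail$ is extendable by $a$ then every trail in $\tesequence{\trail}$ contains $a$ as its last arrow, and $\source{\trail}=\tar{a}$ is an incident vertex. For the reachability requirement -- that every vertex $v$ in $\icgp{\trail}$ is the source of a trail in $\icgp{\trail}$ targeting $\source{\trail}$ -- observe that each such $v$ is either $\source{\trail}$ itself (use $[]_{v}$) or is incident with an arrow in some $\trail'' \in \tesequence{\trail}$. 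By lemma~\ref{lem:non-extendableWellFormedTrails}(1) the concatenation $\trail''\oplus \trail$ targets $t$, which means $\target{\trail''}=\source{\trail}$; the suffix of $\trail''$ beginning at $v$ therefore provides a trail from $v$ to $\source{\trail}$ that lies entirely within $\icgp{\trail}$.

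The remaining obstacle is proving the neighbourhood equality $\neigh{u,\icgp{\trail}}=\neigh{u,\cgraphn}$ for each configurator $u$ in $\icgp{\trail}$. The key observation is that the recursion of definition~\ref{defn:CES} is \emph{input-exhaustive}: whenever $\tesequence{\cdot}$ is computed on an extendable trail and extends through a configurator $u=\sor{a}$, every one of $u$'s input arrows $a_1,\ldots,a_n$ spawns a branch of the concatenation. I would prove, by structural induction on this recursion, the following claim: if any arrow incident with $u$ in $\cgraphn$ appears in some trail of $\tesequence{\trail}$, then the unique outgoing arrow of $u$ and all $n$ of its incoming arrows appear in trails of $\tesequence{\trail}$. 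The inductive step uses bipartite alternation: since $u$ can be reached within a trail only by traversing first one of its incoming arrows and then exiting through its outgoing arrow (or vice versa), any occurrence of $u$ in a trail $\trail''\in\tesequence{\trail}$ must, when traced back through the defining recursion, arise from a step that extends some intermediate trail by $u$'s (unique) outgoing arrow; by the input-exhaustive property, that step then enumerates all $n$ incoming arrows of $u$, each placed into a separate trail of $\tesequence{\trail}$. Hence $\neigh{u,\icgp{\trail}}=\neigh{u,\cgraphn}$, completing the proof that $\icgp{\trail}$ is a structure graph and therefore that $(\icgp{\trail},\source{\trail})$ is a construction.
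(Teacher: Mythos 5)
Your proof is correct, and for the two conditions the paper actually checks it proceeds exactly as the paper does: $\source{\trail}$ is a token by well-formedness, finiteness and uni-structuredness are inherited from $\cgraphn$, and reachability of $\source{\trail}$ from any vertex $v$ of $\icgp{\trail}$ follows because $v$ lies on some trail of $\tesequence{\trail}$, all of whose members target $\source{\trail}$ (the paper cites the definition of a trail extension sequence where you cite lemma~\ref{lem:non-extendableWellFormedTrails}; same content). Where you genuinely depart from the paper is in treating the structure-graph requirement as the ``main obstacle'': the paper's proof simply asserts that $\icgp{\trail}$ is ``finite and uni-structured since it is a subgraph of $\cgraphn$'' and never verifies that each configurator's neighbourhood in $\icgp{\trail}$ is a full configuration, even though definition~\ref{defn:construction} demands a structure graph and a subgraph of a structure graph need not be one. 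Your input-exhaustiveness argument --- that an incoming arrow of $u$ can only enter a trail of $\tesequence{\trail}$ paired with $u$'s unique outgoing arrow, at which point the recursion of definition~\ref{defn:CES} enumerates \emph{all} of $u$'s incoming arrows --- is sound (bipartite alternation plus uni-structuredness make the case analysis work) and supplies a step the paper leaves implicit. So your proof is not just correct but strictly more complete than the one in the paper; the cost is only the extra induction on the recursion defining $\tesequence{\trail}$.
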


\begin{proof}
The vertex $\source{\trail}$ is a token since $\trail$ is well-formed. Clearly, $\icgp{\trail}$ is finite and uni-structured since it is a subgraph of $\cgraphn$. Let $v$ be a vertex in  $\icgp{\trail}$. By the definition of an induced construction, this implies that there is a trail, $\trail'$, in $\tesequence{\trail,\cpair}$ whose associated vertex sequence includes $v$, either because it is the source of an arrow in $\trail'$ or it is the source of $\trail'$. By the definition of a trail extension sequence, the target of $\trail'$ is $\source{\trail}$. Hence, there is a trail from $v$ to $\source{\trail}$, as required. Hence $(\icgp{\trail},\source{\trail})$ is a construction.
\end{proof}

\begin{example}[Choices of Generator and Induced Construction Sequences]\label{ex:choicesOfGenAndICS}
Returning to the graphical representation of vectors, in Section~\ref{sec:constructions:AMEGROV} we saw the following construction of $t_3$ in the entailment layer:
\begin{center}
\vectorsCon
\end{center}
There are five different choices of generator of this construction: the trivial construction of $t_3$, $(\neigh{u},t_3)$, $(\neigh{u_1}\cup \neigh{u},t_3)$, $(\neigh{u_2}\cup \neigh{u},t_3)$, and $(\neigh{u_1}\cup \neigh{u_2}\cup \neigh{u},t_3)$.  The generator $(\neigh{u_1}\cup \neigh{u},t_3)$ is shown below, together with its three induced constructions, where the dashed red lines identify, for each induced construction,  its construct and the associated foundation of the generator:
\begin{center}
\begin{tikzpicture}[construction]
\begin{scope}[rotate=-90]
\node[termN={$t_3$},inner sep = 1.5pt] (vc1) at (3.2,7.6) {\scalebox{.5}{\vectorVis{(1,1.5)}{0.4}}};
\node[termN={$t_3$},inner sep = 1.5pt] (v) at (3.2,5.6) {\scalebox{.5}{\vectorVis{(1,1.5)}{0.4}}};
\node[termpos={$t_4$}{20}{0.17cm},inner sep = 1.5pt] (v1) at (2.4,2.6) {\scalebox{.5}{\vectorVis{(3,4)}{0.4}}};
\node[termpos={$t_4$}{20}{0.17cm},inner sep = 1.5pt] (v1c) at (2.4,-1.4) {\scalebox{.5}{\vectorVis{(3,4)}{0.4}}};
\node[termpos={$t_5$}{-20}{0.17cm},inner sep = 1.5pt] (v2) at (4.0,2.6) {\scalebox{.5}{\vectorVis{(2,2.5)}{0.4}}};
\node[constructorEpos={$\GRVcvecsub$}{60}{0.17cm}] (u) at (3.2,4.1) {$u$};
\node[constructorEpos={$\GRVcvecadd$}{80}{0.18cm}] (u1) at (2.4,-2.8) {$u_2$};
\node[constructorEpos={$\GRVcvecadd$}{80}{0.18cm}] (u2) at (4,1.2) {$u_1$};
\node[termpos={$t_2$}{160}{0.17cm},inner sep = 1.5pt] (v3) at (2.4,-4.5) {\scalebox{.5}{\vectorVis{(2,3)}{0.4}}};
\node[termpos={$t_1$}{200}{0.17cm},inner sep = 1.5pt] (v4) at (4,-4.5) {\scalebox{.5}{\vectorVis{(1,1)}{0.4}}};
\node[termpos={$t_1$}{200}{0.17cm},inner sep = 1.5pt] (v4c) at (4,-0.4) {\scalebox{.5}{\vectorVis{(1,1)}{0.4}}};
\node[termpos={$t_1$}{200}{0.17cm},inner sep = 1.5pt] (v4c2) at (4,-2.4) {\scalebox{.5}{\vectorVis{(1,1)}{0.4}}};
\end{scope}
\path[->]
(u) edge[bend right = 0] (v)
(v1) edge[bend right = -10] node[index label] {1} (u)
(v2) edge[bend right = 10] node[index label] {2} (u)
(v3) edge[bend right = -10] node[index label] {2} (u1)
(v4c) edge[bend right = 10] node[index label] {1} (u2)
(u1) edge[bend right = 0] (v1c)
(u2) edge[bend right = 0] (v2)
(v4) edge[bend right = 10] node[index label] {1} (u1)
(v) edge[in = -80, out = 250, looseness=0.67] node[index label] {2} (u2);
\path[-, dashed, very thick, color=\conC]
(vc1) edge[bend right = 0] (v)
(v1c) edge[bend right = 0] (v1)
(v4c2) edge[bend right = 0] (v4c);
\end{tikzpicture}\vspace{-0.55cm}
\end{center}
The induced constructions can be ordered, respecting the ordering of the trails in the complete trail sequence of the generator -- $\ctsequence{\neigh{u_1}\cup \neigh{u},t_3}=[\trail_1,\trail_2,\trail_3]$, where $\trail_1=t_4\arrow[1] u \arrow t_3$, $\trail_2= t_1\arrow[1] u_1\arrow t_5\arrow[2] u \arrow t_3$, and $\trail_3=t_3\arrow[2] u_1\arrow t_5\arrow[2] u \arrow t_3$ -- yielding an \textit{induced construction sequence}:
\begin{displaymath}
  \ics_1 = [(\neigh{u_2}, t_4), (\cgraphn_1,t_1), (\cgraphn_3, t_3)]
\end{displaymath}
where $\cgraphn_1$ and $\cgraphn_3$ are the trivial constructions of $t_1$ and $t_3$ respectively.{\pagebreak}  An alternative generator, $(\neigh{u},t_3)$, with its two induced constructions -- again with their constructs joined by dashed lines to generator foundations -- is given \begin{samepage}below:
\begin{center}
\begin{tikzpicture}[construction]
\begin{scope}[rotate=-90]
\node[termN={$t_3$},inner sep = 1.5pt] (vc1) at (3.2,7.6) {\scalebox{.5}{\vectorVis{(1,1.5)}{0.4}}};
\node[termN={$t_3$},inner sep = 1.5pt] (v) at (3.2,5.6) {\scalebox{.5}{\vectorVis{(1,1.5)}{0.4}}};
\node[termpos={$t_4$}{20}{0.17cm},inner sep = 1.5pt] (v1) at (2.4,2.6) {\scalebox{.5}{\vectorVis{(3,4)}{0.4}}};
\node[termpos={$t_4$}{20}{0.17cm},inner sep = 1.5pt] (v1c) at (2.4,-1.4) {\scalebox{.5}{\vectorVis{(3,4)}{0.4}}};
\node[termpos={$t_5$}{-20}{0.17cm},inner sep = 1.5pt] (v2) at (4.0,0.6) {\scalebox{.5}{\vectorVis{(2,2.5)}{0.4}}};
\node[termpos={$t_5$}{-20}{0.17cm},inner sep = 1.5pt] (v2c) at (4.0,2.6) {\scalebox{.5}{\vectorVis{(2,2.5)}{0.4}}};
\node[constructorEpos={$\GRVcvecsub$}{60}{0.17cm}] (u) at (3.2,4.1) {$u$};
\node[constructorEpos={$\GRVcvecadd$}{80}{0.18cm}] (u1) at (2.4,-2.8) {$u_2$};
\node[constructorEpos={$\GRVcvecadd$}{80}{0.18cm}] (u2) at (4,-0.8) {$u_1$};
\node[termpos={$t_2$}{160}{0.17cm},inner sep = 1.5pt] (v3) at (2.4,-4.5) {\scalebox{.5}{\vectorVis{(2,3)}{0.4}}};
\node[termpos={$t_1$}{200}{0.17cm},inner sep = 1.5pt] (v4) at (4,-4.5) {\scalebox{.5}{\vectorVis{(1,1)}{0.4}}};
%\node[termpos={$t_1$}{200}{0.17cm}] (v4c) at (4,-0.4) {\scalebox{.5}{\vectorVis{(1,1)}{0.4}}};
\node[termpos={$t_1$}{200}{0.17cm},inner sep = 1.5pt] (v4c2) at (4,-2.4) {\scalebox{.5}{\vectorVis{(1,1)}{0.4}}};
\end{scope}
\path[->]
(u) edge[bend right = 0] (v)
(v1) edge[bend right = -10] node[index label] {1} (u)
(v2c) edge[bend right = 10] node[index label] {2} (u)
(v3) edge[bend right = -10] node[index label] {2} (u1)
(u1) edge[bend right = 0] (v1c)
(u2) edge[bend right = 0] (v2)
(v4) edge[bend right = 10] node[index label] {1} (u1)
(v4c2) edge[bend right = 10] node[index label] {1} (u2)
(vc1) edge[in = -85, out = 235, looseness=0.39] node[index label] {2} (u2);
\path[-, dashed, very thick, color=\conC]
(v2) edge[bend right = 0] (v2c)
(v1c) edge[bend right = 0] (v1);
\end{tikzpicture}\vspace{-0.6cm}
\end{center}
\end{samepage}
Here, the induced construction sequence is
\begin{displaymath}
  \ics_2 = [(\neigh{u_2}, t_4), (\neigh{u_1}, t_5)].
\end{displaymath}
\end{example}

We can now define the \textit{induced construction sequence}\footnote{It might be anticipated that definition~\ref{defn:inducedConstruction} would be presented with $\trailSeq$ being restricted to $\ctsequence{\cgraphn',t}$, where $\genpair$ is some generator of $\cpair$. Appendix~\ref{sec:app:constructions} makes use of definition~\ref{defn:inducedConstruction} in cases where $\trailSeq\neq \ctsequence{\cgraphn,t}$, on the way to proving theorems~\ref{thm:splitPreservesCompleteTrailSequence} and~\ref{thm:splitPreservesfoundations}. Therefore, we cannot restrict definition~\ref{defn:inducedConstruction} to $\ctsequence{\cgraphn',t}$.} that arises from a sequence, $\trailSeq$, of trails: simply, the induced constructions are ordered in alignment with the trails in $\trailSeq$.

\begin{definition}\label{defn:inducedConstruction}
Let $\cpair$ be a construction and let $\trailSeq=[\trail_1,\ldots,{\trail_n}]$ be a sequence of well-formed trails, each of which targets $t$. The \textit{induced construction sequence} of $\trailSeq$, denoted $\icsequence{\trailSeq}$, is defined to be:
$$\icsequence{\trailSeq}=[\icp{\trail_1},\ldots,\icp{\trail_n}].$$
\end{definition}

\begin{definition}\label{defn:split}
Let $\cpair$ be a construction. A \textit{split} of $\cpair$ is a pair, $\csplit$, where
\begin{enumerate}
\item $\genpair$ is a generator of $\cpair$, and
\item $\ics$ is the induced construction sequence obtained by extending trails in $\ctsequence{\cgraphn',t}$ in $\cpair$:  $\ics=\icsequence{\ctsequence{\cgraphn',t},\cpair}$.
\end{enumerate}
%We write $\cpair\splitarrow \csplit$ to mean that $\csplit$ is a split of $\cpair$.
\end{definition}

\noindent As noted earlier, a key use of splits arises in our theoretical approach to structural transformations. In Section~\ref{sec:structuralTransformations}, it will be taken that when we split a construction, the complete trail sequence and the foundation sequence for $\cpair$ can be readily derived from the generator and induced construction sequence (theorems~\ref{thm:splitPreservesCompleteTrailSequence} and~\ref{thm:splitPreservesfoundations}).

\begin{example}[Deriving a Construction's Complete Trail Sequence and Foundation Token-Sequence from a Split]
In example~\ref{ex:choicesOfGenAndICS}, we saw two generators and their induced constructions, which give rise to the following \textit{splits}:  %
\begin{eqnarray*}
 & & \big((\neigh{u_1}\cup \neigh{u},t_3), [(\neigh{u_2}, t_4), (\cgraphn_1,t_1), (\cgraphn_3, t_3)]\big), \qquad \textup{and} \\
 & & \big((\neigh{u},t_3),[(\neigh{u_2}, t_4), (\neigh{u_1}, t_5)]\big).
\end{eqnarray*}
Here we focus on the second of these and show how the complete trail sequence and foundation token-sequence for the original construction,  $(\cgraphn,t_3)$, can be derived from the split.  First, we remark that the complete trail sequence $\ctsequence{\cgraphn,t_3}$ \textit{exactly captures} how $t_3$ is constructed in $\cgraphn$. How does this relate to the generator, the induced constructions and their complete trail sequences? We see that:
\begin{eqnarray*}
% \nonumber % Remove numbering (before each equation)
  \ctsequence{\cgraphn,t_3} &=& \big(\ctsequence{(\neigh{u_2}, t_4)}\triangleleft (t_4\arrow[1] u \arrow t_3)\big) \oplus \\
                            & & \big(\ctsequence{(\neigh{u_1}, t_5)}\triangleleft (t_5\arrow[2] u \arrow t_3)\big).
\end{eqnarray*}
Here, the two trails, $\,t_4\arrow[1] u \arrow t_3\,$ and $\,t_5\arrow[2] u \arrow t_3$, form the complete trail sequence,\break$\ctsequence{(\neigh{u},t_3)}$, of the generator. In both the generator and original construction, the construct, $t_3$, is constructed from only $t_4$ and $t_5$ via the configurator $u$. Now, in the construction $(\cgraphn,t_3)$, the token $t_4$ is built from $t_1$ and $t_2$, via $u_2$; this is precisely what we see in the associated induced construction, $(\neigh{u_2}, t_4)$. The two trails in $\ctsequence{(\neigh{u_2}, t_4)}$, namely $t_1\arrow[1] u_2 \arrow t_4$ and $t_2\arrow[2] u_2 \arrow t_4$ directly embody this observation about how $t_4$ is constructed within $(\cgraphn,t_3)$. In addition, $t_5$ is built from $t_1$ and $t_3$ as can be seen in the other induced construction, $(\neigh{u_1}, t_5)$, and again we have it that the trails in $\ctsequence{(\neigh{u_1}, t_5)}$ reflect this. In essence, the relationship between the construction sequences across the original construction, generator and induced constructions exemplifies that the splitting operation preserves all knowledge about how $t_3$ is constructed and, in the induced constructions, preserves how their specific constructs are constructed \textit{in the context of the original construction.}

Regarding the foundations, we see that
\begin{eqnarray*}
  \foundationsto{\cgraphn,t_3} & = & [t_1,t_2,t_1,t_3] \\
   &=& \foundationsto{\neigh{u_2}, t_4} \oplus \foundationsto{\neigh{u_1}, t_5}.
\end{eqnarray*}
In both cases, the foundation token-sequence, $\foundationsto{\cgraphn,t_3}$, of the original construction, $(\cgraphn,t_3)$, is directly obtained by concatenating the foundation token-sequences of the induced constructions in the split.
\end{example}

We now present the first major theorem of Section~\ref{sec:constructions}: splitting a construction, $\cpair$, using a generator enables the recreation of $\cpair$'s complete trail sequence, $\ctsequence{\cgraphn,t}$ from the generator and the associated induced construction sequence. As a result, we can also recreate $\cpair$'s foundation token-sequence using the induced construction sequence. These results are embodied in theorems~\ref{thm:splitPreservesCompleteTrailSequence} and~\ref{thm:splitPreservesfoundations}; a full proof of theorem~\ref{thm:splitPreservesCompleteTrailSequence} can be found in Appendix~\ref{sec:app:constructions}.

\begin{theorem}\label{thm:splitPreservesCompleteTrailSequence}
Let $\cpair$ be a construction with  split $\csplit$. Given $\ctsequence{\cgraphn',t} = {\linebreak} [\trail_1,\ldots,\trail_n]$ and $\ics=[\icp{\trail_1},\ldots,\icp{\trail_n}]$, it is the case that
\begin{displaymath}
\ctsequence{\cgraphn,t} = \big(\ctsequence{\icp{\trail_1}}\triangleleft \trail_1\big) \oplus\cdots \oplus \big(\ctsequence{\icp{\trail_n}}\triangleleft\trail_n\big).
\end{displaymath}
\end{theorem}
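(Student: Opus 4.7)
The plan is to prove the identity by unfolding the recursive definition of $\tesequence$ (recalling that $\ctsequence{\cgraphn,t}=\tesequence{[]_t,\cpair}$) in parallel along the generator $\cgraphn'$ and along $\cgraphn$, and showing that wherever the generator's recursion terminates at a foundation $\source{\trail_i}$, the $\cpair$-recursion produces exactly the trails collected by $\ctsequence{\icp{\trail_i}}$. The bridging tool will be an auxiliary equality
\[
\tesequence{[]_{\source{\trail_i}},\icp{\trail_i}} \;=\; \tesequence{\trail_i,\cpair},
\]
for every $\trail_i$ in $\ctsequence{\cgraphn',t}$. I would establish this first: by definition, $\icgp{\trail_i}$ is the $\tesequence{\trail_i,\cpair}$-graph, so each trail in $\tesequence{\trail_i,\cpair}$ consists of arrows lying in $\icgp{\trail_i}$ and ends at $\source{\trail_i}$, hence is a candidate for $\tesequence{[]_{\source{\trail_i}},\icp{\trail_i}}$; uniqueness of incoming arrows (uni-structuredness, inherited by $\icgp{\trail_i}$) together with lemmas~\ref{lem:non-extendableWellFormedTrails} and~\ref{lem:CESisComplete} forces the two sequences to coincide as ordered lists, because the recursive unfolding chooses incoming arrows in the same index order in both graphs.

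The main identity I would then prove by induction on the number of configurators in the generator $\cgraphn'$. For the base case, $\genpair$ is trivial, so $\cgraphn'$ is just the vertex $t$, giving $\ctsequence{\cgraphn',t}=[[]_t]$, $\icp{[]_t}=\cpair$, and hence the right-hand side is $\ctsequence{\cgraphn,t}\triangleleft []_t = \ctsequence{\cgraphn,t}$, as required. For the inductive step, let $u$ be the unique configurator in $\cgraphn'$ whose outgoing arrow $a_0$ targets $t$ (it exists because $\genpair$ is a non-trivial construction of $t$ and $\cgraphn$ is uni-structured, so $u$ is also the unique configurator in $\cgraphn$ targeting $t$). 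Write $\inputsA{u}=[a_1,\ldots,a_k]$. Expanding clause (2) of definition~\ref{defn:DTS} for both $\tesequence{[]_t,\cpair}$ and $\tesequence{[]_t,\cpair'}$ reduces the claim to a statement about each branch $[a_j,a_0]\oplus []_t$, which I can treat by applying the inductive hypothesis to the sub-generator rooted at $\sor{a_j}$ when $\sor{a_j}$ is not a foundation of $\cgraphn'$, and by invoking the auxiliary equality when it is.

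The technical glue is that the operation $\triangleleft$ distributes over $\oplus$ and composes with itself, i.e.\ $(\genSeqn \triangleleft \trail)\triangleleft \trail' = \genSeqn \triangleleft (\trail\oplus \trail')$ and $(\genSeqn_1\oplus \genSeqn_2)\triangleleft \trail = (\genSeqn_1\triangleleft \trail)\oplus (\genSeqn_2\triangleleft \trail)$. These two identities let me reassemble the branch-level factorisations into the global right-hand side in the correct order, because the ordering of $\trail_1,\ldots,\trail_n$ in $\ctsequence{\cgraphn',t}$ is precisely the order induced by the recursive index-ordered unfolding that I will be mirroring in $\cpair$.

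The hard part will be the bookkeeping in the inductive step: at each level of unfolding, some of the input arrows $a_j$ lead to further configurators of $\cgraphn'$ (inductive hypothesis applies) while others lead to foundations of $\cgraphn'$ (auxiliary lemma applies), and the proof must show that concatenating these two kinds of contributions, in the correct index order and with the correct $\triangleleft$-suffixes, reproduces the flat right-hand side exactly. The cleanest way I see to manage this is to strengthen the inductive statement to apply to any construction $\cpair$ together with any generator-like subgraph rooted at an arbitrary well-formed trail $\trail$ (so that the hypothesis can be invoked on sub-generators targeting $\sor{a_j}$), and then specialise to $\trail=[]_t$ at the end.
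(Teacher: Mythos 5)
Your two building blocks are the right ones: the auxiliary equality $\tesequence{\trail_i,\cpair}=\tesequence{[]_{\source{\trail_i}},\icp{\trail_i}}=\ctsequence{\icp{\trail_i}}$ is exactly the paper's Lemma~\ref{lem:DTSofTRisDTSofBot}, and the two $\triangleleft$/$\oplus$ identities are the same algebra the paper uses to reassemble the factorisation. The gap is in the main induction. You induct on the number of configurators of $\cgraphn'$ and, in the inductive step, peel off the root configurator $u$ so that the claim ``reduces to a statement about each branch $[a_j,a_0]$''. But the object that reduction produces is $\tesequence{[a_j,a_0],\cpair}$, the trail extension sequence of a \emph{non-empty} trail, and your inductive hypothesis is a statement about complete trail sequences of constructions equipped with splits. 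To invoke it you would first have to show that the branch data is itself such an instance: that the induced construction of $[a_j,a_0]$ taken in the generator is a generator of the induced construction of $[a_j,a_0]$ taken in $\cpair$, and that the induced constructions of the resulting split coincide, in the right order, with the $\icp{\trail_i}$ of the original split for exactly those $\trail_i$ ending in $[a_j,a_0]$. None of this is automatic, because Definition~\ref{defn:DTS} recurses on \emph{trails} subject to an arrow-avoidance constraint, not on subgraphs: constructions may contain directed cycles through $t$, distinct branches of a generator may share arrows (Example~\ref{ex:generatorsInducedConsAndSplits}), and trails with the same source can induce different constructions (Example~\ref{ex:distinctICwithSC}). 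Your closing suggestion---strengthen the statement to arbitrary well-formed trails and ``generator-like subgraphs rooted at a trail''---is the right instinct, but it is precisely the non-trivial content of the proof that you have deferred: once the statement is about trails rather than constructions, ``number of configurators in the generator'' is no longer a workable measure (the same configurator can be met along several branches under different avoidance constraints), and you do not say what replaces it. The paper explicitly flags in Appendix~\ref{sec:app:constructions} that an induction over the generator's configurators ``is not straightforward'' for essentially this reason.

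The paper's resolution is to change what the induction runs over. It introduces \emph{orderly trail sequences} (Definition~\ref{defn:orderlyTrailSequence}): sequences obtained from $[[]_t]$ by repeatedly replacing one extendable trail $\trail_i$ by the index-ordered family $[[a_1],\ldots,[a_n]]\triangleleft([a]\oplus\trail_i)$. This single-trail move is local and well behaved, so the key factorisation $\tesequence{[]_t}=(\tesequence{\trail_1}\triangleleft\trail_1)\oplus\cdots\oplus(\tesequence{\trail_m}\triangleleft\trail_m)$ holds for \emph{any} orderly trail sequence and is proved by induction on the depth of that recursive construction (Theorem~\ref{thm:trailExtensionSequencesEssentiallyMatch}), using only your two $\triangleleft$/$\oplus$ identities. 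Lemma~\ref{lem:topDTSisOrderlyForConstruction} then shows that $\ctsequence{\cgraphn',t}$ is an orderly trail sequence for $\cpair$, and combining this with Lemma~\ref{lem:DTSofTRisDTSofBot} yields the theorem. I recommend you either adopt that induction or make your strengthened statement, its induction measure, and the branch-restriction lemmas fully explicit; as written, the step where you apply the inductive hypothesis does not go through.
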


\begin{proof}[Proof Sketch]
Recall that, for any construction, $(\cgraphn'',t'')$, its complete trail sequence is the trail extension sequence of $[]_t$. The key parts of the proof strategy establish the following facts, beginning with trail extension sequences:
\begin{enumerate}
\item The trail extension sequence for $[]_t$ in $\cpair$ can be readily obtained from the trail extension sequences arising from the trails in $\ctsequence{\cgraphn',t}=[\trail_1,\ldots,\trail_n]$:
\begin{displaymath}
\tesequence{[]_t,\cpair} = \big(\tesequence{\trail_1,\cpair}\triangleleft \trail_1\big) \oplus\cdots \oplus \big(\tesequence{\trail_n,\cpair}\triangleleft\trail_n\big).
\end{displaymath}
%See theorem~\ref{thm:trailExtensionSequencesForSplitsMatch} in Appendix~\ref{sec:app:constructions}.
%
\item For each trail, $\trail_i$, in $\ctsequence{\cgraphn',t}=[\trail_1,\ldots,\trail_n]$ the trail extension sequence{\linebreak} $\tesequence{\trail_i,\cpair}$ is the same as $\ctsequence{\icp{\trail_i}}$:
\begin{displaymath}
  \tesequence{\trail_i,\cpair}=\ctsequence{\icp{\trail_i}}.
\end{displaymath}
\item Now, using (2) and the fact that $\tesequence{[]_t,\cpair}=\ctsequence{\cgraphn,t}$, substituting into (1), we obtain:
\begin{displaymath}
\ctsequence{\cgraphn,t} = \big(\ctsequence{\icp{\trail_1}}\triangleleft \trail_1\big) \oplus\cdots \oplus \big(\ctsequence{\icp{\trail_n}}\triangleleft\trail_n\big).
\end{displaymath}
as required.
\end{enumerate}
\end{proof}

\begin{theorem}\label{thm:splitPreservesfoundations}
Let $\cpair$ be a construction with  split $\csplit$. Given $\ctsequence{\cgraphn',t}=[\trail_1,\ldots,\trail_n]$ and $\ics=[\icp{\trail_1},\ldots,\icgp{\trail_n}]$, it is the case that
\begin{displaymath}
  \foundationsto{\cgraphn,t}=\foundationsto{\icp{\trail_1}}\oplus \cdots \oplus \foundationsto{\icp{\trail_n}}.
\end{displaymath}
\end{theorem}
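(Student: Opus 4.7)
The plan is to derive Theorem~\ref{thm:splitPreservesfoundations} as an almost immediate consequence of Theorem~\ref{thm:splitPreservesCompleteTrailSequence}, by applying the source-sequence operator $\ssequence{\cdot}$ to both sides of the trail-sequence decomposition it provides. Since $\foundationsto{\cgraphn,t}$ is defined (Definition~\ref{defn:foundations}) to be $\ssequence{\ctsequence{\cgraphn,t}}$, the left-hand side of the desired equation is obtained by this application. What remains is to simplify the right-hand side.

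First I would establish two elementary auxiliary facts about $\ssequence{\cdot}$, both of which follow directly from Definition~\ref{defn:sourceSequence}. The first is distributivity over concatenation: for any trail sequences $\trailSeq_1$ and $\trailSeq_2$,
\begin{displaymath}
\ssequence{\trailSeq_1 \oplus \trailSeq_2} = \ssequence{\trailSeq_1} \oplus \ssequence{\trailSeq_2}.
\end{displaymath}
The second is invariance under the right product: for any trail sequence $\trailSeq=[\trail_1',\ldots,\trail_m']$ and any trail $\trail$ such that $\source{\trail} = \target{\trail_i'}$ for each $i$,
\begin{displaymath}
\ssequence{\trailSeq \triangleleft \trail} = \ssequence{\trailSeq}.
\end{displaymath}
This holds because $\source{\trail_i' \oplus \trail} = \source{\trail_i'}$: prepending arrows to a trail does not change its source. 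In the context of our theorem, the trails in $\ctsequence{\icp{\trail_i}}$ are sourced on foundations of $\icp{\trail_i}$ and targeted at $\source{\trail_i}$, which is precisely the source of $\trail_i$, so the hypothesis on targets is satisfied.

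With these facts in hand, I would apply $\ssequence{\cdot}$ to the equation provided by Theorem~\ref{thm:splitPreservesCompleteTrailSequence}, namely
\begin{displaymath}
\ctsequence{\cgraphn,t} = \big(\ctsequence{\icp{\trail_1}}\triangleleft \trail_1\big) \oplus\cdots \oplus \big(\ctsequence{\icp{\trail_n}}\triangleleft\trail_n\big).
\end{displaymath}
Using distributivity over $\oplus$ on the right-hand side reduces the problem to computing $\ssequence{\ctsequence{\icp{\trail_i}}\triangleleft \trail_i}$ for each $i$; invariance under right product then collapses this to $\ssequence{\ctsequence{\icp{\trail_i}}}$, which by Definition~\ref{defn:foundations} is precisely $\foundationsto{\icp{\trail_i}}$. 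Assembling these pieces yields the desired identity.

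There is no real obstacle here: the argument is essentially mechanical once Theorem~\ref{thm:splitPreservesCompleteTrailSequence} has done the heavy lifting of reconstituting the complete trail sequence from the split. The only step requiring a moment's care is verifying that each trail in $\ctsequence{\icp{\trail_i}}$ is indeed targeted at $\source{\trail_i}$, so that the right product is well-defined and preserves sources; this follows from Corollary~\ref{cor:TESAllandOnlyNonExtendable} applied within the induced construction $\icp{\trail_i}$, whose construct is $\source{\trail_i}$ by Definition~\ref{defn:inducedConstruction}.
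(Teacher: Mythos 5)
Your proposal is correct and follows essentially the same route as the paper's own proof: apply $\ssequence{\cdot}$ to the identity from Theorem~\ref{thm:splitPreservesCompleteTrailSequence}, distribute over $\oplus$, discard the right products because appending a trail at the target does not change sources, and invoke Definition~\ref{defn:foundations}. The extra care you take in checking that the trails in $\ctsequence{\icp{\trail_i}}$ target $\source{\trail_i}$ is a welcome (if implicit in the paper) detail, but it does not change the argument.
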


\begin{proof}
Theorem~\ref{thm:splitPreservesCompleteTrailSequence} gives us
\begin{displaymath}
\ctsequence{\cgraphn,t} = \big(\ctsequence{\icp{\trail_1}}\triangleleft \trail_1\big) \oplus\cdots \oplus \big(\ctsequence{\icp{\trail_n}}\triangleleft\trail_n\big).
\end{displaymath}
By definition~\ref{defn:foundations}, the foundation token-sequence is the source sequence of the complete trail sequence, that is $\foundationsto{\cgraphn,t}=\ssequence{\ctsequence{\cgraphn,t}}$, so it follows that
\begin{displaymath}
\foundationsto{\cgraphn,t}  =  \ssequence{\ctsequence{\icp{\trail_1}}\triangleleft \trail_1} \oplus\cdots \oplus \ssequence{\ctsequence{\icp{\trail_n}}\triangleleft\trail_n}
\end{displaymath}
For each trail, $\trail_i$, in $\ctsequence{\cgraphn',t}=[\trail_1,\ldots,\trail_n]$, we can remove the right product operation used above, since concatenating a trail onto the target of another trail does not alter the source, yielding
\begin{displaymath}
\foundationsto{\cgraphn,t}  =  \ssequence{\ctsequence{\icp{\trail_1}}} \oplus\cdots \oplus \ssequence{\ctsequence{\icp{\trail_n}}}
\end{displaymath}
\item By the definition of a foundation token-sequence, for each trail, $\trail_i$, in $\ctsequence{\cgraphn',t}=[t_1,\ldots,t_n]$,
    \begin{displaymath}
      \foundationsto{\icp{\trail_i}}=\ssequence{\ctsequence{\icp{\trail_i}}}.
    \end{displaymath}
Thus, we deduce that
\begin{displaymath}
  \foundationsto{\cgraphn,t}=\foundationsto{\icp{\trail_1}}\oplus \cdots \oplus \foundationsto{\icp{\trail_n}}.
\end{displaymath}
as required.
\end{proof}

Our last task in this section on splitting constructions is to show that a split can be used to form the original construction. In particular, we will show that the original construction's graph, $\cgraphn$, is the union of the generator's graph, $\cgraphn'$, and all of the induced construction graphs, captured by theorem~\ref{thm:SplitsCoverConstruction}. The proof uses three facts concerning trails and the graphs to which they give rise: given any sequences of trails, $\trailSeq$ and $\trailSeq'$, and trail, $\trail$, it is the case that
\begin{enumerate}
\item[-] $\trailToGraph{\trailSeq\oplus \trailSeq'}=\trailToGraph{\trailSeq}\cup \trailToGraph{\trailSeq'}$,
\item[-] $\trailToGraph{\trailSeq\triangleleft \trail}=\trailToGraph{\trailSeq}\cup \trailToGraph{[\trail]}$, and
\item[-] $\trailToGraph{[\trail_1,...,\trail_i]}\cup \trailToGraph{[\trail_{i+1}]}= \trailToGraph{[\trail_1,...\trail_i,\trail_{i+1}]}$.
\end{enumerate}
It also uses the fact that for any construction, $\cpair$, it is the case that $\cgraphn=\trailToGraph{\ctsequence{\cpair}}$, established in Appendix~\ref{sec:app:constructions}, corollary~\ref{cor:DTSforTopGeneratesTop}.

\begin{theorem}\label{thm:SplitsCoverConstruction}
Let $\cpair$ be a construction with split $\csplit$, where $\ctsequence{\cgraphn',t}=[\trail_1,...,\trail_n]$. Then
\begin{displaymath}
  \cgraphn= \cgraphn' \cup \icgp{\trail_1}\cup \cdots \cup \icgp{\trail_n}.
\end{displaymath}
\end{theorem}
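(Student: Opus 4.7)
The plan is to chain together the complete-trail-sequence identity established in Theorem~\ref{thm:splitPreservesCompleteTrailSequence} with the graph-theoretic identities (1)--(3) listed before the theorem and the fact (from Corollary~\ref{cor:DTSforTopGeneratesTop}) that any construction's graph equals the tr-graph of its complete trail sequence. The idea is that ``taking the tr-graph'' distributes over both $\oplus$ and $\triangleleft$, so the structural equation on complete trail sequences forced by splitting will translate directly into the desired structural equation on graphs.

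First, I would apply Corollary~\ref{cor:DTSforTopGeneratesTop} to $\cpair$ to write $\cgraphn = \trailToGraph{\ctsequence{\cgraphn,t}}$. Then I would substitute the expression for $\ctsequence{\cgraphn,t}$ given by Theorem~\ref{thm:splitPreservesCompleteTrailSequence}, namely
\[
\ctsequence{\cgraphn,t} \;=\; \bigl(\ctsequence{\icp{\trail_1}}\triangleleft \trail_1\bigr) \oplus \cdots \oplus \bigl(\ctsequence{\icp{\trail_n}}\triangleleft \trail_n\bigr).
\]
Applying fact (1) ($n-1$ times) distributes $\trailToGraph{\,\cdot\,}$ over the outer concatenations, and applying fact (2) to each summand distributes it over the right product with $\trail_i$. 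This yields
\[
\cgraphn \;=\; \bigcup_{i=1}^{n} \Bigl(\trailToGraph{\ctsequence{\icp{\trail_i}}} \,\cup\, \trailToGraph{[\trail_i]}\Bigr).
\]

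Next, I would identify the two kinds of pieces in this union. For the first, Corollary~\ref{cor:DTSforTopGeneratesTop} applied to each induced construction $\icp{\trail_i}$ gives $\trailToGraph{\ctsequence{\icp{\trail_i}}}=\icgp{\trail_i}$ (equivalently, this follows from definition~\ref{defn:inducedConstruction} together with step 2 of the proof sketch of Theorem~\ref{thm:splitPreservesCompleteTrailSequence}, which shows $\ctsequence{\icp{\trail_i}} = \tesequence{\trail_i,\cpair}$). For the second, I would use fact (3) inductively to collapse
\[
\trailToGraph{[\trail_1]} \cup \cdots \cup \trailToGraph{[\trail_n]} \;=\; \trailToGraph{[\trail_1,\ldots,\trail_n]} \;=\; \trailToGraph{\ctsequence{\cgraphn',t}},
\]
and then apply Corollary~\ref{cor:DTSforTopGeneratesTop} to the generator $\genpair$ to rewrite the right-hand side as $\cgraphn'$. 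Combining the two identifications, re-indexing, and using associativity/commutativity of $\cup$, I obtain $\cgraphn = \cgraphn' \cup \icgp{\trail_1} \cup \cdots \cup \icgp{\trail_n}$, as required.

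I do not anticipate a substantive obstacle: the argument is essentially a bookkeeping translation from sequences-of-trails to their underlying subgraphs, once Theorem~\ref{thm:splitPreservesCompleteTrailSequence} and Corollary~\ref{cor:DTSforTopGeneratesTop} are in hand. The only subtlety is being careful that the operator $\trailToGraph{\,\cdot\,}$ really does respect $\oplus$ and $\triangleleft$ in the required way (facts (1)--(3)), and that the identification $\trailToGraph{\ctsequence{\icp{\trail_i}}} = \icgp{\trail_i}$ is justified by the appropriate appeal to Corollary~\ref{cor:DTSforTopGeneratesTop} (since an induced construction is itself a construction). Everything else is straightforward algebra of unions.
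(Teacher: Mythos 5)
Your proposal is correct and follows essentially the same route as the paper's own proof: apply Theorem~\ref{thm:splitPreservesCompleteTrailSequence}, push $\trailToGraph{\,\cdot\,}$ through $\oplus$ and $\triangleleft$ using facts (1)--(3), collapse $\trailToGraph{[\trail_1]}\cup\cdots\cup\trailToGraph{[\trail_n]}$ to $\trailToGraph{\ctsequence{\cgraphn',t}}$, and identify each remaining piece via Corollary~\ref{cor:DTSforTopGeneratesTop}. Your extra care in justifying $\trailToGraph{\ctsequence{\icp{\trail_i}}}=\icgp{\trail_i}$ (via the corollary applied to the induced construction, or equivalently via $\ctsequence{\icp{\trail_i}}=\tesequence{\trail_i,\cpair}$) is a slightly more explicit version of the paper's closing remark, but not a different argument.
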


\begin{proof}
By theorem~\ref{thm:splitPreservesCompleteTrailSequence}, we know that
\begin{displaymath}
  \ctsequence{\cgraphn,t}= (\ctsequence{\icp{\trail_1}}\triangleleft \trail_1) \oplus ... \oplus (\ctsequence{\icp{\trail_n}}\triangleleft \trail_n).
\end{displaymath}
Converting each complete trail sequence into a graph, we see that
\begin{eqnarray*}
\trailToGraph{\ctsequence{\cgraphn,t}}  & = & \trailToGraph{(\ctsequence{\icp{\trail_1}}\triangleleft \trail_1) \oplus ... \oplus (\ctsequence{\icp{\trail_n}}\triangleleft \trail_m)}\\
%& = & \trailToGraph{(\ctsequence{\icp{\trail_1}}\triangleleft \trail_1)} \cup ... \cup \trailToGraph{(\ctsequence{\icp{\trail_n}}\triangleleft \trail_m)}
 & = & \trailToGraph{\ctsequence{\icp{\trail_1}}}\cup \trailToGraph{[\trail_1]} \cup ...  \cup \\ & & \trailToGraph{\ctsequence{\icp{\trail_n}}} \cup \trailToGraph{[\trail_n]}\\
& = & \trailToGraph{[\trail_1]} \cup ... \cup \trailToGraph{[\trail_n]}\cup \\ & &  \trailToGraph{\ctsequence{\icp{\trail_1}}}\cup  ... \cup \trailToGraph{\ctsequence{\icp{\trail_n}}}\\
 & = & \trailToGraph{[\trail_1,...,\trail_n]}\cup \trailToGraph{\ctsequence{\icp{\trail_1}}}\cup  ... \cup \trailToGraph{\ctsequence{\icp{\trail_n}}} \\
& = & \trailToGraph{\ctsequence{\cgraphn',t}}\cup \trailToGraph{\ctsequence{\icp{\trail_1}}}\cup  ... \cup \trailToGraph{\ctsequence{\icp{\trail_n}}}
%& = & \cgraphn'\cup \trailToGraph{\ctsequence{\icp{\trail_1}}}\cup  ... \cup \trailToGraph{\ctsequence{\icp{\trail_n}}}\\
\end{eqnarray*}
Using corollary~\ref{cor:DTSforTopGeneratesTop}, and noting that $\icp{\trail_i}=(\icgp{\trail_i},\source{\trail_i})$, we have
\begin{displaymath}
\cgraphn =  \cgraphn' \cup \icgp{\trail_1}\cup \cdots \cup \icgp{\trail_n}
\end{displaymath}
as required.
\end{proof}

In essence, theorems~\ref{thm:splitPreservesCompleteTrailSequence} to~\ref{thm:SplitsCoverConstruction} combine to show that when we split a construction, we do not lose any of its structure.

\subsection{Decompositions}\label{sec:cons:decompositions}

Splits support the decomposition of a construction into a generator and a sequence of induced constructions. Our approach to structural transformations requires that the splitting operation can also be applied to the induced constructions. This motivates the definition of a \textit{decomposition}, which embodies the iterative splitting of constructions. The concept of a decomposition is formalised using a directed rooted tree, where the root is labelled by the generator of a split or, simply, the original construction if no splitting has taken place. The root's adjacent vertices are, in turn, the roots of decompositions of the induced constructions. Firstly, we include an example.

\begin{example}[Splitting Splits]\label{ex:SplitingSplits}
Consider the following construction, $(\cgraphn,t_3)$, drawn from the graphical representation of vectors representational system; the enclosing contours show one way of decomposing $(\cgraphn,t_3)$ using splits. The initial generator, $(\neigh{u},t_3)$, is enclosed by the thick (green) contour. The remaining contours enclose the other constructions in the decomposition.
\begin{center}
	\begin{tikzpicture}[construction]
	\begin{scope}[rotate=-90]
	\node[termE={$t_3$}] (v) at (3.7,5.6) {\scalebox{.5}{\vectorVis{(1,1.5)}{0.4}}};
	\node[termpos={$t_4$}{20}{0.17cm}] (v1) at (2.4,2.6) {\scalebox{.5}{\vectorVis{(3,4)}{0.4}}};
	\node[termpos={$t_5$}{-20}{0.17cm}] (v2) at (4.5,2.6) {\scalebox{.5}{\vectorVis{(2,2.5)}{0.4}}};
	\node[constructorEpos={$\GRVcvecsub$}{60}{0.17cm}] (u) at (3.2,4.1) {$u$};
	\node[constructorEpos={$\GRVcvecadd$}{80}{0.18cm}] (u1) at (2.4,1.2) {$u_2$};
	\node[constructorEpos={$\GRVcvecadd$}{80}{0.18cm}] (u2) at (4.5,1.2) {$u_1$};
	\node[termpos={$t_2$}{160}{0.17cm}] (v3) at (2.4,-0.5) {\scalebox{.5}{\vectorVis{(2,3)}{0.4}}};
	\node[termpos={$t_1$}{200}{0.17cm}] (v4) at (4.5,-0.5) {\scalebox{.5}{\vectorVis{(1,1)}{0.4}}};
	\node[constructorEpos={$\GRVcvecsub$}{80}{0.18cm}] (u0) at (2.4,-2.2) {$u_3$};
	\node[termpos={$t_6$}{160}{0.17cm}] (v6) at (2.4,-3.9) {\scalebox{.5}{\vectorVis{(2.5,3.5)}{0.4}}};
	\node[termpos={$t_7$}{200}{0.17cm}] (v7) at (4.5,-3.9) {\scalebox{.5}{\vectorVis{(0.5,0.5)}{0.4}}};
	\end{scope}
	%green:
	\coordinate[above right = 0.2cm and 0.4cm of v] (var);
	\coordinate[below right = 0.2cm and 0.4cm of v] (vbr);
	\coordinate[below right = 0.15cm and 0.3cm of v2] (v2br);
	\coordinate[below left = 0.15cm and 0.2cm of v2] (v2bl);
	\coordinate[above left = 0.25cm and 0.2cm of v1] (v1al);
	\coordinate[above right = 0.25cm and 0.3cm of v1] (v1ar);
	\draw[rounded corners,ultra thick,darkgreen,draw opacity=.7,text=black] (var)--(vbr)--(v2br)--(v2bl)--(v1al)--(v1ar)--node[yshift =0.18cm,xshift=0.18cm] {\rotatebox{-24}{$(\neigh{u},t_3)$}} cycle;
	%blue
	\coordinate[above right = 0.15cm and 0.35cm of v1] (v1ar);
	\coordinate[below right = 0.1cm and 0.35cm of v1] (v1br);
	\coordinate[above right = 0.1cm and 0.3cm of v4] (v4ar);
	\coordinate[below right = 0.2cm and 0.2cm of v4] (v4br);
	\coordinate[below left = 0.2cm and 0.4cm of v4] (v4bl);
	\coordinate[above left = 0.2cm and 0.4cm of v3] (v3al);
	\draw[rounded corners, thick,darkblue,draw opacity=.7,text=black] (v1ar)--(v1br)--(v4ar)--(v4br)--(v4bl)--(v3al)--node[yshift =0.24cm,xshift=-0.1cm] {\rotatebox{0}{$(\neigh{u_2},t_4)$}}cycle;
	%red1
	\coordinate[above left = 0.1cm and 0.1cm of v] (r0);
	\coordinate[above right = 0.1cm and 0.3cm of v] (r1);
	\coordinate[below right = 0.1cm and 0.3cm of v] (r2);
	\coordinate[below right = 0.32cm and 0.4cm of v2] (r3);
	\coordinate[below left = 0.32cm and 0.5cm of v4] (r4);
	\coordinate[above left = 0.55cm and 0.5cm of v4] (r5);
	\coordinate[above right = 0.55cm and 1cm of v4] (r6);
	\coordinate[above left = -0.5cm and 0.7 cm of v] (r7);
	\draw[rounded corners,darkred,draw opacity=.7,text=black] (r0)--(r1)--(r2)--(r3)--node[yshift =-0.24cm,xshift=0.2cm] {\rotatebox{0}{$(\neigh{u_1},t_5)$}}(r4)--(r5)--(r6)--(r7)--cycle;
	%red2
	\coordinate[above right = 0.1cm and 0.1cm of v4] (rr1);
	\coordinate[below right = 0.1cm and 0.1cm of v4] (rr2);
	\coordinate[below left = 0.1cm and 0.3cm of v4] (rr3);
	\coordinate[above left = 0.1cm and 0.3cm of v4] (rr4);
	\draw[rounded corners,darkred,draw opacity=.7,text=black] (rr1)--(rr2)--(rr3)--(rr4)--node[yshift =0.18cm,xshift=-0.0cm] {\rotatebox{0}{$(g_1,t_1)$}}cycle;
	%red3
	\coordinate[above right = 0.1cm and 0.1cm of v3] (v3ar);
	\coordinate[below right = 0.1cm and 0.1cm of v3] (v3br);
	\coordinate[below left = 0.1cm and 0.6cm of v3] (v3bl);
	\coordinate[below right = 0.2cm and 0.1cm of v7] (v7br);
	\coordinate[below left = 0.2cm and 0.35cm of v7] (v7bl);
	\coordinate[above left = 0.15cm and 0.35cm of v6] (v6al);
	\draw[rounded corners,darkred,draw opacity=.7,text=black] (v3ar)--(v3br)--(v3bl)--(v7br)--(v7bl)--(v6al)--node[yshift =0.24cm,xshift=-0.2cm] {\rotatebox{0}{$(\neigh{u_3},t_2)$}} cycle;
	\path[->]
(u) edge[bend right = 0] (v)
(v1) edge[bend right = -10] node[index label] {1} (u)
(v2) edge[bend right = 10] node[index label] {2} (u)
(v3) edge[bend right = -10] node[index label] {2} (u1)
(v4) edge[bend right = 10] node[index label] {1} (u2)
(u1) edge[bend right = 0] (v1)
(u2) edge[bend right = 0] (v2)
(v4) edge[bend right = 10] node[index label] {1} (u1)
(v) edge[in = -55, out = 225] node[index label] {2} (u2)
    (v6) edge[bend right = -10] node[index label] {1} (u0)
	(v7) edge[bend right = 10] node[index label] {2} (u0)
	(u0) edge[bend right = 0] (v3);
	\end{tikzpicture}
\end{center}
The arising decomposition, $\decompositionn$, is visualised by this \begin{samepage}tree:
\begin{center}
	\begin{tikzpicture}[]
	\node[draw=darkred,rounded corners,label=180:{}] (t1) at (0.60,0.2) {$(g_1,t_1)$};
	\node[draw=darkred,rounded corners,label=180:{}] (t2) at (-0.35,1.5) {$(\neigh{u_3},t_2)$};
	\node[draw=darkgreen,rounded corners,label=0:{}] (t3) at (5,0.85) {$(\neigh{u},t_3)$};
	\node[draw=darkblue,rounded corners,label={[label distance =0.65cm,anchor=center]-25:{}}] (t4) at (2.5,1.5) {$(\neigh{u_2},t_4)$};
	\node[draw=darkred,rounded corners,label=180:{}] (t5) at (2.5,0.2) {$(\neigh{u_1},t_5)$};
	\path[->]
    (t4) edge[bend right = -10] node[index label] {1} (t3)
	(t5) edge[bend right = 10] node[index label] {2} (t3)
	(t1) edge[bend right = 5] node[index label] {1} (t4)
	(t2) edge[bend right = -5] node[index label] {2} (t4);
	\end{tikzpicture}\\\vspace{5pt}
\end{center}\end{samepage}
The production of $\decompositionn$ starts with a split that has $(\neigh{u},t_3)$ as a generator:
\begin{displaymath}
((\neigh{u},t_3),[(\neigh{u_2}\cup \neigh{u_3},t_4),(\neigh{u_1},t_5)])
\end{displaymath}
In turn, $(\neigh{u_2}\cup \neigh{u_3},t_4)$ can be split, using $(\neigh{u_2},t_4)$ as a generator, enclosed by the medium-thickness (blue) contour. This yields the split
\begin{displaymath}
((\neigh{u_2},t_4),[(\cgraphn_1,t_1),(\neigh{u_3},t_2)]).
\end{displaymath}
The leaves of the decomposition are, thus, $(\neigh{u_1},t_5)$ from the first split, along with $(\cgraphn_1,t_1)$ and $(\neigh{u_3},t_2)$ from the second split.
\end{example}

Definition~\ref{defn:decompositionTree} formalises the trees that form part of a decomposition, with definition~\ref{defn:decomposition} augmenting these trees with constructions as vertex labels. Firstly, recall that in a graph, $\incVert$ identifies the incident vertices (i.e. those connected by an arrow) and $\arrowl$ is a function that labels arrows with indices.

\begin{definition}\label{defn:decompositionTree}
A \textit{decomposition tree} is a directed arrow-labelled rooted in-tree\footnote{That is, the arrows are directed \textit{towards} the root: the tree is an anti-arborescence.},
\begin{displaymath}
\decompositionn=(V,\arrows, \incVert\colon \arrows\to V\times V, \arrowl\colon \arrows \to \mathbb{N})
\end{displaymath}
such that for each vertex, $v$, in $V$, the function $\arrowl$ with its domain restricted to $\inA{v}$ is a bijection with codomain $\{1,\ldots,|\inA{v}|\}$.
\end{definition}

We now introduce some notation and various concepts related to decomposition trees. Some of these concepts will be used in this section, whereas others are needed in Section~\ref{sec:structuralTransformations}.

\begin{definition}\label{defn:decTreeSequence}
Let $\decompositionn=(V,\arrows, \incVert, \arrowl)$ be a decomposition tree containing a vertex, $v$.
\begin{enumerate}
\item The root of $\decompositionn$ is denoted $\treeroot{\decompositionn}$.
%
%\item The sub-tree of $\decompositionn$ containing precisely $v$ is denoted $\decompositionn_v$.
%
\item The \textit{$v$-induced decomposition tree}, denoted $\videctree{v}$, is the maximal sub-tree of $\decompositionn$ with root $v$.
\item The \textit{decomposition-tree sequence} of $v$, denoted $\dsequence{v}$, is the sequence of sub-trees of $\decompositionn$, $\dsequence{v}=[\videctree{v_1},\ldots,\videctree{v_n}]$, where each $v_i$ is the vertex sourced on the arrow targeting $v$ that is labelled by $i$ and $n=|\inA{v}|$.
\item The set of \textit{$v$-ancestors}, denoted $\ancestors{v}$,  is defined as follows:

\begin{displaymath}
  \ancestors{v} = \bigcup_{\decompositionn_i \in \dsequence{v}} \ancestors{\treeroot{\decompositionn_i}} \cup \{\treeroot{\decompositionn_i}\}
\end{displaymath}
We further define $\ancestorsandvertex{v}=\ancestors{v}\cup \{v\}$.
\end{enumerate}
\end{definition}

We now extend the definition of a decomposition tree to that of a decomposition, which augments decomposition trees with a vertex labelling function. Notably, all concepts in definition~\ref{defn:decTreeSequence} extend in the obvious way to such augmented trees.

\begin{definition}\label{defn:decomposition}
Let $\cpair$ be a construction. A \textit{decomposition} of a $\cpair$ is a directed labelled rooted tree, $\decompositionn=\decomposition$, such that $(V,\arrows, \incVert, \arrowl)$ is a decomposition tree and $\labc$ is a vertex-labelling function where either
\begin{enumerate}
\item $\decompositionn$ contains only $\treeroot{\decompositionn}$ and $\labc(\treeroot{\decompositionn})=\cpair$, or
\item there exists a split $(\genpair,[(\cgraphn_1,t_1),\ldots,(\cgraphn_n,t_n)])$ of $\cpair$, where the following hold:
    \begin{enumerate}
    \item the construction that labels $\treeroot{\decompositionn}$ is $\genpair$: $\labc(\treeroot{\decompositionn})=\genpair$,
    \item given $\dsequence{\treeroot{\decompositionn}}=[\videctree{v_1},\ldots,\videctree{v_n}]$, each $\videctree{v_i}$ is a decomposition of $(\cgraphn_i,t_i)$.
    \end{enumerate}
\end{enumerate}
%Such a split is said to \textit{comply} with $\decompositionn$.
\end{definition}

\begin{example}[Using Leaves to Derive Foundation Sequences]
In example~\ref{ex:SplitingSplits}, the \textit{leaves} of the decomposition tree can be used to derive the foundation sequence of the entire construction, $(\cgraphn,t_3)$. To effect this derivation, the labels of the leaves need to be \textit{ordered} so that they respect the indices assigned to the arrows, as shown here:
\begin{displaymath}
[(g_1,t_1),(\neigh{u_3},t_2),(\neigh{u_1},t_5)].
\end{displaymath}
Taking the ordered leaves, we are able to concatenate their foundation sequences in order to obtain the foundation sequence for the original construction. Here, we see that
\begin{eqnarray*}
\foundationsto{g_1,t_1} \oplus \foundationsto{\neigh{u_3},t_2)} \oplus \foundationsto{\neigh{u_1},t_3)}
   &=& [t_1,t_6,t_7,t_1,t_3] \\
                                &=& \foundationsto{\cgraphn, t_3}.
\end{eqnarray*}
\end{example}

\begin{definition}\label{defn:leafConstructionSequence}
Let $\cpair$ be a construction with decomposition $\decompositionn$. The \textit{leaf construction-sequence} of $\decompositionn$, denoted $\lcsequence{\decompositionn}$, is defined recursively:
\begin{enumerate}
\item if $\decompositionn$ contains only $\treeroot{\decompositionn}$ then $\lcsequence{\decompositionn}=[\cpair]$,
\item otherwise, given $\dsequence{\treeroot{\decompositionn}}=[\videctree{v_1},\ldots,\videctree{v_n}]$ %and $\dsequence{\treeroot{\decompositionn}}=[\videctree{v_1},\ldots,\videctree{v_n}]$,
% $\dsequence{\treeroot{\decompositionn}}=[\decompositionn_1,\ldots,\decompositionn_n]$,
%
\begin{displaymath}
  \lcsequence{\decompositionn}=\lcsequence{\videctree{v_1}}\oplus \cdots \oplus \lcsequence{\videctree{v_n}}.
\end{displaymath}
\end{enumerate}
\end{definition}

Theorem~\ref{thm:splitPreservesfoundations} immediately tells us that the foundations of $\cpair$ are derivable from the foundations of the leaf construction-sequence in any decomposition, captured by theorem~\ref{thm:decompositionPreservesFoundations}.

\begin{theorem}\label{thm:decompositionPreservesFoundations}
Let $\cpair$ be a construction with decomposition $\decompositionn$. Given the leaf construction-sequence  $\lcsequence{\decompositionn}=[\cpairp{1},\ldots,\cpairp{n}]$,
\begin{displaymath}
\foundationsto{\cgraphn,t} = \foundationsto{\cgraphn_1,t_1}\oplus \cdots \oplus \foundationsto{\cgraphn_n,t_n}.
\end{displaymath}
\end{theorem}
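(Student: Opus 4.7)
The plan is to prove the theorem by structural induction on the decomposition $\decompositionn$, using theorem~\ref{thm:splitPreservesfoundations} as the key engine and definition~\ref{defn:leafConstructionSequence} to handle how leaf construction-sequences propagate through the recursive structure. The statement is essentially an iterated version of theorem~\ref{thm:splitPreservesfoundations}, so the only real work is organizing the induction and tracking the concatenations correctly.

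For the base case, I would take $\decompositionn$ to contain only $\treeroot{\decompositionn}$. By definition~\ref{defn:decomposition}, we then have $\labc(\treeroot{\decompositionn})=\cpair$ and, by definition~\ref{defn:leafConstructionSequence}(1), $\lcsequence{\decompositionn}=[\cpair]$. The right-hand side of the desired equation collapses to $\foundationsto{\cgraphn,t}$, which trivially equals the left-hand side.

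For the inductive step, I would assume the result holds for every decomposition strictly smaller than $\decompositionn$ (say, measured by the number of vertices of the underlying tree). By definition~\ref{defn:decomposition}(2), there is a split $\big(\genpair,[\cpairp{1}',\ldots,\cpairp{m}']\big)$ of $\cpair$ with $\labc(\treeroot{\decompositionn})=\genpair$, and given $\dsequence{\treeroot{\decompositionn}}=[\videctree{v_1},\ldots,\videctree{v_m}]$, each $\videctree{v_i}$ is a decomposition of $\cpairp{i}'$. Theorem~\ref{thm:splitPreservesfoundations} applied to this split yields
\begin{displaymath}
\foundationsto{\cgraphn,t} = \foundationsto{\cgraphn_1',t_1'}\oplus \cdots \oplus \foundationsto{\cgraphn_m',t_m'}.
\end{displaymath}
By the inductive hypothesis, for each $i$, writing $\lcsequence{\videctree{v_i}}=[\cpairp{i,1},\ldots,\cpairp{i,k_i}]$, we have
\begin{displaymath}
\foundationsto{\cgraphn_i',t_i'} = \foundationsto{\cgraphn_{i,1},t_{i,1}}\oplus \cdots \oplus \foundationsto{\cgraphn_{i,k_i},t_{i,k_i}}.
\end{displaymath}
Substituting into the previous display and invoking definition~\ref{defn:leafConstructionSequence}(2), which gives $\lcsequence{\decompositionn}=\lcsequence{\videctree{v_1}}\oplus \cdots \oplus \lcsequence{\videctree{v_m}}$, completes the induction.

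No step looks genuinely hard: theorem~\ref{thm:splitPreservesfoundations} already does all of the trail-level bookkeeping, and definition~\ref{defn:leafConstructionSequence} is precisely designed so that concatenation of leaf sequences commutes with the concatenation arising from splits. The only mild care required is to verify that the ordering of leaves produced by definition~\ref{defn:leafConstructionSequence}(2) matches the left-to-right order of the induced constructions in the split at the root, so that the two concatenations can be identified term-by-term; this match is immediate from the indexing convention on $\dsequence{\treeroot{\decompositionn}}$ in definition~\ref{defn:decTreeSequence}(3), which mirrors the ordering used in defining a split.
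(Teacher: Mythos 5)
Your proof is correct and takes exactly the route the paper intends: the paper gives no explicit proof, stating only that the result follows immediately from theorem~\ref{thm:splitPreservesfoundations}, and your structural induction (applying that theorem at the root split and recursing via definition~\ref{defn:leafConstructionSequence}) is precisely the argument being left to the reader.
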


Our last result in this section, theorem~\ref{them:DecompositonTreeUniqueConstruction}, establishes that a decomposition tree is associated with a unique construction.

\begin{theorem}\label{them:DecompositonTreeUniqueConstruction}
Let $\cpair$ and $\cpaird$ be a construction with decompositions $\decompositionn$ and $\decompositionn'$, respectively. If $\decompositionn=\decompositionn'$ then $\cpair=\cpaird$.
\end{theorem}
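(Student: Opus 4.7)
The plan is to proceed by induction on the number of vertices in the decomposition tree $\decompositionn$. Since $\decompositionn$ includes the vertex labelling $\labc$ as part of its data (per definition~\ref{defn:decomposition}), the hypothesis $\decompositionn = \decompositionn'$ carries with it that the two trees agree on all vertex labels and on the arrow-index assignments.

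For the base case, when $\decompositionn$ consists solely of $\treeroot{\decompositionn}$, case~(1) of definition~\ref{defn:decomposition} gives $\labc(\treeroot{\decompositionn}) = \cpair$, and similarly $\labc(\treeroot{\decompositionn'}) = \cpaird$. Since the trees are equal, the root labels agree, yielding $\cpair = \cpaird$ immediately.

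For the inductive step, I assume the result for all decomposition trees with strictly fewer vertices, and consider a $\decompositionn$ with more than one vertex. By case~(2) of definition~\ref{defn:decomposition}, there is a split $(\genpair,[(\cgraphn_1,t_1),\ldots,(\cgraphn_n,t_n)])$ of $\cpair$ with $\labc(\treeroot{\decompositionn}) = \genpair$ and $\dsequence{\treeroot{\decompositionn}} = [\videctree{v_1},\ldots,\videctree{v_n}]$, where each $\videctree{v_i}$ is a decomposition of $(\cgraphn_i,t_i)$. An analogous split $(\genpair^*, [(\cgraphn_1^*,t_1^*),\ldots,(\cgraphn_m^*,t_m^*)])$ of $\cpaird$ arises from $\decompositionn'$. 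Equality of the trees forces the root labels to coincide, so $\genpair = \genpair^*$; in particular the construct of $\cpair$ equals the construct of its generator, which equals the construct of the generator of $\cpaird$, i.e.\ $t = t^*$. Equality of the trees also forces $n = m$ and $\videctree{v_i} = \videctree{v_i^*}$ for every $i$ (the arrow-index bijections guarantee the decomposition-tree sequences are paired up identically). The inductive hypothesis applied to each pair of sub-trees then yields $(\cgraphn_i,t_i) = (\cgraphn_i^*, t_i^*)$.

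It remains to reassemble the original constructions from their splits. By theorem~\ref{thm:SplitsCoverConstruction}, letting $\ctsequence{\cgraphn',t} = [\trail_1,\ldots,\trail_n]$, we have
\begin{displaymath}
\cgraphn = \cgraphn' \cup \icgp{\trail_1} \cup \cdots \cup \icgp{\trail_n},
\end{displaymath}
and the same identity with matching components holds for $\cpaird$. Since the generator graph $\cgraphn'$ and each induced construction graph $\icgp{\trail_i} = \cgraphn_i$ coincides with its counterpart from $\decompositionn'$, the two unions are equal, giving $\cgraphn = \cgraphn'^{\text{orig}}$ (the underlying graph of $\cpaird$). Combined with $t = t^*$, this yields $\cpair = \cpaird$. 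The only delicate point, and the place where I would be most careful, is verifying that the order of induced constructions inherited from $\dsequence{\treeroot{\decompositionn}}$ genuinely matches the order produced by $\ctsequence{\cgraphn',t}$ in the split; this is ensured by condition~(2) of definition~\ref{defn:decomposition} together with the bijection property in definition~\ref{defn:decompositionTree}, so no additional argument is required beyond unpacking these definitions.
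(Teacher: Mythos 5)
Your proof is correct and follows essentially the same route as the paper's: equate the root labels to get equal generators and hence equal constructs, then invoke theorem~\ref{thm:SplitsCoverConstruction} to recover equality of the underlying graphs. The only difference is that your induction on the tree explicitly supplies the detail behind the paper's remark that ``it can readily be shown that $\cgraphn=\cgraphn'$'', which is a reasonable elaboration rather than a new argument.
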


\begin{proof}
Suppose that $\decompositionn=\decompositionn'$. Then the labels, $(\cgraphn_v,t_v)$ and $(\cgraphn_{v'},t_{v'})$ of their roots, $v$ and $v'$, are equal, implying that $t_v=t_{v'}$. By the definition of a decomposition, $(\cgraphn_v,t_v)$ and $(\cgraphn_{v'},t_{v'})$ are generators for $\cpair$ and $\cpaird$, so $t_v=t$ and $t_{v'}=t'$.  Hence $t=t'$. By theorem~\ref{thm:SplitsCoverConstruction}, it can readily be shown that $\cgraphn=\cgraphn'$. It follows that $\cpair=\cpaird$.
\end{proof}

Consequently, if we had a decomposition, $\decompositionn$, which was the product of some algorithm to, say, invoke a structural transformation of some construction, $\cpair$, we could convert $\decompositionn$ into a unique construction, $\cpaird$ -- the sought-after transformation of $\cpair$.

\begin{definition}\label{defn:constructionOfD}
Let $\decompositionn=\decomposition$ be a decomposition and let $t$ be the construct of $\labc(\treeroot{\decompositionn})$. The \textit{construction of} $\decompositionn$, denoted $\constructionofD{\decompositionn}$, is the construction
\begin{displaymath}
\constructionofD{\decompositionn}=\bigg(\bigcup \{\cgraphn_v: \exists v\in V \thinspace \exists t_v \thickspace \labc(v)=(\cgraphn_v,t_v)\},t\bigg).
\end{displaymath}
\end{definition}

We now illustrate how decompositions can be used in the context of structural transformations. In particular, we include an example that illustrates transforming a set-theoretic construction into an Euler diagram construction. Decompositions allow us to use `local' knowledge, that one sub-construction can be transformed into another, when producing a structural transformation.

\begin{example}[Visualizing Information Using a Structural Transformation]\label{ex:visInfoUsingStructureTransfer}
A common problem in information visualisation is identifying a suitable diagrammatic representation of sets. Example~\ref{ex:encodingProperties}  visualised the set-theoretic axioms $\{A\subseteq B, B\cap C=\emptyset\}$ using an Euler diagram. This current example demonstrates, whilst omitting various significant details, how decompositions can be used to find an Euler diagram drawn from $\rsystemn_{ED}$ that represents $\{A\subseteq B, B\cap C=\emptyset\}$; we first saw this example in Section~\ref{sec:example}. The process of finding a structural transformation starts with a construction, \begin{samepage}$(\cgraphn, \{A\subseteq B, B\cap C=\emptyset\})$:
\begin{center}
\scalebox{1}{%
\STConstruction}
\end{center}
\end{samepage}
The structural transformation task is to decompose the above construction into simpler constructions for which individual (local) structural transformations can be produced. 

Consider the following decomposition, \begin{samepage}$\decompositionn$:
\begin{center}
\scalebox{1}{%
\STDecompositionTree}
\end{center}\end{samepage}
Further, suppose that we know some relationships between the foundations of the given set-theoretic construction of $\{A\subseteq B, B\cap C=\emptyset\}$ and tokens in $\rsystemn_{ED}$. In particular, suppose it is given that an atomic set expression (i.e. one that does not include any operators), such as $A$, corresponds to a closed curve that has been labelled by $A$. This means that the induced construction comprising just the foundation $A$ (see vertex $v_5$ above), of $(\cgraphn, \{A\subseteq B, B\cap C=\emptyset\})$ can be associated with this construction in $\rsystemn_{ED}$:
\begin{center}
\scalebox{1}{%
\begin{tikzpicture}
\node[draw=darkred, thick, rounded corners, label=-90:{$v_5$}] (EDlabelaCurve) at (2.3,0) {\STLeafA};
\draw[|-{angle 90},thick] (2.7,0) -- (3.25,0);
\node[draw=darkred, thick, rounded corners] (EDlabelaCurve) at (5,0) {\EDlabelaCurve};
\end{tikzpicture}}\vspace{-2pt}%
\end{center}
Similar relationships hold for $B$ and $C$. As such, the induced (trivial) constructions of basic sets $A$, $B$, and $C$ (vertices $v_5$, $v_7$, $v_{8}$ and $v_{10}$) can readily have its structure transferred to some construction in $\rsystemn_{ED}$. In fact, each sub-construction of $(\cgraphn, \{A\subseteq B, B\cap C=\emptyset\})$ in $\decompositionn$ can be structurally transferred to a construction in the decomposition, $\decompositionn'$, shown below, where the vertex names (from $\decompositionn$) and $\mapsto$ arrows indicate a structural \begin{samepage}transformation:
\begin{center}
\scalebox{0.9}{\EDDecompositionTree}\vspace{-2pt}%
\end{center}\end{samepage}
A key insight in this example is that the structure transfer process is contingent on establishing some basic transformations between constructions at the leaves of $\decompositionn$, to produce leaves of $\decompositionn'$. Considering the leaves of $\decompositionn$, suppose
\begin{enumerate}
\item it is given that any trivial construction, $(\cgraphn,X)$, of a basic set, $X$, \textit{corresponds} to some basic construction, $(\cgraphn', Y)$, that has $\foundationsty{\cgraphn',Y}=[\texttt{curve}, \texttt{label}]$ and
that uses the \texttt{assignLabel} constructor; this information provides a link between constructions, across the two systems, that have a \textit{prescribed structure} (this prescribed structure is exactly captured by patterns, developed in Section~\ref{sec:patterns}), and
\item we have such a pair of linked constructions, $(\cgraphn,X)$ and $(\cgraphn',Y)$,  where $\foundationsto{\cgraphn,X}=[X]$ and $\foundationsto{\cgraphn',Y}=[c,Z]$; this information allows us to define relationship between foundation sequences: $[X]$ is related to $[c,Z]$ whenever $X=Z$ (i.e., when the name, $X$, of the set is the label, $Z$, of the curve).
\end{enumerate}
Then, given $(\cgraphn,X)$ and $(\cgraphn',Y)$, where $X=Z$ we can \textit{deduce} that the construct $Y$ \textit{represents} $X$. This general approach allows the constructions associated with leaves $v_5$, $v_7$, $v_{8}$ and $v_{10}$ in $\decompositionn$ to have their structure transferred in such a way that it is \textit{guaranteed} that the constructed Euler diagrams represent the sets $A$, $B$ and $C$. Moreover, these leaf-level transformations \textit{provide the foundations} to the constructions in $\decompositionn'$ associated with \texttt{mergeSubset} and \texttt{mergeDisjoint}. As such, by transferring the structures of the individual constructions in $\decompositionn$, working from the leaves up to the root, we produce $\decompositionn'$. 

Notably, not all of the leaves of $\decompositionn$ have their structure transferred. For instance, the subset operator, $\subseteq$, has no direct counterpart token in the Euler diagram system: the construction assigned to $v_6$ does not require its structure to be transferred to another token. Lastly, the construction in the Euler diagram system obtained from the produced decomposition, $\decompositionn'$, \begin{samepage}is:
\begin{center}\vspace{-0.3cm}
\scalebox{0.9}{\EDConstruction}
\end{center}\vspace{-0.2cm}\end{samepage}
Thus the set of axioms, $\{A\subseteq B, B\cap C=\emptyset\}$, can be visualised by the Euler diagram constructed above\footnote{In an information visualisation setting, we would expect the result of such a structural transformation to be the \textit{abstract syntax} of the required diagram. The abstract syntax would then be provided to a system-specific layout algorithm. We do not claim that the process of structure transfer solves the general, system-specific, problem of how to find layouts of diagrams.}.
\end{example}

As example~\ref{ex:visInfoUsingStructureTransfer} demonstrates, decompositions play a vital role in structural transformations. They are also fundamental to how we describe a representational system using patterns, introduced in Section~\ref{sec:patterns}. 
%%%
%%%

%\section{Propagation Functions}\label{sec:propagation} %% for another paper
%%
%\input{propagation} %% for another paper

%

\section{Describing Representational Systems Using Patterns}\label{sec:patterns}

Patterns provide a way of describing the constructions in a representational system, $\rsystemn$, and are fundamental to the definition of structural transformations, developed in Section~\ref{sec:structuralTransformations}. A pattern is a construction, $\ppair$, drawn from a \textit{pattern space}; roughly speaking, a construction, $\cpair$, in $\rsystemn$ \textit{matches} a pattern, $\ppair$, provided $\cgraphn$ and $\pgraphn$ are essentially the same, but $\pgraphn$ `forgets' the tokens of $\cgraphn$ and uses types instead.  Consequently, a pattern can \textit{describe} many (matching) constructions in $\rsystemn$. A description of $\rsystemn$ will comprise a set, $\descriptionn$, of \textit{decompositions of patterns}. At first thought, it may appear more straightforward to simply say that a set of patterns, $\setofpatterns$, is a description of $\rsystemn$, without any need for using pattern decompositions. However, such an approach has limitations, as we now explain.

A structural transformation, given an inter-representational-system encoding, $\irencoding$,  will require an association of a pair of patterns, $\ppair$ and $\ppaird$, drawn from systems $\rsystemn$ and $\rsystemn'$, with a set of patterns, $\setofpatternstc$, drawn from $\ispacen''$, . Given such a triple, $(\ppair,\ppaird,\setofpatternstc)$ -- which we call a \textit{transformation constraint} -- the process of \textit{structure transfer} will take a construction, $\cpair$, that matches $\ppair$ and identify a construction, $\cpaird$, that is described by (i.e. matches) $\ppaird$:
\begin{displaymath}
  \cpair \longrightarrowY{\mathit{matches}} \ppair \longrightarrowX{}  \ppaird \longrightarrowY{\mathit{describes}} \cpaird.
\end{displaymath}
The role of $\setofpatternstc$, will be explained in Section~\ref{sec:structuralTransformations}; for now, it is sufficient to know that  $\setofpatternstc$ places an additional constraint on the relationship between $\cpair$ and $\cpaird$  -- e.g. the relationship that $t$ and $t'$ are semantically equivalent -- to support \textit{representation selection}. Using $\setofpatterns$ as a description of $\rsystemn$ would mean that -- in many cases -- descriptions of representational systems would be infinite, even if $\setofpatterns$ contains no pair of label-isomorphic patterns: there are likely to be infinitely many constructions of arbitrarily large complexity (measured by number of vertices) in $\uspace{\rsystemn}$. Hence, $\setofpatterns$ would contain patterns of arbitrary large complexity, too, which is a practical limitation: the set of transformation constraints that describe \textit{allowable structure transfers} would also need to be infinite.

Of course, even when using the set $\descriptionn$ to describe $\rsystemn$, it is likely that $\descriptionn$ would be infinite. But, the decompositions in $\descriptionn$ may only involve a \textit{finite} set of patterns (up to label-preserving isomorphism); $\descriptionn$ is called \textit{compact} provided the set of equivalence classes of the set of patterns that label the vertices of decompositions in $\descriptionn$ is finite. In essence, given a compact $\descriptionn$, a \textit{finite set of transformation constraints}, arising from (equivalence classes of) patterns can be used to support the structure transfer of \textit{constructions of arbitrary complexity}. This is the high-level process: construction $\cpair$ is decomposed, giving $\decompositionn$; each construction involved in $\decompositionn$ is, essentially, replaced by a matching pattern, to give a decomposition, $\pdecompositionn$, of some pattern, $\ppair$, that describes $\cpair$; next, transformation constraints are used to create $\pdecompositionn'$, a decomposition of some pattern associated with $\rsystemn'$; each pattern in $\pdecompositionn'$ is replaced by a construction that it describes to yield $\decompositionn'$, in such a way that the transformation constraints are satisfied; lastly, $\decompositionn'$ can be replaced by its construction, $\constructionofD{\decompositionn'}=\cpaird$, following definition~\ref{defn:constructionOfD}. This process is visualised here, omitting explicit depiction of the role of transformation constraints:
\begin{displaymath}
  \cpair \longrightarrowY{\mathit{decompose}} \decompositionn \longrightarrowY{\mathit{\vphantom{p}match}} \pdecompositionn \longrightarrowY{\mathit{\vphantom{p}transfer}}  \pdecompositionn' \longrightarrowY{\mathit{\vphantom{p}describe}} \decompositionn' \longrightarrowY{\mathit{recompose}}\cpaird.
\end{displaymath}
Hence, we can exploit structure transfer algorithms to convert $\cpair$ into $\cpaird$ via decompositions. Shortly the process of finding a structural transformation will be illustrated by example, to motivate what follows in this section, but it is formalised in section~\ref{sec:structuralTransformations}.

\subsection{A Motivating Example: Representing Data Using Tables and Tree Maps}\label{sec:patterns:ME}

To introduce patterns and their role in describing representational systems, we appeal to some constructions in the context of data that can be represented using a table as well as a variety of visual modes. These constructions, whilst simple, are sufficient to exemplify the role of patterns in describing representational systems and in defining structural transformations, building on the insights provided at the end of section~\ref{sec:constructions}. Firstly, we show how a \textsc{Tables} representational system, $\rsystemn_{T}$, can be described using patterns, specifically decompositions of patterns. For the purposes of illustration, only the grammatical space of $\rsystemn_{T}$ is described here; we assume that the entailment and identification spaces are empty\footnote{In general, a description of a representational system, $\rsystemn$, will exploit the universal space, $\uspace{\rsystemn}$.}. Secondly, we show how a \textsc{Tree Maps} representational system, $\rsystemn_{\mathit{TM}}$, can be described and we identify associations between patterns that permit structural transformations to be found. These transfers will take some data represented using a table and exploit associated patterns to identify a tree map that represents the same data.

The \textsc{Tables} system, $\rsystemn_{T}$, has three constructors in its grammatical space\footnote{Many other ways of encoding tables using Representational Systems  Theory are possible. The constructors provided are just one such way and they are sufficient to build any table whose cells include text or numerical values.} with the following signatures:
\begin{center}
% \begin{minipage}[c][3cm]{0.2\textwidth}    % numerical expression
\begin{tikzpicture}[construction,yscale=0.75]
\node[typeE={$\Ttable$}] (v) at (3.2,4.2) {};
\node[constructorNW={$\TcaddRow$}] (u) at (3.2,3.2) {};
\node[typeS={$\Ttable$}] (v2) at (2.2,2.5) {};
\node[typeS={$\Trow$}] (v4) at (4.2,2.5) {};
\path[->]
(u) edge[bend right = 0] (v)
(v2) edge[bend right = -10] node[index label] {1} (u)
(v4) edge[bend right = 10] node[index label] {2} (u)
;
\end{tikzpicture}
% \end{minipage}
%
\hspace{1.5cm}
%
% \begin{minipage}[c][3cm]{0.2\textwidth}    % brackets
\begin{tikzpicture}[construction,yscale=0.75]
\node[typeE={$\Trow$}] (v) at (3.2,4.2) {};
\node[constructorNW={$\TcaddCell$}] (u) at (3.2,3.2) {};
\node[typeS={$\Trow$}] (v2) at (2.2,2.5) {};
\node[typeS={$\Tcell$}] (v4) at (4.2,2.5) {};
\path[->]
(u) edge[bend right = 0] (v)
(v2) edge[bend right = -10] node[index label] {1} (u)
(v4) edge[bend right = 10] node[index label] {2} (u)
;
\end{tikzpicture}
% \end{minipage}
%
\hspace{1.5cm}
%
% \begin{minipage}[c][3cm]{0.2\textwidth} % binary relation formulae
\begin{tikzpicture}[construction,yscale=0.75]
\node[typeE={$\Tcell$}] (v) at (3.2,4.2) {};
\node[constructorNW={$\TcinsertText$}] (u) at (3.2,3.2) {};
\node[typeS={$\TemptyCell$}] (v2) at (2.2,2.5) {};
\node[typeS={$\TcellEntry$}] (v4) at (4.2,2.5) {};
\path[->]
(u) edge[bend right = 0] (v)
(v2) edge[bend right = -10] node[index label] {1} (u)
(v4) edge[bend right = 10] node[index label] {2} (u)
;
\end{tikzpicture}
% \end{minipage}
\end{center}
Whilst we do not introduce all of the types in $\rsystemn_{T}$, those used in the signatures above, along with a further two types, $\Ttext$ and $\Tnum$, respect the following ordering:
\begin{displaymath}
\TemptyCell \leq \Tcell \leq \Trow \leq \Ttable \qquad \Ttext \leq \TcellEntry \qquad \Tnum \leq \TcellEntry.
\end{displaymath}

\begin{wrapfigure}[5]{r}{3cm}\centering
	\vspace*{-0.4cm}
\scalebox{0.85}{%
\begin{tikzpicture}
\tablefive
\end{tikzpicture}
}
\end{wrapfigure}
Consider the table on the right, drawn from $\rsystemn_{T}$, representing how many seats are held by political parties in the UK parliament in 2021. This particular table has a construction that uses all three constructors, with each row being individually built before being integrated into the table. This construction, which we call $\cpair$, can be seen below 
and is indicative of how any table can be built using the three provided constructors:
\begin{center}
	\scalebox{0.9}{\tableConstruction}
\end{center}
Each of the constructor signatures  -- such as $\sig(\TcaddRow)=([\Ttable,\Trow],\Ttable)$  -- can, when viewed as a structure graph, be considered a pattern for $\rsystemn_{T}$. Some more complex patterns can also be observed by considering the UK parliament data table's construction, $\cpair$, which highlights some common ways of combining configurations to produce parts of tables. 

This commonality can be captured through the use of more complex patterns that involve multiple configurators. One common combination of configurators builds a two-cell row of a table and appends it to the bottom of an existing table, as seen in this generator of \begin{samepage}$\cpair$:
\begin{center}
\scalebox{0.95}{\tableConstructionGenerator}
\end{center}
\end{samepage}
The above generator matches the pattern, which we call $\ppairp{1}$\label{pageref:patternsppairp1}, shown below:
\begin{center}
\scalebox{0.95}{\tableConstructionPattern}
\end{center}
Notice that $\ppairp{1}$ exploits subtypes of those assigned to the constructor $\TcinsertText$, where $\TcellEntry$ is replaced by $\Ttext$ in one instance and by $\Tnum$ in another. Another pattern, which we call $\ppairp{2}$, is shown below which embodies the creation of a table comprising two rows and two columns, where the first row contains text (e.g. a `heading' row) and the second row contains text in the first cell and a numeral in the other:
\begin{center}
\scalebox{0.95}{\tableConstructionPatternFoundations}
\end{center}
The patterns $\ppairp{1}$ and $\ppairp{2}$, or label-isomorphic copies of them,  along with three further (trivial) patterns, are sufficient to \textit{describe} $\cpair$. These trivial patterns are $\ppairp{3}$, $\ppairp{4}$ and $\ppairp{5}$:
\begin{center}
\begin{tikzpicture}[construction]
\node[typeS={\TemptyCell}] (c5lb) {$v_3$};
\end{tikzpicture}
\quad
\begin{tikzpicture}[construction]
\node[typeS={\Ttext}] (c5lt) {$v_4$};
\end{tikzpicture}
\quad%
\begin{tikzpicture}[construction]
\node[typeS={\Tnum}] (c5rt) {$v_5$};
\end{tikzpicture}
\end{center}
A description of $\cpair$ can be obtained from a decomposition, $\decompositionn$, by replacing the sub-constructions with matching patterns.{\pagebreak} Such a decomposition, $\decompositionn$, is given below, alongside a description, $\pdecompositionn$, that exploits label-isomorphic copies of the five \begin{samepage}patterns:\label{ex:decompostionForStructureTransfer}
\begin{center}
	\begin{tikzpicture}
		\node[anchor = center] at (0,0) {\scalebox{0.65}{\TableDecompositionTree}};
		\node[anchor = center] at (6.5,0) {\scalebox{0.65}{\TableDescription}};
	\end{tikzpicture}
\end{center}\end{samepage}
Here, the construct in each non-root pattern in $\pdecompositionn$ is a foundation vertex of the pattern above. So, for example, the construct $n_{12}$ of the bottom pattern is a foundation of the pattern above. These common vertices allow us to join up the patterns in $\Delta$ to form a single, more complex, pattern that is guaranteed to describe $\cpair$.

So far, we have witnessed the conversion of a construction, $\cpair$, into a decomposition, $\decompositionn$, and, subsequently, a description, $\pdecompositionn$, which reflects the first stage of the structure transfer process:
\begin{center}
	\begin{tikzpicture}
	\node (f) at (0,0) {$\cpair \longrightarrowY{\mathit{decompose}} \decompositionn \longrightarrowY{\mathit{\vphantom{p}match}} \pdecompositionn \longrightarrowY{\mathit{\vphantom{p}transfer}}  \pdecompositionn' \longrightarrowY{\mathit{\vphantom{p}describe}} \decompositionn' \longrightarrowY{\mathit{recompose}}\cpaird.$};
	%\draw[rounded corners,very thick, opacity=0.75,darkgold] (-1.8,-0.5) rectangle (1.28,0.4);
	\draw[rounded corners,very thick, opacity=0.5,darkpurple] (-7,-0.4) rectangle (-1.2,0.3);
	%\draw[rounded corners,very thick, opacity=0.5,darkpurple] (0.57,-0.4) rectangle (7.0,0.3);
	\end{tikzpicture}
\end{center}
Our focus now turns to the step of going from $\pdecompositionn$ to $\pdecompositionn'$, a description of some construction in an alternative representational system. Afterwards, we will consider the final stage: taking $\pdecompositionn'$ and identifying $\decompositionn'$ and $\cpaird$.

\begin{wrapfigure}{r}{2.7cm}\centering
\treemap
\end{wrapfigure}
Our alternative representational system is that of tree maps, where we will seek a construction of a tree map that represents the UK parliament data, given the table with construction $\cpair$. In fact, we will derive a construction, $\cpaird$, of the tree map given on the right.  This diagram does not include a legend\footnote{It would be readily possible to include legends within the\textsc{Tree Maps} representational system, we merely omit them for simplicity of discussion.}, but simply represents the numbers of seats held by various parties. Now, in order to identify $\pdecompositionn'$, before finding $\decompositionn'$ and $\cpaird$, we first specify the types and constructors of $\rsystemn_{\mathit{TM}}$. In our encoding of tree maps, there are the following types with the specified relations holding:
\begin{displaymath}\small
\TMemptyTreeMap\leq \TMincompleteTreeMap \leq \TMtreeMap,  \TMfilledTreeMap\leq \TMtreeMap, \textup{ and } \TMnumeral.
\end{displaymath}

The type $\TMfilledTreeMap$ is assigned to each rectangle containing only a number, and to tree maps whose outer-most rectangle is \textit{exhaustively filled} by sub-tree maps, each of which are also complete; other tree maps are of type $\TMincompleteTreeMap$. Of the three tree maps below, that on the left is not exhaustively filled, since the outermost rectangle has space for further sub-tree maps to be inserted, so it is of type $\TMincompleteTreeMap$:
\begin{center}
\begin{minipage}{0.35\textwidth}\centering
{\treemapA}
\end{minipage}
\begin{minipage}{0.25\textwidth}\centering
{\treemapB}
\end{minipage}
\begin{minipage}{0.25\textwidth}\centering
{\treemapC}
\end{minipage}
\end{center}
By contrast, the tree map in the middle is of type $\TMfilledTreeMap$ since (a) its outer rectangle does not have space for further sub-tree maps to be inserted, and (b) the two sub-tree maps it contains are exhaustively filled (in their cases, their unique rectangle simply contains a number). The rightmost rectangle is a tree map of type $\TMemptyTreeMap$.

We define just two constructors\footnote{Our constructors prohibit the construction of tree maps whose outer-most rectangle is exhaustively filled but where one, or more, of the tree maps it contains is not of type $\TMfilledTreeMap$. This prohibition is purely for simplicity.} for the $\rsystemn_{\mathit{TM}}$ system, $\TMcinsertTreeMap$ and $\TMcannotate$, whose signatures are visualised here:
\begin{center}
% \begin{minipage}[c][3cm]{0.2\textwidth}    % numerical expression
\begin{tikzpicture}[construction,yscale = 0.8]\small
\node[typeE={$\TMtreeMap$}] (v) at (3.2,3.8) {};
\node[constructorNW={$\TMcinsertTreeMap$}] (u) at (3.2,2.9) {};
\node[typeS={$\TMincompleteTreeMap$}] (v2) at (2,2.2) {};
\node[typeS={$\TMfilledTreeMap$}] (v4) at (4.4,2.2) {};
\path[->]
(u) edge[bend right = 0] (v)
(v2) edge[bend right = -10] node[index label] {1} (u)
(v4) edge[bend right = 10] node[index label] {2} (u)
;
\end{tikzpicture}
% \end{minipage}
%
\hspace{2cm}
%
% \begin{minipage}[c][3cm]{0.2\textwidth}    % brackets
\begin{tikzpicture}[construction,yscale = 0.8]\small
\node[typeE={$\TMfilledTreeMap$}] (v) at (3.2,3.8) {};
\node[constructorNW={$\TMcannotate$}] (u) at (3.2,2.9) {};
\node[typeS={$\TMemptyTreeMap$}] (v2) at (2.2,2.2) {};
\node[typeS={$\TMnumeral$}] (v4) at (4.2,2.2) {};
\path[->]
(u) edge[bend right = 0] (v)
(v2) edge[bend right = -10] node[index label] {1} (u)
(v4) edge[bend right = 10] node[index label] {2} (u)
;
\end{tikzpicture}
\end{center}

The construction below shows how an incomplete tree map can have another tree map inserted into it:
\begin{center}
\scalebox{0.7}{%
\begin{tikzpicture}[construction,yscale = 0.8]\small
\node[termrep] (ab) {\treemapAB};
\node[constructor={\TMcinsertTreeMap}, below = 0.5cm of ab] (c) {};
\node[termrep, below left = 0.4cm and 0.1cm of c] (a) {\treemapA};
\node[termrep, below right = 0.5cm and 1cm of c] (b) {\treemapB};
\path[->] (c) edge (ab)
		  (a) edge[bend right =-10] node[index label] {1} (c)
		  (b) edge[bend right =10] node[index label] {2} (c);
\end{tikzpicture}}
\end{center}
In this case, the constructed tree map is also of type $\TMincompleteTreeMap$ but, in general, tree maps of type $\TMfilledTreeMap$ can be constructed by the $\TMcinsertTreeMap$ constructor.

Having introduced the type system and constructors for $\rsystemn_{\mathit{T}}$ and $\rsystemn_{\mathit{TM}}$, we are in a position to consider associations between their patterns. Now, in $\rsystemn_{T}$ we used, essentially, five patterns, namely $\ppairp{1},\hdots,\ppairp{5}$ in the description, $\pdecompositionn$,  of $\cpair$. The pattern $\ppairp{1}$ describes constructions that add a two-cell row to an existing table, $\mathit{ta}$, where the first cell contains text and the second cell contains a numeral. Such a row corresponds to a single-rectangle tree map containing the same numeral since, by design, $\rsystemn_{\mathit{TM}}$ does not include legends. In this context, the existing table, $\mathit{ta}$, can be taken to correspond to an incomplete tree map, $\mathit{tm}$. Thus, adding a two-cell row to $\mathit{ta}$ corresponds to inserting a tree map containing a single value into $\mathit{tm}$. Based on this observation, the pattern $\ppairp{1}$ can be associated with two patterns, $(\pgraphn_1',v_1')$ and $(\pgraphn_2',v_2')$ , for tree maps (reflecting whether the constructed tree map is complete)\footnote{A third link could be defined where the type of the constructed tree map is $\TMtreeMap$, but we do not need this pattern.}:
\begin{center}
\begin{minipage}{0.45\textwidth}\centering
	\scalebox{0.75}{\tableConstructionPattern}
\end{minipage}
\begin{minipage}{0.05\textwidth}\centering
	\longrightarrowX{}
\end{minipage}
\begin{minipage}{0.35\textwidth}\centering
	\scalebox{0.75}{\treemapPatternRoot{v_1'}{}{}{}}
\end{minipage}
\\[1ex]
\begin{minipage}{0.45\textwidth}\centering
	\scalebox{0.75}{\tableConstructionPattern}
\end{minipage}
\begin{minipage}{0.05\textwidth}\centering
	\longrightarrowX{}
\end{minipage}
\begin{minipage}{0.35\textwidth}\centering
	\scalebox{0.75}{\treemapPattern{v_2'}{}{}{}}
\end{minipage}
\end{center}

The pattern $\ppairp{2}$ describes constructions of tables comprising two rows and two columns, where the first row contains textual entries only (the column headings in the UK parliament table) and the second row contains text and a numeral. Such a table is akin to a tree map comprising a single rectangle which, in turn, contains a single numeral. We define the following association between $\ppairp{2}$ and a third $\rsystemn_{\mathit{TM}}$ pattern, $(\pgraphn_3',v_3')$ \footnote{Again, other patterns can be defined where the construct is of different types, namely $\TMcompleteTreeMap$ and $\TMtreeMap$.}:
\begin{center}
\begin{minipage}{0.5\textwidth}\centering
	\scalebox{0.75}{\tableConstructionPatternFoundations}
\end{minipage}\hspace{-0cm}
\begin{minipage}{0.07\textwidth}\centering\vspace{-1.5cm}
	\longrightarrowX{}
\end{minipage}\hspace{-0cm}
\begin{minipage}{0.3\textwidth}\centering
	\scalebox{0.75}{\treemapPatternLeaf{v_3'}{}}
\end{minipage}
\end{center}
Lastly, we define an association between a pair of trivial patterns, $\ppairp{5}$ and $(\pgraphn_4',v_4')$ :
\begin{center}
\begin{minipage}{0.15\textwidth}\centering
\begin{tikzpicture}[construction]
\node[typeSE={\Tnum}] (c5rt) {$v_5$};
\end{tikzpicture}
\end{minipage}
\begin{minipage}{0.07\textwidth}\centering\vspace{-0.5cm}
	\longrightarrowX{}
\end{minipage}
\begin{minipage}{0.15\textwidth}\centering
	\begin{tikzpicture}[construction]
\node[typeSW={\TMnumeral}] (c5rt) {$v_{4}'$};
\end{tikzpicture}
\end{minipage}
\end{center}

The remaining two patterns, $\ppairp{3}$ and $\ppairp{4}$ are not associated with patterns in $\rsystemn_{\mathit{TM}}$: tree maps, as we have defined them, do not contain tokens that have a direct analogy with empty cells or text. However, one further pattern, $(\pgraphn_5',v_5')$ , is needed for $\rsystemn_{\mathit{TM}}$, that comprises a single vertex labelled by $\TMemptyTreeMap$. Using{\pagebreak} the identified pattern associations it is now possible to transfer the structure of $\pdecompositionn$ to another decomposition, $\pdecompositionn'$, that describes a construction of the UK parliament data tree \begin{samepage}map:
\begin{center}
	\begin{tikzpicture}
	\node[anchor = north] (td) at (0,0) {\scalebox{0.62}{\TableDescriptionShorterArrows}};
	\node[anchor = north] (td) at (5.5,0) {\scalebox{0.62}{\TreeMapDescriptionShorterArrows}};
	\end{tikzpicture}
\end{center}\end{samepage}

We have now schematically demonstrated how to effect these steps in the structure transfer process:
\begin{center}
	\begin{tikzpicture}
	\node (f) at (0,0) {$\cpair \longrightarrowY{\mathit{decompose}} \decompositionn \longrightarrowY{\mathit{\vphantom{p}match}} \pdecompositionn \longrightarrowY{\mathit{\vphantom{p}transfer}}  \pdecompositionn' \longrightarrowY{\mathit{\vphantom{p}describe}} \decompositionn' \longrightarrowY{\mathit{recompose}}\cpaird.$};
	\draw[rounded corners,very thick, opacity=0.75,darkred] (-1.7,-0.5) rectangle (1.28,0.4);
	\draw[rounded corners,very thick, opacity=0.5,darkpurple] (-7.0,-0.4) rectangle (-1.2,0.3);
	\end{tikzpicture}
\end{center}
Given the above, it is now possible to produce, from $\pdecompositionn'$, a decomposition, $\decompositionn'$,{\pagebreak} of a construction, $\cpaird$, of a tree map that represents the UK parliament seats \begin{samepage}data:
%
%\treeMapConstruction
%
\begin{center}
	\begin{tikzpicture}
	\node[anchor = north] (td) at (0,0) {\scalebox{0.6}{\TreeMapDescription}};
	\node[anchor = north] (td) at (5.5,0) {\scalebox{0.6}{\TreeMapDecomposition}};
	\end{tikzpicture}
\end{center}\end{samepage}

In this example, we know which tree map we seek to produce via the structure transfer process. However, when we do not know exactly which tree map we seek, the step where patterns are replaced by constructions, $\pdecompositionn'\longrightarrowY{\mathit{describe}}\decompositionn'$, is non-deterministic: there are many choices of tree map that represent the UK parliament data and hence many choices of decompositions, $\decompositionn'$, described by $\pdecompositionn'$. By appealing to foundation sequences, we can eliminate some of the otherwise non-deterministic steps. Firstly, we recall theorem~\ref{thm:decompositionPreservesFoundations}, which states how the foundation token-sequence (e.g. $\foundationsto{\constructionofD{\pdecompositionn'}}$) of a construction and, thus, how the foundation type-sequence (e.g. $\foundationsty{\constructionofD{\pdecompositionn'}}$) can be deduced from any of its decompositions. By extension, we know that the foundation type-sequence, $\foundationsty{\cgraphn',t'}$, of the construction, $\cpaird$, that we seek necessarily specialises
\begin{eqnarray*}
 \foundationsty{\constructionofD{\pdecompositionn'}}  &  =  & [\TMemptyTreeMap, \TMemptyTreeMap, \TMnumeral, \TMemptyTreeMap, \TMnumeral, \\
            & & \TMemptyTreeMap, \TMnumeral, \TMemptyTreeMap, \TMnumeral].
\end{eqnarray*}
This type-sequence, along with $\foundationsto{\cgraphn,t}$, allows us to identify some of the tokens that $\cpaird$ will exploit. The table's construction has
\begin{eqnarray*}
  \foundationsto{\cgraphn,t} & = & [\mathit{ec}_1, \textup{Party}, \mathit{ec}_2, \textup{Seats}, \mathit{ec}_3, \textup{Conservative}, \mathit{ec}_4, 364, \mathit{ec}_5, \\
   & & \textup{Labour}, \mathit{ec}_6, 198, \mathit{ec}_7, \textup{SNP}, \mathit{ec}_8, 45, \mathit{ec}_9, \textup{Other}, \mathit{ec}_{10}, 43]
\end{eqnarray*}
where each $\mathit{ec}_i$ is the associated empty cell. Since we are seeking a construction of a tree map that represents the same data, we know that a suitable tree map's construction, given $\pdecompositionn'$, must have
\begin{displaymath}
  \foundationsto{\cgraphn',t'}=[\mathit{tm}_1, \mathit{tm}_2, 364, \mathit{tm}_3, 198, \mathit{tm}_4, 45, \mathit{tm}_5, 43]
\end{displaymath}
where each $\mathit{tm}_i$ is an empty tree map. What remains is to identify suitable empty rectangles, which is a problem specific to the \textsc{Tree Maps} system but for which algorithms can be readily devised, to complete the foundation token-sequence. The construction we seek to produce from $\pdecompositionn'$ is in the grammatical space, which is functional by definition~\ref{defn:representationalSystem}. Thus, if we know the foundation token-sequence then the remainder of the tokens can be instantiated. Thus, we can convert the table representing the UK parliament seats data into a tree map following this process:
\begin{center}
	\begin{tikzpicture}
	\node (f) at (0,0) {$\cpair \longrightarrowY{\mathit{decompose}} \decompositionn \longrightarrowY{\mathit{match}} \pdecompositionn \longrightarrowY{\textup{transfer}}  \pdecompositionn' \longrightarrowY{\mathit{describe}} \decompositionn' \longrightarrowY{\mathit{recompose}}\cpaird.$};
	\draw[rounded corners,very thick, opacity=0.75,darkred] (-1.77,-0.5) rectangle (1.28,0.4);
	\draw[rounded corners,very thick, opacity=0.5,darkpurple] (-6.9,-0.4) rectangle (-1.1,0.3);
	\draw[rounded corners,very thick, opacity=0.5,darkpurple] (0.52,-0.4) rectangle (6.9,0.3);
	\end{tikzpicture}
\end{center}

We can extract a general way of converting a two-column table, with a heading and where the second column contains numerical values, into an equivalent tree map. Suppose that we have a table in $\rsystemn_{T}$, with construction, $\cpair$, and a decomposition, $\decompositionn$, that can be described by some $\pdecompositionn$ where:
\begin{enumerate}
\item Each non-leaf vertex, $v$, is labelled by a pattern in the equivalence class, under label-preserving isomorphism, $[\ppairp{1}]$,  and has exactly five in-vertices, $v_1,\hdots, v_5$ that are sources of $v$'s in-arrows $a_1,\hdots, a_5$, indexed in the obvious way. The vertices $v_1,\hdots, v_5$ are labelled by patterns drawn from $[\ppairp{1}]$, $[\ppairp{2}]$, $[\ppairp{3}]$, $[\ppairp{4}]$, $[\ppairp{3}]$, and $[\ppairp{5}]$.

\item A unique leaf is labelled by a pattern in $[\ppairp{2}]$.
\end{enumerate}
Then we can produce a decomposition, $\pdecompositionn'$, exploiting the pattern associations given above, where:
\begin{enumerate}
\item The root of $\pdecompositionn'$ is labelled by a pattern in the equivalence class $[(\pgraphn_1',v_1')]$, noting that we have $\ppairp{1}\longrightarrowX{}(\pgraphn_1',v_1')$.
\item Each non-leaf vertex, $v$, of $\pdecompositionn'$, other than the root, is labelled by a pattern drawn from $[(\pgraphn_2',v_2')]$, since $\ppairp{2}\longrightarrowX{}(\pgraphn_2',v_2')$, and it has exactly three in-vertices, $v_1,\hdots, v_3$ that are the source of arrows $a_1,\hdots, a_3$. The vertices $v_1,\hdots, v_3$ are  labelled by patterns drawn from $[(\pgraphn_2',v_2')]\cup [(\pgraphn_3',v_3')]$ $[(\pgraphn_5',v_5')]$, $[(\pgraphn_4',v_4')]$, noting that $\ppairp{1}\longrightarrowX{}(\pgraphn_2',v_2')$, $\ppairp{2}\longrightarrowX{}(\pgraphn_3',v_3')$, whilst $(\pgraphn_5',v_5')$ is not associated with any pattern in $\rsystemn_{\mathit{T}}$.
\item There is a unique leaf labelled by $[(\pgraphn_3',v_3')]$, noting that $\ppairp{2}\longrightarrowX{}(\pgraphn_3',v_3')$.
\end{enumerate}
The foundation tokens of the desired construction, $\cpaird$, in $\rsystemn_{\mathit{TM}}$ that are of type $\TMnumeral$ can be instantiated by appealing to $\foundations{\cgraphn,t}$. The remainder of the foundation tokens are of type $\TMemptyTreeMap$ and they can be computed (given a suitable algorithm): it is known that the construction, $\cpaird$, must yield a complete tree map, subdivided by rectangles in the desired proportions. Importantly, \textit{any} resulting construction, $\cpaird$, that is described by such a $\pdecompositionn'$ ensures that the constructed tree map, $t'$, represents the same data as the table $t$.

This example demonstrated the key steps in the structure transfer process. Based on the insights provided here, the remainder of this section formalises pattern spaces and patterns, alongside descriptions of constructions, construction spaces, and representational systems: we exploit decompositions of patterns to describe decompositions of constructions. In turn, these decompositions of patterns describe a construction space. Section~\ref{sec:structuralTransformations} builds on the resulting theory of patterns, by formally defining structural transformations using transformation constraints.

\subsection{Pattern Spaces and Patterns}\label{sec:patterns:PSAP}

The primary contribution of section~\ref{sec:patterns:PSAP} is the introduction of \textit{patterns} as constructions in a \textit{pattern space}. Such a space is associated with a construction space, $\cspacen$, exploiting the same type system and constructor specification, but with a potentially different structure graph. As such,  our convention is to denote the structure graphs of $\cspacen$ and $\pspacen$ by $\graphn$ and, respectively, $\patterngraphn$. Similarly, constructions in $\pspacen$ will be denoted using $\ppair$, rather than $\cpair$, as in section~\ref{sec:patterns:ME}. Now, not every space that exploits the same type system and the same constructor specification as $\cspacen$ is a suitable pattern space: $\pspacen$ will ensure that each construction in $\cspacen$ matches some pattern, formalised in definition~\ref{defn:patternSpace}. This builds on definition~\ref{defn:embeddingPattern} which specifies when a structure graph in $\cspacen$ \textit{specialises} a structure graph in another space. This notion of specialisation will form the basis of definition~\ref{defn:matchAndDescribe}, which formalises the notion of a \textit{pattern} and the \textit{matching} relation. More generally, we will require that a pattern space describes all constructions in a \textit{representational system}, $\rsystemn$. Now, theorem~\ref{thm:superSpace} established that the union, $\uspace{\rsystemn}$, of $\rsystemn$'s grammatical, entailment and identification spaces is also a construction space. A pattern space for $\uspace{\rsystemn}$ will contain patterns that describe constructions in $\uspace{\rsystemn}$ and, hence, in $\rsystemn$.

\begin{definition}\label{defn:embeddingPattern}
Let  $\cspacen=\cspace$ and $\pspacen=\pspace$ be construction spaces where $\tsystemn=\tsystem$. Let $\cgraphn$ and $\pgraphn$ be  subgraphs of $\graphn$ and $\patterngraphn$. We say that $\cgraphn$ is a \textit{specialisation} of $\pgraphn$, denoted $\cgraphn\matches \pgraphn$, provided there exists an isomorphism, $f\colon \cgraphn \to \pgraphn$, such that
\begin{enumerate}
%
%\item $t$ maps to $v$: $f(t)=v$,
%
\item the subtype relation is respected by $f$: for all $t'\in \pa$, the vertex-labelling functions, $\tokenl$ and $\tokenl'$, of $\cgraphn$ and $\pgraphn$ ensure that $\tokenl(t')\leq \tokenl'(f(t'))$,
\item the constructors assigned to configurators match: for all $u\in \pb$, the configurator-labelling functions, $\consl$ and $\consl'$, of $\cgraphn$ and $\pgraphn$ ensure that $\consl(u)=\consl'(f(u))$,  and
\item the arrow indices are identical: for all $a\in \arrows$, the arrow-labelling functions, $\arrowl$ and $\arrowl'$, of $\cgraphn$ and $\pgraphn$ ensure that $\arrowl(a)=\arrowl'(f(a))$.
\end{enumerate}
Such an isomorphism is called an \textit{embedding}.
\end{definition}

\begin{definition}\label{defn:patternSpace}
A \textit{pattern space} for a construction space, $\cspacen=\cspace$, is a construction space, $\pspacen=\pspace$, such that for each construction, $\cpair$, in $\cspacen$ there exists a construction, $\ppair$, in $\pspacen$ where there exists an embedding, $f\colon \cgraphn\to \pgraphn$ such that $f(t)=v$.
\end{definition}

Going forwards, for constructions $\cpair$ and $\ppair$, if we wish an embedding $f\colon \cgraphn\to \pgraphn$ to ensure that $f(t)=v$, we write $f\colon \cpair \to \ppair$. It is trivial that every construction space, $\cspacen$, has a pattern space: $\cspacen$ is its own pattern space since every construction specialises itself. However, $\cspacen$ need not have a structure graph, $\graphn$, that directly exploits all of the types in $\cspacen$'s type system: there may exist a type, $\tau$, that is not assigned to any token via $\graphn$'s vertex labelling function. However, using such types in a pattern space for $\cspacen$ can be beneficial: they can allow a single pattern to describe many constructions.

\begin{example}[Use of Non-Assigned Types in a Pattern Space]
Consider the \textsc{First-Order Arithmetic} representational system, $\rsystemn_{\FOA}$, introduced in section~\ref{sec:representationalSystems:ME}, which has many types, such as $\FOAnumexp$, that are not assigned to any token but are supertypes of the infinitely many assigned types. Suppose that we have a pattern space that includes the patterns $\ppair$ -- which contains a single vertex -- and $\ppaird$ shown below:
\begin{center}
\begin{minipage}[c]{0.2\textwidth}
\begin{tikzpicture}
\node[typeE={$\FOAnum$}] (v) at (3.2,4.2) {$v$};
\end{tikzpicture}
\end{minipage}
\qquad
\begin{minipage}[c][3cm]{0.2\textwidth}    % numerical expression
\begin{tikzpicture}[construction,yscale=0.8]
\node[typeE={$\FOAnumexp$}] (v) at (3.2,4.2) {$v'$};
\node[constructorNW={$\FOAcinfixop$}] (u) at (3.2,3.2) {};
\node[typeS={$\FOAnumexp$}] (v2) at (2.2,2.2) {};
\node[typeS={$\FOAbop$}] (v3) at (3.2,2.2) {};
\node[typeS={$\FOAnumexp$}] (v4) at (4.2,2.2) {};
%\node[termS={$\FOAopenb$}] (v1) at (1.6,2.2) {};
%\node[termS={$\FOAcloseb$}] (v5) at (4.8,2.2) {};
\path[->]
(u) edge[bend right = 0] (v)
%(v1) edge[bend right = -20] node[index label] {1} (u)
(v2) edge[bend right = -10] node[index label] {1} (u)
(v3) edge[bend right = 0] node[index label] {2} (u)
(v4) edge[bend right = 10] node[index label] {3} (u)
%(v5) edge[bend right = 20] node[index label] {5} (u)
;
\end{tikzpicture}
\end{minipage}
 \end{center}
No token in $\rsystemn_{\FOA}$ is assigned a type used in the above patterns, but many tokens are assigned subtypes. When using patterns such as $\ppair$ and $\ppaird$, only a finite number of patterns is actually required to \textit{describe} each construction in the grammatical space of $\rsystemn_{\FOA}$ via \textit{decompositions}, as illustrated for tables and tree maps in section~\ref{sec:patterns:ME}. One such finite set of patterns, $\setofpatterns$, comprises a trivial (i.e. single-vertex) pattern for each of these types: $\FOAnum$, $\FOAvar$, $\FOAplus$, $\FOAminus$, $\FOAeq$, $\FOAgr$,  $\FOAopenb$, $\FOAcloseb$, and $\FOAquant$. In addition, $\setofpatterns$ includes one basic pattern for each constructor: $\FOAcinfixop$, as shown above, along with similar patterns for the remaining three constructors, $\FOAcaddPar$, $\FOAcinfixrel$, and $\FOAcquantify$. This \textit{finite} set of patterns, $\setofpatterns$, has the property that every construction, $\cpair$, in the grammatical space has a decomposition, $\decompositionn$, where each construction in $\decompositionn$ matches a pattern in $\setofpatterns$.
\end{example}

\begin{definition}\label{defn:matchAndDescribe}
Let $\cspacen$ be a construction space. A \textit{pattern} for $\cspacen$  is a
construction, $(\pgraphn,v)$, in some pattern space for $\cspacen$. A construction, $\cpair$, in $\cspacen$ \textit{matches} $\ppair$ provided $\cpair$ specialises $\ppair$ and the associated embedding, $f\colon \cpair\to \ppair$ ensures that $f(t)=v$. Whenever $\cpair$ matches $\ppair$ we say that $\ppair$ \textit{describes} $\cpair$.
\end{definition}

We now establish that, when $\cpair$ matches $\ppair$, the associated embedding (which is trivially unique, since $t$ maps to $v$ and the arrow indices are identical) ensures that the expected relationships hold between their complete trail sequences and foundation sequences:

\begin{lemma}\label{lem:embeddingRels}
Let $\cspacen$ be a construction space. Let $\cpair$ be a construction in $\cspacen$ that matches some pattern, $\ppair$. Given an embedding, $f\colon \cpair \to \ppair$,  the following hold:
\begin{enumerate}
\item if
\begin{displaymath}
  \ctsequence{\cgraphn,t}= [[a_{1,1},\ldots,a_{j,1}],\ldots,[a_{1,n},\ldots,a_{k,n}]]]
\end{displaymath}
then
\begin{displaymath}
  \ctsequence{\pgraphn,v}= [[f(a_{1,1}),\ldots,f(a_{j,1})],\ldots,[f(a_{1,n}),\ldots,f(a_{k,n})]]].
\end{displaymath}
\item if\hspace{2pt}  $\foundationsto{\cgraphn,t}=[t_1,\ldots,t_n]$ then $\foundationsto{\pgraphn,v}=[f(t_1),\ldots,f(t_n)]$, and
\item $\foundationsty{\cgraphn,t}$ is a specialisation of\hspace{2pt} $\foundationsty{\pgraphn,v}$.
\end{enumerate}
\end{lemma}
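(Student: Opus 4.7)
The plan is to prove the three parts sequentially, since part 2 follows from part 1, and part 3 follows from part 2 combined with the type-preservation property of an embedding. Throughout, I extend $f$ arrow-wise to act on trails: for $\trail = [a_1, \ldots, a_m]$ in $\cgraphn$, let $f(\trail) = [f(a_1), \ldots, f(a_m)]$. Because $f$ is a graph isomorphism between $\cgraphn$ and $\pgraphn$ (as required by definition~\ref{defn:embeddingPattern}), $f(\trail)$ is a trail in $\pgraphn$ with $\source{f(\trail)} = f(\source{\trail})$ and $\target{f(\trail)} = f(\target{\trail})$; moreover, $\trail$ is extendable by $a$ in $\cgraphn$ iff $f(\trail)$ is extendable by $f(a)$ in $\pgraphn$.

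For part 1, I would prove the stronger statement that for every well-formed trail $\trail$ in $\cpair$ targeting $t$, if $\tesequence{\trail} = [\trail_1, \ldots, \trail_m]$ (computed in $\cpair$), then $\tesequence{f(\trail)} = [f(\trail_1), \ldots, f(\trail_m)]$ (computed in $\ppair$). Since $\ctsequence{\cgraphn,t} = \tesequence{[]_t}$ and $f([]_t) = []_v$ (as $f(t)=v$), the result for complete trail sequences is the special case $\trail = []_t$. The proof is by induction on the recursion in definition~\ref{defn:tes}. In the non-extendable base case, $f(\trail)$ is non-extendable in $\pgraphn$ by the observation above, so $\tesequence{f(\trail)} = [[]_{f(\source{\trail})}] = [f([]_{\source{\trail}})]$, as needed. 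In the extendable case, with $\trail$ extendable by $a$ and $\inputsA{\sor{a}} = [a_1, \ldots, a_k]$, I need that $\inputsA{\sor{f(a)}} = [f(a_1), \ldots, f(a_k)]$. This holds because $f$ bijects $\inA{\sor{a}}$ with $\inA{\sor{f(a)}}$ (graph isomorphism) and preserves arrow indices (condition~3 of definition~\ref{defn:embeddingPattern}). Applying the inductive hypothesis to each $[a_i, a] \oplus \trail$ and invoking the distributivity of $f$ over $\oplus$ and $\triangleleft$ completes the step.

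Part 2 is then almost immediate. Applying part 1 to the complete trail sequence $\ctsequence{\cgraphn,t} = [\trail_1, \ldots, \trail_n]$ (where each $\trail_i$ is a single non-extendable trail since we write it as a flat sequence), we obtain $\ctsequence{\pgraphn,v} = [f(\trail_1), \ldots, f(\trail_n)]$. Taking the source sequence of each side, and using $\source{f(\trail_i)} = f(\source{\trail_i})$, gives $\foundationsto{\pgraphn,v} = [f(\source{\trail_1}), \ldots, f(\source{\trail_n})] = [f(t_1), \ldots, f(t_n)]$ by definitions~\ref{defn:sourceSequence} and~\ref{defn:foundations}.

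For part 3, from part 2 we know that $\foundationsto{\cgraphn,t} = [t_1, \ldots, t_n]$ and $\foundationsto{\pgraphn,v} = [f(t_1), \ldots, f(t_n)]$. Condition~1 of definition~\ref{defn:embeddingPattern} gives $\tokenl(t_i) \leq \tokenl'(f(t_i))$ for each $i$, so the type-sequence $\foundationsty{\cgraphn,t} = [\tokenl(t_1), \ldots, \tokenl(t_n)]$ is componentwise a subtype of $\foundationsty{\pgraphn,v} = [\tokenl'(f(t_1)), \ldots, \tokenl'(f(t_n))]$, which is exactly the condition for specialisation in definition~\ref{defn:specialization}. The main subtlety is the careful bookkeeping in the inductive step of part 1: one must verify that the $\triangleleft$ and $\oplus$ operations on sequences of trails commute with the arrow-wise extension of $f$, so that the recursive clause of definition~\ref{defn:tes} is faithfully transported across the embedding. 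Once this is established, the remaining parts are essentially definition-chasing.
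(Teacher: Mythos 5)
Your proposal is correct and follows essentially the same route as the paper's own (sketched) proof: the paper likewise argues that the isomorphism with matching arrow indices transports the complete trail sequence, derives part 2 by taking sources, and derives part 3 from condition (1) of definition~\ref{defn:embeddingPattern}. Your induction on the recursion of definition~\ref{defn:tes} simply makes explicit the step the paper leaves implicit in part 1.
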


The proof of lemma~\ref{lem:embeddingRels} can be found in Appendix~\ref{sec:app:patterns} (see lemma~\ref{lema:embeddingRels}). Lemma~\ref{lem:unionOfPatternSpaces}, the proof of which is straightforward and thus omitted, tells us that there exists a single pattern space containing (isomorphic copies of) all patterns drawn from any of $\cspacen$'s pattern spaces.

\begin{lemma}\label{lem:unionOfPatternSpaces}
Let $\cspacen=\cspace$ be a construction space with compatible pattern spaces $\pspacen_1=(\tsystemn, \cspecificationn, \patterngraphn_1)$ and $\pspacen_2=(\tsystemn, \cspecificationn, \patterngraphn_2)$. Then $\pspacen_1\cup \pspacen_2$ is a pattern space for $\cspacen$.
\end{lemma}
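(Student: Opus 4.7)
The plan is to directly verify the two clauses of definition~\ref{defn:patternSpace}. First, I would appeal to the hypothesis that $\pspacen_1$ and $\pspacen_2$ are compatible: by definition~\ref{defn:compatibleCS}, this means precisely that $\pspacen_1\cup \pspacen_2$ is a construction space. Since both spaces share the same type system $\tsystemn$ and constructor specification $\cspecificationn$, the union is simply $(\tsystemn,\cspecificationn,\patterngraphn_1\cup \patterngraphn_2)$, and this accounts for the first requirement.

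Second, I would verify the matching condition. Take any construction $\cpair$ in $\cspacen$. Because $\pspacen_1$ is by hypothesis a pattern space for $\cspacen$, definition~\ref{defn:patternSpace} supplies a construction $\ppair$ in $\pspacen_1$ and an embedding $f\colon \cgraphn\to \pgraphn$ with $f(t)=v$. The key observation is that $\ppair$ is also a construction in $\pspacen_1\cup \pspacen_2$: since $\pgraphn\subseteq \patterngraphn_1\subseteq \patterngraphn_1\cup \patterngraphn_2$, and since the defining properties of a construction in definition~\ref{defn:construction} (finiteness, uni-structuredness, and every vertex being the source of a trail in $\pgraphn$ targeting $v$) are intrinsic properties of $\pgraphn$ that do not depend on the ambient structure graph, they persist verbatim in the larger space. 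The very same embedding $f$ then witnesses that $\cpair$ matches $\ppair$ when $\ppair$ is viewed as a pattern in $\pspacen_1\cup \pspacen_2$.

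The only subtlety worth making explicit is that $\pgraphn$ must still qualify as a structure graph for $\cspecificationn$ when viewed inside $\patterngraphn_1\cup \patterngraphn_2$: for each configurator $u$ in $\pgraphn$, definition~\ref{defn:structureGraph} requires that $\neigh{u,\pgraphn}$ be a configuration of $\consl(u)$. This was already established when $\ppair$ was taken to be a construction in $\pspacen_1$, and because the condition is on the neighbourhood of $u$ \emph{within $\pgraphn$ itself}, it is unaffected by enlarging the ambient graph. There is no real obstacle here; the lemma is a bookkeeping argument showing that matching is preserved under enlargement of the pattern space, which is why the authors elide the proof.
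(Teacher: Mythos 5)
Your proof is correct and is precisely the straightforward bookkeeping argument the paper has in mind when it omits the proof of Lemma~\ref{lem:unionOfPatternSpaces}: compatibility gives that the union is a construction space (with the same $\tsystemn$ and $\cspecificationn$), and any pattern in $\pspacen_1$ witnessing the matching condition for a given $\cpair$ survives unchanged as a construction in the enlarged space, since being a construction is an intrinsic property of $\pgraphn$. Nothing further is needed.
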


Thus, from this point forward, it is assumed that whenever we have multiple patterns, there is no need to specify the space from which they are drawn. Merely, we assume they are all drawn from the same pattern space, $\pspacen$. %In addition, we likewise abuse notation and write $\setofpatterns(\cspacen)$ to mean a set of patterns for $\cspacen$ that is derived from such a $\pspacen$.

\subsection{Describing Constructions, Spaces, and Systems using Decompositions of Patterns}

Using pattern decompositions, it is possible to \textit{describe} constructions, spaces, and representational systems. A key step
is to extend the concept of a construction, $\cpair$, matching a pattern, $\ppair$, to a decomposition, $\decompositionn$, matching a pattern decomposition, $\pdecompositionn$. Regarding notation, the vertex-labelling function in pattern decompositions will typically be called $\labp$, to serve as a reminder that these decomposition trees have vertices labelled by patterns.

\begin{definition}\label{defn:decompositionMatchesPattern}
Let $\cspacen$ be a construction space. Let $\decompositionn$ and $\pdecompositionn$ be decompositions of some construction in $\cspacen$ and some pattern for $\cspacen$, respectively. Then $\decompositionn$ \textit{matches} $\pdecompositionn$ provided there exists an isomorphism, $f\colon \decompositionn \to \pdecompositionn$, where
\begin{enumerate}
%\item the root of $\decompositionn$ maps to the root of $\pdecompositionn$: $f(\treeroot{\decompositionn})=\treeroot{\pdecompositionn}$, %and
    %
\item the arrow indices are identical: for all $a\in \arrows$, the arrow-labelling functions, $\arrowl$ and $\arrowl'$, of $\decompositionn$ and $\pdecompositionn$ are such that $\arrowl(a)=\arrowl'(f(a))$, and

\item there exists an embedding, $h \colon \constructionofD{\decompositionn} \to \constructionofD{\pdecompositionn}$, such that for all vertices, $v_i$, in $\decompositionn$, given the vertex-labelling functions, $\labc$ and $\labp$, of $\decompositionn$ and $\pdecompositionn$, the function $h_i\colon \labc(v_i)\to \labp(f(v_i))$ obtained by restricting the domain of $h$ to $\labc(v_i)$ is an embedding.
\end{enumerate}
Such an isomorphism, $f\colon \decompositionn \to \pdecompositionn$, is called an \textit{embedding}. Whenever $\decompositionn$ matches $\pdecompositionn$ we say that $\pdecompositionn$ \textit{describes} $\decompositionn$.
\end{definition}

As such, when we have a construction, $\cpair$, with a decomposition, $\decompositionn$, that matches $\pdecompositionn$ we will simply say that $\cpair$ matches $\pdecompositionn$. The implications of definition~\ref{defn:decompositionMatchesPattern} in the context of the structure transfer process are visualised here:
\begin{center}
	\begin{tikzpicture}[yscale=0.8]\small
	\node (f) at (0,0) {$\cpair \longrightarrowY{\mathit{decompose}} \decompositionn \longrightarrowY{\mathit{\vphantom{p}match}} \pdecompositionn \longrightarrowY{\mathit{\vphantom{p}transfer}}  \pdecompositionn' \longrightarrowY{\mathit{\vphantom{p}describe}} \decompositionn' \longrightarrowY{\mathit{recompose}}\cpaird$};
    \node (g) at (0,-1.5) {};
	\node (con) at (-1.55,-1.3) {$\constructionofD{\pdecompositionn}$};
	\node (con') at (1.15,-1.3) {$\constructionofD{\pdecompositionn'}$};
	\draw[rounded corners,very thick, opacity=0.75,darkred] (-1.7,-0.55) rectangle (1.3,0.45);
	\draw[rounded corners,very thick, opacity=0.5,darkpurple] (-6.8,-0.45) rectangle (-1.0,0.35);
	\draw[rounded corners,very thick, opacity=0.5,darkpurple] (0.57,-0.45) rectangle (6.8,0.35);
%
%\draw[ ->] (-1.37,-0.3) --(-1.37,-1.3);
\draw[ ->] (-1.37,-0.3) --([xshift=0.18cm]con.north);
%\draw[ ->] (0.93,-0.3) --(0.93,-1.3);
\draw[ ->] (0.93,-0.3) --([xshift=-0.22cm]con'.north);
\draw[ ->] (-5.8,-0.3) --node[above,yshift=-0.29cm,xshift=1.1cm]{\rotatebox{-10}{\footnotesize$\mathit{match}$}} (con);
\draw[->] (con') -- node[above,yshift=-0.29cm,xshift=-1.1cm]{\rotatebox{9}{\footnotesize$\mathit{describe}$}} (5.62,-0.35);
\draw[rounded corners, very thick, opacity=1,darkblue] (-6.9,-1.6) rectangle (-0.9,0.55);
\draw[rounded corners, very thick, opacity=1,darkblue] (0.45,-1.6) rectangle (6.9,0.55);
%
%\node (h) at (-6,-1.6) {\textcolor[rgb]{0.00,0.07,1.00}{theorem~\ref{thm:decMatchPdecImpliesCpairMatchesPpair}}};
%\node (i) at (5.9,-1.6) {\textcolor[rgb]{0.00,0.07,1.00}{corollary~\ref{cor:decMatchPdecImpliesCpairMatchesPpair}}};
	\end{tikzpicture}
\end{center}

So far, we have defined how patterns and pattern decompositions are  used to describe constructions and their decompositions. These concepts are now extended to capture how construction spaces are described.

\begin{definition}\label{defn:descriptionOfCSpace}
Let $\cspacen$ be a construction space. A \textit{description} of $\cspacen$ is a set, $\descriptionn$, of decompositions arising from patterns for $\cspacen$.
\end{definition}

In section~\ref{sec:structuralTransformations}, it is useful for us to define the set of vertices associated with a description, $\descriptionn$: these are vertices that appear in decompositions in $\descriptionn$.

\begin{definition}\label{defn:verticesOfDecomposition}
Let $\rsystemn=\rsystem$ be a representational system  with description $\descriptionn$. We define the set of \textit{vertices of $\descriptionn$}, denoted $V(\descriptionn)$, by
\begin{displaymath}
  V(\descriptionn)=\bigcup\limits_{\decompositionn\in \descriptionn} V(\decompositionn)
\end{displaymath}
where $V(\decompositionn)$ is the set of vertices in the decomposition $\decompositionn$.
\end{definition}

Definitions~\ref{defn:rsdescription} and~\ref{defn:irsedescription} extend the notion of a description to a representational system, $\rsystemn$, and, respectively, an inter-representational-system encoding, $\irencodingn=\irencoding$. Definition~\ref{defn:rsdescription} requires us to form the universal construction space, $\uspace{\rsystemn}$, of $\rsystemn$, which is the union of the grammatical, entailment, and identification spaces. Definition~\ref{defn:irsedescription} will exploit $\ipispacen=\ispacen\cup \ispacen'\cup \ispacen''$, where $\ispacen$ and $\ispacen'$ are the identification spaces of $\rsystemn$ and $\rsystemn'$ respectively, for reasons that will become clear later (e.g. example~\ref{ex:STMErerepresentingTokenSE}, see the $\ipispacen$ patterns on page~\pageref{ex:pageref:ispaceexample}). Notably, our approach to structural transformations does not require decompositions of patterns drawn from a pattern space for $\ipispacen$. Rather, as mentioned above,a set of patterns drawn from such a pattern space forms a transformation constraint.

\begin{definition}\label{defn:rsdescription}
Let $\rsystemn=\rsystem$ be a representational system. Any description of the universal construction space, $\uspace{\rsystemn}$, is a \textit{description} of $\rsystemn$.
\end{definition}

\begin{definition}\label{defn:irsedescription}
Let $\irencoding$ be an inter-representational-system encoding.  A \textit{description} of $\irencoding$ is a tuple, $\descriptionirse$ where $\descriptionn$ and $\descriptionn'$ are descriptions of $\rsystemn$ and $\rsystemn'$, resp., and $\setofpatterns''$ is a set of patterns for $\ipispacen$.
\end{definition}

Section~\ref{sec:structuralTransformations} formalises the notion of structural transformation, which takes an inter-repre\-sentational-system encoding, $\irencoding$, along with a description, $\descriptionirse$, and converts constructions in $\rsystemn$ into constructions in $\rsystemn'$. In this context, $\ipispacen$ is used to capture relations that are required to hold between tokens in the respective constructions, $\cpair$ and $\cpaird$: in a transformation constraint, $(\ppair, \ppaird, \setofpatternstc)$, the set of patterns $\setofpatternstc$ describes the relationships that are desired to hold between tokens in $\cpair$ and $\cpaird$. This aspect of structural transformations, which enables \textit{targeted representation selections} to be made, will be exemplified in section~\ref{sec:structureTransfer:ME}.

%in a transformation constraint, $(\ppair, \ppaird, (\pgraphn'',v''))$, the pattern $(\pgraphn'',v'')$ describes the relations that are desired to hold between tokens in $\cpair$ and $\cpaird$. This aspect of structural transformations, which enables \textit{intelligent representation selection}, will be exemplified in section~\ref{sec:structureTransfer:ME}.

\subsection{Properties of Descriptions: Completeness and Compactness}

A description, $\descriptionn$, of a construction space, $\cspacen$, can have desirable properties: \textit{completeness} and \textit{compactness}. The former relates to whether each construction, $\cpair$, in $\cspacen$ has a decomposition with a description, $\pdecompositionn$, in $\descriptionn$. If such a $\pdecompositionn$ exists, it means that $\cpair$ can be built from only sub-constructions that match patterns in the set of patterns, $\setofpatterns(\descriptionn)$, that are used by decompositions in $\descriptionn$. In other words, it is possible to split $\cpair$ into a generator, $(\cgraphn',t)$, and a sequence of induced constructions, $[\cpairp{1},\ldots,\cpairp{n}]$, where $(\cgraphn',t)$ matches a pattern in $\setofpatterns(\descriptionn)$ and each $\cpairp{i}$ has a description in $\descriptionn$. Compactness arises when the set of equivalence classes of $\setofpatterns(\descriptionn)$, under label-preserving isomorphism, is finite.
\begin{definition}\label{defn:complete}
	Let $\cspacen$ be a construction space with description $\descriptionn$. Then $\descriptionn$ is \textit{complete} provided every construction, $\cpair$, in $\cspacen$ has a decomposition, $\decompositionn$, that is described by some decomposition, $\pdecompositionn$, in $\descriptionn$.
\end{definition}
\begin{definition}\label{defn:descriptionPatterns}
	Let $\cspacen$ be a construction space with description $\descriptionn$. The set of \textit{$\descriptionn$-patterns}, denoted $\setofpatterns(\descriptionn)$, is defined to be
	\begin{displaymath}
	\setofpatterns(\descriptionn)= \bigcup\limits_{\substack{\pdecompositionn\in \descriptionn \\ \pdecompositionn=\pdecomposition}} \{\ppair \colon \exists \thinspace v'\in V \; \labp(v')=\ppair \}.
	\end{displaymath}
	The set of equivalence classes of $\setofpatterns(\descriptionn)$, under label-preserving isomorphism, is denoted $\patternequivclasses{\descriptionn}$.
\end{definition}
\begin{definition}\label{defn:compact}
	Let $\descriptionn$ be some description of a construction space. Then $\descriptionn$ is \textit{compact} whenever $\patternequivclasses{\descriptionn}$ is finite.
\end{definition}

In the context of structure transfer, given $\irencoding$, it is desirable to exploit complete and compact descriptions, $\descriptionn$ and $\descriptionn'$, of $\rsystemn$ and $\rsystemn'$. Completeness is important, otherwise there would be constructions which cannot be decomposed into anything matching a description in $\descriptionn$\footnote{This is akin to a complete set of inference rules in a logic: a set of rules is complete if every true formula has a proof. Different sets of inference rules can yield different proofs. Analogously, different descriptions may describe different decompositions of a construction.}. Given compact $\descriptionn$ and $\descriptionn'$ along with a finite $\setofpatterns''$ for $\ipispacen$\footnote{It would be readily possible to extend the concepts of completeness and compactness to $\setofpatterns''$, using label-isomorphism equivalence classes and examining properties of the set of types to derive sufficient conditions for a complete, compact set of patterns for $\ipispacen$.}, any chosen set of (equivalence classes\footnote{The particular equivalence relation is given in definition~\ref{defn:equivalentTokenRelSpecs}, based on embeddings which are restricted forms of isomorphism.} of) transformation constraints, used for structure transfer, is necessarily finite. Thus, from a practical perspective, compactness ensures that there are only finitely many choices of $\pdecompositionn'$ that can arise from $\pdecompositionn$ in the process of structure transfer. In turn, this means that there is a terminating algorithm that begins with $\cpair$ and produces $\pdecompositionn'$, should it exist, which can be considered a partial transformation of $\cpair$ into some $\cpaird$: the transfer is partial since $\pdecompositionn'$ indicates one way of building $t'$, using specific constructors in particular combinations, without specifying the actual tokens from which $t'$ is constructed. Theorem~\ref{thm:completeCompactConditions} provides sufficient, but not necessary, conditions on a construction space, $\cspacen$, for the existence of a complete, compact description. The proof strategy employed builds such a description, starting with patterns that match the constructions in $\cspacen$. As a pre-requisite, lemma~\ref{lem:decTrivialOrBasic} proves that every construction has a decomposition where each sub-construction is either trivial (contains a single vertex) or basic (contains a single configurator).

\begin{definition}\label{defn:decoupling}
Let $\cspacen=\cspace$ be a construction space containing construction $\cpair$. A \textit{decoupling} of $\cpair$ is a decomposition, $\decompositionn$, of $\cpair$ where, for every $v\in V$, its label, $\labc(v)$, is trivial or basic.
\end{definition}

\begin{example}[Decoupling Decompositions]\label{ex:decoupling} Consider the pattern space for $\rsystemn_{\mathit{T}}$, the \textsc{Tables} representational system. The example given in section~\ref{sec:patterns:ME} used the pattern $\ppairp{1}$, on page~\pageref{pageref:patternsppairp1}, when invoking structure transfer: some of the vertices in $\pdecompositionn$, on page~\pageref{ex:decompostionForStructureTransfer}, were labelled by patterns that are label-isomorphic to $\ppairp{1}$. The decomposition $\pdecompositionn$ was used to convert the UK parliament data table into a tree map. However, since $\ppairp{1}$ is not trivial or basic -- it has four configurators --  $\pdecompositionn$ is not a decoupling of $\constructionofD{\pdecompositionn}$: obtaining a decoupling would require all of the patterns labelling the vertices of $\pdecompositionn$ to be further decomposed. Here we provide a decoupling of $\ppairp{1}$, omitting the remaining steps required to decouple $\constructionofD{\pdecompositionn}$:
\begin{center}
	\scalebox{0.8}{\decoupling}
\end{center}
\end{example}

\begin{lemma}\label{lem:decTrivialOrBasic}
Let $\cspacen=\cspace$ be a construction space containing construction $\cpair$. Then there exists a decomposition, $\decompositionn$, that is a decoupling of $\cpair$.
\end{lemma}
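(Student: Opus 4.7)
The plan is to proceed by strong induction on the number of configurators in $\cgraphn$. In the base case, if $\cgraphn$ has no configurators then it has no arrows either, since every arrow in a structure graph is incident to a configurator; the only trails in $\cgraphn$ are therefore empty, and the requirement that every vertex source a trail targeting $t$ forces $\cgraphn$ to consist of the single vertex $t$. Hence $\cpair$ is trivial, and the one-vertex decomposition whose root is labelled by $\cpair$ is itself a decoupling.

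For the inductive step I would assume the statement for all constructions with fewer than $n$ configurators, and let $\cpair$ have $n \geq 1$ configurators. First I would locate the configurator that outputs $t$: because every configurator in $\cgraphn$ sources a trail to $t$, tracing such a trail shows that the (unique, by uni-structuredness) arrow targeting $t$ is emitted by some configurator $u$. Then $(\neigh{u},t)$ is a basic construction and a generator of $\cpair$, and taking the corresponding split $\csplit$ yields an induced construction sequence $\ics = [\icp{\trail_1}, \ldots, \icp{\trail_k}]$ indexed by the trails in $\ctsequence{\neigh{u},t} = [\trail_1, \ldots, \trail_k]$.

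The principal obstacle is showing that each $\icp{\trail_j}$ has strictly fewer than $n$ configurators, and specifically that $u \notin \icgp{\trail_j}$. This follows from uni-structuredness: $u$ has a unique outgoing arrow, namely $u \to t$, and this arrow already occurs in $\trail_j$ (every trail in $\ctsequence{\neigh{u},t}$ ends by traversing it). Any trail $\trail'$ in $\tesequence{\trail_j}$ incident to $u$ at an internal vertex would have to use that outgoing arrow, while $\trail'$ cannot have $u$ as its source because lemma~\ref{lem:non-extendableWellFormedTrails} guarantees that $\trail'$ is well-formed. Thus $\trail' \oplus \trail_j$ would repeat the arrow $u \to t$ and fail to be a trail, a contradiction. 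Consequently no arrow incident to $u$ appears in any trail of $\tesequence{\trail_j}$, so $u$ is absent from $\icgp{\trail_j}$, giving at most $n-1$ configurators in $\icp{\trail_j}$.

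By the inductive hypothesis each $\icp{\trail_j}$ admits a decoupling $\decompositionn_j$. I would then assemble a decomposition of $\cpair$ following definition~\ref{defn:decomposition}(2): take a fresh root labelled by the basic generator $(\neigh{u},t)$, and attach $\decompositionn_1, \ldots, \decompositionn_k$ via in-arrows to this root indexed $1$ through $k$, so that $\dsequence{\treeroot{\decompositionn}} = [\decompositionn_1, \ldots, \decompositionn_k]$. By construction every vertex of the resulting tree carries a basic label (the new root) or a trivial or basic label (inherited from the $\decompositionn_j$), so the tree is a decoupling of $\cpair$.
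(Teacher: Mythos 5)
Your proof is correct and follows essentially the same route as the paper's: induction on the number of configurators, splitting off the basic generator $(\neigh{u},t)$ at the configurator that outputs $t$, applying the inductive hypothesis to the induced constructions, and reassembling. You additionally justify why $u$ cannot reappear in any $\icgp{\trail_j}$ (so the configurator count strictly drops), a point the paper's sketch passes over silently, and your argument for it is sound.
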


See Appendix~\ref{sec:app:patterns}, lemma~\ref{lema:decTrivialOrBasic}, for a proof of lemma~\ref{lem:decTrivialOrBasic}. Our next step towards the proof of theorem~\ref{thm:completeCompactConditions} establishes that if  the construction $\cpair$ matches the pattern $\ppair$, which has decomposition $\pdecompositionn$, then $\pdecompositionn$ essentially determines a unique decomposition, $\decompositionn$, of $\cpair$ which matches $\pdecompositionn$. This fact allows us to define a \textit{$\pdecompositionn$-canonical} decomposition of $\cpair$. This is useful for the proof strategy employed by theorem~\ref{thm:completeCompactConditions}: build a description, $\descriptionn$, of $\cspacen$ and show it is complete and compact. Lemma~\ref{lem:CpairMatchesPpairImpliesExistsdecMatchPdecImplies} allows us to choose a decoupling decomposition of $\ppair$ to include in $\descriptionn$ that describes some decomposition of $\cpair$, namely the $\pdecompositionn$-canonical decomposition, establishing completeness.

\begin{lemma}\label{lem:CpairMatchesPpairImpliesExistsdecMatchPdecImplies}
Let $\cspacen$ be a construction space containing construction $\cpair$ that matches pattern $\ppair$. Given a decomposition, $\pdecompositionn$, of $\ppair$, there exists a decomposition, $\decompositionn$, of $\cpair$ such that $\decompositionn$ matches $\pdecompositionn$.
\end{lemma}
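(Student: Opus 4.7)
The plan is to prove the lemma by structural induction on the decomposition $\pdecompositionn$, using the embedding $f\colon \cpair \to \ppair$ guaranteed by the fact that $\cpair$ matches $\ppair$ to ``pull back'' the tree structure of $\pdecompositionn$ to $\cpair$.

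For the base case, when $\pdecompositionn$ consists solely of $\treeroot{\pdecompositionn}$ labelled by $\ppair$ itself (condition (1) of definition~\ref{defn:decomposition}), I would take $\decompositionn$ to consist of a single root vertex labelled by $\cpair$. The embedding $f$ itself witnesses the matching condition of definition~\ref{defn:decompositionMatchesPattern}, since the required embedding $h\colon \constructionofD{\decompositionn} \to \constructionofD{\pdecompositionn}$ is just $f$, and its restriction to the unique vertex's label is again $f$.

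For the inductive step, $\pdecompositionn$ arises from a split $\big(\ppaird, [\ppairp{1}, \ldots, \ppairp{n}]\big)$ of $\ppair$, where each $\ppairp{i} = (\pgraphn_i, v_i)$ is the induced construction of some trail $\trail_i$ in $\ctsequence{\pgraphn', v}$, and each subtree $\videctree{w_i}$ hanging off $\treeroot{\pdecompositionn}$ (via the arrow indexed $i$) is a decomposition of $\ppairp{i}$. First I would use $f$ to build a generator $\cpaird$ of $\cpair$: set $\cgraphn'$ to be the subgraph of $\cgraphn$ consisting of exactly the vertices and arrows whose $f$-images lie in $\pgraphn'$. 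Because $f$ is an isomorphism respecting indices and constructor labels, $\cgraphn'$ inherits uni-structuredness and is itself a structure graph; it contains $t$ since $f(t)=v \in \pgraphn'$, and the vertex set is connected to $t$ via trails because $\pgraphn'$ enjoys that property and trails pull back through $f$. Hence $\cpaird$ is a generator of $\cpair$.

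Next I would invoke lemma~\ref{lem:embeddingRels} applied both at the level of $\cpair/\ppair$ and (crucially) at the level of the generators $\cpaird/\ppaird$: the complete trail sequences correspond arrow-for-arrow under $f$, so $\ctsequence{\cgraphn', t} = [f^{-1}(\trail_1), \ldots, f^{-1}(\trail_n)]$ where $f^{-1}(\trail_i)$ denotes the unique preimage trail. Then for each $i$, the induced construction $\cpairp{i} := \icp{f^{-1}(\trail_i), \cpair}$ is determined by the trail-extension sequence of $f^{-1}(\trail_i)$ in $\cpair$, and again using that trails and the $\tesequence{\cdot}$ construction are preserved by $f$ (definition~\ref{defn:tes} is purely combinatorial over arrows, which $f$ bijects), the restriction of $f$ to $\cpairp{i}$ is an embedding onto $\ppairp{i}$. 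Therefore each $\cpairp{i}$ matches $\ppairp{i}$, giving a split $(\cpaird, [\cpairp{1}, \ldots, \cpairp{n}])$ of $\cpair$. By the inductive hypothesis applied to each subtree $\videctree{w_i}$ of $\pdecompositionn$, there exists a decomposition $\decompositionn_i$ of $\cpairp{i}$ matching $\videctree{w_i}$. Assembling these into a tree whose root is labelled by $\cpaird$, with subtrees $\decompositionn_i$ attached via arrows indexed $i$, yields the desired $\decompositionn$; the overall embedding $\decompositionn \to \pdecompositionn$ is built by mapping the root to $\treeroot{\pdecompositionn}$ (with witnessing embedding the restriction of $f$ to $\cpaird$) and gluing the embeddings produced by induction.

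The main obstacle I expect is the careful bookkeeping in the inductive step to verify that the restriction of $f$ to $\cpaird$ is genuinely an embedding and that the induced constructions on the construction side line up with those on the pattern side \emph{in the same order}. This hinges on the fact that trail-extension sequences (definition~\ref{defn:tes}) are defined recursively via the arrow-indexing function, which $f$ preserves by hypothesis; without this index-preservation, the bijection between $\ics$ on the two sides could be permuted and matching would fail. Once this point is secured via lemma~\ref{lem:embeddingRels}, the rest is essentially glueing.
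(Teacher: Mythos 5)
Your proposal is correct and follows essentially the same route as the paper's own proof: both pull back the root pattern of $\pdecompositionn$ through the embedding $f\colon \cpair\to\ppair$ to obtain a generator of $\cpair$, use the fact that $f$ preserves arrow indices (and hence complete trail sequences and induced constructions) to match the resulting split against the pattern split, and then recurse on the subtrees before glueing. The extra care you take over the order-preservation of the induced construction sequences is exactly the point the paper's sketch leaves implicit, so nothing is missing.
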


\begin{proof}[Proof Sketch]
Suppose that $\cpair$ matches $\ppair$ and let $\pdecompositionn$ be some decomposition of $\ppair$. Since $\cpair$ matches $\ppair$, there exists an embedding, say $f\colon \cpair \to \ppair$.  Consider the pattern, $(\pgraphn',v)$, that labels the root of $\pdecompositionn$. Since  the isomorphism $f\colon \cpair\to \ppair$ maps $t$ to $v$, and $(\pgraphn',v)$ is a generator for $\ppair$, we use $(\pgraphn',v)$ to choose a particular generator, $(\cgraphn',t)$, of $\cpair$: take $\cgraphn'$ to be the largest subgraph of $\cgraphn$ containing vertices that map, under $f$, to vertices of $\pgraphn'$. To construct a suitable $\decompositionn$, take $(\cgraphn',t)$ to label a vertex, $v_r$, that will form the root of $\decompositionn$. Since splits are uniquely determined by their generators, an induction approach gives that the induced construction sequence, $[\ic_1,\ldots,\ic_n]$, obtained from $\cpair$ and with generator $(\cgraphn',v)$ each have decompositions, $\decompositionn_1,\ldots, \decompositionn_n$, that match the respective $\dsequence{v_r'}=[\pdecompositionn_1,\ldots, \pdecompositionn_n]$, where $v_r'$ is the root of $\pdecompositionn$. Take the disjoint union of $\decompositionn_1,\ldots, \decompositionn_n$ and join them with arrows, indexed in the obvious way, to $v_r$. The result is a decomposition of $\cpair$ that matches $\pdecompositionn$.
\end{proof}

Consequently, if we have a description, $\pdecompositionn$, of $\cpair$ then there is a unique generator, $(\cgraphn',t)$, of $\cpair$ that matches the pattern that labels the root of $\pdecompositionn$. By extension, $\pdecompositionn$ uniquely determines the splits applied to the induced constructions of $(\cgraphn',t)$ when yielding a decomposition, $\decompositionn$, of $\cpair$ where $\decompositionn$ matches $\pdecompositionn$. So, any given description, $\pdecompositionn$, of $\cpair$ uniquely determines a decomposition, $\decompositionn$, of $\cpair$ up to label-preserving isomorphism. This allows us to define the \textit{canonical} decomposition of $\cpair$ given $\pdecompositionn$, where $\decompositionn$ is obtained by replacing the patterns that label vertices of $\pdecompositionn$ with the induced constructions that are produced by the determined splits. This concept is embodied in the next definition.

\begin{definition}\label{defn:deltaCanonical}
Let $\cspacen$ be a construction space containing construction $\cpair$ that matches $\ppair$ which has decomposition $\pdecompositionn=\pdecomposition$. The decomposition $\decompositionn=\linebreak \decomposition$ of $\cpair$, where $\decompositionn$ matches $\pdecompositionn$, is the \textit{$\pdecompositionn$-canonical} decomposition of $\cpair$, denoted $\deltacanonical{\cpair,\pdecompositionn}$.
\end{definition}

\begin{theorem}\label{thm:completeCompactConditions}
Let $\cspacen=\cspace$ be a construction space formed over $\tsystemn=\tsystem$ where $\cspecificationn=\cspecification$. If
\begin{enumerate}
\item $\constructors$ is finite,
\item the set of maximal elements of $\leq$ is finite, and
\item for every type, $\tau \in \types$, there exists a maximal type, $\tau'$, of \ $\tsystemn$ such that $\tau \leq \tau'$
\end{enumerate}
then $\cspacen$ has a complete, compact description.
\end{theorem}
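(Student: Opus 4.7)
I would prove the theorem by constructing a finite family of canonical patterns and using it to assemble a complete, compact description. First, define a finite family $\setofpatterns_0$ of patterns consisting of (i) for each $c \in \constructors$, a basic pattern $B_c$ whose configurator is labelled $c$ and whose token vertices bear the signature types in $\spec(c)$, and (ii) for each maximal type $\tau' \in \types$, and for each type appearing in some $\spec(c)$, a trivial pattern whose single vertex bears that label. By conditions (1) and (2), together with the finiteness of each $\spec(c)$, $\setofpatterns_0$ has only finitely many equivalence classes under label-preserving isomorphism. I would then form the description $\descriptionn$ by processing each $\cpair \in \cspacen$: apply lemma~\ref{lem:decTrivialOrBasic} to obtain a decoupling decomposition $\decompositionn$ whose every vertex label is trivial or basic, replace each basic sub-construction with configurator $c$ by $B_c$, and replace each trivial sub-construction by a trivial pattern from $\setofpatterns_0$ -- choosing the input signature type dictated by the parent basic configurator whenever the trivial's construct appears as a foundation above, and otherwise, for a trivial label at the root, a maximal supertype guaranteed by condition (3). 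After gluing the individual patterns along their shared foundation-construct vertices, the result is a decomposition $\pdecompositionn$ of an overall pattern $\ppair$ which $\cpair$ matches.

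To verify \emph{completeness}, note that, for any $\cpair \in \cspacen$, the construction above produces a pattern $\ppair$ that $\cpair$ matches together with a decomposition $\pdecompositionn \in \descriptionn$ of $\ppair$; lemma~\ref{lem:CpairMatchesPpairImpliesExistsdecMatchPdecImplies} then delivers a decomposition of $\cpair$ that matches $\pdecompositionn$. To verify \emph{compactness}, observe that every pattern labelling a vertex of any $\pdecompositionn \in \descriptionn$ is label-preservingly isomorphic to some element of the finite $\setofpatterns_0$, so $\patternequivclasses{\descriptionn}$ is finite.

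The hard part will be the label-consistency check at the gluing interfaces between patterns in $\pdecompositionn$. When a token vertex in $\cgraphn$ plays multiple roles at once -- for instance being the output of one configurator and an input to another -- the label it must receive in the glued pattern $\ppair$ must simultaneously lie below each relevant signature type, be a supertype of the vertex's original label, and belong to the finite canonical label set used in $\setofpatterns_0$. Exhibiting such a coherent lifted label in every case, by exploiting conditions (2) and (3) to select from a bounded catalogue, is where the finiteness assumptions do the substantive work of the proof.
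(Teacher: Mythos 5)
Your overall architecture (decouple, replace pieces by canonical patterns, count equivalence classes) is the right shape, but the step you yourself flag as ``the hard part'' is a genuine gap, and the hint you give for closing it does not work. The conflict arises exactly where a token vertex $v$ of $\cgraphn$ is shared between two basic sub-constructions of the decoupling --- say it is the output of a $c_1$-configurator and the $i$-th input of a $c_2$-configurator. In the glued pattern this vertex receives a \emph{single} label, which must be a supertype of $\tokenl(v)$ (so that $\cpair$ still matches) while keeping both neighbourhoods valid configurations, i.e.\ it must lie below $\outputsty{c_1}$ \emph{and} below the $i$-th entry of $\inputsty{c_2}$. Your catalogue offers only signature types and maximal types. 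The two signature types involved need not be comparable, so neither lies below the other; and a maximal type bounding $\tokenl(v)$ is an \emph{upper} bound, which points in the wrong direction for the constraints ``$\leq$ each relevant signature type.'' So conditions (2) and (3) cannot be used to select a coherent label from your bounded catalogue, and no argument is given that one exists. A common lower bound such as $\tokenl(v)$ itself always works, but ranging over all constructions those labels form an unbounded set, which destroys compactness.

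The paper avoids this interface problem by reversing the order of operations: it first relabels \emph{every} token vertex of the whole construction-as-pattern with a maximal type bounding its original label (one choice per vertex, made once, using condition (3)), obtaining a single pattern $(\pgraphn',v)$ that $\cpair$ matches, and only \emph{then} applies lemma~\ref{lem:decTrivialOrBasic} to decouple $(\pgraphn',v)$. Since decoupling never changes labels, every shared vertex automatically carries the same label in all the sub-patterns, so no gluing-consistency argument is needed. Compactness then follows because the resulting trivial and basic patterns use only maximal types as vertex labels, and there are finitely many maximal types, finitely many constructors, and each constructor has finite arity. If you want to salvage your route, you would need to replace your signature-typed basic patterns $B_c$ by maximal-typed ones chosen consistently per vertex of $\cgraphn$ --- at which point you have essentially reconstructed the paper's argument.
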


\begin{proof}
We construct a complete, compact description of $\cspacen$. The proof strategy starts by showing that each construction, $\cpair$, in $\cspacen$ matches some pattern whose vertices are labelled only by maximal elements of $\leq$. Let $\cpair$ be a construction in $\cspacen$. Trivially, $\cpair$ matches some pattern, $\ppair$, where $\pgraphn=\graph$. We construct a pattern, $(\pgraphn',v)$, that can be obtained from $\ppair$ by replacing types as follows. For every type, $\tau$, in $\types$, there exists a maximal type, $\tau'$, such that $\tau\leq \tau'$. Therefore, we can create a new pattern, $(\pgraphn',v)$, from $\ppair$ such that $\pgraphn'=(\pa,\pb,\arrows,\incVert,\arrowl,\tokenl')$ where for each $v\in \pa$, $\tokenl'(v)=\tau$ and $\tau$ is a maximal element that bounds $\tokenl(v)$, that is $\tokenl(v)\leq \tau$. Trivially, since $\cpair$ matches $\ppair$, it follows that $\cpair$ matches $(\pgraphn',v)$.  Hence $\cpair$ matches some pattern whose vertices are labelled only by maximal elements of $\leq$. For each construction, $\cpair$, in $\cspacen$ choose one such matching pattern and define $\setofpatterns_m$ to be the set of such patterns.

By lemma~\ref{lem:decTrivialOrBasic}, each $(\pgraphn',v)$ in $\setofpatterns_m$ has a decoupling decomposition, $\pdecompositionn$. Produce a decoupling decomposition of each pattern in $\setofpatterns_m$, to yield a description, $\descriptionn$ of $\cspacen$. By lemma~\ref{lem:CpairMatchesPpairImpliesExistsdecMatchPdecImplies}, there exists a decomposition of $\cpair$ -- namely the $\pdecompositionn$-canonical decomposition, $\deltacanonical{\cpair,\pdecompositionn}$ -- that is described by the selected decoupling decomposition, $\pdecompositionn$, of $(\pgraphn',v)$. Now, $\descriptionn$ was produced by taking, for each $\cpair$ in $\cspacen$, a decoupling decomposition of a matching pattern, thus $\descriptionn$ is complete.

What remains is to show that $\descriptionn$ is compact. Since each $(\pgraphn',v)$ in $\setofpatterns_m$ includes only tokens whose types are maximal, the same is true of the patterns that label the vertices in the selected decoupling decomposition of $(\pgraphn',v)$ in $\descriptionn$: splitting constructions does not change the assigned types. Thus,  $\setofpatterns(\descriptionn)$ contains only patterns whose vertices are labelled by maximal types. Moreover, $\setofpatterns(\descriptionn)$ contains only patterns that are trivial (contain exactly one vertex) or basic (contain exactly one configurator), since it is derived from decoupling decompositions. There is only a finite number of trivial constructions in $\setofpatterns(\descriptionn)$, up to label-preserving isomorphism, whose vertices are labelled by maximal types, since there are finitely many maximal types. In addition, since there are
\begin{enumerate}
\item[-] finitely many constructors, each of which has a finite number of input types and exactly one output type, and
\item[-]  finitely many maximal elements of $\leq$,
\end{enumerate}
there is only a finite number of basic constructions, up to label-preserving isomorphism, whose vertices are labelled by maximal types. Combining these two facts allows us to deduce that $\patternequivclasses{\descriptionn}$ is finite. Therefore, $\descriptionn$ is a complete, compact description of $\cspacen$.
\end{proof}

From theorem~\ref{thm:completeCompactConditions}, it is easy to see that if both $\constructors$ and $\types$ are finite then a complete, compact description exists. It should also be evident, from example~\ref{ex:decoupling} and section~\ref{sec:patterns:ME}, that \textit{decoupling decompositions} -- whilst useful for proving theorem~\ref{thm:completeCompactConditions} -- are not always sufficient for effective structure transfer. It is anticipated, in a practical setting where one wishes to exploit algorithms for producing structural transformations, that a complete, compact description, $\descriptionn$, would be based on decouplings augmented with additional decompositions involving a finite number of multi-configurator patterns, such as $\ppairp{1}$ in example~\ref{ex:decoupling}. Such descriptions would still be complete and remain compact.

\section{Structural Transformations}\label{sec:structuralTransformations}

\newcommand{\crcD}{%
	\begin{tikzpicture}[inner sep = 0pt,baseline]
	\draw (0,0) circle (.074cm);
	\fill[black, opacity = 0.15] (0,0) circle (.074cm);
	\end{tikzpicture}%
}
\newcommand{\oo}{%
	\begin{tikzpicture}[inner sep = 0pt,baseline,scale=0.935]
	\node[anchor = east] (o1) at (0,0) {\crcD};
	\node[anchor = east] (o2) at (6pt,0) {\crcD};
	\end{tikzpicture}%
}
\newcommand{\ooo}{%
\begin{tikzpicture}[inner sep = 0pt,baseline,scale=0.935]
\node[anchor = east] (o1) at (0,0) {\crcD};
\node[anchor = east] (o2) at (6pt,0) {\crcD};
\node[anchor = east] (o3) at (12pt,0) {\crcD};
\end{tikzpicture}%
}

\newcommand{\oooo}{%
\begin{tikzpicture}[inner sep = 0pt,baseline,scale=0.935]
\node[anchor = east] (o1) at (0,0) {\crcD};
\node[anchor = east] (o2) at (6pt,0) {\crcD};
\node[anchor = east] (o3) at (12pt,0) {\crcD};
\node[anchor = east] (o4) at (18pt,0) {\crcD};
\end{tikzpicture}
}%

\newcommand{\oT}{%
\begin{tikzpicture}[inner sep = 0pt,baseline,scale=0.935]
\node[anchor = east] (o1) at (0,0) {\crcD};
\node[anchor = east] (o2) at (6pt,0) {\crcD};
\node[anchor = east] (o1) at (12pt,0) {\crcD};
\node[anchor = east] (o1) at (0,6pt) {\crcD};
\node[anchor = east] (o2) at (6pt,6pt) {\crcD};
\node[anchor = east] (o1) at (0,12pt) {\crcD};
\end{tikzpicture}%
}
\newcommand{\oTrot}{%
	%\rotatebox{180}{%
		\begin{tikzpicture}[inner sep = 0pt,baseline,scale=0.935]
		\node[anchor = east] (o1) at (12pt,12pt) {\crcD};
		\node[anchor = east] (o2) at (6pt,12pt) {\crcD};
		\node[anchor = east] (o1) at (0pt,12pt) {\crcD};
		\node[anchor = east] (o1) at (12pt,6pt) {\crcD};
		\node[anchor = east] (o2) at (6pt,6pt) {\crcD};
		\node[anchor = east] (o1) at (12pt,0pt) {\crcD};
		\end{tikzpicture}%
	%}%
}
\newcommand{\oR}{%
	\begin{tikzpicture}[inner sep = 0pt,baseline,scale=0.935]
	\node[anchor = east] (o1) at (0,0) {\crcD};
	\node[anchor = east] (o2) at (6pt,0) {\crcD};
	\node[anchor = east] (o1) at (12pt,0) {\crcD};
	\node[anchor = east] (o1) at (18pt,0) {\crcD};
	\node[anchor = east] (o1) at (0,6pt) {\crcD};
	\node[anchor = east] (o2) at (6pt,6pt) {\crcD};
	\node[anchor = east] (o1) at (12pt,6pt) {\crcD};
	\node[anchor = east] (o1) at (18pt,6pt) {\crcD};
	\node[anchor = east] (o1) at (0,12pt) {\crcD};
	\node[anchor = east] (o2) at (6pt,12pt) {\crcD};
	\node[anchor = east] (o1) at (12pt,12pt) {\crcD};
	\node[anchor = east] (o1) at (18pt,12pt) {\crcD};
	\end{tikzpicture}%
}
\newcommand{\oLo}{%
\begin{tikzpicture}[inner sep = 0pt,baseline,scale=0.935]
\node[anchor = east] (o1) at (0,0) {\crcD};
\node[anchor = east] (o2) at (6pt,0) {\crcD};
\node[anchor = east] (o3) at (0,6pt) {\crcD};
\node[anchor = east] (o4) at (6pt,6pt) {\crcD};
\end{tikzpicture}%
}
\newcommand{\oL}{%
\begin{tikzpicture}[inner sep = 0pt,baseline,scale=0.935]
\node[anchor = east] (o1) at (0,0) {\crcD};
\node[anchor = east] (o2) at (6pt,0) {\crcD};
\node[anchor = east] (o3) at (0,6pt) {\crcD};
\end{tikzpicture}%
}
\newcommand{\oV}{%
\begin{tikzpicture}[inner sep = 0pt,baseline,scale=0.935]
\node[anchor = east] (o1) at (0,0) {\crcD};
\node[anchor = east] (o1) at (0,6pt) {\crcD};
\node[anchor = east] (o1) at (0,12pt) {\crcD};
\end{tikzpicture}%
}
\newcommand{\oH}{%
\begin{tikzpicture}[inner sep = 0pt,baseline,scale=0.935]
\node[anchor = east] (o1) at (0,0) {\crcD};
\node[anchor = east] (o2) at (6pt,0) {\crcD};
\node[anchor = east] (o1) at (12pt,0) {\crcD};
\node[anchor = east] (o1) at (18pt,0) {\crcD};
\end{tikzpicture}%
}

%% FOA configs

\newcommand{\foau}{
\node[constructorNE = {\FOAcisValid}] at (0,-1.1) (u) {};
\path[->]
(u) edge (v)
(v1) edge[bend right = -5] node[index label] {1} (u);}

\newcommand{\foauone}{
\node[constructorNE = {\FOAcinfixrel}] at (0,-3.2) (u1) {};
\path[->]
(u1) edge (v1)
(v2) edge[bend right = -10] node[index label] {1} (u1)
(v3) edge[bend right = -5] node[index label] {2} (u1)
(v4) edge[bend right = 10] node[index label] {3} (u1);}

\newcommand{\foautwo}{
\node[constructorNW = {\FOAcinfixop}] at (-2.2,-4.9) (u2) {};
\path[->]
(u2) edge (v2)
(v5) edge[bend right = -10] node[index label] {1} (u2)
(v6) edge[bend right = -5] node[index label] {2} (u2)
(v7) edge[bend right = 10] node[index label] {3} (u2);}

\newcommand{\foaufive}{
\node[constructorNW = {\FOAcinfixop}] at (-3.7,-6.7) (u5) {};
\path[->]
(u5) edge (v5)
(v11) edge[bend right = -10] node[index label] {1} (u5)
(v12) edge[bend right = 5] node[index label] {2} (u5)
(v13) edge[bend right = 10] node[index label] {3} (u5);}

\newcommand{\foaufour}{
\node[constructorNE = {\FOAcfrac}] at (2.3,-4.7) (u4) {};
\path[->]
(u4) edge (v4)
(v8) edge[bend right = -5] node[index label] {1} (u4)
(v9) edge[bend right = 5] node[index label] {2} (u4)
(v10) edge[bend right = 10] node[index label] {3} (u4);}

\newcommand{\foaueight}{
\node[constructorpos = {\FOAcimplicitMult}{166}{0.82cm}] at (0.7,-5.95) (u8) {};
\path[->]
(u8) edge (v8)
(v14) edge[bend right = -10] node[index label] {1} (u8)
(v15) edge[bend right = 10] node[index label] {2} (u8);}

\newcommand{\foaufifteen}{
\node[constructorNE = {\FOAcaddPar}] at (1.8,-7.4) (u15) {};
\path[->]
(u15) edge (v15)
(v16) edge[bend right = -10] node[index label] {1} (u15)
(v17) edge[bend right = -5] node[index label] {2} (u15)
(v18) edge[bend right = 10] node[index label] {3} (u15);}

\newcommand{\foauseventeen}{
\node[constructorNW = {\FOAcinfixop}] at (1.6,-9.3) (u17) {};
\path[->]
(u17) edge (v17)
(v19) edge[bend right = -10] node[index label] {1} (u17)
(v20) edge[bend right = -5] node[index label] {2} (u17)
(v21) edge[bend right = 10] node[index label] {3} (u17);}

%%% dots constructors with their edges
\newcommand{\dotsu}{
\node[constructorINE = {\DDisTranslationOf}] at (0,-1.3) (u) {};
    \path[->]
	(v1) edge[bend right = -10] node[ index label] {1} (u)
	(v2) edge[bend right = 10] node[ index label] {2} (u)
	(u) edge (v);}

\newcommand{\dotsuone}{
\node[constructorGNE ={\DDcstackLeft}] at (-1.5,-2.9) (u1) {};
	\path[->]
	(v3) edge[bend right = -10] node[ index label] {1} (u1)
	(v4) edge[bend right = 10] node[ index label] {2} (u1)
	(u1) edge (v1);}

\newcommand{\dotsutwo}{\node[constructorGW = {\DDcreflect}] at (1.5,-3) (u2) {};
    \path[->]
    (v5) edge[bend right = -10] node[index label] {1} (u2)
    %(v2') edge[bend right = 10] node[index label] {2} (u2)
    (u2) edge (v2);}

\newcommand{\dotsuthree}{\node[constructor = {\DDcstackLeft}] at (-2.4,-4.8) (u3) {};
	\path[->]
	(v6) edge[bend right = -10] node[ index label] {1} (u3)
	(v7) edge[bend right = 10] node[ index label] {2} (u3)
	(u3) edge (v3);}

\newcommand{\dotsufive}{\node[constructorGNE = {\DDcremove}] at (1.5,-5.3)(u5) {};	
	\path[->]
	(v8) edge[bend right = -10] node[index label] {1} (u5)	
	(v2') edge[bend right = 10] node[index label] {2} (u5)	
	(u5) edge (v5);	}

\newcommand{\dotsufivep}{\node[constructorGNE = {\DDcremove}] at (1.5,-5.3)(u5) {};	
\path[->]
(v8) edge[bend right = -10] node[index label] {1} (u5)	
(v2') edge[bend right = 10] node[index label] {2} (u5)	
(u5) edge (v5);	}

\newcommand{\dotsueight}{\node[constructorpos = {\DDcrectangulate}{168}{0.88cm}] at (0.3,-6.7) (u8) {};
	\path[->]
	(v9) edge[bend right = -10] node[index label] {1} (u8)
	(v10) edge[bend right = 10] node[index label] {2} (u8)
	(u8) edge (v8);}

\newcommand{\dotsueightp}{\node[constructorpos = {\DDcrectangulate}{168}{0.88cm}] at (0.3,-6.7) (u8) {};
\path[->]
(v9) edge[bend right = -10] node[index label] {1} (u8)
(v10) edge[bend right = 10] node[index label] {2} (u8)
(u8) edge (v8);}

\newcommand{\dotsuten}{\node[constructorGNE = {\DDcappend}] at (1.7,-7.7) (u10) {};
	\path[->]
	(u10) edge (v10)
	(v11) edge[bend right = -10] node[index label] {1} (u10)
	(v12) edge[bend right = 10] node[index label] {2} (u10);}

\newcommand{\dotsutenp}{\node[constructorGNE = {\DDcappend}] at (1.7,-7.7) (u10) {};
\path[->]
(u10) edge (v10)
(v11) edge[bend right = -10] node[index label] {1} (u10)
(v12) edge[bend right = 10] node[index label] {2} (u10);}

%%% dots diagram individual term nodes

\newcommand{\dotsv}{\node[termIE = {$s_1$}] at (0,-0.4) (v) {$\top$};}
\newcommand{\dotsvp}{\node[typeIE = {\typetrue}] at (0,-0.4) (v) {$\bcolour{b_1}$};}

\newcommand{\dotsvone}{\node[termW = {$s_2$},inner sep = 1pt] at (-1.5,-1.9) (v1) {\scalebox{0.85}{\oT}};}
\newcommand{\dotsvonep}{\node[typeW = {\DDstack}] at (-1.5,-1.9) (v1) {$\bcolour{b_2}$};}

\newcommand{\dotsvtwo}{\node[termE = {$s_7$},inner sep = 1pt] at  (1.5,-2) (v2) {\phantom{\crcD}\scalebox{0.85}{\oT}};}
\newcommand{\dotsvtwop}{\node[typeE = {\DDstack}] at  (1.5,-2) (v2) {$\bcolour{b_7}$};}

%%% new
\newcommand{\dotsvtwonew}{\node[termE = {$s_{10}$},inner sep = 1pt] at (2.9,-5.8) (v2') {\scalebox{0.85}{\oT}\phantom{\crcD}};}
\newcommand{\dotsvtwonewp}{\node[typeE = {\DDstack}] at (2.9,-5.8) (v2') {$\bcolour{b_{10}}$};}

\newcommand{\dotsvthree}{\node[termW = {$s_3$},inner sep = 1pt] at (-2.4,-3.8) (v3) {\scalebox{0.85}{\oL}};}
\newcommand{\dotsvthreep}{\node[typeW = {\DDstack}] at (-2.4,-3.8) (v3) {$\bcolour{b_3}$};}

\newcommand{\dotsvfour}{\node[termS = {$s_4$},inner sep = 1pt] at (-0.6,-3.8) (v4) {\scalebox{0.85}{\ooo}};}
\newcommand{\dotsvfourp}{\node[typeS = {\DDstack}] at (-0.6,-3.8) (v4) {$\bcolour{b_4}$};}

\newcommand{\dotsvfive}{\node[termSW = {$s_8$},inner sep = 1pt] at (1.3,-4.3) (v5) {\phantom{\crcD}\scalebox{0.85}{\oTrot}};}
\newcommand{\dotsvfivep}{\node[typeSW = {\DDtriangulation}] at (1.3,-4.3) (v5) {$\bcolour{b_8}$};}

\newcommand{\dotsvsix}{\node[termS = {$s_5$},inner sep = 1pt] at (-3.2,-5.6) (v6) {\scalebox{0.85}{\crcD}};}
\newcommand{\dotsvsixp}{\node[typeS = {\DDstack}] at (-3.2,-5.6) (v6) {$\bcolour{b_5}$};}

\newcommand{\dotsvseven}{\node[termS = {$s_6$},inner sep = 1pt] at (-1.6,-5.6) (v7) {\scalebox{0.85}{\oo}};}
\newcommand{\dotsvsevenp}{\node[typeS = {\DDstack}] at (-1.6,-5.6) (v7) {$\bcolour{b_6}$};}

\newcommand{\dotsveight}{\node[termW = {$s_9$},inner sep = 1pt] at (0.2,-5.7) (v8) {\scalebox{0.85}{\oR}};}
\newcommand{\dotsveightp}{\node[typeW = {\DDrectangulation}] at (0.2,-5.7) (v8) {$\bcolour{b_9}$};}

\newcommand{\dotsvnine}{\node[termS = {$s_{11}$},inner sep = 1pt] at (-0.7,-7.7) (v9) {\scalebox{0.85}{\oV}\phantom{\crcD\crcD\crcD\,}};}
\newcommand{\dotsvninep}{\node[typeS = {\DDverDots}] at (-0.7,-7.7) (v9) {$\bcolour{b_{11}}$};}

\newcommand{\dotsvten}{\node[termNE = {$s_{12}$},inner sep = 1pt] at (1.7,-6.8) (v10) {\scalebox{0.85}{\oH}};}
\newcommand{\dotsvtenp}{\node[typeNE = {\DDhorDots}] at (1.7,-6.8) (v10) {$\bcolour{b_{12}}$};}

\newcommand{\dotsveleven}{\node[termS = {$s_{13}$},inner sep = 1pt] at (0.9,-8.5) (v11) {\scalebox{0.85}{\ooo}};}
\newcommand{\dotsvelevenp}{\node[typeS = {\DDhorDots}] at (0.9,-8.5) (v11) {$\bcolour{b_{13}}$};}

\newcommand{\dotsvtwelve}{\node[termS = {$s_{14}$},inner sep = 1pt] at (2.5,-8.5) (v12) {\scalebox{0.85}{\crcD}};}
\newcommand{\dotsvtwelvep}{\node[typeS = {\DDhorDots}] at (2.5,-8.5) (v12) {$\bcolour{b_{14}}$};}

%
%%%%%% FOA CONSTRUCTION %%%

% FOA term reps

\newcommand{\foav}{\node[termIW = {$t$}] (v) at (0,-0.3) {$\top$};}
\newcommand{\foavp}{\node[typeIW = {\typebool}] (v) at (0,-0.3) {$\acolour{x}$};}

\newcommand{\foavone}{\node[termW = {$t_1$}] at (-0.3,-2.2) (v1) {$1+2+3=\frac{3(3+1)}{2}$};}
\newcommand{\foavonep}{\node[typeW= {\FOAform}] at (-0.3,-2.2) (v1) {$\acolour{a_1}$};}

\newcommand{\foavtwo}{\node[termW = {$t_2$}] at (-2.2,-4) (v2) {$1+2+3$};}
\newcommand{\foavtwop}{\node[typeW = {\FOAnumexp}] at (-2.2,-4) (v2) {$\acolour{a_2}$};}

\newcommand{\foavthree}{\node[termS = {$t_9$}] at (-0.5,-4.2) (v3)  {\vphantom{\scalebox{0.83}{i}}$=$};}
\newcommand{\foavthreep}{\node[typeS = {\FOAeq}] at (-0.5,-4.2) (v3)  {$\acolour{a_{9}}$};}

\newcommand{\foavfour}{\node[termE = {$t_{10}$}] at (2.1,-3.8) (v4) {$\frac{3(3+1)}{2}$};}
\newcommand{\foavfourp}{\node[typeE = {\FOAnumexp}] at (2.1,-3.8) (v4) {$\acolour{a_{10}}$};}

\newcommand{\foavfive}{\node[termW = {$t_3$}] at (-3.7,-5.8) (v5) {$1+2$};}
\newcommand{\foavfivep}{\node[typeW = {\FOAnumexp}] at (-3.7,-5.8) (v5) {$\acolour{a_3}$};}

\newcommand{\foavsix}{\node[termS = {$t_4$}] at (-2.5,-6) (v6) {$+$};}
\newcommand{\foavsixp}{\node[typeS = {\FOAplus}] at (-2.5,-6) (v6) {$\acolour{a_4}$};}

\newcommand{\foavseven}{\node[termS = {$t_5$}] at (-1.4,-5.8)  (v7) {$3$};}
\newcommand{\foavsevenp}{\node[typeS = {\FOAnumexp}] at (-1.4,-5.8)  (v7) {$\acolour{a_5}$};}

\newcommand{\foaveight}{\node[termNW = {$t_{11}$}] at (0.7,-5.1) (v8) {$3(3+1)$};}
\newcommand{\foaveightp}{\node[typeNW = {\FOAnumexp}] at (0.7,-5.1) (v8) {$\acolour{a_{11}}$};}

\newcommand{\foavnine}{\node[termS = {$t_{12}$}] at (3,-5.8) (v9) {$\frac{\phantom{3(3+1)}}{\phantom{2}}$};}
\newcommand{\foavninep}{\node[typeS = {\FOAdiv}] at (3,-5.8) (v9) {$\acolour{a_{12}}$};}

\newcommand{\foavten}{\node[termS = {$t_{13}$}] at (4.0,-5.3) (v10) {$2$};}
\newcommand{\foavtenp}{\node[typeS = {\textsf{2}}] at (4.0,-5.3) (v10) {$\acolour{a_{13}}$};}

\newcommand{\foaveleven}{\node[termS = {$t_6$}] at (-4.6,-7.6) (v11) {$1$};}
\newcommand{\foavelevenp}{\node[typeS = {\FOAnumexp}] at (-4.6,-7.6) (v11) {$\acolour{a_{6}}$};}

\newcommand{\foavtwelve}{\node[termS = {$t_7$}] at (-3.6,-7.7) (v12) {$+$};}
\newcommand{\foavtwelvep}{\node[typeS = {\FOAplus}] at (-3.6,-7.7) (v12) {$\acolour{a_{7}}$};}

\newcommand{\foavthirteen}{\node[termS = {$t_8$}] at (-2.6,-7.6) (v13) {$2$};}
\newcommand{\foavthirteenp}{\node[typeS = {\FOAnumexp}] at (-2.6,-7.6) (v13) {$\acolour{a_{8}}$};}

\newcommand{\foavfourteen}{\node[termS = {$t_{14}$}] at (-0.2,-6.7) (v14) {$3$};}
\newcommand{\foavfourteenp}{\node[typeS = {\FOAnumexp}] at (-0.2,-6.7) (v14) {$\acolour{a_{14}}$};}

\newcommand{\foavfifteen}{\node[termpos = {$t_{15}$}{45}{0.19cm}] at (1.7,-6.6) (v15) {$(3+1)$};}
\newcommand{\foavfifteenp}{\node[typepos = {\FOAnumexp}{45}{0.19cm}] at (1.7,-6.6) (v15) {$\acolour{a_{15}}$};}

\newcommand{\foavsixteen}{\node[termSW = {$t_{16}$}] at (0.2,-8.2) (v16) {$($};}
\newcommand{\foavsixteenp}{\node[typeSW = {\FOAopenb}] at (0.2,-8.2) (v16) {$\acolour{a_{16}}$};}

\newcommand{\foavseventeen}{\node[termSE = {$t_{17}$}] at (1.7,-8.5) (v17) {$3+1$};}
\newcommand{\foavseventeenp}{\node[typeSE = {\FOAnumexp}] at (1.7,-8.5) (v17) {$\acolour{a_{17}}$};}

\newcommand{\foaveighteen}{\node[termSE = {$t_{18}$}] at (3.2,-8.2) (v18) {$)$};}
\newcommand{\foaveighteenp}{\node[typeSE = {\FOAcloseb}] at (3.2,-8.2) (v18) {$\acolour{a_{18}}$};}

\newcommand{\foavnineteen}{\node[termS = {$t_{19}$}] at (0.2,-10.1) (v19) {$3$};}
\newcommand{\foavnineteenp}{\node[typeS = {\FOAnumexp}] at (0.2,-10.1) (v19) {$\acolour{a_{19}}$};}

\newcommand{\foavtwenty}{\node[termS = {$t_{20}$}] at (1.6,-10.3) (v20) {$+$};}
\newcommand{\foavtwentyp}{\node[typeS = {\FOAplus}] at (1.6,-10.3) (v20) {$\acolour{a_{20}}$};}

\newcommand{\foavtwentyone}{\node[termS = {$t_{21}$}] at (3.2,-10.1) (v21) {$1$};}
\newcommand{\foavtwentyonep}{\node[typeS = {\FOAnumexp}] at (3.2,-10.1) (v21) {$\acolour{a_{21}}$};}

%%% for more complex example

% FOA term reps

\newcommand{\foavn}{\node[termE = {$t$}] (v) {$\top$};}

\newcommand{\foavonen}{\node[termW = {$t_1$}] at (-0.3,-2) (v1) {$1+2+3=\frac{3(3+1)}{2}$};}

\newcommand{\foavtwon}{\node[termW = {$t_2$}] at (-2.2,-4) (v2) {$2+2+4$};}

\newcommand{\foavthreen}{\node[termS = {$t_9$}] at (-0.3,-4) (v3)  {\vphantom{\scalebox{0.83}{i}}$=$};}

\newcommand{\foavfourn}{\node[termE = {$t_{10}$}] at (2.5,-3.8) (v4) {$\frac{3(3+1)}{2}$};}

\newcommand{\foavfiven}{\node[termW = {$t_3$}] at (-3.7,-5.8) (v5) {$2+2$};}

\newcommand{\foavsixn}{\node[termSE = {$t_4$}] at (-2.5,-6) (v6) {$+$};}

\newcommand{\foavsevenn}{\node[termS = {$t_5$}] at (-1.4,-5.8) (v7) {$4$};}

\newcommand{\foaveightn}{\node[termNW = {$t_{11}$}] at (0.7,-5.1) (v8) {$3(3+1)$};}

\newcommand{\foavninen}{\node[termS = {$t_{12}$}] at (3,-5.8) (v9) {$\frac{\phantom{3(3+1)}}{\phantom{2}}$};}

\newcommand{\foavtenn}{\node[termS = {$t_{13}$}] at (4.0,-5.3) (v10) {$2$};}

\newcommand{\foavelevenn}{\node[termS = {$t_6$}] at (-4.6,-7.6) (v11) {$2$};}

\newcommand{\foavtwelven}{\node[termS = {$t_7$}] at (-3.6,-7.7) (v12) {$+$};}

\newcommand{\foavthirteenn}{\node[termS = {$t_8$}] at (-2.6,-7.6) (v13) {$2$};}

\newcommand{\foavfourteenn}{\node[termS = {$t_{14}$}] at (-0.5,-7.8) (v14) {$3$};}

\newcommand{\foavfifteenn}{\node[termW = {$t_{15}$}] at (2.5,-7.8) (v15) {$(3+1)$};}

\newcommand{\foavsixteenn}{\node[termSW = {$t_{16}$}] at (1,-9.8) (v16) {$($};}

\newcommand{\foavseventeenn}{\node[termSE = {$t_{17}$}] at (2.6,-9.8) (v17) {$3+1$};}

\newcommand{\foaveighteenn}{\node[termSE = {$t_{18}$}] at (4,-9.8) (v18) {$)$};}

\newcommand{\foavnineteenn}{\node[termS = {$t_{19}$}] at (0.2,-11.7) (v19) {$3$};}

\newcommand{\foavtwentyn}{\node[termS = {$t_{20}$}] at (2.2,-11.8) (v20) {$+$};}

\newcommand{\foavtwentyonen}{\node[termS = {$t_{21}$}] at (3.7,-11.7) (v21) {$1$};}

%%% dots more comple

%%% new more complex example

\newcommand{\dotsvthreen}{\node[termE = {\DDstack},inner sep = 1pt] at (-2.4,-3.8) (v3) {{\scalebox{0.85}{\oLo}}};} %%

\newcommand{\dotsvfourn}{\node[termS = {\DDstack},inner sep = 1pt] at (-0.6,-3.8) (v4) {{\scalebox{0.85}{\oLo}}};} %%

\newcommand{\dotsvsixn}{\node[termS = {\DDstack},inner sep = 1pt] at (-3.3,-5.7) (v6) {{\scalebox{0.85}{\oo}}};}

\newcommand{\dotsvsevenn}{\node[termS = {$s_6$},inner sep = 1pt] at (-1.5,-5.7) (v7) {{\scalebox{0.85}{\oo}}};}

\newcommand{\acolour}[1]{\textcolor[rgb]{0.90,0.10,0.10}{#1}}
\newcommand{\bcolour}[1]{\textcolor[rgb]{0.10,0.25,0.90}{#1}}

This paper has, so far, introduced a range of theoretical concepts, starting with how Representational Systems Theory can be used to encode representations, their entailment relations, and properties of them (Section~\ref{sec:RSs}). Using patterns, this section defines \textit{structural transformations}, exploiting an inter-representational-system encoding, $\irencodingn=\irencoding$. The practical reasons for seeking a structural transformation of $\cpair$ in $\rsystemn$ broadly fit into two classes:
%\
\begin{enumerate}
\item[-] \textit{Class 1: identify a token, $t'$,  in $\rsystemn'$ where $t$ and $t'$ are in some specified relationship}. For instance, this relationship may be that $t$ is semantically equivalent to $t'$. Here, knowledge about how to construct $t$, provided by $\cgraphn$, is used to construct a suitable $t'$.
\item[-] \label{class2}\textit{Class 2: identify a construction, $\cpaird$, in $\rsystemn'$ where some specified relationships hold between sub-constructions of, and tokens in, $\cpair$ and $\cpaird$.} For instance, when $t$ and $t'$ are semantically equivalent and $\cpair$ is an entailment-space construction of $t$, we may seek $\cpaird$ in the entailment space of $\rsystemn'$, that shows how to derive $t'$. Knowledge about how $t$ is derived, provided by $\cpair$, is used to identify $\cpaird$.
\end{enumerate}
The theory we present applies equally to both classes.

We now digress briefly into remarks on notation. Since many patterns and multiple construction spaces are involved in structural transformations we adopt the following conventions:
\begin{enumerate}
\item Given an inter-representational-system encoding, $\irencodingn=\irencoding$, constructions in $\rsystemn$, $\rsystemn'$ and $\ipispacen$ will be denoted by $\cpair$, $\cpairhs$ and $\cpairiw$, sometimes with indices. Sets of constructions drawn from $\ipispacen$ will be denoted by $\setofconstructionsirse$.
\item Given a description, $\descriptionirse$, of an inter-representational-system encoding, patterns in $\setofpatterns(\descriptionn)$, $\setofpatterns(\descriptionn')$ and $\setofpatterns''$ (i.e. the set of patterns used to describe $\ipispacen$) will be denoted by $\ppaira$, $\ppairb$, and $\ppair$. Sets of patterns drawn from $\setofpatterns''$ will be denoted by $\setofpatternstc$.
\item Given a decomposition, $\decompositionn$ (resp. $\pdecompositionn$) of a construction (resp. pattern), the vertices in $\decompositionn$ (resp. $\pdecompositionn$)  will be denoted by, for example, $d$, $d_1$, etc. (resp. $\delta$, $\delta_1$, etc.).
\end{enumerate}

Now, in essence, a structural transformation is a \textit{relation} between the sets of constructions $\allconstructions{\rsystemn}$ and $\allconstructions{\rsystemn'}$. At the most basic level, the relationship between $\cpair$ and $\cpairhs$ is defined using a triple:  $\tRelSpecn=(\ppaira, \ppairb, \setofpatternstc)$. Related constructions, $\cpair$ and $\cpairhs$, are required to match the patterns $\ppaira$ and $\ppairb$, respectively. The role of $\setofpatternstc$, which is a set of patterns drawn from $\setofpatterns''$, is to specify that certain relations hold between the tokens of $\cpair$ and $\cpairhs$. Of particular importance is that exploiting relations encoded within $\ipispacen$, by using $\setofpatternstc$ in the triple $\tRelSpecn$, supports \textit{representation selection} based on properties that tokens or constructions possess, such as their relative cognitive effectiveness; we have already seen examples of how structure transfer can be used to derive an Euler diagram and a Tree Map, from set-theoretic expressions and, respectively, a table in Sections~\ref{sec:constructions} and~\ref{sec:patterns}.

In general, the theory of structural transformations \textit{exploits decompositions}, $\decompositionn$ and $\decompositionn'$, of $\cpair$ and $\cpairhs$ when defining structural transformations: selected pairs of constructions, $\cpaird$ and $\cpairhsd$, that label vertices in $\decompositionn$ and $\decompositionn'$, are each required to satisfy constraints as imposed by triples of the form $\tRelSpecn=(\ppaira, \ppairb, \setofpatternstc)$. This approach breaks down the problem of producing a structural transformation of $\cpair$ into a set of structural transformation problems involving sub-constructions of $\cpair$.

The concept of a structural transformation is \textit{distinct} from an algorithmic process that seeks to \textit{produce} a structural transformation, $\cpairhs$, of $\cpair$. Any such procedure is called a \textit{structure transfer} algorithm. The theory presented in this section is predominantly focused on structural transformations but is developed in such a way that it provides a useful framework for the subsequent development of structure transfer algorithms.  The approach we take is inspired by the ideas of Reynolds~\cite{reynolds1983types} and the implementation of the Transfer package in Isabelle/HOL by Huffman and Kun{\v{c}}ar~\cite{huffman2013lifting}.

\subsection{Motivating Examples: Representation Selection}\label{sec:structureTransfer:ME}

This section demonstrates a range of application areas where structure transfer can be exploited to turn one construction, $\cpair$, into another, $\cpairhs$. The examples do not present all of the details needed to invoke such structural transformations -- as Section~\ref{sec:patterns:ME} makes evident, such details are space-hungry. Rather, in the case of Class 1 problems, we show how to encode relations between $t$ and $s$ using $\ipispacen$. As will become clear, encoding relations between $t$ and $s$ is \textit{theoretically} easy. However, in a \textit{practical} setting it will typically be the case that many desired relations are not fully known: we may only have partial knowledge of the construction space $\ipispacen$. Later, we define the concept of a \textit{soundness} which allows relations between tokens to be inferred without needing to exploit the entirety of $\ipispacen$. All of the examples in this section illustrate how Representational Systems Theory can be used to support representation selection based on desired properties being exhibited. Example~\ref{ex:structureTransferObservation} begins with a simple case showing how the theory can be exploited to assert that a certain relation holds between constructs $t$ and $s$.

\begin{example}[Observability]\label{ex:structureTransferObservation}
A \textsc{Position Diagrams} representational system places tokens in the plane to express information about relative positions~\cite{shimojima:spodatcp}. For instance, the diagram below expresses that \textit{Amrita is left of Beckie} and \textit{Beckie is left of Cabir}:
\begin{center}
Amrita \quad Beckie \quad Cabir
\end{center}
From this diagram it can be \textit{observed} that \textit{Amrita is left of Cabir}, \textit{Beckie is right of Amrita}, and \textit{Cabir is right of Amrita}. Such an observation relation between natural language statements and position diagrams can be encoded using configurations that can be embedded into this pattern, $\ppair$:
\begin{center}
  \begin{tikzpicture}[construction]
	\node[typeIW={\texttt{true}}] (t5) {$v$};
	\node[constructorINE = {\constructorObserve}, below = 0.45cm of t5] (u) {};
	\node[typeS={\texttt{positionDiagram}},below left = 0.15cm and 1.1cm of u] (t4) {$a$};
	\node[typeS={\texttt{natLangStatement}},below right = 0.08cm and 1.1cm of u] (r5) {$b$};

	\path[->]
	(u) edge (t5)
	(t4) edge[bend right = -10] node[index label]{1} (u)
    (r5) edge[bend right = 10] node[index label]{2} (u);
	\end{tikzpicture}
\end{center}
Taking $\ppaira$ and $\ppairb$ to be trivial constructions, where $a$ and $b$ are assigned types\linebreak \texttt{positionDiagram} and \texttt{natLangStatement}, respectively, we would set $\tRelSpecn= (\ppaira,\ppairb,\linebreak \{\ppair\})$ to encode Class 1 structural transformations of this nature.
\end{example}

\begin{example}[Re-representing a Token: Colouring Problems]\label{ex:structureTransColour}
Suppose a rectangle is divided into regions and one wishes to know how many ways there are of colouring its regions, with $k$ colours, such that no two regions sharing a border are assigned the same colour. Such a representation can be converted to a graph, $\graphn$, where an equivalent problem is to count how many ways there are of colouring $\graphn$'s vertices such that no two adjacent vertices are assigned the same colour. In this context, the divided rectangle, $r$, below left, is equivalent to $\graphn_1$ but not $\graphn_2$:
\begin{center}
	\begin{tikzpicture}
		\draw[thick] (0,0) rectangle (1,1);
		\draw[thick] (0.3,0.3) rectangle (0.7,0.7);
		\draw[thick] (0.5,0.7) -- (0.5,1);
		\draw[thick] (0.7,0.3) -- (1,0.3);
		\draw[thick] (0.3,0.3) -- (0.3,0);
        \node (1) at (0.5,-0.25) {$r$};
	\end{tikzpicture}
 \qquad \qquad
\begin{tikzpicture}
	\node[shape=circle, fill=black, inner sep = 0pt, minimum size=4pt] (1) at (0,0.5) {};
    \node[shape=circle, fill=black, inner sep = 0pt, minimum size=4pt] (2) at (-0.5,-0.5) {};
    \node[shape=circle, fill=black, inner sep = 0pt, minimum size=4pt] (3) at (0.5,-0.5) {};
    \node[shape=circle, fill=black, inner sep = 0pt, minimum size=4pt] (4) at (0,-0.1) {};
    \node (5) at (0,-0.8) {$\graphn_1$};
	\path[-]
	(1) edge (2)
	(1) edge (3)
    (1) edge (4)
    (2) edge (3)
    (2) edge (4)
    (3) edge (4);
	\end{tikzpicture}
\qquad \qquad
\begin{tikzpicture}
	\node[shape=circle, fill=black, inner sep = 0pt, minimum size=4pt] (1) at (0,0.5) {};
    \node[shape=circle, fill=black, inner sep = 0pt, minimum size=4pt] (2) at (-0.5,-0.5) {};
    \node[shape=circle, fill=black, inner sep = 0pt, minimum size=4pt] (3) at (0.5,-0.5) {};
    \node[shape=circle, fill=black, inner sep = 0pt, minimum size=4pt] (4) at (0,-0.1) {};
        \node (5) at (0,-0.8) {$\graphn_2$};
	\path[-]
	(1) edge (2)
	(1) edge (3)
    %(1) edge (4)
    (2) edge (3)
    (2) edge (4)
    (3) edge (4);
	\end{tikzpicture}
\end{center}
For this divided rectangle, $r$, the re-representation problem can be stated as: given a construction, $(\cgraphn,r)$, drawn from the representational system $\rsystemn_{R}$ (\textsc{Divided Rectangles}), find a construction, $(\cgraphhn,\graphn)$, in $\rsystemn_{G}$ (\textsc{Graphs}) where $r$ and $\graphn$ have the same number of $k$-colourings. That is, the constructs $r$ and $\graphn$ should be related by the $\texttt{equal-k-Colourings}$ relation. In the case of $\graphn_1$ and $\graphn_2$, this relation can be encoded in $\ipispacen$, where $\irencodingn = (\rsystemn_{R}, \rsystemn_{G}, \ispacen'')$, using these constructions, $(\cgraphin_1,\top)$ and $(\cgraphin_2,\bot)$:
\begin{center}
\begin{tikzpicture}[construction]
	\node[termIW={\typetrue}] (t5) {$\top$};
	\node[constructorINE = {\texttt{equal-k-Colourings}}, below = 0.45cm of t5] (u) {};
	\node[termS={\texttt{dividedRectangle}},below left = 0.1cm and 1.2cm of u] (t4) {%
	\begin{tikzpicture}[rounded corners = 0,scale=0.75]
		\draw[thick] (0,0) rectangle (1,1);
		\draw[thick] (0.3,0.3) rectangle (0.7,0.7);
		\draw[thick] (0.5,0.7) -- (0.5,1);
		\draw[thick] (0.7,0.3) -- (1,0.3);
		\draw[thick] (0.3,0.3) -- (0.3,0);
        %\node (1) at (0.5,-0.2) {$r$};
	\end{tikzpicture}
};
	\node[termS={\texttt{graph}},below right = 0.02cm and 1.2cm of u] (r5)%
{
\begin{tikzpicture}[scale=0.75]
	\node[shape=circle, fill=black, inner sep = 0pt, minimum size=3pt] (1) at (0,0.5) {};
    \node[shape=circle, fill=black, inner sep = 0pt, minimum size=3pt] (2) at (-0.5,-0.5) {};
    \node[shape=circle, fill=black, inner sep = 0pt, minimum size=3pt] (3) at (0.5,-0.5) {};
    \node[shape=circle, fill=black, inner sep = 0pt, minimum size=3pt] (4) at (0,-0.1) {};
	\path[-]
	(1) edge (2)
	(1) edge (3)
    (1) edge (4)
    (2) edge (3)
    (2) edge (4)
    (3) edge (4);
	\end{tikzpicture}
};

	\path[->]
	(u) edge (t5)
	(t4) edge[bend right = -10] node[index label]{1} (u)
    (r5) edge[bend right = 10] node[index label]{2} (u);
	\end{tikzpicture}
\qquad
\begin{tikzpicture}[construction]
	\node[termIW={\typefalse}] (t5) {$\bot$};
	\node[constructorINE = {\texttt{equal-k-Colourings}}, below = 0.45cm of t5] (u) {};
	\node[termS={\texttt{dividedRectangle}},below left = 0.1cm and 1.2cm of u] (t4) {%
	\begin{tikzpicture}[rounded corners = 0,scale=0.75]
		\draw[thick] (0,0) rectangle (1,1);
		\draw[thick] (0.3,0.3) rectangle (0.7,0.7);
		\draw[thick] (0.5,0.7) -- (0.5,1);
		\draw[thick] (0.7,0.3) -- (1,0.3);
		\draw[thick] (0.3,0.3) -- (0.3,0);
        %\node (1) at (0.5,-0.2) {$r$};
	\end{tikzpicture}
};
	\node[termS={\texttt{graph}},below right = 0.02cm and 1.2cm of u] (r5)%
{%
\begin{tikzpicture}[scale=0.75]
	\node[shape=circle, fill=black, inner sep = 0pt, minimum size=3pt] (1) at (0,0.5) {};
    \node[shape=circle, fill=black, inner sep = 0pt, minimum size=3pt] (2) at (-0.5,-0.5) {};
    \node[shape=circle, fill=black, inner sep = 0pt, minimum size=3pt] (3) at (0.5,-0.5) {};
    \node[shape=circle, fill=black, inner sep = 0pt, minimum size=3pt] (4) at (0,-0.1) {};
	\path[-]
	(1) edge (2)
	(1) edge (3)
    %(1) edge (4)
    (2) edge (3)
    (2) edge (4)
    (3) edge (4);
	\end{tikzpicture}
};
	\path[->]
	(u) edge (t5)
	(t4) edge[bend right = -10] node[index label]{1} (u)
    (r5) edge[bend right = 10] node[index label]{2} (u);
	\end{tikzpicture}
\end{center}
The following pattern, $\ppair$ captures the required construct relation:
\begin{center}
\begin{tikzpicture}[construction]
	\node[typeIW={\typetrue}] (t5) {$v$};
	\node[constructorINE = {\texttt{equal-k-Colourings}}, below = 0.45cm of t5] (u) {};
	\node[typeS={\texttt{dividedRectangle}},below left = 0.15cm and 1.1cm of u] (t4) {$a$};
	\node[typeS={\texttt{graph}},below right = 0.07cm and 1.1cm of u] (r5) {$b$};
	\path[->]
	(u) edge (t5)
	(t4) edge[bend right = -10] node[index label]{1} (u)
    (r5) edge[bend right = 10] node[index label]{2} (u);
	\end{tikzpicture}\vspace{-0.2cm}
\end{center}

\end{example}
%\dnote{Just noting here that if we need space the example above can go. While I like the example in theory, we are not expanding it into a sophisticated example of structure transfer, so it ends up being simply an example of encoding a relation in $\ipispacen$.}

As is evident in examples~\ref{ex:structureTransferObservation} and~\ref{ex:structureTransColour}, formally encoding the relation that we wish to hold between a pair of constructs, $t$ and $s$, is trivial: given a relation, $\{(t_{1},s_1),\ldots,(t_i,s_i),\ldots\}$, ensure that $\ispacen''$ includes a constructor, $\texttt{R}$, where for each $(t_i,s_i)$, the structure graph of $\ispacen''$ contains an $\texttt{R}$-configurator with inputs $t_i$ and $s_i$ and output $\top$. Extending this observation, the output need not be $\top$ but could be, for instance, a value in the interval $\texttt{[0,1]}$ representing the probability that $t_i$ is related to $s_i$ under the given relation. Thus, these more general configurations of $\texttt{R}$ would match this \begin{samepage}pattern:
\begin{center}
  \begin{tikzpicture}[construction]
	\node[typeIW={\texttt{[0,1]}}] (t5) {$v$};
	\node[constructorINE = {$\texttt{R}$}, below = 0.45cm of t5] (u) {};
	\node[typeS={$\tau_1$},below left = 0.15cm and 1.1cm of u] (t4) {$a$};
	\node[typeS={$\tau_2$},below right = 0.08cm and 1.1cm of u] (r5) {$b$};
	\path[->]
	(u) edge (t5)
	(t4) edge[bend right = -10] node[index label]{1} (u)
    (r5) edge[bend right = 10] node[index label]{2} (u);
	\end{tikzpicture}\vspace{-0.2cm}
\end{center}\end{samepage}
Hence, $\ispacen''$ (and $\ipispacen$) can be used to encode uncertainty. Moreover, setting the type assigned to $v$ to be, say, $\texttt{[0.95,1]}$ instead of $\texttt{[0,1]}$ would represent a constraint that the probability of being related is at least $0.95$.

Whilst example~\ref{ex:structureTransferObservation} used an intuitive understanding of position diagrams and natural language to observe that Amrita is left of Cabir, for instance, it does not provide any insight into how we might go about \textit{constructing} the observed sentence. Similarly, example~\ref{ex:structureTransColour} gave no insight into how to construct $\graphn_1$. If we have only partial knowledge of our representational systems, how do we go about constructing $s$, given a construction, $\cpair$, of $t$? Example~\ref{ex:STMErerepresentingTokenSE} begins to illuminate how constructions and patterns are exploited in this context.

\newcommand{\inlinecrc}{\tikz[baseline=-0.1cm,inner sep = 0pt]{\draw[fill opacity = 0.2, fill = black,thin] (0,0) circle (.07cm);}}
\begin{example}[Re-representing a Token: Semantic Equivalence]\label{ex:STMErerepresentingTokenSE}
The system, $\rsystemn_{\FOA}$, of \textsc{First-Order Arithmetic} given in Section~\ref{sec:RSs} embodied various ways of constructing numerical expressions. Numerals, which are tokens in $\rsystemn_{\FOA}$, can also be represented by dots, using a system, $\rsystemn_{D}$, of \textsc{Dot Diagrams}: $1$ is represented by $\inlinecrc$, $2$ by $\inlinecrc \dotspace\inlinecrc$, and so forth. Summations can be represented by \textit{joining} dots: $1+2$ is represented by $\inlinecrc$ joined with $\inlinecrc \dotspace\inlinecrc$, giving $\inlinecrc \dotspace\inlinecrc \dotspace\inlinecrc$. Each dot occupies an integer-coordinate position, $(x,y)\in \mathbb{Z}\times \mathbb{Z}$ and, in our formalisation, is assigned type $\{(\texttt{x},\texttt{y})\}$ which is a subtype of \texttt{dotDiagram}. Given $n$ individual dots, $\inlinecrc, \hdots, \inlinecrc$, with individual types $\{(\texttt{x}_\texttt{1},\texttt{y}_\texttt{1})\},\ldots,\{(\texttt{x}_\texttt{n},\texttt{y}_\texttt{n})\}$, the single dot diagram comprising all $n$ dots has type $\{(\texttt{x}_\texttt{1},\texttt{y}_\texttt{1}),\ldots,(\texttt{x}_\texttt{n},\texttt{y}_\texttt{n})\}$. So, when joining dots to represent $1+2$, it is necessary that the joined sets of dots occupy different positions: the first dot, $\inlinecrc$, does not occupy either of the positions taken by the two dots in $\inlinecrc \dotspace \inlinecrc$. In other words, their types are disjoint sets. To encode a numerical expression being represented by a dot diagram, we exploit a constructor called \texttt{representedBy}, found in $\ispacen''$, given $\irencodingn=(\rsystemn_{\FOA},\rsystemn_{D},\ispacen'')$. We can now encode $1+2$ being represented by $\inlinecrc \dotspace\inlinecrc \dotspace\inlinecrc$ using the construction $(\cgraphin,\top)$, below left, whereas $(\cgraphin',\bot)$ encodes the fact that $1+2$ is not represented by $\inlinecrc \dotspace \inlinecrc$:
\begin{center}
\begin{tikzpicture}[construction]
	\node[termIW={\typetrue}] (t5) {$\top$};
	\node[constructorINE = {\constructorRepresentedBy}, below = 0.45cm of t5] (u) {};
	\node[termS={\texttt{\texttt{1\_pl\_2}}},below left = of u, yshift=0.7cm, xshift=-0.3cm] (t4) {$1+2$};
	\node[termS={$\{(\texttt{1},\texttt{1}),(\texttt{2},\texttt{1}),(\texttt{3},\texttt{1})\}$},below right = of u,yshift=0.7cm,xshift=0.3cm] (r5){$\inlinecrc \dotspace \inlinecrc \dotspace \inlinecrc$};
	\path[->]
	(u) edge (t5)
	(t4) edge[bend right = -10] node[index label]{1} (u)
    (r5) edge[bend right = 10] node[index label]{2} (u);
	\end{tikzpicture}
\qquad
\begin{tikzpicture}[construction]
	\node[termIW={\typefalse}] (t5) {$\bot$};
	\node[constructorINE = {\constructorRepresentedBy}, below = 0.45cm of t5] (u) {};
	\node[termS={\texttt{1\_pl\_2}},below left = of u, yshift=0.7cm, xshift=-0.2cm] (t4) {$1+2$};
	\node[termS={$\{(\texttt{1},\texttt{1}),(\texttt{2},\texttt{1})\}$},below right = of u,yshift=0.7cm,xshift=0.2cm] (r5) {$\inlinecrc \dotspace\inlinecrc$};

	\path[->]
	(u) edge (t5)
	(t4) edge[bend right = -10] node[index label]{1} (u)
    (r5) edge[bend right = 10] node[index label]{2} (u);
	\end{tikzpicture}
\end{center}
From this specific example, we can identify a pattern, $\ppair$, that captures the general case of a numeral being represented by a dot diagram:
\begin{center}
\begin{tikzpicture}[construction]
	\node[typeIW={\typetrue}] (t5) {$v$};
	\node[constructorINE = {\constructorRepresentedBy}, below = 0.45cm of t5] (u) {};
	\node[typeS={\FOAnumexp},below left = of u, yshift=0.7cm, xshift=-0.2cm] (t4) {$\acolour{a}$};
	\node[typeS={\DDdotDiagram},below right = of u,yshift=0.75cm,xshift=0.2cm] (r5) {$\bcolour{b}$};

	\path[->]
	(u) edge (t5)
	(t4) edge[bend right = -10] node[index label]{1} (u)
    (r5) edge[bend right = 10] node[index label]{2} (u);
	\end{tikzpicture}
\end{center}
{\pagebreak}Given constructions, $\cpair$ (e.g. where $\foundationsto{\cgraphn,t}=[1,+,2]$ and $t=1+2$) and $\cpairhs$ (e.g. where $\foundationsto{\cgraphhn,s}=[\inlinecrc, \inlinecrc \dotspace \inlinecrc]$ and $s=\inlinecrc \dotspace \inlinecrc \dotspace \inlinecrc$), that match $\ppaira$ and $\ppairb$, shown \begin{samepage}here,
\begin{center}
\begin{tikzpicture}[construction]
	\node[typeW={\FOAnumexp}] (c) {$\acolour{a}$};
	\node[constructorINE = {\FOAcinfixop}, below = 0.45cm of c] (u) {};
	\node[typeS={\FOAnumexp},below left = of u, yshift=0.3cm, xshift=-0.2cm] (l) {$\acolour{a_1}$};
    \node[typeS={\FOAplus},below  = of u, yshift=0.2cm, xshift=-0.0cm] (m) {$\acolour{a_2}$};
	\node[typeS={\FOAnumexp},below right = of u,yshift=0.3cm,xshift=0.2cm] (r) {$\acolour{a_3}$};
	\path[->]
	(u) edge (c)
	(l) edge[bend right = -10] node[index label]{1} (u)
    (m) edge[bend right = 0] node[index label]{2} (u)
    (r) edge[bend right = 10] node[index label]{3} (u);
	\end{tikzpicture}
\qquad
\begin{tikzpicture}[construction]
	\node[typeW={\DDdotDiagram}] (c) {$\bcolour{b}$};
	\node[constructorINE = {\DDcjoin}, below = 0.45cm of c] (u) {};
	\node[typeS={\DDdotDiagram},below left = of u, yshift=0.4cm, xshift=-0.0cm] (l) {$\bcolour{b_1}$};
    %\node[typeS={\FOAplus},below  = of u, yshift=0.5cm, xshift=-0.0cm] (m) {$v_1''$};
	\node[typeS={\DDdotDiagram},below right = of u,yshift=0.4cm,xshift=0.0cm] (r) {$\bcolour{b_2}$};
	\path[->]
	(u) edge (c)
	(l) edge[bend right = -10] node[index label]{1} (u)
    %(m) edge[bend right = 0] node[index label]{2} (u)
    (r) edge[bend right = 10] node[index label]{2} (u);
	\end{tikzpicture}
\end{center}
\end{samepage}
%\qquad
%
if there exists a set of constructions, $\setofconstructionsirse=\{\cpairiwp{1},\cpairiwp{2},\cpairiwp{3}\}$, drawn from $\ipispacen$ that, respectively, embed into the patterns in $\setofpatternstc=\{\ppairp{1},\ppairp{2},\ppairp{3}\}$, given here
\begin{center}\label{ex:pageref:ispaceexample}
	\begin{tikzpicture}[construction]
	%\node[typeIW={\typetrue}] (c) {$v'$};
	%\node[constructorINW = {\constructorAnd}, below = 0.45cm of c] (uc) {};
	%\node[typeIW={\typebool},below left = 0.1cm and 1.2cm of uc] (luc) {};
	%
	%\node[constructorINE = {\constructorAnd}, below = 0.45cm of luc] (uluc) {};
	\node[typeIW={\typetrue},anchor=center] (lluc) {$v_1$};
	\node[typeIE={\typetrue}, right =  2.5cm of lluc,anchor=center] (rluc) {$v_2$};
	\node[typeIE={\typetrue}, right =  2.5cm of rluc,anchor=center] (ruc) {$v_3$};
	\node[constructorINW = {\constructorRepresents}, below = 0.45cm of lluc] (ululuc) {};
	\node[typeW={\FOAnumexp},below left = 0.4cm and 1cm of ululuc] (llluc) {$\acolour{a_1}$};
	\node[typeS={\DDdotDiagram},below right = 0.4cm and 1cm of ululuc] (rlluc) {$\bcolour{b_1}$};
	\node[constructorINE = {\DDcdisjoint}, below = 0.45cm of rluc] (uruluc) {};
	\node[typeS={\DDdotDiagram},below right = 0.4cm and 1cm of uruluc] (rrluc) {$\bcolour{b_2}$};
	\node[constructorINE = {\constructorRepresents}, right = 2.3cm of uruluc] (uv) {};
	\node[typeE={\FOAnumexp}, below right = 0.4cm and 1cm of uv] (v1''') {$\acolour{a_3}$};
	\path[->]
	%(uc) edge (c)
	%(luc) edge[bend right = -10] node[index label]{1} (uc)
	%(ruc) edge[bend right = 10] node[index label]{2} (uc)
	%
	%(uluc) edge (luc)
	%(lluc) edge[bend right = -10] node[index label]{1} (uluc)
	%(rluc) edge[bend right = 10] node[index label]{2} (uluc)
	%
	(ululuc) edge (lluc)
	(llluc) edge[bend right = -10] node[index label]{1} (ululuc)
	(rlluc) edge[bend right = 10] node[index label]{2} (ululuc)
	(uruluc) edge (rluc)
	(rlluc) edge[bend right = -10] node[index label]{1} (uruluc)
	(rrluc) edge[bend right = 10] node[index label]{2} (uruluc)
	(uv) edge[] (ruc)
	(rrluc) edge[bend right = -10] node[index label]{2} (uv)
	(v1''') edge[bend right = 10] node[index label]{1} (uv);
	\coordinate[below left = of llluc,yshift =0.5cm,xshift=0.0cm] (a1);
	\coordinate[above left = of llluc,yshift =-0.5cm] (a2);
	\coordinate[above left = of lluc,yshift =-0.55cm,xshift=0.4cm] (a3);
	\coordinate[above right = of lluc,yshift =-0.55cm,xshift=-0.8cm] (a4);
	\coordinate[above right = of rlluc,xshift=-1.0cm] (a5);
	\coordinate[below right = of rlluc,yshift =0.35cm,xshift=-0.35cm] (a6);
	\draw[rounded corners=8,thick, draw opacity = 0.30, text opacity = 0.6] (a1)--(a2)--node[yshift=0.5cm,xshift=-0.2cm]{$(\gamma_1,v_1)$}(a3)--(a4)--(a5)--(a6)--cycle;
	\coordinate[below left = of rlluc,yshift =0.3cm,xshift=0.35cm] (b1);
	\coordinate[above left = of rlluc,yshift =-0.5cm,xshift=0.8cm] (b2);
	\coordinate[above left = of rluc,yshift =-0.55cm,xshift=1cm] (b3);
	\coordinate[above right = of rluc,yshift =-0.55cm,xshift=-0.5cm] (b4);
	\coordinate[above right = of rrluc,yshift = 0.4cm,xshift=-0.8cm] (b5);
	\coordinate[below right = of rrluc,yshift =0.3cm,xshift=-0.35cm] (b6);
	\draw[rounded corners=8,thick, draw opacity = 0.30, text opacity = 0.6] (b1)--(b2)--node[yshift=0.6cm,xshift=-0.1cm]{$(\gamma_2,v_2)$}(b3)--(b4)--(b5)--(b6)--cycle;
	\coordinate[below left = of rrluc,yshift =0.35cm,xshift=0.35cm] (g1);
	\coordinate[above left = of rrluc,yshift =-0.6cm,xshift=0.9cm] (g2);
	\coordinate[above left = of ruc,yshift =-0.55cm,xshift=1cm] (g3);
	\coordinate[above right = of ruc,yshift =-0.55cm,xshift=-0.4cm] (g4);
	\coordinate[above right = of v1''',xshift=-0.0cm,yshift=-0.5cm] (g5);
	\coordinate[below right = of v1''',yshift =0.5cm,xshift=0.0cm] (g6);
	\draw[rounded corners=8,thick, draw opacity = 0.30, text opacity = 0.6] (g1)--(g2)--(g3)--(g4)--node[yshift=0.55cm,xshift=0.2cm]{$(\gamma_3,v_3)$}(g5)--(g6)--cycle;
	\end{tikzpicture}
\end{center}
such that $\cgraphn\cup \cgraphhn\cup \cgraphin_1\cup \cgraphin_2\cup \cgraphin_3$ is isomorphic to $\pgraphan\cup \pgraphbn\cup \pgraphmn_1\cup \pgraphmn_2\cup \pgraphmn_3$ -- noting the graphs' shared vertices $a_1$, $a_3$, $b_1$ and $b_2$, with all other vertices taken to be distinct --  then we are \textit{assured} that there is a construction, $\cpairiw$, in $\ispacen''$ with foundation token-sequence $[t,s]$ that embeds into $\ppair$. That is, $t$ (e.g. $1+2$) is represented by $s$ (e.g. $\inlinecrc \dotspace \inlinecrc \dotspace \inlinecrc$). In other words, if
\begin{enumerate}
\item[] constructions $\cpair$ and $\cpairhs$ match $\ppaira$ and $\ppairb$ and there exists a set of constructions, $\setofconstructionsirse=\{\cpairiwp{1},\cpairiwp{2},\cpairiwp{3}\}$, drawn from $\ispacen''$ that, respectively, embed into $\setofpatternstc=\{\ppairp{1},\ppairp{2},\ppairp{3}\}$ such that $\cgraphn\cup \cgraphhn\cup \cgraphin_1\cup \cgraphin_2\cup\cgraphin_3$ is isomorphic to $\pgraphan\cup \pgraphbn\cup \pgraphmn_1 \cup \pgraphmn_2 \cup\pgraphmn_3$
\end{enumerate}
then
\begin{enumerate}
\item[] given the trivial patterns, $(\pgraphan,a)$ and $(\pgraphbn,b)$, and the matching trivial constructions, $(\cgraphn',t)$ and $(\cgraphhn',s)$, it is necessary that there exists a construction, $\cpairiw$, in $\ispacen''$ that embeds into $\ppair$ and has foundation token-sequence $[t,s]$.
\end{enumerate}
In addition, if we know a priori that the aforementioned relations satisfy the conditional statement above, $1$ is represented by $\inlinecrc$, and $2$ is represented by $\inlinecrc \dotspace \inlinecrc$, then, given functional constructors and system-specific knowledge about how they take their inputs to produce their outputs, it is possible to \textit{build} $(\cgraphhn,\inlinecrc \dotspace \inlinecrc \dotspace \inlinecrc)$: we know the foundation token-sequence is $[\inlinecrc, \inlinecrc \dotspace \inlinecrc]$ and we know that the construction matches $\ppairb$, so we can infer the remaining tokens in $(\cgraphhn,\inlinecrc \dotspace \inlinecrc \dotspace \inlinecrc)$.
\end{example}

\begin{example}[Improving Cognitive Effectiveness]\label{ex:stme:linearDiagramsConstructRelations}
Consider the system, $\rsystemn_{L}$, of \textsc{Linear Diagrams}\footnote{Linear diagrams were introduced by Leibniz in 1686~\cite{couturat:oefiedl} and are similar to parallel bargrams~\cite{wittenburg:pbfcbieac}, double decker plots~\cite{hofmann:varwimp} and UpSet~\cite{lex:uvois}. They are known to have cognitive advantages over Euler diagrams when representing sets~\cite{chapman:vsaecodt,gottfried:acsolarbd}.} that represent relationships between sets, which includes the following tokens, $l$ and $l'$:
\begin{center}
\begin{minipage}[c]{0.3\columnwidth}
 \begin{center}
  \linearone\\
    $l$
 \end{center}
\end{minipage}
 \qquad
\begin{minipage}[c]{0.3\columnwidth}
 \begin{center}
  \lineartwo\\
    $l'$
 \end{center}
\end{minipage}
\end{center}

Both diagrams express that all lions are cats (in each case, the `Lions' line shares all of its $x$-coordinates with the `Cats' line) and no dog is also a cat (the `Dogs' line shares no $x$-coordinate with the `Cats' line). However, the diagram on the right has fewer line segments representing the sets Cats, Lions and Dogs: $\mathit{l}'$ uses three line segments, whereas $\mathit{l}$ uses four; the Cats line is broken into two parts. It is known that possessing more line breaks is cognitively less effective for extracting some kinds of information from linear diagrams~\cite{rodgers:vswld}. This relationship can be encoded in \begin{samepage}$\ispacen''$:
\begin{center}
  \begin{tikzpicture}[construction]
	\node[termIrep] (t5) {$\top$};
	\node[constructorIS = {\texttt{lessEffective}}, below = of t5,yshift=0.3cm,xshift = 0.0cm] (u) {};
	\node[termrep,below left = of u, yshift=1.2cm, xshift=-0.2cm] (t4) {\scalebox{0.5}{\linearone}};
	\node[termrep,below right = of u,yshift=1.2cm,xshift=0.2cm] (r5) {\scalebox{0.5}{\lineartwo}};

	\path[->]
	(u) edge (t5)
	(t4) edge[bend right = -5] node[index label]{1} (u)
    (r5) edge[bend right = 5] node[index label]{2} (u);
	\end{tikzpicture}
\end{center}\end{samepage}
In the context of structure transfer, the goal may be to take a construction, $(\cgraphn,\mathit{l})$, and find a construction, $(\cgraphhn,\mathit{l}'')$ such that $\mathit{l}''$ is cognitively more effective than $l$; one possible outcome of such a transfer would be a construction of $\mathit{l}'$ in particular. Given this goal, a construction is sought where $\mathit{l}$ and $\mathit{l}''$ are instantiations of the vertices $a$ and $b$, respectively, of the following pattern, $\ppair$,  for $\ispacen''$, where $\irencodingn=(\rsystemn_{\mathit{LD}}, \rsystemn_{\mathit{LD}},\ispacen'')$\footnote{For an intra-system structural transformation (i.e. when our inter-representational-system encoding is $(\rsystemn,\rsystemn, \ispacen'')$ and we want to transform a construction in $\rsystemn$ into another construction in $\rsystemn$) it is reasonable to take $\ispacen''=\ispacen$, in which case $\ipispacen=\ispacen$.}:
\begin{center}
  \begin{tikzpicture}[construction,yscale=0.9]
	\node[typeIW={\typetrue}] (t5) {$v$};
	\node[constructorIS = {\texttt{lessEffective}}, below = 0.4cm of t5] (u) {};
	\node[typeS={\texttt{linearDiagram}},below left = of u, yshift=1cm, xshift=-0.5cm] (t4) {$\acolour{a}$};
	\node[typeS={\texttt{linearDiagram}},below right = of u,yshift=1.1cm,xshift=0.5cm] (r5) {$\bcolour{b}$};

	\path[->]
	(u) edge (t5)
	(t4) edge[bend right = -5] node[index label]{1} (u)
    (r5) edge[bend right = 5] node[index label]{2} (u);
	\end{tikzpicture}
\end{center}
In a practical setting, one could compute the number of set-line segments in each of $\mathit{l}$ and the construct, $\mathit{l}''$, of a \textit{potential} structural transformation. The number of line segments possessed by each linear diagram can be encoded using $\ispacen$, the identification space for $\rsystemn_{\mathit{LD}}$, by exploiting a \texttt{numLineSegements} constructor. In turn, one linear diagram possessing more line segments than another can be encoded by constructions in $\ispacen''$ that embed into $\ppaird$:
\begin{center}
  \begin{tikzpicture}[construction,yscale=0.9]
	\node[typeIW={\typetrue}] (t5) {$v'$};
	\node[constructorINE = {\texttt{greaterThan}}, below = 0.45cm of t5] (u) {};
	\node[typeNW={\texttt{numeral}},below left = 0.1cm and 1cm of u] (t4) {};
	\node[typeNE={\texttt{numeral}},below right = 0.1cm and 1cm of u] (r5) {};
    \node[constructorINW = {\texttt{numLineSegments}}, below = 0.45cm of t4] (u1) {};
	\node[typeS={\texttt{linearDiagram}},below left = 0.2cm and 0.6cm of u1] (t41) {$\acolour{a}$};
    \node[constructorINE = {\texttt{numLineSegments}}, below = 0.45cm of r5] (u2) {};
	\node[typeS={\texttt{linearDiagram}},below right = 0.2cm and 0.6cm of u2] (t42) {$\bcolour{b}$};
	\path[->]
	(u) edge (t5)
	(t4) edge[bend right = -10] node[index label]{1} (u)
    (r5) edge[bend right = 10] node[index label]{2} (u)
    (u1) edge[bend right = 0] (t4)
	(t41) edge[bend right = -10] node[index label]{1} (u1)
    (u2) edge[bend right = 00] (r5)
	(t42)[bend right = 10] edge node[index label]{1} (u2);
	\end{tikzpicture}
\end{center}

What we have witnessed so far in this example is how to encode the desire to transform one linear diagram into another in such a way that cognitive effectiveness is improved. However, the encoding used does not ensure that the resulting linear diagram, $l''$, represents the same information as $l$. To ensure that cognitive effectiveness is improved \textit{and} semantics are preserved, a more complex graph comprising two patterns, $\ppairp{1}$ and $\ppairp{2}$, can be employed for the encoding of the required relations between constructs:
%
%\begin{center}
%  \begin{tikzpicture}[construction]
%	%\node[typeIW={\typetrue}] (t5) {$v''$};
%	%\node[constructorIS = {\constructorAnd}, below = 0.4 of t5] (u) {};
%	\node[typeINW={\typetrue}] (t4) {};
%	\node[typeINE={\typetrue},right = of u,xshift=2cm] (r5) {};
%    %
%    \node[constructorINW = {\texttt{lessEffective}}, below = 0.45cm of t4] (u1) {};
%	\node[typeS={\texttt{linearDiagram}},below  = of u1, yshift=0.5cm, xshift=0cm] (t411) {$\acolour{a}$};
%    %
%    \node[constructorINE = {\texttt{representedBy}}, below = 0.45cm of r5] (u2) {};
%	\node[typeS={\texttt{linearDiagram}},below  = of u2, yshift=0.5cm, xshift=0cm] (t421) {$\bcolour{b}$};
%	\path[->]
%	%(u) edge (t5)
%	%(t4) edge[bend right = -10] node[index label]{1} (u)
%    %(r5) edge[bend right = 10] node[index label]{2} (u)
%    %
%    (u1) edge (t4)
%	(t411) edge node[index label]{1} (u1)
%    (t421) edge[bend right = 10]  node[index label]{2} (u1)
%    (u2) edge (r5)
%	(t421) edge node[index label]{2} (u2)
%    (t411) edge[bend right = 10] node[index label]{1} (u2);
%	\end{tikzpicture}
%\end{center}
\begin{center}
	\begin{tikzpicture}[construction]
	%\node[typeIW={\typetrue}] (t5) {$v''$};
	%\node[constructorIS = {\constructorAnd}, below = 0.4 of t5] (u) {};
	\node[typeINW={\typetrue}] (t4) {$v_1$};
	\node[typeINE={\typetrue},right = 2cm of t4] (r5) {$v_2$};
	\node[constructorINW = {\texttt{lessEffective}}, below = 0.45cm of t4] (u1) {};
	\node[typeS={\texttt{linearDiagram}},below = 0.6cm of u1] (t411) {$\acolour{a}$};
	\node[constructorINE = {\texttt{representedBy}}, below = 0.45cm of r5] (u2) {};
	\node[typeS={\texttt{linearDiagram}},below = 0.6cm of u2] (t421) {$\bcolour{b}$};
	\path[->]
	%(u) edge (t5)
	%(t4) edge[bend right = -10] node[index label]{1} (u)
	%(r5) edge[bend right = 10] node[index label]{2} (u)
	%
	(u1) edge (t4)
	(t411) edge[bend left = 10] node[index label]{1} (u1)
	(t421) edge[bend right = 10]  node[index label,pos=0.54]{2} (u1)
	(u2) edge (r5)
	(t421) edge[bend right = 10] node[index label]{2} (u2)
	(t411) edge[bend right = 10] node[index label,pos=0.47]{1} (u2);
	\coordinate[below left = of t411,yshift =0.4cm,xshift=0.2cm] (a1);
	\coordinate[above left = of t411,yshift =0.4cm,xshift=-0.8cm] (a2);
	\coordinate[above left = of t4,yshift =-0.4cm,xshift=0.3cm] (a3);
	\coordinate[above right = of t4,yshift =-0.4cm,xshift=-1.2cm] (a4);
	\coordinate[above right = of t421,xshift=-0.2cm,yshift=-0.9cm] (a5);
	\coordinate[below right = of t421,yshift =0.4cm,xshift=-0.2cm] (a6);
	\draw[rounded corners=7,thick, draw opacity = 0.30, text opacity = 0.6] (a1)--(a2)--node[yshift=0.2cm,xshift=-0.5cm]{$(\gamma_1,v_1)$}(a3)--(a4)--(a5)--(a6)--cycle;
	\coordinate[below left = of t411,yshift =0.3cm,xshift=0.2cm] (b1);
	\coordinate[above left = of t411,yshift =-0.9cm,xshift =0.2cm] (b2);
	\coordinate[above left = of r5,yshift =-0.4cm,xshift=1.1cm] (b3);
	\coordinate[above right = of r5,yshift =-0.4cm,xshift=-0.2cm] (b4);
	\coordinate[above right = of t421,xshift=0.8cm,yshift=0.4cm] (b5);
	\coordinate[below right = of t421,yshift =0.3cm,xshift=-0.2cm] (b6);
	\draw[rounded corners=7,thick, draw opacity = 0.30, text opacity = 0.6] (b1)--(b2)--(b3)--(b4)--node[yshift=0.2cm,xshift=0.5cm]{$(\gamma_2,v_2)$}(b5)--(b6)--cycle;
	\end{tikzpicture}
\end{center}
Given a set comprising two constructions, $\setofconstructionsirse=\{\cpairiwp{1},\cpairiwp{2}\}$, in $\ispacen''$ that embed into $\setofpatternstc=\{\ppairp{1},\ppairp{2}\}$, each with foundation token-sequence $[l,l'']$, the linear diagrams $l$ and $l''$ are related by the encoded construct relations: $l$ is less effective and semantically equivalent to $l''$.

In practice, however, it is unlikely that we will have access to a fully defined $\ipispacen$ construction space. Thus, it is important to consider what information can be exploited to deduce relations that, theoretically, $\ipispacen$ encodes. In the case of semantic equivalence, we can exploit constructions for this purpose. Notably, each linear diagram comprises blocks of line segments, separated by grey vertical lines; in our example, $l$ and $l'$ each comprise three blocks. Swapping the order of these blocks does not alter the semantics. Thus, given a pair of constructions, $\cpair$ and $\cpairhs$, that match the isomorphic patterns, $\ppaira$ and $\ppairb$,
\begin{center}
  \begin{tikzpicture}[construction,yscale=0.9]
	\node[typeW={\texttt{linearDiagram}}] (t5) {$a$};
	\node[constructorGNE = {\texttt{appendBlock}}, below = 0.45cm of t5] (u) {};
	\node[typeNW={\texttt{linearDiagram}},below left = of u, yshift=1cm, xshift=-0.2cm] (t4) {};
	\node[typeS={\texttt{block}},below right = of u,yshift=0.6cm,xshift=0.0cm] (r5) {$\acolour{a_3}$};
    \node[constructorGNW = {\texttt{appendBlock}}, below = 0.45cm of t4] (u1) {};
	\node[typeSE={\texttt{linearDiagram}},below left = 0.3cm and 1cm of u1] (t411) {$\acolour{a_1}$};
    \node[typeSE={\texttt{block}},below  right = 0.3cm and 1cm of u1] (t421) {$\acolour{a_2}$};
	\path[->]
	(u) edge (t5)
	(t4) edge[bend right = -10] node[index label]{1} (u)
    (r5) edge[bend right = 10] node[index label]{2} (u)
    (u1) edge (t4)
	(t411) edge[bend right = -10] node[index label]{1} (u1)
    (t421) edge[bend right = 10]  node[index label]{2} (u1);
	\end{tikzpicture}
\quad
\begin{tikzpicture}
\node at (0, 2)   (a) {\LARGE$\cong$};
\node at (0, 0)   () {};
\end{tikzpicture}
\quad
\begin{tikzpicture}[construction,yscale=0.9]
	\node[typeW={\texttt{linearDiagram}}] (t5) {$b$};
	\node[constructorGNE = {\texttt{appendBlock}}, below = 0.45cm of t5] (u) {};
	\node[typeNW={\texttt{linearDiagram}},below left = of u, yshift=1cm, xshift=-0.2cm] (t4) {};
	\node[typeS={\texttt{block}},below right = of u,yshift=0.6cm,xshift=0.0cm] (r5) {$\acolour{a_2}$};
    \node[constructorGNW = {\texttt{appendBlock}}, below = 0.45cm of t4] (u1) {};
	\node[typeSE={\texttt{linearDiagram}},below left = 0.3cm and 1cm of u1] (t411) {$\acolour{a_1}$};
    \node[typeSE={\texttt{block}},below right = 0.3cm and 1cm of u1] (t421) {$\acolour{a_3}$};
	\path[->]
	(u) edge (t5)
	(t4) edge[bend right = -10] node[index label]{1} (u)
    (r5) edge[bend right = 10] node[index label]{2} (u)
    (u1) edge (t4)
	(t411) edge[bend right = -10] node[index label]{1} (u1)
    (t421) edge[bend right = 10]  node[index label]{2} (u1);
	\end{tikzpicture}
\end{center}
if $\foundationsto{\cgraphn,t}=[l,x,y]$ and $\foundationsto{\cgraphhn,s}=[l,y,x]$ (i.e. the order of the blocks $x$ and $y$ is swapped) then we can infer that $\ipispacen$ encodes the fact that $t$ is represented by (i.e. is semantically equivalent to) $s$. This `swapped order of blocks' transformation is encoded using the triple $\tRelSpecn=(\ppaira, \ppairb, \emptyset)$. That is, the desired relations between $\cpair$ and $\cpairhs$  are entirely captured by the two patterns in $\tRelSpecn$. However, if we also require that $t$ is less effective than $s$ then even if $\cpair$ and $\cpairhs$ match $\ppaira$ and $\ppairb$ in $\tRelSpecn$, we could not \textit{infer} that $s$ and $t$ are in the required relationship: swapping the order of blocks does not guarantee a reduction in the number of line segments used.
\end{example}

\begin{example}[Alternative Theorems and Proofs]
In \textsc{Propositional Logic}, there are many ways to prove a given theorem from a particular set of axioms. One application of structure transfer is to find alternative proofs, such as to substitute a rule, or set of rules, with a different set of rules. A specific example is given here, where the two constructions have (essentially) the same foundation sequence and same construct, but are different proofs in the entailment space of the \textsc{Propositional Logic} representational system, $\rsystemn_{P}$:
\begin{center}
\adjustbox{}{%
  \begin{tikzpicture}[construction,yscale=0.9]\small
	\node[termrep] (r) {$R$};
	\node[constructorENE = {\texttt{disSyllogism}}, below = 0.45cm of r] (u) {};
	\node[termrep,below left = 0.2cm and 0.6cm of u] (dis) {$\neg P\vee R$};
	\node[termrep,below right = 0.2cm and 0.6cm of u] (negneg) {$\neg \neg P$};
    \node[constructorENW ={\texttt{condDisjuntion}}, below = 0.45cm of dis] (ul) {};
	\node[termrep,below left = 0.2cm and 0.6cm of ul] (imp) {$P\Rightarrow R$};
	\node[constructorENE = {\texttt{doubNegation}}, below = 0.45cm of negneg] (ur) {};
	\node[termrep,below right = 0.2cm and 0.6cm of ur] (p) {$P$};
	
	\path[->]
	(u) edge (r)
	(dis) edge[bend right = -10] node[index label]{1} (u)
    (negneg) edge[bend right = 10] node[index label]{2} (u)
    (ul) edge[bend right = 0] (dis)
	(imp) edge[bend right = -10] node[index label]{1} (ul)
    (ur) edge[bend right = 00] (negneg)
	(p)[bend right = 10] edge node[index label]{1} (ur);
	\end{tikzpicture}}
\hspace{1cm}
\adjustbox{raise = 0.6cm}{%
\begin{tikzpicture}[construction,yscale=0.9]\small
	\node[termrep] (r) {$R$};
	\node[constructorENE = {\texttt{modusPonens}}, below = 0.45cm of r] (u) {};
	\node[termrep,below left = 0.2cm and 0.6cm of u] (dis) {$P\Rightarrow R$};
	\node[termrep,below right = 0.2cm and 0.8cm of u] (negneg) {$P$};	
	\path[->]
	(u) edge (r)
	(dis) edge[bend right = -10] node[index label]{1} (u)
    (negneg) edge[bend right = 10] node[index label]{2} (u);
	\end{tikzpicture}}
 %end of scale box
\end{center}

It is evident that, irrespective of whether individual tokens, such as $P \Rightarrow R$, are the same in the two constructions, their foundation type-sequences are \textit{identical}: $[\texttt{P}\Rightarrow \texttt{R},\texttt{P}]$. Moreover, due to the inference rules used, the constructs are both of type $\texttt{R}$. The fact that the foundation type-sequences are the same supports the application of this structural transformation: whenever the foundation token-sequences give rise to the same foundation type-sequences and the inference rules are used as in the proofs above,{\pagebreak} we can replace constructions matching the pattern, $\ppaira$, below left, with constructions matching the pattern, $\ppairb$, below \begin{samepage}right:
\begin{center}
\adjustbox{}{%
  \begin{tikzpicture}[construction,yscale=0.9]\small
	\node[typeW={\texttt{prop}}] (r) {$\acolour{a}$};
	\node[constructorENE = {\texttt{disSyllogism}}, below = 0.45cm of r] (u) {};
	\node[typeNW={\texttt{prop}},below left = 0.2cm and 0.6cm of u] (dis) {};
	\node[typeNE={\texttt{prop}},below right = 0.2cm and 0.6cm of u] (negneg) {};
    \node[constructorENW ={\texttt{condDisjuntion}}, below = 0.45cm of dis] (ul) {};
	\node[typeW={\texttt{prop}},below left = 0.2cm and 0.6cm of ul] (imp) {$\acolour{a_1}$};
	\node[constructorENE = {\texttt{doubNegation}}, below = 0.45cm of negneg] (ur) {};
	\node[typeE={\texttt{prop}},below right = 0.2cm and 0.6cm of ur] (p) {$\acolour{a_2}$};
	
	\path[->]
	(u) edge (r)
	(dis) edge[bend right = -10] node[index label]{1} (u)
    (negneg) edge[bend right = 10] node[index label]{2} (u)
    (ul) edge[bend right = 0] (dis)
	(imp) edge[bend right = -10] node[index label]{1} (ul)
    (ur) edge[bend right = 00] (negneg)
	(p)[bend right = 10] edge node[index label]{1} (ur);
	\end{tikzpicture}}
\hspace{1cm}
%\begin{minipage}{0.1\textwidth}\centering
%	\vspace{-1.5cm}\longrightarrowX{\Phi}
%\end{minipage}
%
\adjustbox{raise = 0.6cm}{%
\begin{tikzpicture}[construction,yscale=0.9]\small
	\node[typeW={\texttt{prop}}] (r) {$\bcolour{b}$};
	\node[constructorENE = {\texttt{modusPonens}}, below = 0.45cm of r] (u) {};
	\node[typeW={\texttt{prop}},below left = 0.2cm and 0.6cm of u] (dis) {$\bcolour{b_1}$};
	\node[typeE={\texttt{prop}},below right = 0.2cm and 0.6cm of u] (negneg) {$\bcolour{b_2}$};	
	\path[->]
	(u) edge (r)
	(dis) edge[bend right = -10] node[index label]{1} (u)
    (negneg) edge[bend right = 10] node[index label]{2} (u);
	\end{tikzpicture}}
\end{center}\end{samepage}
Permissible rule substitutions, such as that above, can be encoded by a triple
\begin{displaymath}
\tRelSpecn=(\ppaira,\ppairb,\{\ppairp{1},\ppairp{2}\}),
\end{displaymath}
where $\ppaira$ and $\ppairb$ are as just given. The patterns $\ppairp{1}$ and $\ppairp{2}$ encode the desired relations between the foundation token-sequences:
\begin{center}
	\begin{tikzpicture}[construction,yscale=0.9]\small
	%\node[typeIW={\typetrue}] (m) {$v$};
	%\node[constructorIS = {\texttt{and}}, below = 0.4cm of r] (u) {};
	\node[typeINW={\typetrue}] (b1) {$v_1$};
	\node[typeINE={\typetrue}, right = 3.6cm of b1] (b2) {$v_2$};
	\node[constructorINW = {\texttt{sameAssignedType}}, below = 0.45cm of b1] (t1) {};
	\node[typeW={\texttt{prop}},below left = 0.25cm and 0.5cm of t1] (t1l) {$\acolour{a_1}$};
	\node[typeE={\texttt{prop}},below right = 0.2cm and 0.5cm of t1] (t1r) {$\bcolour{b_1}$};
	\node[constructorINE = {\texttt{sameAssignedType}}, below = 0.45cm of b2] (t2) {};
	\node[typeW={\texttt{prop}},below left = 0.25cm and 0.5cm of t2] (t2l) {$\acolour{a_2}$};
	\node[typeE={\texttt{prop}},below right = 0.2cm and 0.5cm of t2] (t2r) {$\bcolour{b_2}$};
	\path[->]
	%(u) edge (m)
	%(b1) edge[bend right = -10] node[index label]{1} (u)
	%(b2) edge[bend right = 10] node[index label]{2} (u)
	(t1) edge[bend right = 0] (b1)
	(t1l) edge[bend right = -10] node[index label]{1} (t1)
	(t1r) edge[bend right = 10] node[index label]{2} (t1)
	(t2) edge[bend right =0] (b2)
	(t2l) edge[bend right = -10] node[index label]{1} (t2)
	(t2r) edge[bend right = 10] node[index label]{2} (t2)
	;
	\end{tikzpicture}
\end{center}
%\begin{center}
%\begin{tikzpicture}[construction,yscale=0.9]\small
%	%\node[typeIW={\typetrue}] (m) {$v$};
%	%\node[constructorIS = {\texttt{and}}, below = 0.4cm of r] (u) {};
%	\node[typeINW={\typetrue}] (b1) {};
%	\node[typeINE={\typetrue}, right = 3.6cm of u] (b2) {};
%    %
%    \node[constructorINW = {\texttt{sameAssignedType}}, below = 0.45cm of b1] (t1) {};
%	\node[typeW={\texttt{type}},below left = 0.25cm and 0.5cm of t1] (t1l) {$\acolour{a_1}$};
%	\node[typeE={\texttt{type}},below right = 0.2cm and 0.5cm of t1] (t1r) {$\bcolour{b_1}$};
%    %
%    %
%    \node[constructorINE = {\texttt{sameAssignedType}}, below = 0.45cm of b2] (t2) {};
%	\node[typeW={\texttt{type}},below left = 0.25cm and 0.5cm of t2] (t2l) {$\acolour{a_2}$};
%	\node[typeE={\texttt{type}},below right = 0.2cm and 0.5cm of t2] (t2r) {$\bcolour{b_2}$};
%    %
%    %%
%	\path[->]
%	%(u) edge (m)
%	%(b1) edge[bend right = -10] node[index label]{1} (u)
%    %(b2) edge[bend right = 10] node[index label]{2} (u)
%    (t1) edge[bend right = 0] (b1)
%    (t1l) edge[bend right = -10] node[index label]{1} (t1)
%    (t1r) edge[bend right = 10] node[index label]{2} (t1)
%    (t2) edge[bend right =0] (b2)
%    (t2l) edge[bend right = -10] node[index label]{1} (t2)
%    (t2r) edge[bend right = 10] node[index label]{2} (t2)
%    ;
%	\end{tikzpicture}
%\end{center}
%
In particular, given constructions $\cpair$ and $\cpairhs$ that
\begin{enumerate}
\item match $\ppaira$ and $\ppairb$, respectively, and
\item have foundation token-sequences $\foundationsto{\cgraphn,t}=[t_1,t_2]$ and $\foundationsto{\cgraphhn,s}=[s_1,s_2]$, so $t_1$, $t_2$, $s_1$ and $s_2$ `instantiate' $a_1$, $a_2$, $b_1$ and $b_2$,
\end{enumerate}
if there exists a pair of constructions, $\cpairiwp{1}$ and $\cpairiwp{2}$, in $\ispacen''$ such that
\begin{enumerate}
\item $\cpairiwp{1}$ and $\cpairiwp{2}$ embed into $\ppairp{1}$ and $\ppairp{2}$, respectively, yielding an isomorphism from $\cgraphin_1\cup \cgraphin_2$ and $\pgraphn_1\cup \pgraphn_2$, and
\item $\cgraphin_1$ and $\cgraphin_2$ include the vertices $t_1$ and $s_1$ and, resp., $t_2$ and $s_2$, and the embedding maps them to $a_1$, $b_2$, $s_2$ and $b_2$, resp.,
\end{enumerate}
then we can transform $\cpair$ into $\cpairhs$. Given any pair of such constructions, it is necessarily the case that their constructs are the same (strictly, have the same assigned type). That is, such a structural transformation provides an alternative proof of a theorem given a set of axioms, falling into Class 2 (see page \pageref{class2} for a description of Class 2 problems).
\end{example}

In summary, these examples illustrate the variety of ways in which finding structural transformations can be of practical use, even though we have only illustrated simple cases due to the space-hungry nature of including more complexity. They also show how one can exploit knowledge of some part of a structure graph (e.g. in example~\ref{ex:stme:linearDiagramsConstructRelations}, we may know that two blocks, $x$ and $y$, are swapped) to infer parts of other structure graphs (e.g. a linear diagram obtained by swapping the order of a pair of blocks is semantically equivalent to the original diagram). As we proceed, we use the first-order arithmetic and dot diagrams representational systems to further illuminate the formal definitions that underpin the theory of structural transformations.

{\pagebreak}
\subsection{First-Order Arithmetic and Dot Diagrams}\label{sec:st:foaAndDotDiags}
Using the \textsc{First-Order Arithmetic} system, $\rsystemn_{\FOA}$, and the \textsc{Dot Diagrams} system, $\rsystemn_{D}$, Sections~\ref{sec:st:tconsAndTransfer} and~\ref{sec:st:soundness}  will demonstrate how to exploit structure transfer to establish that $1+2+3=\frac{3(3+1)}{2}$ is true \begin{samepage}by:
\begin{enumerate}
\item[-] taking a construction of $1+2+3$, and transforming it into a construction of a dot diagram, in particular \hspace{-0.05cm}\scalebox{0.74}{\tikz[baseline=0.07cm]{\oT}}\hspace{0.15cm},
\item[-] taking a construction of $\frac{3(3+1)}{2}$, and transforming it into a construction of a dot diagram, in particular \hspace{-0.05cm}\scalebox{0.74}{\tikz[baseline=0.07cm]{\oT}}\hspace{0.15cm},
\item[-] using the fact that \hspace{-0.05cm}\scalebox{0.74}{\tikz[baseline=0.07cm]{\oT}}\hspace{0.15cm} is (essentially\footnote{Noting that these two dot diagrams could occupy different coordinates in the plane, one is a translation of the other: they are isomorphic.}) the same as \hspace{-0.05cm}\scalebox{0.74}{\tikz[baseline=0.07cm]{\oT}}\hspace{0.15cm}, deduce that $1+2+3=\frac{3(3+1)}{2}$ is true.
\end{enumerate}\end{samepage}

As introduced in Section~\ref{sec:representationalSystems:ME}, $\rsystemn_{\FOA}$ does not include tokens that support the construction of fractions. We, therefore, extend $\rsystemn_{\FOA}$ with a new type, \FOAdiv, whose tokens are horizontal lines that appear in fractions, such as the line in $\frac{3(3+1)}{2}$.  Also included are three new constructors: \FOAcfrac\ and \FOAcimplicitMult, in the grammatical space, and \FOAcisValid\ in the identification space. These constructors have the following signatures:
\begin{eqnarray*}
\spec(\FOAcfrac) & = & ([\FOAnumexp,\FOAdiv,\FOAnumexp],\FOAnumexp)\\
\spec(\FOAcimplicitMult) & = & ([\FOAnumexp,\FOAnumexp],\FOAnumexp)\\
\spec(\FOAcisValid) & = & ([\FOAform],\typebool).
\end{eqnarray*}
The running example seeks to establish which token should be assigned to the vertex $x$, given the illustrated constructions of $1+2+3$ and $\frac{3(3+1)}{2}$:
\begin{center}
\adjustbox{scale=0.8}{%
\begin{tikzpicture}[construction,yscale=0.92]\small
\foavp
\foavone
\foau
\foavtwo
\foavthree
\foavfour
\foauone
\foavfive
\foavsix
\foavseven
\foautwo
\foaveight
\foavnine
\foavten
\foaufour
\foaveleven
\foavtwelve
\foavthirteen
\foaufive
\foavfourteen
\foavfifteen
\foaueight
\foavsixteen
\foavseventeen
\foaveighteen
\foaufifteen

\foavnineteen
\foavtwenty
\foavtwentyone
\foauseventeen
%
%\foarecu
%\foarecuthree
%\foarecufour
%\foarecufive
%\foarecufifteen
%\foarecufifteen
\coordinate[above left = of v2, xshift = 0.8cm, yshift = -0.6cm](a1);
\coordinate[above right = of v2, xshift = -0.8cm, yshift = -0.6cm](a2);
\coordinate[right = of v7, xshift = -0.85cm, yshift = -0.3cm](a3);
\coordinate[below right = of v13, xshift = -0.8cm, yshift = 0.3cm](a4);
\coordinate[below left = of v11, xshift = 0.8cm, yshift = 0.3cm](a5);
\coordinate[left = of v5, xshift = 0.2cm, yshift = 0cm](a6);
\draw[rounded corners=9,thick, draw opacity = 0.2, text opacity = 0.6] (a1) -- (a2) -- (a3) -- (a4) -- (a5) -- (a6) -- cycle;
\coordinate[above left = of v4, xshift = 0.8cm, yshift = -0.6cm](b1);
\coordinate[above right = of v4,xshift = -0.8cm, yshift = -0.6cm](b2);
\coordinate[right = of v10, xshift = -0.8cm, yshift = 0.1cm](b3);
\coordinate[below right = of v21, xshift = -0.7cm, yshift = 0.2cm](b4);
\coordinate[below left = of v19, xshift = 0.6cm, yshift = 0.2cm](b5);
\coordinate[left = of v8, xshift = -0.1cm, yshift = -0.4cm](b6);
\draw[rounded corners=9,thick, draw opacity = 0.2, text opacity = 0.6] (b1) -- (b2) -- (b3) -- (b4) -- (b5) -- (b6) -- cycle;
\end{tikzpicture}}
\end{center}
Note that in the above graph, the labels associated with the tokens, such as $t_{6}$ assigned to the token 1 (bottom left) are \textit{meta-level names} that allow us to refer to specific vertices and are \textit{not} types.

{\pagebreak}
Regarding the \textsc{Dot Diagrams} system, we introduce a fragment of $\rsystemn_{D}$ that is sufficient for our purposes. These types are of \begin{samepage}interest:
\begin{enumerate}
\item[-]  \DDsingleDot:  dot diagrams that are formed from a single dot,
\item[-]  \DDhorDots:  dot diagrams that are formed from a horizontal row of dots,
\item[-]  \DDverDots:  dot diagrams that are formed from a vertical row of dots,
\item[-]  \DDstack:  dot diagrams that are formed from vertically-stacked horizontal rows of dots without gaps between the rows,
\item[-]  \DDrectangulation:  dot diagrams that are stacks in the shape of a rectangle,
\item[-]  \DDtriangulation:  dot diagrams that are stacks in the shape of a triangle, and
\item[-]  \DDdotDiagram:  any dot diagram.
\end{enumerate}\end{samepage}
The types satisfy the order specified by the directed graph below:
\begin{center}
\scalebox{0.8}{
	\begin{tikzpicture}[node distance = 0.2cm and 0.4cm]\footnotesize
	%\draw[rounded corners, black!30, thick] (-6,-4.5) rectangle node[xshift = 3.2cm,yshift = 2cm,black!50] {$\tsystem$} (5,0.5);
	\node[] (dotDiagram) {$\DDdotDiagram$};
	\node[below left = of dotDiagram,xshift=0.5cm] (stack) {$\DDstack$};
	\node[below left = of stack, xshift = 0.5cm] (rect) {$\DDrectangulation$};
	\node[below right = of stack, xshift = -0.5cm] (tri) {$\DDtriangulation$};
	\node[below left = of rect, xshift = 0.7cm] (hor) {$\DDhorDots$};
	\node[below right = of rect, xshift = -0.7cm] (ver) {$\DDverDots$};
	\node[below right = 0.3cm and -0.5cm of ver] (sing) {$\DDsingleDot$};
	%	\node[below left = of sing, xshift = 0.6cm] (sing1) {$\{(0,1)\}$};
	%	\node[below  = of sing, xshift = 0.0cm] (sing2) {$\{(0,2)\}$};
	%	\node[below right = of sing, xshift = -0.6cm] (sing3) {$\cdots$};
	\draw[->] (sing) edge (hor);
	\draw[->] (sing) edge (ver);
	\draw[->] (sing) edge (tri);
	\draw[->] (ver) edge (rect);
	\draw[->] (hor) edge (rect);
	\draw[->] (rect) edge (stack);
	\draw[->] (tri) edge (stack);
	\draw[->] (stack) edge (dotDiagram);
	\draw[draw opacity=0.2, rounded corners] (-4.65,-2.95) rectangle node[above,xshift=-2.2cm,yshift=1.1cm, text opacity = 0.4] {$(\types,\leq)$} (0.95,0.27) ;
	\end{tikzpicture}
}
\end{center}
There are also (infinitely many) minimal types, such as $\{(\texttt{0},\texttt{0}),(\texttt{0},\texttt{1})\}$, where the natural order relation is taken to hold with respect to the types given above. Constructors are specified as follows:
\begin{enumerate}
\item[-]  \DDcjoin: forms the union of two dot diagrams, and was seen in example~\ref{ex:STMErerepresentingTokenSE}. We define  $\spec(\DDcjoin) = ([\DDdotDiagram,\DDdotDiagram],\DDdotDiagram)$.
\item[-]  \DDcremove: removes any dots from the first input that also occur in the second input; given a configuration of \DDcremove\ with foundation token-sequence $\big[\hspace{-0.1cm}\scalebox{0.74}{\tikz[baseline=0.07cm]{\oT}}\hspace{0.17cm},\hspace{-0.1cm}\scalebox{0.74}{\tikz[baseline=-0.15cm]{\ooo}}\hspace{0.15cm}\big]$, where the dots in the second input are those that form the bottom row of the first input, the output token is $\hspace{-0.05cm}\scalebox{0.74}{\tikz[baseline=-0.05cm]{\oL}}\hspace{0.15cm}$. We define $\spec(\DDcremove) =  ([\DDdotDiagram,\DDdotDiagram],\DDdotDiagram)$. %%%
\item[-]  \DDcappend:  places the second input at the end of the first; given a configuration of \DDcappend\ with foundation token-sequence $[\hspace{-0.05cm}\scalebox{0.74}{\tikz[baseline=-0.1cm]{\crcD}}\hspace{0.1cm},\hspace{-0.1cm}\scalebox{0.74}{\tikz[baseline=-0.1cm]{\oo}}\hspace{0.15cm}]$, the output token is $\hspace{-0.05cm}\scalebox{0.74}{\tikz[baseline=-0.15cm]{\ooo}}\hspace{0.15cm}$. We define $\spec(\DDcappend)  = ([\DDhorDots,\DDhorDots],\DDhorDots)$. %%%
\item[-]  \DDcstackLeft: takes two stacks of dots and places the second stack on top of the first stack, with their left-most dots aligned; given a configuration of \DDcstackLeft\ with foundation token-sequence $[\hspace{-0.1cm}\scalebox{0.74}{\tikz[baseline=-0.15cm]{\ooo}}\hspace{0.17cm}, \hspace{-0.1cm}\scalebox{0.74}{\tikz[baseline=-0.05cm]{\oL}}\hspace{0.16cm}]$, the output token is $\hspace{-0.1cm}\scalebox{0.74}{\tikz[baseline=0.07cm]{\oT}}\hspace{0.15cm}$. We define $\spec(\DDcstackLeft)  =  ([\DDstack,\DDstack],\DDstack)$. %%%
\item[-]  \DDcrectangulate: makes a rectangle whose lefthand column derived from the first input and whose bottom row is derived from the second input. This constructor requires that the bottom dot of the first input and the bottom dot of the second input are the same; given a configuration of \DDcrectangulate\ with foundation token-sequence $\big[\hspace{-0.1cm}\scalebox{0.74}{\tikz[baseline=0.07cm]{\oV}}\hspace{0.2cm}, \hspace{-0.1cm}\scalebox{0.74}{\tikz[baseline=-0.1cm]{\oH}}\hspace{0.15cm}\big]$, the output token is $\hspace{-0.05cm}\scalebox{0.74}{\tikz[baseline=0.07cm]{\oR}}\hspace{0.15cm}$. We define $\spec(\DDcrectangulate)  = ([\DDverDots,\DDhorDots], \DDrectangulation)$.  %%%
\item[-]  \DDcrotateTri: produces a new triangulation from a right-angled triangulation, reflecting along the hypotenuse; given a configuration of \DDcrotateTri\ with foundation token-sequence $\big[\hspace{-0.1cm}\scalebox{0.74}{\tikz[baseline=0.07cm]{\oT}}\hspace{0.17cm}\big]$, the output token is $\hspace{-0.1cm}\scalebox{0.74}{\tikz[baseline=0.07cm]{\oTrot}}\hspace{0.15cm}$, which shares three dots with the input token (i.e. those on the hypotenuse). We define $\spec(\DDcrotateTri)  = ([\DDtriangulation], \DDtriangulation)$. %%% note this is reflect not rotate!!!!

\end{enumerate}
{\pagebreak}
Relating this back to the running example's goal, to identify the token of type \typebool\ assigned to $x$  in the $\rsystemn_{\FOA}$ construction given above, this \textsc{Dot Diagrams} construction is of particular interest, as will become clear as the discussion proceeds throughout \begin{samepage}Section~\ref{sec:structuralTransformations}:
\begin{center}
\adjustbox{scale=0.9}{%
\begin{tikzpicture}[construction,yscale=0.9]\small
    \dotsv
    \dotsvone
    \dotsvtwo
    \dotsu
    \dotsvthree
    \dotsvfour
    \dotsuone
	\dotsvsix
    \dotsvseven
    \dotsuthree
    \dotsvfive
    \dotsutwo
    \dotsvtwonew
    \dotsveight
    \dotsufive
    \dotsvnine
    \dotsvten
    \dotsueight

    \dotsveleven
    \dotsvtwelve
    \dotsuten
\end{tikzpicture}}
\end{center}\end{samepage}
The identification space, $\ispacen_{D}$, of $\rsystemn_{D}$ includes the meta-types \typebool, \typetrue, and \typefalse, where $\typetrue \oleq \typebool$ and $\typefalse \oleq \typebool$. Alongside logical constructors, such as \constructorAnd, further constructors occur in the identification space for $\rsystemn_{D}$:
\begin{enumerate}
\item[-] \DDcisTranslationOf: $\spec(\DDcisTranslationOf)=([\DDdotDiagram,\DDdotDiagram],\typebool)$ has output $\top$ whenever the first dot diagram is a translation, in the plane, of the second dot diagram.
\item[-] \DDcisJoinOf: $\spec(\DDcisJoinOf)=([\DDdotDiagram,\DDdotDiagram,\DDdotDiagram],\typebool)$ has output $\top$ whenever there exists a configuration of $\DDcjoin$ in the grammatical space with the first dot diagram as its output and the other two dot diagrams as its input.
\item[-] \DDcsameNumOfDots: $\spec(\DDcsameNumOfDots)=([\DDdotDiagram,\DDdotDiagram],\typebool)$ has output $\top$ whenever the dot diagrams contain the same number of dots.
\end{enumerate}
As is suggested by the constructors above, $\ispacen_{D}$ includes meta-types \typebool, \typetrue\ and \typefalse. Finally, constructors are needed for the inter-representational-system encoding, $\ispacen''$:
\begin{enumerate}
\item[-] \constructorRepresentedBy: $\spec(\constructorRepresentedBy)=([\FOAnumexp,\DDdotDiagram],\typebool)$ has output $\top$ whenever the numerical expression evaluates to the number of dots in the dot diagram
\end{enumerate}
alongside standard logical constructors. Equipped with these partial definitions of the type systems and constructors for $\rsystemn_{\FOA}$, $\rsystemn_{D}$, and $\ispacen''$, we have set the groundwork for using first-order arithmetic and dot diagrams as a running example as we introduce the theory of structural transformations.

\subsection{Transformation Constraints and Structural Transformations}\label{sec:st:tconsAndTransfer}
To reiterate, the theory of structural transformations relies on having an inter-system encoding, say $\irencodingn=\irencoding$, and uses information provided by $\ipispacen$ to transform a construction in $\rsystemn$ into a construction in $\rsystemn'$. Our initial focus is on a simple case of structural transformations, where the goal is to replace one construction, $\cpair$, in $\rsystemn$ with another, $\cpairhs$, in $\rsystemn'$ such that some relations hold between their tokens, captured by some set of constructions $\setofconstructionsirse$, called a \textit{property identifier}, found in $\ipispacen$.

\begin{definition}\label{defn:propertyIDandEn}
Let $\irencodingn=\irencoding$ be an inter-representational-system encoding with description $\descriptionirse$. A set of constructions, $\setofconstructionsirse$, in $\ipispacen$, is a \textit{property identifier} for $\irencodingn$. A set of patterns, $\setofpatternstc$, that is a subset of $\setofpatterns''$ is called a \textit{property enforcer} for $\irencodingn$.
\end{definition}

\begin{example}[Satisfying Transformation Constraints] \label{ex:FOAandDtokenRelSpec}
In first-order arithmetic, the construction, which we call $\cpairp{2}$, on the left shows one way of building the expression 1+2+3, which evaluates to 6 under the standard semantics of addition. This construction is in some sense equivalent to the construction $\cpairhsp{2}$, on the right, drawn from $\rsystemn_{D}$.
\begin{center}
\begin{minipage}[b]{0.4\textwidth}\centering
\begin{tikzpicture}[construction,yscale=0.9]
\foavtwo
\foavfive
\foavsix
\foavseven
\foautwo
\end{tikzpicture}
\end{minipage}
\qquad
\begin{minipage}[b]{0.4\textwidth}\centering
\begin{tikzpicture}[construction,yscale=0.9]
\dotsvone
\dotsvthree
\dotsvfour
\dotsuone
\end{tikzpicture}
\end{minipage}
\end{center}
This notion of equivalence arises since the six-ness of $1+2+3$ is embodied by six dots.  The following two-construction property identifier, $\setofconstructionsirse=\{(\cgraphin,\top),(\cgraphin',\top)\}$, for $\irencodingn$ captures the facts that $1+2$ is represented by three dots, here shown as a left-aligned stack, and $3$ is also represented by three dots, shown as a horizontal arrangement:
\begin{center}
\begin{tikzpicture}[construction,yscale=0.8]
%\node[termIrep] at (0,-0.3) (v) {$\top$};
%
%\node[constructorINE = {\constructorAnd}] at (0,-1.2) (u) {};
%
\node[termIrep] at (-2,-1.6) (v1) {$\top$};
\node[termIrep] at (2,-1.6) (v2) {$\top$};
\node[constructorINW = {\constructorRepresentedBy}] at (-2,-2.5) (u1) {};
\node[termW = {$t_{3}$}] at (-3,-3.5) (v3) {$1+2$};
\node[termE = {$s_3$},inner sep = 1pt] at (-1,-3.5) (v4) {$\oL$};
\node[constructorINE = {\constructorRepresentedBy}] at (2,-2.5) (u2) {};
\node[termW = {$t_{5}$}] at (1,-3.5) (v5) {$3$};
\node[termE = {$s_4$},inner sep = 1pt] at (3,-3.5) (v6) {$\ooo$};
    \path[->]
%(v1) edge[bend right = -5] node[ index label] {1} (u)
%	(v2) edge[bend right = 5] node[ index label] {2} (u)
%	(u) edge (v)
    %
    (v3) edge[bend right = -10] node[ index label] {1} (u1)
	(v4) edge[bend right = 10] node[ index label] {2} (u1)
	(u1) edge (v1)
    (v5) edge[bend right = -10] node[ index label] {1} (u2)
	(v6) edge[bend right = 10] node[ index label] {2} (u2)
	(u2) edge (v2);

\end{tikzpicture}
\end{center}
It is assumed that tokens which appear identical across the three graphs just given are indeed the same tokens (e.g. the occurrence of the token 1+2 in $\cpairp{2}$ is the same token 1+2 in the constructions of $\setofconstructionsirse$; both are token $t_3$), hence the identical names assigned to the vertices. In fact, whenever we have constructions like $\cpairp{2}$ and $\cpairhsp{2}$ where the relationship between their tokens exists as identified by $\setofconstructionsirse$, we can transform the former into the latter. This general view of the transformation can be specified using a triple, $\tRelSpecn_1=(\ppairap{2}, \ppairbp{2}, \setofpatternstc)$ called a \textit{transformation constraint}:
\begin{center}
\adjustbox{scale=0.9}{%
\begin{tikzpicture}[construction]\small
\foavtwop
\foavfivep
\foavsixp
\foavsevenp
\foautwo
\end{tikzpicture}
}
\hspace{0.5cm}
\adjustbox{scale=0.9}{%
\begin{tikzpicture}[construction]\small
\dotsvonep
\dotsvthreep
\dotsvfourp
\dotsuone
\end{tikzpicture}
}
\hspace{0.5cm}
\adjustbox{scale=0.9,raise=0.2cm}{%
\begin{tikzpicture}[construction,yscale=0.85,xscale=0.85]\small
%\node[typeIE = {\typetrue}] at (0,-0.3) (v) {$v_1$};
%
%\node[constructorINE = {\constructorAnd}] at (0,-1.2) (u) {};
%
\node[typeIW = {\typetrue}] at (-1.8,-1.6) (v1) {$v_1$};
\node[typeIE = {\typetrue}] at (1.3,-1.55) (v2) {$v_1'$};
\node[constructorINW = {\constructorRepresentedBy}] at (-1.8,-2.6) (u1) {};
\node[typeS = {\FOAnumexp}] at (-2.7,-3.5) (v3) {$\acolour{a_{3}}$};
\node[typeS = {\DDstack}] at (-0.9,-3.5) (v4) {$\bcolour{b_3}$};
\node[constructorINE = {\constructorRepresentedBy}] at (1.3,-2.6) (u2) {};
\node[typeS = {\FOAnumexp}] at (0.4,-3.5) (v5) {$\acolour{a_5}$};
\node[typeS = {\DDstack}] at (2.2,-3.5) (v6) {$\bcolour{b_4}$};
    \path[->]
%	(v1) edge[bend right = -10] node[ index label] {1} (u)
%	(v2) edge[bend right = 10] node[ index label] {2} (u)
%	(u) edge (v)
    %
    (v3) edge[bend right = -10] node[ index label] {1} (u1)
	(v4) edge[bend right = 10] node[ index label] {2} (u1)
	(u1) edge (v1)
    (v5) edge[bend right = -10] node[ index label] {1} (u2)
	(v6) edge[bend right = 10] node[ index label] {2} (u2)
	(u2) edge (v2);
\end{tikzpicture}
}
\end{center}
Here, $\setofpatternstc$ is a two-pattern property enforcer. In the case of $\cpairp{2}$, $\cpairhsp{2}$, $(\cgraphin,\top)$ and $(\cgraphin',\top)$, their embeddings into $\ppairap{2}$, $\ppairbp{2}$, $(\pgraphmn_1,v_1)$ and $(\pgraphn_1',v_1')$ induce an isomorphism from $\cgraphn_2\cup \cgraphhn_2\cup \cgraphin\cup \cgraphin'$ to $\pgraphan_2\cup \pgraphbn_2\cup \pgraphmn\cup \pgraphmn'$. It is these conditions -- matching the two patterns and property enforcer and the existence of an induced isomorphism -- that capture the fact that $\cpairp{2}$ can be transformed into $\cpairhsp{2}$. Of further note is that given any constructions, $\cpaird$ in $\rsystemn_{\FOA}$, $\cpairhsd$ in $\rsystemn_{D}$, along with a property identifier $\setofconstructionsirse$ for $\irencodingn$, that match the above four patterns in the obvious way it is assured that this particular construction also exists in $\ipispacen$:
\begin{center}
\begin{tikzpicture}[construction,yscale=0.8]
\node[termIE = {\typetrue}] at (0,0) (v) {$\top$};
\node[constructorINW = {\constructorRepresentedBy}] at (0,-0.9) (u) {};
\node[termW = {\FOAnumexp}] at (-1,-1.8) (v1) {$t'$};
\node[termE = {\DDstack}] at (1,-1.8) (v2) {$s'$};
    \path[->]
	(v1) edge[bend right = -10] node[ index label] {1} (u)
	(v2) edge[bend right = 10] node[ index label] {2} (u)
	(u) edge (v);
\end{tikzpicture}
\end{center}
That is, the evaluation of the numerical expression $t'$, as an integer, is necessarily the same as the number of dots in the diagram $s'$.
\end{example}

Definition~\ref{defn:st:tokenRelSpec} formalizes the notion of a \textit{transformation constraint}, which will be used later on to capture the desired relationship between $\cpair$ and $\cpairhs$.

\begin{definition}\label{defn:st:tokenRelSpec}
Let $\irencodingn=\irencoding$ be an inter-representational-system encoding with description $\descriptionirse$. A \textit{transformation constraint} for $\irencodingn$ is a triple,  $\tRelSpecn=(\ppaira,\ppairb,\setofpatternstc)$, where $\ppaira\in \setofpatterns(\descriptionn)$, $\ppairb\in \setofpatterns(\descriptionn')$, and $\setofpatternstc$ is a property enforcer for $\irencodingn$.
 \end{definition}

In order to formalise what it means for $(\cpair, \cpairhs, \setofconstructionsirse)$ (i.e. a pair of constructions together with a set of property identifiers) to \textit{satisfy} a transformation constraint -- and, more generally, what it means for one transformation constraint to satisfy another -- we first generalise the concept of \textit{an embedding of one pattern into another} to \textit{sets of patterns into another set}, which requires us to turn a set of patterns into a single graph.

\begin{definition}\label{defn:patternGraphOfPatterns}
Let $\setofpatterns$ be a set of patterns. The \textit{structure graph of $\setofpatterns$}, denoted $\patterngraphn(\setofpatterns)$, is defined to be $\patterngraphn(\setofpatterns)=\bigcup\{\pgraphn\colon \ppair\in \setofpatterns\}$.
\end{definition}

\begin{definition}\label{defn:respectfulEmbedding}
Let $\setofpatterns$ and $\setofpatterns'$ be two sets of patterns. We say that $\setofpatterns$ \textit{respectfully embeds} into $\setofpatterns'$ provided there exists an embedding, $f\colon \patterngraphn(\setofpatterns) \to \patterngraphn(\setofpatterns')$,
such that for all $\ppair\in \setofpatterns$, the restriction of $f$ to $\pgraphn$ gives rise to an embedding of $\ppair$ into some pattern, $\ppaird$, in $\setofpatterns'$. Such an embedding, $f$, is called a \textit{respectful embedding} of $\setofpatterns$ into $\setofpatterns'$. %We write $f\colon \setofpatternstc\to \setofpatternstc'$ when $f$ is a respectful embedding.
\end{definition}

\begin{definition}\label{defn:satisfies}\label{defn:st:RTransform}
Let $\tRelSpecn=(\ppaira,\ppairb,\setofpatternstc)$ and $\tRelSpecn'=(\ppairad,\ppairbd,\setofpatternstc')$ be transformation constraints. Then $\tRelSpecn$ \textit{satisfies} $\tRelSpecn'$, denoted $\tRelSpecn\models \tRelSpecn'$, provided there exists embeddings $f_\alpha\colon \ppaira\to \ppairad$ and $f_\beta\colon \ppairb\to \ppairbd$ and a respectful embedding $f_\setofpatternstc\colon \patterngraphn(\setofpatternstc)\to \patterngraphn(\setofpatternstc')$  such that
\begin{displaymath}
f_\alpha\cup f_\beta\cup f_\setofpatternstc\colon \pgraphan\cup \pgraphbn\cup \patterngraphn(\setofpatternstc) \to \pgraphan'\cup \pgraphbn' \cup \patterngraphn(\setofpatternstc')
\end{displaymath}
is an isomorphism.
\end{definition}

%The existence of the three embeddings in definition~\ref{defn:satisfies} ensures that $\cpairp{1}$, $\cpairp{2}$, and $\cpairp{3}$ match $\ppairp{1}$, $\ppairp{2}$, and $\ppairp{3}$.
Clearly, two different transformation constraints, $\tRelSpecn$ and $\tRelSpecn'$, can be satisfied by precisely the same triples. Definition~\ref{defn:equivalentTokenRelSpecs} embodies such a notion of equivalence. Lemma~\ref{lem:equivTokenRelSpecs} captures the fact that equivalent transformation constraints are satisfied by the same triples.

\begin{definition}\label{defn:equivalentTokenRelSpecs}
Let $\tRelSpecn=(\ppaira,\ppairb,\setofpatternstc)$ and $\tRelSpecn'=(\ppairad,\ppairbd,\setofpatternstc')$ be transformation constraints. Then $\tRelSpecn$ and $\tRelSpecn'$ are \textit{equivalent}, denoted $\tRelSpecn\cong \tRelSpecn'$, provided $\tRelSpecn\models \tRelSpecn'$ and $\tRelSpecn'\models \tRelSpecn$.
\end{definition}

It can trivially be shown that $\cong$ is an equivalence relation, exploiting function inverses and compositions.

\begin{lemma}\label{lem:equivTokenRelSpecs}
Let $\tRelSpecn=(\ppaira,\ppairb,\setofpatternstc)$ and $\tRelSpecn'=(\ppairad,\ppairbd,\setofpatternstc')$ be equivalent transformation constraints. Let $\tRelSpecn''=(\ppairadd,\ppairbdd,\setofpatternstc'')$ be a transformation constraint. Then $\tRelSpecn''$ satisfies $\tRelSpecn$ iff $\tRelSpecn''$ satisfies $\tRelSpecn'$.
\end{lemma}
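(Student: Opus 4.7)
The statement is essentially a well-definedness claim: satisfaction should be stable under replacing the right-hand side by an equivalent transformation constraint. The natural route is to derive it from a single underlying lemma, namely that $\models$ is transitive on transformation constraints. Granted transitivity, the result is immediate: if $\tRelSpecn \cong \tRelSpecn'$ then $\tRelSpecn \models \tRelSpecn'$ and $\tRelSpecn' \models \tRelSpecn$, so whenever $\tRelSpecn'' \models \tRelSpecn$ we chain with $\tRelSpecn \models \tRelSpecn'$ to get $\tRelSpecn'' \models \tRelSpecn'$, and symmetrically in the other direction.

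To prove transitivity of $\models$, I would unpack definition~\ref{defn:st:RTransform} on both sides and compose. Suppose $\tRelSpecn_1 \models \tRelSpecn_2$ and $\tRelSpecn_2 \models \tRelSpecn_3$, witnessed by triples of functions $(f_\alpha, f_\beta, f_{\setofpatternstc_1})$ and $(g_\alpha, g_\beta, g_{\setofpatternstc_2})$ whose unions $F = f_\alpha \cup f_\beta \cup f_{\setofpatternstc_1}$ and $G = g_\alpha \cup g_\beta \cup g_{\setofpatternstc_2}$ are isomorphisms between the respective unions of structure graphs. The candidate witness for $\tRelSpecn_1 \models \tRelSpecn_3$ is the triple $(g_\alpha \circ f_\alpha, g_\beta \circ f_\beta, g_{\setofpatternstc_2} \circ f_{\setofpatternstc_1})$, with associated map $G \circ F$. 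The three verifications needed are: (i) compositions of embeddings of patterns are embeddings of patterns (this follows because the label-matching, subtype-respecting and index-preserving conditions of definition~\ref{defn:embeddingPattern} are all closed under composition); (ii) the composition of two respectful embeddings is a respectful embedding (for each $\ppair$ in the first set of patterns, restrict $f_{\setofpatternstc_1}$ to $\pgraphn$ to land inside some pattern in the middle set, then restrict $g_{\setofpatternstc_2}$ to that pattern); and (iii) $G \circ F$ is itself an isomorphism between the two outer unions of structure graphs, which holds because compositions of isomorphisms are isomorphisms and the domains/codomains glue correctly by (i) and (ii).

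The main obstacle, and the point I would be most careful about, is verifying (ii), because a respectful embedding is defined on the single graph $\patterngraphn(\setofpatternstc)$ obtained by unioning patterns, yet the respectfulness condition is stated vertex-by-pattern in definition~\ref{defn:respectfulEmbedding}. The subtlety is that distinct patterns in $\setofpatternstc_1$ could share vertices or arrows in $\patterngraphn(\setofpatternstc_1)$, and $f_{\setofpatternstc_1}$ might send them into the \emph{same} pattern of $\setofpatternstc_2$ or into different ones; I would check that for each fixed $\ppair \in \setofpatternstc_1$, if $f_{\setofpatternstc_1}$ maps $\pgraphn$ into pattern $\ppaird \in \setofpatternstc_2$, then the further restriction of $g_{\setofpatternstc_2}$ to $\pgraphn'$ lands inside some pattern of $\setofpatternstc_3$, and that the resulting composed restriction is indeed an embedding of $\ppair$ into that pattern. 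Everything else follows routinely from the fact that $F$ and $G$ are isomorphisms on the full unions, so $G \circ F$ is too, and its components partition in the required way.

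Finally, I would deduce the lemma as a one-line corollary of transitivity. Given $\tRelSpecn'' \models \tRelSpecn$ and $\tRelSpecn \models \tRelSpecn'$, transitivity yields $\tRelSpecn'' \models \tRelSpecn'$; conversely, $\tRelSpecn'' \models \tRelSpecn'$ together with $\tRelSpecn' \models \tRelSpecn$ (from $\tRelSpecn \cong \tRelSpecn'$) yields $\tRelSpecn'' \models \tRelSpecn$. This also makes clear, as a side-product, that $\cong$ is an equivalence relation and that $\models$ descends to a well-defined relation on $\cong$-equivalence classes of transformation constraints.
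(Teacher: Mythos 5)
Your proposal is correct and is essentially the paper's own argument: the paper proves the lemma by composing the witnessing embeddings for $\tRelSpecn''\models\tRelSpecn$ with those for $\tRelSpecn\models\tRelSpecn'$, verifying exactly your points (i)--(iii) (composition of embeddings is an embedding via transitivity of $\leq$, composition of respectful embeddings is respectful, and the union of the compositions equals the composition of the unions, hence is an isomorphism). The only cosmetic difference is that you package the composition step as a general transitivity lemma for $\models$ and deduce the result as a corollary, whereas the paper carries out the same composition in situ for one direction and notes the other is symmetric.
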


The proof of lemma~\ref{lem:equivTokenRelSpecs} can be found in Appendix~\ref{sec:app:structure}, see lemma~\ref{lema:equivTokenRelSpecs}. The theory is now extended to show how structural transformations exploit transformation constraints and decompositions. We return to the running example, in order to exemplify the definition of a \textit{constraint assignment} -- which assigns transformation constraints to pairs of vertices in pattern decompositions -- and definition~\ref{defn:structuralTransformationConstruction} which formalises structural transformations.

\begin{example}[Constraint Assignments and Structural Transformations]\label{ex:FOAandDotsStructuralLinks}
This example will exploit five transformation constraints for $\rsystemn_{\FOA}$ and $\rsystemn_{D}$, and will also use constructors found in the identification space of the inter-representational-system encoding.{\pagebreak} The first transformation constraint that we define is needed for the first two structure transfers that our running example seeks to \begin{samepage}produce:
\begin{enumerate}
\item[-]  take a construction, $(\cgraphn,1+2+3)$, of token $1+2+3$ and produce a construction of a dot diagram that represents $1+2+3$ (i.e., a diagram with six dots).
\item[-] take a construction, $\big(\cgraphn',\frac{3(3+1)}{2}\big)$, of token $\frac{3(3+1)}{2}$ and produce a construction of a dot diagram that represents $\frac{3(3+1)}{2}$.
\end{enumerate}\end{samepage}
The requirement that the constructed numerical expression is \textit{represented by} the constructed dot diagram is embodied by $\tRelSpecn_0=((\pgraphan',a_2),(\pgraphbn',b_2),\{\pgraphn_0,v_0\})$, where the constructions --  which are trivial -- and property enforcer are as follows:
\begin{center}
\adjustbox{scale=\myscale,raise=0.15cm}{%
\begin{tikzpicture}[construction]
\node[typeE = {\FOAnumexp}] at (0,0) (v) {$\acolour{a_2}$};
\end{tikzpicture}
}
\hspace{2cm}
\adjustbox{scale=\myscale,raise=0.1cm}{%
\begin{tikzpicture}[construction]
\node[typeE = {\DDstack}] at (0,0) (v) {$\bcolour{b_2}$};
\end{tikzpicture}
}
\hspace{2cm}
\adjustbox{scale=\myscale}{%
\begin{tikzpicture}[construction,yscale=0.8]
\node[typeIE = {\typetrue}] at (0,0) (v) {$v_0$};
\node[constructorINE = {\constructorRepresentedBy}] at (0,-1) (u) {};
\node[typeW = {\FOAnumexp}] at (-1,-1.8) (v1) {$\acolour{a_2}$};
\node[typeE = {\DDstack}] at (1,-1.8) (v2) {$\bcolour{b_2}$};
    \path[->]
	(v1) edge[bend right = -10] node[ index label] {1} (u)
	(v2) edge[bend right = 10] node[ index label] {2} (u)
	(u) edge (v);
\end{tikzpicture}
}
\end{center}
For the remaining five transformation constraints, let us first consider a construction, $(\cgraphn,1+2+3)$, along with a pattern that describes it:
\begin{center}
\adjustbox{scale=\myscale}{
\begin{tikzpicture}[construction,yscale=0.9]
\foavtwo
\foavfive
\foavsix
\foavseven
\foautwo
\foaveleven
\foavtwelve
\foavthirteen
\foaufive
\end{tikzpicture}
\hspace{2cm}
\begin{tikzpicture}[construction,yscale=0.9]
\foavtwop
\foavfivep
\foavsixp
\foavsevenp
\foautwo
\foavelevenp
\foavtwelvep
\foavthirteenp
\foaufive
\end{tikzpicture}
}
\end{center}

From the above construction of $1+2+3$, our structural transformation will produce the following dot diagram construction, $\big(\cgraphhn,\hspace{-0.1cm}\scalebox{0.74}{\tikz[baseline=0.07cm]{\oT}}\hspace{0.15cm}\big)$, shown here with a describing pattern:
\begin{center}
\adjustbox{scale=\myscale}{
\begin{tikzpicture}[construction,yscale=0.9]
\dotsvone
\dotsvthree
\dotsvfour
\dotsvsix
\dotsvseven
\dotsuone
\dotsuthree
\end{tikzpicture}
\hspace{2.5cm}
\begin{tikzpicture}[construction,yscale=0.9]
\dotsvonep
\dotsvthreep
\dotsvfourp
\dotsvsixp
\dotsvsevenp
\dotsuone
\dotsuthree
\end{tikzpicture}
}
\end{center}
The transformation constraint $\tRelSpecn_1=(\ppairap{2},\ppairbp{2},\setofpatternstc)$, given in example~\ref{ex:FOAandDtokenRelSpec}, is sufficient to support this transformation, along with $\tRelSpecn_0$. The constraint $\tRelSpecn_1$ arises since the addition of numbers, captured by $\ppairap{2}$ using the \FOAcinfixop\ constructor, `aligns' with the \DDcstackLeft\ operation on dot diagrams; \FOAcinfixop\ aligns with another dot diagram constructor, \DDcappend, and we return to this shortly. First, we illustrate the process of structurally transforming $(\cgraphn,1+2+3)$ into $\big(\cgraphhn,\hspace{-0.1cm}\scalebox{0.74}{\tikz[baseline=0.07cm]{\oT}}\hspace{0.2cm}\big)$.{\pagebreak} The decompositions above give rise to the following decompositions, $\decompositionn$ (top left), $\decompositionn'$ (top right), $\pdecompositionn$ (bottom left) and $\pdecompositionn'$ \begin{samepage}(bottom right):
\begin{center}\label{ex:decompositionOfPatternsForLater}
\adjustbox{scale=0.9}{%
\begin{tikzpicture}[decomposition,yscale=0.9]
        \node[draw=darkgreen, rounded corners, label=180:{\Large$d_1$}] (dv1) at (0,-0.1) {\begin{tikzpicture}\foavtwo\end{tikzpicture}};
        \node[draw=darkblue, rounded corners, label=180:{\Large$d_2$}] (dv2) at (0,-3) {\begin{tikzpicture}\foavtwo \foavfive \foavsix \foavseven \foautwo \end{tikzpicture}};
        \node[draw=darkblue, rounded corners, label=180:{\Large$d_3$}] (dv3) at (-1.7,-7) {\begin{tikzpicture}\foavfive \foaveleven \foavtwelve \foavthirteen \foaufive\end{tikzpicture}};
        \node[draw=darkred, rounded corners, label=270:{\Large$d_4$}] (dv4) at (0.6,-7) {\begin{tikzpicture}\foavsix\end{tikzpicture}};
        \node[draw=darkred, rounded corners, label=270:{\Large$d_5$}] (dv5) at (1.7,-7) {\begin{tikzpicture}\foavseven\end{tikzpicture}};
	\path[->]
    ([xshift=0.6cm]dv2.north) edge[out = 90, in = -90] node[decomp arrow label] {1} ([xshift=0.1cm]dv1.south)
    (dv3) edge[out = 90, in = -120] node[decomp arrow label] {1} ([xshift = -1.1cm]dv2.south)
    (dv4) edge[out = 90, in = -95] node[decomp arrow label] {2} ([xshift = 0.2cm]dv2.south)
    (dv5) edge[out = 90, in = -80] node[decomp arrow label] {3} ([xshift = 1.4cm]dv2.south)
;
	\end{tikzpicture}
}\hspace{1.7cm}
\adjustbox{scale=0.9}{%
\begin{tikzpicture}[decomposition,yscale=0.9]
        \node[draw=darkgreen, rounded corners, label=180:{\Large$d_1'$}] (dv1) at (0,0) {\begin{tikzpicture}\dotsvone\end{tikzpicture}};
        \node[draw=darkblue, rounded corners, label=180:{\Large$d_2'$}] (dv2) at (0,-3) {\begin{tikzpicture}\dotsvone \dotsvthree \dotsvfour \dotsuone \end{tikzpicture}};
        \node[draw=darkblue, rounded corners, label=180:{\Large$d_3'$}] (dv3) at (-1.3,-7) {\begin{tikzpicture}\dotsvthree \dotsvsix \dotsvseven \dotsuthree\end{tikzpicture}};
        \node[draw=darkred, rounded corners, label=270:{\Large$d_4'$}] (dv4) at (1.2,-7) {\begin{tikzpicture}\dotsvfour\end{tikzpicture}};
	\path[->]
    ([xshift=0.1cm]dv2.north) edge[out = 90, in = -90] node[decomp arrow label] {1} ([xshift=0.1cm]dv1.south)
    (dv3) edge[out = 90, in = -110] node[decomp arrow label] {1} ([xshift=-0.9cm]dv2.south)
    (dv4) edge[out = 90, in = -80] node[decomp arrow label] {2} ([xshift=1.1cm]dv2.south)
;
	\end{tikzpicture}
}
\bigskip

\adjustbox{scale=0.9}{%
\begin{tikzpicture}[decomposition,yscale=0.9]
        \node[draw=darkgreen, rounded corners, label=180:{\Large$\delta_1$}] (dv1) at (0,-0.2) {\begin{tikzpicture}\foavtwop\end{tikzpicture}};
        \node[draw=darkblue, rounded corners, label=180:{\Large$\delta_2$}] (dv2) at (0,-3) {\begin{tikzpicture}\foavtwop \foavfivep \foavsixp \foavsevenp \foautwo \end{tikzpicture}};
        \node[draw=darkblue, rounded corners, label=180:{\Large$\delta_3$}] (dv3) at (-1.8,-7) {\begin{tikzpicture}\foavfivep \foavelevenp \foavtwelvep \foavthirteenp \foaufive\end{tikzpicture}};
        \node[draw=darkred, rounded corners, label=270:{\Large$\delta_4$}] (dv4) at (0.6,-7) {\begin{tikzpicture}\foavsixp\end{tikzpicture}};
        \node[draw=darkred, rounded corners, label=270:{\Large$\delta_5$}] (dv5) at (2,-7) {\begin{tikzpicture}\foavsevenp\end{tikzpicture}};
	\path[->]
    ([xshift=0.7cm]dv2.north) edge[out = 90, in = -90] node[decomp arrow label] {1} ([xshift=0.45cm]dv1.south)
    (dv3) edge[out = 90, in = -120] node[decomp arrow label] {1} ([xshift=-1.2cm]dv2.south)
    (dv4) edge[out = 90, in = -90] node[decomp arrow label] {2} ([xshift=0.4cm]dv2.south)
    (dv5) edge[out = 90, in = -70] node[decomp arrow label] {3} ([xshift=1.7cm]dv2.south)
;
	\end{tikzpicture}
}\hspace{1.7cm}
\adjustbox{scale=0.9}{%
\begin{tikzpicture}[decomposition,yscale=0.9]
        \node[draw=darkgreen, rounded corners, label=180:{\Large$\delta_1'$}] (dv1) at (0,-0.2) {\begin{tikzpicture}\dotsvonep\end{tikzpicture}};
        \node[draw=darkblue, rounded corners, label=180:{\Large$\delta_2'$}] (dv2) at (0,-3) {\begin{tikzpicture}\dotsvonep \dotsvthreep \dotsvfourp \dotsuone \end{tikzpicture}};
        \node[draw=darkblue, rounded corners, label=180:{\Large$\delta_3'$}] (dv3) at (-1.3,-7) {\begin{tikzpicture}\dotsvthreep \dotsvsixp \dotsvsevenp \dotsuthree\end{tikzpicture}};
        \node[draw=darkred, rounded corners, label=270:{\Large$\delta_4'$}] (dv4) at (1.2,-7) {\begin{tikzpicture}\dotsvfourp\end{tikzpicture}};
	\path[->]
    ([xshift=0.26cm]dv2.north) edge[out = 90, in = -90] node[decomp arrow label] {1} ([xshift=0.26cm]dv1.south)
    (dv3) edge[out = 90, in = -110] node[decomp arrow label] {1} ([xshift=-0.8cm]dv2.south)
    (dv4) edge[out = 90, in = -80] node[decomp arrow label] {2} ([xshift=1.2cm]dv2.south)
;
	\end{tikzpicture}\medskip
}
\end{center}\end{samepage}
Given $\setOfTokenRels=\{\tRelSpecn_0,\tRelSpecn_1\}$, we assign constraints to pairs of vertices drawn from $\pdecompositionn$ and $\pdecompositionn'$: $\delta$ and $\delta'$ \textit{can be} assigned $\tRelSpecn=(\ppaira, \ppairb, \setofpatternstc)$,  provided the patterns, $\ppairad$ and $\ppairbd$, that label $\delta$ and $\delta'$, respectively, match $\ppaira$ and $\ppairb$, that is $\ppairad\matches \ppaira$ and $\ppairbd\matches \ppairb$. This linking process allows us to choose a \textit{partial function}, $\vertexRelations\colon V(\pdecompositionn)\times V(\pdecompositionn')\to \setOfTokenRels$ where for all $\delta_i$ and $\delta_j'$ for which $\vertexRelations$ is defined:
\begin{displaymath}
\vertexRelations(\delta_i,\delta_j') \in \{(\ppaira,\ppairb,\setofpatternstc)\in \setOfTokenRels \colon  \labp(\delta_i)\matches \ppaira \wedge \labp'(\delta_j')\matches \ppairb\}.
\end{displaymath}
We \textit{choose} $\vertexRelations$ as follows:\label{ex:aSuitableL}
\begin{displaymath}
 \vertexRelations(\delta_1,\delta_1')  =  \tRelSpecn_0 \qquad
\vertexRelations(\delta_2,\delta_2')  =  \tRelSpecn_1 \qquad
\vertexRelations(\delta_3,\delta_3')  =  \tRelSpecn_1 \qquad
\vertexRelations(\delta_5,\delta_4') = \tRelSpecn_0
\end{displaymath}
with $\vertexRelations$ undefined for all other vertex pairs.{\pagebreak} The chosen mapping restricts the possible structural transformations, leading to an selection of the resulting construction. Of particular note is that, for the goal of this example we chose $\vertexRelations(\delta_1,\delta_1') = \tRelSpecn_0$ because $\tRelSpecn_0$ ensures that the constructed tokens are in the desired relationship; in a more complex example, it could have been that other choices -- beyond just being undefined -- were possible for $\vertexRelations(\delta_1,\delta_1')$. We see \begin{samepage}that:
\begin{enumerate}
\item[-] given the pair of constructions, $(\scalebox{0.74}{\tikz[baseline=-4.12cm]{\foavtwo}},1+2+3)$ and $\big(\scalebox{0.74}{\tikz[baseline=-2.03cm]{\dotsvone}}\hspace{0.02cm},\hspace{-0.1cm}\scalebox{0.7}{\tikz[baseline=0.07cm]{\oT}}\hspace{0.12cm}\big)$, that label $d_1$ and $d_1'$, there is a suitable construction in $\ispacen''$ such that the resulting triple satisfies $\tRelSpecn_0$; we defined $\vertexRelations(\delta_1,\delta_1')  =  \tRelSpecn_0$.
\item[-] the pair of constructions that label $d_2$ and $d_2'$ similarly `satisfy' $\tRelSpecn_1$; we defined $\vertexRelations(\delta_2,\delta_2')  =  \tRelSpecn_1$.
\item[-] the pair of constructions that label $d_3$ and $d_3'$ `satisfy' $\tRelSpecn_1$; we defined $\vertexRelations(\delta_3,\delta_3')  = \tRelSpecn_1$.
\end{enumerate}\end{samepage}
That is, the constraints imposed by the choice of $\vertexRelations$ are met and we have it that $\constructionofD{\decompositionn'}=\big(\cgraphhn, \hspace{-0.1cm}\scalebox{0.74}{\tikz[baseline=0.07cm]{\oT}}\hspace{0.2cm}\big)$ is a structural transformation of $\constructionofD{\decompositionn}=(\cgraphn,1+2+3)$.

We now give more examples of constraints, enlarging the set $\setOfTokenRels$, in order to produce the second structural transformation. In particular, we seek to transform the construction $\big(\cgraphn',\frac{3(3+1)}{2}\big)$, on the left, into the construction $\big(\cgraphhn',\hspace{-0.1cm}\scalebox{0.74}{\tikz[baseline=0.07cm]{\oT}}\hspace{0.2cm}\big)$, on the right:
\begin{center}
\adjustbox{scale=1}{%
\begin{tikzpicture}[construction,yscale=0.9]\small
\foavfour
\foaveight
\foavnine
\foavten
\foavfourteen
\foavfifteen
\foavsixteen
\foavseventeen
\foaveighteen
\foavnineteen
\foavtwenty
\foavtwentyone
\foaufour
\foaueight
\foaufifteen
\foauseventeen
\end{tikzpicture}
}\hspace{1.5cm}
\adjustbox{scale=1}{%
\begin{tikzpicture}[construction,yscale=0.9]\small
    \dotsvtwo

    \dotsvfive
    \dotsutwo
    \dotsvtwonew
    \dotsveight
    \dotsufive
    \dotsvnine
    \dotsvten
    \dotsueight

    \dotsveleven
    \dotsvtwelve
    \dotsuten
\end{tikzpicture}
}
\end{center}
{\pagebreak}

To do so, we exploit decompositions that have the following descriptions at the pattern \begin{samepage}level:
\begin{center}
\adjustbox{scale=0.9}{%
\begin{tikzpicture}[decomposition,yscale=0.92]
        \node[draw=darkgreen, rounded corners, label=180:{\Large$\delta_6$}] (dv6) at (0,0) {\begin{tikzpicture}\foavfourp\end{tikzpicture}};
        \node[draw=darkblue, rounded corners, label=180:{\Large$\delta_7$}] (dv7) at (0,-2.65) {\begin{tikzpicture}\foavfourp \foaveightp \foavninep \foavtenp \foaufour \end{tikzpicture}};
        \node[draw=darkblue, rounded corners, label=180:{\Large$\delta_8$}] (dv8) at (-1.9,-6.6) {\begin{tikzpicture}\foaveightp \foavfourteenp \foavfifteenp \foaueight\end{tikzpicture}};
        \node[draw=darkred, rounded corners, label=270:{\Large$\delta_9$}] (dv9) at (1,-6.5) {\begin{tikzpicture}\foavninep\end{tikzpicture}};
        \node[draw=darkred, rounded corners, label=270:{\Large$\delta_{10}$}] (dv10) at (2.5,-6.5) {\begin{tikzpicture}\foavtenp\end{tikzpicture}};
        \node[draw=darkred, rounded corners, label=270:{\Large$\delta_{11}$}] (dv11) at (-4,-10) {\begin{tikzpicture}\foavelevenp\end{tikzpicture}};
        \node[draw=darkblue, rounded corners, label=0:{\Large$\delta_{12}$}] (dv12) at (-0.5,-10.7) {\begin{tikzpicture}\foavfifteenp \foavsixteenp \foavseventeenp \foaveighteenp \foaufifteen\end{tikzpicture}};

         \node[draw=darkred, rounded corners, label=270:{\Large$\delta_{13}$}] (dv13) at (-4,-14) {\begin{tikzpicture}\foavsixteenp\end{tikzpicture}};
        \node[draw=darkblue, rounded corners, label=-20:{\Large$\delta_{14}$}] (dv14) at (-0.75,-14.75) {\begin{tikzpicture}\foavseventeenp \foavnineteenp \foavtwentyp \foavtwentyonep \foauseventeen\end{tikzpicture}};
        \node[draw=darkred, rounded corners, label=270:{\Large$\delta_{15}$}] (dv15) at (2.5,-14) {\begin{tikzpicture}\foaveighteenp\end{tikzpicture}};
	\path[->]
    (dv7) edge[out = 90, in = -90] node[decomp arrow label] {1} ([xshift=-0.4cm]dv6.south)
    (dv8) edge[out = 90, in = -110] node[decomp arrow label] {1} ([xshift=-1.5cm]dv7.south)
    (dv9) edge[out = 90, in = -90] node[decomp arrow label] {2} ([xshift=1.1cm]dv7.south)
    (dv10) edge[out = 90, in = -90] node[decomp arrow label] {3} ([xshift=2.1cm]dv7.south)
    (dv11) edge[out = 90, in = -100] node[decomp arrow label] {1} ([xshift=-1.3cm]dv8.south)
    (dv12) edge[out = 90, in = -80] node[decomp arrow label] {2} ([xshift=0.9cm]dv8.south)
    ([xshift=0.2cm]dv13.north) edge[out = 90, in = -110] node[decomp arrow label] {1} ([xshift=-1.8cm]dv12.south)
    (dv14) edge[out = 90, in = -90] node[decomp arrow label] {2} ([xshift=0.0cm]dv12.south)
    ([xshift=-0.2cm]dv15.north) edge[out = 90, in = -70] node[decomp arrow label] {3} ([xshift=1.8cm]dv12.south)
;
	\end{tikzpicture}
}\hfill
\adjustbox{scale=\myscale}{%
\begin{tikzpicture}[decomposition,yscale=0.92]
        \node[draw=darkgreen, rounded corners, label=180:{\Large$\delta_5'$}] (dv1) at (0,0) {\begin{tikzpicture}\dotsvtwop\end{tikzpicture}};
        \node[draw=darkblue, rounded corners, label=180:{\Large$\delta_6'$}] (dv2) at (0,-4) {\begin{tikzpicture}[yscale=0.85]\dotsvtwop  \dotsvfivep
    \dotsutwo
    \dotsvtwonewp
    \dotsveightp
    \dotsufivep
         \end{tikzpicture}};
        \node[draw=darkblue, rounded corners, label=180:{\Large$\delta_7'$}] (dv3) at (-1.5,-9.75) {\begin{tikzpicture}\dotsveightp %
    \dotsvninep
    \dotsvtenp
    \dotsueightp
        \end{tikzpicture}};
        \node[draw=darkred, rounded corners, label=270:{\Large$\delta_8'$}] (dv4) at (2,-9) {\begin{tikzpicture}\dotsvtwonewp\end{tikzpicture}};
        \node[draw=darkred, rounded corners, label=270:{\Large$\delta_9'$}] (dv5) at (-3,-13) {\begin{tikzpicture}\dotsvninep\end{tikzpicture}};
        \node[draw=darkblue, rounded corners, label=-160:{\Large$\delta_{10}'$}] (dv6) at (0.5,-14) {\begin{tikzpicture}\dotsvtenp \dotsvelevenp \dotsvtwelvep \dotsutenp\end{tikzpicture}};
	\path[->]
    (dv2) edge[out = 90, in = -90] node[decomp arrow label] {1} ([xshift=-0.3cm]dv1.south)
    ([xshift=-0.6cm]dv3.north) edge[out = 90, in = -90] node[decomp arrow label] {1} ([xshift=-1.4cm]dv2.south)
    ([xshift=-0.2cm]dv4.north) edge[out = 90, in = -90] node[decomp arrow label] {2} ([xshift=1.7cm]dv2.south)
    (dv5) edge[out = 90, in = -90] node[decomp arrow label] {1} ([xshift=-1.4cm]dv3.south)
    (dv6) edge[out = 90, in = -70] node[decomp arrow label] {2} ([xshift=1.2cm]dv3.south)
;
	\end{tikzpicture}
}
\end{center}
\end{samepage}

The next transformation constraint that we define, noting that $\tRelSpecn_0$ provides a constraint for $(\delta_6,\delta_5')$, focuses on $(\delta_7,\delta_6')$. Here, division by two, under some circumstances, aligns with removing some dots from a rectangle. These circumstances are embodied in the definition of $\tRelSpecn_2=(\ppairap{10},\ppairbp{7}, \setofpatternstc_2)$, where $\setofpatternstc_2=\{(\pgraphmn_2,v_2),(\pgraphmn_2',v_2'),(\pgraphmn_2'',v_2'')\}$:
\begin{center}
	\adjustbox{scale=\myscale,raise=0.15cm}{
		\begin{tikzpicture}[construction,yscale=0.95,xscale=0.9]
		\foavfourp
		\foaveightp
		\foavninep
		\foavtenp
		\foaufour
		\end{tikzpicture}
	}\hspace{-0.0cm}
	\adjustbox{scale=\myscale}{
		\begin{tikzpicture}[construction,yscale=0.8,xscale=0.9]
		\dotsvtwop  \dotsvfivep
		\dotsutwo
		\dotsvtwonewp
		\dotsveightp
		\dotsufivep
		\end{tikzpicture}
	}\hspace{-0.2cm}
	\adjustbox{scale=\myscale,raise=0.2cm}{
		\begin{tikzpicture}[construction,yscale=0.8,xscale=0.9]
		%\node[typeIE = {\typetrue}] at (-1.6,-0.9) (v) {$v_2$};
		%\node[constructorINE = {\constructorAnd}] at (-1.6,-1.9) (u) {};
		%\node[typeIW = {\typebool}] at (-3,-2.4) (v1) {};
		\node[typeIE = {\typetrue}] at (1.2,-3.9) (v2) {$v_2''$};
		%
		%\node[constructorINW = {\constructorAnd}] at (-3,-3.3) (u1) {};
		\node[typeIW = {\typetrue}] at (-3.6,-3.9) (v3) {$v_2$};
		\node[typeIE = {\typetrue}] at (-1.5,-3.7) (v4) {$v_2'$};
		\node[constructorINW = {\constructorRepresentedBy}] at (-3.6,-4.8) (u2) {};
		\node[typeS = {\FOAnumexp}] at (-4.9,-6) (v5) {$\acolour{a_{11}}$};
		\node[typeS = {\DDrectangulation}] at (-2.9,-6) (v6) {$\bcolour{b_9}$};
		\node[constructorINE = {\DDcisJoinOf}] at (-1.5,-4.7) (u3) {};
		\node[typeS = {\DDstack}] at (-1.2,-6) (v6') {$\bcolour{b_{8}}$};
		\node[typeS = {\DDstack}] at (0.4,-6) (v7) {$\bcolour{b_{10}}$};
		\node[constructorIpos = {\DDcsameNumOfDots}{13}{1.1cm}] at (1.2,-4.9) (u4) {};
		\node[typeS = {\DDstack}] at (2.5,-6) (v8) {$\bcolour{b_7}$};
		\path[->]
		%	(v1) edge[bend right = -10] node[ index label] {1} (u)
		%   (v2) edge[bend right = 10] node[ index label] {2} (u)
		%	(u) edge (v)
		%
		%	(v3) edge[bend right = -10] node[ index label] {1} (u1)
		%    (v4) edge[bend right = 10] node[ index label] {2} (u1)
		%	(u1) edge (v1)
		%
		(v5) edge[bend right = -10] node[ index label] {1} (u2)
		(v6) edge[bend right = 10] node[ index label] {2} (u2)
		(u2) edge (v3)
		(v6) edge[bend right = -10] node[ index label] {1} (u3)
		(v6') edge[bend right = 10] node[ index label] {2} (u3)
		(v7) edge[bend right = 10] node[ index label] {3} (u3)
		(u3) edge (v4)
		(v7) edge[bend right = -10] node[ index label] {1} (u4)
		(v8) edge[bend right = 10] node[ index label] {2} (u4)
		(u4) edge (v2)
		;
		\coordinate[below left = of v5,yshift =0.35cm,xshift=0.7cm] (a1);
		\coordinate[above left = of v5,yshift =0.4cm,xshift=0.3cm] (a2);
		\coordinate[above left = of v3,yshift =-0.55cm,xshift=0.3cm] (a3);
		\coordinate[above right = of v3,yshift =-0.55cm,xshift=-1cm] (a4);
		\coordinate[above right = of v6,xshift=-0.75cm,yshift=-0.5cm] (a5);
		\coordinate[below right = of v6,yshift =0.4cm,xshift=-0.45cm] (a6);
		\draw[rounded corners=7,thick, draw opacity = 0.30, text opacity = 0.6] (a1)--(a2)--node[yshift=0.45cm,xshift=-0.15cm]{$(\gamma_2,v_2)$}(a3)--(a4)--(a5)--(a6)--cycle;
		\coordinate[below left = of v6,yshift =0.3cm,xshift=0.5cm] (b1);
		\coordinate[above left = of v6,yshift =-0.6cm,xshift=0.7cm] (b2);
		\coordinate[above left = of v4,yshift =-0.6cm,xshift=0.9cm] (b3);
		\coordinate[above right = of v4,yshift =-0.6cm,xshift=-0.5cm] (b4);
		\coordinate[above right = of v7,xshift=-1.0cm] (b5);
		\coordinate[below right = of v7,yshift =0.3cm,xshift=-0.7cm] (b6);
		\draw[rounded corners=7,thick, draw opacity = 0.30, text opacity = 0.6] (b1)--(b2)--(b3)--node[yshift=0.25cm,xshift=-0.02cm]{$(\gamma_2',v_2')$}(b4)--(b5)--(b6)--cycle;
		\coordinate[below left = of v7,yshift =0.4cm,xshift=0.8cm] (g1);
		\coordinate[above left = of v7,yshift =-0.5cm,xshift=0.8cm] (g2);
		\coordinate[above left = of v2,yshift =-0.55cm,xshift=0.9cm] (g3);
		\coordinate[above right = of v2,yshift =-0.55cm,xshift=-0.1cm] (g4);
		\coordinate[above right = of v8,xshift=0.15cm,yshift=0.4cm] (g5);
		\coordinate[below right = of v8,yshift =0.4cm,xshift=-0.4cm] (g6);
		\draw[rounded corners=7,thick, draw opacity = 0.30, text opacity = 0.6] (g1)--(g2)--(g3)--(g4)--node[yshift=0.4cm,xshift=0.25cm]{$(\gamma_2'',v_2'')$}(g5)--(g6)--cycle;
		\end{tikzpicture}
	}
\end{center}
Noting that implicit multiplication aligns with rectangulation,{\pagebreak} and that addition aligns with appending dot diagrams, we further define $\tRelSpecn_3=(\ppairap{9},\ppairbp{9},\setofpatternstc_3)$ and $\tRelSpecn_4 = (\ppairap{17}, \linebreak\ \ppairbp{12},\setofpatternstc_4)$ using the following two triples of graphs where, in each case, the property enforcer comprises two \begin{samepage}patterns:
\begin{center}
\adjustbox{scale=0.8}{
\begin{tikzpicture}[construction,yscale=0.9,xscale=0.9]
\foaveightp
\foavfourteenp
\foavfifteenp
\foaueight
\end{tikzpicture}
}\hspace{0.7cm}
\adjustbox{scale=0.8}{
\begin{tikzpicture}[construction,yscale=0.9,xscale=0.9]
\dotsveightp
\dotsvninep
\dotsvtenp
\dotsueightp
\end{tikzpicture}
}\hspace{0.3cm}
\adjustbox{scale=0.8}{
\begin{tikzpicture}[construction,yscale=0.9,xscale=0.75]
%\node[typeIE = {\typetrue}] at (0,-0.3) (v) {$v_3$};
%
%\node[constructorINE = {\constructorAnd}] at (0,-1.3) (u) {};
%
\node[typeIW = {\typetrue}] at (-2,-2) (v1) {$v_3$};
\node[typeIE = {\typetrue}] at (2,-2) (v2) {$v_3'$};
\node[constructorINW = {\constructorRepresentedBy}] at (-2,-3) (u1) {};
\node[typeW = {\FOAnumexp}] at (-3,-4) (v3) {$\acolour{a_{14}}$};
\node[typeS = {\DDdotDiagram}] at (-1,-4) (v4) {$\bcolour{b_{11}}$};
\node[constructorINE = {\constructorRepresentedBy}] at (2,-3) (u2) {};
\node[typeS = {\FOAnumexp}] at (1,-4) (v5) {$\acolour{a_{15}}$};
\node[typeE = {\DDdotDiagram}] at (3,-4) (v6) {$\bcolour{b_{12}}$};
    \path[->]
	%(v1) edge[bend right = -10] node[ index label] {1} (u)
	%(v2) edge[bend right = 10] node[ index label] {2} (u)
	%(u) edge (v)
    %
    (v3) edge[bend right = -10] node[ index label] {1} (u1)
	(v4) edge[bend right = 10] node[ index label] {2} (u1)
	(u1) edge (v1)
    (v5) edge[bend right = -10] node[ index label] {1} (u2)
	(v6) edge[bend right = 10] node[ index label] {2} (u2)
	(u2) edge (v2);
\end{tikzpicture}
}
\\[0.2cm]
\adjustbox{scale=0.8}{
\begin{tikzpicture}[construction,yscale=0.9,xscale=0.9]
\foavnineteenp
\foavtwentyp
\foavtwentyonep
\foavseventeenp
\foauseventeen
\end{tikzpicture}
}\hspace{0.8cm}
\adjustbox{scale=0.8}{
\begin{tikzpicture}[construction,yscale=0.9,xscale=0.9]
\dotsvelevenp
\dotsvtwelvep
\dotsvtenp
\dotsutenp
\end{tikzpicture}
}\hspace{1cm}
\adjustbox{scale=0.8}{
\begin{tikzpicture}[construction,yscale=0.8,xscale=0.8]
%\node[typeIE = {\typetrue}] at (0,-0.3) (v) {$v_4$};
%
%\node[constructorINE = {\constructorAnd}] at (0,-1.3) (u) {};
%
\node[typeIW = {\typetrue}] at (-2,-2) (v1) {$v_4$};
\node[typeIE = {\typetrue}] at (2,-2) (v2) {$v_4'$};
\node[constructorINW = {\constructorRepresentedBy}] at (-2,-3) (u1) {};
\node[typeW = {\FOAnumexp}] at (-3,-4) (v3) {$\acolour{a_{19}}$};
\node[typeS = {\DDdotDiagram}] at (-1,-4) (v4) {$\bcolour{b_{13}}$};
\node[constructorINE = {\constructorRepresentedBy}] at (2,-3) (u2) {};
\node[typeS = {\FOAnumexp}] at (1,-4) (v5) {$\acolour{a_{21}}$};
\node[typeE = {\DDdotDiagram}] at (3,-4) (v6) {$\bcolour{b_{14}}$};
    \path[->]
	%(v1) edge[bend right = -10] node[ index label] {1} (u)
	%(v2) edge[bend right = 10] node[ index label] {2} (u)
	%(u) edge (v)
    %
    (v3) edge[bend right = -10] node[ index label] {1} (u1)
	(v4) edge[bend right = 10] node[ index label] {2} (u1)
	(u1) edge (v1)
    (v5) edge[bend right = -10] node[ index label] {1} (u2)
	(v6) edge[bend right = 10] node[ index label] {2} (u2)
	(u2) edge (v2);
\end{tikzpicture}
}
\end{center}\end{samepage}
Given $\tRelSpecn_0$, $\tRelSpecn_1$, $\tRelSpecn_2$, $\tRelSpecn_3$, and $\tRelSpecn_4$, we define $\vertexRelations$ as follows:
\begin{displaymath}
\vertexRelations(\delta_6,\delta_5') =  \tRelSpecn_0 \hspace{0.6cm}
\vertexRelations(\delta_7,\delta_6')  =  \tRelSpecn_2 \hspace{0.6cm}
\vertexRelations(\delta_8,\delta_7')  =  \tRelSpecn_3 \hspace{0.6cm}
\vertexRelations(\delta_{11},\delta_9')  =  \tRelSpecn_0 \hspace{0.6cm}
\vertexRelations(\delta_{14},\delta_{10}')  =  \tRelSpecn_4.
\end{displaymath}
As with the previous structural transformation, the decompositions of $\big(\cgraphn',\frac{3(3+1)}{2}\big)$ and $\big(\cgraphhn',\hspace{-0.1cm}\scalebox{0.74}{\tikz[baseline=0.07cm]{\oT}}\hspace{0.2cm}\big)$ are such that the constraints imposed by the choice of $\vertexRelations$ are met and the latter is, therefore, a structural transformation of the former.
\end{example}

\begin{definition}\label{defn:constraintAssignment}\label{defn:transformationSpecification}
Let $\irencodingn=\irencoding$ be an inter-representational-system encoding with description $\descriptionirse$. Let  $\pdecompositionn$ and $\pdecompositionn'$ be pattern decompositions in $\descriptionn$ and $\descriptionn'$, respectively.
A \textit{constraint assignment} for $\pdecompositionn$ and $\pdecompositionn'$ is a partial function, $\vertexRelations\colon V(\pdecompositionn)\times V(\pdecompositionn') \to \setOfTokenRels$, where
\begin{enumerate}
\item $\setOfTokenRels$ is a set of transformation constraints for $\irencodingn$, and
\item for all $(\delta,\delta')\in V(\pdecompositionn)\times V(\pdecompositionn')$
\begin{displaymath}
  \vertexRelations(\delta,\delta')\in\{(\ppaira,\ppairb,\setofpatternstc)\in \setOfTokenRels \colon \labp(\delta)\matches \ppaira \wedge \labp'(\delta')\matches \ppairb \}
\end{displaymath}
where $\labp(\delta)$ and $\labp'(\delta')$ are the patterns that  label $\delta$ and $\delta'$.
\end{enumerate}
\end{definition}

\begin{definition}\label{defn:structuralTransformationDecomposition}
Let $\irencodingn=\irencoding$ be an inter-representational-system encoding with description $\descriptionirse$. Let $\pdecompositionn$ and $\pdecompositionn'$ be pattern decompositions in $\descriptionn$ and $\descriptionn'$, respectively, with constraint assignment $\vertexRelations\colon V(\pdecompositionn)\times V(\pdecompositionn') \to \setOfTokenRels$. Let $\decompositionn$ and $\decompositionn'$ be decompositions in $\rsystemn$ and $\rsystemn'$ that match $\pdecompositionn$ and $\pdecompositionn'$, with embeddings $f\colon \decompositionn\to \pdecompositionn$ and $f'\colon \decompositionn'\to \pdecompositionn'$. Then $\vertexRelations$ is \textit{satisfied} by the pair $(\decompositionn,\decompositionn')$ provided for all $(d,d')\in V(\decompositionn)\times V(\decompositionn')$,
if $\vertexRelations(f(d),f'(d'))$ is defined then there exists a property identifier, $\setofconstructionsirse $, for $\irencodingn$ such that
\begin{displaymath}
(\labc(d),\labc'(d'),\setofconstructionsirse )\models \vertexRelations(f(d),f'(d'))
\end{displaymath}
where $\labc(d)$ and $\labc'(d')$ are the constructions that label $d$ and $d'$.
\end{definition}

Finally, we define structural transformations at the level of constructions. Here, we make use of canonical decompositions: recall, given a pattern decomposition, $\pdecompositionn$, the $\pdecompositionn$-canonical decomposition of a matching construction, $\cpair$, is denoted $\deltacanonical{\cpair,\pdecompositionn}$; see definition~\ref{defn:deltaCanonical}.

\begin{definition}\label{defn:structuralTransformationConstruction}
Let $\irencodingn=\irencoding$ be an inter-representational-system encoding with description $\descriptionirse$. Let $\pdecompositionn$ and $\pdecompositionn'$ be pattern decompositions in $\descriptionn$ and $\descriptionn'$, respectively, with
constraint assignment $\vertexRelations\colon V(\pdecompositionn)\times V(\pdecompositionn') \to \setOfTokenRels$. Let $\cpair$ and $\cpaird$ be constructions in $\rsystemn$ and $\rsystemn'$ that match $\pdecompositionn$ and $\pdecompositionn'$. Then $\cpaird$ is a \textit{structural $\vertexRelations$-transformation} of $\cpair$ provided $\vertexRelations$ is satisfied by $(\deltacanonical{\cpair,\pdecompositionn},\deltacanonical{\cpaird,\pdecompositionn'})$.
\end{definition}

%\gnote{Some of the key messages in this paragraph may get moved to the conclusion later on, when it gets written!}

Thus, we are able to identify relationships between constructions, via their decompositions, that allow representation selection through imposed transformation constraints. These constraints can be varied in nature and can include (identifying just a few cases): enforcing semantic equivalence, specifying improved cognitive effectiveness, and even -- in an information visualisation context -- encode a change of representation when underlying data changes dynamically. The general nature of the theory we have devised means that universally-applicable algorithms can be developed to invoke structural transformations between a wide range of representational systems. Whilst such algorithms would require system-specific inputs (such as constructions, patterns and constraint assignments), our research eliminates a key barrier seen in prior work: there is now no need to develop \textit{inter-system-specific} transformation algorithms, which necessarily exist when different systems adopt different formalisation approaches. As such, we claim that Representational Systems Theory provides the first \textit{general framework} that enables \textit{representation selection} via \textit{structural transformations}.

\subsection{Partial Transformations}\label{sec:st:partial}

So far, we have devised theory that supports \textit{complete} structural transformations: given $\cpair$, identify $\cpaird$ given a constraint assignment. However, in practice, we are likely to not have full knowledge of the construction spaces associated with $\irencodingn=\irencoding$. Thus, it is important for us to explore \textit{partial transformations} that could be the output of a structure transfer algorithm when a complete transfer cannot be produced. The result of a partial transformation will be a pattern, $\ppaird$, perhaps with some vertices instantiated tokens, that can be provided to an end-user as an indication of how the constructors of $\rsystemn'$ can be composed to produce a suitable $\cpaird$, should it exist\footnote{In future work, we intended to develop a method for scoring each partial transformation, $\ppaird$, to indicate how likely it is that a construction, $\cpaird$, exists in $\rsystemn'$ that matches $\ppaird$. In terms of practical utility, these scores will allow us to rank partial transformations, potentially from a wide variety of representational systems. These scores, in turn, will give an indication as to whether it is possible to transform $\cpair$ into some $\cpaird$ subject to the required transformation constraints holding.}.

Consider, then, a pattern decomposition, $\pdecompositionn'$, that describes the class of constructions we wish to produce from $\cpair$. The goal of partial structure transfer is to replace \textit{some} of the patterns that label the vertices in $\pdecompositionn'$ with constructions or `partly completed' constructions, by instantiating pattern-vertices using tokens in $\rsystemn'$.

\begin{example}[Instantiating Vertices in a Pattern]\label{ex:instantiatingPatterns} Suppose one of the vertices, $\delta'$ say, in $\pdecompositionn'$ is labelled by pattern (a), below left. Then we could replace (a)  with the alternative pattern, (b), in the middle. Here, some of the original pattern's vertices have been replaced by tokens. However, the types assigned to the vertices were unchanged. In order to get `closer' to a construction in $\rsystemn_{\FOA}$, we must also assign more specific types to the tokens, as shown in pattern (c), below right.

\begin{center}
\adjustbox{scale=0.8}{
\begin{tikzpicture}[construction,yscale=0.9,xscale=0.65]\small
\node[] at (-3,-0.2) {\normalsize(a)};
\foavp
\foavonep
\foau
\foavtwop
\foavthreep
\foavfourp
\foauone
\end{tikzpicture}
}
\hspace{0.2cm}
\adjustbox{scale=0.8}{
\begin{tikzpicture}[construction,yscale=0.9,xscale=0.65]\small
\node[] at (-3,-0.2) {\normalsize(b)};
\node[termIW = {\typebool}] (v) at (0,-0.2) {$\top$};
\foavonep
\foau
\node[termW = {\FOAnumexp}] at (-2.3,-4) (v2) {$1+2+3$};
\node[termS = {\FOAeq}] at (-0.5,-4.2) (v3)  {\vphantom{\scalebox{0.83}{i}}$=$};
\foavfourp
\foauone
\end{tikzpicture}
}
\hspace{0.2cm}
\adjustbox{scale=0.8}{
\begin{tikzpicture}[construction,yscale=0.9,xscale=0.65]\small
\node[] at (-3,-0.2) {\normalsize(c)};
\node[termIW = {\typetrue}] (v) at (0,-0.2) {$\top$};
\foavonep
\foau
\node[termW = {\texttt{1+2+3}}] at (-2.3,-4) (v2) {$1+2+3$};
\node[termS = {\FOAeq}] at (-0.5,-4.2) (v3)  {\vphantom{\scalebox{0.83}{i}}$=$};
\foavfourp
\foauone
\end{tikzpicture}
}
\end{center}
\end{example}

In the context of partial transformations, the act of replacing vertices with tokens, or types with subtypes, in a pattern $\ppaird$, creates a new pattern, $(\hat{\pgraphn}',\hat{v}')$, that matches $\ppaird$. That is, there is an embedding from $(\hat{\pgraphn}',\hat{v}')$ to $\ppaird$. Focusing on what happens at the $\pdecompositionn'$-level, replacing vertices or types in patterns that label vertices in $\pdecompositionn'$ creates a new pattern decomposition, $\hat{\pdecompositionn}'$. The replacement must be done in a consistent way: if two patterns, $\ppaird$ and $(\pgraphn'',v'')$, in $\pdecompositionn'$ share a vertex, $v$ say, then
\begin{enumerate}
\item[-] if $v$ is replaced by $t$ in $\ppaird$ then $v$ must also be replaced by $t$ in $(\pgraphn'',v'')$, and
\item[-] if the type, $\tau$, assigned to $v$ is replaced by a subtype, $\tau'$, in $\ppaird$ then the type assigned to $v$ in $(\pgraphn'',v'')$ must also change to $\tau'$.
\end{enumerate}
These observations are embodied by the requirement that the new pattern decomposition, $\hat{\pdecompositionn}'$, can be embedded into $\pdecompositionn'$. Definition~\ref{defn:partialLDeltaTransform} exploits this relationship between $\pdecompositionn'$ and $\hat{\pdecompositionn}'$, on route to generalising the concept of satisfaction given in definition~\ref{defn:structuralTransformationDecomposition} -- for decompositions of constructions -- to encompass decompositions of patterns.

\begin{definition}\label{defn:DeltaPartialSatisfication}
Let $\irencodingn=\irencoding$ be an inter-representational-system encoding with description $\descriptionirse$. Let $\pdecompositionn$ and $\pdecompositionn'$ be pattern decompositions in $\descriptionn$ and $\descriptionn'$, respectively, with
constraint assignment $\vertexRelations\colon V(\pdecompositionn)\times V(\pdecompositionn') \to \setOfTokenRels$.  Let $\decompositionn$ be a decomposition in $\rsystemn$ that matches $\pdecompositionn$, with embedding $f\colon \decompositionn\to \pdecompositionn$.  Let $\hat{\pdecompositionn}'$ be a decomposition of some pattern for $\rsystemn'$ that matches $\pdecompositionn'$, with embedding $f'\colon \hat{\pdecompositionn}' \to \pdecompositionn'$. Then $\vertexRelations$ is \textit{satisfied} by the pair $(\decompositionn,\hat{\pdecompositionn}')$ provided for all $(d,\hat{\delta}')\in V(\decompositionn)\times V(\hat{\pdecompositionn}')$ if $\vertexRelations(f(d),f'(\hat{\delta}'))$ is defined and $\widehat{\labp}(\hat{\delta}')$ is a construction in $\rsystemn'$ then there exists a property identifier, $\setofconstructionsirse$, in $\irencoding$ such that
\begin{displaymath}
  (\labc(d),\widehat{\labp}(\hat{\delta}'),\setofconstructionsirse) \models \vertexRelations(f(d),f'(\hat{\delta}')).
\end{displaymath}
\end{definition}

\begin{definition}\label{defn:partialLDeltaTransform}
Let $\irencodingn=\irencoding$ be an inter-representational-system encoding with description $\descriptionirse$. Let $\pdecompositionn$ and $\pdecompositionn'$ be pattern decompositions in $\descriptionn$ and $\descriptionn'$, respectively, with
constraint assignment $\vertexRelations\colon V(\pdecompositionn)\times V(\pdecompositionn') \to \setOfTokenRels$.  Let $\cpair$ be a construction in $\rsystemn$ that matches $\pdecompositionn$. Let $(\hat{\pgraphn}',\hat{v}')$ be a pattern for $\rsystemn'$ that matches $\pdecompositionn'$. Then $(\hat{\pgraphn}',\hat{v}')$ is a \textit{partial $\vertexRelations$-transformation} of $\cpair$ provided $\vertexRelations$ is satisfied by $(\deltacanonical{\cpair,\pdecompositionn},\deltacanonical{(\hat{\pgraphn}',\hat{v}'),\pdecompositionn'})$.
\end{definition}

\begin{theorem}\label{thm:completePartialIsTransformation}
Let $\irencodingn=\irencoding$ be an inter-representational-system encoding with description $\descriptionirse$. Let $\pdecompositionn$ and $\pdecompositionn'$ be pattern decompositions in $\descriptionn$ and $\descriptionn'$, respectively, with
constraint assignment $\vertexRelations\colon V(\pdecompositionn)\times V(\pdecompositionn') \to \setOfTokenRels$.  Let $\cpair$ be a construction in $\rsystemn$ that matches $\pdecompositionn$. Let $(\hat{\pgraphn},\hat{v})$ be a pattern for $\rsystemn'$ that matches $\pdecompositionn'$. If $(\hat{\pgraphn}',\hat{v}')$ is a construction in $\rsystemn'$ and a partial $\vertexRelations$-transformation of $\cpair$ then $(\hat{\pgraphn}',\hat{v}')$ is a structural $\vertexRelations$-transformation of $\cpair$.
\end{theorem}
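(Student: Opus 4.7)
The plan is to observe that Definition~\ref{defn:DeltaPartialSatisfication} differs from Definition~\ref{defn:structuralTransformationDecomposition} in exactly one place: the partial satisfaction clause carries the extra hypothesis ``and $\widehat{\labp}(\hat{\delta}')$ is a construction in $\rsystemn'$'' before demanding the existence of a property identifier. If that extra hypothesis turns out to hold automatically at every vertex of $\deltacanonical{(\hat{\pgraphn}',\hat{v}'),\pdecompositionn'}$ under the theorem's assumption, then partial satisfaction collapses to full satisfaction and the conclusion follows immediately.

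So the first step I would take is to show that, whenever $(\hat{\pgraphn}',\hat{v}')$ is itself a construction in $\rsystemn'$, every label $\widehat{\labp}(\hat{\delta}')$ in the canonical decomposition $\deltacanonical{(\hat{\pgraphn}',\hat{v}'),\pdecompositionn'}$ is also a construction in $\rsystemn'$. By Definition~\ref{defn:decomposition}, the labels are produced recursively from splits, so each is either the root label (the whole pattern, which is a construction by hypothesis) or a generator or an induced construction coming from some split of a construction already shown to be in $\rsystemn'$. Generators and induced-construction graphs are subgraphs of $\hat{\pgraphn}'$, and the definition of ``construction in $\rsystemn'$'' (Definition~\ref{defn:setOfConstructions}) is the local condition $\neigh{u,\cdot}=\neigh{u,\ugraph{\rsystemn'}}$ at each configurator $u$. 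Since splits and the formation of induced-construction graphs (via trail extension sequences, Definition~\ref{defn:inducedConstruction}) include for each retained configurator all of its adjacent arrows and their incident vertices, the neighborhood of $u$ inside any sub-construction coincides with its neighborhood inside $\hat{\pgraphn}'$, which in turn coincides with its neighborhood in $\ugraph{\rsystemn'}$. Hence every sub-label is a construction in $\rsystemn'$.

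With this established, the rest is bookkeeping. Write $\hat{\pdecompositionn}'=\deltacanonical{(\hat{\pgraphn}',\hat{v}'),\pdecompositionn'}$. By the partial $\vertexRelations$-transformation hypothesis (Definition~\ref{defn:partialLDeltaTransform}), $\vertexRelations$ is satisfied by $(\deltacanonical{\cpair,\pdecompositionn},\hat{\pdecompositionn}')$ in the sense of Definition~\ref{defn:DeltaPartialSatisfication}. For each pair $(d,\hat{\delta}')$ with $\vertexRelations(f(d),f'(\hat{\delta}'))$ defined, the additional construction hypothesis of that definition is automatically true by the previous step, so a property identifier $\setofconstructionsirse$ is supplied with $(\labc(d),\widehat{\labp}(\hat{\delta}'),\setofconstructionsirse)\models \vertexRelations(f(d),f'(\hat{\delta}'))$. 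But this is exactly the clause of Definition~\ref{defn:structuralTransformationDecomposition}, so $\vertexRelations$ is satisfied by $(\deltacanonical{\cpair,\pdecompositionn},\hat{\pdecompositionn}')$ in the full sense. Applying Definition~\ref{defn:structuralTransformationConstruction}, $(\hat{\pgraphn}',\hat{v}')$ is a structural $\vertexRelations$-transformation of $\cpair$.

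The only non-trivial step, and hence the main obstacle, is the neighborhood-preservation claim in the first step: one must confirm that splitting and taking induced-construction graphs genuinely preserve the \emph{entire} neighborhood of every retained configurator, not merely include the outgoing arrow or a subset of the incoming arrows. This should be extractable from the machinery underlying Theorem~\ref{thm:SplitsCoverConstruction} together with the unistructured property of constructions, but it deserves an explicit verification rather than being swept under the rug.
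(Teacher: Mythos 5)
Your proposal is correct and follows essentially the same route as the paper's proof: observe that Definition~\ref{defn:DeltaPartialSatisfication} differs from Definition~\ref{defn:structuralTransformationDecomposition} only in the extra hypothesis that $\widehat{\labp}(\hat{\delta}')$ be a construction in $\rsystemn'$, show that this hypothesis holds automatically at every vertex of the canonical decomposition when $(\hat{\pgraphn}',\hat{v}')$ is itself a construction in $\rsystemn'$, and conclude via Definitions~\ref{defn:structuralTransformationDecomposition} and~\ref{defn:structuralTransformationConstruction}. In fact you are more careful than the paper on the one non-trivial step (the neighbourhood-preservation argument for labels of the canonical decomposition), which the paper simply asserts; your justification via the local condition of Definition~\ref{defn:setOfConstructions} and the fact that generators and induced-construction graphs are structure graphs contained in $\hat{\pgraphn}'$ is sound.
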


The proof of theorem~\ref{thm:completePartialIsTransformation} can be found in Appendix~\ref{sec:app:structure}, see theorem~\ref{athm:completePartialIsTransformation}. An important consequence of supporting partial transformations  is related to the termination of algorithms. Obviously there does not exist a universally applicable algorithm that can transformation an arbitrary representation into another (or, in our terminology, one construction into another), and be guaranteed to terminate: such transformation problems are not decidable in general. However, Representational Systems Theory enables the implementation of algorithms that \textit{readily produce}  a partial transformation of $\cpair$:
\begin{enumerate}
\item[-] check whether $\cpair$ matches $\pdecompositionn$, for which a deterministic, terminating algorithm can be readily devised, so long as the type system satisfies the ascending chain condition, and
\item[-] in each pattern, $\ppairp{i}$, that labels some vertex, $\delta_i$, in $\pdecompositionn'$, replace each vertex, $v_i'$, in $\pgraphn_i$ that is a token in $\rsystemn'$ with a fresh, non-token, vertex; after all such replacements, the result is $\hat{\pdecompositionn}'$.
\end{enumerate}
We are assured that producing such a $\constructionofD{\hat{\pdecompositionn}'}$ is a partial $\vertexRelations$-transformation of $\cpair$. Subsequently it is possible to attempt to instantiate some of the vertices of $\constructionofD{\pdecompositionn'}$, in a terminating way (e.g. by restricting to a search through known tokens, stored in a knowledge base, that are in $\rsystemn'$). In conclusion, Representational Systems Theory \textit{necessarily permits} the derivation of partial transformations -- given an appropriate type system -- even though there does not exist, in general, terminating algorithms that are guaranteed to produce complete transformations. This is particularly significant for practical applications, where it is not possible to derive complete structural transformations. Notably, where the set of tokens in a representational system is infinite, an AI system will not have explicit knowledge of the token-set. Therefore, producing a partial transformation may be a likely output in such situations. A human end-user may be readily able to instantiate the tokens in order to produce a complete transformation.

\subsection{Soundness}\label{sec:st:soundness}

Just as the theory of partial transformations is of practical use, so is the notion \textit{soundness}. This concept relates to transformation constraints and how if `part of a transformation constraint is satisfied' then the entire constraint is also satisfied. We begin by illustrating validity and soundness with an example.

\begin{example}[A Valid Constraint Assignment]
In example~\ref{ex:FOAandDotsStructuralLinks}, we saw how to structurally transform the construction $(\cgraphn,1+2+3)$ into the construction $\big(\cgraphhn, \hspace{-0.1cm}\scalebox{0.74}{\tikz[baseline=0.07cm]{\oT}}\hspace{0.2cm}\big)$. This example uses the same pattern decompositions, $\pdecompositionn$ and $\pdecompositionn'$, repeated here for ease of readability:
\begin{center}
\adjustbox{scale=\myscale}{%
\begin{tikzpicture}[decomposition,yscale = 0.9]
        \node[draw=darkgreen, rounded corners, label=180:{\Large$\delta_1$}] (dv1) at (0,-0.2) {\begin{tikzpicture}[construction,yscale=0.9]\foavtwop\end{tikzpicture}};
        \node[draw=darkblue, rounded corners, label=180:{\Large$\delta_2$}] (dv2) at (0,-3) {\begin{tikzpicture}[construction,yscale=0.9]\foavtwop \foavfivep \foavsixp \foavsevenp \foautwo \end{tikzpicture}};
        \node[draw=darkblue, rounded corners, label=180:{\Large$\delta_3$}] (dv3) at (-2,-7) {\begin{tikzpicture}[construction,yscale=0.9]\foavfivep \foavelevenp \foavtwelvep \foavthirteenp \foaufive\end{tikzpicture}};
        \node[draw=darkred, rounded corners, label=270:{\Large$\delta_4$}] (dv4) at (0.7,-7) {\begin{tikzpicture}[construction,yscale=0.9]\foavsixp\end{tikzpicture}};
        \node[draw=darkred, rounded corners, label=270:{\Large$\delta_5$}] (dv5) at (2.2,-7) {\begin{tikzpicture}[construction,yscale=0.9]\foavsevenp\end{tikzpicture}};
	\path[->]
    ([xshift=0.7cm]dv2.north) edge[out = 90, in = -90] node[decomp arrow label] {1} ([xshift=0.4cm]dv1.south)
    (dv3) edge[out = 90, in = -120] node[decomp arrow label] {1} ([xshift=-1.3cm]dv2.south)
    (dv4) edge[out = 90, in = -90] node[decomp arrow label] {2} ([xshift=0.4cm]dv2.south)
    (dv5) edge[out = 90, in = -80] node[decomp arrow label] {3} ([xshift=1.7cm]dv2.south)
;
	\end{tikzpicture}
}\hspace{1.5cm}
\adjustbox{scale=\myscale}{%
\begin{tikzpicture}[decomposition,yscale = 0.9]
        \node[draw=darkgreen, rounded corners, label=180:{\Large$\delta_1'$}] (dv1) at (0,-0.2) {\begin{tikzpicture}[construction,yscale=0.9]\dotsvonep\end{tikzpicture}};
        \node[draw=darkblue, rounded corners, label=180:{\Large$\delta_2'$}] (dv2) at (0,-3) {\begin{tikzpicture}[construction,yscale=0.9]\dotsvonep \dotsvthreep \dotsvfourp \dotsuone \end{tikzpicture}};
        \node[draw=darkblue, rounded corners, label=180:{\Large$\delta_3'$}] (dv3) at (-1.25,-7) {\begin{tikzpicture}[construction,yscale=0.9]\dotsvthreep \dotsvsixp \dotsvsevenp \dotsuthree\end{tikzpicture}};
        \node[draw=darkred, rounded corners, label=270:{\Large$\delta_4'$}] (dv4) at (1.25,-7) {\begin{tikzpicture}[construction,yscale=0.9]\dotsvfourp\end{tikzpicture}};
	\path[->]
    ([xshift=0.2cm]dv2.north) edge[out = 90, in = -90] node[decomp arrow label] {1} ([xshift=0.3cm]dv1.south)
    (dv3) edge[out = 90, in = -110] node[decomp arrow label] {1} ([xshift=-1cm]dv2.south)
    (dv4) edge[out = 90, in = -80] node[decomp arrow label] {2} ([xshift=1.2cm]dv2.south)
;
	\end{tikzpicture}
}
\medskip
\end{center}
{\pagebreak}
We continue to use $\vertexRelations$ as defined on page~\pageref{ex:aSuitableL}. Consider the following decomposition\footnote{Note here the use of $\delta$ and $\delta'$ in vertex names, since our formalisation in definition~\ref{defn:validForVertices} uses canonical decompositions, where the vertices in a decomposition of a construction are the same as those in a decomposition of a matched pattern.} of a construction of $2+2+4$ in $\rsystemn_{\FOA}$ and decomposition of a pattern for \begin{samepage}$\rsystemn_{\mathit{D}}$:
\begin{center}\label{ex:decompositionOfPatternsForLater}
\adjustbox{scale=\myscale}{%
\begin{tikzpicture}[decomposition,yscale=0.9]
        \node[draw=darkgreen, rounded corners, label=180:{\Large$\delta_1$}] (dv1) at (0,-0.1) {\begin{tikzpicture}[construction,yscale=0.9]\foavtwon\end{tikzpicture}};
        \node[draw=darkblue, rounded corners, label=180:{\Large$\delta_2$}] (dv2) at (0,-3) {\begin{tikzpicture}[construction,yscale=0.9]\foavtwon \foavfiven \foavsixn \foavsevenn \foautwo \end{tikzpicture}};
        \node[draw=darkblue, rounded corners, label=180:{\Large$\delta_3$}] (dv3) at (-1.75,-7) {\begin{tikzpicture}[construction,yscale=0.9]\foavfiven \foavelevenn \foavtwelven \foavthirteenn \foaufive\end{tikzpicture}};
        \node[draw=darkred, rounded corners, label=270:{\Large$\delta_4$}] (dv4) at (0.75,-7) {\begin{tikzpicture}[construction,yscale=0.9]\foavsixn\end{tikzpicture}};
        \node[draw=darkred, rounded corners, label=270:{\Large$\delta_5$}] (dv5) at (2,-7) {\begin{tikzpicture}[construction,yscale=0.9]\foavsevenn\end{tikzpicture}};
	\path[->]
	([xshift=0.7cm]dv2.north) edge[out = 90, in = -90] node[decomp arrow label] {1} ([xshift=0.4cm]dv1.south)
	(dv3) edge[out = 90, in = -120] node[decomp arrow label] {1} ([xshift=-1.3cm]dv2.south)
	(dv4) edge[out = 90, in = -90] node[decomp arrow label] {2} ([xshift=0.3cm]dv2.south)
	(dv5) edge[out = 90, in = -80] node[decomp arrow label] {3} ([xshift=1.5cm]dv2.south)
;
	\end{tikzpicture}
}\hspace{1.5cm}
\adjustbox{scale=\myscale}{%
\begin{tikzpicture}[decomposition,yscale=0.9]
        \node[draw=darkgreen, rounded corners, label=180:{\Large$\delta_1'$}] (dv1) at (0,0) {\begin{tikzpicture}[construction,yscale=0.9]\dotsvonep\end{tikzpicture}};
        \node[draw=darkblue, rounded corners, label=180:{\Large$\delta_2'$}] (dv2) at (0,-2.8) {\begin{tikzpicture}[construction,yscale=0.9]\dotsvonep \dotsvthreen \dotsvfourn \dotsuone \end{tikzpicture}};
        \node[draw=darkblue, rounded corners, label=180:{\Large$\delta_3'$}] (dv3) at (-1.25,-7) {\begin{tikzpicture}[construction,yscale=0.9]\dotsvthreen \dotsvsixn \dotsvsevenn \dotsuthree\end{tikzpicture}};
        \node[draw=darkred, rounded corners, label=270:{\Large$\delta_4'$}] (dv4) at (1.25,-7) {\begin{tikzpicture}[construction,yscale=0.9]\dotsvfourn\end{tikzpicture}};
	\path[->]
	([xshift=0cm]dv2.north) edge[out = 90, in = -90] node[decomp arrow label] {1} ([xshift=0.3cm]dv1.south)
	(dv3) edge[out = 90, in = -110] node[decomp arrow label] {1} ([xshift=-1cm]dv2.south)
	(dv4) edge[out = 90, in = -80] node[decomp arrow label] {2} ([xshift=1cm]dv2.south);
	\end{tikzpicture}
}
\medskip
\end{center}\end{samepage}
In this case, we have
\begin{enumerate}
\item[-] $\vertexRelations(\delta_3,\delta_3')=\tRelSpecn_1$, and (the construction labelling) $\delta_3'$ is a structural $\vertexRelations$-transformation of $\delta_3$, and
\item[-] $\vertexRelations(\delta_5,\delta_4')=\tRelSpecn_0$, and (the construction labelling) $\delta_4'$ is a structural $\vertexRelations$-transformation of $\delta_5$.
\end{enumerate}
\begin{samepage} This is an example of \textit{validity} and \textit{ancestor-validity}:
\begin{enumerate}
\item for any pair of \textit{leaves} for which $\vertexRelations$ is defined, namely $(\delta_3,\delta_3')$ and $(\delta_5,\delta_4')$, the construction that labels $\delta_i'$ is a structural $\vertexRelations$-transformation\footnote{Recall that, when $\vertexRelations(\delta_j,\delta_i')$ is not defined, we always have a structural transformation. This is because there is no assigned transformation constraint that must be satisfied.} of $\delta_j$, which means we have \textit{validity} at the leaves, and
\item for the $\delta_2$- and $\delta_2'$-induced decomposition trees, we have ancestor-validity: for any pair of $\delta_2$ and $\delta_2'$ ancestors (i.e. in this example, any pair of leaves) for which $\vertexRelations$ is defined we have validity, and
\item ancestor-validity for $\delta_2$ and $\delta_2'$, allows us to \textit{deduce} that any construction in $\rsystemn_{\mathit{D}}$ that matches $\delta_2'$ %-- provided we have a construction in $\rsystemn_{D}$ --
    will ensure validity at $\delta_2$ and $\delta_2'$: the construction of the $\delta_2'$-induced decomposition is a structural $\vertexRelations$-transformation of the construction of the $\delta_2$-induced decomposition; in this particular case, we see that ancestor-validity implies validity.
\end{enumerate}
\end{samepage}

Whilst the example just given focuses on constructions that match the patterns that label ancestors of $\delta_2'$ in $\pdecompositionn'$, in such a way that we have structural $\vertexRelations$-transformations, we can actually make a more general claim: for any construction, $\cpair$, that matches $\pdecompositionn$ and for any pattern, $\ppaird$, that matches $\pdecompositionn'$, and for any $(\delta,\delta')\in V(\pdecompositionn)\times V(\pdecompositionn')$, given
\begin{enumerate}
\item the $\delta$-induced decomposition, $\decompositionn$, in the $\pdecompositionn$-canonical decomposition of $\cpair$, and %$\de

\item the $\delta'$-induced decomposition, $\decompositionn'$, in the $\pdecompositionn'$-canonical decomposition of $\ppaird$,
\end{enumerate}
if the construction $\constructionofD{\decompositionn'}$ is actually \textit{a construction in} $\rsystemn'$ -- so, this `part' of $\ppaird$ is a construction -- then, \textit{provided} we have ancestor-validity for $(\delta,\delta')$,  $\constructionofD{\decompositionn'}$ is, roughly speaking, guaranteed to be a structural $\vertexRelations$-transformation of $\constructionofD{\decompositionn}$. Technically, though, the transformation requires a restriction of $\vertexRelations$ to the $\delta$- and $\delta'$-induced decompositions  of $\pdecompositionn$ and $\pdecompositionn'$. Given this general claim is true, we say that $\vertexRelations$ is \textit{sound}. From a practical perspective, soundness means that if
\begin{quote}
 all of the patterns that label leaves of $\pdecompositionn'$ are replaced by matching constructions in such a way that the associated transformation constraints are satisfied
\end{quote}
then we are assured that
\begin{quote}
if we continue to replace patterns in $\pdecompositionn'$with matching constructions, yielding a construction, $\cpaird$, in $\rsystemn'$, then $\cpaird$ is a structural $\vertexRelations$-transformation of $\cpair$.
\end{quote}
Evidently, soundness is a powerful property that can be exploited in practical applications.
\end{example}

 We now formalise what it means for $\vertexRelations$ to be sound, in definition~\ref{defn:soundL}, which builds on definition~\ref{defn:validForVertices} (validity). This requires us to produce \textit{restrictions} of constraint assignments.

\begin{definition}\label{defn:restrictionOfS}
Let $\irencodingn=\irencoding$ be an inter-representational-system encoding with description $\descriptionirse$.
Let $\pdecompositionn$ and $\pdecompositionn'$ be pattern decompositions in $\descriptionn$ and $\descriptionn'$, respectively, with
constraint assignment $\vertexRelations\colon V(\pdecompositionn)\times V(\pdecompositionn') \to \setOfTokenRels$. Let $(\delta,\delta')\in  V(\pdecompositionn)\times V(\pdecompositionn')$.
The \textit{$(\delta,\delta')$-restriction} of $\vertexRelations$, denoted $\vertexRelations_{(\delta,\delta')}$ is the restriction of $\vertexRelations$ to $V(\videctree{\delta,\pdecompositionn})\times V(\videctree{\delta',\pdecompositionn'})$.
\end{definition}

\begin{definition}\label{defn:validForVertices}
Let $\irencodingn=\irencoding$ be an inter-representational-system encoding with description $\descriptionirse$. Let $\pdecompositionn$ and $\pdecompositionn'$ be pattern decompositions in $\descriptionn$ and $\descriptionn'$, respectively, with
constraint assignment $\vertexRelations\colon V(\pdecompositionn)\times V(\pdecompositionn') \to \setOfTokenRels$. Let $\cpair$ be a construction in $\rsystemn$ that matches $\pdecompositionn$. Let $\ppaird$ be a pattern for $\rsystemn'$ that matches $\pdecompositionn'$. Let $(\delta,\delta')\in  V(\pdecompositionn)\times V(\pdecompositionn')$. Let $\decompositionn$ and $\decompositionn'$ be such that
\begin{enumerate}
\item  $\decompositionn$ is the $\delta$-induced decomposition in the $\pdecompositionn$-canonical decomposition of $\cpair$, and %$\deltacanonical{\cpair,\pdecompositionn}$, and $\deltacanonical{\ppaird,\pdecompositionn'}$, and

\item $\decompositionn'$, is the $\delta'$-induced decomposition in the $\pdecompositionn'$-canonical decomposition of $\ppaird$.
\end{enumerate}
Then $\vertexRelations$ is \textit{valid} for $\cpair$ and $\ppaird$ at $(\delta,\delta')$ provided whenever $\constructionofD{\decompositionn'}$ is a construction in $\rsystemn'$ it is also the case that $\constructionofD{\decompositionn'}$ is a structural $\vertexRelations_{(\delta,\delta')}$-transformation of $\constructionofD{\decompositionn}$.  We say that $\vertexRelations$ is \textit{ancestor-valid} for $\cpair$ and $\ppaird$ at $(\delta,\delta')$ provided for all $(\varepsilon,\varepsilon')\in  \ancestors{\delta,\pdecompositionn}\times \ancestors{\delta',\pdecompositionn'}$, $\vertexRelations$ is valid for $\cpair$ and $\ppaird$ at $(\varepsilon,\varepsilon')$.
\end{definition}

\begin{definition}\label{defn:soundL}
Let $\irencodingn=\irencoding$ be an inter-representational-system encoding with description $\descriptionirse$ . Let $\pdecompositionn$ and $\pdecompositionn'$ be pattern decompositions in $\descriptionn$ and $\descriptionn'$, respectively, with
constraint assignment $\vertexRelations\colon V(\pdecompositionn)\times V(\pdecompositionn') \to \setOfTokenRels$. Then $\vertexRelations$ is \textit{sound} provided for all constructions, $\cpair$ in $\rsystemn$, that match $\pdecompositionn$, for all patterns, $\ppair$, for $\rsystemn'$ that match $\pdecompositionn'$, and for all $(\delta,\delta')\in V(\pdecompositionn)\times V(\pdecompositionn')$, whenever $\vertexRelations$ is ancestor-valid for $\cpair$ and $\ppair$ at $(\delta,\delta')$ it is also the case that $\vertexRelations$ is valid for $\cpair$ and $\ppair$ at $(\delta,\delta')$.
\end{definition}

\begin{example}[Using Sound Constraint Assignments to Solve Problems]
Recall that we set out to exploit structural transformations to establish that $1+2+3=\frac{3(3+1)}{2}$ is true. Examples~\ref{ex:FOAandDtokenRelSpec} and~\ref{ex:FOAandDotsStructuralLinks} exemplified transformations of constructions of $1+2+3$ and $\frac{3(3+1)}{2}$, into dot diagrams where the respective $\rsystemn_{D}$ constructs -- each of which was a triangular arrangement of six dots -- represented the numerical expressions.{\pagebreak} Suppose that it is known, in $\rsystemn_{D}$, that the triangular arrangement of dots constructed by $\big(\cgraphhn,\hspace{-0.1cm}\scalebox{0.74}{\tikz[baseline=0.07cm]{\oT}}\hspace{0.2cm}\big)$ is a translation of the triangular arrangement\footnote{Recall that dots in dot diagrams occupy integer coordinates in the plane. The two triangular dot diagrams here look identical, since we have not made explicit the positions occupied by the dots, yet they are positioned in different locations. The additional white space in the vertex-contour for the rightmost dot diagram is intended to signify that this token takes a different location in the plane.}
 constructed by \begin{samepage}$\big(\cgraphhn',\hspace{-0.1cm}\scalebox{0.74}{\tikz[baseline=0.07cm]{\oT}}\hspace{0.2cm}\big)$:
\begin{center}
\begin{tikzpicture}[construction,yscale=0.9]
    \dotsv
    \dotsvone
    \dotsvtwo
    \dotsu
\end{tikzpicture}
\end{center}\end{samepage}
Given the transformation constraints $\tRelSpecn_5=((\pgraphbn_1',b_1), (\pgraphan_1',x), \{(\pgraphmn_5,v_5)\})$, whose graphs are
\begin{center}
\adjustbox{scale=0.8, raise = 0.2cm}{%
\begin{tikzpicture}[construction,yscale=0.9]\small
    \dotsvp
    %\dotsvonep
%    \dotsvtwop
%    \dotsu
\end{tikzpicture}
}\hspace{2cm}
\adjustbox{scale=0.8}{%
\begin{tikzpicture}[construction,yscale=0.8]\small
\foavp
\foavonep
\foau
%
%\foavtwop
%\foavthreep
%\foavfourp
%%
%\foauone
\end{tikzpicture}
}\hspace{2cm}
\adjustbox{scale=0.8, raise = 0.1cm}{%
\begin{tikzpicture}[construction,yscale=0.8,xscale=0.8]\small
\node[typeIE = {\typetrue}] at (0,-0.3) (v) {$v_5$};
\node[constructorINE = {\constructorAnd}] at (0,-1.3) (u) {};
\node[typeIW = {\typetrue}] at (-2,-2) (v1) {$\bcolour{b_1}$};
\node[typeIE = {\typebool}] at (2,-2) (v2) {$\acolour{x}$};
    \path[->]
	(v1) edge[bend right = -10] node[ index label] {1} (u)
	(v2) edge[bend right = 10] node[ index label] {2} (u)
	(u) edge (v);
\end{tikzpicture}
}
\end{center}
and $\tRelSpecn_6=((\pgraphbn_2',b_1), (\pgraphan_2',a_1), \{(\pgraphmn_6,v_6),(\pgraphmn_6',v_6')\})$ where
\begin{center}
	\adjustbox{scale=0.8, raise = 0.2cm}{%
\begin{tikzpicture}[construction,xscale=0.7,yscale=0.9]\small
    \dotsvp
    \dotsvonep
    \dotsvtwop
    \dotsu
\end{tikzpicture}
}\hspace{0.2cm}
\adjustbox{scale=0.8, raise = 0.2cm}{%
\begin{tikzpicture}[construction,xscale=0.7,yscale=0.9]\small
%\foavp
\foavonep
%
%\foau
%
\foavtwop
\foavthreep
\foavfourp
\foauone
\end{tikzpicture}
}\hspace{0.2cm}
\adjustbox{scale=0.8}{%
\begin{tikzpicture}[construction,yscale=0.7,xscale=0.8]\small
%\node[typeIE = {\typetrue}] at (0,-0.4) (v) {$v'$};
%
%\node[constructorINE = {\constructorAnd}] at (0,-1.4) (u) {};
%
\node[typeIW = {\typetrue}] at (-2,-2) (v1) {$v_6$};
\node[typeIE = {\typetrue}] at (2,-2) (v2) {$v_6'$};
\node[constructorINW = {\constructorRepresentedBy}] at (-2,-3) (u1) {};
\node[typeW = {\FOAnumexp}] at (-3,-4) (v3) {$\acolour{a_{2}}$};
\node[typeS = {\DDdotDiagram}] at (-1,-4) (v4) {$\bcolour{b_{2}}$};
\node[constructorINE = {\constructorRepresentedBy}] at (2,-3) (u2) {};
\node[typeS = {\FOAnumexp}] at (1,-4) (v5) {$\acolour{a_{10}}$};
\node[typeE = {\DDdotDiagram}] at (3,-4) (v6) {$\bcolour{b_{7}}$};
    \path[->]
	%(v1) edge[bend right = -10] node[ index label] {1} (u)
	%(v2) edge[bend right = 10] node[ index label] {2} (u)
	%(u) edge (v)
    %
    (v3) edge[bend right = -10] node[ index label] {1} (u1)
	(v4) edge[bend right = 10] node[ index label] {2} (u1)
	(u1) edge (v1)
    (v5) edge[bend right = -10] node[ index label] {1} (u2)
	(v6) edge[bend right = 10] node[ index label] {2} (u2)
	(u2) edge (v2);
\end{tikzpicture}
}
\end{center}
\begin{samepage}along with $\pdecompositionn$ (left) and $\pdecompositionn'$ (right)
\begin{center}\label{ex:decompositionOfPatternsForLater}
\adjustbox{scale=0.8,raise=0.25cm}{%
\begin{tikzpicture}[decomposition]
        \node[draw=darkgreen, rounded corners, label=180:{\Large$\delta_1$}] (dv1) at (0,0) {\begin{tikzpicture}[construction,yscale=0.9,xscale=0.8]
    \dotsvp
    %\dotsvonep
%    \dotsvtwop
%    \dotsu
\end{tikzpicture}};
\node[draw=darkred, rounded corners, label=180:{\Large$\delta_2$}] (dv2) at (-0.5,-3.5) {\begin{tikzpicture}[construction,yscale=1,xscale=0.8]
    \dotsvp
    \dotsvonep
    \dotsvtwop
    \dotsu
\end{tikzpicture}};
\path[->]
    ([xshift=-0.0cm]dv2.north) edge[out = 90, in = -90] node[decomp arrow label] {1} ([xshift=-0.2cm]dv1.south);
	\end{tikzpicture}
}\hspace{1.5cm}
\adjustbox{scale=0.8}{%
\begin{tikzpicture}[decomposition]
        \node[draw=darkgreen, rounded corners, label=180:{\Large$\delta_1'$}] (dv1) at (0,0) {\begin{tikzpicture}[construction,yscale=0.92,xscale=0.8]
\foavp
\foavonep
\foau
%
%\foavtwop
%\foavthreep
%\foavfourp
%%
%\foauone
\end{tikzpicture}};
\node[draw=darkred, rounded corners, label=180:{\Large$\delta_2'$}] (dv4) at (0.5,-3.5) {\begin{tikzpicture}[construction,yscale=0.92,xscale=0.8]
%\foavp
\foavonep
%
%\foau
%
\foavtwop
\foavthreep
\foavfourp
\foauone
\end{tikzpicture}};
	\path[->]
    ([xshift=-0.3cm]dv4.north) edge[out = 90, in = -90] node[decomp arrow label] {1} ([xshift=-0.2cm]dv1.south)
;
	\end{tikzpicture}
}
\end{center}\end{samepage}
we define $\vertexRelations(\delta_1,\delta_1')=\tRelSpecn_5$ and $\vertexRelations(\delta_2,\delta_2')=\tRelSpecn_6$. Given
\begin{enumerate}
\item any construction in $\rsystemn_{\FOA}$ that matches $\pdecompositionn$, and
\item any construction in $\rsystemn_{D}$, that matches the pattern that labels $\delta_2'$ and ensures $\tRelSpecn_6$ is satisfied,
\end{enumerate}
we are assured that, when we complete the instantiation of $\delta_1'$, $x$ will be replaced by a token of type $\typetrue$. This is because $\vertexRelations$ is sound.{\pagebreak} In the context of our running example, we have the following \begin{samepage}constructions:
\begin{center}\label{ex:decompositionOfPatternsForLater}
\adjustbox{scale=0.8,raise=0.3cm}{%
\begin{tikzpicture}[decomposition]
        \node[draw=darkgreen, rounded corners, label=180:{\Large$\delta_1$}] (dv1) at (0,0) {\begin{tikzpicture}[construction,yscale=0.9]
    \dotsv
    %\dotsvonep
%    \dotsvtwop
%    \dotsu
\end{tikzpicture}};
\node[draw=darkred, rounded corners, label=180:{\Large$\delta_2$}] (dv2) at (-0.5,-3.5) {\begin{tikzpicture}[construction,yscale=1]
    \dotsv
    \dotsvone
    \dotsvtwo
    \dotsu
\end{tikzpicture}};
\path[->]
([xshift=-0.0cm]dv2.north) edge[out = 90, in = -90] node[decomp arrow label] {1} ([xshift=-0.1cm]dv1.south);
	\end{tikzpicture}
}\hspace{1.5cm}
\adjustbox{scale=0.8}{%
\begin{tikzpicture}[decomposition]
        \node[draw=darkgreen, rounded corners, label=180:{\Large$\delta_1'$}] (dv1) at (0,0) {\begin{tikzpicture}[construction,yscale=0.92]
\foavp
\foavone
\foau
%
%\foavtwop
%\foavthreep
%\foavfourp
%%
%\foauone
\end{tikzpicture}};
\node[draw=darkred, rounded corners, label=180:{\Large$\delta_2'$}] (dv4) at (0.5,-3.5) {\begin{tikzpicture}[construction,yscale=0.92]
%\foavp
\foavone
%
%\foau
%
\foavtwo
\foavthree
\foavfour
\foauone
\end{tikzpicture}};
\path[->]
([xshift=-0.3cm]dv4.north) edge[out = 100, in = -95] node[decomp arrow label] {1} ([xshift=-0.0cm]dv1.south);
	\end{tikzpicture}
}
\end{center}\end{samepage}
where $\delta_2'$ is a structural $\vertexRelations(\delta_2,\delta_2')$-transformation of $\delta_2$. By the soundness of $\vertexRelations$, we can deduce, therefore, that an instantiation of $x$, as a token in $\rsystemn_{\FOA}$, must ensure that $\tRelSpecn_5$ is satisfied. The only possibility, therefore, is that $x$ is $\top$. Using knowledge from $\rsystemn_{D}$ -- in particular that the $s_2$ triangulation is a translation of the $s_7$ triangulation -- we can infer that $1+2+3=\frac{3(3+1)}{2}$ is true.
\end{example}

We now prove, in theorem~\ref{thm:soundness}, a highly desirable property: if we have
\begin{enumerate}
\item a sound constraint assignment, $\vertexRelations\colon V(\pdecompositionn)\times V(\pdecompositionn')\to \setOfTokenRels$,
\item a construction, $\cpair$, that matches $\pdecompositionn$ and
\item  a pattern, $\ppairhatd$, whose $\pdecompositionn'$-canonical decomposition has constructions at the leaves that ensure we have `leaf validity', given $\cpair$,
\end{enumerate}
then no matter how we continue to replace vertices with tokens in $\ppairhatd$ we are assured that any resulting construction in $\rsystemn'$ is a structural $\vertexRelations$-transformation of $\cpair$.  Firstly, we require a definition concerning the leaves of a pattern decomposition.

\begin{definition}\label{defn:leafInstantiates}
Let $\rsystemn$ be a representational system, let $\pdecompositionn$ be a decomposition, and let $\ppair$ be a pattern for $\rsystemn$ that matches $\pdecompositionn$. Then $\ppair$ \textit{leaf-instantiates} $\pdecompositionn$ provided all leaves of the $\pdecompositionn$-canonical decomposition of $\ppair$ are labelled by constructions in $\rsystemn$. Given a $\ppair$ that leaf-instantiates $\pdecompositionn$, a \textit{complete extension} of $\ppair$ is a construction, $\cpair$, in $\rsystemn$ where  there is an embedding, $f\colon \cpair \to \ppair$, such that for all tokens, $t_i$, that occur as a vertex in some pattern, $\ppairp{i}$, that labels a leaf of the $\pdecompositionn$-canonical decomposition of $\ppair$, it is the case that $f^{-1}(t_i)=t_i$.
%%
%\begin{enumerate}
%\item there is an embedding, $f\colon \hat{\pgraphn} \to \pgraphn$, such that for all tokens, $t_i$, that occur as a vertex in some pattern, $\ppairp{i}$, that labels a leaf of the $\pdecompositionn$-canonical decomposition of $\ppair$, it is the case that $f^{-1}(t_i)=t_i$, and
%%
%\item $(\hat{\pgraphn},\hat{v})$ is a construction .
%\end{enumerate}
\end{definition}

\begin{definition}\label{defn:validAtLeaves}
Let $\irencodingn=\irencoding$ be an inter-representational-system encoding with description $\descriptionirse$.
Let $\pdecompositionn$ and $\pdecompositionn'$ be pattern decompositions in $\descriptionn$ and $\descriptionn'$, respectively, with
constraint assignment $\vertexRelations\colon V(\pdecompositionn)\times V(\pdecompositionn') \to \setOfTokenRels$. Let $\cpair$ be a construction in $\rsystemn$ that matches $\pdecompositionn$. Let $\ppaird$ be a pattern for $\rsystemn'$ that leaf-instantiates $\pdecompositionn'$. Then $\cpair$ and $\ppaird$ are \textit{valid at the leaves} of $\pdecompositionn$ and $\pdecompositionn'$ provided for all leaves, $l$ and $l'$, in $\pdecompositionn$ and $\pdecompositionn'$, $\vertexRelations$ is valid for $\cpair$ and $\ppaird$ at $(l,l')$.
\end{definition}

\begin{theorem}\label{thm:soundness}
Let $\irencodingn=\irencoding$ be an inter-representational-system encoding with description $\descriptionirse$. Let $\pdecompositionn$ and $\pdecompositionn'$ be pattern decompositions in $\descriptionn$ and $\descriptionn'$, respectively, with sound
constraint assignment $\vertexRelations\colon V(\pdecompositionn)\times V(\pdecompositionn') \to \setOfTokenRels$. Then for all constructions, $\cpair$, in $\rsystemn$ that match $\pdecompositionn$, and for all patterns, $\ppairhatd$, for $\rsystemn'$ that leaf-instantiate $\pdecompositionn'$, if $\cpair$ and $\ppairhatd$ are valid at the leaves of $\pdecompositionn$ and $\pdecompositionn'$ then any complete extension, $(\hat{\cgraphn}',\hat{t}')$ of $\ppairhatd$ is a structural $\vertexRelations$-transformation of $\cpair$.
\end{theorem}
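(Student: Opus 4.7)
My plan is to reduce the theorem to a single application of the validity of $\vertexRelations$ at the roots of $\pdecompositionn$ and $\pdecompositionn'$, obtained by propagating leaf-validity upward through soundness. Specifically, I would show by well-founded induction that, for every pair $(\delta,\delta')\in V(\pdecompositionn)\times V(\pdecompositionn')$, the constraint assignment $\vertexRelations$ is valid for $\cpair$ and the complete extension $(\hat{\cgraphn}',\hat{t}')$ at $(\delta,\delta')$. Once this is established, specialising to the root pair $(\treeroot{\pdecompositionn},\treeroot{\pdecompositionn'})$ delivers the conclusion: the root-induced sub-decomposition of $\deltacanonical{(\hat{\cgraphn}',\hat{t}'),\pdecompositionn'}$ is the entire canonical decomposition, so (by theorem~\ref{them:DecompositonTreeUniqueConstruction}) $\constructionofD{\cdot}$ returns $(\hat{\cgraphn}',\hat{t}')$ itself -- which is a construction in $\rsystemn'$, so the premise of the validity definition fires -- and the restriction $\vertexRelations_{(\treeroot{\pdecompositionn},\treeroot{\pdecompositionn'})}$ coincides with $\vertexRelations$ because the root-induced sub-trees cover the full vertex sets.

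For the induction, I plan to use the well-founded measure $h(\delta)+h(\delta')$, where $h(v)$ denotes the height of $v$ in its in-tree (so leaves have height $0$). Given any $(\delta,\delta')$ and any $(\varepsilon,\varepsilon')\in\ancestors{\delta,\pdecompositionn}\times\ancestors{\delta',\pdecompositionn'}$, both coordinates have strictly smaller heights, so the inductive hypothesis delivers validity at $(\varepsilon,\varepsilon')$; this yields ancestor-validity at $(\delta,\delta')$, and soundness of $\vertexRelations$ upgrades it to validity. The induction step degenerates harmlessly whenever one coordinate is a leaf: then $\ancestors{\delta,\pdecompositionn}\times\ancestors{\delta',\pdecompositionn'}=\emptyset$, so ancestor-validity is vacuous and soundness still applies. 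The genuine base case is a leaf--leaf pair $(l,l')$, where I would directly invoke the hypothesis that $\cpair$ and $\ppairhatd$ are valid at the leaves -- after transferring it to $(\hat{\cgraphn}',\hat{t}')$ as explained next.

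The main obstacle I anticipate is exactly this transfer of the leaf-validity hypothesis from $(\cpair,\ppairhatd)$ to $(\cpair,(\hat{\cgraphn}',\hat{t}'))$. The definition of a complete extension requires an embedding $f\colon(\hat{\cgraphn}',\hat{t}')\to\ppairhatd$ with $f^{-1}(t_i)=t_i$ for every token $t_i$ occurring in a leaf-pattern of $\deltacanonical{\ppairhatd,\pdecompositionn'}$; this is precisely what I need to identify the leaf-labels of $\deltacanonical{(\hat{\cgraphn}',\hat{t}'),\pdecompositionn'}$ with those of $\deltacanonical{\ppairhatd,\pdecompositionn'}$ as the very same sub-constructions. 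Granted this identification, leaf-validity for $\ppairhatd$ at once yields leaf-validity for $(\hat{\cgraphn}',\hat{t}')$, because the $\delta'$-induced sub-decompositions and the construction $\constructionofD{\cdot}$ they produce coincide on both sides. Alongside this, I would use the argument underlying lemma~\ref{lem:CpairMatchesPpairImpliesExistsdecMatchPdecImplies} to confirm that $(\hat{\cgraphn}',\hat{t}')$ itself matches $\pdecompositionn'$ (via composition of the embeddings $(\hat{\cgraphn}',\hat{t}')\to\ppairhatd$ and $\ppairhatd\to$ its canonical decomposition matching $\pdecompositionn'$), so that soundness of $\vertexRelations$ -- a property quantified over all patterns matching $\pdecompositionn'$ -- is legitimately applicable to the pair $(\cpair,(\hat{\cgraphn}',\hat{t}'))$. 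Once those bookkeeping points are settled, the remainder is a routine well-founded induction.
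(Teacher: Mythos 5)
Your proposal is correct and follows essentially the same route as the paper's proof: establish that the complete extension is a construction matching $\pdecompositionn'$, transfer leaf-validity from $\ppairhatd$ to $(\hat{\cgraphn}',\hat{t}')$ via the complete-extension embedding, propagate validity upward by a well-founded induction (your sum-of-heights measure is interchangeable with the paper's double induction on trail lengths from leaves), using soundness to upgrade ancestor-validity to validity at each step, and conclude at the root pair. The extra bookkeeping you flag (the restriction of $\vertexRelations$ at the roots coinciding with $\vertexRelations$, and the vacuous degenerate cases) is handled implicitly in the paper and is correctly resolved in your sketch.
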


\begin{proof}
Suppose that $\cpair$ and $\ppairhatd$ are valid at the leaves of $\pdecompositionn$ and $\pdecompositionn'$. Let $(\hat{\cgraphn}',\hat{t}')$ be a complete extension of $\ppairhatd$. We must establish that $(\hat{\cgraphn}',\hat{t}')$ is a structural $\vertexRelations$-transformation of $\cpair$. The proof strategy has two main parts:
\begin{enumerate}
\item show that $(\hat{\cgraphn}',\hat{t}')$ is a construction in $\rsystemn'$ that matches $\pdecompositionn'$, and
\item show that $\vertexRelations$ is valid for $\cpair$ and $(\hat{\cgraphn}',\hat{t}')$ at the roots, $\delta_r$ and $\delta_r'$ of $\pdecompositionn$ and $\pdecompositionn'$ respectively.
\end{enumerate}
For the first part of the proof, we note that there are embeddings, $f\colon (\hat{\cgraphn}',\hat{t}')\to (\hat{\pgraphn}',\hat{v}')$ and $f'\colon (\hat{\pgraphn}',\hat{v}')\to (\pgraphn',v')$, where $\ppaird$ is the construction of $\pdecompositionn'$. It can readily be shown that $(\hat{\cgraphn}',\hat{t}')$ matches $\pdecompositionn'$, since the $\pdecompositionn'$-canonical decomposition of $\ppairhatd$ can be used to produce an isomorphic $\pdecompositionn'$-canonical decomposition of $(\hat{\cgraphn}',\hat{t}')$. Since $(\hat{\cgraphn}',\hat{t}')$ is a complete extension of $\ppair$, we know that $(\hat{\cgraphn}',\hat{t}')$ is a construction in $\rsystemn'$. Hence the first part holds.

The second part of the proof proceeds by strong double induction on the length of trails, in $\pdecompositionn$ and $\pdecompositionn'$, sourced on leaves. For the base case, we consider trails of length $0$. It is given that $\cpair$ and $\ppairhatd$ are valid at the leaves of $\pdecompositionn$ and $\pdecompositionn'$. Since $(\hat{\cgraphn}',\hat{t}')$ is a complete extension of $\ppairhatd$, it trivially holds that $\cpair$ and $(\hat{\cgraphn}',\hat{t}')$ are also valid at the leaves of $\pdecompositionn$ and $\pdecompositionn'$. Assume, for any pair of vertices, $(\delta,\delta')$, in $V(\pdecompositionn)\times V(\pdecompositionn')$, for which the longest trails sourced on leaves of $\pdecompositionn$ and $\pdecompositionn'$ that have targets $\delta$ and $\delta'$ are of lengths $j$ and $k$, that $\vertexRelations$ is valid for $\cpair$ and $(\hat{\cgraphn}',\hat{t}')$ at $(\delta,\delta')$. There are two inductive steps, where we increment either $j$ or $k$ by $1$. In the first instance, suppose that $(\delta,\delta')$ is a pair of vertices in $V(\pdecompositionn)\times V(\pdecompositionn')$, for which the longest trails sourced on leaves of $\pdecompositionn$ and $\pdecompositionn'$ that have targets $\delta$ and $\delta'$ are of lengths $j+1$ and $k$, respectively. Then, for any pair of ancestors, $(\varepsilon,\varepsilon')\in \ancestors{\delta,\pdecompositionn}\times \ancestors{\delta',\pdecompositionn'}$, the lengths of the longest trails sourced on leaves that have targets $\varepsilon$ and $\varepsilon'$ are at most $j$ and $k-1$. By the inductive assumption, $\vertexRelations$ is valid for $\cpair$ and $(\hat{\cgraphn}',\hat{t}')$ at $(\varepsilon,\varepsilon')$. Hence, $\vertexRelations$ is ancestor-valid for $\cpair$ and $(\hat{\cgraphn}',\hat{t}')$ at $(\delta,\delta')$. By the soundness of $\vertexRelations$, ancestor-validity implies validity, that is $\vertexRelations$ is valid for $\cpair$ and $(\hat{\cgraphn}',\hat{t}')$ at $(\delta,\delta')$, as required. The case where we increment $k$ by $1$ is similar. Hence, $\vertexRelations$ is valid for $\cpair$ and $(\hat{\cgraphn}',\hat{t}')$ at any pair of vertices, $(\delta,\delta')$, in $V(\pdecompositionn)\times V(\pdecompositionn')$. In particular, $\vertexRelations$ is valid for $\cpair$ and $(\hat{\cgraphn}',\hat{t}')$ at the roots, $\delta_r$ and $\delta_r'$.  By the definition of validity, since $(\hat{\cgraphn}',\hat{t}')$ is a construction in $\rsystemn'$, it follows that $(\hat{\cgraphn}',\hat{t}')$ is a structural $\vertexRelations$-transformation of $\cpair$, as required.
\end{proof}

In summary, validity and soundness are highly desirable properties of constraint assignments. In terms of practical applications, soundness means that we only have to find structural transformations of leaves in decompositions provided we have representational-system-specific knowledge about how the constructors operate on tokens in $\rsystemn'$. In particular, when we have a functional and total construction space, once the leaves are instantiated (and, hence all foundation tokens are known) then, provided $\ppaird$ describes a construction in $\rsystemn'$, we know it is possible to instantiate the remaining vertices in $\ppaird$ in a unique way. When we do not have functional constructors, we are free to choose any instantiation of tokens provided the result is a construction in $\rsystemn'$. In the absence of totality, it may be that alternative choices of foundation tokens are needed in order to complete the structural transformation. But, in any of these cases, soundness reduces the need to seek a property identifier, $\setofconstructionsirse$, for $\irencodingn$ when checking for the satisfaction of transformation constraints. In essence, we only require partial knowledge of $\ipispacen$ when exploiting structure transfer algorithms for representation selection.

%
%\pagebreak
\section{Conclusion}\label{sec:conclusion}

It is self-evident that the study of representations and, more generally, representational systems is of fundamental importance to understanding how to effectively communicate and reason about information. The novel contribution of this paper is \textit{Representational Systems Theory}, which delivers several major advances over the existing state-of-the-art:
\begin{enumerate}
\item \textit{Contribution 1:} Representational Systems Theory provides \textit{the first general framework} that facilitates the study of representational systems. A key aspect of our contribution is the introduction of construction spaces, which provide a single, unifying paradigm that allows us to encode representational systems at the syntactic levels of their grammatical rules and entailment relations, as well as the meta-level properties of their tokens.
\item \textit{Contribution 2:} Our theory facilitates the development and implementation of \textit{repre\-sentational-system-agnostic transformation algorithms}, removing the current need to devise system-specific algorithms, because of its unified nature. The approach taken to defining transformations is designed to support \textit{representation selection} based on desired structural relations and meta-level properties holding between representations.
\item \textit{Contribution 3:} Our theory \textit{permits the derivation of partial transformations} when no terminating algorithm can produce a full transformation.
\end{enumerate}
The abstract, mathematical nature of Representational Systems Theory leads it to occupy a fundamental position in the study of representations. In this sense, one can liken our contribution to the development of category theory for the study of mathematical structures: just as category theory provides foundational tools for mathematicians, our theoretical results provide foundational tools for examining representational systems.

An exciting next step is to develop structure transfer algorithms, further demonstrating the potential for Representational Systems Theory to substantially improve our understanding of representational systems as well as significantly enhancing their practical use. We have already produced a prototype implementation of a structure transfer algorithm. When providing our tool with constructions of \textsc{First-Order Arithmetic} representations, along with transformation constraints similar to those in this paper, it automatically produces constructions of semantically equivalent dot diagrams, or partially instantiated patterns (depending on the knowledge base it has available). We plan to extend the knowledge base to other representational systems, such as \textsc{Bayesian Probability} and \textsc{Area Diagrams}, to more fully exemplify the practical utility of our theory. To take our implementation beyond transformations geared towards preserving semantics, we are particularly keen to devise  general models of cognition that can be encoded using identification spaces. This will drive forwards our ability to automatically enable the selection of representations that are cognitively more effective, which is paramount to improving representation use. In summary, Representational Systems Theory is the first general framework that supports representation selection, via structural transformations, and has the potential for widespread practical application.

\section*{Acknowledgments}
 We thank Jean Flower, Alan Bundy and Peter Chapman for many helpful comments on earlier drafts of this paper. The research is funded by the UK EPSRC, grant numbers EP/T019603/1 and EP/T019034/1.
%\pagebreak

%\input{acmNotes}

%%
%% The next two lines define the bibliography style to be used, and
%% the bibliography file.
\bibliographystyle{plain}
\bibliography{vmgbibliography}

%%
%% If your work has an appendix, this is the place to put it.
\appendix
\section{Notation}\label{sec:defnsAndNotation}
The paper introduces a large number of new concepts and exploits a lot of notation. This appendix provides a summary of the novel notation, in the order that it appears in the paper. For pre-existing concepts, we refer to Section~\ref{sec:prelims}.

\setlength{\columnsep}{0.25cm}
\renewcommand*{\arraystretch}{1.4}
\begin{longtable}{ p{.17\textwidth}  p{.80\textwidth} }
$\tsystemn=\tsystem$ &
 a type system, where $\types$ is a set of types and $\leq$ is a partial order over $\types$; definition~\ref{defn:typeSystem}, page~\pageref{defn:typeSystem}\\
$\cspecificationn=(\constructors, \sig)$ &  a constructor specification, where $\constructors$ is a set of constructors and $\sig\colon \constructors \to \sequence(\types)\times \types$ maps constructors to signatures; definition~\ref{defn:constructionSpecification}, page~\pageref{defn:constructionSpecification}\\
$\inputsty{c}$ & the input type-sequence for constructor $c$; definition~\ref{defn:constructionSpecification}, page~\pageref{defn:constructionSpecification} \\
$\outputsty{c}$ & the output type for constructor $c$; definition~\ref{defn:constructionSpecification}, page~\pageref{defn:constructionSpecification}\\
$\graphn$ & a bipartite graph, $\graphn=\graph$, which may be a configuration or a structure graph; definition~\ref{defn:configuration}, page~\pageref{defn:configuration} (configuration) and definition~\ref{defn:structureGraph}, page~\pageref{defn:structureGraph}\\
$\tokens$ & a set of vertices, called tokens, in a bipartite graph; definition~\ref{defn:configuration}, page~\pageref{defn:configuration}\\
$\pb$ & a set of vertices, called configurators, in a bipartite graph; definition~\ref{defn:configuration}, page~\pageref{defn:configuration}\\
$\arrows$ & a set of arrows in a bipartite graph; definition~\ref{defn:configuration}, page~\pageref{defn:configuration}\\
$\incVert$ & a function that maps each arrow to its incident source and target vertices; definition~\ref{defn:configuration}, page~\pageref{defn:configuration}\\
$\arrowl$ & a function that labels the arrows in a graph with indices; definition~\ref{defn:configuration}, page~\pageref{defn:configuration}\\
$\tokenl$ & a function that labels tokens in a graph with types; definition~\ref{defn:configuration}, page~\pageref{defn:configuration}\\
$\consl$ & a function that labels configurators in a graph with constructors; definition~\ref{defn:configuration}, page~\pageref{defn:configuration}\\
%
%$\pb$ & a set of vertices in a bipartite graph whose elements are called configurators; definition~\ref{defn:configuration}, page~\pageref{defn:configuration}\\
%
$\outputsto{u}$ & the output token that is the target of the arrow sourced on configurator $u$ in a configuration; definition~\ref{defn:configuration}, page~\pageref{defn:configuration}\\
$\outputsty{u}$ & the type assigned to the token that is the output of configurator $u$ in a configuration; definition~\ref{defn:configuration}, page~\pageref{defn:configuration}\\
$\inputsA{u}$ & the input arrow-sequence of configurator $u$ in a configuration; definition~\ref{defn:configuration}, page~\pageref{defn:configuration}\\
$\inputsto{u}$ & the input token-sequence of configurator $u$ in a configuration; definition~\ref{defn:configuration}, page~\pageref{defn:configuration}\\
$\inputsty{u}$ & the input type-sequence of configurator $u$ in a configuration; definition~\ref{defn:configuration}, page~\pageref{defn:configuration}\\
$\cspacen=\cspace$ & a construction space where $\tsystemn=\tsystem$ is a type system, $\cspecificationn=\cspecification$ is a constructor specification over $\tsystemn$, and $\graphn=\graph$ is a structure graph for $\cspecificationn$; definition~\ref{defn:constructionSpace}, page~\pageref{defn:constructionSpace}\\
$\subtypeClose(\types', \tsystemn)$ & the subtype closure of $\types'$ given a type system, $\tsystemn$; definition~\ref{defn:subtypeClosure}, page~\pageref{defn:subtypeClosure}\\
$\subtypeSets(\tsystemn)$ & the set of all subtype closures given a type system, $\tsystemn$; definition~\ref{defn:subtypeClosure}, page~\pageref{defn:subtypeClosure}\\
$\ispacen$ & an identification space, $\ispacen=\ispace$, where $\tsystemn=\tsystem$ and $\mpsystemn=\mpsystem$ are compatible type systems; definition~\ref{defn:identificationSpace}, page~\pageref{defn:identificationSpace}\\
$\rsystemn= \rsystem$ & a representational system formed over type system, $\tsystemn$, and meta-type system, $\mpsystemn$, where $\gspacen$, $\espacen$, and $\ispacen$ are the grammatical, entailment and identification spaces; definition~\ref{defn:representationalSystem}, page~\pageref{defn:representationalSystem}\\
$\graphn_{\gspacen}$ & the structure graph associated with the grammatical space, $\gspacen$; definition~\ref{defn:representationalSystem}, page~\pageref{defn:representationalSystem}\\
$\graphn_{\espacen}$ & the structure graph associated with the entailment space, $\espacen$; definition~\ref{defn:representationalSystem}, page~\pageref{defn:representationalSystem}\\
$\graphn_{\ispacen}$ & the structure graph associated with the identification space, $\ispacen$; definition~\ref{defn:representationalSystem}, page~\pageref{defn:representationalSystem}\\
$\uspace{\rsystemn}$ &  the universal space of a representational system, $\rsystemn$; definition~\ref{defn:universalSpace}, page~\pageref{defn:universalSpace}\\
$\ugraph{\rsystemn}$ & the \textit{universal structure graph} for a representational system, $\rsystemn$;  definition~\ref{defn:universalSpace}, page~\pageref{defn:universalSpace}\\
$\irencodingn$ & an inter-representational-system encoding, $\irencodingn=\irencoding$, where $\rsystemn$ and $\rsystemn'$ are compatible representational systems, and $\ispacen''$ is an identification space that satisfies various condititions;  definition~\ref{defn:interRSEncoding}, page~\pageref{defn:interRSEncoding}\\
$\ipispacen$ & an inter-property identification space, where $\irencodingn$ is an inter-representational-system encoding; definition~\ref{defn:interRSEncoding}, page~\pageref{defn:interRSEncoding}\\
$\cpair$ & a construction, where $\cgraphn$ is a finite, uni-structured, structure graph, and $t$ is a token such that each vertex in $\cgraphn$ is the source of a trail that targets $t$; definition~\ref{defn:construction}, page~\pageref{defn:construction}\\
$\allconstructions{\rsystemn}$ & the set of constructions in a representational system, $\rsystemn$; definition~\ref{defn:setOfConstructions}, page~\pageref{defn:setOfConstructions}\\
$\tesequence{\trail}$ & the trail extension sequence of trail $\trail$; definition~\ref{defn:tes}, page~\pageref{defn:tes}\\
$\ctsequence{\cgraphn,v}$ & the complete trail sequence of construction $\cpair$; definition~\ref{defn:CTS}, page~\pageref{defn:CTS}\\
$\ssequence{\trailSeq}$ & the source sequence of a sequence, $\trailSeq$, of trails; definition~\ref{defn:sourceSequence}, page~\pageref{defn:sourceSequence}\\
$\foundationsto{\cgraphn,t}$ & the foundation token-sequence of construction $\cpair$; definition~\ref{defn:foundations}, page~\pageref{defn:foundations}\\
$\foundationsty{\cgraphn,t}$ & the foundation type-sequence of construction $\cpair$; definition~\ref{defn:foundations}, page~\pageref{defn:foundations}\\
$\trailToGraph{\trailSeq}$ & the graph formed from the trails in the trail-sequence $\trailSeq$; definition~\ref{defn:tr-graph}, page~\pageref{defn:tr-graph}\\
$\icgp{\trail}$ & the graph formed from the trails in $\tesequence{\trail}$; definition~\ref{defn:inducedConstruction}, page~\pageref{defn:inducedConstruction}\\
$\icp{\trail}$ & the construction $\icp{\trail}=(\icgp{\trail},\source{\trail})$; definition~\ref{defn:inducedConstruction}, page~\pageref{defn:inducedConstruction}\\
$\icsequence{\trailSeq}$ & the induced construction sequence, $\icsequence{\trailSeq}=[\icp{\trail_1},\ldots,\icp{\trail_n}]$, where $\trailSeq=[\trail_1,...,\trail_n]$;  definition~\ref{defn:inducedConstruction}, page~\pageref{defn:inducedConstruction}\\
$\csplit$ & a split of a construction, $\cpair$, where $(\cgraphn',t)$ is a generator and $\ics$ is the induced construction sequence of $\ctsequence{\cgraphn',t}$ in $\cpair$; definition~\ref{defn:split}, page~\pageref{defn:split}\\
%
%$\splitarrow$ & used to denote that $\csplit$ is a split of $\cpair$: $\cpair\splitarrow \csplit$; definition~\ref{defn:split}, page~\pageref{defn:split}\\
%
$\decompositionn$ & a decomposition, $\decompositionn=\decomposition$, which is type of a directed, rooted in-tree; definition~\ref{defn:decompositionTree}, page~\pageref{defn:decompositionTree} (directed, rooted in-tree), and definition~\ref{defn:decomposition}, page~\pageref{defn:decomposition} (decomposition)\\
$\treeroot{\decompositionn}$ & the root of decomposition $\decompositionn$; definition~\ref{defn:decTreeSequence}, page~\pageref{defn:decTreeSequence}\\
$\videctree{v}$ & the $v$-induced decomposition tree, which is the maximal sub-tree of $\decompositionn$ with root $v$; definition~\ref{defn:decTreeSequence}, page~\pageref{defn:decTreeSequence}\\
$\dsequence{v}$ & the decomposition-tree sequence, $\dsequence{v}=[\videctree{v_1},\ldots,\videctree{v_n}]$, of $v$; definition~\ref{defn:decTreeSequence}, page~\pageref{defn:decTreeSequence}\\
$\ancestors{v}$ & the ancestors of vertex $v$; definition~\ref{defn:decTreeSequence}, page~\pageref{defn:decTreeSequence}\\
 $\ancestorsandvertex{v}$ & the ancestors of vertex $v$ together with $v$; definition~\ref{defn:decTreeSequence}, page~\pageref{defn:decTreeSequence}\\
$\labc$ & the vertex-labelling function of a decomposition, $\decompositionn=\decomposition$, where each vertex is labelled by a construction; definition~\ref{defn:decomposition}, page~\pageref{defn:decomposition}\\
$\lcsequence{\decompositionn}$ & the leaf construction sequence of decomposition $\decompositionn$; definition~\ref{defn:leafConstructionSequence}, page~\pageref{defn:leafConstructionSequence}\\
$\constructionofD{\decompositionn}$ & the construction with decomposition $\decompositionn$;
definition~\ref{defn:constructionOfD}, page~\pageref{defn:constructionOfD}\\
$\cpair \matches \ppair$ & constuction $\cpair$ is a specialisation of, equivalently matches, pattern $\ppair$; definition~\ref{defn:embeddingPattern}, page~\pageref{defn:embeddingPattern}\\
$\pspacen=\pspace$ & a pattern space, where $\tsystemn$ is a type system, $\cspecificationn$ is a constructor specification, and $\patterngraphn$ is a structure graph; definition~\ref{defn:patternSpace}, page~\pageref{defn:patternSpace}\\
$\ppair$ & a construction that arises in some pattern space; definition~\ref{defn:matchAndDescribe}, page~\pageref{defn:matchAndDescribe}\\
$\labp$ & the vertex labelling function for a pattern decomposition, $\pdecompositionn$; first usege, definition~\ref{defn:decompositionMatchesPattern}, page~\pageref{defn:decompositionMatchesPattern}\\
$\decompositionn\matches \pdecompositionn$ & the decomposition $\decompositionn$, of a construction, matches the decomposition $\pdecompositionn$, of a pattern;
definition~\ref{defn:decompositionMatchesPattern}, page~\pageref{defn:decompositionMatchesPattern}\\
$\descriptionn$ & a description of a construction space; definition~\ref{defn:descriptionOfCSpace}, page~\pageref{defn:descriptionOfCSpace}\\
$V(\descriptionn)$  & the set of vertices of a description, $\descriptionn$; definition~\ref{defn:verticesOfDecomposition}, page~\pageref{defn:verticesOfDecomposition}\\
$\descriptionirse$ & a description of an inter-representational-system encoding; definition~\ref{defn:irsedescription}, page~\pageref{defn:irsedescription}\\
$\setofpatterns(\descriptionn)$ & the set of patterns arising in the description $\descriptionn$; definition~\ref{defn:descriptionPatterns}, page~\pageref{defn:descriptionPatterns}\\
$\patternequivclasses{\descriptionn}$ & the set of equivalence classes of patterns arising in the description $\descriptionn$; definition~\ref{defn:descriptionPatterns}, page~\pageref{defn:descriptionPatterns}\\
$\deltacanonical{\cpair,\pdecompositionn}$ & the $\pdecompositionn$-canonical decomposition of $\cpair$; definition~\ref{defn:deltaCanonical}, page~\pageref{defn:deltaCanonical}\\
$\setofconstructionsirse$ & a set of constructions, called a property identifier, drawn from $\irencoding$; definition~\ref{defn:propertyIDandEn}, page~\pageref{defn:propertyIDandEn}\\
$\setofpatternstc$ & a set of patterns, called a property enforcer, drawn from $\setofpatterns''$; definition~\ref{defn:propertyIDandEn}, page~\pageref{defn:propertyIDandEn}\\
$\tRelSpecn$ &  a transformation constraint, which is a triple comprising two patterns, $\ppaira$ and $\ppairb$, and a property enforcer $\setofpatternstc$; definition~\ref{defn:st:tokenRelSpec}, page~\pageref{defn:st:tokenRelSpec}\\
$\patterngraphn(\setofpatternstc)$ & the structure graph formed from a set, $\setofpatternstc$, of patterns; definition~\ref{defn:patternGraphOfPatterns}, page~\pageref{defn:patternGraphOfPatterns}\\
$\tRelSpecn\models \tRelSpecn'$ & denotes satisfaction of transformation constraint $\tRelSpecn'$ by $\tRelSpecn$; definition~\ref{defn:satisfies}, page~\pageref{defn:satisfies}\\
$\tRelSpecn\cong \tRelSpecn'$ & equivalent transformation constraints; definition~\ref{defn:equivalentTokenRelSpecs}, page~\pageref{defn:equivalentTokenRelSpecs}\\
$\vertexRelations$ & a constraint assignment that maps pairs of vertices to transformation constraints;  definition~\ref{defn:transformationSpecification}, page~\pageref{defn:transformationSpecification}\\
$\setOfTokenRels$ & a set of transformation constraints;  definition~\ref{defn:transformationSpecification}, page~\pageref{defn:transformationSpecification}\\
$\vertexRelations(p,p')$ & the $(p,p')$-restriction of constraint assignment $\vertexRelations$; definition~\ref{defn:restrictionOfS}, page~\pageref{defn:restrictionOfS}\\
\end{longtable}

\section{Terminology}\label{sec:defnsTerms}
Complementing Appendix~\ref{sec:defnsAndNotation}, this appendix provides a summary of the novel terminology in the order it appears in the paper; for pre-existing concepts, we refer to Section~\ref{sec:prelims}. For brevity, the phrasing used in this appendix is loose and is, thus, intended only to provide informal insight to aid recollection; we encourage revisiting the precisely phrased definitions earlier in the paper in any case where an unambiguous understanding is required.

\renewcommand*{\arraystretch}{1.4}
\begin{longtable}{ p{.19\textwidth} p{0.01\textwidth} p{.738\textwidth} }
type system & &a pair comprising a set of types, $\types$, and a partial order, $\leq$, over $\types$; definition~\ref{defn:typeSystem}, page~\pageref{defn:typeSystem}\\
subtype  & & $\tau_1$ is a subtype of $\tau_2$ provided $\tau_1\leq \tau_2$; definition~\ref{defn:typeSystem}, page~\pageref{defn:typeSystem}\\
supertype  & & $\tau_2$ is a supertype of $\tau_1$ provided $\tau_1\leq \tau_2$; definition~\ref{defn:typeSystem}, page~\pageref{defn:typeSystem}\\
constructor \par specification  & & a pair comprising a set, $\constructors$, of \textit{constructors} and a function, $\spec$,  that returns the signature of each constructor; definition~\ref{defn:constructionSpecification}, page~\pageref{defn:constructionSpecification}\\
constructor & & elements of the set, $\constructors$, in a constructor specification; definition~\ref{defn:constructionSpecification}, page~\pageref{defn:constructionSpecification}\\
signature of a \par constructor  & & a pair comprising a non-empty sequence of types and a type; definition~\ref{defn:constructionSpecification}, page~\pageref{defn:constructionSpecification}\\
input \par type-sequence \par of a constructor & & given a constructor, $c$, $\inputsty{c}=[\tau_1,\ldots,\tau_n]$, where  $\spec(c)=([\tau_1,\ldots,\tau_n],\tau)$; definition~\ref{defn:constructionSpecification}, page~\pageref{defn:constructionSpecification}\\
output type \par of a constructor & & given a constructor $c$, $\outputsty{c}=\tau$, where $\spec(c)=([\tau_1,\ldots,\tau_n],\tau)$; definition~\ref{defn:constructionSpecification}, page~\pageref{defn:constructionSpecification}\\
specialisation & &  given a sequence, $\T=[\tau_1,\ldots,\tau_n]$,  of types, a specialisation is a sequence, $\T'$, obtained by replacing each type in $\T$ with a subtype; definition~\ref{defn:specialization}, page~\pageref{defn:specialization}\\
generalisation & & given a sequence, $\T=[\tau_1,\ldots,\tau_n]$, of types, a generalisation is a sequence, $\T'$, obtained by replacing each type in $\T$ with a supertype; definition~\ref{defn:specialization}, page~\pageref{defn:specialization}\\
configuration & & a directed bipartite graph that contains one vertex, $u$, labelled by a constructor, together with vertices (called tokens) that are inputs to $u$ and one vertex (a token) that is the output of $u$; definition~\ref{defn:configuration}, page~\pageref{defn:configuration}\\
$c$-configurator & &  a vertex in a configuration that is labelled by the constructor, $c$; definition~\ref{defn:configuration}, page~\pageref{defn:configuration}\\
token & & the input or output of a configurator; definition~\ref{defn:configuration}, page~\pageref{defn:configuration}\\
output token & & the vertex that is the output of a configurator; definition~\ref{defn:configuration}, page~\pageref{defn:configuration}\\
output type & & the type assigned to the vertex that is the output of a configurator; definition~\ref{defn:configuration}, page~\pageref{defn:configuration}\\
input \par arrow-sequence\par of a configurator &  & the sequence of arrows, ordered by their indices, that are incoming to a configurator; definition~\ref{defn:configuration}, page~\pageref{defn:configuration}\\
input \par token-sequence \par of a configurator & & the sequence of tokens, ordered by their incident arrows' indices, that are inputs to a configurator; definition~\ref{defn:configuration}, page~\pageref{defn:configuration}\\
input \par type-sequence  \par of a configurator& & the type sequence obtained from an input token-sequence by replacing tokens with their assigned types; definition~\ref{defn:configuration}, page~\pageref{defn:configuration}\\
structure graph & & a graph formed from a set of configurations; definition~\ref{defn:structureGraph}, page~\pageref{defn:structureGraph}\\
configurator & &  a vertex in a structure graph that is labelled by a constructor; definition~\ref{defn:structureGraph}, page~\pageref{defn:structureGraph}\\
construction space formed over $\tsystemn$ & & a triple comprising a type system, $\tsystemn$, a constructor specification, $\cspecification$, and a structure graph, $\graphn$; definition~\ref{defn:constructionSpace}, page~\pageref{defn:constructionSpace}\\
compatible \par type systems & & two type systems whose union is a type system; definition~\ref{defn:compatible}, page~\pageref{defn:compatible}\\
compatible \par construction space & &  two construction spaces whose union is a construction space; definition~\ref{defn:compatibleCS}, page~\pageref{defn:compatibleCS}\\
token-functional & &  for any input token-sequence to a constructor, there is a unique output token; definition~\ref{defn:deterministic}, page~\pageref{defn:deterministic}\\
type-functional & & for any input type-sequence to a constructor, there is a unique output type; definition~\ref{defn:deterministic}, page~\pageref{defn:deterministic}\\
functional & & a construction space that is both token- and type-functional is functional; definition~\ref{defn:deterministic}, page~\pageref{defn:deterministic}\\
instantiated & & a type, $\tau$, (resp. sequence of types, $[\tau_1,\ldots, \tau_n]$) is instantiated provided some token, $t$, (resp. sequence of tokens, $[t_1,\ldots, t_n]$) ensures $\type(t)\leq \tau$ (resp. for each $i$, $\type(t_i)\leq \tau_i$); definition~\ref{defn:instantiated}, page~\pageref{defn:instantiated}\\
instantiation & & a token, $t$, (resp. sequence of tokens, $[t_1,\ldots, t_n]$) is an instantation of type $\tau$ (resp. sequence of types, $[\tau_1,...,\tau_n]$) provided $\type(t)\leq \tau$ (resp. for each $i$, $\type(t_1)\leq \tau_1$); definition~\ref{defn:instantiated}, page~\pageref{defn:instantiated}\\
direct \par instantiation & & an instantiation, $t$, of type $\tau$, where $\type(t)=\tau$; definition~\ref{defn:instantiated}, page~\pageref{defn:instantiated}\\
token-total & & for any instantiation of the input type-sequence of a constructor, there is an output token; definition~\ref{defn:total}, page~\pageref{defn:total}\\
type-total & & for any specialisation, $\T$, of the input type-sequence of a constructor, there is an input token-sequence that specialises $\T$; definition~\ref{defn:total}, page~\pageref{defn:total}\\
total & & a construction space that is both token- and type-total is total; definition~\ref{defn:total}, page~\pageref{defn:total}\\
subtype closure & & given a set of types, $\types$, its closure under the subtype relation; definition~\ref{defn:subtypeClosure}, page~\pageref{defn:subtypeClosure}\\
upper bound \par extension & & an extension of a type system, $\tsystemn$, such that the subtype closure of each set of types has a fresh upper bound; definition~\ref{defn:intertionofLUB}, page~\pageref{defn:intertionofLUB}\\
defining \par upper bound & & a fresh type that is an upper bound of a subtype-closed set of types; definition~\ref{defn:intertionofLUB}, page~\pageref{defn:intertionofLUB}\\
identification space formed over $\tsystemn$ and $\mpsystemn$ & & a construction space formed over two compatible type systems; definition~\ref{defn:identificationSpace}, page~\pageref{defn:identificationSpace}\\
meta-type system & & one of the type systems over which an identification space is formed; definition~\ref{defn:identificationSpace}, page~\pageref{defn:identificationSpace}\\
representational system & & a triple comprising grammatical space, entailment space, and identification space that satisfy certain properties; definition~\ref{defn:representationalSystem}, page~\pageref{defn:representationalSystem}\\
grammatical space & & a functional construction space; definition~\ref{defn:representationalSystem}, page~\pageref{defn:representationalSystem}\\
entailment space & & a construction space that only has tokens occuring in a grammatical space; definition~\ref{defn:representationalSystem}, page~\pageref{defn:representationalSystem}\\
identification space & & in a representational system, a construction space that identifies propertes of tokens; definition~\ref{defn:representationalSystem}, page~\pageref{defn:representationalSystem}\\
meta-tokens & & tokens that occur in an identification space but not a grammatical space; definition~\ref{defn:representationalSystem}, page~\pageref{defn:representationalSystem}\\
universal space & & the space that is the union of the grammatical, entailment and identification spaces of a representational system; definition~\ref{defn:universalSpace}, page~\pageref{defn:universalSpace}\\
universal structure graph & & the structure graph of the universal space of a representational system;  definition~\ref{defn:universalSpace}, page~\pageref{defn:universalSpace}\\
compatible \par representational systems& & two representational systems whose union is a representational system; definition~\ref{defn:compatibleRSs}, page~\pageref{defn:compatibleRSs}\\
inter-representational-system encoding & & a triple comprising two representational systems along with an identification space that identifies properties of their tokens; definition~\ref{defn:interRSEncoding}, page~\pageref{defn:interRSEncoding}\\
inter-property identification space & & the space formed by unioning the identification spaces of two representational systems along with the identification space of an inter-representational-system encoding; definition~\ref{defn:interRSEncoding}, page~\pageref{defn:interRSEncoding}\\
uni-structured & & a property of a structure graph that asserts every token is the target of at most one arrow; definition~\ref{defn:uniStructured}, page~\pageref{defn:uniStructured}\\
construction & & a finite uni-structured structure graph, with an identified vertex, $t$, such that each vertex lies on a trail to $t$; definition~\ref{defn:construction}, page~\pageref{defn:construction}\\
constructs & & a construction constructs the identified vertex; definition~\ref{defn:construction}, page~\pageref{defn:construction}\\
construct & & the identified vertex of a construction; definition~\ref{defn:construction}, page~\pageref{defn:construction}\\
trivial \par construction & & a construction containing exactly one vertex; definition~\ref{defn:construction}, page~\pageref{defn:construction}\\
basic construction  & & a construction containining exactly one configurator; definition~\ref{defn:construction}, page~\pageref{defn:construction}\\
construction in $\rsystemn$ & & a construction that arises in the structure graph of representational system $\rsystemn$; definition~\ref{defn:setOfConstructions}, page~\pageref{defn:setOfConstructions}\\
well-formed trail & & a trail whose source and target are both tokens; definition~\ref{defn:wellformedTrail}, page~\pageref{defn:wellformedTrail}\\
trail extension \par sequence & & given a trail, $\trail$, in a construction, $\cpair$, its trail extension sequence comprises all trails in $\cgraphn$ that maximally extend $\trail$, ordered so that they respect the arrow indices; definition~\ref{defn:tes}, page~\pageref{defn:tes}\\
complete trail \par sequence & & the trail extension sequence of $[]_t$ (the empty trail that targets $t$), given a construction $\cpair$; definition~\ref{defn:CTS}, page~\pageref{defn:CTS}\\
source sequence & & given a sequence of trails, the induced sequence of sources of these trails; definition~\ref{defn:sourceSequence}, page~\pageref{defn:sourceSequence}\\
foundation \par token-sequence & & the source sequence of a construction's complete trail sequence; definition~\ref{defn:foundations}, page~\pageref{defn:foundations}\\
foundation \par type-sequence & & the sequence obtained by replacing tokens, in the foundation token-sequence, with their types; definition~\ref{defn:foundations}, page~\pageref{defn:foundations}\\
arity of a \par construction & & the length of the foundation token-sequence; definition~\ref{defn:foundations}, page~\pageref{defn:foundations}\\
generator & & given a construction, $\cpair$, another construction, $\genpair$, where $\cgraphn'$ is a subgraph of $\cgraphn$; definition~\ref{defn:generator}, page~\pageref{defn:generator}\\
$\trailSeq$-graph & & the graph containing precisely the trails in the trail sequence $\trailSeq$; definition~\ref{defn:tr-graph}, page~\pageref{defn:tr-graph}\\
induced construction graph of a trail & & the graph obtained from the trail extension sequence of the trail; definition~\ref{defn:inducedConstruction}, page~\pageref{defn:inducedConstruction}\\
induced construction of a trail & & the construction formed from the induced construction graph of the trail, with the source of the trail as its construct; definition~\ref{defn:inducedConstruction}, page~\pageref{defn:inducedConstruction}\\
induced construction sequence & &  the sequence of constructions induced by the trails in a sequence of trails; definition~\ref{defn:inducedConstruction}, page~\pageref{defn:inducedConstruction}\\
split of a \par construction & & a generator together with the induced construction sequence that arises from the generator's complete trail sequence; definition~\ref{defn:split}, page~\pageref{defn:split}\\
decomposition tree &  & a directed, arrow-labelled in-tree, where for each vertex, $v$, the incoming arrows are numbered from 1 to $|\inA{v}|$; definition~\ref{defn:decompositionTree}, page~\pageref{defn:decompositionTree}\\
$v$-induced \par decomposition tree &  & the maximal sub-tree of a decompostion tree with root $v$; definition~\ref{defn:decTreeSequence}, page~\pageref{defn:decTreeSequence}\\
decomposition-tree sequence of $v$ &  & the sequence of induced decomposition trees arising from the sources of $v$'s incoming arrows; definition~\ref{defn:decTreeSequence}, page~\pageref{defn:decTreeSequence}\\
$v$-ancestors &  & the vertices that appear in the trees occuring in the decompostion-tree sequence of $v$; definition~\ref{defn:decTreeSequence}, page~\pageref{defn:decTreeSequence}\\
decomposition of a construction, $\cpair$ &  & a decomposition tree whose root, $\treeroot{\decompositionn}$, is labelled by a generator, $\genpair$, of $\cpair$ and the $\treeroot{\decompositionn}$-induced decomposition trees are labelled by generators of the induced constructions in the $\genpair$-split of $\cpair$, and so forth; definition~\ref{defn:decomposition}, page~\pageref{defn:decomposition}\\
leaf construction-sequence of \par decomposition $\decompositionn$ &  & the sequence of leaves of $\decompositionn$, ordered using the arrow indices; definition~\ref{defn:leafConstructionSequence}, page~\pageref{defn:leafConstructionSequence}\\
construction of $\decompositionn$ &  & the construction whose decomposition is $\decompositionn$; definition~\ref{defn:constructionOfD}, page~\pageref{defn:constructionOfD}\\
$\cgraphn$ is a specialisation of $\pgraphn$ &  & $\cgraphn$ is label-isomorphic, up to the tokens in $\cgraphn$, to $\pgraphn$ and the label of each token in $\cgraphn$ is a subtype of that used by its identified vertex in $\pgraphn$; definition~\ref{defn:embeddingPattern}, page~\pageref{defn:embeddingPattern}\\
embedding of $\cgraphn$ \par into $\pgraphn$ &  & an isomorphism that establishes $\cgraphn$ is a specialisation of $\pgraphn$; definition~\ref{defn:embeddingPattern}, page~\pageref{defn:embeddingPattern}\\
pattern space, $\pspacen$, \par for a construction space, $\cspacen$ &  & $\pspacen$ is formed over the same type system and has the same constructor specification as $\cspacen$, and is required to ensure that every construction in $\cspacen$ is a specialisation of a construction in $\pspacen$; definition~\ref{defn:patternSpace}, page~\pageref{defn:patternSpace}\\
pattern for a construction space $\cspacen$ &  & a construction in a pattern space for $\cspacen$; definition~\ref{defn:matchAndDescribe}, page~\pageref{defn:matchAndDescribe}\\
$\cpair$ matches \par $\ppair$ &  & $\cpair$ specialises $\ppair$; definition~\ref{defn:matchAndDescribe}, definition~\ref{defn:matchAndDescribe}, page~\pageref{defn:matchAndDescribe}\\
$\ppair$ describes \par $\cpair$ &  & $\cpair$ matches $\ppair$; definition~\ref{defn:matchAndDescribe}, page~\pageref{defn:matchAndDescribe}\\
$\decompositionn$ matches $\pdecompositionn$ &  &  there is an isomorphism from $\decompositionn$ to $\pdecompositionn$ that ensures the arrow labels match and that for each vertex in $\decompositionn$, its label, $\cpair$, matches the label of its identified vertex in $\pdecompositionn$; definition~\ref{defn:decompositionMatchesPattern}, page~\pageref{defn:decompositionMatchesPattern}\\
embedding of $\decompositionn$ \par into $\pdecompositionn$ &  & an isomorphism that establishes $\decompositionn$ matches $\pdecompositionn$; definition~\ref{defn:decompositionMatchesPattern}, page~\pageref{defn:decompositionMatchesPattern}\\
$\pdecompositionn$ describes $\decompositionn$ &  & $\decompositionn$ matches $\pdecompositionn$; definition~\ref{defn:decompositionMatchesPattern}, page~\pageref{defn:decompositionMatchesPattern}\\
description of a \par construction space, $\cspacen$ &  & a set of decompositions of patterns for $\cspacen$; definition~\ref{defn:descriptionOfCSpace}, page~\pageref{defn:descriptionOfCSpace}\\
vertices of $\descriptionn$ &  & the set of vertices used in decompositions in description $\descriptionn$; definition~\ref{defn:verticesOfDecomposition}, page~\pageref{defn:verticesOfDecomposition}\\
description of \par representational system, $\rsystemn$ &  & a description of the universal construction space, $\uspace{\rsystemn}$; definition~\ref{defn:rsdescription}, page~\pageref{defn:rsdescription}\\
description of an \par inter-representational-system encoding &  & a triple comprising descriptions of each representational system and a set of patterns for the inter-property idenfitication space; definition~\ref{defn:irsedescription}, page~\pageref{defn:irsedescription}\\
complete \par description of a \par construction space, $\cspacen$&  & a description that ensures each construction in $\cspacen$ has a decomposition that matches some pattern in the description; definition~\ref{defn:complete}, page~\pageref{defn:complete}\\
$\descriptionn$-patterns &  & the set of patterns used in a description, $\descriptionn$; definition~\ref{defn:descriptionPatterns}, page~\pageref{defn:descriptionPatterns}\\
compact \par description of a \par construction space, $\cspacen$ &  & a description of $\cspacen$ that exploits a finite number of patterns, up to label-preserving isomorphism; definition~\ref{defn:compact}, page~\pageref{defn:compact}\\
decoupling &  & a decomposition where every construction is trival or basic; definition~\ref{defn:decoupling}, page~\pageref{defn:decoupling}\\
$\pdecompositionn$-canonical \par decomposition of \par $\cpair$&  & the decomposition of $\cpair$ that matches $\pdecompositionn$ and is obtained by replacing patterns that label vertices of $\pdecompositionn$ with constructions arising from splitting $\cpair$; definition~\ref{defn:deltaCanonical}, page~\pageref{defn:deltaCanonical}\\
property identifier &  & a set of constructions arising in an inter-property identification space; definition~\ref{defn:propertyIDandEn}, page~\pageref{defn:propertyIDandEn}\\
property enforcer &  & a set of patterns for an inter-property identification space; definition~\ref{defn:propertyIDandEn}, page~\pageref{defn:propertyIDandEn}\\
transformation constraint &  & a triple comprising two patterns and a property enforcer; definition~\ref{defn:st:tokenRelSpec}, page~\pageref{defn:st:tokenRelSpec}\\
structure graph of $\setofpatterns$ &  & the graph obtained by unioning the graphs in the patterns included in the set $\setofpatterns$; definition~\ref{defn:patternGraphOfPatterns}, page~\pageref{defn:patternGraphOfPatterns}\\
$\setofpatterns$ respectfully \par embeds into $\setofpatterns'$ &  & an embedding from the structure graph of $\setofpatterns$ into the structure graph of $\setofpatterns'$ such that each pattern in $\setofpatterns$  embeds into a pattern in $\setofpatterns'$; definition~\ref{defn:respectfulEmbedding}, page~\pageref{defn:respectfulEmbedding}\\
respectful \par embedding &  & a function that respecfully embeds $\setofpatterns$ into $\setofpatterns'$; definition~\ref{defn:respectfulEmbedding}, page~\pageref{defn:respectfulEmbedding}\\
satisfies &  & a relation between transformation constraints, $\tRelSpecn=(\ppaira,\ppairb,\setofpatternstc)$ and $\tRelSpecn'=(\ppairad,\ppairbd,\setofpatternstc')$ that arises when $\ppaira$ and $\ppairb$ embed into $\ppairad$ and $\ppairbd$, $\setofpatternstc$ embeds into $\setofpatternstc'$ and the union of the embeddings is an isomorphism; definition~\ref{defn:satisfies}, page~\pageref{defn:satisfies}\\

equivalent \par transformation \par constraints &  & a pair of transformation constraints that satisfy each other; definition~\ref{defn:equivalentTokenRelSpecs}, page~\pageref{defn:equivalentTokenRelSpecs}\\
constraint \par assignment &  & a partial function that maps pairs of vertices in pattern decompositions to transformation constraints in such a way that the patterns assigned to the vertices match those in the transformation constraint; definition~\ref{defn:constraintAssignment}, page~\pageref{defn:constraintAssignment}\\
satisfied by a \par pair of \par decompositions &  & a constraint assignment is satisfied by $(\decompositionn,\decompositionn')$ provided there exist property identifiers that ensure the transformation constraints assigned to pairs of vertices are satisfied; definition~\ref{defn:structuralTransformationDecomposition}, page~\pageref{defn:structuralTransformationDecomposition}\\
structural \par $\vertexRelations$-transformation of $\cpair$ &  & a construction, $\cpaird$, where there are (canonical) decompositions of $\cpair$ and $\cpaird$ that satisfy $\vertexRelations$; definition~\ref{defn:structuralTransformationConstruction}, page~\pageref{defn:structuralTransformationConstruction}\\
satisfied by a decomposition and a pattern decomposition &  & a constraint assignment is satisfied by $(\decompositionn,\pdecompositionn')$ provided there exist property identifiers that ensure the transformation constraints assigned to pairs of vertices are satisfied; definition~\ref{defn:DeltaPartialSatisfication}, page~\pageref{defn:DeltaPartialSatisfication}\\
partial \par $\vertexRelations$-transformation of $\cpair$ &  & a pattern, $\ppaird$ where there are (canonical) decompositions of $\cpair$ and $\ppaird$ that satisfy $\vertexRelations$; definition~\ref{defn:partialLDeltaTransform}, page~\pageref{defn:partialLDeltaTransform}\\
$(\delta,\delta')$-restriction of $\vertexRelations$ &  & the function obtained by restricting the domain of $\vertexRelations$ to the vertices of the $\delta$- and $\delta'$-induced decomposition trees; definition~\ref{defn:restrictionOfS}, page~\pageref{defn:restrictionOfS}\\
$\vertexRelations$ is valid at $(\delta,\delta'$) &  & whenever the $\delta'$-induced decomposition of $\ppaird$ is a construction in representational system $\rsystemn'$ it is also a structural $\vertexRelations_{(\delta,\delta')}$-transformation of the construction obtained from the $\delta$-induced decomposition of $\cpair$; definition~\ref{defn:validForVertices}, page~\pageref{defn:validForVertices}\\
$\vertexRelations$ is ancestor-valid at $(\delta,\delta'$) & & $\vertexRelations$ is valid for all ancestors of $\delta$ and $\delta'$; definition~\ref{defn:validForVertices}, page~\pageref{defn:validForVertices}\\
sound constraint \par assignment, $\vertexRelations$ &  & whenever $\vertexRelations$ is ancestor-valid, it is also valid; definition~\ref{defn:soundL}, page~\pageref{defn:soundL}\\
$\ppair$ leaf-\par instantiates $\pdecompositionn$ &  & all of the leaves of $\pdecompositionn$ are labelled by constructions in representational system $\rsystemn$; definition~\ref{defn:leafInstantiates}, page~\pageref{defn:leafInstantiates}\\
complete \par extension of $\ppair$ &  & a construction in representational system $\rsystemn$ that is obtained from $\ppair$ by replacing vertices in $\ppair$ with tokens in $\rsystemn$, whilst ensuring that patterns at the leaves of the pattern decomposition $\pdecompositionn$ remain unchanged; definition~\ref{defn:leafInstantiates}, page~\pageref{defn:leafInstantiates}\\
$\cpair$ and $\ppaird$ are valid at the leaves of $\pdecompositionn$ and $\pdecompositionn'$ &  & for all leaves, $l$ and $l'$, of $\pdecompositionn$ and $\pdecompositionn'$, resp., the constraint assignment $\vertexRelations$, is valid for $\cpair$ and $\ppaird$ at $(l,l')$; definition~\ref{defn:validAtLeaves}, page~\pageref{defn:validAtLeaves}\\
orderly trail \par sequence for $\cpair$ & & a sequence of trails, each of which targets $t$, ordered in such a way that the arrow indices are respected, such that for each configurator, $u$, if any trail in the sequence includes an arrow that targets $u$ then all arrows that target $u$ are in some trail in the orderly trail sequence; a definition~\ref{defn:orderlyTrailSequence}, page~\pageref{defn:orderlyTrailSequence}\\
generalised split & & a pair, built using an orderly trail sequence, $\otsequence$, comprising a generator arising from the graph of $\otsequence$, and the sequence of induced constructions obtained by extending the trails in $\otsequence$; definition~\ref{defn:genSplit}, page~\pageref{defn:genSplit}\\
$\otsequence$ identifies \par a generalised split  & & the generalised split to which $\otsequence$ gives rise is identified by $\otsequence$; definition~\ref{defn:genSplit}, page~\pageref{defn:genSplit}\\
\end{longtable}

\section{Representational Systems: Appendix}\label{sec:app:representationalSystems}
\begin{lemma}\label{alem:pairwiseCompatibleTTSs}
Let $\tsystemn_1=\tsystemp{1}$, $\tsystemn_2=\tsystemp{2}$ and $\tsystemn_3=\tsystemp{3}$ be pairwise compatible term-type systems. Then $\tsystemn_1\cup \tsystemn_2$ is compatible with $\tsystemn_3$.
\end{lemma}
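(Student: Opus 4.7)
The plan is to unpack the definitions and show directly that $(\tsystemn_1 \cup \tsystemn_2) \cup \tsystemn_3$ is a type system. By the component-wise union convention, this tuple equals $(\types_1 \cup \types_2 \cup \types_3,\ \leq_1 \cup \leq_2 \cup \leq_3)$, so the entire task reduces to verifying that $\leq = \leq_1 \cup \leq_2 \cup \leq_3$ is a partial order on $\types = \types_1 \cup \types_2 \cup \types_3$. I would check reflexivity, antisymmetry, and transitivity in turn, leveraging the pairwise compatibility hypothesis, which tells us that each $\leq_i \cup \leq_j$ is a partial order on $\types_i \cup \types_j$.

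Reflexivity is immediate: any $\tau \in \types$ lies in some $\types_i$, and since $\leq_i$ is reflexive on $\types_i$, the pair $(\tau,\tau)$ belongs to $\leq_i \subseteq \leq$. For antisymmetry, I would assume $(\tau,\tau') \in \leq$ and $(\tau',\tau) \in \leq$, say $(\tau,\tau') \in \leq_i$ and $(\tau',\tau) \in \leq_j$. If $i=j$ then antisymmetry of $\leq_i$ gives $\tau = \tau'$; if $i \neq j$, then both pairs live inside $\leq_i \cup \leq_j$, which is antisymmetric by the compatibility of $\tsystemn_i$ and $\tsystemn_j$, so again $\tau = \tau'$. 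Transitivity is handled by an analogous case split: given $(\tau,\tau') \in \leq_i$ and $(\tau',\tau'') \in \leq_j$, the case $i=j$ follows from transitivity of $\leq_i$, while the case $i \neq j$ follows from transitivity of $\leq_i \cup \leq_j$, which is assured by pairwise compatibility.

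Combining these three properties, $\leq$ is a partial order on $\types$, so $(\tsystemn_1 \cup \tsystemn_2) \cup \tsystemn_3$ is a type system and hence $\tsystemn_1 \cup \tsystemn_2$ is compatible with $\tsystemn_3$. There is no substantive obstacle here; the only thing to be careful about is the case analysis in antisymmetry and transitivity, where the pairwise compatibility assumption is essential: one cannot in general union two partial orders and obtain a partial order (transitivity can fail when a chain $\tau \leq_i \tau' \leq_j \tau''$ involves indices $i \neq j$), but the hypothesis rules this out for every pair of indices from $\{1,2,3\}$.
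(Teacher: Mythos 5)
Your proposal is correct and follows essentially the same route as the paper's proof: both reduce the claim to showing that $\leq_1 \cup \leq_2 \cup \leq_3$ is a partial order, verify reflexivity directly, and handle antisymmetry and transitivity by observing that any two relevant pairs lie in some $\leq_i \cup \leq_j$, which is a partial order by pairwise compatibility. Your closing remark about why the hypothesis is essential is a nice addition but does not change the substance of the argument.
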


\begin{proof}
We know that $\tsystemn_1\cup \tsystemn_2$ is a type system, since $\tsystemn_1$ and $\tsystemn_2$ are compatible. For the compatibility of $\tsystemn_1\cup \tsystemn_2$ with $\tsystemn_3$, we show that the three conditions of definition~\ref{defn:typeSystem} hold. For notational convenience, we define %
\begin{displaymath}
\tsystem=(\tsystemp{1}\cup \tsystemp{2})\cup \tsystemp{3}.
\end{displaymath}
We see immediately that the first condition holds: $\types$ is a set. For the second condition, we must show that $\leq$ is a partial order on $\types$. Obviously, the union of three binary relations, $\leq_1$, $\leq_2$, and $\leq_3$, is a binary relation, namely $\leq$. Given that $\tsystemn_i$ and $\tsystemn_j$ are pairwise compatible,  $\leq_i\!\cup\! \leq_j$ is a partial order for any $1\leq i,j\leq 3$. Let $\tau$, $\tau'$ and $\tau''$ be types in $\types$. For reflexivity, we show that $\tau\leq \tau$. Well, $\tau\in \types_i$, for some $i$, so $\tau\leq_i\tau$ which implies that $\tau\leq \tau$. Hence, $\leq$ is reflexive. For anti-symmetry, assume that $\tau\leq \tau'$ and $\tau'\leq \tau$. Then $\tau\leq_i\tau'$ and $\tau'\leq_j\tau$ for some $1\leq i,j\leq 3$. Since we have $\tau\leq_i\tau'$ and $\tau'\leq_j\tau$ we know $\tau\leq_i\!\cup\! \leq_j\tau'$ and $\tau'\leq_i\!\cup\! \leq_j\tau$. Using the fact that $\leq_i\!\cup\! \leq_j$ is a partial order, and thus anti-symmetric, we deduce that $\tau=\tau'$, as required. Hence $\leq$ is anti-symmetric. Lastly, we must show that $\leq$ is transitive. Suppose that $\tau\leq \tau'$ and $\tau'\leq \tau''$. Then $\tau\leq_i\tau'$ and $\tau'\leq_j\tau''$ for some $1\leq i,j\leq 3$. Then we know that $\tau\leq_i\!\cup\! \leq_j\tau'$ and $\tau'\leq_i\!\cup\! \leq_j\tau''$. Using the fact that $\leq_i\!\cup\! \leq_j$ is a partial order, and thus transitive, we deduce that $\tau\leq_i\!\cup\! \leq_j\tau''$. Since $\leq_i\!\cup\! \leq_j\;\subseteq\; \leq$, it follows that $\tau\leq \tau''$. Hence $\leq$ is transitive. Therefore, $\leq$ is a partial order and we have established that $(\types,\leq)$ is a type system. Hence condition 2 of definition~\ref{defn:typeSystem} holds. Therefore, $(\tsystemn_1\cup \tsystemn_2)\cup \tsystemn_3$ is a type system.
\end{proof}

\begin{lemma}\label{alem:pairwiseCompatibleCSformCS}
Let\, $\cspacen_{1}$, $\cspacen_2$ and\, $\cspacen_3$ be pairwise com\-pa\-ti\-ble construction spaces. Then\, $\cspacen_1\cup \cspacen_2$ is com\-pa\-ti\-ble with $\cspacen_3$.
\end{lemma}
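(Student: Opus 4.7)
The plan is to verify that $\cspacen := (\cspacen_1 \cup \cspacen_2) \cup \cspacen_3$ satisfies the three clauses of Definition~\ref{defn:constructionSpace}, matching the template of Lemma~\ref{lem:pairwiseCompatibleTTSs}. Writing $\cspacen_i = (\tsystemn_i, \cspecificationn_i, \graphn_i)$ with $\tsystemn_i = (\types_i, \leq_i)$, $\cspecificationn_i = (\constructors_i, \spec_i)$, and $\graphn_i = \graphp{i}$, I would split the argument into the three natural components (a)--(c) identified in the sketch.

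For (a), the fact that $\tsystemn := (\tsystemn_1 \cup \tsystemn_2) \cup \tsystemn_3$ is a type system is immediate from Lemma~\ref{lem:pairwiseCompatibleTTSs}, so nothing new is required here. For (b), I must show that $\cspecificationn := (\cspecificationn_1 \cup \cspecificationn_2) \cup \cspecificationn_3$ is a constructor specification over $\tsystemn$. Setting $\constructors := \constructors_1 \cup \constructors_2 \cup \constructors_3$ and $\spec := \spec_1 \cup \spec_2 \cup \spec_3$, I would first observe that $\constructors \cap \types = \emptyset$ follows from the pairwise disjointness $\constructors_i \cap \types_j = \emptyset$ that holds for every pair $(i,j)$ by the compatibility of $\cspacen_i$ and $\cspacen_j$ (applied also with $i = j$, which is trivial). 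Next, to show that $\spec$ is a function, I would note that $\spec_i \cup \spec_j$ is already a function on $\constructors_i \cup \constructors_j$ for every pair $(i,j)$ by pairwise compatibility, so whenever $c \in \constructors_i \cap \constructors_j$ the two specifications agree; any apparent collision in the triple union is therefore resolved pairwise. Finally, each $\spec_i(c)$ already has the required signature form (a non-empty type-sequence and an output type), and since $\types_i \subseteq \types$ the signatures remain valid over the larger type system.

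For (c), I need to check that $\graphn := \graphn_1 \cup \graphn_2 \cup \graphn_3$ is a structure graph for $\cspecificationn$. That $\graphn$ is a directed, labelled bipartite graph is routine: the vertex sets $\pa_i, \pb_i$ and the arrow sets $\arrows_i$ combine to bipartite vertex classes $\pa = \bigcup \pa_i$ and $\pb = \bigcup \pb_i$, and the labelling/incidence functions combine to total, well-defined functions (any would-be conflict on shared vertices or arrows is again resolved by pairwise compatibility). The substantive point, and the only mildly delicate one, is the configurator-neighbourhood condition: for every $u \in \pb$, I must show $\neigh{u, \graphn}$ is a configuration of $\consl(u)$. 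The key observation is that for any $u \in \pb_i$, pairwise compatibility of $\cspacen_i$ with $\cspacen_j$ forces $\neigh{u, \graphn_i} = \neigh{u, \graphn_i \cup \graphn_j}$, so no arrows incident with $u$ are added by unioning with $\graphn_j$. Iterating this twice gives $\neigh{u, \graphn} = \neigh{u, \graphn_i}$, which is already a configuration of $\consl(u)$ since $\graphn_i$ is a structure graph for $\cspecificationn_i$.

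I anticipate that the main (if modest) obstacle will be bookkeeping in step (c): specifically, articulating cleanly why the pairwise neighbourhood equalities propagate to the triple union when a configurator may lie in more than one of the $\pb_i$. The cleanest route is to state and use once the following small observation, derivable directly from compatibility of pairs: if $u \in \pb_i \cap \pb_j$ then the incident arrows, their indices, and the labels of incident tokens agree in $\graphn_i$ and $\graphn_j$. Once this is in hand, everything else is a mechanical verification and the lemma follows.
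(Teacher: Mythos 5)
Your proposal is correct and follows essentially the same route as the paper's proof: part (a) by the type-system lemma, part (b) by resolving collisions in $\constructors$ and $\spec$ pairwise, and part (c) by showing each configurator's neighbourhood is unchanged under the union because the pairwise unions are already structure graphs, so $\neigh{u,\graphn}=\neigh{u,\graphn_i}$. The small observation you flag at the end (agreement of incident arrows, indices, and labels for a configurator shared between two graphs) is precisely the step the paper carries out via $\neigh{u,\graphn_i}\cup\neigh{u,\graphn_j}=\neigh{u,\graphn_i\cup\graphn_j}$ together with the fixed arrow count of a configuration, so nothing is missing.
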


\begin{proof}
Firstly, assume that $\cspacen_i=\cspacep{i}$, is a construction space for type system $\tsystemn_i=\tsystemp{i}$, given $\cspecificationn_i=\cspecificationp{i}$, and $\graphn_i=\graphp{i}$, for $1\leq i \leq 3$. For notational convenience, we define %
\begin{displaymath}
\tsystem=(\tsystemp{1}\cup \tsystemp{2})\cup \tsystemp{3}.
\end{displaymath}

Similarly, $\constructors=(\constructors_1\cup \constructors_2)\cup \constructors_3$, $\spec=(\spec_1\cup \spec_2)\cup \spec_3$, and $\graphn=(\graphn_1\cup \graphn_2)\cup \graphn_3$. The proof has three main parts, reflecting definition~\ref{defn:constructionSpace}:
\begin{enumerate}
\item[(a)] show, given $\tsystemn_1$, $\tsystemn_2$, and $\tsystemn_3$ are pairwise compatible, that $\tsystemn=(\tsystemn_1\cup \tsystemn_2)\cup \tsystemn_3$ is a type system,
\item[(b)] show that $\cspecificationn=(\cspecificationn_1\cup \cspecificationn_2)\cup \cspecificationn_3$ is a constructor specification for $\tsystemn$, and
\item[(c)] show that $\graphn=(\graphn_1\cup \graphn_2)\cup \graphn_3$ is a structure graph for $\tsystemn$ given $\cspecificationn$.
\end{enumerate}
We proceed with each case in turn.
\begin{enumerate}
\item[(a)] Part (a) is established by lemma~\ref{lem:pairwiseCompatibleTTSs}.
\item[(b)] \textit{Show that $\cspecificationn=(\cspecificationn_1\cup \cspecificationn_2)\cup \cspecificationn_3$ is a constructor specification.} We know that $\cspecificationn_1\cup \cspecificationn_2$ is a cons\-truc\-tor specification, since $\cspacen_1$ and $\cspacen_2$ are compatible. Our task is to show that $\cspecificationn=(\constructors, \spec)$ is a constructor specification for $\tsystemn$. With reference to definition~\ref{defn:constructionSpecification}, we must show that $\constructors$ is disjoint from $\types$. Let $c\in \constructors$. Then immediately we have it that $c\in \constructors_i$, for some $1\leq i \leq 3$ and, since $\cspecificationn_i$ is a construction space, $c\not \in \types_i$. In particular, this tells us that for any $1\leq i\leq 3$,  $c\not \in \types_i$. Thus, $c\not\in \types$ and we deduce that $\constructors$ is disjoint from $\types$. Next, we show that $\spec$ is a function. Since we know that each $\spec_i$ is a function, our task is to show that for any constructor, $c$, if $c\in \constructors_i$ and $c\in \constructors_j$ then $\spec_i(c)=\spec_j(c)$. This follows immediately from the pairwise compatibility of the three construction spaces. Hence $\cspecificationn=(\constructors,\spec)$ is a constructor specification for $\tsystemn$.
\item[(c)] \textit{Show that $\graphn$ is a structure graph for $\tsystemn$ given $\cspecificationn$.}  We know that $\graphn_1\cup \graphn_2$ is a structure graph for $\tsystemn_1\cup \tsystemn_2$, since $\cspacen_1$ and $\cspacen_2$ are compatible. Our task is to show that $\graphn=(\graphn_1\cup \graphn_2)\cup \graphn_3$ is a structure graph for $\tsystemn$ given $\cspecificationn$. Firstly, we show that $\graphn$ is a directed labelled bipartite graph.
    \begin{enumerate}
        \item[-] Setting $\pa=(\pa_1\cup \pa_2)\cup \pa_3$, $\pb=(\pb_1\cup \pb_2)\cup \pb_3$, $\arrows=(\arrows_1\cup \arrows_2)\cup \arrows_3$ and $\incVert = \incVert_1\cup \incVert_2\cup \incVert_3$, we show that $(\pa,\pb,\arrows,\incVert)$ is a directed bipartite graph. It is trivial that $\pa\cap \pb=\emptyset$, since the three construction spaces are pairwise compatible. We now show that $\incVert$ is a function with signature  $\incVert\colon \arrows\to (\pa\times \pb) \cup (\pb\times \pa)$.
        To do so, we must show that if an arrow, $a$, is in $A_i$ and $A_j$, where $1\leq i,j \leq 3$, then $\incVert_i(a)=\incVert_j(a)$. This immediately follows since the construction spaces are pairwise compatible, since $\incVert_i\cup \incVert_j$ is a function. Hence $(\pa,\pb,\arrows,\incVert)$ is a directed bipartite graph.
        \item[-] Setting $\arrowl=(\arrowl_1\cup \arrowl_2)\cup \arrowl_3$, it can similarly be shown that $\arrowl$ is a func\-tion that labels arrows with labels in $\mathbb{N}$, again due to pairwise compatibility.
        \item[-] Setting $\tokenl=(\tokenl_1\cup \tokenl_2)\cup \tokenl_3$, it can similarly be shown that $\tokenl$ is a function that labels tokens with types in $\types$, again due to pairwise compatibility.
        \item[-] Setting $\consl=(\consl_1\cup \consl_2)\cup \consl_3$, the argument is similar.
        \end{enumerate}
        Therefore, $\graphn$ is a  directed labelled bipartite graph. Our next task is to show that definition~\ref{defn:structureGraph} holds: for all $u\in \pb$, $\consl(u)$ is in $\constructors$ and $\neigh{u,\graphn}$ is a configuration of $\consl(u)$ given $\spec$.
        \begin{enumerate}
        \item[-] Well, $u\in \pb= (\pb_1\cup \pb_2)\cup \pb_3$, which implies that $\consl(u)\in (\constructors_1\cup \constructors_2)\cup \constructors_3=\constructors$, as required.
        \item[-] Consider now the graph $\neigh{u,\graphn}$. If $u$ occurs in only one of $\graphn_1$, $\graphn_2$ and $\graphn_3$ then it trivially holds that $\neigh{u,\graphn}$ is a configuration of $\consl(u)$ given $\spec$. Otherwise, $u$ occurs in $\graphn_i$ and $\graphn_j$, where $1\leq i,j\leq 3$. Without loss of generality, set $i=1$ and $j=2$. We know that $\neigh{u,\graphn_1}$ and $\neigh{u,\graphn_2}$ are configurations of $\consl_1(u)$ and $\consl_2(u)$ in $\graphn_1$ and $\graphn_2$ respectively.
            By pairwise compatibility, $\consl_1(u)=\consl_2(u)=\consl(u)$. We show that $\neigh{u,\graphn_1}=\neigh{u,\graphn_2}$. Now,
            \begin{equation}\label{eq:nhG1G2}
            \neigh{u,\graphn_1}\cup \neigh{u,\graphn_2} = \neigh{u,\graphn_1\cup\graphn_2}
            \end{equation}
            and, since $\graphn_1\cup\graphn_2$ is a structure graph, $\neigh{u,\graphn_1\cup\graphn_2}$ is a configuration of $\consl(u)$ given $\spec_1\cup \spec_2$. Since $\cspacen_1$ and $\cspacen_2$ are pairwise compatible, we know that $\spec_1(\consl(u))=\spec_2(\consl(u))$. Since $\neigh{u,\graphn_1\cup\graphn_2}$ is a configuration and $(\spec_1\cup \spec_2)(\consl(u))=\spec_1(\consl(u))=\spec_2(\consl(u))$, it follows that \ref{eq:nhG1G2} holds. If $u$ occurs in the third structure graph, $\graphn_3$, then it can similarly be shown that $\neigh{u,\graphn_2}=\neigh{u,\graphn_3}$, otherwise $\neigh{u,\graphn_3}=\emptyset$. Therefore,
            \begin{eqnarray*}
             \neigh{u,\graphn} & = &  \neigh{u,\graphn_1}\cup \neigh{u,\graphn_2} \cup \neigh{u,\graphn_3} \\
             &  = & \neigh{u,\graphn_1}.
            \end{eqnarray*}
            Hence $\neigh{u,\graphn}$ is a configuration of $\consl(u)$ given $\spec$, as required.
         \end{enumerate}
Thus, definition~\ref{defn:structureGraph} holds. Therefore, $\graphn$ is a structure graph.
\end{enumerate}
We deduce that $\cspace$ is a construction space for $\tsystem$, as required. Hence, $\cspacen_1\cup \cspacen_2$ is compatible with $\cspacen_3$.
\end{proof}

\begin{lemma}\label{alem:leastUpperBoundExtensionExists}
Every type system, $\tsystemn=\tsystem$, has an upper bound extension.
%Let $\tsystemn=\tsystem$ be a type system with meta-property system $\mpsystemn$. Then $\tsystemn$ given $\mpsystemn$ has a least upper bound extension.
\end{lemma}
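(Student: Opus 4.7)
The proof will proceed by explicitly constructing an upper bound extension of $\tsystemn = \tsystem$ and then verifying that the result satisfies definition~\ref{defn:typeSystem}. For the construction, I would pick a set $\types'$ of fresh types (i.e., disjoint from $\types$) in bijection with $\subtypeSets(\tsystemn)$ via some function $f$; such a set exists by standard set-theoretic principles. Then I would define $\leq'$ exactly as in definition~\ref{defn:intertionofLUB}, namely as the reflexive closure of
\[
R = \leq \;\cup\; \bigcup_{\types'' \in \subtypeSets(\tsystemn)} \{(\tau, f(\types'')) : \tau \in \types''\}.
\]
What remains is to check that $\leq'$ is a partial order on $\types \cup \types'$; once this is done, $(\types \cup \types', \leq')$ satisfies both clauses of definition~\ref{defn:typeSystem}, and by construction it meets both conditions of definition~\ref{defn:intertionofLUB}.

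Reflexivity of $\leq'$ is immediate from construction. For antisymmetry and transitivity, the key observation — which I expect to be the crux of the argument — is that $R$ contains no pair whose \emph{source} lies in $\types'$: every pair added to $\leq$ in forming $R$ has a fresh type only as its target. Consequently, if $\tau \leq' \tau'$ with $\tau \neq \tau'$, then $\tau \in \types$. This dispatches antisymmetry at once: a pair of distinct mutually related types would force both to lie in $\types$, where antisymmetry follows from $\leq$ being a partial order on $\types$.

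For transitivity, I would case-split on the location of the intermediate type $\tau_2$ in a chain $\tau_1 \leq' \tau_2 \leq' \tau_3$. If $\tau_2 \in \types'$, then by the source-observation applied to the second link we must have $\tau_3 = \tau_2$, and the conclusion is trivial; similarly if $\tau_1 = \tau_2$ or $\tau_2 = \tau_3$. The case $\tau_1, \tau_2, \tau_3 \in \types$ follows from transitivity of $\leq$. The only substantive case is $\tau_1, \tau_2 \in \types$ and $\tau_3 = f(\types'') \in \types'$ with the second link arising from $\tau_2 \in \types''$: here $\tau_1 \leq \tau_2 \in \types''$, and since $\types''$ is \emph{subtype-closed} this forces $\tau_1 \in \types''$, whence $(\tau_1, \tau_3) \in R \subseteq \leq'$. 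This is precisely where definition~\ref{defn:subtypeClosure} enters — had we instead indexed the fresh types by arbitrary subsets of $\types$ rather than by subtype closures, transitivity would fail, so this is the technical heart of the lemma. Once $\leq'$ has been shown to be reflexive, antisymmetric, and transitive, we conclude that $(\types \cup \types', \leq')$ is the desired upper bound extension.
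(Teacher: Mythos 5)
Your proposal is correct and follows essentially the same route as the paper's own proof: construct the fresh types in bijection with $\subtypeSets(\tsystemn)$, observe that fresh types never occur as the source of a non-reflexive pair, and use the fact that each $f^{-1}(\tau_3)$ is subtype-closed to dispatch the one substantive transitivity case. Your explicit remark that subtype-closedness is exactly what makes transitivity go through is a nice articulation of the same point the paper makes implicitly.
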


\begin{proof}
Trivially, we can always choose a set of fresh types, $\types'$, for which the bijection $f\colon \subtypeSets(\tsystemn)\to \types'$ exists. All that remains is to show that $\leq'$, as in definition~\ref{defn:intertionofLUB}, is a partial order. Since it is given that $\leq'$ is reflexive, we show that $\leq'$ is anti-symmetric and transitive, focusing on the latter first. Let $\tau_1$, $\tau_2$ and $\tau_3$ be types in $\types\cup \types'$ such that $\tau_1\leq'\tau_2$ and $\tau_2\leq' \tau_3$. If all of $\tau_1$, $\tau_2$ and $\tau_3$ are in $\types$ then it is trivial that $\tau_1\leq'\tau_3$, since $\leq$ is a subset of $\leq'$ and $\leq$ is transitive. Noting that, for all $\tau'$ in $\types'$, if $\tau'\leq' \tau_i$ then $\tau_i=\tau'$, the only remaining non-trivial case is when $\tau_3\in \types'$ and $\tau_1,\tau_2\in \types$. Consider, now, $f^{-1}(\tau_3)$.  Since $\tau_2\leq'\tau_3$ and $\tau_3$ is a fresh type, it must be that $\tau_2\in f^{-1}(\tau_3)$. Now, $\tau_1\leq \tau_2$ implies that $\tau_1\in f^{-1}(\tau_3)$, because $f^{-1}(\tau_3)$ is the subtype closure of some set of types. Therefore, $\tau_1\leq'\tau_3$.  Hence $\leq'$ is transitive.

For antisymmetry, let $\tau_1$ and $\tau_2$ be types in $\types\cup \types'$ such that $\tau_1\leq' \tau_2$ and $\tau_2\leq' \tau_1$.  If  $\tau_1$ or $\tau_2$ are in $\types'$ then it is trivial that $\tau_1=\tau_2$. The only remaining case is when $\tau_1$ and $\tau_2$ are both in $\types$. Assume that $\tau_1\neq \tau_2$.  Since $\leq$ is antisymmetric, we know that $\tau_1\leq \tau_2$ and $\tau_2\leq \tau_1$ implies that $\tau_1=\tau_2$. Then it must be that either $\tau_1\leq' \tau_2$ and $\tau_1\not\leq \tau_2$ or $\tau_2\leq' \tau_1$ and $\tau_2\not\leq \tau_1$. But all of the relations, $\tau\leq'\tau'$, that are in $\leq'\backslash \leq$ ensure that $\tau'$ is in $\types'$. In turn, this implies that either $\tau_1\in \types'$ or $\tau_2\in \types'$. Since $\types'$ is a set of fresh types, either $\tau_1$ or $\tau_2$ is not in $\types$ which is a contradiction, because in this case we asserted that both $\tau_1$ or $\tau_2$ are in $\types$. Thus, $\leq'$ is antisymmetric and, hence, a partial order. Therefore, $\tsystemn'=(\types\cup \types', \leq')$ is a type system. Hence, $\tsystemn$ has an upper bound extension.
\end{proof} 
\section{Constructions: Appendix}\label{sec:app:constructions}

Our approach to proving theorem~\ref{thm:splitPreservesCompleteTrailSequence} -- that a split, $\csplit$, of construction, $\cpair$, allows the ready recreation of $\cpair$'s complete trail sequence -- is to use an induction argument. At first thought, one might expect this induction to be over the number of configurators in the split's generator. However, such an approach is not straightforward. This is because each trail in $\ctsequence{\cgraphn',t}$ gives rise to an induced construction but adding the neighbourhood, $\neigh{u}$, of a configurator, $u$, to $\genpair$ -- for an inductive step -- can have a non-trivial impact on the complete trail sequence. Therefore, we exploit a more general approach for our proof strategy, that relies on \textit{orderly trail sequences}, which we will illustrate shortly by example. Importantly, unlike complete trail sequences, we can extend a \textit{single} trail in an orderly trail sequence, $\otsequence$, using $\neigh{u}$ and obtain another orderly trail sequence, $\otsequence'$. Ultimately, this will allow us to define \textit{generalised splits}. Subsequently, we will be able to prove the desired result concerning the derivability of $\ctsequence{\cgraphn,t}$ from the induced constructions:  theorem~\ref{thm:splitPreservesCompleteTrailSequence} follows from theorem~\ref{athm:GenSplitPreservesCompleteTrailSequence}, which focuses on generalised splits.

\subsection{Motivating Example: Adding a Configurator to a Generator}\label{sec:app:ME:constructions}

We now set out to illustrate the non-trivial impact of adding a configurator, and its neighbourhood, to a generator on complete trail sequences. We appeal to the construction, $\cpair$, given below left, and one of its generators, $\genpair$, below middle:
\begin{center}
\adjustbox{scale=\myscale}{
\begin{tikzpicture}[construction]
\node[term={}] (v) at (3.2,4.2) {$t$};
\node[above left = -0.2cm and 0.8cm of v] () {$(g,t)$};
\node[term={}] (v1) at (2.4,2.2) {$t_1$};
\node[term={}] (v2) at (4,2.2) {$t_2$};
\node[constructor={$c$}] (u) at (3.2,3.2) {$u$};
\node[constructor={$c_1$}] (u1) at (2.4,1.2) {$u_1$};
\node[term={}] (v3) at (1.6,0.2) {$t_3$};
\node[constructor={$c_3$}] (u3) at (1.6,-0.8) {$u_3$};
\node[term={},\termC,\termN] (v5) at (1.6,-1.8) {$t_5$};
\node[term={}] (v4) at (3.2,0.2) {$t_4$};
\node[constructor={$c_4$}] (u4) at (3.2,-0.8) {$u_4$};
\node[constructor={$c_2$}] (u2) at (4,1.2) {$u_2$};
\node[term={},\termC,\termN] (v6) at (3.2,-1.8) {$t_6$};
\path[->]
(u) edge[bend right = 0] (v)
(v1) edge[bend right = -10] node[index label] {1} (u)
(v2) edge[bend right = 10] node[index label] {2} (u)
(u1) edge[bend right = 0] (v1)
(v3) edge[bend right = -10] node[index label] {1} (u1)
(v4) edge[bend right = 10] node[index label] {2} (u1)
(u2) edge[bend right = 0] (v2)
(u4) edge[bend right = 0] (v4)
(v4) edge[bend right = -10] node[index label] {1} (u2)
(v6) edge[bend right = 0] node[index label] {1} (u4)
(v5) edge[bend right = 0] node[index label] {1} (u3)
(u3) edge[bend right = 0] (v3);
\end{tikzpicture}
}
\hspace{1.2cm}
\adjustbox{scale=\myscale}{
\begin{tikzpicture}[construction]
\node[term={}] (v) at (3.2,4.2) {$t$};
\node[above left = -0.2cm and 0.8cm of v] () {$(g',t)$};
\node[term={},\termC,\termN] (v1) at (2.4,2.2) {$t_1$};
\node[term={}] (v2) at (4,2.2) {$t_2$};
\node[constructor={$c$}] (u) at (3.2,3.2) {$u$};
%\node[constructor={$c_1$}] (u1) at (2.4,1.2) {$u_1$};
%\node[term={}] (v3) at (1.6,0.2) {$t_3$};
%\node[constructor={$c_3$}] (u3) at (1.6,-0.8) {$u_3$};
%\node[term={},\termC,\termN] (v5) at (1.6,-1.8) {$t_5$};
\node[term={}] (v4) at (3.2,0.2) {$t_4$};
\node[constructor={$c_4$}] (u4) at (3.2,-0.8) {$u_4$};
\node[constructor={$c_2$}] (u2) at (4,1.2) {$u_2$};
\node[term={},\termC,\termN] (v6) at (3.2,-1.8) {$t_6$};
\path[->]
(u) edge[bend right = 0] (v)
(v1) edge[bend right = -10] node[index label] {1} (u)
(v2) edge[bend right = 10] node[index label] {2} (u)
%(u1) edge[bend right = 0] (v1)
%(v3) edge[bend right = -10] node[index label] {1} (u1)
%(v4) edge[bend right = 10] node[index label] {2} (u1)
(u2) edge[bend right = 0] (v2)
(u4) edge[bend right = 0] (v4)
(v4) edge[bend right = -10] node[index label] {1} (u2)
(v6) edge[bend right = 0] node[index label] {1} (u4)
%(v5) edge[bend right = 0] node[index label] {1} (u3)
%(u3) edge[bend right = 0] (v3)
;
\end{tikzpicture}
}
\hspace{0.8cm}
\adjustbox{scale=\myscale}{
\begin{tikzpicture}[construction]
\node[term={}] (v) at (3.2,4.2) {$t$};
\node[above left = -0.2cm and 0.2cm of v] () {$(g'\cup\neigh{u_1},t)$};
\node[term={}] (v1) at (2.4,2.2) {$t_1$};
\node[term={}] (v2) at (4,2.2) {$t_2$};
\node[constructor={$c$}] (u) at (3.2,3.2) {$u$};
\node[constructor={$c_1$}] (u1) at (2.4,1.2) {$u_1$};
\node[term={},\termC,\termN] (v3) at (1.6,0.2) {$t_3$};
%\node[constructor={$c_3$}] (u3) at (1.6,-0.8) {$u_3$};
%\node[term={},\termC,\termN] (v5) at (1.6,-1.8) {$t_5$};
\node[term={}] (v4) at (3.2,0.2) {$t_4$};
\node[constructor={$c_4$}] (u4) at (3.2,-0.8) {$u_4$};
\node[constructor={$c_2$}] (u2) at (4,1.2) {$u_2$};
\node[term={},\termC,\termN] (v6) at (3.2,-1.8) {$t_6$};
\path[->]
(u) edge[bend right = 0] (v)
(v1) edge[bend right = -10] node[index label] {1} (u)
(v2) edge[bend right = 10] node[index label] {2} (u)
(u1) edge[bend right = 0] (v1)
(v3) edge[bend right = -10] node[index label] {1} (u1)
(v4) edge[bend right = 10] node[index label] {2} (u1)
(u2) edge[bend right = 0] (v2)
(u4) edge[bend right = 0] (v4)
(v4) edge[bend right = -10] node[index label] {1} (u2)
(v6) edge[bend right = 0] node[index label] {1} (u4)
%(v5) edge[bend right = 0] node[index label] {1} (u3)
%(u3) edge[bend right = 0] (v3)
;
\end{tikzpicture}
}
\end{center}
The complete trail sequence of $\genpair$ is
\begin{eqnarray*}
\ctsequence{\cgraphn',t} & = & [t_1\arrow[1] u \arrow t,\\
                        & &  t_6\arrow[1] u_4\arrow t_4\arrow[1] u_2\arrow t_2\arrow[2] u \arrow t].
\end{eqnarray*}
If we add the neighbourhood of the constructor $u_1$ to $\genpair$ then we create a new generator, $(\cgraphn'\cup \neigh{u_1},t)$, above right. How does $\ctsequence{\cgraphn',t}$ compare to $\ctsequence{\cgraphn'\cup \neigh{u_1},t}$? Well, we have
\begin{eqnarray*}
\ctsequence{\cgraphn'\cup \neigh{u_1},t} & = & [t_3\arrow[1] u_1\arrow t_1\arrow[1] u \arrow t, \\
     & & t_6\arrow[1] u_4\arrow t_4\arrow[2] u_1\arrow t_1\arrow[1] u \arrow t,\\
& & t_6\arrow[1] u_4\arrow t_4\arrow[1] u_2\arrow t_2\arrow[2] u \arrow t].
\end{eqnarray*}
What is important to note here is that the output of $u_1$ is $t_1$, yet we cannot take the trail $t_1\arrow[1] u \arrow t$, sourced on $t_1$, in $\ctsequence{\cgraphn',t}$ and replace it with the pair of trails
\begin{displaymath}
t_3\arrow[1] u_1\arrow t_1\arrow[1] u \arrow t \qquad \textup{and} \qquad t_4\arrow[2] u_1\arrow t_1\arrow[1] u \arrow t,
\end{displaymath}
reflecting the inputs $t_3$ and $t_4$ to $u_1$, to yield $\ctsequence{\cgraphn'\cup \neigh{u_1},t}$: looking at the generator $\genpair$ and, in particular, how it changes \textit{locally} when $\neigh{u_1}$ is added to it \textit{does not} allow the new complete trail sequence to be derived. This is because $t_4\arrow[2] u_1\arrow t_1\arrow[1] u \arrow t$ is extendable in  $(\cgraphn'\cup \neigh{u_1},t)$ and, so, not in $\ctsequence{\cgraphn'\cup \neigh{u_1},t}$. One must consider not only the neighbourhood of the new configurator, but the \textit{entire} structure of the newly created generator, $(\cgraphn'\cup \neigh{u_1},t)$.  This motivates why we take a more general approach to proving theorem~\ref{thm:splitPreservesCompleteTrailSequence} which asserts that $\ctsequence{\cgraphn,t}$ is directly obtainable from the generator and the complete trail sequences of the induced constructions in any split.

This more general approach, as indicated above, uses orderly trail sequences where we \textit{are able to consider local changes only}. The sequence $\ctsequence{\cgraphn',t}$ is an orderly trail sequence for $\cpair$, as is $\otsequence =[\trail_1,\trail_2,\trail_3]$ where
\begin{eqnarray*}
\trail_1 & = & t_3\arrow[1] u_1\arrow t_1\arrow[1] u \arrow t, \\
\trail_2 & = & t_4\arrow[2] u_1\arrow t_1\arrow[1] u \arrow t, \\
\trail_3 & = & t_6\arrow[1] u_4\arrow t_4\arrow[1] u_2\arrow t_2\arrow[2] u \arrow t;
\end{eqnarray*}
notice that $\otsequence$ is obtained from  $\ctsequence{\cgraphn',t}$  by replacing $t_1\arrow[1] u \arrow t$ with
$$t_3\arrow[1] u_1\arrow t_1\arrow[1] u \arrow t \qquad \textup{and} \qquad t_4\arrow[2] u_1\arrow t_1\arrow[1] u \arrow t.$$
The three trails in $\otsequence$ induce constructions as follows:
\begin{center}
\adjustbox{scale=\myscale,valign=t}{%
\begin{tikzpicture}[construction]
\node[termrep] (v3) at (1.6,0.2) {$t_3$};
\node[above left = -0.2cm and 0.2cm of v3] {$\ic(\trail_1)$};
\node[constructor={$c_3$}] (u3) at (1.6,-0.8) {$u_3$};
\node[termrep,\termC,\termN] (v5) at (1.6,-1.8) {$t_5$};
\path[->]
(v5) edge[bend right = 0] node[index label] {1} (u3)
(u3) edge[bend right = 0] (v3);
\end{tikzpicture}}
\hspace{1.5cm}
\adjustbox{scale=\myscale,valign=t}{%
\begin{tikzpicture}[construction]
\node[termrep] (v4) at (3.2,0.2) {$t_4$};
\node[above left = -0.2cm and 0.2cm of v4] {$\ic(\trail_2)$};
\node[constructor={$c_4$}] (u4) at (3.2,-0.8) {$u_4$};
\node[termrep,\termC,\termN] (v6) at (3.2,-1.8) {$t_6$};
\path[->]
(u4) edge[bend right = 0] (v4)
(v6) edge[bend right = 0] node[index label] {1} (u4)
;
\end{tikzpicture}}
\hspace{1.5cm}
\adjustbox{scale=\myscale,valign=t}{%
\begin{tikzpicture}[construction]
\node[termrep,\termC,\termN] (v6) at (3.2,-1.8) {$t_6$};
\node[above left = -0.2cm and 0.2cm of v6] {$\ic(\trail_3)$};
\end{tikzpicture}}
\end{center}

Effectively, what we have just witnessed is that extending the trail $t_1\arrow[1] u \arrow t$ using $\neigh{u_1}$ gives rise to two trails, since $u_1$ has two inputs, $t_3$ and $t_4$:
\begin{multline*}
[t_3\arrow[1] u_1,t_4\arrow[2] u_1] \triangleleft (u_1\arrow t_1\oplus t_1\arrow[1] u \arrow t)= \\ [t_3\arrow[1] u_1\arrow t_1\arrow[1] u \arrow t, t_4\arrow[2] u_1\arrow t_1\arrow[1] u \arrow t].
\end{multline*}
Correspondingly,{\pagebreak} the induced construction (below left) arising from  $t_1\arrow[1] u \arrow t$ bifurcates into two induced constructions (below middle and right) by simply deleting $u_1$ and its output, \begin{samepage}$t_1$:

\centerline{
\adjustbox{scale=\myscale}{%
\begin{tikzpicture}[construction]
\node[term={}] (v1) at (2.4,2.2) {$t_1$};
\node[constructor={$c_1$}] (u1) at (2.4,1.2) {$u_1$};
\node[term={}] (v3) at (1.6,0.2) {$t_3$};
\node[constructor={$c_3$}] (u3) at (1.6,-0.8) {$u_3$};
\node[term={},\termC,\termN] (v5) at (1.6,-1.8) {$t_5$};
\node[term={}] (v4) at (3.2,0.2) {$t_4$};
\node[constructor={$c_4$}] (u4) at (3.2,-0.8) {$u_4$};
\node[term={},\termC,\termN] (v6) at (3.2,-1.8) {$t_6$};
\path[->]
(u1) edge[bend right = 0] (v1)
(v3) edge[bend right = -10] node[index label] {1} (u1)
(v4) edge[bend right = 10] node[index label] {2} (u1)
(u4) edge[bend right = 0] (v4)
(v6) edge[bend right = 0] node[index label] {1} (u4)
(v5) edge[bend right = 0] node[index label] {1} (u3)
(u3) edge[bend right = 0] (v3);
\end{tikzpicture}
}
\hspace{50pt}
\adjustbox{scale=\myscale}{
\begin{tikzpicture}[construction]
\node[term={}] (v3) at (1.6,0.2) {$t_3$};
\node[constructor={$c_3$}] (u3) at (1.6,-0.8) {$u_3$};
\node[term={},\termC,\termN] (v5) at (1.6,-1.8) {$t_5$};
\path[->]
(v5) edge[bend right = 0] node[index label] {1} (u3)
(u3) edge[bend right = 0] (v3);
\end{tikzpicture}
}
\hspace{50pt}
\adjustbox{scale=\myscale}{
\begin{tikzpicture}[construction]
\node[term={}] (v4) at (3.2,0.2) {$t_4$};
\node[constructor={$c_4$}] (u4) at (3.2,-0.8) {$u_4$};
\node[term={},\termC,\termN] (v6) at (3.2,-1.8) {$t_6$};
\path[->]
(u4) edge[bend right = 0] (v4)
(v6) edge[bend right = 0] node[index label] {1} (u4);
\end{tikzpicture}
}
}\end{samepage}
This kind of well-behaved `move'
\begin{enumerate}
\item[-] extending a trail, $\trail$, in an orderly trail sequence using $\neigh{u}$, to produce a new orderly trail sequence, and
\item[-] deleting $u$ and its output from $\ic(\trail)$, to produce a sequence of induced constructions
\end{enumerate}
 gives us the opportunity to exploit a proof by induction approach, using orderly trail sequences, on route to a proof of theorem~\ref{thm:splitPreservesCompleteTrailSequence}.

\subsection{Orderly Trail Sequences and Generalised Splits}

We now formally define an orderly trail sequence.

\begin{definition}\label{defn:orderlyTrailSequence}
Let $\cpair$ be a construction. An \textit{orderly trail sequence} for $\cpair$ is a sequence of trails, $\otsequence$, that satisfies the following:
\begin{enumerate}
\item  $\otsequence=[[]_{t}]$, or
\item there exists an orderly trail sequence, $\otsequence'=[\trail_1,...,\trail_m]$, for $\cpair$ containing an extendable trail, $\trail_i$, where, given the arrow $a$ that targets $\source{\trail_i}$ and $\inputsA{\sor{a}}=[a_1,...,a_n]$,
        \begin{displaymath}
          \otsequence=[\trail_1,...,\trail_{i-1}] \oplus \big([[a_1],...,[a_n]]\triangleleft ([a] \oplus \trail_{i})\big) \oplus [\trail_{i+1}  ,...,\trail_m].
        \end{displaymath}

\end{enumerate}
\end{definition}

One property of orderly trail sequences is that they only contain well-formed trails. This is important since we wish to use these trails to form induced constructions.

\begin{lemma}\label{lem:orderlyTrailSequenceIsWellFormed}
Let $\cpair$ be a construction with orderly trail sequence $\otsequence$. Then every trail, $\trail$, in $\otsequence$, is well-formed.
\end{lemma}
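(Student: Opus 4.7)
The plan is to prove the lemma by structural induction on the construction of the orderly trail sequence, following the recursive clauses of definition~\ref{defn:orderlyTrailSequence}. So I would set up the induction on the number of applications of clause~(2) needed to build $\otsequence$, with the base case given by clause~(1).

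For the base case, $\otsequence = [[]_t]$. The unique trail $[]_t$ is the empty trail associated with the single-vertex sequence $[t]$, so its source and target are both $t$, which is a token by the definition of construction. Hence $[]_t$ is well-formed and the base case holds trivially.

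For the inductive step, assume every trail in an orderly trail sequence $\otsequence' = [\trail_1,\ldots,\trail_m]$ is well-formed, and consider $\otsequence$ obtained from $\otsequence'$ by extending an extendable $\trail_i$ at arrow $a$ with $\inputsA{\sor{a}} = [a_1,\ldots,a_n]$, so that
\[
\otsequence = [\trail_1,\ldots,\trail_{i-1}] \oplus \big([[a_1],\ldots,[a_n]] \triangleleft ([a]\oplus \trail_i)\big) \oplus [\trail_{i+1},\ldots,\trail_m].
\]
The trails $\trail_1,\ldots,\trail_{i-1},\trail_{i+1},\ldots,\trail_m$ are inherited unchanged from $\otsequence'$ and are well-formed by the inductive hypothesis. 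It remains to check each of the $n$ new trails $\trail_i^{(j)} := [a_j] \oplus [a] \oplus \trail_i$ for $1 \leq j \leq n$.

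The key observation is that since $\cgraphn$ is a uni-structured bipartite structure graph, an extendable well-formed trail with source in $\tokens$ must extend via an arrow whose source is a configurator: extendability of $\trail_i$ by $a$ means $a$ targets $\source{\trail_i}$, which is a token by the inductive hypothesis, so $\sor{a} \in \pb$ must be a configurator. Therefore the incoming arrows $a_1,\ldots,a_n$ to $\sor{a}$ are sourced on tokens, so each $\sor{a_j} \in \tokens$, giving $\source{\trail_i^{(j)}} = \sor{a_j}$ a token. The target is unchanged: $\target{\trail_i^{(j)}} = \target{\trail_i}$, which is a token by the inductive hypothesis. Hence each new trail is well-formed, completing the induction. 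The only minor subtlety is handling the two possible forms of extendability (the case $\trail_i = []_v$ versus $\trail_i \neq []$) uniformly, but in both cases the vertex $\source{\trail_i}$ being a token forces $\sor{a}$ into $\pb$, so no obstacle arises.
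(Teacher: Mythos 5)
Your proof is correct and follows essentially the same route as the paper's own (sketched) argument: induction over the recursive construction of the orderly trail sequence, with a trivial base case and an inductive step showing the extension step preserves well-formedness. You merely fill in the detail the paper's sketch elides, namely that bipartiteness forces $\sor{a}$ to be a configurator and hence the sources $\sor{a_j}$ of its incoming arrows to be tokens.
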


\begin{proof}[Proof sketch]
The proof is by induction. The base case holds trivially, since $[]_t$ is obviously well-formed. The trail, $\trail_i$, in $\otsequence$ that is extended, in part (2) of definition~\ref{defn:orderlyTrailSequence}, is done so in such a way that resulting trails are well-formed, given that $\trail_i$ is well-formed.
\end{proof}

Now, given \textit{any} orderly trail sequence, $\otsequence$, for $\cpair$, we can convert it into a graph, $\trailToGraph{\otsequence}$, following definition~\ref{defn:tr-graph}. Notably, this graph together with $t$ forms a construction. We will take $(\trailToGraph{\otsequence},t)$ as a generator for a \textit{generalised split}, which we define momentarily. Referring to the example in section~\ref{sec:app:ME:constructions}, the orderly trail sequence
\begin{eqnarray*}
\otsequence & = & [\trail_1,\trail_2,\trail_3] \\
            & = & [t_3\arrow[1] u_1\arrow t_1\arrow[1] u \arrow t,\\
            & & t_4\arrow[2] u_1\arrow t_1\arrow[1] u \arrow t, \\
& & t_6\arrow[1] u_4\arrow t_4\arrow[1] u_2\arrow t_2\arrow[2] u \arrow t],
\end{eqnarray*}
that can be derived from $\cpair$, gives rise to the graph $\cgraphn'\cup \neigh{u_1}$. The construction $(\cgraphn'\cup \neigh{u_1},t)$ together with the induced construction sequence $[\ic(\trail_1),\ic(\trail_2), \ic(\trail_3)]$ form a generalised split. In this case, the generalised split is \textit{not} a split since
$$\otsequence\neq \ctsequence{\cgraphn'\cup \neigh{u_1},t}.$$
This example demonstrates the key difference between splits and generalised splits: in a split, the induced construction sequence is determined by $\cpair$ and the generator; in a generalised split, the induced construction sequence is determined by $\cpair$ and the specified orderly trail sequence, $\otsequence$. We now prove that the $\otsequence$-graph, namely $\trailToGraph{\otsequence}$, gives rise to a construction of $t$.

\begin{lemma}
Let $\cpair$ be a construction with orderly trail sequence $\otsequence$. Then $(\trailToGraph{\otsequence},t)$ is a construction.
\end{lemma}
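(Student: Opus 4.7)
The plan is to proceed by induction on the recursive construction of $\otsequence$ as given by definition~\ref{defn:orderlyTrailSequence}, verifying at each stage the three requirements of a construction: that $\trailToGraph{\otsequence}$ is a finite, uni-structured structure graph, that $t$ lies in it, and that every vertex is the source of a trail in $\trailToGraph{\otsequence}$ targeting $t$. Finiteness and uni-structuredness will come for free because $\trailToGraph{\otsequence}$ is, by definition~\ref{defn:tr-graph}, a subgraph of $\cgraphn$, which is already a finite, uni-structured structure graph. So the substantive work lies in (a) showing that $\trailToGraph{\otsequence}$ is a structure graph in its own right (i.e.\ each configurator appearing in it has its full neighbourhood present) and (b) showing reachability of $t$ from every vertex.

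For the base case, $\otsequence = [[]_t]$, so $\trailToGraph{\otsequence} = (\{t\},\emptyset,\emptyset,\ldots)$ consists of the single vertex $t$. There are no configurators to check, and $[]_t$ itself is the required trail from $t$ to $t$, so $(\trailToGraph{\otsequence},t)$ is trivially a construction.

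For the inductive step, suppose $\otsequence$ arises from an orderly trail sequence $\otsequence'$ by extending some trail $\trail_i$ via the unique arrow $a$ targeting $\source{\trail_i}$, writing $u = \sor{a}$ and $\inputsA{u} = [a_1,\ldots,a_n]$. By the induction hypothesis $(\trailToGraph{\otsequence'},t)$ is a construction. The key identity I would then establish is
\[
\trailToGraph{\otsequence} \;=\; \trailToGraph{\otsequence'} \,\cup\, \neigh{u,\cgraphn},
\]
which holds because the extension in definition~\ref{defn:orderlyTrailSequence}(2) adds precisely the arrow $a$, the configurator $u$, every incoming arrow $a_1,\ldots,a_n$ of $u$, and every source $\sor{a_j}$ — exactly the vertices and arrows making up $\neigh{u,\cgraphn}$ beyond what was already in $\trailToGraph{\otsequence'}$. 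From this identity, the structure-graph property transfers: every configurator $u'$ already in $\trailToGraph{\otsequence'}$ inherits its full neighbourhood from the induction hypothesis, while the newly added $u$ has $\neigh{u,\trailToGraph{\otsequence}} = \neigh{u,\cgraphn}$, which is a configuration of $\consl(u)$ because $\cgraphn$ is a structure graph. Reachability of $t$ from the new vertices is witnessed directly by the trails produced in the inductive step of definition~\ref{defn:orderlyTrailSequence}: $u$ is reached by $[a]\oplus \trail_i$, and each $\sor{a_j}$ by $[a_j,a]\oplus \trail_i$; these are trails in $\trailToGraph{\otsequence}$ by construction, and lemma~\ref{lem:orderlyTrailSequenceIsWellFormed} guarantees $\trail_i$ is well-formed so the concatenations make sense.

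The only real subtlety — and the reason the definition of an orderly trail sequence extends using \emph{all} of $\inputsA{u}$ at once rather than one arrow at a time — is maintaining the structure-graph condition after each extension. If we were allowed to add only some of the $a_j$'s, the resulting graph would contain a configurator $u$ whose neighbourhood in $\trailToGraph{\otsequence}$ omits some incoming arrows, violating definition~\ref{defn:structureGraph}. The all-at-once extension is precisely what ensures that the added fragment is $\neigh{u,\cgraphn}$ in full, and this is where the proof genuinely uses the specific form of definition~\ref{defn:orderlyTrailSequence}(2). Everything else is a bookkeeping exercise about trails, sources, and subgraph inclusion.
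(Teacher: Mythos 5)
Your proof is correct, but it takes a genuinely different route from the paper's. The paper argues directly, with no induction: finiteness and uni-structuredness are inherited because $\trailToGraph{\otsequence}$ is a subgraph of $\cgraphn$, and reachability of $t$ follows because every vertex of $\trailToGraph{\otsequence}$ is either $t$ itself or the source of an arrow lying on some trail of $\otsequence$, and every trail in an orderly trail sequence targets $t$. What the paper's proof does not spell out is exactly the point you isolate as the ``only real subtlety'': that $\trailToGraph{\otsequence}$ is a \emph{structure graph}, i.e.\ that every configurator it contains carries its complete neighbourhood as a configuration of its constructor (definition~\ref{defn:structureGraph}). A subgraph of a structure graph need not have this property, so this is a genuine obligation in condition (1) of definition~\ref{defn:construction}, and the paper's appeal to ``subgraph of $\cgraphn$'' covers only finiteness and uni-structuredness. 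Your induction over the recursive clauses of definition~\ref{defn:orderlyTrailSequence}, organised around the identity $\trailToGraph{\otsequence}=\trailToGraph{\otsequence'}\cup\neigh{u,\cgraphn}$, discharges that obligation explicitly and makes visible why the all-at-once extension over $\inputsA{\sor{a}}$ is essential. The trade-off is that your reachability argument is threaded through the induction, where the paper's direct observation is shorter; the two are interchangeable on that point, but on the structure-graph condition your version is the more complete of the two.
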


\begin{proof}
It is given that $t$ is a token, since $\cpair$ is a construction. Clearly, $\trailToGraph{\otsequence}$ is finite and uni-structured since it is a subgraph of $\cgraphn$. Let $v$ be a vertex in $\trailToGraph{\otsequence}$. Then either $v=t$, in which case $[]_t$ is a trail in $\trailToGraph{\otsequence}$ from $v$ to $t$, or $v\neq t$. In the latter case, there is a trail, $\trail$, in $\otsequence$ that includes some arrow, $a$, where $\sor{a}=v$. By the definition of an orderly trail sequence for $\cpair$ the target of $\trail$ is $t$. Therefore, there is a trail in $\trailToGraph{\otsequence}$ from $v$ to $t$. Hence $(\trailToGraph{\otsequence},t)$ is a construction.
\end{proof}

We now use orderly trail sequences to define generalised splits. Recall that a generator's complete trail sequence is used to create the induced constructions in a split. Here, we  follow a similar process: we use an orderly trail sequence, $\otsequence$, to form induced constructions, as  illustrated above.

\begin{definition}\label{defn:genSplit}
Let $\cpair$ be a construction. A \textit{generalised split} of $\cpair$ is a pair, $\csplit$, such that there exists an orderly trail sequence, $\otsequence$, for $\cpair$ where
\begin{enumerate}
\item $\genpair=(\trailToGraph{\otsequence},t)$, and
\item  $\ics$ is the induced construction sequence obtained by extending the trails in $\otsequence$ in $\cpair$: $\ics=\icsequence{\otsequence,\cpair}$.
\end{enumerate}
We say that such an orderly trail sequence, $\otsequence$, \textit{identifies} the generalised split $\csplit$.
\end{definition}

\noindent An obvious question arises: are all splits also generalised splits? We answer this affirmatively in the next section.

\subsection{Splits are Generalised Splits}

In order to exploit generalised splits to prove our desired results on complete trail sequences (theorem~\ref{thm:splitPreservesCompleteTrailSequence}), foundation sequences (theorem~\ref{thm:splitPreservesfoundations}) and that the original construction can be obtained from the generator and the induced constructions (theorem~\ref{thm:SplitsCoverConstruction}), this section focuses on proving that every split is a generalised split. To begin, we will establish that, given any generator, $\genpair$ of construction $\cpair$,
\begin{enumerate}
\item[(a)] the complete trail sequence, $\ctsequence{\cgraphn',t}$, is an orderly trail sequence for $\cpair$, and
\item[(b)] the graph $\cgraphn'$ is the $\ctsequence{\cgraphn',t}$-graph.
\end{enumerate}
Lemma~\ref{lem:topDTSisOrderlyForConstruction} addresses (a) and corollary~\ref{cor:DTSforTopGeneratesTop} addresses (b). The proof strategy for lemma~\ref{lem:topDTSisOrderlyForConstruction} is to take $\ctsequence{\cgraphn',t}$ and reduce the trails in it until we obtain $[[]_t]$. This process can then be reversed to show that $\ctsequence{\cgraphn',t}$ is an orderly trail sequence. For example, taking the generator $\genpair$ for $\cpair$, as given in section~\ref{sec:app:ME:constructions}, we have
$$\ctsequence{\cgraphn',t}= [t_1\arrow[1] u \arrow t, t_6\arrow[1] u_4\arrow t_4\arrow[1] u_2\arrow t_2\arrow[2] u \arrow t].$$
We can iteratively reduce the length of the trails using neighbourhoods of configurators. For instance, we can remove the neighbourhood of $u_4$, specifically the arrows from $t_6$ to $u_4$ and from $u_4$ to $t_4$, from the second trail. This is the first reduction we make below, the second reduction is similar, removing the arrows in the neighbourhood of $u_2$. The last step reduces two trails, using the neighbourhood of $u$, which has two incoming arrows. We have:
\begin{eqnarray*}
\ctsequence{\cgraphn',v}& = & [t_1\arrow[1] u \arrow t, t_6\arrow[1] u_4\arrow t_4\arrow[1] u_2\arrow t_2\arrow[2] u \arrow t]  \\
                        & \mapsto & [t_1\arrow[1] u \arrow t, t_4\arrow[1] u_2\arrow t_2\arrow[2] u \arrow t] \\
                        & \mapsto & [t_1\arrow[1] u \arrow t,t_2\arrow[2] u \arrow t] \\
                        & \mapsto & [[]_t].
\end{eqnarray*}
We can reverse this process to show that $\ctsequence{\cgraphn',t}$ is an orderly trail sequence.

\begin{lemma}\label{lem:topDTSisOrderlyForConstruction}
Let $\cpair$ be a construction with generator $\genpair$. Then $\ctsequence{\cgraphn',t}$ is an orderly trail sequence for $\cpair$.
\end{lemma}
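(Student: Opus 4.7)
The plan is to prove the lemma by induction, matching the recursive structure of the definition of $\tesequence$ against the recursive structure of an orderly trail sequence. Rather than trying to identify a ``peelable'' configurator in $\cgraphn'$ (which would fail when $\cgraphn'$ contains cycles), I will prove the slightly stronger claim that, given any orderly trail sequence $\otsequence$ for $\cpair$ that contains a well-formed trail $\trail$ (with target $t$ and all arrows in $\cgraphn'$), one can perform a finite sequence of extension steps of definition~\ref{defn:orderlyTrailSequence}, each using only arrows drawn from $\cgraphn'$, that transforms $\otsequence$ into the orderly trail sequence obtained by replacing $\trail$ with $\tesequence{\trail,\cgraphn'}\triangleleft \trail$. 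The lemma follows by instantiating the claim with $\otsequence = [[]_t]$ (trivially orderly by part~1 of the definition) and $\trail = []_t$, since $\tesequence{[]_t,\cgraphn'}\triangleleft []_t = \ctsequence{\cgraphn',t}$.

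The induction will be on the recursion depth of computing $\tesequence{\trail,\cgraphn'}$, which is finite because trails in a finite graph are bounded in length (no arrow is repeated). For the base case, $\trail$ is non-extendable in $\cgraphn'$, so $\tesequence{\trail,\cgraphn'}=[[]_{\source{\trail}}]$ and no extension steps are required. For the inductive step, $\trail$ is extendable in $\cgraphn'$ by the unique arrow $a\in \cgraphn'$ targeting $\source{\trail}$ (unique because $\cgraphn'$ inherits uni-structuredness from $\cgraphn$). Let $\inputsA{\sor{a}} = [a_1,\ldots,a_n]$; this sequence is the same whether taken in $\cgraphn$ or in $\cgraphn'$ because $\genpair$ is a construction, so the neighbourhood $\neigh{\sor{a}}$ of the configurator $\sor{a}$ is fully contained in $\cgraphn'$. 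Applying the extension rule of definition~\ref{defn:orderlyTrailSequence} to $\trail$ replaces it with the sequence $[[a_i,a]\oplus \trail]_{i=1}^n$, and every arrow introduced lies in $\cgraphn'$. Then apply the inductive hypothesis $n$ times in turn, once to each $[a_i,a]\oplus \trail$, which is strictly longer than $\trail$ and thus has strictly smaller recursion depth.

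The final obligation is algebraic: verifying that the resulting replacement equals $\tesequence{\trail,\cgraphn'}\triangleleft \trail$. Unfolding the definition of $\tesequence$ and using the identity $(X\triangleleft Y)\triangleleft Z = X\triangleleft (Y\oplus Z)$, which holds for any sequences $X,Y,Z$ by associativity of concatenation, one obtains
\begin{displaymath}
\tesequence{\trail,\cgraphn'}\triangleleft \trail
= \bigoplus_{i=1}^n \bigl(\tesequence{[a_i,a]\oplus \trail,\cgraphn'}\triangleleft [a_i,a]\bigr)\triangleleft \trail
= \bigoplus_{i=1}^n \tesequence{[a_i,a]\oplus \trail,\cgraphn'}\triangleleft ([a_i,a]\oplus \trail),
\end{displaymath}
which is exactly the aggregate effect of the $n$ inductive applications. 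That successive extensions commute well enough to be composed — because each application modifies only one trail in the sequence at its position, and the trails produced at different branches share no arrows — follows directly from the local form of the extension rule.

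The main obstacle will be the bookkeeping: keeping track of positions as trails are replaced by longer subsequences, and confirming that the apparently independent extensions of $[a_1,a]\oplus \trail,\ldots,[a_n,a]\oplus \trail$ can indeed be performed sequentially while preserving orderliness. The cycle case is handled automatically because the $\tesequence$ recursion naturally terminates once an extension would revisit an arrow, which is exactly the same termination condition used by extendability in definition~\ref{defn:orderlyTrailSequence}.
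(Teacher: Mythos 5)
Your proof is correct and follows essentially the same route as the paper's: the paper peels recursive leaves off the computation of $\ctsequence{\cgraphn',t}$ until it reaches $[[]_t]$ and then ``reverses the process,'' which is exactly your forward induction on the recursion depth of $\tesequence{\cdot,\cgraphn'}$, resting on the same key observation that $\inputsA{\sor{a}}$ coincides in $\cgraphn'$ and $\cgraphn$ because a generator is itself a structure graph whose configurator neighbourhoods are full configurations. One cosmetic remark: your justification that the successive extensions compose because branches ``share no arrows'' is both unneeded and not literally true (distinct branches of $\tesequence$ may share arrows); no commutation issue arises simply because the extension rule of definition~\ref{defn:orderlyTrailSequence} acts locally on a single position of the sequence and imposes no constraint across trails.
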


\begin{proof}
The complete trail sequence $\ctsequence{\cgraphn',t}$ is constructed by a sequence of recursive steps, each of which extends trails to make one or more new trails. Consider any given sequence of trail-construction steps used to form $\ctsequence{\cgraphn',t}$. Removing a recursive leaf from this sequence of construction steps strictly reduces the sum of the trail-lengths and replaces $[a_1,a]\oplus \trail, ..., [a_n,a]\oplus \trail$, where $\inputsA{a}=[a_1,...,a_n]$, with $\trail$. Notably, $\inputsA{a}$ is the same in $\cgraphn'$ and $\cgraphn$. Repeatedly applying this shortening of trails in $\ctsequence{\cgraphn',t}$ eventually yields $[[]_t]$. Reversing the process yields an orderly trail sequence for $\cpair$. Hence $\ctsequence{\cgraphn',t}$ is an orderly trail sequence for $\cpair$.
\end{proof}

Corollary~\ref{cor:DTSforTopGeneratesTop} establishes that for any construction, $\cpair$, with generator $\genpair$ the graph $\cgraphn'$ is the same as the $\ctsequence{\cgraphn',t}$-graph. Intuitively, this is because every arrow of $\cgraphn'$ must occur in some non-extendable trail, also in $\cgraphn'$, that targets $t$. Once we know all of the arrows in $\cgraphn'$ occur in some trail in $\ctsequence{\cgraphn',t}$, the result follows. Lemma~\ref{lem:constructionIsTES-graph} establishes this result in a more general setting: given any construction, $\cpair$, the graph $\cgraphn$ is the $\ctsequence{\cgraphn,t}$-graph.

\begin{lemma}\label{lem:constructionIsTES-graph}
Let $\cpair$ be a construction. Then $\cgraphn$ is the $\ctsequence{\cgraphn,t}$-graph, that is $\cgraphn=\trailToGraph{\ctsequence{\cgraphn,t}}$.
\end{lemma}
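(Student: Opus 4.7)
\textbf{Proof proposal for Lemma~\ref{lem:constructionIsTES-graph}.} The plan is to establish set equality of the arrow sets and vertex sets of $\cgraphn$ and $\trailToGraph{\ctsequence{\cgraphn,t}}$, working the two inclusions separately. The inclusion $\trailToGraph{\ctsequence{\cgraphn,t}}\subseteq \cgraphn$ is essentially immediate: every trail in $\ctsequence{\cgraphn,t}$ is by definition a trail in $\cgraphn$, so every arrow occurring in such a trail, every incident vertex, and every source vertex $t$ of an empty trail (which is a vertex of $\cgraphn$ since it is the construct) is a vertex/arrow of $\cgraphn$.

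The substantive direction is $\cgraphn\subseteq \trailToGraph{\ctsequence{\cgraphn,t}}$, which I would split into an arrow case and a vertex case. For arrows, fix an arrow $a$ in $\cgraphn$. I need to exhibit a non-extendable trail targeting $t$ that contains $a$, because by Corollary~\ref{cor:TESAllandOnlyNonExtendable} such a trail lies in $\ctsequence{\cgraphn,t}$, placing $a$ into $\trailToGraph{\ctsequence{\cgraphn,t}}$. Since $\tar{a}$ is a vertex of $\cgraphn$, the definition of a construction supplies a trail $\trail$ from $\tar{a}$ to $t$. If $a$ does not occur in $\trail$ then $[a]\oplus \trail$ is a trail from $\sor{a}$ to $t$ containing $a$; otherwise write $\trail=\trail_1\oplus [a]\oplus \trail_2$, and observe that, since $\trail$ has no repeated arrows, $\trail_2$ does not contain $a$, so $[a]\oplus \trail_2$ is again a trail from $\sor{a}$ to $t$ containing $a$. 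In either case I obtain a well-formed trail $\trail^*$ targeting $t$ with $a$ in it. Because $\cgraphn$ is finite, iteratively prepending arrows while maintaining the trail property terminates, producing a non-extendable trail $\trail^{**}$ that contains $\trail^*$ as a suffix, hence contains $a$; this $\trail^{**}$ belongs to $\ctsequence{\cgraphn,t}$.

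For vertices I would treat the construct $t$ separately and reduce all other cases to the arrow case. If any arrow targets $t$, the arrow argument above already places $t$ into $\trailToGraph{\ctsequence{\cgraphn,t}}$ as an incident vertex. If no arrow targets $t$ then $[]_t$ is itself the unique non-extendable trail targeting $t$, so $\ctsequence{\cgraphn,t}=[[]_t]$ and the clause of Definition~\ref{defn:tr-graph} covering empty trails inserts $t$. For any other vertex $v\neq t$, uni-structuredness combined with the requirement that $v$ is the source of a trail to $t$ forces $v$ to be incident to at least one arrow of $\cgraphn$; that arrow appears in some trail in $\ctsequence{\cgraphn,t}$ by the arrow case, so $v$ is included as an incident vertex.

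The main potential obstacle is the arrow case: it is tempting to try to build the required trail greedily by prepending $a$ to an arbitrary trail from $\tar{a}$ to $t$, which fails when that trail already traverses $a$. The remedy I have sketched, splitting the existing trail at $a$ and discarding the prefix, is the crucial move and relies on no additional hypothesis beyond the trail having no repeated arrows. Once that construction is in hand, everything else is routine bookkeeping against Definition~\ref{defn:tr-graph} and Corollary~\ref{cor:TESAllandOnlyNonExtendable}.
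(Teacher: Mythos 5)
Your proof is correct and follows essentially the same route as the paper's: both dispose of the inclusion $\trailToGraph{\ctsequence{\cgraphn,t}}\subseteq \cgraphn$ immediately and, for the converse, place each arrow of $\cgraphn$ on a well-formed non-extendable trail targeting $t$ and then invoke the completeness of $\tesequence{[]_t}=\ctsequence{\cgraphn,t}$ (lemma~\ref{lem:CESisComplete}, equivalently corollary~\ref{cor:TESAllandOnlyNonExtendable}). If anything, your write-up is more careful than the paper's, which asserts without comment that $a$ lies on such a trail and does not separately treat the vertices; your split-at-$a$ manoeuvre and the handling of $[]_t$ when no arrow targets $t$ fill in exactly those details.
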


\begin{proof}
By the definition of $\trailToGraph{\ctsequence{\cgraphn,t}}$, we know that $\trailToGraph{\ctsequence{\cgraphn,t}}\subseteq \cgraphn$.
Given any arrow, $a$, that occurs in $\cgraphn$, there is a trail, $\trail$, from its source, $\source{\trail}$, to $t$. Therefore, $a$ occurs in some well-formed, non-extendable trail, $\trail'\oplus \trail$, that targets $t$. By lemma~\ref{lem:CESisComplete}, we deduce that $a$ occurs in $\tesequence{[]_t}=\ctsequence{\cgraphn,t}$. Thus, every arrow in $\cgraphn$ occurs in some trail in $\ctsequence{\cgraphn,t}$. It follows trivially that $\cgraphn\subseteq \trailToGraph{\ctsequence{\cgraphn,t}}$. Hence $\cgraphn=\trailToGraph{\ctsequence{\cgraphn,t}}.$
\end{proof}

\begin{corollary}\label{cor:DTSforTopGeneratesTop}
Let $\cpair$ be a construction with generator $(\cgraphn',t)$. Then $\cgraphn' = \trailToGraph{\ctsequence{\cgraphn',t}}$.
\end{corollary}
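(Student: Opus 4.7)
The plan is to observe that this corollary follows immediately from Lemma~\ref{lem:constructionIsTES-graph} by applying that lemma to the generator itself. Recall that by Definition~\ref{defn:generator}, a generator of $\cpair$ is a construction $\genpair$ where $\cgraphn' \subseteq \cgraphn$. The crucial observation is that a generator is, by definition, itself a construction in its own right — there is nothing that requires $\cgraphn'$ to be viewed only as a subgraph of $\cgraphn$; it carries all the structural properties (finite, uni-structured, every vertex lies on a trail to $t$) required of a construction.

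Given this observation, I would simply apply Lemma~\ref{lem:constructionIsTES-graph}, which states that for any construction $(\cgraphn'',t'')$ we have $\cgraphn'' = \trailToGraph{\ctsequence{\cgraphn'',t''}}$, to the particular construction $\genpair$. Instantiating $(\cgraphn'',t'')$ as $\genpair$ yields $\cgraphn' = \trailToGraph{\ctsequence{\cgraphn',t}}$, which is exactly the conclusion required.

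There is no real obstacle here, as this is the textbook pattern of deriving a corollary by specialisation. The only subtle point worth making explicit in the one-line proof is that the complete trail sequence $\ctsequence{\cgraphn',t}$ appearing in the statement is computed \emph{within the generator} $\genpair$ (not within the ambient construction $\cpair$), which matches precisely the hypothesis needed to invoke Lemma~\ref{lem:constructionIsTES-graph}. A proof in two sentences suffices: ``Since $\genpair$ is a construction, Lemma~\ref{lem:constructionIsTES-graph} applies with $(\cgraphn,t)$ replaced by $\genpair$, giving $\cgraphn' = \trailToGraph{\ctsequence{\cgraphn',t}}$, as required.''
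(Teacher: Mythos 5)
Your proposal is correct and matches the paper exactly: the paper states this as an immediate corollary of Lemma~\ref{lem:constructionIsTES-graph}, obtained by applying that lemma to the generator $(\cgraphn',t)$, which is itself a construction by Definition~\ref{defn:generator}. Your remark that $\ctsequence{\cgraphn',t}$ is computed within the generator rather than the ambient construction is the right subtlety to flag, and nothing further is needed.
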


Summarising lemma~\ref{lem:topDTSisOrderlyForConstruction} and corollary~\ref{cor:DTSforTopGeneratesTop}, we now know that the complete trail sequence, $\ctsequence{\cgraphn',t}$, of a generator, $\genpair$, is an orderly trail sequence for $\cpair$ and that $\cgraphn'$ is the graph obtained from $\ctsequence{\cgraphn',t}$. These facts tell us that any generator, from which we can derive a split of $\cpair$, forms part of a generalised split. We now use this insight to prove that any split is a generalised split.

\begin{theorem}\label{thm:splitsAreGenSplits}
Let $\cpair$ be a construction with split $\csplit$. Then $\csplit$ is a generalised split identified by $\ctsequence{\cgraphn',t}$.
\end{theorem}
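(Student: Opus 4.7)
The plan is to show that the complete trail sequence of the generator itself serves as the identifying orderly trail sequence for the generalised split. In other words, I will take $\otsequence = \ctsequence{\cgraphn',t}$ and verify each clause of definition~\ref{defn:genSplit} using the two facts that immediately precede the theorem.

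First, I would invoke lemma~\ref{lem:topDTSisOrderlyForConstruction}, which tells us that $\ctsequence{\cgraphn',t}$ is an orderly trail sequence for $\cpair$. This alone establishes that $\otsequence$ is a legitimate candidate identifier for a generalised split. Next, I would verify condition (1) of definition~\ref{defn:genSplit}: the generator of the generalised split should be $(\trailToGraph{\otsequence}, t)$. Corollary~\ref{cor:DTSforTopGeneratesTop} yields $\cgraphn' = \trailToGraph{\ctsequence{\cgraphn',t}} = \trailToGraph{\otsequence}$, so the generator $\genpair$ of the split is literally equal to $(\trailToGraph{\otsequence}, t)$, as required.

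Finally, condition (2) of definition~\ref{defn:genSplit} requires that $\ics = \icsequence{\otsequence, \cpair}$. But $\ics$ is defined, in definition~\ref{defn:split} of a split, to be precisely $\icsequence{\ctsequence{\cgraphn',t}, \cpair}$, and since $\otsequence = \ctsequence{\cgraphn',t}$ this is immediate. All clauses of the generalised split definition thus hold with $\otsequence = \ctsequence{\cgraphn',t}$ as the witnessing orderly trail sequence, so $\csplit$ is a generalised split identified by $\ctsequence{\cgraphn',t}$.

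There is no real obstacle here: the proof is essentially a matter of unpacking definitions and citing the two preparatory results. All the technical work has been done in lemma~\ref{lem:topDTSisOrderlyForConstruction} (showing that $\ctsequence{\cgraphn',t}$ can be built up via the recursive extension rule of definition~\ref{defn:orderlyTrailSequence}) and in corollary~\ref{cor:DTSforTopGeneratesTop} (showing the graph generated coincides with $\cgraphn'$). The theorem itself is therefore a short verification rather than a substantive argument.
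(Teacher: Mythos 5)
Your proposal is correct and follows essentially the same route as the paper's own proof: it takes $\otsequence=\ctsequence{\cgraphn',t}$, cites lemma~\ref{lem:topDTSisOrderlyForConstruction} for orderliness, corollary~\ref{cor:DTSforTopGeneratesTop} for condition (1), and the definition of a split for condition (2). No gaps.
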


\begin{proof}
By lemma~\ref{lem:topDTSisOrderlyForConstruction}, $\ctsequence{\cgraphn',t}$ is an orderly trail sequence for $\cpair$. By corollary~\ref{cor:DTSforTopGeneratesTop}, we know that $\cgraphn'=\trailToGraph{\ctsequence{\cgraphn',t}}$. Therefore $\genpair=(\trailToGraph{\ctsequence{\cgraphn',t}},t)$,  which implies that $\genpair$ satisfies part 1 of definition~\ref{defn:genSplit}.  For part 2 of definition~\ref{defn:genSplit}, by the definition of a split, $\mathit{ics}=\icsequence{\ctsequence{\genpair},\cpair}$, which is exactly what we need $\mathit{ics}$ to be when forming a generalised split from the orderly trail sequence $\ctsequence{\genpair}$. Hence $\csplit$ is a generalised split identified by $\ctsequence{\cgraphn',t}$.
\end{proof}
Therefore, the results proved in the rest of section~\ref{sec:constructions} for generalised splits also apply to splits.

\subsection{Deriving Complete Trail Sequences from Splits}

Recall that we are aiming to show, given any \textit{split} of $\cpair$, that we can derive the complete trail sequence of $\cpair$ from the generator and the induced construction sequence, embodied in theorem~\ref{thm:splitPreservesCompleteTrailSequence}. We prove this result holds for \textit{generalised splits} in theorem~\ref{athm:GenSplitPreservesCompleteTrailSequence}. A key part of the proof strategy  establishes how we can derive $\tesequence{[]_t}$ from trail extension sequences that arise from trails in an orderly trail sequence, $\otsequence$, captured by theorem~\ref{thm:trailExtensionSequencesEssentiallyMatch}. The proof of theorem~\ref{thm:trailExtensionSequencesEssentiallyMatch} exploits our ability to readily use an induction argument over orderly trail sequences, extending trails using the neighbourhood of some configurator.

\begin{theorem}\label{thm:trailExtensionSequencesEssentiallyMatch}
Let $\cpair$ be a construction with orderly trail sequence $\otsequence=[\trail_1,...,\trail_m]$. Then
\begin{displaymath}
\tesequence{[]_t}= \big(\tesequence{\trail_1}\triangleleft \trail_1\big) \oplus \cdots \oplus \big(\tesequence{\trail_m}\triangleleft \trail_m\big).
\end{displaymath}
\end{theorem}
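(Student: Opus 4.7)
The plan is to prove the theorem by induction on the structural definition of an orderly trail sequence (definition~\ref{defn:orderlyTrailSequence}). The base case handles $\otsequence = [[]_t]$, and the inductive step handles the extension of a single trail $\trail_i$ in a shorter orderly trail sequence by prepending the neighbourhood-induced arrows around a configurator.

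For the base case, I would take $\otsequence = [[]_t]$, so $m=1$ and $\trail_1 = []_t$. The right-hand side becomes $\tesequence{[]_t} \triangleleft []_t$. Since $[]_t$ is the empty arrow sequence, $\genSeqn \oplus []_t = \genSeqn$ for every $\genSeqn$, so $\tesequence{[]_t} \triangleleft []_t = \tesequence{[]_t}$, matching the left-hand side.

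For the inductive step, I would assume the theorem holds for some orderly trail sequence $\otsequence' = [\trail_1, \ldots, \trail_m]$, and consider the new orderly trail sequence obtained by extending an extendable $\trail_i$ using the arrow $a$ targeting $\source{\trail_i}$ and the input arrows $\inputsA{\sor{a}} = [a_1,\ldots,a_n]$:
\begin{displaymath}
\otsequence = [\trail_1,\ldots,\trail_{i-1}] \oplus \big([[a_1],\ldots,[a_n]] \triangleleft ([a] \oplus \trail_i)\big) \oplus [\trail_{i+1}, \ldots, \trail_m].
\end{displaymath}
By the inductive hypothesis, $\tesequence{[]_t}$ already decomposes along $\otsequence'$, with the $i$th block equal to $\tesequence{\trail_i} \triangleleft \trail_i$. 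The goal then reduces to showing that this single block equals
\begin{displaymath}
\big(\tesequence{[a_1,a]\oplus \trail_i} \triangleleft ([a_1,a]\oplus \trail_i)\big) \oplus \cdots \oplus \big(\tesequence{[a_n,a]\oplus \trail_i} \triangleleft ([a_n,a]\oplus \trail_i)\big).
\end{displaymath}
Because $\trail_i$ is extendable by $a$, part (2) of definition~\ref{defn:DTS} gives
\begin{displaymath}
\tesequence{\trail_i} = \big(\tesequence{[a_1,a]\oplus \trail_i} \triangleleft [a_1,a]\big) \oplus \cdots \oplus \big(\tesequence{[a_n,a]\oplus \trail_i} \triangleleft [a_n,a]\big),
\end{displaymath}
and right-multiplying by $\trail_i$ and applying the associativity identity $(X \triangleleft [a_j,a]) \triangleleft \trail_i = X \triangleleft ([a_j,a] \oplus \trail_i)$ (which follows immediately from associativity of $\oplus$ applied elementwise) delivers exactly the required equality.

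The argument is largely bookkeeping, so the main obstacle will be keeping the interplay between $\oplus$ and $\triangleleft$ precise: I need to verify that the right product distributes over concatenation (i.e.\ $(S_1 \oplus S_2) \triangleleft \genSeqn = (S_1 \triangleleft \genSeqn) \oplus (S_2 \triangleleft \genSeqn)$) and is associative in the sense above. These identities are immediate from the definitions in section~\ref{sec:prelims}, but they are used implicitly at every step. A secondary concern is to justify that the inductive reorganisation is well-posed, i.e.\ that the new sequence on the right-hand side is indeed the image under the definition of $\tesequence{\trail_i}$ applied within the decomposition; this is precisely where the alignment between part (2) of definition~\ref{defn:orderlyTrailSequence} and part (2) of definition~\ref{defn:DTS} is needed, and the two are designed to match exactly.
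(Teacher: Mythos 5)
Your proposal is correct and follows essentially the same route as the paper's own proof: induction on the inductive construction of orderly trail sequences, reducing the inductive step to replacing the single block $\tesequence{\trail_i}\triangleleft \trail_i$ via part (2) of the definition of trail extension sequences together with the distributivity of $\triangleleft$ over $\oplus$ and the identity $(X\triangleleft S)\triangleleft S' = X\triangleleft (S\oplus S')$. The paper invokes exactly these two sequence identities at the same points, so there is nothing substantive to add.
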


\begin{proof}
The proof is by induction on the depth of $\otsequence$ in the inductive construction of orderly trail sequences. For the base case, $\otsequence=[[]_t]$ and the result trivially holds. Assume that the result holds for all orderly trail sequences at depth $k$. Suppose that $\otsequence$ is at depth $k+1$. Then there exists an orderly trail sequence,
\begin{displaymath}
\otsequence'=[\trail_1,...,\trail_m],
\end{displaymath}
at depth $k$, where $\otsequence'$ contains a trail, $\trail_i$, that is extendable by arrow $a$ in $\cgraphn$ such that $\otsequence$ is obtained from $\otsequence'$, $\trail_i$, and $a$ as in definition~\ref{defn:orderlyTrailSequence}. Then, given $\inputsA{\sor{a}}=[a_1,...,a_n]$, we have
        \begin{displaymath}
        \otsequence=[\trail_1,...,\trail_{i-1}] \oplus \Big([[a_1],...,[a_n]]\triangleleft ([a] \oplus \trail_{i})\Big) \oplus [\trail_{i+1}  ,...,\trail_m].
        \end{displaymath}
       Thus, given that
        \begin{displaymath}
        [[a_1],...,[a_n]]\triangleleft \big([a] \oplus \trail_{i}\big)= [[a_1,a]\oplus \trail_i,...,[a_n,a]\oplus \trail_i],
        \end{displaymath}
        by our inductive assumption it is sufficient to show that
        \begin{multline*}
        \tesequence{\trail_i}\triangleleft\trail_i=\Big(\tesequence{[a_1,a]\oplus \trail_i}\triangleleft \big([a_1,a]\oplus \trail_i\big)\Big)\oplus \cdots \\ \oplus \Big(\tesequence{[a_n,a]\oplus \trail_i}\triangleleft\big([a_n,a]\oplus \trail_i\big)\Big) \qquad \textup{(*)}.
        \end{multline*}
        By definition~\ref{defn:DTS},
        \begin{displaymath}
        \tesequence{\trail_i}= \Big(\tesequence{[a_1,a]\oplus \trail_i}\triangleleft [a_1,a] \Big) \oplus \cdots \oplus \Big(\tesequence{[a_n,a]\oplus \trail_i}\triangleleft [a_n,a]\Big).
        \end{displaymath}
        Therefore,
        \begin{displaymath}
        \tesequence{\trail_i}\triangleleft \trail_i= \Big(\big(\tesequence{[a_1,a]\oplus \trail_i}\triangleleft [a_1,a] \big) \oplus \cdots \oplus \big(\tesequence{[an,a]\oplus \trail_i}\triangleleft [a_n,a]\big)\Big) \triangleleft\trail_i.
        \end{displaymath}
        Noting, given sequences of sequences, $\textup{S}_1$,..., $\textup{S}_n$, and a sequence, $S$, that
        \begin{displaymath}
        (\textup{S}_1\oplus \cdots \oplus \textup{S}_n)\triangleleft S = (\textup{S}_1\triangleleft S)\oplus \cdots \oplus (\textup{S}_n\triangleleft S),
        \end{displaymath}
        we see that the RHS becomes
        \begin{multline*}
        \tesequence{\trail_i}\triangleleft \trail_i= \Big(\big(\tesequence{[a_1,a]\oplus \trail_i}\triangleleft [a_1,a] \big)\triangleleft\trail_i\Big) \oplus \cdots \\
        \oplus \Big(\big(\tesequence{[a_n,a]\oplus \trail_i}\triangleleft [a_n,a]\big) \triangleleft\trail_i\Big).
        \end{multline*}
        Now, using the fact that $(\textup{S}_j\triangleleft S)\triangleleft S' = \textup{S}_j \triangleleft (S\oplus S')$, we deduce that
        \begin{multline*}
        \tesequence{\trail_i}\triangleleft \trail_i= \Big(\tesequence{[a_1,a]\oplus \trail_i}\triangleleft \big([a_1,a] \oplus\trail_i\big)\Big) \oplus \cdots \\ \oplus \Big(\tesequence{[a_n,a]\oplus \trail_i}\triangleleft \big([a_n,a]\oplus \trail_i\big)\Big).
        \end{multline*}
        But this is exactly (*) above, which is what we needed to show so we are done. Thus, for all orderly trail sequence $\otsequence=[\trail_1,...,\trail_m]$, we have shown that
\begin{displaymath}
\tesequence{[]_t}= \big(\tesequence{\trail_1}\triangleleft \trail_1\big) \oplus \cdots \oplus \big(\tesequence{\trail_m}\triangleleft \trail_m\big),
\end{displaymath}
by induction.
\end{proof}

\begin{corollary}\label{cor:trailExtensionSequencesEssentiallyMatch}
Let $\cpair$ be a construction with generalised split $\csplit$ identified by orderly trail sequence $\otsequence=[\trail_1,...,\trail_m]$. Then
\begin{displaymath}
\tesequence{[]_t,\cpair}= \big(\tesequence{\trail_1,\cpair}\triangleleft \trail_1\big) \oplus \cdots \oplus \big(\tesequence{\trail_m,\cpair}\triangleleft \trail_m\big).
\end{displaymath}
\end{corollary}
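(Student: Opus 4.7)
The plan is to observe that this corollary follows almost immediately from Theorem~\ref{thm:trailExtensionSequencesEssentiallyMatch}, which has just been proved. Unpacking the hypothesis via Definition~\ref{defn:genSplit}, saying that $\csplit$ is a generalised split of $\cpair$ identified by $\otsequence=[\trail_1,\ldots,\trail_m]$ requires that $\otsequence$ is an orderly trail sequence for $\cpair$ (it is precisely the sequence whose $\otsequence$-graph gives the generator and whose trails are extended to form the induced constructions). That is exactly the hypothesis of Theorem~\ref{thm:trailExtensionSequencesEssentiallyMatch}.

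Hence I would proceed in two short steps. First, cite Definition~\ref{defn:genSplit} to extract the fact that $\otsequence$ is an orderly trail sequence for $\cpair$. Second, apply Theorem~\ref{thm:trailExtensionSequencesEssentiallyMatch} to this $\otsequence$, yielding
\begin{displaymath}
\tesequence{[]_t,\cpair}= \big(\tesequence{\trail_1,\cpair}\triangleleft \trail_1\big) \oplus \cdots \oplus \big(\tesequence{\trail_m,\cpair}\triangleleft \trail_m\big),
\end{displaymath}
which is the claimed identity. There is no real obstacle here; the corollary is essentially a repackaging of the theorem with the additional context of a generalised split, and the only point worth making explicit is the appeal to Definition~\ref{defn:genSplit} so that the reader sees why the theorem's hypothesis is met. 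The role of this corollary in the ensuing development is to put the conclusion of Theorem~\ref{thm:trailExtensionSequencesEssentiallyMatch} in the form needed for Theorem~\ref{athm:GenSplitPreservesCompleteTrailSequence}, where one wishes to talk about the generator and induced constructions of the generalised split rather than an abstract orderly trail sequence.
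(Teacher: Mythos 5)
Your proposal is correct and matches the paper's intent exactly: the corollary is stated without a separate proof precisely because, by Definition~\ref{defn:genSplit}, the identifying sequence $\otsequence$ is an orderly trail sequence for $\cpair$, so Theorem~\ref{thm:trailExtensionSequencesEssentiallyMatch} applies verbatim. Nothing is missing.
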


Our next task is to prove that the trail extension sequence of a trail, $\trail$, in $\cpair$ is the same as the trail extension sequence of $[]_{t'}$ in the induced construction $\icp{\trail}$, where $t'$ is the source of $\trail$. Since $\tesequence{[]_{t'},\icp{\trail}}=\ctsequence{\icp{\trail}}$, this result provides an important link between trail extension sequences and complete trail sequences.

\begin{lemma}\label{lem:DTSofTRisDTSofBot}
Let $\cpair$ be a construction containing a well-formed trail, $\trail$, with source $t'$ and target $t$. It is the case that
\begin{displaymath}
\tesequence{\trail,\cpair}=\tesequence{[]_{t'},\icp{\trail}}.
\end{displaymath}
\end{lemma}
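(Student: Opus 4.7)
The plan is to prove the identity by strong induction on the maximum length of trails appearing in $\tesequence{\trail,\cpair}$, or equivalently on the depth of recursion in unfolding this sequence via definition~\ref{defn:DTS}. Throughout, I will use that $\icp{\trail}=(\icgp{\trail},t')$ is formed from the $\tesequence{\trail,\cpair}$-graph together with $t'$ (following definitions~\ref{defn:tr-graph} and~\ref{defn:inducedConstruction}), so the arrows of $\icgp{\trail}$ are exactly those appearing in some trail of $\tesequence{\trail,\cpair}$, and $\icgp{\trail}\subseteq \cgraphn$.

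For the base case, when $\trail$ is non-extendable in $\cgraphn$, part (1) of definition~\ref{defn:DTS} gives $\tesequence{\trail,\cpair}=[[]_{t'}]$. Then $\icgp{\trail}$ has no arrows (only the vertex $t'$, retained because the empty trail is explicitly associated with its source vertex), so $[]_{t'}$ is non-extendable in $\icp{\trail}$ and $\tesequence{[]_{t'},\icp{\trail}}=[[]_{t'}]$. For the inductive step, suppose $\trail$ is extendable in $\cgraphn$ by the (unique) arrow $a$ targeting $t'$, with $\inputsA{\sor{a},\cgraphn}=[a_1,\ldots,a_n]$. Part (2) of definition~\ref{defn:DTS} gives
$$\tesequence{\trail,\cpair}=\bigoplus_{i=1}^n \big(\tesequence{[a_i,a]\oplus\trail,\cpair}\triangleleft[a_i,a]\big).$$

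To obtain the analogous unfolding in $\icp{\trail}$, I will establish two sub-claims. First, each $a_i$ lies in $\icgp{\trail}$ and $\inputsA{\sor{a},\icgp{\trail}}=[a_1,\ldots,a_n]$: since $\cgraphn$ is finite and trails cannot repeat arrows, the well-formed trail $[a_i,a]\oplus\trail$ extends backward finitely to a non-extendable trail in $\tesequence{\trail,\cpair}$, so $a_i$ appears in $\icgp{\trail}$; the inclusion $\icgp{\trail}\subseteq\cgraphn$ ensures no other arrows target $\sor{a}$ in $\icgp{\trail}$ and that their indices are inherited. Consequently,
$$\tesequence{[]_{t'},\icp{\trail}}=\bigoplus_{i=1}^n \big(\tesequence{[a_i,a],\icp{\trail}}\triangleleft[a_i,a]\big).$$

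The main obstacle is the second sub-claim: $\tesequence{[a_i,a]\oplus\trail,\cpair}=\tesequence{[a_i,a],\icp{\trail}}$ for each $i$. The subtlety is that the inductive hypothesis applies to $\tesequence{[a_i,a]\oplus\trail,\cpair}$, yet we want to compare it with trail extensions in $\icp{\trail}$ rather than in $\icp{[a_i,a]\oplus\trail}$. I will handle this by iterating the first sub-claim: at each recursion level the trail $\trail''\oplus[a_i,a]\oplus\trail$ (for any backward extension $\trail''$) is extendable in $\cgraphn$ by an arrow $a'$ iff $a'$ lies in $\icgp{\trail}$ (the forward direction uses that any such $a'$ sits in a non-extendable completion in $\tesequence{\trail,\cpair}$; the reverse direction is immediate from $\icgp{\trail}\subseteq\cgraphn$), and in both cases the full set of incoming arrows to the current source matches. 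Therefore the recursive unfoldings of $\tesequence{[a_i,a]\oplus\trail,\cpair}$ and $\tesequence{[a_i,a],\icp{\trail}}$ proceed through identical choices of arrows in identical order at every step, producing the same sequence. Substituting the second sub-claim into the unfolding of $\tesequence{[]_{t'},\icp{\trail}}$ yields equality with $\tesequence{\trail,\cpair}$, closing the induction.
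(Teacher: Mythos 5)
Your proof is correct and, at its core, is the same argument as the paper's: both show that the recursive unfoldings of $\tesequence{\trail,\cpair}$ and $\tesequence{[]_{t'},\icp{\trail}}$ make identical extension choices at every step, because configurator neighbourhoods (and hence input arrow-sequences) agree between $\cgraphn$ and $\icgp{\trail}$, and any arrow extending a partial trail in $\cgraphn$ sits inside a well-formed non-extendable completion, hence (by the completeness lemma) inside $\icgp{\trail}$. Two minor remarks: your induction is vestigial, since sub-claim 2 is discharged by exactly this direct parallel-unfolding argument rather than by the inductive hypothesis (the paper simply applies that argument globally and dispenses with the induction); and the ``reverse direction is immediate from $\icgp{\trail}\subseteq\cgraphn$'' step tacitly also needs that $\icgp{\trail}$ is arrow-disjoint from $\trail$ (so the candidate arrow $a'$ is not already in $\trail$ when testing extendability of $\trail''\oplus[a_i,a]\oplus\trail$ in $\cgraphn$), which holds because every trail in $\tesequence{\trail,\cpair}$ concatenates with $\trail$ to form a trail --- a detail the paper's proof also leaves implicit.
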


\begin{proof} We have the following facts:
\begin{enumerate}
\item both $\tesequence{\trail,\cpair}$ and $\tesequence{[]_{t'},\icp{\trail}}$  contain trails that target $t'$, and
\item for each configurator, $u$, that appears in both $\cpair$ and $\icp{\trail}$, the neighbourhoods $\neigh{u,\cgraphn}$ and $\neigh{u,\icp{\trail}}$ are identical.
\end{enumerate}
Therefore, following definition~\ref{defn:DTS}, we can apply identical recursive steps to create $\tesequence{\trail,\cpair}$ and $\tesequence{[]_{t'},\icp{\trail}}$ until we finish constructing both sequences, in which case $\tesequence{\trail,\cpair}=\tesequence{[]_{t'},\icp{\trail}}$, or we reach a step where there is a trail extendable by an arrow, $a$, in exactly one of $\cpair$ and $\icp{\trail}$. Since $\icgp{\trail}$ is a subgraph of $\cgraphn$, such an arrow $a$ must be in $\cpair$ but not $\icp{\trail}$. Suppose that such an arrow $a$ exists. Then there are trails, $\trail_a$ and $\trail_a'$, in $\cpair$ where the trail $\trail_a'\oplus [a]\oplus \trail_a$ is in $\tesequence{\trail,\cpair}$. In turn, this implies that $a$ is in $\trailToGraph{\tesequence{\trail,\cpair}}$. But this graph is precisely $\icgp{\trail}$, by definition~\ref{defn:inducedConstruction}. In turn, this means that $a$ is in $\icp{\trail}$, which is a contradiction. Therefore, we can apply identical steps to create $\tesequence{\trail,\cpair}$ and $\tesequence{[]_{t'},\icp{\trail}}$ and we deduce that $\tesequence{\trail,\cpair}=\tesequence{[]_{t'},\icp{\trail}}$, as required.
\end{proof}

\begin{theorem}\label{athm:GenSplitPreservesCompleteTrailSequence}
Let $\cpair$ be a construction with  generalised split $\csplit$ identified by orderly trail sequence $\otsequence=[\trail_1,\ldots,\trail_m]$. Then
\begin{displaymath}
\ctsequence{\cgraphn,t} = \big(\ctsequence{\icp{\trail_1}}\triangleleft \trail_1\big) \oplus\cdots \oplus \big(\ctsequence{\icp{\trail_n}}\triangleleft\trail_m\big).
\end{displaymath}
\end{theorem}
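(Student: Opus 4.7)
The proof will be a short synthesis of the two technical results immediately preceding the theorem, together with the definition of the complete trail sequence as a trail extension sequence from the empty trail at $t$. The plan is to chain together Corollary~\ref{cor:trailExtensionSequencesEssentiallyMatch} and Lemma~\ref{lem:DTSofTRisDTSofBot}, then unfold the definition $\ctsequence{-} = \tesequence{[]_{-},-}$ at both ends.

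First, I would invoke Corollary~\ref{cor:trailExtensionSequencesEssentiallyMatch}, which applies because $\csplit$ is identified by the orderly trail sequence $\otsequence = [\trail_1, \ldots, \trail_m]$ for $\cpair$. This gives directly
\begin{displaymath}
\tesequence{[]_t,\cpair} = \big(\tesequence{\trail_1,\cpair}\triangleleft \trail_1\big) \oplus \cdots \oplus \big(\tesequence{\trail_m,\cpair}\triangleleft \trail_m\big).
\end{displaymath}
Next, for each $i$, let $t_i' = \source{\trail_i}$; since every trail in an orderly trail sequence is well-formed (noted informally after Definition~\ref{defn:orderlyTrailSequence} and relied on when defining $\icp{\trail_i}$), Lemma~\ref{lem:DTSofTRisDTSofBot} applies and yields $\tesequence{\trail_i,\cpair} = \tesequence{[]_{t_i'},\icp{\trail_i}}$. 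Substituting termwise into the previous equation, and then invoking the definition $\ctsequence{\cgraphn',t''} = \tesequence{[]_{t''},(\cgraphn',t'')}$ on both the left (with $t'' = t$) and the right (with $t'' = t_i'$ in the induced construction $\icp{\trail_i}$), produces exactly the required identity
\begin{displaymath}
\ctsequence{\cgraphn,t} = \big(\ctsequence{\icp{\trail_1}}\triangleleft \trail_1\big) \oplus\cdots \oplus \big(\ctsequence{\icp{\trail_m}}\triangleleft\trail_m\big).
\end{displaymath}

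There is essentially no obstacle: the real work has been done in Theorem~\ref{thm:trailExtensionSequencesEssentiallyMatch} (the induction over the depth of the orderly trail sequence) and in Lemma~\ref{lem:DTSofTRisDTSofBot} (matching the recursive unfoldings of $\tesequence{\trail,\cpair}$ and $\tesequence{[]_{t'},\icp{\trail}}$ by observing that $\icgp{\trail}$ contains exactly the arrows appearing in $\tesequence{\trail,\cpair}$). The only minor care needed in writing up is bookkeeping: checking that the indices on $\triangleleft$ line up (each $\trail_i$ is right-multiplied onto the $i$-th summand in both the hypothesis of the corollary and the conclusion) and that the typo in the theorem statement ($\trail_n$ versus $\trail_m$) does not cause confusion — the concatenation runs over all $m$ trails in $\otsequence$, matching the $m$ induced constructions in $\ics$.
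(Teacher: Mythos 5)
Your proposal is correct and follows exactly the paper's own three-step argument: Corollary~\ref{cor:trailExtensionSequencesEssentiallyMatch} for the decomposition of $\tesequence{[]_t,\cpair}$, Lemma~\ref{lem:DTSofTRisDTSofBot} to identify each $\tesequence{\trail_i,\cpair}$ with $\ctsequence{\icp{\trail_i}}$, and the definitional unfolding $\ctsequence{\cgraphn,t}=\tesequence{[]_t,\cpair}$. (One small note: the well-formedness of trails in an orderly trail sequence is not merely informal — it is Lemma~\ref{lem:orderlyTrailSequenceIsWellFormed} — but citing it changes nothing in your argument.)
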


\begin{proof}[Proof]
There are three main steps:
\begin{enumerate}
\item The trail extension sequence for $[]_t$ in $\cpair$ can be readily obtained from the trail extension sequences arising from the trails in $\ctsequence{\cgraphn',t}=[\trail_1,\ldots,\trail_m]$:
\begin{displaymath}
\tesequence{[]_t,\cpair} = \big(\tesequence{\trail_1,\cpair}\triangleleft \trail_1\big) \oplus\cdots \oplus \big(\tesequence{\trail_m,\cpair}\triangleleft\trail_m\big),
\end{displaymath}
by corollary~\ref{cor:trailExtensionSequencesEssentiallyMatch}.
\item For each trail, $\trail_i$, with source $t_i$, in $\ctsequence{\cgraphn',t}=[\trail_1,\ldots,\trail_m]$ the trail extension sequence $\tesequence{\trail_i,\cpair}$ is the same as $\tesequence{[]_{t_i},\icp{\trail_i}}$, by lemma~\ref{lem:DTSofTRisDTSofBot}:
\begin{eqnarray*}
  \tesequence{\trail_i,\cpair} & = & \tesequence{[]_{t_i},\icp{\trail_i}}\\
                                & = & \ctsequence{\icp{\trail_i}} \qquad \textup{by definition~\ref{defn:CTS}}.
\end{eqnarray*}
\item Now, using (2) and the fact that $\tesequence{[]_t,\cpair}=\ctsequence{\cgraphn,t}$, substituting into (1), we obtain:
\begin{displaymath}
\ctsequence{\cgraphn,t} = \big(\ctsequence{\icp{\trail_1}}\triangleleft \trail_1\big) \oplus\cdots \oplus \big(\ctsequence{\icp{\trail_m}}\triangleleft\trail_m\big)
\end{displaymath}
as required.
\end{enumerate}
\end{proof}

Theorem~\ref{thm:splitPreservesCompleteTrailSequence} follows from theorem~\ref{athm:GenSplitPreservesCompleteTrailSequence} because we know that all splits are generalised splits.

\section{Patterns: Appendix}\label{sec:app:patterns}
\begin{lemma}\label{lema:embeddingRels}
Let $\cspacen$ be a construction space. Let $\cpair$ be a construction in $\cspacen$ that matches some pattern, $\ppair$. Given an embedding, $f\colon \cpair \to \ppair$,  the following hold:
\begin{enumerate}
\item if \begin{displaymath}
  \ctsequence{\cgraphn,t}= [[a_{1,1},\ldots,a_{j,1}],\ldots,[a_{1,n},\ldots, a_{k,n}]]
\end{displaymath}
then
\begin{displaymath}
  \ctsequence{\pgraphn,v}= [[f(a_{1,1}),\ldots,f(a_{j,1})],\ldots,[f(a_{1,n}),\ldots,f(a_{k,n})]].
\end{displaymath}
\item if $\foundationsto{\cgraphn,t}=[t_1,\ldots,t_n]$ then $\foundationsto{\pgraphn,v}=[f(t_1),\ldots,f(t_n)]$, and
\item $\foundationsty{\cgraphn,t}$ is a specialisation of  $\foundationsty{\pgraphn,v}$.
\end{enumerate}
\end{lemma}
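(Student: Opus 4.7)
The plan is to prove part (1) first by strong induction on the construction of trail extension sequences (definition~\ref{defn:tes}), and then to derive parts (2) and (3) as direct consequences. I will actually establish the slightly stronger claim: for every well-formed trail $\trail$ in $\cgraphn$ with target $t$, the trail extension sequence $\tesequence{\trail}$ in $\cpair$ and $\tesequence{f(\trail)}$ in $\ppair$ are related by pointwise application of $f$ to every arrow. Instantiating this at $\trail = []_t$ (which maps to $[]_v = []_{f(t)}$) immediately gives the statement about complete trail sequences.

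The key enabling facts are that $f\colon \cgraphn \to \pgraphn$ is a graph isomorphism (by definition~\ref{defn:embeddingPattern}) which preserves incidences, preserves constructor labels on configurators, and preserves arrow indices. Consequently, for any configurator $u$ in $\cgraphn$, the restriction of $f$ to $\neigh{u,\cgraphn}$ is an isomorphism onto $\neigh{f(u),\pgraphn}$, and in particular $\inputsA{f(u)} = [f(a_1),\ldots,f(a_n)]$ whenever $\inputsA{u}=[a_1,\ldots,a_n]$. From this it follows that a trail $\trail$ in $\cgraphn$ is extendable by an arrow $a$ if and only if $f(\trail)$ is extendable by $f(a)$ in $\pgraphn$. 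The base case of the induction is when $\trail$ is non-extendable, so $\tesequence{\trail}=[[]_{\source{\trail}}]$ and similarly $\tesequence{f(\trail)}=[[]_{f(\source{\trail})}]$, which matches. For the inductive step, if $\trail$ is extendable by $a$ with $\inputsA{\sor{a}}=[a_1,\ldots,a_n]$, then by definition~\ref{defn:tes}
\[
\tesequence{\trail} = \bigoplus_{i=1}^n \tesequence{[a_i,a]\oplus\trail}\triangleleft [a_i,a],
\]
and the analogous equation holds for $\tesequence{f(\trail)}$ with $f(a)$ and $[f(a_1),\ldots,f(a_n)]$. Applying the inductive hypothesis to each $[a_i,a]\oplus\trail$ and observing that right-product and concatenation commute with pointwise application of $f$ closes the induction and yields (1).

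Part (2) follows immediately from (1) together with definitions~\ref{defn:sourceSequence} and~\ref{defn:foundations}: the foundation token-sequence is the source sequence of the complete trail sequence, and $f$ preserves sources of arrows (equivalently, $\source{f(\trail)} = f(\source{\trail})$). Thus if $\foundationsto{\cgraphn,t}=[t_1,\ldots,t_n]$ then $\foundationsto{\pgraphn,v}=[f(t_1),\ldots,f(t_n)]$. Part (3) then follows from the defining property of an embedding that $\tokenl(t')\leq \tokenl'(f(t'))$ for every token $t'\in \pa$ (clause~(1) of definition~\ref{defn:embeddingPattern}), applied to each $t_i$ in the foundation token-sequence; by definition~\ref{defn:specialization} this says precisely that $\foundationsty{\cgraphn,t}$ is a specialisation of $\foundationsty{\pgraphn,v}$.

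The only technical subtlety I anticipate is checking carefully that equality of input arrow-sequences really is preserved by $f$, since this is what makes the recursive definitions of $\tesequence{\trail}$ and $\tesequence{f(\trail)}$ line up term-by-term; but this is exactly what an embedding buys us via the preservation of arrow indices on the neighbourhood of each configurator, so the bookkeeping is straightforward rather than genuinely difficult.
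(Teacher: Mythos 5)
Your proof is correct and follows essentially the same route as the paper's own (sketch) proof: part (1) from the fact that the embedding is an isomorphism preserving incidences, constructor labels and arrow indices, part (2) from taking sources of the trails in the complete trail sequence, and part (3) from the subtype-preservation clause of definition~\ref{defn:embeddingPattern}. The only difference is that you make explicit the induction over trail extension sequences that the paper leaves as a one-line assertion for part (1), which is a welcome but not substantively different elaboration.
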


\begin{proof}[Proof Sketch]
Since the embedding $f\colon \cpair \to \ppair$ is an isomorphism that maps $t$ to $v$ and ensures that arrow indices match, it follows that if
%
%\begin{displaymath}
$  \ctsequence{\cgraphn,t}= [[a_{1,1},\ldots,a_{j,1}],\ldots,[a_{1,n},\ldots,a_{k,n}]]$
%\end{displaymath}
%
then
%\begin{displaymath}
 $ \ctsequence{\pgraphn,v}= [[f(a_{1,1}),\ldots,f(a_{j,1})],\ldots,[f(a_{1,n}),\ldots,f(a_{k,n})]].$
%\end{displaymath}
Therefore (1) holds. In which case, since $f(t)=v$ and the foundation token-sequences are obtained from the sources of trails in complete sequences, we know that if
%
%\begin{displaymath}
$  \foundationsto{\cgraphn,t}= [t_{1},\ldots,t_{n}]$
%\end{displaymath}
%
then
%
%\begin{displaymath}
$  \foundationsto{\pgraphn,v}= [f(t_{1}),\ldots,f(t_{n})].$
%\end{displaymath}
%
Therefore, (2) holds. For part (3), given the vertex labelling functions, $\tokenl$ and $\tokenl'$,  of $\cgraphn$ and $\pgraphn$ respectively, we take
%
%\begin{displaymath}
$  \foundationsty{\cgraphn,t}= [\tokenl(t_{1}),\ldots,\tokenl(t_{n})]$
%\end{displaymath}
%
and
%\begin{displaymath}
 $ \foundationsty{\pgraphn,v}= [\tokenl'(f(t_{1})),\ldots,\tokenl'(f(t_{n}))],$
%\end{displaymath}
%
Definition~\ref{defn:embeddingPattern} gives us that, for each $i$, $\tokenl(t_{1,i})\leq \tokenl'(f(t_{1,i}))$. By definition~\ref{defn:specialization}, it follows that $\foundationsty{\cgraphn,t}$ is a specialisation of $\foundationsty{\pgraphn,v}$. Hence (3) holds.
\end{proof}

\begin{lemma}\label{lema:decTrivialOrBasic}
Let $\cspacen=\cspace$ be a construction space containing construction $\cpair$. Then there exists a decomposition, $\decompositionn$, that is a decoupling of $\cpair$.
\end{lemma}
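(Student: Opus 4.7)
The plan is to prove the lemma by strong induction on the number of configurators appearing in $\cpair$. In the base case, when $\cpair$ has no configurators, the requirement that every vertex is the source of a trail to $t$ forces $\cgraphn$ to consist of the single vertex $t$, so $\cpair$ is trivial; the single-vertex decomposition tree whose root is labelled by $\cpair$ is then already a decoupling.

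For the inductive step, assume the result holds whenever the construction has fewer than $k$ configurators, and suppose $\cpair$ has exactly $k \geq 1$ configurators. Since $\cpair$ is non-trivial, some vertex other than $t$ exists, and the trail-to-$t$ requirement plus uni-structuredness forces $t$ to be the target of a unique arrow $a$ in $\cgraphn$ whose source is a configurator $u$. I would then take $(\neigh{u,\cgraphn},t)$ as a generator of $\cpair$; this generator is basic by definition. Applying definition~\ref{defn:split} to this generator yields a split $\csplit$ with induced construction sequence $\ics=[\cpairp{1},\ldots,\cpairp{n}]$. The inductive hypothesis would be invoked on each $\cpairp{i}$ to produce a decoupling decomposition $\decompositionn_i$, and these would be assembled under a new root labelled by $(\neigh{u,\cgraphn},t)$, with the edges from the roots of the $\decompositionn_i$ indexed $1,\ldots,n$ in alignment with the input arrow-sequence of $u$. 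The result is a decomposition of $\cpair$ (following definition~\ref{defn:decomposition}) in which every vertex label is either basic (the root) or trivial/basic (inherited from the $\decompositionn_i$), i.e.\ a decoupling.

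The one step that requires genuine care, and which I expect to be the main obstacle, is verifying that each induced construction $\cpairp{i}$ really has strictly fewer than $k$ configurators, so that the induction is well-founded. The key observation is that $u$ cannot appear in any $\icgp{\trail_i}$: the unique outgoing arrow of $u$ is $a$, and $a$ is already an arrow of every $\trail_i$ in $\ctsequence{\neigh{u,\cgraphn},t}$, so by the defining condition on trail extensions in definition~\ref{defn:DTS}, no trail in $\tesequence{\trail_i,\cpair}$ can reuse $a$; hence neither $a$ nor (since $u$ has no other outgoing arrow and any incoming arrow of $u$ would extend through $a$) any arrow incident to $u$ appears in $\icgp{\trail_i}$. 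Thus $u$ is absent from every induced construction, and each $\cpairp{i}$ contains at most $k-1$ configurators, completing the induction.
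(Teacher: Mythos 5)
Your proof is correct and follows essentially the same route as the paper's: induction on the number of configurators, splitting at the unique configurator whose outgoing arrow targets $t$, and assembling decouplings of the induced constructions under a root labelled by $(\neigh{u},t)$. The only difference is that you explicitly verify that $u$ cannot reappear in any induced construction (so the induction is well-founded), a point the paper's sketch leaves implicit; your justification of it via definition~\ref{defn:DTS} is sound.
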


\begin{proof}[Proof Sketch]
The proof uses a straightforward induction argument on the number of configurators. For the two base cases, $\cpair$ is trivial or basic (i.e $\cpair$ contains $0$ or $1$ configurators). Take $\decompositionn$ to be a single-vertex decomposition, where the vertex is labelled by $\cpair$. For the inductive step, $\cpair$ contains at least two configurators. In this case, take $u$ to be the unique configurator in $\cpair$ whose outgoing arrow targets $t$. Produce the split $((\neigh{u},t),[\ic_1,\ldots,\ic_n])$ of $\cpair$. By the inductive assumption, each induced construction, $\ic_i$, has a decoupling decomposition, $\decompositionn_i$. These decompositions -- or label-isomorphic copies of them -- together with a single-vertex, $v_r$, labelled by $(\neigh{u},t)$ can be used to form a decoupling decomposition, $\decompositionn$ of $\cpair$: $\decompositionn$ has root $v_r$ and $\dsequence{v_r}=[\decompositionn_1,\ldots,\decompositionn_n]$. By construction, $\decompositionn$ has vertices labelled only by trivial or basic constructions, as required. Hence $\decompositionn$ is a decoupling of $\cpair$.
\end{proof}

\section{Structural Transformations: Appendix}\label{sec:app:structure}

\begin{lemma}\label{lema:equivTokenRelSpecs}
Let $\tRelSpecn=(\ppaira,\ppairb,\setofpatternstc)$ and $\tRelSpecn'=(\ppairad,\ppairbd,\setofpatternstc')$ be equivalent transformation constraints. Let $\tRelSpecn''=(\ppairadd,\ppairbdd,\setofpatternstc'')$ be a transformation constraint. Then $\tRelSpecn''$ satisfies $\tRelSpecn$ iff $\tRelSpecn''$ satisfies $\tRelSpecn'$.
\end{lemma}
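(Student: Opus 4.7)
The plan is to exploit the fact that satisfaction is witnessed by a triple of (respectful) embeddings whose union is an isomorphism, and that equivalence $\tRelSpecn \cong \tRelSpecn'$ supplies analogous witnesses in both directions between $\tRelSpecn$ and $\tRelSpecn'$. The argument will then reduce to composing embeddings and observing that composition preserves the relevant structural properties, so the composite witnesses satisfaction.

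First I would handle one direction: assume $\tRelSpecn'' \models \tRelSpecn$ and show $\tRelSpecn'' \models \tRelSpecn'$. By definition~\ref{defn:satisfies}, there exist embeddings $g_\alpha \colon \ppairadd \to \ppaira$, $g_\beta \colon \ppairbdd \to \ppairb$ and a respectful embedding $g_\setofpatternstc \colon \patterngraphn(\setofpatternstc'') \to \patterngraphn(\setofpatternstc)$ whose union is an isomorphism. By equivalence (specifically $\tRelSpecn \models \tRelSpecn'$), there exist analogous witnesses $h_\alpha, h_\beta, h_\setofpatternstc$ from $\tRelSpecn$ into $\tRelSpecn'$ whose union is an isomorphism. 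I will then verify three composition facts: (i) compositions of embeddings (in the sense of definition~\ref{defn:embeddingPattern}) are embeddings, since isomorphism composes and the subtype order is transitive while constructor labels and arrow indices are preserved strictly; (ii) the composition $h_\setofpatternstc \circ g_\setofpatternstc$ is a respectful embedding from $\patterngraphn(\setofpatternstc'')$ to $\patterngraphn(\setofpatternstc')$, since given any pattern $\ppair \in \setofpatternstc''$, restricting $g_\setofpatternstc$ sends $\ppair$ into some $\ppair_1 \in \setofpatternstc$, and restricting $h_\setofpatternstc$ sends $\ppair_1$ into some $\ppair_2 \in \setofpatternstc'$, so their composite restriction sends $\ppair$ into $\ppair_2$; (iii) $(h_\alpha \circ g_\alpha) \cup (h_\beta \circ g_\beta) \cup (h_\setofpatternstc \circ g_\setofpatternstc)$ equals the composition of the two union isomorphisms, hence is itself an isomorphism. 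Thus the composite witnesses show $\tRelSpecn'' \models \tRelSpecn'$.

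The converse direction, assuming $\tRelSpecn'' \models \tRelSpecn'$ and deducing $\tRelSpecn'' \models \tRelSpecn$, is entirely symmetric, using the other half of equivalence, namely $\tRelSpecn' \models \tRelSpecn$, to compose with the witnesses for $\tRelSpecn'' \models \tRelSpecn'$.

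The only mildly delicate point, and the place I would be most careful, is fact (iii): one must check that the unions of the pairs of witness maps interact correctly on shared vertices and arrows before composing. This is not automatic because the domains of $g_\alpha$, $g_\beta$, $g_\setofpatternstc$ may overlap, as may those of $h_\alpha$, $h_\beta$, $h_\setofpatternstc$. However, since each union is already asserted to be a single well-defined isomorphism on $\pgraphan'' \cup \pgraphbn'' \cup \patterngraphn(\setofpatternstc'')$ (respectively on $\pgraphan \cup \pgraphbn \cup \patterngraphn(\setofpatternstc)$), its restrictions to the three subgraphs coincide with $g_\alpha$, $g_\beta$, $g_\setofpatternstc$ on overlaps, so composition is unambiguous and the composite is again a single isomorphism on $\pgraphan'' \cup \pgraphbn'' \cup \patterngraphn(\setofpatternstc'')$ targeting $\pgraphan' \cup \pgraphbn' \cup \patterngraphn(\setofpatternstc')$. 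Once this is spelled out, the proof is complete.
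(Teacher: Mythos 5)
Your proposal is correct and follows essentially the same route as the paper's proof: extract the witnessing embeddings for $\tRelSpecn''\models\tRelSpecn$ and for $\tRelSpecn\models\tRelSpecn'$ (from equivalence), compose them, verify that composites of embeddings and of respectful embeddings are again such, and check that the union of the composites equals the composite of the two union isomorphisms, so it is itself an isomorphism. Your appeal to transitivity of the subtype order when checking the composite embedding condition is in fact slightly cleaner than the paper's claim that the second embedding is label-preserving; otherwise the two arguments coincide step for step.
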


\begin{proof}
It is sufficient to show that if $(\ppairadd,\ppairbdd,\setofpatternstc'')$ satisfies $\tRelSpecn$ then $(\ppairadd,\ppairbdd,\setofpatternstc'')$ satisfies $\tRelSpecn'$, since the other direction is similar. Suppose that $(\ppairadd,\ppairbdd,\setofpatternstc'')$ satisfies $\tRelSpecn$. Then, by definition~\ref{defn:satisfies}, there exists embeddings $f_{\pgraphan''}\colon \ppairadd\to \ppaira$ and $f_{\pgraphbn''}\colon \ppairbdd\to \ppairb$, and a respectful embedding $f_{\setofpatternstc''}\colon \patterngraphn(\setofpatternstc'')\to \patterngraphn(\setofpatternstc)$ such that
\begin{displaymath}
f_{\pgraphan''}\cup f_{\pgraphbn''}\cup f_{\setofpatternstc''}\colon \pgraphan''\cup \pgraphbn''\cup  \patterngraphn(\setofpatternstc'') \to \pgraphan\cup \pgraphbn \cup \patterngraphn(\setofpatternstc)
\end{displaymath}
is an isomorphism. Now, since $\tRelSpecn=(\ppaira,\ppairb,\setofpatternstc)$ and $\tRelSpecn'=(\ppairad,\ppairbd,\setofpatternstc')$ are equivalent, by definition~\ref{defn:equivalentTokenRelSpecs} we know that $\tRelSpecn\models \tRelSpecn'$. By definition~\ref{defn:satisfies}, there exists label-preserving embeddings, $f_{\pgraphan}\colon \ppaira \to \ppairad$ and $f_\pgraphbn\colon \ppairb\to \ppairbd$, and a respectful embedding $f_\setofpatternstc\colon \patterngraphn(\setofpatternstc)\to \patterngraphn(\setofpatternstc')$, such that
\begin{displaymath}
f_\pgraphan\cup f_\pgraphbn\cup f_\setofpatternstc\colon \pgraphan\cup \pgraphbn\cup \patterngraphn(\setofpatternstc) \to \pgraphan'\cup \pgraphbn'\cup \patterngraphn(\setofpatternstc')
\end{displaymath}
is an isomorphism.
To show that $(\ppairadd,\ppairbdd,\setofpatternstc'')$ satisfies $\tRelSpecn'$ we must show that there exist suitable embeddings whose union is an isomorphism, as specified by definition~\ref{defn:satisfies}. We show that $f_\pgraphan\circ f_{\pgraphan''}\colon \ppairadd \to \ppairad$, $f_\pgraphbn\circ f_{\pgraphbn''}\colon \ppairbdd \to \ppairbd$, and $f_\setofpatternstc\circ f_{\setofpatternstc''}\colon  \patterngraphn(\setofpatternstc'')  \to  \patterngraphn(\setofpatternstc') $ are such embeddings. Taking $f_\pgraphan\circ f_{\pgraphan''}\colon \ppairadd \to \ppairad$, we show that the conditions in definition~\ref{defn:embeddingPattern}, of an embedding, are satisfied and that $a''$ maps to $a'$. Firstly, it is trivial that the composition of isomorphisms is itself an isomorphism. In particular, $f_\pgraphan\circ f_{\pgraphan''}\colon \ppairadd \to \ppairad$ is an isomorphism since $f_{\pgraphan''}\colon \ppairadd \to \ppaira$ and $f_\pgraphan\colon \ppaira \to \ppairad$ are isomorphisms. Four conditions remain:
\begin{enumerate}
\item It is required that $a''$ maps to $a'$. Since $f_{\pgraphan''}$ and $f_\pgraphan$ are embeddings, $f_{\pgraphan''}(a'')=a$ and $f_\pgraphan(a)=a'$, so we have it that $f_\pgraphan\circ f_{\pgraphan''}(a'')=a'$.
\item The subtype relation must be respected by $f_\pgraphan\circ f_{\pgraphan''}$. We know that, for all tokens, $a'''$, in $\pgraphan''$, the vertex-labelling functions ensure that $\tokenl(a''')\leq \tokenl(f_{\pgraphan''}(a'''))$ and, since $f_\pgraphan$ is label-preserving, $\tokenl(f_{\pgraphan''}(a'''))=\tokenl(f_\pgraphan(f_{\pgraphan''}(a''')))$. Thus, $\tokenl(a''')\leq \tokenl(f_\pgraphan(f_{\pgraphan''}(a''')))=\tokenl(f_\pgraphan\circ f_{\pgraphan''}(a'''))$.
\item The constructors assigned to configurators must match. Since $f_{\pgraphan''}$ and $f_\pgraphan$ ensure that the constructors assigned to configurators match, it is trivial that for all configurators, $u$, in $\pgraphan''$, $\consl(u)=\consl(f_\pgraphan\circ f_{\pgraphan''}(u))$.
\item Lastly, since $f_{\pgraphan''}$ and $f_\pgraphan$ ensure that the arrow labels match, it is trivial that $f_\pgraphan\circ f_{\pgraphan''}$ also ensures that the arrow labels match.
\end{enumerate}
Hence $f_\pgraphan\circ f_{\pgraphan''}\colon \ppairadd \to \ppairad$ is an embedding. The arguments that establish $f_\pgraphbn\circ f_{\pgraphbn''}\colon \ppairbdd \to \ppairbd$ is an embedding and $f_\setofpatternstc\circ f_{\setofpatternstc''}\colon  \patterngraphn(\setofpatternstc'')  \to  \patterngraphn(\setofpatternstc')$ is a respectful embedding are similar. The last part of the proof requires us to show that
\begin{displaymath}
  (f_\pgraphan\circ f_{\pgraphan''}) \cup (f_\pgraphbn\circ f_{\pgraphbn''}) \cup (f_\setofpatternstc\circ f_{\setofpatternstc''}) \colon \pgraphan'' \cup \pgraphbn'' \cup \patterngraphn(\setofpatternstc'') \to \pgraphan'\cup \pgraphbn' \cup \patterngraphn(\setofpatternstc')
\end{displaymath}
is an isomorphism. It is given above that
\begin{displaymath}
f_{\pgraphan''}\cup f_{\pgraphbn''}\cup f_{\setofpatternstc''}\colon \pgraphan''\cup \pgraphbn''\cup \patterngraphn(\setofpatternstc'') \to \pgraphan\cup \pgraphbn\cup \patterngraphn(\setofpatternstc) \quad \textup{and} \quad f_\pgraphan\cup f_\pgraphbn\cup f_\setofpatternstc\colon \pgraphan\cup \pgraphbn\cup \patterngraphn(\setofpatternstc) \to \pgraphan'\cup \pgraphbn'\cup \patterngraphn(\setofpatternstc') \quad \textup{(i)}
\end{displaymath}
are isomorphisms. Noting that the composition
\begin{displaymath}
  (f_\pgraphan\cup f_\pgraphbn\cup f_\setofpatternstc)\circ (f_{\pgraphan''}\cup f_{\pgraphbn''}\cup f_{\setofpatternstc''})\colon \pgraphan'' \cup \pgraphbn'' \cup \patterngraphn(\setofpatternstc'') \to \pgraphan'\cup \pgraphbn' \cup \patterngraphn(\setofpatternstc') \qquad \textup{(ii)}
\end{displaymath} is obviously an isomorphism, we show that
\begin{displaymath}
 (f_\pgraphan\cup f_\pgraphbn\cup f_\setofpatternstc)\circ (f_{\pgraphan''}\cup f_{\pgraphbn''}\cup f_{\setofpatternstc''})= (f_\pgraphan\circ f_{\pgraphan''}) \cup (f_\pgraphbn\circ f_{\pgraphbn''}) \cup (f_\setofpatternstc\circ f_{\setofpatternstc''}) \qquad \textup{(iii)}.
\end{displaymath}
We first show that $(f_\pgraphan\circ f_{\pgraphan''}) \cup (f_\pgraphbn\circ f_{\pgraphbn''}) \cup (f_\setofpatternstc\circ f_{\setofpatternstc''})$ is a function. Trivially, the embedding $f_\pgraphan\circ f_{\pgraphan''}$ can be obtained by restricting the domain of (ii) to $\pgraphan''$. Similarly, $f_\pgraphbn\circ f_{\pgraphbn''}$ and $f_\setofpatterns\circ f_{\setofpatternstc''}$ are functions obtained by restricting the domain of (ii) to $\pgraphan''$ and, respectively, $\patterngraphn(\setofpatternstc'')$. Hence, their their union is a sub-function of (ii) with the same domain, and it follows that (iii) holds. We deduce that $(f_\pgraphan\circ f_{\pgraphan''}) \cup (f_\pgraphan\circ f_{\pgraphbn''}) \cup (f_\setofpatternstc\circ f_{\setofpatterns''})$ is an isomorphism. Therefore, we have it that $(\ppairadd,\ppairbdd,\setofpatternstc'')$ satisfies $\tRelSpecn'$, as required.
\end{proof}

\begin{theorem}\label{athm:completePartialIsTransformation}
Let $\irencodingn=\irencoding$ be an inter-representational-system encoding with description $\descriptionirse$. Let $\pdecompositionn$ and $\pdecompositionn'$ be pattern decompositions in $\descriptionn$ and $\descriptionn'$, respectively, with
constraint assignment $\vertexRelations\colon V(\pdecompositionn)\times V(\pdecompositionn') \to \setOfTokenRels$.  Let $\cpair$ be a construction in $\rsystemn$ that matches $\pdecompositionn$. Let $(\hat{\pgraphn},\hat{v})$ be a pattern for $\rsystemn'$ that matches $\pdecompositionn'$. If $(\hat{\pgraphn}',\hat{v}')$ is a construction in $\rsystemn'$ and a partial $\vertexRelations$-transformation of $\cpair$ then $(\hat{\pgraphn}',\hat{v}')$ is a structural $\vertexRelations$-transformation of $\cpair$.
\end{theorem}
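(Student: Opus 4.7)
The plan is to show that when $(\hat{\pgraphn}',\hat{v}')$ is already a construction in $\rsystemn'$, the extra guard in the partial-satisfaction definition collapses, so partial satisfaction and structural satisfaction coincide for the pair $(\deltacanonical{\cpair,\pdecompositionn},\deltacanonical{(\hat{\pgraphn}',\hat{v}'),\pdecompositionn'})$. Concretely, unfolding definition~\ref{defn:partialLDeltaTransform}, I would take as given that $\vertexRelations$ is satisfied in the partial sense by this pair, and then unfold definition~\ref{defn:structuralTransformationConstruction} together with definition~\ref{defn:structuralTransformationDecomposition} to see what more needs verifying. The only gap between definition~\ref{defn:DeltaPartialSatisfication} and definition~\ref{defn:structuralTransformationDecomposition} is the added hypothesis ``$\widehat{\labp}(\hat{\delta}')$ is a construction in $\rsystemn'$'' in the partial version. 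So the entire task reduces to showing that, under the hypothesis that $(\hat{\pgraphn}',\hat{v}')$ is a construction, every vertex label $\widehat{\labp}(\hat{\delta}')$ occurring in $\deltacanonical{(\hat{\pgraphn}',\hat{v}'),\pdecompositionn'}$ is itself a construction in $\rsystemn'$.

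First, I would establish this universal-labelling claim by induction on the structure of a $\pdecompositionn'$-canonical decomposition, following definition~\ref{defn:deltaCanonical} together with definition~\ref{defn:decomposition}. The base case is trivial: if the decomposition has only a root, then its unique label is $(\hat{\pgraphn}',\hat{v}')$ itself, which is a construction in $\rsystemn'$ by hypothesis. For the inductive step, the root is labelled by a generator of $(\hat{\pgraphn}',\hat{v}')$, which is a construction (definition~\ref{defn:generator}), and the children are roots of decompositions of the induced constructions in the associated split, each of which is itself a construction (definition~\ref{defn:split}, together with the fact established after definition~\ref{defn:inducedConstruction} that induced constructions are constructions). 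Applying the inductive hypothesis to each sub-decomposition completes the induction, and we conclude that every vertex of $\deltacanonical{(\hat{\pgraphn}',\hat{v}'),\pdecompositionn'}$ is labelled by a construction in $\rsystemn'$.

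With that fact in hand, the rest is a direct unpacking. Let $f$ and $f'$ be the embeddings witnessing that $\deltacanonical{\cpair,\pdecompositionn}$ and $\deltacanonical{(\hat{\pgraphn}',\hat{v}'),\pdecompositionn'}$ match $\pdecompositionn$ and $\pdecompositionn'$ respectively. Pick any $(d,\hat{\delta}')\in V(\deltacanonical{\cpair,\pdecompositionn})\times V(\deltacanonical{(\hat{\pgraphn}',\hat{v}'),\pdecompositionn'})$ with $\vertexRelations(f(d),f'(\hat{\delta}'))$ defined. Since $\widehat{\labp}(\hat{\delta}')$ is a construction in $\rsystemn'$ by the claim just proved, the hypothesis of the partial-satisfaction clause is met, so there exists a property identifier $\setofconstructionsirse$ with $(\labc(d),\widehat{\labp}(\hat{\delta}'),\setofconstructionsirse)\models \vertexRelations(f(d),f'(\hat{\delta}'))$. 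This is exactly the condition required by definition~\ref{defn:structuralTransformationDecomposition}, so $\vertexRelations$ is satisfied by $(\deltacanonical{\cpair,\pdecompositionn},\deltacanonical{(\hat{\pgraphn}',\hat{v}'),\pdecompositionn'})$ in the structural sense, and hence $(\hat{\pgraphn}',\hat{v}')$ is a structural $\vertexRelations$-transformation of $\cpair$.

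I expect the only real obstacle to be the inductive claim about labels in the canonical decomposition; everything else is a clean unwrapping of the two satisfaction definitions. The subtlety lies in confirming that definition~\ref{defn:deltaCanonical} genuinely produces labels that are constructions at every vertex (not merely patterns that happen to match some construction), which requires careful appeal to the fact that splits of a construction yield a generator and induced constructions, all of which live in $\rsystemn'$ precisely because the original $(\hat{\pgraphn}',\hat{v}')$ does. Once that is in place, the theorem follows immediately.
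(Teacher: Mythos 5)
Your proposal is correct and follows essentially the same route as the paper's proof: unfold partial satisfaction for the pair of canonical decompositions, observe that the only difference from structural satisfaction is the guard ``$\widehat{\labp}(\hat{\delta}')$ is a construction in $\rsystemn'$'', and discharge that guard using the hypothesis that $(\hat{\pgraphn}',\hat{v}')$ is itself a construction. The only difference is that you prove the claim that every vertex label of the canonical decomposition is a construction by an explicit induction via generators and induced constructions, whereas the paper simply asserts it; your added detail is a correct justification of that step.
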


\begin{proof}
Suppose that $(\hat{\pgraphn}',\hat{v}')$ is a construction in $\rsystemn'$ and a partial $\vertexRelations$-transformation of $\cpair$. Then, by definition~\ref{defn:partialLDeltaTransform}, $\vertexRelations$ is satisfied by $(\deltacanonical{\cpair,\pdecompositionn},\deltacanonical{(\hat{\pgraphn}',\hat{v}'),\pdecompositionn'})$. For notational convenience, we set $\decompositionn=\deltacanonical{\cpair,\pdecompositionn}$ and $\hat{\pdecompositionn}'=\deltacanonical{(\hat{\pgraphn}',\hat{v}'),\pdecompositionn'}$. By definition~\ref{defn:DeltaPartialSatisfication}, we know that for all $(d,\hat{\delta}')\in V(\decompositionn)\times V(\hat{\pdecompositionn}')$ if $\vertexRelations(f(d),f'(\hat{\delta}'))$ is defined and $\widehat{\labp}(\hat{\delta}')$ is a construction in $\rsystemn'$ then there exists a property identifier, $\setofconstructionsirse$, for $\irencodingn$ such that
%
%\begin{displaymath}
$  (\labc(d),\widehat{\labp}(\hat{\delta}'),\setofconstructionsirse) \models \vertexRelations(f(d),f'(\hat{\delta}')).$
%\end{displaymath}
where $f\colon \decompositionn\to \pdecompositionn$ and $f'\colon \hat{\pdecompositionn'} \to \pdecompositionn'$ are embeddings. Now, it is given that $(\hat{\pgraphn}',\hat{v}')$ is a construction in $\rsystemn'$. Therefore, it is also that case that for all $\hat{\delta}'\in V(\hat{\pdecompositionn}')$, $\widehat{\labp}(\hat{\delta}')$ is a construction in $\rsystemn'$. Thus, we know that
for all $(d,\hat{\delta}')\in V(\decompositionn)\times V(\hat{\pdecompositionn}')$ if $\vertexRelations(f(d),f'(\hat{\delta}'))$ is defined then there exists a property identifier, $\setofconstructionsirse$, for $\irencodingn$ such that
%
%\begin{displaymath}
$  (\labc(d),\widehat{\labp}(\hat{\delta}'),\setofconstructionsirse) \models \vertexRelations(f(d),f'(\hat{\delta}')).$
%\end{displaymath}
In turn, we deduce that $\vertexRelations$ is satisfied by the pair $(\decompositionn,\hat{\pdecompositionn}')$, as in definition~\ref{defn:structuralTransformationDecomposition}. Hence, $(\hat{\pgraphn}',\hat{v}')$ is a structural $\vertexRelations$-transformation of $\cpair$, by definition~\ref{defn:structuralTransformationConstruction}, as required.
\end{proof} 

\end{document}